\documentclass{article}

\usepackage{microtype}
\usepackage{graphicx}
\usepackage{subcaption}
\usepackage{booktabs} 

\usepackage{amsfonts}
\usepackage{amssymb}
\usepackage{bm}
\usepackage{amsmath}
\usepackage{amsthm}
\usepackage{stmaryrd}
\usepackage{color}
\usepackage{xcolor}
\usepackage[subnum]{cases}
\usepackage{rotating}
\usepackage{enumitem}
\usepackage{accents}
\usepackage[title]{appendix}
\usepackage{mathtools}
\usepackage{caption}
\usepackage{url}
\usepackage{rotating}
\usepackage{framed}
\usepackage{lscape}
\usepackage{multirow}
\usepackage{latexsym}
\usepackage{mathrsfs}
\usepackage[new]{old-arrows}
\usepackage{array}
\usepackage{makecell}
\usepackage{float}
\usepackage{tabularray}
\usepackage{tabularx}
\usepackage{tablefootnote}
\usepackage{tikz}
\usepackage[all]{xy}
\usepackage{tikz-cd}
\usepackage[usestackEOL]{stackengine}
\usepackage{pdflscape}
\usepackage{autobreak}
\usepackage{comment}
\usepackage{booktabs}
\usepackage{graphicx}
\usepackage{subcaption}
\usepackage{wrapfig}

\theoremstyle{plain}
\newtheorem{theorem}{Theorem}[section]

\newtheorem{lemma}[theorem]{Lemma}
\newtheorem{corollary}[theorem]{Corollary}

\theoremstyle{definition}

\newtheorem{remark}[theorem]{Remark}

\usepackage{algorithmic}
\usepackage{algorithm}

\usepackage{adjustbox}
\usepackage{booktabs}
\usepackage{siunitx}
\usepackage{listings}
\usepackage{colortbl}

\usepackage{enumitem}

\usepackage[accepted]{icml2026}

\usepackage{amsmath}
\usepackage{amssymb}
\usepackage{mathtools}
\usepackage{amsthm}

\usepackage[textsize=tiny]{todonotes}

\usepackage{titletoc}

\newcommand\DoToC{%
  \startcontents
  \printcontents{}{1}{\textbf{Table of Contents}\vskip3pt\hrule\vskip5pt}
  \vskip3pt\hrule\vskip5pt
}

\usepackage{etoc}
\usepackage{hyperref}
\usepackage[capitalize,noabbrev]{cleveref}
\definecolor{refcolor}{HTML}{000099}

\hypersetup{ %
pdftitle={},
pdfauthor={},
pdfsubject={},
pdfkeywords={},
pdfborder=0 0 0,
pdfpagemode=UseNone,
colorlinks=true,
linkcolor=refcolor,
citecolor=refcolor,
filecolor=refcolor,
urlcolor=refcolor,    
pdfview=FitH}

\icmltitlerunning{Functional Equivalence in Attention:
A Comprehensive Study with Applications to Linear Mode Connectivity}

\begin{document}

\twocolumn[
  \icmltitle{Functional Equivalence in Attention: \\ A Comprehensive Study with Applications to  Linear Mode Connectivity}



  \icmlsetsymbol{equal}{*}

  \begin{icmlauthorlist}
    \icmlauthor{Viet-Hoang Tran$^\ast$}{nus}
    \icmlauthor{Vinh Khanh Bui}{cair}
    \icmlauthor{Van-Hoan Trinh}{tum}
    \icmlauthor{Tan Lai Ngoc}{idpr}
    \icmlauthor{Tan M. Nguyen}{nus}
  \end{icmlauthorlist}

  \icmlaffiliation{nus}{National University of Singapore}
  \icmlaffiliation{cair}{Center for AI Research, VinUniversity}
  \icmlaffiliation{idpr}{Independent Researcher}
  \icmlaffiliation{tum}{Technical University of Munich}
  \icmlcorrespondingauthor{Viet-Hoang Tran}{hoang.tranviet@u.nus.edu}

  \icmlkeywords{Machine Learning, ICML}

  \vskip 0.3in
]



\printAffiliationsAndNotice{}  


\begin{abstract}
Neural network parameter spaces are inherently non-injective, as distinct parameter configurations can realize identical functions through functional equivalence. While this symmetry is well understood in classical fully connected and convolutional models, it becomes substantially more intricate in modern attention-based architectures. Existing analyses of multihead attention have largely focused on the vanilla formulation, overlooking positional encodings that fundamentally reshape architectural symmetries. In this work, we provide a formal study of functional equivalence in Transformers with positional encodings. Focusing on the two most widely used variants--sinusoidal and rotary positional encodings (RoPE)--we show that sinusoidal encodings preserve the equivalence structure of vanilla attention, whereas rotary encodings significantly reduce the symmetry group, thereby enhancing expressivity. This offers a principled explanation for the growing prominence of RoPE in practice. We further examine how positional encodings affect linear mode connectivity, and through an alignment algorithm, empirically demonstrate that the presence and variability of connectivity across Transformer settings crucially depend on the positional encoding.
\end{abstract}


\section{Introduction}
\label{main:section{Introduction}}

The training of deep neural networks reveals a seeming paradox: despite the high dimensionality and non-convexity of the loss landscape with numerous local minima, simple optimization methods such as stochastic gradient descent (SGD) consistently discover solutions that generalize well.

\begin{figure}[tbp]
\centering
\includegraphics[width=0.48\textwidth]{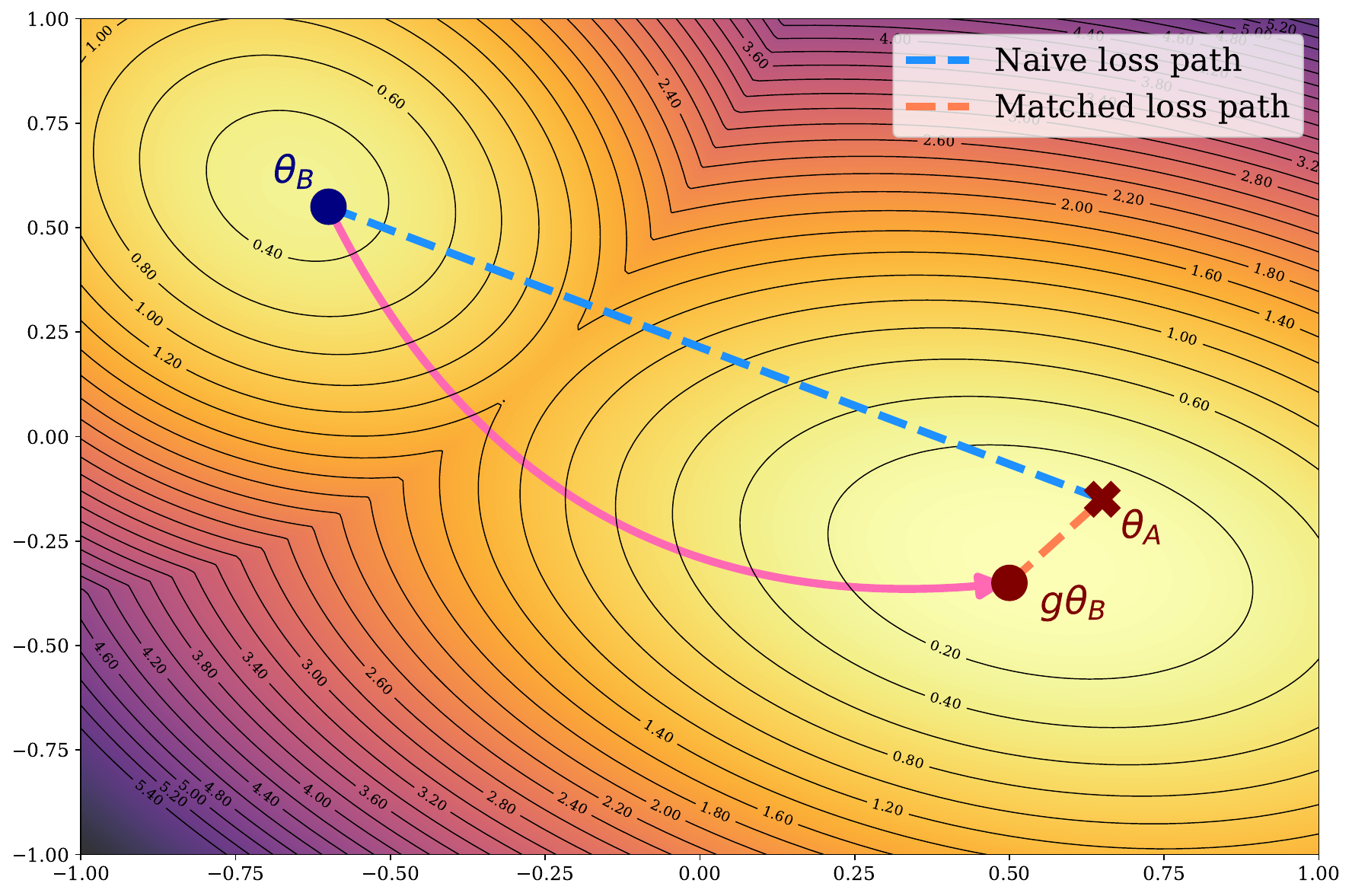}  
\caption{Illustration of Linear Mode Connectivity}
\label{fig:teleport_visualization}
\end{figure}

\textbf{(Linear) Mode Connectivity.} One influential perspective on this phenomenon is offered by the concept of \emph{mode connectivity} (MC)~\citep{goodfellow2014qualitatively,keskar2016large,sagun2017empirical,venturi2019spurious,neyshabur2020being,tatro2020optimizing,yunis2022convexity,zhou2023going}, which reveals that solutions discovered through independent optimization trajectories are rarely isolated; rather, they lie within extensive connected manifolds of parameters yielding comparably low loss. A particularly tractable instance of this principle is \emph{linear mode connectivity} (LMC)~\citep{frankle2020linear,entezari2021role}, in which two trained models can be joined by a straight-line interpolation in parameter space that remains confined to a low-loss region. Formally, consider a model $f(\cdot; \theta)$ parameterized by $\theta$, with loss function $\mathcal{L}(\theta) \geq 0$. Optimization amounts to minimizing $\mathcal{L}(\theta)$ over $\Theta$. Two solutions $\theta_A, \theta_B \in \Theta$ are said to exhibit LMC when the associated \emph{loss barrier}~\citep{frankle2020linear,entezari2021role} vanishes (or is negligible):
\begin{align*}
&B(\theta_A, \theta_B)
\coloneqq~ \text{sup}_{t \in [0,1]} 
\big[ \mathcal{L}\big(t \theta_A + (1 - t) \theta_B\big)
\\
&\hspace{95pt}
-  t \mathcal{L}(\theta_A) - (1 - t) \mathcal{L}(\theta_B)  \big] \approx 0.
\end{align*}
Empirical investigations have revealed that independently trained networks on small datasets are often connected by low-loss paths~\citep{freeman2016topology,garipov2018loss,pmlr-v80-draxler18a}, and even that nearly arbitrary pairs of solutions can be joined through curves of low error~\citep{garipov2018loss}. MC sheds light on the effectiveness of weight-space ensembling, known to improve generalization~\citep{izmailov2018averaging,rame2022diverse,wortsman2022model} and has been applied to adversarial robustness~\citep{zhao2020bridging}, generalization theory~\citep{pittorino2022deep,juneja2022linear,lubana2023mechanistic}, loss landscape geometry~\citep{gotmare2018using,vlaar2022can,lucas2021analyzing}, and more recently, continual learning~\citep{wen2023optimizing,kozal2024continual,chen2023continual} and ensemble methods~\citep{kanoh2024linear,kim2025test}.

\textbf{Attention Mechanism and Positional Encoding.}
The attention mechanism is inherently permutation invariant, necessitating positional encoding (PE) to capture token order~\citep{vaswani2017attention}. Early models employed Absolute PEs (APEs), either sinusoidal or learnable embeddings~\citep{gehring2017convolutional}, which became standard in seminal architectures such as BERT~\citep{devlin2019bert}, GPT-2~\citep{radford2019language}, and ViT~\citep{dosovitskiy2020image}. While effective, APEs treat absolute positions as the sole signal, limiting robustness under local reordering. Relative PEs (RPEs) address this by encoding pairwise distances into attention weights \citep{shaw2018self}, a design later adopted in many models~\citep{dai2019transformer,he2020deberta,raffel2020exploring}. Among recent advances, Rotary PE (RoPE)~\citep{su2024roformer} encodes relative position via angular rotations of query–key vectors, preserving dot-product structure and enabling both translation equivariance and long-sequence extrapolation. 
RoPE is now widely adopted in state-of-the-art models~\citep{touvron2023llama,chowdhery2023palm,nijkamp2022codegen,liu2024deepseek,guo2025deepseek,agarwal2025gpt,bai2025qwen2,yang2025qwen3}, attesting to its robustness in large-scale settings.

\textbf{Functional Equivalence.} A major difficulty in characterizing LMC lies in the \emph{permutation invariance} of neural networks: reordering hidden units does not alter the underlying function~\citep{brea2019weight,novak2018sensitivity}, yet such symmetries can cause functionally identical models to appear distant in parameter space~\citep{allen2019convergence,du2019gradient,frankle2018lottery,belkin2019reconciling,neyshabur2018towards}. This phenomenon is subsumed under the broader framework of \emph{functional equivalence}~\citep{hecht1990algebraic,fefferman1993recovering,kuurkova1994functionally,albertini1993neural,albertini1993identifiability}, which seeks to describe when distinct parameterizations realize the same input–output mapping. To address this issue, recent studies have examined LMC \emph{up to permutation}, where low-loss paths are revealed once hidden units are properly aligned~\citep{singh2020model,ainsworth2022git,pena2023re,ito2024analysis,itolinear, zhao2025understanding}. Theoretical results show that dropout-stable networks naturally exhibit mode connectivity~\citep{kuditipudi2019explaining,shevchenko2020landscape}, while LMC under permutation alignment may already emerge at initialization in the NTK regime~\citep{entezari2021role,jacot2018neural}, with rigorous guarantees recently established~\citep{ferbach2024proving}. These developments lend support to the convexity conjecture~\citep{entezari2021role}, which views the SGD solution set as approximately convex once symmetries are accounted for. This view is strengthened by \citet{sharma2024simultaneous}, who propose simultaneous linear connectivity, where a single model aligns linearly with multiple others. Additional studies explore the geometry of the solution space~\citep{ainsworth2022git,xiao2023compact} and identify star-shaped regions conducive to LMC~\citep{sonthalia2024deep}.

\textbf{Alignment Algorithms.} These algorithms align parameters to establish LMC. \citep{entezari2021role} proposed a simulated annealing-based algorithm. \citet{singh2020model} employed Optimal Transport, while \citet{akash2022wasserstein} utilized the Wasserstein Barycenter. \citet{ainsworth2022git} introduced three methods: activation matching (using intermediate activations), weight matching (being data-independent), and the Straight-Through Estimator (minimizing interpolation loss via gradients); all are based on solving the Linear Assignment Problem~\citep{kuhn1955hungarian, jonker1988shortest, crouse2016implementing}. \citet{pena2023re} developed Sinkhorn re-basin, a differentiable method that improves alignment but struggles with residual connections due to layer-independent optimization.

\textbf{Contribution.} 
Recent work on the symmetry of vanilla attention~\citep{tran2025equivariant,knyazev2024accelerating} shows that head permutations and linear group actions capture all symmetries. Meanwhile, \citet{theus2025generalized} proposed a Transformer matching method, but it overlooks symmetry in the query-key and key-value components. In this paper, we study LMC in attention-based models, focusing on how PEs influence parameter symmetry of attention. The paper is organized as follows:
\begin{enumerate}[leftmargin=14pt, topsep=1pt]
\item In Section~\ref{main:section{Preliminary}}, we recall the notion of Multihead Attention and its parameter space, together with the result characterizing functional equivalence in the vanilla case.
\item In Section~\ref{main:section{Weight Space}}, we analyze how positional encodings alter the internal structure of attention. We focus primarily on the most widely used encodings, Absolute PE and Relative PE. In particular, we study sinusoidal PE as a representative of APE and rotary PE as a representative of RPE, and show why results from the vanilla case do not extend directly to these settings.
\item In Section~\ref{main:section{Functional Equivalence}}, we present the main result of the paper, which characterizes the full symmetry of attention with widely used positional encodings. This characterization underlies the matching algorithm for Multihead Attention described in Section~\ref{main:Matching_Algo}.
\item In Section~\ref{main:section{Experiments}}, we present empirical evidence of LMC across a wide range of models and tasks, under diverse settings and across datasets of varying scales and modalities. We also evaluate the effectiveness of our proposed matching algorithms and conduct detailed ablation studies to validate their individual components.

\end{enumerate}
A table of notations, theoretical foundations, and experimental details is included in the Appendix. Given the technical nature of our proofs, Appendix~\ref{summarizesection} provides a \textit{consolidated overview} to help readers grasp the overall structure of our work without delving into all technical details.

\section{Parameter Space of Multihead Attention}
\label{main:section{Preliminary}}

We present the formal definition of the Multihead Attention, describe its associated parameter space, and review results in the literature concerning its parameter space symmetry.

\textbf{Multihead Attention and its Parameter Space.} Let $d$, $L$, and $h$ be positive integers denoting the token dimension, sequence length, and number of heads, respectively. Define the space of all sequences of $d$-dimensional tokens by $\mathcal{S} \coloneqq \sqcup_{L=1}^\infty \mathbb{R}^{L \times d}$. Given a fixed head dimension $d_h$, consider $W^Q_i, W^K_i, W^V_i, W^O_i \in \mathbb{R}^{d \times d_h}$ for each $i \in [h]$. For an input sequence $\mathbf{x} = (x_1,\ldots,x_L)^\top \in \mathbb{R}^{L \times d} \subset \mathcal{S}$, the Multihead Attention with $h$ heads is defined by
    \begin{align}
        &\textnormal{MHA}\big(\mathbf{x};\{W^Q_i, W^K_i, W^V_i, W^O_i\}_{i=1}^h \big)
         \\
        &\hspace{6pt}= \sum_{i=1}^h \textup{softmax}\left((\mathbf{x}W^Q_i)\left(\mathbf{x}W^K_i\right)^\top \right) \cdot \left(\mathbf{x}W^V_i\right) (W^O_i)^\top. \notag
    \end{align}
Here, the operator softmax is applied row-wise to the similarity matrix 
$(\mathbf{x} W^Q_i)(\mathbf{x} W^K_i)^\top \in \mathbb{R}^{L \times L}$, yielding the \emph{attention matrix} associated with $\mathbf{x}$. 
Each row of this matrix represents a probability distribution that specifies the relative contributions of all input tokens to a given output token. The parameters and the parameter space of the MHA map is thus denoted as $\theta$ and $\Theta$, respectively, and given by
\begin{align} \label{main:eq_1}
&\theta \coloneqq \big(W^Q_i, W^K_i, W^V_i, W^O_i \big)_{i=1}^h 
\notag\\
&\hspace{82pt}
\in \Theta(d,d_h,h) \coloneqq \left(\mathbb{R}^{d \times d_h}\right)^{4h}.
\end{align}
Typically, the head dimension is set to $d_h = d/h$.

\textbf{Symmetry Group.} Define the following group
\begin{align*}
    G_{\text{Att}}(d_h,h) \coloneqq S_h \times \big(\textup{GL}(d_h) \times \textup{GL}(d_h)\big)^h.
\end{align*}
 This is precisely the direct product between the permutation group $S_h$ and $h$ copies of $\textup{GL}(d_h) \times \textup{GL}(d_h)$. Each element $g$ of $G_{\text{Att}}(d_h,h)$ has the form 
 \begin{align*}
     \text{$g = (\sigma, (U_i,V_i)_{i=1}^h)$, where $\sigma \in S_h$ and $U_i, V_i \in \textup{GL}(d_h)$.}
 \end{align*}
The group $G_{\text{Att}}(d_h,h)$ acts naturally on the parameter space $\Theta(d,d_h,h)$ via head permutations and linear transformations of the weight matrices, as follows:
\begin{align*}
    &g\theta \coloneqq \big(W^Q_{\sigma(i)}\cdot U_i^\top, ~W^K_{\sigma(i)}\cdot U_i^{-1},
    \\&\hspace{103pt} 
    W^V_{\sigma(i)} \cdot V_i^\top, ~ W^O_{\sigma(i)}\cdot V_i^{-1} \big)_{i=1}^h.
\end{align*}
This action preserves the functionality of MHA maps: For every $\theta \in \Theta(d,d_h,h)$ and  every $g \in G_{\text{Att}}(d_h,h)$, one has
\begin{align*}
    \textup{MHA}(\cdot ; \theta) = \textup{MHA}(\cdot ; g\theta).
\end{align*}
The general linear action cancels in matrix multiplications, while the permutation action induced by $\sigma$ commutes with addition. Together, these actions determine the symmetry of multihead attention, as stated in the following result.

\begin{theorem}[\citet{tran2025equivariant}] \label{main:thm_old_result}
    Given two $\textup{MHA}$ maps with $h$ and $\bar{h}$ heads, parameterized by  
    \begin{align*}
        \theta &= (W_i^Q,W_i^K,W^V_i,W^O_i)_{i=1}^h \in G_{\text{Att}}(d_h,h), \text{ and} \\
        \bar{\theta} &= (\bar{W}^Q_i, \bar{W}^K_i, \bar{W}_i^V, \bar{W}_i^O)_{i=1}^{\bar{h}} \in G_{\text{Att}}(d_h,\bar{h}),
    \end{align*}
    respectively. Assume that 

\textbf{1.} All matrices $W_i^Q, W_i^K, W_i^V, W_i^O$ and $\bar{W}_i^Q$, $\bar{W}_i^K$, $\bar{W}_i^V$, $\bar{W}_i^O$, for all feasible $i$, are of rank $d_h$; and,

\textbf{2.} From $\theta$, the matrices $\{W^Q_i(W^K_i)^\top\}_{i=1}^h$ are pairwise distinct. The same condition holds for $\bar{\theta}$.

    If the two \textup{MHA} maps are identical, 
    then $h = \bar{h}$, and there exists $g \in G_{\textup{Att}}(d_h,h)$ such that $\bar{\theta} = g\theta$.
\end{theorem}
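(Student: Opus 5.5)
The plan is to reduce the identity of the two MHA maps to an identity between finite sums of functions of a very special form, and then use a linear-independence argument to match heads. Write $A_i \coloneqq W^Q_i(W^K_i)^\top$ and $B_i \coloneqq W^V_i(W^O_i)^\top$ (both $d\times d$ of rank $d_h$), and similarly $\bar A_j,\bar B_j$. Then
\begin{align*}
\textup{MHA}(\mathbf{x};\theta)=\sum_{i=1}^h \textup{softmax}(\mathbf{x}A_i\mathbf{x}^\top)\,\mathbf{x}B_i .
\end{align*}
Only the pairs $(A_i,B_i)$ enter. By the rank assumption, a factorization $A=W^Q(W^K)^\top$ with full-rank $d\times d_h$ factors is unique up to $(W^Q U^\top, W^K U^{-1})$ with $U\in\textup{GL}(d_h)$, and likewise for $B$. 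So it suffices to show $h=\bar h$ and that there is $\sigma\in S_h$ with $(\bar A_i,\bar B_i)=(A_{\sigma(i)},B_{\sigma(i)})$.

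To extract the pairs I will specialize to sequences of length $L=2$, $\mathbf{x}=(x_1,x_2)^\top$, and look at the first output token:
\begin{align*}
\sum_{i}\frac{e^{x_1^\top A_i x_1}x_1^\top B_i + e^{x_1^\top A_i x_2}x_2^\top B_i}{e^{x_1^\top A_i x_1}+e^{x_1^\top A_i x_2}} .
\end{align*}
Subtract the same expression for $\bar\theta$. Taking $x_2=x_1$ gives $\sum_i B_i=\sum_j \bar B_j$. Next, fix $x_1$ and replace $x_2$ by $x_1+t y$. Each summand becomes $x_1^\top B_i + \tfrac{t\,y^\top B_i}{1+e^{-t x_1^\top A_i y}}$ (after using the $L=1$ identity), so we get
\begin{align*}
\sum_i \frac{y^\top B_i}{1+e^{-t\,x_1^\top A_i y}}=\sum_j \frac{y^\top \bar B_j}{1+e^{-t\,x_1^\top \bar A_j y}}\qquad\text{for all } t .
\end{align*}
Logistic functions $t\mapsto (1+e^{-ct})^{-1}$ with distinct $c>0$, together with constants, are linearly independent. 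One sees this by analytic continuation: the poles at $t=i\pi(2k+1)/c$ distinguish the different values of $c$. For $c<0$ I rewrite the term as $1-(1+e^{|c|t})^{-1}$.

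For generic $(x_1,y)$, the scalars $x_1^\top A_i y$ are pairwise distinct and nonzero, because the $A_i$ are pairwise distinct and nonzero. Then, for each $i$, the coefficient $y^\top B_i$ must equal the sum of $y^\top\bar B_j$ over those $j$ with $x_1^\top\bar A_j y = x_1^\top A_i y$. Since generically the bilinear forms coincide only when the matrices do, we obtain a bijection on the set of distinct $A$'s. With distinctness inside each family, this yields a bijection $\sigma$ with $\bar A_j = A_{\sigma(j)}$, hence $h=\bar h$. It also gives $y^\top\bar B_j = y^\top B_{\sigma(j)}$ for generic $y$, hence $\bar B_j = B_{\sigma(j)}$ by density. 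Rank $d_h\ge1$ guarantees $B_i\neq0$, so no head can be dropped.

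The main obstacle is making the genericity and linear-independence step airtight. The matching $x_1^\top A_i y=x_1^\top \bar A_j y$ depends on $(x_1,y)$, so I must choose the point outside a finite union of proper algebraic subsets: the sets $\{x^\top (A_i-\bar A_j)y=0\}$ with $A_i\neq\bar A_j$, the sets $\{x^\top A_i y=0\}$, and the sets where the $y^\top B_i$ vanish. Only then can I transfer equalities of coefficients back to equalities of matrices, using continuity in $y$ on the open dense set. I would first prove a clean lemma: if $\sum_k \beta_k(1+e^{-c_k t})^{-1}+\beta_0\equiv 0$ with distinct nonzero $c_k$, then all $\beta_k=0$, obtained by comparing the asymptotics $t\to\pm\infty$ and then the poles. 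The rest of the argument is bookkeeping.
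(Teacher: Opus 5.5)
Your reduction to the pairs $(A_i,B_i)$ via rank factorization is correct, and so is your computation of each summand after substituting $x_2=x_1+ty$. The argument breaks at the linear-independence step. The lemma you plan to prove is false. Since $(1+e^{-ct})^{-1}+(1+e^{ct})^{-1}\equiv 1$, logistic functions with distinct nonzero rates are linearly dependent modulo constants as soon as both $c$ and $-c$ occur. Poles cannot separate them either, because $c$ and $-c$ have the same pole set. Independence holds only for distinct $|c_k|$. Your rewriting for $c<0$ should read $(1+e^{-ct})^{-1}=1-(1+e^{-|c|t})^{-1}$, and that is exactly the collapse of $c$ and $-c$ onto one function.

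Choosing $(x_1,y)$ generically does not help. The hypotheses allow $A_{i'}=-A_i$ within one family (e.g.\ $W^Q_{i'}=-W^Q_i$, $W^K_{i'}=W^K_i$), or $\bar A_j=-A_i$ across the two families. Then $x_1^\top A_{i'}y=-x_1^\top A_i y$ identically. The identity only yields that the total coefficient at rate $\gamma$ equals the total coefficient at rate $-\gamma$, not the head-by-head matching you need.

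This is not a technicality: no argument using only sequences of length at most $2$ can prove the theorem. Take $\theta$ with heads $(A,B),(-A,B)$ and $\bar\theta$ with heads $(A',B),(-A',B)$, where $A'\neq\pm A$ and all factors have rank $d_h$, so both hypotheses hold. For $L=2$ the first output token of $\theta$ is
\begin{align*}
2x_1^\top B+\Big[\big(1+e^{-u}\big)^{-1}+\big(1+e^{u}\big)^{-1}\Big](x_2-x_1)^\top B=(x_1+x_2)^\top B,\qquad u=x_1^\top A(x_2-x_1).
\end{align*}
Likewise the second token equals $(x_1+x_2)^\top B$ and the $L=1$ output equals $2x_1^\top B$, with no dependence on $A$. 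So the two maps agree on all sequences of length at most $2$. Yet $\bar\theta\neq g\theta$ for every $g\in G_{\text{Att}}(d_h,2)$, and your argument would instead ``derive'' $y^\top B=0$.

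The missing idea is to use longer sequences, on which $\mathrm{softmax}(s)+\mathrm{softmax}(-s)$ is no longer constant. The paper quotes this theorem rather than reproving it, but its own argument for the general formulation (Theorem~\ref{thm:main_fe_rope}) shows the mechanism. It takes $L$ large (at least the number of heads of the combined, identically vanishing map) and clears the softmax denominators to obtain an exponential-polynomial identity. It then applies Lemma~\ref{appendix:lemma-1} to tuples of pairwise distinct token indices. There each head is paired with its own token, so contributions of different heads cannot cancel. Your first and last steps (passing to $(A_i,B_i)$ and recovering $\theta$ from the pairs) can be kept unchanged.
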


\begin{remark}
    While the theorem requires mild genericity assumptions on the MHA parameters, these hold almost surely. Hence, outside a negligible subset of the parameter space (e.g., measure zero or a non-dense set), functional equivalence is completely characterized by the symmetry group. Such assumptions are standard in the literature on functional equivalence of neural architectures \citep{hecht1990algebraic,fefferman1993recovering,bui2020functional}, and we will adopt the same perspective in our results.
\end{remark}

\section{How Positional Encoding Alters Architectural Symmetry}
\label{main:section{Weight Space}}

We investigate how positional encodings (PEs) modify the internal structure of the attention mechanism. Our analysis primarily focuses on \emph{sinusoidal} and \emph{rotary} encodings, which are two widely used PEs. These serve as representatives of the two principal paradigms of positional encoding: absolute and relative, respectively. We examine how the formulation of Multihead Attention is altered under these schemes, and how the architectural symmetries are consequently affected. For now, we follow the standard implementation practice of assuming that both $d$ and $d_h$ are even.

\subsection{Absolute Positional Encoding}

\textbf{Sinusoidal Encoding.} In Absolute PEs, let $\mathbf{p} = \{p_i\}_{i=1}^\infty \subset \mathbb{R}^d$ denote the sequence of positional vectors, which encodes positional information. In the case of \emph{sinusoidal encoding} from the original Transformer \citep{vaswani2017attention}, the components of $p_m \in \mathbb{R}^d$ are defined as
\begin{align*}
    p_{m,2k} &= \sin \big( {m}~/~{10000^{2k/d}} \big), \text{ and } \\
    p_{m,2k+1} &= \cos\big( {m}~/~{10000^{2k/d}} \big),
\end{align*}
for $0 \le k < d/2$. For an input sequence  $\mathbf{x} = (x_1,\ldots,x_L)^\top$ of length $L$, the positional encoding is incorporated by addition, namely $\mathbf{x} + \mathbf{p} = (x_1+p_1,\ldots,x_L+p_L)^\top$ (this is an abuse of notation), which is then supplied as input to the multihead attention, yielding
\begin{align*}
    \textnormal{MHA}_{\text{SinusoidalPE}}(\mathbf{x}~ ; \theta ) = \textnormal{MHA}(\mathbf{x} + \mathbf{p}~ ; \theta ).
\end{align*}
\textbf{Symmetry Group.} In this formulation, PE does not alter the internal structure of the MHA map; it merely applies a shift to the input. Moreover, the encoding map $\mathcal{S} \to \mathcal{S}$, defined by $\mathbf{x} \mapsto \mathbf{x} + \mathbf{p}$, is bijective. Consequently, the introduction of sinusoidal PE has no effect on the analysis of parameter symmetry for multihead attention. Thus, the functional equivalence classes in the presence of sinusoidal PE coincide exactly with those in the absence of PE.

\subsection{Relative Positional Encoding} 
\textbf{Rotary Positional Encoding.} We next recall the \emph{Rotary Positional Encoding} (RoPE) \citep{su2024roformer}. For a token at position $n$, define the block-diagonal rotation matrix $R_n \in \mathbb{R}^{d_h \times d_h}$ by
\begin{align*}
    R_n = \text{diag}\left(\begin{bmatrix}
        \cos(n \varphi_i) & -\sin(n \varphi_i)  \\
        \sin(n \varphi_i) & \cos(n \varphi_i)
    \end{bmatrix} ~\colon~ i \in [d_h/2] \right),
\end{align*}
    where $\varphi_i = 10000^{-2(i-1)/d}$ for $i \in [d_h/2]$. For brevity, we omit the explicit subscript indicating the head dimension $d_h$. Note that $R_n = (R_1)^n$. The multihead attention with RoPE is defined as
\begin{align*}
    &\hspace{0pt}\textnormal{MHA}_{\textup{RoPE}}(\mathbf{x} ; \theta ) = \sum_{i=1}^h \notag \textup{softmax} \\
        &\hspace{20pt} \Big[x_mW_i^QR_{m-n}(W_i^K)^\top x_n^\top\Big]_{m,n\in[L]} \cdot \mathbf{x}W^V_i(W^O_i)^\top.
    \end{align*}
\textbf{Effect on Internal Structure and Symmetry Group.} The parameterization and parameter space of $\textnormal{MHA}_{\textup{RoPE}}$ coincide with those of the standard MultiHead map defined in Equation~(\ref{main:eq_1}). However, in contrast to the vanilla case, the action of $G_{\text{Att}}(d_h,h)$ on $\Theta(d,d_h,h)$ no longer preserves functionality. Specifically, for $\theta \in \Theta(d,d_h,h)$ and $g \in G_{\text{Att}}(d_h,h)$, it generally holds that
\begin{align*}
    \textnormal{MHA}_{\textup{RoPE}}(\cdot ; \theta) \neq \textnormal{MHA}_{\textup{RoPE}}(\cdot ; g\theta).
\end{align*}
The essential reason is as follows. While the interaction between $W_i^V$ and $W_i^O$ remains purely multiplicative and thus structurally consistent with the vanilla case, the matrices $W_i^Q$ and $W_i^K$ are now separated by the relative rotary matrix $R_{m-n}$. This insertion prevents the cancellation of group actions induced by $\text{GL}(d_h)$, thereby violating the invariance property.

\textbf{Symmetry Group.} To define the symmetry group of $\textnormal{MHA}_{\textup{RoPE}}$, we first introduce, for each $i\in[d_h/2]$, the matrices $P_i, J_i \in \mathbb{R}^{d_h \times d_h}$. These are block-diagonal matrices with $d_h/2$ consecutive $2\times 2$ blocks, where only the $i$-th block is nonzero:
\begin{align*}
    P_i &= \text{diag}\Big(0,\ldots,0,
        \underset{i\text{-th block}}{\big[\begin{smallmatrix}1&0\\ 0&1\end{smallmatrix}\big]},0,\ldots,0\Big), \text{ and }
    \\
    J_i &= \text{diag}\Big(0,\ldots,0,
        \underset{i\text{-th block}}{\big[\begin{smallmatrix}0&-1\\ 1&0\end{smallmatrix}\big]},0,\ldots,0\Big).
\end{align*}
Now define the following group
\begin{align*}
    &\text{H}(d_h) \coloneqq \Big \{ U = \textstyle\sum_{i=1}^{d_h/2} (a_iP_i + b_iJ_i) \in \mathbb{R}^{d_h \times d_h} 
    ~:~ \\& \hspace{73pt}
    (a_i,b_i) \in \mathbb{R}^2 \setminus\{(0,0)\}, ~ i \in [d_h/2]\Big\}.
\end{align*}
It is straightforward to verify that $\text{H}(d_h)$ is an abelian subgroup of $\text{GL}(d_h)$, and moreover isomorphic to $(\mathbb{C}^{\times})^{d_h/2}$, where $\mathbb{C}^{\times}$ denotes the multiplicative group of nonzero complex numbers. In particular, the rotary matrices $R_n$ belong to $\text{H}(d_h)$ for all $n$. We then define
\begin{align*}
    G_{\text{RoPE}}(d_h,h) \coloneqq S_h \times \big(\text{H}(d_h) \times \text{GL}(d_h)\big)^h.
\end{align*}
It follows immediately that $G_{\text{RoPE}}(d_h,h)$ is a subgroup of $G_{\text{Att}}(d_h,h)$. Furthermore, the canonical action of $G_{\text{Att}}(d_h,h)$ on $\Theta(d,d_h,h)$ restricts to a well-defined group action of $G_{\text{RoPE}}(d_h,h)$ on $\Theta(d,d_h,h)$. Crucially, this restricted action preserves the functionality of the $\textnormal{MHA}_{\textup{RoPE}}$ map. In particular, for every $\theta \in \Theta(d,d_h,h)$ and every $g \in G_{\text{RoPE}}(d_h,h)$, one has
\begin{align*}
    \textnormal{MHA}_{\textup{RoPE}}(\cdot ~; \theta) 
    = \textnormal{MHA}_{\textup{RoPE}}(\cdot~; g\theta).
\end{align*}
The justification is as follows. Compared to the standard MHA map, aside from the head permutation $\sigma$ and the interaction between $W_i^V$ and $W_i^O$, the only structural difference lies in the interaction between $W_i^Q$ and $W_i^K$. Since $\text{H}(d_h)$ is abelian and $R_n$ belongs to $\text{H}(d_h)$, one obtains
\begin{align*}
    &(W^Q_iU^\top)R_n(W^K_iU^{-1})^\top 
    = W_i^QU^\top R_n (U^{-1})^\top (W_i^K)^\top \notag \\
    &\hspace{20pt} = W_i^Q R_n U^\top (U^{-1})^\top (W_i^K)^\top = W_i^Q R_n (W_i^K)^\top.
\end{align*}
Thus the similarity matrix inside the softmax of the 
$\textnormal{MHA}_{\textup{RoPE}}$ map remains invariant under $G_{\text{RoPE}}$.

\begin{remark}
Our main result, presented next, shows that $G_{\text{RoPE}}$ fully characterizes the symmetry structure of the $\textnormal{MHA}_{\textup{RoPE}}$ map. Since $\text{H}(d_h)$ is substantially smaller than $\text{GL}(d_h)$, the function class represented by $\textnormal{MHA}_{\textup{RoPE}}$ is strictly larger than that of $\textnormal{MHA}$ or $\textnormal{MHA}_{\textup{SinusoidalPE}}$. \textit{This finding offers a theoretical rationale for the increasing use of RoPE in attention-based models.}
\end{remark}

\section{Parameter Symmetry of Multihead Attention with RoPE}
\label{main:section{Functional Equivalence}}

\begin{figure*}[t]
 \vspace{-5pt}
\centering
\includegraphics[width=1.0\linewidth]{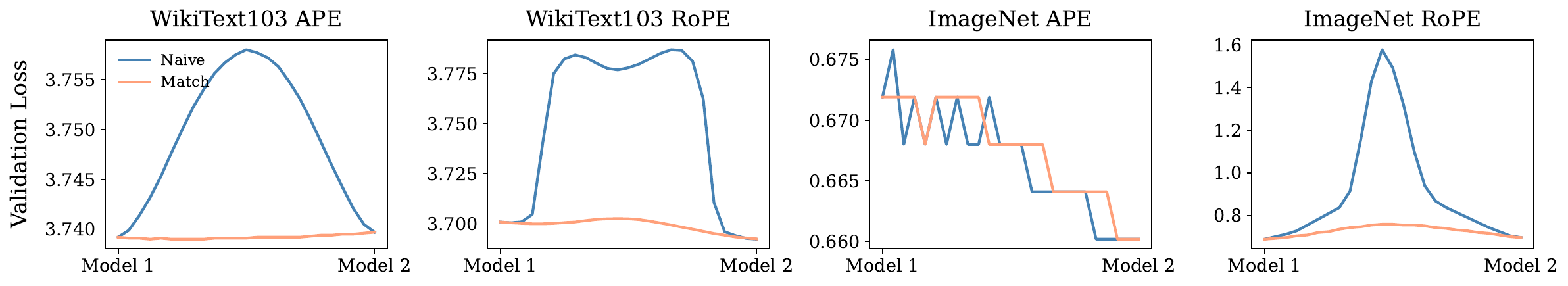}
\vspace{-13pt}
\caption{
LMC interpolation plots for ViT on ImageNet-1K (subplots 3 and 4) and GPT-2 on WikiText103 (subplots 1 and 2), with APE and RoPE under first attention layer re-initialization.
 }\label{fig:LMC_main_first_attn}
 \vspace{-5pt}
\end{figure*}

In this section, we examine the symmetry of multihead attention under a general formulation, of which the RoPE-based attention mechanism constitutes a special case.

\subsection{A General Formulation of Multihead Attention}
\label{subsection{A General Formulation of Multihead Attention}}

\textbf{General Multihead Attention.} Define a general MHA map with $h$ heads, parameterized by $\{\{A_i^{m,n}\}_{m,n\ge1}\}_{i=1}^h$ and $\{B_i\}_{i=1}^h$, where $A_i^{m,n}, B_i \in \mathbb{R}^{d\times d}$, as follows: For an input sequence $\mathbf{x}=(x_1,\ldots,x_L)^\top\in\mathbb{R}^{L\times d}$,
\begin{align}  \label{main:eq_2}
    &\textnormal{MHA}\left(\mathbf{x} ~; \big\{ \{A_i^{m,n}\}_{m,n}, B_i\big\}_{i=1}^h \right) \\
    &\hspace{44pt}= \sum_{i=1}^h 
    \textnormal{softmax} \begin{bmatrix} x_mA_i^{m,n}x_n^\top \end{bmatrix}_{m,n = 1, \ldots, L} 
    \cdot \mathbf{x}B_i. \notag
\end{align}
To facilitate the subsequent analysis, we impose two structural conditions: 

\textbf{1. (Stationarity)} for all \( m, n \geq 1 \) and all shifts \( k \geq 0 \), we assume \( A_i^{m,n} = A_i^{m+k,n+k} \), reflecting the natural shift-invariance induced by relative positional encodings; and,

\textbf{2. (Self-similarity symmetry)} for each $m\ge 1$, $A_i^{m,m}$ parameterizes the self-similarity score of the $m$-th token in head $i$. Since any quadratic form is uniquely represented by a symmetric matrix, we may replace $A_i^{m,m}$ by its symmetrization $\text{sym}(A_i^{m,m})\coloneqq \big(A_i^{m,m}+(A_i^{m,m})^\top\big)/2$ without changing the functionality, i.e.
\begin{align*}
    x_m A_i^{m,m} x_m^\top = x_m \text{sym}(A_i^{m,m}) x_m^\top.
\end{align*}
Henceforth, we assume that all \( A_i^{m,m} \) are symmetric.

From now on, these two conditions will be imposed whenever the general MHA formulation is considered.

\textbf{Functional Equivalence of General MHA.} We now study the case where two general Multi-Head Attention maps, with $h$ and $\bar h$ heads respectively, yield identical functions:
\begin{align*}
    &\textup{MHA}\left(\mathbf{x}~; \big\{\{A_i^{m,n}\}_{m,n}, B_i\big\}_{i=1}^h \right) \notag \\ &\hspace{65pt} = \textup{MHA}\left(\mathbf{x} ~; \big\{\{\bar{A}_i^{m,n}\}_{m,n}, \bar{B}_i\big\}_{i=1}^{\bar{h}} \right),
\end{align*}
which is equivalent to the fact that the following MHA map with $h+\bar{h}$ heads is identically zero
\begin{align*}
    &0 = \textup{MHA}\Bigl(\mathbf{x} ~; \big\{\{A_i^{m,n}\}_{m,n}\big\}_{i=1}^h \sqcup \big\{\{\bar{A}_i^{m,n}\}_{m,n}\big\}_{i=1}^{\bar{h}}, 
    \\&\hspace{130pt}
    \big\{B_i\big\}_{i=1}^h \sqcup \big\{-\bar{B}_i\big\}_{i=1}^{\bar{h}} \Bigr).
\end{align*}
Before presenting our result, we introduce the following notion. Two families $\{X_i\}_{i \in I}$ and $\{Y_i\}_{i \in I}$ are said to be \emph{distinct} if there exists index $i\in I$ such that $X_i \neq Y_i$. The following theorem constitutes the main result of this section, offering a fundamental insight into the symmetry structure of general MHA.

\begin{theorem} \label{main:thm:main_fe_rope}
Consider the \textup{MHA} map with $h$ heads, parameterized by families of matrices 
\begin{align*}
    \big\{\{A_i^{m,n}\}_{m,n}\big\}_{i=1}^h ~\text{ and } ~  \big\{B_i\big\}_{i=1}^h \text{ in } ~ \mathbb{R}^{d \times d},
\end{align*}
as in Equation~(\ref{main:eq_2}). Assume that 

\textbf{1.} The $h$ parameter families $\{A^{m,n}_1\}_{m,n}, \ldots, \{A^{m,n}_h\}_{m,n}$, are pairwise distinct,

\textbf{2. }$A^{m,n}_i$ is nonzero for all $i \in [h]$ and $m,n \geq 1$. 

If the \textup{MHA} map is identical to zero, then all matrices $B_1, \ldots, B_h$ are equal to zero.
\end{theorem}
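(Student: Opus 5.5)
Since the map vanishes identically on $\mathcal{S}$, we may test it on specially chosen sequences of any length, and we will do so one row at a time. Fix $L$ and look at the first output row ($m=1$). It equals
\begin{align*}
\sum_{i=1}^h \sum_{n=1}^L \frac{\exp(x_1A_i^{1,n}x_n^\top)}{\sum_{k=1}^L \exp(x_1A_i^{1,k}x_k^\top)}\, x_nB_i = 0 .
\end{align*}
Stationarity lets us write $A_i^{1,n}=A_i^{(n-1)}$, a matrix that depends only on the offset. Symmetric matrices on the diagonal are harmless. The plan is to reduce this to a linear-independence statement for exponentials of polynomials, in the spirit of the proof of Theorem~\ref{main:thm_old_result} from \citet{tran2025equivariant}.

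\textbf{Step 1 (separating heads).} Because families are distinct, for each pair $i\neq j$ there is some $m,n$ with $A_i^{m,n}\neq A_j^{m,n}$. By stationarity we may take $\min(m,n)=1$. Suppose first the difference appears at an offset $(1,n_0)$. Choose $L\ge n_0$. We scale the tokens as $x_k=t\,y_k$, with the $y_k$ generic and $t$ a parameter. The exponents then become $t^2\,q_{i,n}(y)$, where $q_{i,n}(y)=y_1A_i^{1,n}y_n^\top$. Clear the denominators by multiplying by $\prod_i\sum_k\exp(t^2 q_{i,k})$. This produces an identity of the form $\sum_{\text{multi-indices}} c(y)\,t\,\exp\big(t^2 Q(y)\big)=0$, where each $Q$ is a sum of $q$'s, one per head.

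\textbf{Step 2 (dominant exponent).} We use the linear independence of $\{\exp(\lambda t^2)\}$ over polynomials in $t$ for distinct $\lambda$. Equivalently, we let $t\to\infty$ and isolate the leading exponent. For generic $y$, the values $q_{i,n}(y)$ are distinct across distinct nonzero bilinear forms. Condition 2 ensures that each $q_{i,n}$ is a nonzero quadratic form, so genericity is meaningful. We argue that for suitable $y$ some head $i$ has a unique maximizing key $n^*$ whose exponent is not matched by any other head. Collecting the coefficient of that unique top exponential then forces $y_{n^*}B_i$ to vanish. Its weight is $\exp(t^2 q_{i,n^*})$ times a product of the other heads' denominators, and all of these are positive.

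\textbf{Step 3 (open-set argument).} The conclusion $y_{n^*}B_i=0$ holds on an open set of $y_{n^*}$, so $B_i=0$. Once $B_i=0$ we remove head $i$ and induct on $h$. The base case $h=1$ is direct, since a single softmax-weighted average $\sum_n p_n x_nB_1$ vanishing identically forces $B_1=0$. For example, take $L=1$, so that $x_1B_1=0$ for all $x_1$.

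The main obstacle is Step 2: showing that some head's top exponent is not cancelled by a coinciding exponent of another head. Distinctness only says that $A_i^{1,n}\neq A_j^{1,n}$ for some offset, not for all. I would handle this by choosing $y$ so that the distinguishing offset $n_0$ dominates. Take $y_{n_0}$ large in a direction where $q_{i,n_0}\neq q_{j,n_0}$, and keep the other keys small. Then for every head the maximizing key is $n_0$, and comparing the values $y_1A_i^{1,n_0}y_{n_0}^\top$ across heads gives a strict ordering. A residual case needs care: heads that agree at offset $n_0$ but differ elsewhere, and differences that occur only on the diagonal block $A^{1,1}$. For these I would iterate the argument over offsets, using a lexicographic ordering of the $(n,\text{value})$ pairs to find a head with an unmatched leading term.
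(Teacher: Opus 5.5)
\textbf{There is a genuine gap in Step 2: the dominant-exponent mechanism does not work as described.}

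After clearing denominators, every exponential has the form $\exp\bigl(t^2\sum_{p=1}^h q_{p,t_p}(y)\bigr)$, with exactly one summand from each head. The top exponential is therefore $\exp\bigl(t^2\sum_p \max_n q_{p,n}(y)\bigr)$, and \emph{every} head contributes to it. If $n_p^*$ denotes head $p$'s maximizing key, its coefficient is $t\sum_{p} y_{n_p^*}B_p$. So there is no exponential attached to a single head that could be ``unmatched''. The comparison of $y_1A_i^{1,n_0}y_{n_0}^\top$ across different heads plays no role, since the exponent is a sum over heads, not a maximum. In particular, your proposed fix goes the wrong way. Making $y_{n_0}$ dominant, so that $n_0$ maximizes for every head, yields only $y_{n_0}\sum_p B_p=0$, i.e.\ $\sum_p B_p=0$.

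What the hard-max strategy actually needs is an open set of $y$ on which head $i$'s maximizing key is not a maximizing key of any other head. Then the coefficient of $y_{n^*}$ in the top relation isolates $B_i$. Proving that such a configuration always exists is the real difficulty, and you leave it open. Heads with equal $A^{1,n}$ have equal scores at key $n$, so they can be separated only through their maxima over all remaining keys. When heads coincide on many offsets in intertwined patterns, or differ only in the diagonal block, this becomes a genuine combinatorial existence problem. The ``lexicographic iteration'' you mention is not an argument for it.

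The paper avoids this by not restricting to the top exponent. It uses the lemma on exponential polynomials over $\mathbb{R}(\mathbf{x})$ to group tuples $(t_1,\dots,t_h)$ by their exponent \emph{polynomial}. For pairwise distinct keys this gives, for every $i$, the relation $\sum_{\sigma:\,A^{k,t_j}_j=A^{k,t_j}_{\sigma(j)}\,\forall j}B_{\sigma(i)}=0$. Even then, extracting $B_1=0$ takes real work:
\begin{itemize}
\item a partition of the heads that recurs over infinitely many keys;
\item a minimal family of keys whose equality classes of head $1$ intersect in $\{1\}$;
\item Hall's theorem to place heads at distinct representatives;
\item a M\"obius-inversion identity on the partition lattice.
\end{itemize}
Some argument of comparable strength is what your Step 2 is missing. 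Either prove the isolation statement, or keep all exponent classes and supply such a combinatorial argument.

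\textbf{A smaller issue: the choice of row.} Row $m=1$ only sees offsets of one sign, so a difference between families that appears only at the other sign is invisible there. You announce ``suppose first'' but never treat that case. Work instead in a row $k$ chosen large enough that the families $\{A^{k,n}_i\}_{n\ge1}$ are pairwise distinct, as the paper does.
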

The proof of Theorem~\ref{main:thm:main_fe_rope}, provided in Appendix~\ref{appendix:generalattproof}, may be interpreted as a statement on the linear independence of attention heads. It proceeds by rewriting the identically vanishing MHA map -- after clearing the softmax denominators -- as an exponential polynomial that is identically zero, and then invoking tools from the theory of exponential polynomials. Although the proof is somewhat lengthy, we believe that the intuition behind Theorem~\ref{main:thm:main_fe_rope} can be understood even without going through all technical details. In particular, the symmetry of $\text{MHA}_{\text{RoPE}}$ follows immediately as a corollary, requiring only additional arguments concerning the rotary matrices $R_n$.

\subsection{The case of Multihead Attention with RoPE}

The $\textnormal{MHA}_{\textup{RoPE}}$ map is subsumed by the general formulation in Equation~(\ref{main:eq_2}).  
Indeed, define
    \begin{align*}
         A_i^{m,m} ~&\coloneqq~ \text{sym} \big(W^Q_i(W^K_i)^\top \big),  \\
         A_i^{m,n} ~&\coloneqq~ W^Q_iR^{m-n}(W_i^K)^\top ~\text{if} ~ m \neq n, \\
         B_i ~&\coloneqq~ W_i^V(W_i^O)^\top.
    \end{align*}
    Then MHA$_{\text{RoPE}}$ is precisely a special case of the general MHA formulation:
    \begin{align}\label{main:eq_6}
        &\textnormal{MHA}_{\textup{RoPE}}\left(\mathbf{x} ~; \big\{W^Q_i, W^K_i, W^V_i, W^O_i\big\}_{i=1}^h \right) \notag \\
        &\hspace{50pt}
        = \textnormal{MHA}\left(\mathbf{x} ~; \big\{ \{A_i^{m,n}\}_{m,n}, B_i\big\}_{i=1}^h \right).
    \end{align}
The following result characterizes the symmetry of Multihead Attention with RoPE.
\begin{theorem} \label{main:main_theorem_2}
Given two $\textup{MHA}_\textup{RoPE}$ maps with $h$ and $\bar{h}$ heads, parameterized by  
    \begin{align*}
        \theta &= (W_i^Q,W_i^K,W^V_i,W^O_i)_{i=1}^h \in G_{\text{Att}}(d_h,h), \text{ and} \\
        \bar{\theta} &= (\bar{W}^Q_i, \bar{W}^K_i, \bar{W}_i^V, \bar{W}_i^O)_{i=1}^{\bar{h}} \in G_{\text{Att}}(d_h,\bar{h}),
    \end{align*}
    respectively. Define
    \begin{align*}
         A_i^{0} ~&\coloneqq~ \textup{sym} \big(W^Q_i(W^K_i)^\top \big), ~\text{ and} \\
         A_i^{n} ~&\coloneqq~ W^Q_iR_n(W_i^K)^\top ~\text{ if} ~ n \neq 0.
    \end{align*}
Assume that
    
\textbf{1.} From $\theta$, for each $i \in [h]$, the family $\{A_i^n\}_{n \in \mathbb{Z}}$ consist solely of nonzero matrices. Moreover, these form $h$ pairwise distinct families. The same condition holds for $\bar\theta$.
    
\textbf{2.} The  matrices $W_i^Q$, $W_i^K$, $W_i^V$, $W_i^O$ and $\bar{W}_i^Q$, $\bar{W}_i^K$, $\bar{W}_i^V$, $\bar{W}_i^O$, for all feasible $i$, are of rank $d_h$.

If the two $\textup{MHA}_\textup{RoPE}$ maps are identical, then $h = \bar{h}$. Moreover, there exists $g \in G_{\textup{RoPE}}(d_h,h)$ such that $\bar{\theta} = g\theta$.
\end{theorem}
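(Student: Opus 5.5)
The plan is to reduce to Theorem~\ref{main:thm:main_fe_rope}, applied to the combined MHA map with $h+\bar h$ heads, and then recover the group element from the resulting identities between head parameters. Via Equation~(\ref{main:eq_6}), write both $\textup{MHA}_\textup{RoPE}$ maps in the general form with $A_i^{m,n}=A_i^{m-n}$, $B_i=W_i^V(W_i^O)^\top$, and likewise $\bar A_j^{m,n}$, $\bar B_j$. Stationarity holds automatically, and the diagonal blocks are symmetrized by construction. Equality of the two maps means the map with $h+\bar h$ heads, parameters $\{A_i\}\sqcup\{\bar A_j\}$ and $\{B_i\}\sqcup\{-\bar B_j\}$, vanishes identically.

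\textbf{Merging and counting.} The families in the union need not be pairwise distinct, so I first merge heads sharing the same family. Partition the $h+\bar h$ indices into classes of equal $A$-families and sum the corresponding $B$'s, with signs, within each class. The merged map still vanishes, and its families are pairwise distinct and nonzero by assumption~1. Theorem~\ref{main:thm:main_fe_rope} then forces every class sum to vanish. Within $\theta$ the families are pairwise distinct, and likewise within $\bar\theta$, so each class contains at most one $i$ and at most one $j$. A singleton class would give $B_i=0$ (or $\bar B_j=0$), contradicting assumption~2, since $W_i^V(W_i^O)^\top$ has rank $d_h$. Hence every class is a pair $\{i,\sigma(i)\}$. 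This gives a bijection, so $h=\bar h$, together with $\bar A^n_{\sigma(i)}=A^n_i$ for all $n\in\mathbb{Z}$ and $\bar B_{\sigma(i)}=B_i$.

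\textbf{Recovering group elements.} For the value/output part, $W^V(W^O)^\top=\bar W^V(\bar W^O)^\top$ with all factors of full column rank $d_h$. The two factors then share column spaces, so $\bar W^V=W^V V^\top$ for some $V\in\mathrm{GL}(d_h)$, and $\bar W^O=W^OV^{-1}$ follows. The query/key part is the crux. From $n\neq0$ we have $W^QR_n(W^K)^\top=\bar W^QR_n(\bar W^K)^\top$ for all $n\ne0$. Taking $n=\pm1$ and using full rank as above, write $\bar W^Q=W^QU^\top$ and $\bar W^K=W^KK$, so that $U^\top R_nK=R_n$ for all $n\neq0$. Setting $n=1$ gives $K=(U^\top)^{-1}R_1^{-1}(U^\top)^{-1}\cdots$; more cleanly, $U^\top R_1K=R_1$ and $U^\top R_2K=R_2$ give $K^{-1}=R_1^{-1}U^\top R_1=R_2^{-1}U^\top R_2$. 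Therefore $U^\top$ commutes with $R_1$ (conjugate by $R_1$ and compare). Using $n$ and $2n$ gives the same conclusion for every $R_n$, and then $K=(U^\top)^{-1}$.

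\textbf{Main obstacle.} The expected difficulty is showing that a matrix commuting with all $R_n$ lies in $\mathrm{H}(d_h)$. The commutant of the block rotations $R_1$ consists of matrices that are block diagonal in the $2\times2$ blocks, with each block of the form $aI+bJ$, provided the rotation angles $\varphi_i$ are pairwise distinct and non-degenerate modulo $\pi$ (eigenvalues $e^{\pm i n\varphi_k}$ distinct for suitable $n$). This holds for the given frequencies $\varphi_i=10000^{-2(i-1)/d}$, which are distinct and lie in $(0,1]$, so $e^{i\varphi_k}\neq e^{\pm i\varphi_l}$ for $k\ne l$ and $e^{i\varphi_k}\ne\pm1$. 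Since $U$ is invertible and $\mathrm{H}(d_h)$ is closed under transpose, $U\in \mathrm{H}(d_h)$. The $n=0$ condition, involving only symmetric parts, is then automatically consistent. Assembling $\sigma$ and the pairs $(U_i,V_i)$ gives $g\in G_{\textup{RoPE}}(d_h,h)$ with $\bar\theta=g\theta$.
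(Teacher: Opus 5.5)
Your proposal is correct. Its first two steps match the paper. Merging heads with equal $A$-families and applying Theorem~\ref{main:thm:main_fe_rope} is exactly how the paper uses Corollary~\ref{cor:main_fe_rope}, and the value/output recovery is the same rank-factorization argument.

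The genuine difference is in the query/key step. The paper invokes Lemma~\ref{appendix:lemma_rotary}. That lemma expands $XR^nY=\sum_i\bigl(\cos(n\theta_i)XP_iY+\sin(n\theta_i)XJ_iY\bigr)$ and uses linear independence of these trigonometric sequences in $n$ to split the identity into per-block equalities $XP_iY=ZP_iT$ and $XJ_iY=ZJ_iT$. It then factors each $2$-dimensional block separately and reads off $U_iJ=JU_i$.

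You instead factor globally from a single offset and reduce to the $d_h\times d_h$ identity $U^\top R_nK^\top=R_n$. You then use only $n=1,2$ to obtain $U^\top R_1=R_1U^\top$, and compute the commutant of $R_1$ spectrally.

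Your route is shorter, avoids the frequency-separation step, and shows that two offsets already determine the symmetry. The paper's lemma, in exchange, only needs each two-column slice $XP_i$, $P_iY$ to have rank $2$, rather than full column rank of $W^Q$ and $W^K$; assumption~2 supplies the latter here anyway. Both arguments rely on the same nondegeneracy: the eigenvalues $e^{\pm i\varphi_k}$ must be pairwise distinct.

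One bookkeeping slip: with $\bar W^K=W^KK$, the cancelled identity is $U^\top R_nK^\top=R_n$, so you get $K=U^{-1}$, not $(U^\top)^{-1}$. Taken literally, $K=(U^\top)^{-1}$ would not preserve $W^QR_n(W^K)^\top$ when some $b_k\neq0$.
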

The proof of Theorem~\ref{main:main_theorem_2} is provided in Appendix~\ref{appendix:Functional Equivalence of Multihead Attention with Rotary Positional Encoding}. It proceeds as follows. First, $\text{MHA}_{\text{RoPE}}$ is reformulated as a general MHA map as in Equation~(\ref{main:eq_6}) by setting 
\begin{align*}
    A_i^{m,n} \coloneqq A_i^{m-n}
\quad \text{and} \quad
B_i \coloneqq W_i^V (W_i^O)^\top.
\end{align*}
This construction ensures that the two structural properties stated in Section~\ref{subsection{A General Formulation of Multihead Attention}} are satisfied. Next, the first condition allows us to invoke the linear independence property as in Theorem~\ref{main:thm:main_fe_rope}, which yields relations among the parameters $A_i^n$ and $B_i$. Finally, by combining the second structural condition with a key property of the rotary matrix (formalized in Lemma~\ref{appendix:lemma_rotary}), we recover the relationship between the original parameter sets $\theta$ and $\bar{\theta}$.

\section{Weight Matching Algorithm for Multihead Attention Layers}
\label{main:Matching_Algo}

\begin{table*}[t]
\caption{Joint comparison of head permutations and ablation variants for 6-layer ViT/BERT models with 4 heads on 4 datasets and 2 PE types under first-layer attention replacement. For head permutations, we report Rank (out of 24 permutations) and $\hat{L}$ for loss and accuracy barriers, averaged over 10 checkpoint pairs. For the ablation study, we report barrier ratios (\%) relative to naive interpolation: Variant~1 removes Stage~2, Variant~2 uses Stage~2 with orthogonal initialization only (no gradient descent), and Full method applies the optimization. Lower values indicate better connectivity.
}
    \label{tab:main_ablation_table}
    \centering
    \renewcommand*{\arraystretch}{1.05}
    \begin{adjustbox}{width=\textwidth}
    \begin{tabular}{ll cccc ccc ccc}
        \cmidrule(lr){1-6} \cmidrule(lr){7-12}
        \multirow{3}{*}{Dataset} & \multirow{3}{*}{\makecell[l]{PE \\ Type}} 
        & \multicolumn{4}{c}{Stage 1: Head permutation} 
        & \multicolumn{6}{c}{Stage 2: Component Ablation ratios (\%)} \\
        \cmidrule(lr){3-6} \cmidrule(lr){7-12}
        & & \multicolumn{2}{c}{Rank (out of 24) $\downarrow$} & \multicolumn{2}{c}{$\hat{L} = \frac{L_{\text{method}} - L_{\text{top1}}}{L_{\text{naive}} - L_{\text{top1}}} \times 10^2 \downarrow$} 
        & \multicolumn{3}{c}{Loss barrier ratio $\downarrow$} & \multicolumn{3}{c}{Accuracy barrier ratio $\downarrow$} \\
        \cmidrule(lr){3-4} \cmidrule(lr){5-6} \cmidrule(lr){7-9} \cmidrule(lr){10-12}
        & & Loss & Accuracy & Loss & Accuracy 
        & Variant 1 & Variant 2 & Full & Variant 1 & Variant 2 & Full \\
        \cmidrule(lr){1-6} \cmidrule(lr){7-12}
        \multirow{2}{*}{CIFAR-10} 
            & APE  & 2.40 $\pm$ 0.54 & 1.94 $\pm$ 0.37 & 2.60 $\pm$ 0.92 & 2.11 $\pm$ 0.48 
                   & 78.3 $\pm$ 19.4 & 10.2 $\pm$ 5.1 & \textbf{8.7 $\pm$ 2.3} 
                   & 76.5 $\pm$ 18.7 & 10.9 $\pm$ 4.8 & \textbf{8.4 $\pm$ 2.1} \\
            & RoPE & 2.80 $\pm$ 0.65 & 2.01 $\pm$ 0.66 & 2.90 $\pm$ 0.87 & 2.21 $\pm$ 0.53 
                   & 79.1 $\pm$ 20.2 & 12.5 $\pm$ 5.6 & \textbf{9.2 $\pm$ 2.5} 
                   & 77.8 $\pm$ 19.3 & 11.7 $\pm$ 5.2 & \textbf{9.0 $\pm$ 2.4} \\
        \cmidrule(lr){2-6} \cmidrule(lr){7-12}
        \multirow{2}{*}{CIFAR-100} 
            & APE  & 3.10 $\pm$ 0.78 & 1.11 $\pm$ 0.38 & 3.00 $\pm$ 0.72 & 1.39 $\pm$ 0.52 
                   & 74.6 $\pm$ 17.8 & 10.8 $\pm$ 4.3 & \textbf{7.5 $\pm$ 1.9} 
                   & 73.2 $\pm$ 17.1 & 10.4 $\pm$ 4.0 & \textbf{7.2 $\pm$ 1.8} \\
            & RoPE & 2.30 $\pm$ 0.35 & 2.11 $\pm$ 0.77 & 3.10 $\pm$ 0.83 & 1.32 $\pm$ 0.34 
                   & 75.9 $\pm$ 18.5 & 12.6 $\pm$ 4.7 & \textbf{8.0 $\pm$ 2.1} 
                   & 74.4 $\pm$ 17.9 & 12.1 $\pm$ 4.4 & \textbf{7.8 $\pm$ 2.0} \\
        \cmidrule(lr){2-6} \cmidrule(lr){7-12}
        \multirow{2}{*}{IMDBreview} 
            & APE  & 4.50 $\pm$ 1.63 & 2.52 $\pm$ 1.31 & 4.70 $\pm$ 1.74 & 2.44 $\pm$ 1.43 
                   & 91.4 $\pm$ 21.6 & 15.7 $\pm$ 6.2 & \textbf{10.3 $\pm$ 2.8} 
                   & 91.2 $\pm$ 20.9 & 15.3 $\pm$ 5.9 & \textbf{10.1 $\pm$ 2.7} \\
            & RoPE & 4.70 $\pm$ 1.22 & 2.94 $\pm$ 1.46 & 4.80 $\pm$ 1.89 & 2.72 $\pm$ 1.32 
                   & 88.7 $\pm$ 22.3 & 16.4 $\pm$ 6.5 & \textbf{11.1 $\pm$ 3.0} 
                   & 95.5 $\pm$ 21.7 & 15.9 $\pm$ 6.3 & \textbf{10.8 $\pm$ 2.9} \\
        \cmidrule(lr){2-6} \cmidrule(lr){7-12}
        \multirow{2}{*}{DBPedia} 
            & APE  & 2.90 $\pm$ 0.91 & 0.59 $\pm$ 0.17 & 2.40 $\pm$ 0.85 & 0.72 $\pm$ 0.23 
                   & 61.8 $\pm$ 16.4 & 10.9 $\pm$ 3.8 & \textbf{7.1 $\pm$ 1.7} 
                   & 58.5 $\pm$ 15.8 & 10.5 $\pm$ 3.6 & \textbf{6.9 $\pm$ 1.6} \\
            & RoPE & 2.20 $\pm$ 0.44 & 0.62 $\pm$ 0.16 & 2.70 $\pm$ 0.91 & 0.35 $\pm$ 0.12 
                   & 62.4 $\pm$ 16.9 & 11.3 $\pm$ 4.1 & \textbf{7.4 $\pm$ 1.8} 
                   & 41.1 $\pm$ 16.2 & 10.8 $\pm$ 3.9 & \textbf{7.2 $\pm$ 1.7} \\
        \cmidrule(lr){1-6} \cmidrule(lr){7-12}
    \end{tabular}
    \end{adjustbox}
    \vspace{-10pt}
\end{table*}

As detailed in the above sections, the functionality of a Multihead Attention (MHA) is invariant under relevant group actions, which are $G_{\text{Att}}$ and $G_{\text{RoPE}}$. To align two MHAs with their parameters denoted by
\begin{align*}
    \theta^A &= (W^{Q}_{i,A}, W^{K}_{i,A}, W^{V}_{i,A}, W^{O}_{i,A})_{i=1}^h, \text{ and}
    \\
    \theta^B &= (W^{Q}_{i,B}, W^{K}_{i,B}, W^{V}_{i,B}, W^{O}_{i,B})_{i=1}^h,
\end{align*}
we need to find an optimal group element $g$ that accounts for these symmetries. Inspired by the Weight Matching algorithm~\citep{ainsworth2022git}, we propose a data-independent alignment method, applicable to both MHA and $\text{MHA}_{\text{RoPE}}$. Our method decomposes into two stages.

\textbf{1.} First, we match the ordering  of heads in the two maps by formulating the problem as a Linear Assignment Problem (LAP), solved in $O(h^3)$ time using the Hungarian algorithm~\citep{kuhn1955hungarian}.

\textbf{2.} Second, for each matched pair of heads, we find an optimal transformation from the relevant symmetry group ($\text{GL}(d_h)$ or $\text{H}(d_h)$) to align their internal parameters.

This staged approach separates the discrete permutation from continuous transformations, streamlining optimization. We process each stage as follows.

\textbf{Stage 1 (Head Permutation Matching).} Given a cost matrix $C =  \{ C_{i,j} \}_{i,j=1}^{h} \in \mathbb{R}^{h \times h}$, the goal of an LAP is to find the optimal permutation $\sigma^* \in S_h$ that aligns attention head order by minimizing the total assignment cost: 
\begin{align*}
    \sigma^* = \arg\min_{\sigma \in S_h} \textstyle\sum_{i=1}^h C_{i, \sigma(i)}.
\end{align*} 
To construct the cost matrix, we define
\begin{align*}
M_i^A = W^Q_{i,A} (W^K_{i,A})^\top \quad\text{and} \quad N_i^A = W^V_{i,A} (W^O_{i,A})^\top,
\end{align*}
where these matrices are in $\mathbb{R}^{d \times d}$. The matrices $M_i^B$ and $N_i^B$ are defined similarly. To capture the softmax translation-invariance, we center each row of $M_i^A$ as 
\begin{align*}
    \bar{M}_i^A \coloneqq M_i^A - \frac{1}{d}(M_i^A \mathbf{1})\mathbf{1}^\top, \text{where} ~ \mathbf{1} = [1,\dots,1]^\top \in \mathbb{R}^d.
\end{align*}
Similarly for $M_i^B$. The cost matrix $C$ is then defined by
\begin{align*}
C_{i,j} = \left\| \bar{M}_i^A - \bar{M}_j^B \right\|_F^2 + \left\| N_i^A - N_j^B \right\|_F^2, ~ \text{for } ~i,j \in [h].
\end{align*}
This ensures that the cost matrix $C$ remains invariant under group actions on $W_i^Q, W_i^K$ or $W_i^V, W_i^O$.

\textbf{Stage 2 (Internal Parameter Alignment).} After reordering the heads of $B$ with $\sigma^*$, we separately align the $Q$-$K$ and $V$-$O$ components for each head. For $Q$-$K$, define:
\begin{align}
\label{eq:qk_loss}
&\mathcal{L}_{Q,K}(U_i)\notag  \\
&\hspace{2pt}\coloneqq \big\| W^{Q}_{i,A} - W^{Q}_{i,B} U_i^\top \big\|_F^2 + \big \| W^{K}_{i,A} - W^{K}_{i,B} U_i^{-1} \big\|_F^2. 
\end{align}
We then minimize $\mathcal{L}_{Q,K}(U_i)$ over $U_i$ in the appropriate symmetry group. In the standard MHA, where the symmetry group is $\text{GL}(d_h)$, we optimize $\mathcal{L}_{Q,K}$ in Equation~(\ref{eq:qk_loss}) for $U_i \in \text{GL}(d_h)$ via gradient descent, using the gradient in Lemma~\ref{lemma:stage2_gradient}. The optimization is initialized from the solution to a constrained version of the problem, where $U_i$ is restricted to be orthogonal (Lemma~\ref{lemma:orthogonal_init}). In the $\text{MHA}_\text{RoPE}$, the symmetry group is restricted to $\text{H}(d_h)$. This constraint decouples the problem into $d_h/2$ independent $2$-dimensional subproblems, each reducible to a minimization over a scalar variable, solved efficiently using Brent's method~\citep{brent2013algorithms}, as shown in Lemma~\ref{lemma:stage1_qkrope_alignment}.

For both MHA variants, we align $V$-$O$ by finding a matrix $V_i \in \text{GL}(d_h)$ that minimizes:
\begin{align}
\label{eq:vo_loss}
&\mathcal{L}_{V,O}(V_i)  \notag\\ &\hspace{2pt}\coloneqq \big\| W^{V}_{i,A} - W^{V}_{i,B} V_i^{\top} \big\|_F^2 + \big\| W^{O}_{i,A} - W^{O}_{i,B} V_i^{-1} \big\|_F^2.
\end{align}
This problem is solved using the same approach as $Q$-$K$. The complete procedure is summarized in Algorithm~\ref{alg:attention_alignment}.
\begin{remark}
Our experimental implementation extends the theory by incorporating biases through augmented weight matrices (e.g., $\widetilde{W}^Q_i = [W^Q_i; (b^Q_i)^\top]$). Furthermore, for the full Transformer block alignment in Section~\ref{expe:lmc}, we supplement our method with standard Weight Matching~\citep{ainsworth2022git} for the feed-forward networks.
\end{remark}

\begin{remark}
To align full Transformer models, \citet{theus2025generalized} identified a residual-path symmetry under orthogonal group action on the embedding space, though it holds strictly for RMSNorm networks. For LayerNorm models, it requires reparameterization, thus leading to a variant of LMC. Moreover, the approach considers only $QK^\top$ and $VO^\top$ circuits, without addressing the symmetry of these components. This underscores the novelty of our work.
\end{remark}
\vspace{-10pt}

\section{Experimental Results}

\label{main:section{Experiments}}

In this section, we study LMC in attention-based models with two types of positional encodings -- APE and RoPE. Four re-initialization strategies are considered: (i) re-initializing only the first attention layer (first attention layer), (ii) stacking re-initialized attention layers sequentially (full attention layers), (iii) re-initializing the first attention-FFN pair (first Transformer layer), and (iv) re-initializing the entire Transformer (full model), including all attention and feedforward blocks. In all cases, only the designated re-initialized parameters are fine-tuned, with others frozen. We emphasize the first layer for its central role in early representations (Appendix~\ref{appendix:attention_reinit}). We assess LMC across three seeds by interpolating between checkpoint pairs and measuring test performance at 25 evenly spaced points.

\textbf{Datasets and Models.}  
For vision tasks, we adopt ViT \citep{dosovitskiy2020image} on MNIST \citep{lecun1998gradient}, CIFAR-10/100 \citep{krizhevsky2009learning}, and ImageNet-1K \citep{deng2009imagenet}. For language modeling, we use GPT-2 \citep{radford2019language} and Llama \citep{touvron2023llamaopenefficientfoundation} on Enwik8 \cite{mahoney2011large}, WikiText103 \citep{merity2016pointer}, and the One Billion Word benchmark \citep{chelba2013one}. For text classification, we employ BERT \citep{devlin2019bert} on AG News \citep{zhang2015character}, IMDB reviews \citep{maas2011learning}, and DBPedia \citep{lehmann2015dbpedia}. All experimental details are provided in Appendix~\ref{appendix:hyperparams}.
\subsection{Empirical Verification of Linear Mode Connectivity}\label{expe:lmc}
We examine LMC under two extremes: (i) {first attention layer} and (iv) {full model}. Intermediate settings--(ii) full attention layers and (iii) first Transformer layer--are included in Appendix~\ref{appendix:experiment_attn_all} and \ref{appendix:experiment_transformer_first_layer}. Tables~\ref{tab:attn_first_layer} and \ref{tab:all_layer} summarize the experimental setups across tasks, while Figures~\ref{fig:LMC_main_first_attn} and \ref{fig:LMC_main_full_model} show the validation loss curves for the first attention layer and full-model re-initializations. We find that LMC reliably emerges when re-initializing the first attention layer, the first Transformer layer, and all attention layers, with the exception of ImageNet under full transformer layer re-initialization. By contrast, full-model re-initialization exhibits LMC only on small-scale datasets; on large-scale benchmarks such as ImageNet, WikiText-103, Enwik8, and One Billion Word, LMC does not appear despite extensive sweeps over head permutations and random seeds. These observations suggest that as dataset scale and model capacity increase, the loss landscape becomes sufficiently complex to preclude LMC. In addition, we evaluate the robustness of the matching models under first Transformer layer and full-model re-initialization, and observe that models exhibiting LMC consistently demonstrate better generalization performance. Detailed results are provided in Appendix~\ref{appendix:generalization}.


\begin{table}[t]
\caption{
LMC under \textit{first attention layer} re-initialization. 
The table reports datasets, model depths, and head counts, with figure references showing interpolation curves for APE and RoPE variants. 
Notation $A \rightarrow B$ indicates pretraining on $A$, fine-tuning on $B$.
    }
     \vspace{-2pt}
    \label{tab:attn_first_layer}
    \centering
    \renewcommand*{\arraystretch}{1.1}
    \setlength{\tabcolsep}{10pt} 
    \begin{adjustbox}{width=0.4795\textwidth}
    \begin{tabular}{lclll}
    \toprule
    Dataset & Layers & Heads & APE & RoPE \\
    \midrule
    \multicolumn{5}{c}{\textbf{Image/Vision Datasets}} \\
    \midrule
    \multirow{2}{*}{MNIST} 
      & 1 & [4, 8] & [\ref{fig:attn-mnist-1-4-0}, \ref{fig:attn-mnist-1-8-0}] & [\ref{fig:attn-mnist-1-4-0-rope}, \ref{fig:attn-mnist-1-8-0-rope}] \\
      & 2 & [4, 8] & [\ref{fig:attn-mnist-2-4-0}, \ref{fig:attn-mnist-2-8-0}] & [\ref{fig:attn-mnist-2-4-0-rope}, \ref{fig:attn-mnist-2-8-0-rope}] \\
    \multirow{3}{*}{CIFAR-10} 
      & 2 & [4, 8] & [\ref{fig:attn-cifar10-2-4-0}, \ref{fig:attn-cifar10-2-8-0}] & [\ref{fig:attn-cifar10-2-4-0-rope}, \ref{fig:attn-cifar10-2-8-0-rope}] \\
      & 4 & [4, 8] & [\ref{fig:attn-cifar10-4-4-0}, \ref{fig:attn-cifar10-4-8-0}] & [\ref{fig:attn-cifar10-4-4-0-rope}, \ref{fig:attn-cifar10-4-8-0-rope}] \\
      & 6 & [4, 8] & [\ref{fig:attn-cifar10-6-4-0}, \ref{fig:attn-cifar10-6-8-0}] & [\ref{fig:attn-cifar10-6-4-0-rope}, \ref{fig:attn-cifar10-6-8-0-rope}] \\
    CIFAR-100 
      & 6 & [4, 8] & [\ref{fig:attn-cifar100-6-4-0}, \ref{fig:attn-cifar100-6-8-0}] & [\ref{fig:attn-cifar100-6-4-0-rope}, \ref{fig:attn-cifar100-6-8-0-rope}] \\
    ImageNet-21k$\rightarrow$CIFAR-10 
      & 12 & [6] & [\ref{fig:attn-imgnetcifar10-12-4-0}] & [\ref{fig:attn-imgnetcifar10-12-4-0-rope}] \\
    ImageNet-21k$\rightarrow$CIFAR-100 
      & 12 & [6] & [\ref{fig:attn-imgnetcifar100-12-4-0}] & [\ref{fig:attn-imgnetcifar100-12-4-0-rope}] \\
    ImageNet-1k 
      & 12 & [8, 12, 16] & [\ref{fig:learnable-imagenet-12-8},\ref{fig:learnable-imagenet-12-12},\ref{fig:learnable-imagenet-12-16}] & [\ref{fig:rope-imagenet-12-8},\ref{fig:rope-imagenet-12-12},\ref{fig:rope-imagenet-12-16}] \\
    \midrule
    \multicolumn{5}{c}{\textbf{Text Datasets}} \\
    \midrule
    \multirow{2}{*}{AGNews} 
      & 2 & [4, 8] & [\ref{fig:attn-agnews-2-4-0}, \ref{fig:attn-agnews-2-8-0}] & [\ref{fig:attn-agnews-2-4-0-rope}, \ref{fig:attn-agnews-2-8-0-rope}] \\
      & 6 & [4, 8] & [\ref{fig:attn-agnews-6-4-0}, \ref{fig:attn-agnews-6-8-0}] & [\ref{fig:attn-agnews-6-4-0-rope}, \ref{fig:attn-agnews-6-8-0-rope}] \\
    \multirow{2}{*}{IMDB} 
      & 2 & [4, 8] & [\ref{fig:attn-imdbreview-2-4-0}, \ref{fig:attn-imdbreview-2-8-0}] & [\ref{fig:attn-imdbreview-2-4-0-rope}, \ref{fig:attn-imdbreview-2-8-0-rope}] \\
      & 6 & [4, 8] & [\ref{fig:attn-imdbreview-6-4-0}, \ref{fig:attn-imdbreview-6-8-0}] & [\ref{fig:attn-imdbreview-6-4-0-rope}, \ref{fig:attn-imdbreview-6-8-0-rope}] \\
    \multirow{2}{*}{DBPedia} 
      & 2 & [4, 8] & [\ref{fig:attn-dbpedia-2-4-0}, \ref{fig:attn-dbpedia-2-8-0}] & [\ref{fig:attn-dbpedia-2-4-0-rope}, \ref{fig:attn-dbpedia-2-8-0-rope}] \\
      & 6 & [4, 8] & [\ref{fig:attn-dbpedia-6-4-0}, \ref{fig:attn-dbpedia-6-8-0}] & [\ref{fig:attn-dbpedia-6-4-0-rope}, \ref{fig:attn-dbpedia-6-8-0-rope}] \\
    Enwik8 (GPT2)
      & 12 & [4, 8, 16] & [\ref{fig:learnable-enwik8-12-4}, \ref{fig:learnable-enwik8-12-8}, \ref{fig:learnable-enwik8-12-16}] & [\ref{fig:rope-enwik8-12-4}, \ref{fig:rope-enwik8-12-8}, \ref{fig:rope-enwik8-12-16}] \\
    Enwik8 (Llama)  & 12 &[2,3,4] &[ - ]&[\ref{fig:llama-enwik8-12-4}, \ref{fig:llama-enwik8-12-8}, \ref{fig:llama-enwik8-12-16}] \\
    WikiText103 (GPT2)
      & 12 & [2, 3, 4] & [\ref{fig:learnable-wt103-12-2}, \ref{fig:learnable-wt103-12-3}, \ref{fig:learnable-wt103-12-4}] & [\ref{fig:rope-wt103-12-2}, \ref{fig:rope-wt103-12-3}, \ref{fig:rope-wt103-12-4}] \\
    Wikitext103 (Llama) & 12 &[2,3,4] & [ - ] &[\ref{fig:llama-wt103-12-2}, \ref{fig:llama-wt103-12-3}, \ref{fig:llama-wt103-12-4}]  \\
    One Billion Word (GPT2)
      & 12 & [8, 12, 16] & [\ref{fig:learnable-lm1b-12-8}, \ref{fig:learnable-lm1b-12-12}, \ref{fig:learnable-lm1b-12-16}] & [\ref{fig:rope-lm1b-12-8}, \ref{fig:rope-lm1b-12-12}, \ref{fig:rope-lm1b-12-16}] \\
    \bottomrule
    \end{tabular}
    \end{adjustbox}
    \vspace{-15pt}
\end{table}

\vspace{-3pt}
\subsection{Ablation on the matching algorithm}\label{main:ablation_expe}
We perform ablation studies on each component of our matching method (Section~\ref{main:Matching_Algo}) using 6-layer ViT/BERT models with 4-head attention layers on CIFAR-10/100, IMDB Reviews, and DBPedia datasets, for both APE and RoPE under first layer replacement scheme.

\textbf{Stage 1.} We assess Stage 1 by ranking the selected head permutation among all 24 possibilities, each with Stage 2 applied after reordering. Table~\ref{tab:main_ablation_table} reports the rank and scaled metric $\hat{L} = \frac{L_{\text{method}} - L_{\text{top1}}}{L_{\text{naive}} - L_{\text{top1}}} \times 10^2$, averaged over 10 checkpoint pairs from 4 checkpoints, where $L_{\text{method}}$, $L_{\text{top1}}$, and $L_{\text{naive}}$ are the barriers for our method, the best permutation, and naive interpolation. Results show low ranks and near-zero $\hat{L}$, indicating near-optimal matching. Visualizations of LMC across all permutations (Appendix~\ref{appdx:ablation_head_permu}) highlight the need for accurate matching, as poor permutations degrade performance. Additionally, we conduct a detailed ablation study on the choice of distance metrics for head matching, with full results reported in Appendix~\ref{appendix:distance}.

\textbf{Stage 2.} To evaluate Stage~2, we ablate its components (Table~\ref{tab:main_ablation_table}). Variant~1, which omits Stage~2 entirely, yields high and unstable barrier ratios. Variant~2, using only the initial orthogonal alignment, substantially reduces barriers to $10$--$16$\%. Our full method, which builds upon Variant~2 by adding gradient descent fine-tuning, achieves the lowest and most stable barriers at $7$--$12$\%. This demonstrates that \textit{both initial alignment and subsequent fine-tuning are essential} for optimal performance.

\vspace{-5pt}
\section{Conclusion}
\label{main:section{Conclusion}}

\textbf{Conclusion.}
We study the symmetry of MHA, focusing on how PEs alter the symmetry structure of vanilla attention. Our main contribution is a complete symmetry characterization of MHA with RoPE, a substantially more challenging setting than vanilla MHA and one that fills a gap in the literature on symmetry in neural parameter spaces. Building on this result, we investigate LMC in Transformer-based models by proposing a weight-matching algorithm for attention parameters. Across diverse datasets and architectural configurations, we observe that LMC consistently emerges in encoder-only architectures, but may fail in decoder-only models for large-scale language modeling. 

\textbf{Limitation and Future Work.}
Although LMC has been studied extensively in the literature, its behavior in large-scale models remains poorly understood, as most existing work focuses on small or medium-sized architectures. Combined with our empirical observations of LMC failure in certain settings, this suggests that LMC may not arise consistently in practice. However, disproving the existence of LMC is substantially more challenging, for two main reasons. First, investigating LMC requires an explicit weight-matching procedure to align model parameters, which is only feasible once all model symmetries are fully characterized. This is a nontrivial task, and to the best of our knowledge, no existing work provides a complete symmetry characterization across layers of deep models. Second, even with a full symmetry characterization in hand, there is generally no principled way to certify the optimality of a given weight-matching scheme, making it difficult -- even empirically --  to rule out the existence of LMC. Future work aimed at establishing a provable framework for the existence or non-existence of LMC in large scale models would therefore be a valuable direction, offering deeper insight into the loss landscape of deep learning models.




\clearpage
\section*{Acknowledgements}
This research / project is supported by the National Research Foundation Singapore under the AI Singapore Programme (AISG Award No: AISG2-TC-2023-012-SGIL). This research / project is supported by the Ministry of Education, Singapore, under the Academic Research Fund Tier 1 (FY2023) (A-8002040-00-00, A-8002039-00-00). This research / project is also supported by the
NUS Presidential Young Professorship Award (A-0009807-01-00), the NUS Artificial Intelligence Institute–Seed Funding (A-8003062-00-00), and the Cross Faculty Grant 2025, CFG25 - 012 (A-8004460-00-00). 
\section*{Impact Statement}

This paper presents work whose goal is to advance the field of machine learning. There are many potential societal consequences of our work, none of which we feel must be specifically highlighted here.

\bibliography{example_paper}

\begin{thebibliography}{99}
\providecommand{\natexlab}[1]{#1}
\providecommand{\url}[1]{\texttt{#1}}
\expandafter\ifx\csname urlstyle\endcsname\relax
  \providecommand{\doi}[1]{doi: #1}\else
  \providecommand{\doi}{doi: \begingroup \urlstyle{rm}\Url}\fi

\bibitem[Ainsworth et~al.(2023)Ainsworth, Hayase, and Srinivasa]{ainsworth2022git}
Ainsworth, S.~K., Hayase, J., and Srinivasa, S.~S.
\newblock Git re-basin: Merging models modulo permutation symmetries.
\newblock In \emph{The Eleventh International Conference on Learning Representations, {ICLR} 2023, Kigali, Rwanda, May 1-5, 2023}. OpenReview.net, 2023.
\newblock URL \url{https://openreview.net/forum?id=CQsmMYmlP5T}.

\bibitem[Akash et~al.(2022)Akash, Li, and Trillos]{akash2022wasserstein}
Akash, A.~K., Li, S., and Trillos, N.~G.
\newblock Wasserstein barycenter-based model fusion and linear mode connectivity of neural networks.
\newblock \emph{CoRR}, abs/2210.06671, 2022.
\newblock \doi{10.48550/ARXIV.2210.06671}.
\newblock URL \url{https://doi.org/10.48550/arXiv.2210.06671}.

\bibitem[Albertini \& Sontag(1993{\natexlab{a}})Albertini and Sontag]{albertini1993identifiability}
Albertini, F. and Sontag, E.~D.
\newblock Identifiability of discrete-time neural networks.
\newblock In \emph{Proc. European Control Conference}, pp.\  460--465. Springer Berlin, 1993{\natexlab{a}}.

\bibitem[Albertini \& Sontag(1993{\natexlab{b}})Albertini and Sontag]{albertini1993neural}
Albertini, F. and Sontag, E.~D.
\newblock For neural networks, function determines form.
\newblock \emph{Neural Networks}, 6\penalty0 (7):\penalty0 975--990, 1993{\natexlab{b}}.
\newblock \doi{10.1016/S0893-6080(09)80007-5}.
\newblock URL \url{https://doi.org/10.1016/S0893-6080(09)80007-5}.

\bibitem[Allen{-}Zhu et~al.(2019)Allen{-}Zhu, Li, and Song]{allen2019convergence}
Allen{-}Zhu, Z., Li, Y., and Song, Z.
\newblock A convergence theory for deep learning via over-parameterization.
\newblock In Chaudhuri, K. and Salakhutdinov, R. (eds.), \emph{Proceedings of the 36th International Conference on Machine Learning, {ICML} 2019, 9-15 June 2019, Long Beach, California, {USA}}, volume~97 of \emph{Proceedings of Machine Learning Research}, pp.\  242--252. {PMLR}, 2019.
\newblock URL \url{http://proceedings.mlr.press/v97/allen-zhu19a.html}.

\bibitem[Bai et~al.(2025)Bai, Chen, Liu, Wang, Ge, Song, Dang, Wang, Wang, Tang, Zhong, Zhu, Yang, Li, Wan, Wang, Ding, Fu, Xu, Ye, Zhang, Xie, Cheng, Zhang, Yang, Xu, and Lin]{bai2025qwen2}
Bai, S., Chen, K., Liu, X., Wang, J., Ge, W., Song, S., Dang, K., Wang, P., Wang, S., Tang, J., Zhong, H., Zhu, Y., Yang, M., Li, Z., Wan, J., Wang, P., Ding, W., Fu, Z., Xu, Y., Ye, J., Zhang, X., Xie, T., Cheng, Z., Zhang, H., Yang, Z., Xu, H., and Lin, J.
\newblock Qwen2.5-vl technical report.
\newblock \emph{CoRR}, abs/2502.13923, 2025.
\newblock \doi{10.48550/ARXIV.2502.13923}.
\newblock URL \url{https://doi.org/10.48550/arXiv.2502.13923}.

\bibitem[Belkin et~al.(2019)Belkin, Hsu, Ma, and Mandal]{belkin2019reconciling}
Belkin, M., Hsu, D., Ma, S., and Mandal, S.
\newblock Reconciling modern machine-learning practice and the classical bias--variance trade-off.
\newblock \emph{Proceedings of the National Academy of Sciences}, 116\penalty0 (32):\penalty0 15849--15854, 2019.

\bibitem[Brea et~al.(2019)Brea, Simsek, Illing, and Gerstner]{brea2019weight}
Brea, J., Simsek, B., Illing, B., and Gerstner, W.
\newblock Weight-space symmetry in deep networks gives rise to permutation saddles, connected by equal-loss valleys across the loss landscape.
\newblock \emph{CoRR}, abs/1907.02911, 2019.
\newblock URL \url{http://arxiv.org/abs/1907.02911}.

\bibitem[Brent(2013)]{brent2013algorithms}
Brent, R.~P.
\newblock \emph{Algorithms for minimization without derivatives}.
\newblock Courier Corporation, 2013.

\bibitem[Chelba et~al.(2014)Chelba, Mikolov, Schuster, Ge, Brants, Koehn, and Robinson]{chelba2013one}
Chelba, C., Mikolov, T., Schuster, M., Ge, Q., Brants, T., Koehn, P., and Robinson, T.
\newblock One billion word benchmark for measuring progress in statistical language modeling.
\newblock In Li, H., Meng, H.~M., Ma, B., Chng, E., and Xie, L. (eds.), \emph{15th Annual Conference of the International Speech Communication Association, {INTERSPEECH} 2014, Singapore, September 14-18, 2014}, pp.\  2635--2639. {ISCA}, 2014.
\newblock \doi{10.21437/INTERSPEECH.2014-564}.
\newblock URL \url{https://doi.org/10.21437/Interspeech.2014-564}.

\bibitem[Chen et~al.(2023{\natexlab{a}})Chen, Sun, Palade, and Yu]{chen2023continual}
Chen, Q., Sun, J., Palade, V., and Yu, Z.
\newblock Continual relation extraction via linear mode connectivity and interval cross training.
\newblock \emph{Knowl. Based Syst.}, 264:\penalty0 110288, 2023{\natexlab{a}}.
\newblock \doi{10.1016/J.KNOSYS.2023.110288}.
\newblock URL \url{https://doi.org/10.1016/j.knosys.2023.110288}.

\bibitem[Chen et~al.(2023{\natexlab{b}})Chen, Wong, Chen, and Tian]{chen2023extending}
Chen, S., Wong, S., Chen, L., and Tian, Y.
\newblock Extending context window of large language models via positional interpolation.
\newblock \emph{CoRR}, abs/2306.15595, 2023{\natexlab{b}}.
\newblock \doi{10.48550/ARXIV.2306.15595}.
\newblock URL \url{https://doi.org/10.48550/arXiv.2306.15595}.

\bibitem[Chowdhery et~al.(2023)Chowdhery, Narang, Devlin, Bosma, Mishra, Roberts, Barham, Chung, Sutton, Gehrmann, Schuh, Shi, Tsvyashchenko, Maynez, Rao, Barnes, Tay, Shazeer, Prabhakaran, Reif, Du, Hutchinson, Pope, Bradbury, Austin, Isard, Gur{-}Ari, Yin, Duke, Levskaya, Ghemawat, Dev, Michalewski, Garcia, Misra, Robinson, Fedus, Zhou, Ippolito, Luan, Lim, Zoph, Spiridonov, Sepassi, Dohan, Agrawal, Omernick, Dai, Pillai, Pellat, Lewkowycz, Moreira, Child, Polozov, Lee, Zhou, Wang, Saeta, Diaz, Firat, Catasta, Wei, Meier{-}Hellstern, Eck, Dean, Petrov, and Fiedel]{chowdhery2023palm}
Chowdhery, A., Narang, S., Devlin, J., Bosma, M., Mishra, G., Roberts, A., Barham, P., Chung, H.~W., Sutton, C., Gehrmann, S., Schuh, P., Shi, K., Tsvyashchenko, S., Maynez, J., Rao, A., Barnes, P., Tay, Y., Shazeer, N., Prabhakaran, V., Reif, E., Du, N., Hutchinson, B., Pope, R., Bradbury, J., Austin, J., Isard, M., Gur{-}Ari, G., Yin, P., Duke, T., Levskaya, A., Ghemawat, S., Dev, S., Michalewski, H., Garcia, X., Misra, V., Robinson, K., Fedus, L., Zhou, D., Ippolito, D., Luan, D., Lim, H., Zoph, B., Spiridonov, A., Sepassi, R., Dohan, D., Agrawal, S., Omernick, M., Dai, A.~M., Pillai, T.~S., Pellat, M., Lewkowycz, A., Moreira, E., Child, R., Polozov, O., Lee, K., Zhou, Z., Wang, X., Saeta, B., Diaz, M., Firat, O., Catasta, M., Wei, J., Meier{-}Hellstern, K., Eck, D., Dean, J., Petrov, S., and Fiedel, N.
\newblock Palm: Scaling language modeling with pathways.
\newblock \emph{J. Mach. Learn. Res.}, 24:\penalty0 240:1--240:113, 2023.
\newblock URL \url{https://jmlr.org/papers/v24/22-1144.html}.

\bibitem[Crouse(2016)]{crouse2016implementing}
Crouse, D.~F.
\newblock On implementing 2d rectangular assignment algorithms.
\newblock \emph{{IEEE} Trans. Aerosp. Electron. Syst.}, 52\penalty0 (4):\penalty0 1679--1696, 2016.
\newblock \doi{10.1109/TAES.2016.140952}.
\newblock URL \url{https://doi.org/10.1109/TAES.2016.140952}.

\bibitem[Dai et~al.(2019)Dai, Yang, Yang, Carbonell, Le, and Salakhutdinov]{dai2019transformer}
Dai, Z., Yang, Z., Yang, Y., Carbonell, J.~G., Le, Q.~V., and Salakhutdinov, R.
\newblock Transformer-xl: Attentive language models beyond a fixed-length context.
\newblock In Korhonen, A., Traum, D.~R., and M{\`{a}}rquez, L. (eds.), \emph{Proceedings of the 57th Conference of the Association for Computational Linguistics, {ACL} 2019, Florence, Italy, July 28- August 2, 2019, Volume 1: Long Papers}, pp.\  2978--2988. Association for Computational Linguistics, 2019.
\newblock \doi{10.18653/V1/P19-1285}.
\newblock URL \url{https://doi.org/10.18653/v1/p19-1285}.

\bibitem[DeepSeek{-}AI(2024)]{liu2024deepseek}
DeepSeek{-}AI.
\newblock Deepseek-v2: {A} strong, economical, and efficient mixture-of-experts language model.
\newblock \emph{CoRR}, abs/2405.04434, 2024.
\newblock \doi{10.48550/ARXIV.2405.04434}.
\newblock URL \url{https://doi.org/10.48550/arXiv.2405.04434}.

\bibitem[DeepSeek{-}AI(2025)]{guo2025deepseek}
DeepSeek{-}AI.
\newblock Deepseek-r1: Incentivizing reasoning capability in llms via reinforcement learning.
\newblock \emph{CoRR}, abs/2501.12948, 2025.
\newblock \doi{10.48550/ARXIV.2501.12948}.
\newblock URL \url{https://doi.org/10.48550/arXiv.2501.12948}.

\bibitem[Deng et~al.(2009)Deng, Dong, Socher, Li, Li, and Fei{-}Fei]{deng2009imagenet}
Deng, J., Dong, W., Socher, R., Li, L., Li, K., and Fei{-}Fei, L.
\newblock Imagenet: {A} large-scale hierarchical image database.
\newblock In \emph{2009 {IEEE} Computer Society Conference on Computer Vision and Pattern Recognition {(CVPR} 2009), 20-25 June 2009, Miami, Florida, {USA}}, pp.\  248--255. {IEEE} Computer Society, 2009.
\newblock \doi{10.1109/CVPR.2009.5206848}.
\newblock URL \url{https://doi.org/10.1109/CVPR.2009.5206848}.

\bibitem[Devlin et~al.(2019)Devlin, Chang, Lee, and Toutanova]{devlin2019bert}
Devlin, J., Chang, M., Lee, K., and Toutanova, K.
\newblock {BERT:} pre-training of deep bidirectional transformers for language understanding.
\newblock In Burstein, J., Doran, C., and Solorio, T. (eds.), \emph{Proceedings of the 2019 Conference of the North American Chapter of the Association for Computational Linguistics: Human Language Technologies, {NAACL-HLT} 2019, Minneapolis, MN, USA, June 2-7, 2019, Volume 1 (Long and Short Papers)}, pp.\  4171--4186. Association for Computational Linguistics, 2019.
\newblock \doi{10.18653/V1/N19-1423}.
\newblock URL \url{https://doi.org/10.18653/v1/n19-1423}.

\bibitem[Dosovitskiy et~al.(2021)Dosovitskiy, Beyer, Kolesnikov, Weissenborn, Zhai, Unterthiner, Dehghani, Minderer, Heigold, Gelly, Uszkoreit, and Houlsby]{dosovitskiy2020image}
Dosovitskiy, A., Beyer, L., Kolesnikov, A., Weissenborn, D., Zhai, X., Unterthiner, T., Dehghani, M., Minderer, M., Heigold, G., Gelly, S., Uszkoreit, J., and Houlsby, N.
\newblock An image is worth 16x16 words: Transformers for image recognition at scale.
\newblock In \emph{9th International Conference on Learning Representations, {ICLR} 2021, Virtual Event, Austria, May 3-7, 2021}. OpenReview.net, 2021.
\newblock URL \url{https://openreview.net/forum?id=YicbFdNTTy}.

\bibitem[Draxler et~al.(2018)Draxler, Veschgini, Salmhofer, and Hamprecht]{pmlr-v80-draxler18a}
Draxler, F., Veschgini, K., Salmhofer, M., and Hamprecht, F.~A.
\newblock Essentially no barriers in neural network energy landscape.
\newblock In Dy, J.~G. and Krause, A. (eds.), \emph{Proceedings of the 35th International Conference on Machine Learning, {ICML} 2018, Stockholmsm{\"{a}}ssan, Stockholm, Sweden, July 10-15, 2018}, volume~80 of \emph{Proceedings of Machine Learning Research}, pp.\  1308--1317. {PMLR}, 2018.
\newblock URL \url{http://proceedings.mlr.press/v80/draxler18a.html}.

\bibitem[Du et~al.(2019)Du, Lee, Li, Wang, and Zhai]{du2019gradient}
Du, S., Lee, J., Li, H., Wang, L., and Zhai, X.
\newblock Gradient descent finds global minima of deep neural networks.
\newblock In \emph{International conference on machine learning}, pp.\  1675--1685. PMLR, 2019.

\bibitem[Entezari et~al.(2022)Entezari, Sedghi, Saukh, and Neyshabur]{entezari2021role}
Entezari, R., Sedghi, H., Saukh, O., and Neyshabur, B.
\newblock The role of permutation invariance in linear mode connectivity of neural networks.
\newblock In \emph{The Tenth International Conference on Learning Representations, {ICLR} 2022, Virtual Event, April 25-29, 2022}. OpenReview.net, 2022.
\newblock URL \url{https://openreview.net/forum?id=dNigytemkL}.

\bibitem[Fefferman \& Markel(1993)Fefferman and Markel]{fefferman1993recovering}
Fefferman, C. and Markel, S.
\newblock Recovering a feed-forward net from its output.
\newblock In Cowan, J.~D., Tesauro, G., and Alspector, J. (eds.), \emph{Advances in Neural Information Processing Systems 6, [7th {NIPS} Conference, Denver, Colorado, USA, 1993]}, pp.\  335--342. Morgan Kaufmann, 1993.

\bibitem[Ferbach et~al.(2024)Ferbach, Goujaud, Gidel, and Dieuleveut]{ferbach2024proving}
Ferbach, D., Goujaud, B., Gidel, G., and Dieuleveut, A.
\newblock Proving linear mode connectivity of neural networks via optimal transport.
\newblock In Dasgupta, S., Mandt, S., and Li, Y. (eds.), \emph{International Conference on Artificial Intelligence and Statistics, 2-4 May 2024, Palau de Congressos, Valencia, Spain}, volume 238 of \emph{Proceedings of Machine Learning Research}, pp.\  3853--3861. {PMLR}, 2024.
\newblock URL \url{https://proceedings.mlr.press/v238/ferbach24a.html}.

\bibitem[Frankle(2020)]{goodfellow2014qualitatively}
Frankle, J.
\newblock Revisiting "qualitatively characterizing neural network optimization problems".
\newblock \emph{CoRR}, abs/2012.06898, 2020.
\newblock URL \url{https://arxiv.org/abs/2012.06898}.

\bibitem[Frankle \& Carbin(2019)Frankle and Carbin]{frankle2018lottery}
Frankle, J. and Carbin, M.
\newblock The lottery ticket hypothesis: Finding sparse, trainable neural networks.
\newblock In \emph{7th International Conference on Learning Representations, {ICLR} 2019, New Orleans, LA, USA, May 6-9, 2019}. OpenReview.net, 2019.
\newblock URL \url{https://openreview.net/forum?id=rJl-b3RcF7}.

\bibitem[Frankle et~al.(2020)Frankle, Dziugaite, Roy, and Carbin]{frankle2020linear}
Frankle, J., Dziugaite, G.~K., Roy, D.~M., and Carbin, M.
\newblock Linear mode connectivity and the lottery ticket hypothesis.
\newblock In \emph{Proceedings of the 37th International Conference on Machine Learning, {ICML} 2020, 13-18 July 2020, Virtual Event}, volume 119 of \emph{Proceedings of Machine Learning Research}, pp.\  3259--3269. {PMLR}, 2020.
\newblock URL \url{http://proceedings.mlr.press/v119/frankle20a.html}.

\bibitem[Freeman \& Bruna(2017)Freeman and Bruna]{freeman2016topology}
Freeman, C.~D. and Bruna, J.
\newblock Topology and geometry of half-rectified network optimization.
\newblock In \emph{5th International Conference on Learning Representations, {ICLR} 2017, Toulon, France, April 24-26, 2017, Conference Track Proceedings}. OpenReview.net, 2017.
\newblock URL \url{https://openreview.net/forum?id=Bk0FWVcgx}.

\bibitem[Garipov et~al.(2018)Garipov, Izmailov, Podoprikhin, Vetrov, and Wilson]{garipov2018loss}
Garipov, T., Izmailov, P., Podoprikhin, D., Vetrov, D.~P., and Wilson, A.~G.
\newblock Loss surfaces, mode connectivity, and fast ensembling of dnns.
\newblock In Bengio, S., Wallach, H.~M., Larochelle, H., Grauman, K., Cesa{-}Bianchi, N., and Garnett, R. (eds.), \emph{Advances in Neural Information Processing Systems 31: Annual Conference on Neural Information Processing Systems 2018, NeurIPS 2018, December 3-8, 2018, Montr{\'{e}}al, Canada}, pp.\  8803--8812, 2018.

\bibitem[Gehring et~al.(2017)Gehring, Auli, Grangier, Yarats, and Dauphin]{gehring2017convolutional}
Gehring, J., Auli, M., Grangier, D., Yarats, D., and Dauphin, Y.~N.
\newblock Convolutional sequence to sequence learning.
\newblock In \emph{International conference on machine learning}, pp.\  1243--1252. PMLR, 2017.

\bibitem[Gotmare et~al.(2018)Gotmare, Keskar, Xiong, and Socher]{gotmare2018using}
Gotmare, A., Keskar, N.~S., Xiong, C., and Socher, R.
\newblock Using mode connectivity for loss landscape analysis.
\newblock \emph{CoRR}, abs/1806.06977, 2018.
\newblock URL \url{http://arxiv.org/abs/1806.06977}.

\bibitem[Guerrero{-}Pe{\~{n}}a et~al.(2023)Guerrero{-}Pe{\~{n}}a, Medeiros, Dubail, Aminbeidokhti, Granger, and Pedersoli]{pena2023re}
Guerrero{-}Pe{\~{n}}a, F.~A., Medeiros, H.~R., Dubail, T., Aminbeidokhti, M., Granger, E., and Pedersoli, M.
\newblock Re-basin via implicit sinkhorn differentiation.
\newblock In \emph{{IEEE/CVF} Conference on Computer Vision and Pattern Recognition, {CVPR} 2023, Vancouver, BC, Canada, June 17-24, 2023}, pp.\  20237--20246. {IEEE}, 2023.
\newblock \doi{10.1109/CVPR52729.2023.01938}.
\newblock URL \url{https://doi.org/10.1109/CVPR52729.2023.01938}.

\bibitem[Hall(1935)]{Hall1935OnRO}
Hall, P.
\newblock On representatives of subsets.
\newblock \emph{Journal of The London Mathematical Society-second Series}, pp.\  26--30, 1935.
\newblock URL \url{https://api.semanticscholar.org/CorpusID:23252557}.

\bibitem[He et~al.(2021)He, Liu, Gao, and Chen]{he2020deberta}
He, P., Liu, X., Gao, J., and Chen, W.
\newblock Deberta: decoding-enhanced bert with disentangled attention.
\newblock In \emph{9th International Conference on Learning Representations, {ICLR} 2021, Virtual Event, Austria, May 3-7, 2021}. OpenReview.net, 2021.
\newblock URL \url{https://openreview.net/forum?id=XPZIaotutsD}.

\bibitem[Hecht-Nielsen(1990)]{hecht1990algebraic}
Hecht-Nielsen, R.
\newblock On the algebraic structure of feedforward network weight spaces.
\newblock In \emph{Advanced Neural Computers}, pp.\  129--135. Elsevier, 1990.

\bibitem[Hendrycks \& Dietterich(2019)Hendrycks and Dietterich]{hendrycks2019benchmarking}
Hendrycks, D. and Dietterich, T.
\newblock Benchmarking neural network robustness to common corruptions and perturbations.
\newblock \emph{arXiv preprint arXiv:1903.12261}, 2019.

\bibitem[Ito et~al.(2025{\natexlab{a}})Ito, Yamada, and Kumagai]{ito2024analysis}
Ito, A., Yamada, M., and Kumagai, A.
\newblock Analysis of linear mode connectivity via permutation-based weight matching: With insights into other permutation search methods.
\newblock In \emph{The Thirteenth International Conference on Learning Representations, {ICLR} 2025, Singapore, April 24-28, 2025}. OpenReview.net, 2025{\natexlab{a}}.
\newblock URL \url{https://openreview.net/forum?id=lYRkGZZi9D}.

\bibitem[Ito et~al.(2025{\natexlab{b}})Ito, Yamada, and Kumagai]{itolinear}
Ito, A., Yamada, M., and Kumagai, A.
\newblock Linear mode connectivity between multiple models modulo permutation symmetries.
\newblock In \emph{Forty-second International Conference on Machine Learning, {ICML} 2025, Vancouver, BC, Canada, July 13-19, 2025}. OpenReview.net, 2025{\natexlab{b}}.
\newblock URL \url{https://openreview.net/forum?id=qaJuLzY6iL}.

\bibitem[Izmailov et~al.(2018)Izmailov, Podoprikhin, Garipov, Vetrov, and Wilson]{izmailov2018averaging}
Izmailov, P., Podoprikhin, D., Garipov, T., Vetrov, D.~P., and Wilson, A.~G.
\newblock Averaging weights leads to wider optima and better generalization.
\newblock In Globerson, A. and Silva, R. (eds.), \emph{Proceedings of the Thirty-Fourth Conference on Uncertainty in Artificial Intelligence, {UAI} 2018, Monterey, California, USA, August 6-10, 2018}, pp.\  876--885. {AUAI} Press, 2018.
\newblock URL \url{http://auai.org/uai2018/proceedings/papers/313.pdf}.

\bibitem[Jacot et~al.(2021)Jacot, Gabriel, and Hongler]{jacot2018neural}
Jacot, A., Gabriel, F., and Hongler, C.
\newblock Neural tangent kernel: convergence and generalization in neural networks (invited paper).
\newblock In Khuller, S. and Williams, V.~V. (eds.), \emph{{STOC} '21: 53rd Annual {ACM} {SIGACT} Symposium on Theory of Computing, Virtual Event, Italy, June 21-25, 2021}, pp.\ ~6. {ACM}, 2021.
\newblock \doi{10.1145/3406325.3465355}.
\newblock URL \url{https://doi.org/10.1145/3406325.3465355}.

\bibitem[Jonker \& Volgenant(1987)Jonker and Volgenant]{jonker1988shortest}
Jonker, R. and Volgenant, A.
\newblock A shortest augmenting path algorithm for dense and sparse linear assignment problems.
\newblock \emph{Computing}, 38\penalty0 (4):\penalty0 325--340, 1987.
\newblock \doi{10.1007/BF02278710}.
\newblock URL \url{https://doi.org/10.1007/BF02278710}.

\bibitem[Juneja et~al.(2023)Juneja, Bansal, Cho, Sedoc, and Saphra]{juneja2022linear}
Juneja, J., Bansal, R., Cho, K., Sedoc, J., and Saphra, N.
\newblock Linear connectivity reveals generalization strategies.
\newblock In \emph{The Eleventh International Conference on Learning Representations, {ICLR} 2023, Kigali, Rwanda, May 1-5, 2023}. OpenReview.net, 2023.
\newblock URL \url{https://openreview.net/forum?id=hY6M0JHl3uL}.

\bibitem[Kanoh \& Sugiyama(2025)Kanoh and Sugiyama]{kanoh2024linear}
Kanoh, R. and Sugiyama, M.
\newblock Linear mode connectivity in differentiable tree ensembles.
\newblock In \emph{The Thirteenth International Conference on Learning Representations, {ICLR} 2025, Singapore, April 24-28, 2025}. OpenReview.net, 2025.
\newblock URL \url{https://openreview.net/forum?id=UqYNPyotxL}.

\bibitem[Keskar et~al.(2017)Keskar, Mudigere, Nocedal, Smelyanskiy, and Tang]{keskar2016large}
Keskar, N.~S., Mudigere, D., Nocedal, J., Smelyanskiy, M., and Tang, P. T.~P.
\newblock On large-batch training for deep learning: Generalization gap and sharp minima.
\newblock In \emph{5th International Conference on Learning Representations, {ICLR} 2017, Toulon, France, April 24-26, 2017, Conference Track Proceedings}. OpenReview.net, 2017.
\newblock URL \url{https://openreview.net/forum?id=H1oyRlYgg}.

\bibitem[Kim et~al.(2025)Kim, Ahn, Baddar, Kim, Lee, Ahn, Han, Suh, and Yang]{kim2025test}
Kim, B., Ahn, C., Baddar, W.~J., Kim, K., Lee, H., Ahn, S., Han, S., Suh, S., and Yang, E.
\newblock Test-time ensemble via linear mode connectivity: {A} path to better adaptation.
\newblock In \emph{The Thirteenth International Conference on Learning Representations, {ICLR} 2025, Singapore, April 24-28, 2025}. OpenReview.net, 2025.
\newblock URL \url{https://openreview.net/forum?id=4wk2eOKGvh}.

\bibitem[Knyazev et~al.(2025)Knyazev, Moudgil, Lajoie, Belilovsky, and Lacoste{-}Julien]{knyazev2024accelerating}
Knyazev, B., Moudgil, A., Lajoie, G., Belilovsky, E., and Lacoste{-}Julien, S.
\newblock Accelerating training with neuron interaction and nowcasting networks.
\newblock In \emph{The Thirteenth International Conference on Learning Representations, {ICLR} 2025, Singapore, April 24-28, 2025}. OpenReview.net, 2025.
\newblock URL \url{https://openreview.net/forum?id=cUFIil6hEG}.

\bibitem[Kozal et~al.(2024)Kozal, Wasilewski, Krawczyk, and Wozniak]{kozal2024continual}
Kozal, J., Wasilewski, J., Krawczyk, B., and Wozniak, M.
\newblock Continual learning with weight interpolation.
\newblock In \emph{{IEEE/CVF} Conference on Computer Vision and Pattern Recognition, {CVPR} 2024 - Workshops, Seattle, WA, USA, June 17-18, 2024}, pp.\  4187--4195. {IEEE}, 2024.
\newblock \doi{10.1109/CVPRW63382.2024.00422}.
\newblock URL \url{https://doi.org/10.1109/CVPRW63382.2024.00422}.

\bibitem[Krizhevsky et~al.(2009)Krizhevsky, Hinton, et~al.]{krizhevsky2009learning}
Krizhevsky, A., Hinton, G., et~al.
\newblock Learning multiple layers of features from tiny images.(2009), 2009.

\bibitem[Kuditipudi et~al.(2019)Kuditipudi, Wang, Lee, Zhang, Li, Hu, Ge, and Arora]{kuditipudi2019explaining}
Kuditipudi, R., Wang, X., Lee, H., Zhang, Y., Li, Z., Hu, W., Ge, R., and Arora, S.
\newblock Explaining landscape connectivity of low-cost solutions for multilayer nets.
\newblock In Wallach, H.~M., Larochelle, H., Beygelzimer, A., d'Alch{\'{e}}{-}Buc, F., Fox, E.~B., and Garnett, R. (eds.), \emph{Advances in Neural Information Processing Systems 32: Annual Conference on Neural Information Processing Systems 2019, NeurIPS 2019, December 8-14, 2019, Vancouver, BC, Canada}, pp.\  14574--14583, 2019.

\bibitem[Kuhn(2010)]{kuhn1955hungarian}
Kuhn, H.~W.
\newblock The hungarian method for the assignment problem.
\newblock In J{\"{u}}nger, M., Liebling, T.~M., Naddef, D., Nemhauser, G.~L., Pulleyblank, W.~R., Reinelt, G., Rinaldi, G., and Wolsey, L.~A. (eds.), \emph{50 Years of Integer Programming 1958-2008 - From the Early Years to the State-of-the-Art}, pp.\  29--47. Springer, 2010.
\newblock \doi{10.1007/978-3-540-68279-0\_2}.
\newblock URL \url{https://doi.org/10.1007/978-3-540-68279-0\_2}.

\bibitem[Kurkov{\'{a}} \& Kainen(1994)Kurkov{\'{a}} and Kainen]{kuurkova1994functionally}
Kurkov{\'{a}}, V. and Kainen, P.~C.
\newblock Functionally equivalent feedforward neural networks.
\newblock \emph{Neural Comput.}, 6\penalty0 (3):\penalty0 543--558, 1994.
\newblock \doi{10.1162/NECO.1994.6.3.543}.
\newblock URL \url{https://doi.org/10.1162/neco.1994.6.3.543}.

\bibitem[LeCun et~al.(1998)LeCun, Bottou, Bengio, and Haffner]{lecun1998gradient}
LeCun, Y., Bottou, L., Bengio, Y., and Haffner, P.
\newblock Gradient-based learning applied to document recognition.
\newblock \emph{Proc. {IEEE}}, 86\penalty0 (11):\penalty0 2278--2324, 1998.
\newblock \doi{10.1109/5.726791}.
\newblock URL \url{https://doi.org/10.1109/5.726791}.

\bibitem[Lehmann et~al.(2015)Lehmann, Isele, Jakob, Jentzsch, Kontokostas, Mendes, Hellmann, Morsey, van Kleef, Auer, and Bizer]{lehmann2015dbpedia}
Lehmann, J., Isele, R., Jakob, M., Jentzsch, A., Kontokostas, D., Mendes, P.~N., Hellmann, S., Morsey, M., van Kleef, P., Auer, S., and Bizer, C.
\newblock Dbpedia - {A} large-scale, multilingual knowledge base extracted from wikipedia.
\newblock \emph{Semantic Web}, 6\penalty0 (2):\penalty0 167--195, 2015.
\newblock \doi{10.3233/SW-140134}.
\newblock URL \url{https://doi.org/10.3233/SW-140134}.

\bibitem[Lubana et~al.(2023)Lubana, Bigelow, Dick, Krueger, and Tanaka]{lubana2023mechanistic}
Lubana, E.~S., Bigelow, E.~J., Dick, R.~P., Krueger, D., and Tanaka, H.
\newblock Mechanistic mode connectivity.
\newblock In \emph{International Conference on Machine Learning}, pp.\  22965--23004. PMLR, 2023.

\bibitem[Lucas et~al.(2021)Lucas, Bae, Zhang, Fort, Zemel, and Grosse]{lucas2021analyzing}
Lucas, J., Bae, J., Zhang, M.~R., Fort, S., Zemel, R.~S., and Grosse, R.~B.
\newblock Analyzing monotonic linear interpolation in neural network loss landscapes.
\newblock \emph{CoRR}, abs/2104.11044, 2021.
\newblock URL \url{https://arxiv.org/abs/2104.11044}.

\bibitem[Maas et~al.(2011)Maas, Daly, Pham, Huang, Ng, and Potts]{maas2011learning}
Maas, A., Daly, R.~E., Pham, P.~T., Huang, D., Ng, A.~Y., and Potts, C.
\newblock Learning word vectors for sentiment analysis.
\newblock In \emph{Proceedings of the 49th annual meeting of the association for computational linguistics: Human language technologies}, pp.\  142--150, 2011.

\bibitem[Mahoney(2011)]{mahoney2011large}
Mahoney, M.
\newblock Large text compression benchmark, 2011.

\bibitem[Merity et~al.(2017)Merity, Xiong, Bradbury, and Socher]{merity2016pointer}
Merity, S., Xiong, C., Bradbury, J., and Socher, R.
\newblock Pointer sentinel mixture models.
\newblock In \emph{5th International Conference on Learning Representations, {ICLR} 2017, Toulon, France, April 24-26, 2017, Conference Track Proceedings}. OpenReview.net, 2017.
\newblock URL \url{https://openreview.net/forum?id=Byj72udxe}.

\bibitem[Neyshabur et~al.(2018)Neyshabur, Li, Bhojanapalli, LeCun, and Srebro]{neyshabur2018towards}
Neyshabur, B., Li, Z., Bhojanapalli, S., LeCun, Y., and Srebro, N.
\newblock Towards understanding the role of over-parametrization in generalization of neural networks.
\newblock \emph{CoRR}, abs/1805.12076, 2018.
\newblock URL \url{http://arxiv.org/abs/1805.12076}.

\bibitem[Neyshabur et~al.(2020)Neyshabur, Sedghi, and Zhang]{neyshabur2020being}
Neyshabur, B., Sedghi, H., and Zhang, C.
\newblock What is being transferred in transfer learning?
\newblock In Larochelle, H., Ranzato, M., Hadsell, R., Balcan, M., and Lin, H. (eds.), \emph{Advances in Neural Information Processing Systems 33: Annual Conference on Neural Information Processing Systems 2020, NeurIPS 2020, December 6-12, 2020, virtual}, 2020.

\bibitem[Nijkamp et~al.(2023)Nijkamp, Pang, Hayashi, Tu, Wang, Zhou, Savarese, and Xiong]{nijkamp2022codegen}
Nijkamp, E., Pang, B., Hayashi, H., Tu, L., Wang, H., Zhou, Y., Savarese, S., and Xiong, C.
\newblock Codegen: An open large language model for code with multi-turn program synthesis.
\newblock In \emph{The Eleventh International Conference on Learning Representations, {ICLR} 2023, Kigali, Rwanda, May 1-5, 2023}. OpenReview.net, 2023.
\newblock URL \url{https://openreview.net/forum?id=iaYcJKpY2B\_}.

\bibitem[Novak et~al.(2018)Novak, Bahri, Abolafia, Pennington, and Sohl{-}Dickstein]{novak2018sensitivity}
Novak, R., Bahri, Y., Abolafia, D.~A., Pennington, J., and Sohl{-}Dickstein, J.
\newblock Sensitivity and generalization in neural networks: an empirical study.
\newblock In \emph{6th International Conference on Learning Representations, {ICLR} 2018, Vancouver, BC, Canada, April 30 - May 3, 2018, Conference Track Proceedings}. OpenReview.net, 2018.
\newblock URL \url{https://openreview.net/forum?id=HJC2SzZCW}.

\bibitem[OpenAI(2025)]{agarwal2025gpt}
OpenAI.
\newblock gpt-oss-120b {\&} gpt-oss-20b model card.
\newblock \emph{CoRR}, abs/2508.10925, 2025.
\newblock \doi{10.48550/ARXIV.2508.10925}.
\newblock URL \url{https://doi.org/10.48550/arXiv.2508.10925}.

\bibitem[Peng et~al.(2024)Peng, Quesnelle, Fan, and Shippole]{peng2023yarn}
Peng, B., Quesnelle, J., Fan, H., and Shippole, E.
\newblock Yarn: Efficient context window extension of large language models.
\newblock In \emph{The Twelfth International Conference on Learning Representations, {ICLR} 2024, Vienna, Austria, May 7-11, 2024}. OpenReview.net, 2024.
\newblock URL \url{https://openreview.net/forum?id=wHBfxhZu1u}.

\bibitem[Phuong \& Lampert(2020)Phuong and Lampert]{bui2020functional}
Phuong, M. and Lampert, C.~H.
\newblock Functional vs. parametric equivalence of relu networks.
\newblock In \emph{8th International Conference on Learning Representations, {ICLR} 2020, Addis Ababa, Ethiopia, April 26-30, 2020}. OpenReview.net, 2020.
\newblock URL \url{https://openreview.net/forum?id=Bylx-TNKvH}.

\bibitem[Pittorino et~al.(2022)Pittorino, Ferraro, Perugini, Feinauer, Baldassi, and Zecchina]{pittorino2022deep}
Pittorino, F., Ferraro, A., Perugini, G., Feinauer, C., Baldassi, C., and Zecchina, R.
\newblock Deep networks on toroids: Removing symmetries reveals the structure of flat regions in the landscape geometry.
\newblock In Chaudhuri, K., Jegelka, S., Song, L., Szepesv{\'{a}}ri, C., Niu, G., and Sabato, S. (eds.), \emph{International Conference on Machine Learning, {ICML} 2022, 17-23 July 2022, Baltimore, Maryland, {USA}}, volume 162 of \emph{Proceedings of Machine Learning Research}, pp.\  17759--17781. {PMLR}, 2022.
\newblock URL \url{https://proceedings.mlr.press/v162/pittorino22a.html}.

\bibitem[Piziak \& Odell(1999)Piziak and Odell]{piziak1999full}
Piziak, R. and Odell, P.~L.
\newblock Full rank factorization of matrices.
\newblock \emph{Mathematics magazine}, 72\penalty0 (3):\penalty0 193--201, 1999.

\bibitem[Press et~al.(2022)Press, Smith, and Lewis]{press2021train}
Press, O., Smith, N.~A., and Lewis, M.
\newblock Train short, test long: Attention with linear biases enables input length extrapolation.
\newblock In \emph{The Tenth International Conference on Learning Representations, {ICLR} 2022, Virtual Event, April 25-29, 2022}. OpenReview.net, 2022.
\newblock URL \url{https://openreview.net/forum?id=R8sQPpGCv0}.

\bibitem[Radford et~al.(2019)Radford, Wu, Child, Luan, Amodei, Sutskever, et~al.]{radford2019language}
Radford, A., Wu, J., Child, R., Luan, D., Amodei, D., Sutskever, I., et~al.
\newblock Language models are unsupervised multitask learners.
\newblock \emph{OpenAI blog}, 1\penalty0 (8):\penalty0 9, 2019.

\bibitem[Raffel et~al.(2020)Raffel, Shazeer, Roberts, Lee, Narang, Matena, Zhou, Li, and Liu]{raffel2020exploring}
Raffel, C., Shazeer, N., Roberts, A., Lee, K., Narang, S., Matena, M., Zhou, Y., Li, W., and Liu, P.~J.
\newblock Exploring the limits of transfer learning with a unified text-to-text transformer.
\newblock \emph{J. Mach. Learn. Res.}, 21:\penalty0 140:1--140:67, 2020.
\newblock URL \url{https://jmlr.org/papers/v21/20-074.html}.

\bibitem[Ram{\'{e}} et~al.(2022)Ram{\'{e}}, Kirchmeyer, Rahier, Rakotomamonjy, Gallinari, and Cord]{rame2022diverse}
Ram{\'{e}}, A., Kirchmeyer, M., Rahier, T., Rakotomamonjy, A., Gallinari, P., and Cord, M.
\newblock Diverse weight averaging for out-of-distribution generalization.
\newblock In Koyejo, S., Mohamed, S., Agarwal, A., Belgrave, D., Cho, K., and Oh, A. (eds.), \emph{Advances in Neural Information Processing Systems 35: Annual Conference on Neural Information Processing Systems 2022, NeurIPS 2022, New Orleans, LA, USA, November 28 - December 9, 2022}, 2022.

\bibitem[Rota(1964)]{rota1964foundations}
Rota, G.-C.
\newblock On the foundations of combinatorial theory: I. theory of m{\"o}bius functions.
\newblock In \emph{Classic Papers in Combinatorics}, pp.\  332--360. Springer, 1964.

\bibitem[Sagun et~al.(2018)Sagun, Evci, G{\"{u}}ney, Dauphin, and Bottou]{sagun2017empirical}
Sagun, L., Evci, U., G{\"{u}}ney, V.~U., Dauphin, Y.~N., and Bottou, L.
\newblock Empirical analysis of the hessian of over-parametrized neural networks.
\newblock In \emph{6th International Conference on Learning Representations, {ICLR} 2018, Vancouver, BC, Canada, April 30 - May 3, 2018, Workshop Track Proceedings}. OpenReview.net, 2018.
\newblock URL \url{https://openreview.net/forum?id=rJO1\_M0Lf}.

\bibitem[Sharma et~al.(2024)Sharma, Kwok, Denton, Roy, Rolnick, and Dziugaite]{sharma2024simultaneous}
Sharma, E., Kwok, D., Denton, T., Roy, D.~M., Rolnick, D., and Dziugaite, G.~K.
\newblock Simultaneous linear connectivity of neural networks modulo permutation.
\newblock \emph{arXiv preprint arXiv:2404.06498}, 2024.

\bibitem[Shaw et~al.(2018)Shaw, Uszkoreit, and Vaswani]{shaw2018self}
Shaw, P., Uszkoreit, J., and Vaswani, A.
\newblock Self-attention with relative position representations.
\newblock In Walker, M.~A., Ji, H., and Stent, A. (eds.), \emph{Proceedings of the 2018 Conference of the North American Chapter of the Association for Computational Linguistics: Human Language Technologies, NAACL-HLT, New Orleans, Louisiana, USA, June 1-6, 2018, Volume 2 (Short Papers)}, pp.\  464--468. Association for Computational Linguistics, 2018.
\newblock \doi{10.18653/V1/N18-2074}.
\newblock URL \url{https://doi.org/10.18653/v1/n18-2074}.

\bibitem[Shevchenko \& Mondelli(2020)Shevchenko and Mondelli]{shevchenko2020landscape}
Shevchenko, A. and Mondelli, M.
\newblock Landscape connectivity and dropout stability of {SGD} solutions for over-parameterized neural networks.
\newblock In \emph{Proceedings of the 37th International Conference on Machine Learning, {ICML} 2020, 13-18 July 2020, Virtual Event}, volume 119 of \emph{Proceedings of Machine Learning Research}, pp.\  8773--8784. {PMLR}, 2020.
\newblock URL \url{http://proceedings.mlr.press/v119/shevchenko20a.html}.

\bibitem[Singh \& Jaggi(2020)Singh and Jaggi]{singh2020model}
Singh, S.~P. and Jaggi, M.
\newblock Model fusion via optimal transport.
\newblock \emph{Advances in Neural Information Processing Systems}, 33:\penalty0 22045--22055, 2020.

\bibitem[Sonthalia et~al.(2025)Sonthalia, Rubinstein, Abbasnejad, and Oh]{sonthalia2024deep}
Sonthalia, A., Rubinstein, A., Abbasnejad, E., and Oh, S.~J.
\newblock Do deep neural network solutions form a star domain?
\newblock In \emph{The Thirteenth International Conference on Learning Representations, {ICLR} 2025, Singapore, April 24-28, 2025}. OpenReview.net, 2025.
\newblock URL \url{https://openreview.net/forum?id=QjO0fUlVYK}.

\bibitem[Stanley(2011)]{stanley2011enumerative}
Stanley, R.~P.
\newblock Enumerative combinatorics volume 1 second edition.
\newblock \emph{Cambridge studies in advanced mathematics}, 2011.

\bibitem[Su et~al.(2024)Su, Ahmed, Lu, Pan, Bo, and Liu]{su2024roformer}
Su, J., Ahmed, M., Lu, Y., Pan, S., Bo, W., and Liu, Y.
\newblock Roformer: Enhanced transformer with rotary position embedding.
\newblock \emph{Neurocomputing}, 568:\penalty0 127063, 2024.

\bibitem[Tatro et~al.(2020)Tatro, Chen, Das, Melnyk, Sattigeri, and Lai]{tatro2020optimizing}
Tatro, N.~J., Chen, P., Das, P., Melnyk, I., Sattigeri, P., and Lai, R.
\newblock Optimizing mode connectivity via neuron alignment.
\newblock In Larochelle, H., Ranzato, M., Hadsell, R., Balcan, M., and Lin, H. (eds.), \emph{Advances in Neural Information Processing Systems 33: Annual Conference on Neural Information Processing Systems 2020, NeurIPS 2020, December 6-12, 2020, virtual}, 2020.

\bibitem[Theus et~al.(2025)Theus, Cabodi, Anagnostidis, Orvieto, Singh, and Boeva]{theus2025generalized}
Theus, A., Cabodi, A., Anagnostidis, S., Orvieto, A., Singh, S.~P., and Boeva, V.
\newblock Generalized linear mode connectivity for transformers.
\newblock \emph{CoRR}, abs/2506.22712, 2025.
\newblock \doi{10.48550/ARXIV.2506.22712}.
\newblock URL \url{https://doi.org/10.48550/arXiv.2506.22712}.

\bibitem[Touvron et~al.(2023{\natexlab{a}})Touvron, Lavril, Izacard, Martinet, Lachaux, Lacroix, Rozi{\`{e}}re, Goyal, Hambro, Azhar, Rodriguez, Joulin, Grave, and Lample]{touvron2023llama}
Touvron, H., Lavril, T., Izacard, G., Martinet, X., Lachaux, M., Lacroix, T., Rozi{\`{e}}re, B., Goyal, N., Hambro, E., Azhar, F., Rodriguez, A., Joulin, A., Grave, E., and Lample, G.
\newblock Llama: Open and efficient foundation language models.
\newblock \emph{CoRR}, abs/2302.13971, 2023{\natexlab{a}}.
\newblock \doi{10.48550/ARXIV.2302.13971}.
\newblock URL \url{https://doi.org/10.48550/arXiv.2302.13971}.

\bibitem[Touvron et~al.(2023{\natexlab{b}})Touvron, Lavril, Izacard, Martinet, Lachaux, Lacroix, Rozi{\`{e}}re, Goyal, Hambro, Azhar, Rodriguez, Joulin, Grave, and Lample]{touvron2023llamaopenefficientfoundation}
Touvron, H., Lavril, T., Izacard, G., Martinet, X., Lachaux, M., Lacroix, T., Rozi{\`{e}}re, B., Goyal, N., Hambro, E., Azhar, F., Rodriguez, A., Joulin, A., Grave, E., and Lample, G.
\newblock Llama: Open and efficient foundation language models.
\newblock \emph{CoRR}, abs/2302.13971, 2023{\natexlab{b}}.
\newblock \doi{10.48550/ARXIV.2302.13971}.
\newblock URL \url{https://doi.org/10.48550/arXiv.2302.13971}.

\bibitem[Tran et~al.(2025)Tran, Vo, The, Huu, Nguyen{-}Nhat, Tran, Pham, and Nguyen]{tran2025equivariant}
Tran, H.~V., Vo, T., The, A.~N., Huu, T.~T., Nguyen{-}Nhat, M., Tran, T., Pham, D., and Nguyen, T.~M.
\newblock Equivariant neural functional networks for transformers.
\newblock In \emph{The Thirteenth International Conference on Learning Representations, {ICLR} 2025, Singapore, April 24-28, 2025}. OpenReview.net, 2025.
\newblock URL \url{https://openreview.net/forum?id=uBai0ukstY}.

\bibitem[Vaswani et~al.(2017)Vaswani, Shazeer, Parmar, Uszkoreit, Jones, Gomez, Kaiser, and Polosukhin]{vaswani2017attention}
Vaswani, A., Shazeer, N., Parmar, N., Uszkoreit, J., Jones, L., Gomez, A.~N., Kaiser, L., and Polosukhin, I.
\newblock Attention is all you need.
\newblock In \emph{Advances in Neural Information Processing Systems 30}, pp.\  5998--6008, 2017.

\bibitem[Venturi et~al.(2019)Venturi, Bandeira, and Bruna]{venturi2019spurious}
Venturi, L., Bandeira, A.~S., and Bruna, J.
\newblock Spurious valleys in one-hidden-layer neural network optimization landscapes.
\newblock \emph{J. Mach. Learn. Res.}, 20:\penalty0 133:1--133:34, 2019.
\newblock URL \url{https://jmlr.org/papers/v20/18-674.html}.

\bibitem[Vlaar \& Frankle(2022)Vlaar and Frankle]{vlaar2022can}
Vlaar, T.~J. and Frankle, J.
\newblock What can linear interpolation of neural network loss landscapes tell us?
\newblock In Chaudhuri, K., Jegelka, S., Song, L., Szepesv{\'{a}}ri, C., Niu, G., and Sabato, S. (eds.), \emph{International Conference on Machine Learning, {ICML} 2022, 17-23 July 2022, Baltimore, Maryland, {USA}}, volume 162 of \emph{Proceedings of Machine Learning Research}, pp.\  22325--22341. {PMLR}, 2022.
\newblock URL \url{https://proceedings.mlr.press/v162/vlaar22a.html}.

\bibitem[Wen et~al.(2023)Wen, Cheng, Qiu, Wang, Pan, and Li]{wen2023optimizing}
Wen, H., Cheng, H., Qiu, H., Wang, L., Pan, L., and Li, H.
\newblock Optimizing mode connectivity for class incremental learning.
\newblock In Krause, A., Brunskill, E., Cho, K., Engelhardt, B., Sabato, S., and Scarlett, J. (eds.), \emph{International Conference on Machine Learning, {ICML} 2023, 23-29 July 2023, Honolulu, Hawaii, {USA}}, volume 202 of \emph{Proceedings of Machine Learning Research}, pp.\  36940--36957. {PMLR}, 2023.
\newblock URL \url{https://proceedings.mlr.press/v202/wen23b.html}.

\bibitem[Wortsman et~al.(2022)Wortsman, Ilharco, Gadre, Roelofs, Lopes, Morcos, Namkoong, Farhadi, Carmon, Kornblith, and Schmidt]{wortsman2022model}
Wortsman, M., Ilharco, G., Gadre, S.~Y., Roelofs, R., Lopes, R.~G., Morcos, A.~S., Namkoong, H., Farhadi, A., Carmon, Y., Kornblith, S., and Schmidt, L.
\newblock Model soups: averaging weights of multiple fine-tuned models improves accuracy without increasing inference time.
\newblock In Chaudhuri, K., Jegelka, S., Song, L., Szepesv{\'{a}}ri, C., Niu, G., and Sabato, S. (eds.), \emph{International Conference on Machine Learning, {ICML} 2022, 17-23 July 2022, Baltimore, Maryland, {USA}}, volume 162 of \emph{Proceedings of Machine Learning Research}, pp.\  23965--23998. {PMLR}, 2022.
\newblock URL \url{https://proceedings.mlr.press/v162/wortsman22a.html}.

\bibitem[Xiao et~al.(2024)Xiao, Liu, and Bamler]{xiao2023compact}
Xiao, T.~Z., Liu, W., and Bamler, R.
\newblock A compact representation for bayesian neural networks by removing permutation symmetry.
\newblock \emph{CoRR}, abs/2401.00611, 2024.
\newblock \doi{10.48550/ARXIV.2401.00611}.
\newblock URL \url{https://doi.org/10.48550/arXiv.2401.00611}.

\bibitem[Yang et~al.(2025)Yang, Li, Yang, Zhang, Hui, Zheng, Yu, Gao, Huang, Lv, Zheng, Liu, Zhou, Huang, Hu, Ge, Wei, Lin, Tang, Yang, Tu, Zhang, Yang, Yang, Zhou, Lin, Dang, Bao, Yang, Yu, Deng, Li, Xue, Li, Zhang, Wang, Zhu, Men, Gao, Liu, Luo, Li, Tang, Yin, Ren, Wang, Zhang, Ren, Fan, Su, Zhang, Zhang, Wan, Liu, Wang, Cui, Zhang, Zhou, and Qiu]{yang2025qwen3}
Yang, A., Li, A., Yang, B., Zhang, B., Hui, B., Zheng, B., Yu, B., Gao, C., Huang, C., Lv, C., Zheng, C., Liu, D., Zhou, F., Huang, F., Hu, F., Ge, H., Wei, H., Lin, H., Tang, J., Yang, J., Tu, J., Zhang, J., Yang, J., Yang, J., Zhou, J., Lin, J., Dang, K., Bao, K., Yang, K., Yu, L., Deng, L., Li, M., Xue, M., Li, M., Zhang, P., Wang, P., Zhu, Q., Men, R., Gao, R., Liu, S., Luo, S., Li, T., Tang, T., Yin, W., Ren, X., Wang, X., Zhang, X., Ren, X., Fan, Y., Su, Y., Zhang, Y., Zhang, Y., Wan, Y., Liu, Y., Wang, Z., Cui, Z., Zhang, Z., Zhou, Z., and Qiu, Z.
\newblock Qwen3 technical report.
\newblock \emph{CoRR}, abs/2505.09388, 2025.
\newblock \doi{10.48550/ARXIV.2505.09388}.
\newblock URL \url{https://doi.org/10.48550/arXiv.2505.09388}.

\bibitem[Yunis et~al.(2022)Yunis, Patel, Savarese, Vardi, Frankle, Walter, Livescu, and Maire]{yunis2022convexity}
Yunis, D., Patel, K.~K., Savarese, P. H.~P., Vardi, G., Frankle, J., Walter, M., Livescu, K., and Maire, M.
\newblock On convexity and linear mode connectivity in neural networks.
\newblock In \emph{OPT 2022: Optimization for Machine Learning (NeurIPS 2022 Workshop)}, 2022.

\bibitem[Zhang et~al.(2015)Zhang, Zhao, and LeCun]{zhang2015character}
Zhang, X., Zhao, J., and LeCun, Y.
\newblock Character-level convolutional networks for text classification.
\newblock \emph{Advances in neural information processing systems}, 28, 2015.

\bibitem[Zhao et~al.(2025)Zhao, Dehmamy, Walters, and Yu]{zhao2025understanding}
Zhao, B., Dehmamy, N., Walters, R., and Yu, R.
\newblock Understanding mode connectivity via parameter space symmetry.
\newblock In \emph{Forty-second International Conference on Machine Learning, {ICML} 2025, Vancouver, BC, Canada, July 13-19, 2025}. OpenReview.net, 2025.
\newblock URL \url{https://openreview.net/forum?id=E8dMQGsKZv}.

\bibitem[Zhao et~al.(2020)Zhao, Chen, Das, Ramamurthy, and Lin]{zhao2020bridging}
Zhao, P., Chen, P., Das, P., Ramamurthy, K.~N., and Lin, X.
\newblock Bridging mode connectivity in loss landscapes and adversarial robustness.
\newblock In \emph{8th International Conference on Learning Representations, {ICLR} 2020, Addis Ababa, Ethiopia, April 26-30, 2020}. OpenReview.net, 2020.
\newblock URL \url{https://openreview.net/forum?id=SJgwzCEKwH}.

\bibitem[Zheng et~al.(2024)Zheng, Gao, Shi, Huang, Li, Xiong, Ren, Ng, Jiang, Li, and Li]{zheng2024dape}
Zheng, C., Gao, Y., Shi, H., Huang, M., Li, J., Xiong, J., Ren, X., Ng, M.~K., Jiang, X., Li, Z., and Li, Y.
\newblock {DAPE:} data-adaptive positional encoding for length extrapolation.
\newblock In Globersons, A., Mackey, L., Belgrave, D., Fan, A., Paquet, U., Tomczak, J.~M., and Zhang, C. (eds.), \emph{Advances in Neural Information Processing Systems 38: Annual Conference on Neural Information Processing Systems 2024, NeurIPS 2024, Vancouver, BC, Canada, December 10 - 15, 2024}, 2024.

\bibitem[Zhou et~al.(2023)Zhou, Yang, Yang, Yan, and Hu]{zhou2023going}
Zhou, Z., Yang, Y., Yang, X., Yan, J., and Hu, W.
\newblock Going beyond linear mode connectivity: The layerwise linear feature connectivity.
\newblock In Oh, A., Naumann, T., Globerson, A., Saenko, K., Hardt, M., and Levine, S. (eds.), \emph{Advances in Neural Information Processing Systems 36: Annual Conference on Neural Information Processing Systems 2023, NeurIPS 2023, New Orleans, LA, USA, December 10 - 16, 2023}, 2023.

\end{thebibliography}
\bibliographystyle{icml2026}


\clearpage

\appendix

\onecolumn

\section*{Table of Notation}

\begin{table}[H]
\label{tab:notation}
\renewcommand*{\arraystretch}{1.3}
\centering
\begin{tabularx}{\textwidth}{p{0.25\textwidth}X}
\toprule
\multicolumn{2}{l}{\textit{General Mathematical Notation}} \\
$\mathbb{R}^{n}$ & $n$-dimensional Euclidean space \\
$\mathbb{R}^{m \times n}$ & Space of $m \times n$ real matrices \\
$\|\cdot\|_F$ & Frobenius norm of a matrix \\
$\text{trace}(\cdot)$ & Trace of a square matrix \\
$\text{sym}(M)$ & Symmetrization of a matrix $M$, defined as $(M + M^\top)/2$ \\
\addlinespace
\multicolumn{2}{l}{\textit{Dimensions and Indices}} \\
$d$ & Dimension of token embeddings \\
$d_h$ & Dimension of each attention head (typically $d/h$) \\
$h$ & Number of attention heads in a model \\
$L$ & Length of the input token sequence \\
$m, n, k$ & Indices representing positions in a sequence \\
$i, j, p$ & Indices representing attention heads \\
\addlinespace
\multicolumn{2}{l}{\textit{Spaces and Parameters}} \\
$\mathcal{S}$ & The space of all token sequences, $\bigsqcup_{L=1}^\infty \mathbb{R}^{L \times d}$ \\
$W^Q_i, W^K_i, W^V_i, W^O_i$ & Query, key, value, and output matrices of head $i$, each in $\mathbb{R}^{d \times d_h}$ \\
$\theta$ & The complete set of parameters for a multi-head attention layer\\
$\Theta(d, d_h, h)$ & The parameter space for a multi-head attention layer, $(\mathbb{R}^{d \times d_h})^{4h}$ \\
$A_i^{m,n}, B_i$ & Parameter matrices for the general multi-head attention formulation \\
\addlinespace
\multicolumn{2}{l}{\textit{Symmetry Groups}} \\
$S_h$ & The permutation group on a set of $h$ elements \\
$\text{GL}(d_h)$ & The general linear group of invertible $d_h \times d_h$ matrices \\
$G_{\text{Att}}(d_h, h)$ & The symmetry group for standard multi-head attention\\
$\text{H}(d_h)$ & The symmetry group for the RoPE query-key mechanism \\
$G_{\text{RoPE}}(d_h, h)$ & The symmetry group for multi-head attention with RoPE\\
\addlinespace
\multicolumn{2}{l}{\textit{Positional Encodings}} \\
$p_m$ & The absolute positional encoding vector for position $m$ \\
$R_n$ & The block-diagonal rotation matrix for position $n$ in RoPE \\
$\varphi_i$ & The rotation frequency for the $i$-th block in RoPE matrices \\
$P_i, J_i$ & 2D block-diagonal matrices used to define $\text{H}(d_h)$ \\
\addlinespace
\multicolumn{2}{l}{\textit{Matching Algorithm}} \\
$C, C_{i,j}$ & The cost matrix used for the linear assignment problem and its entries \\
$\pi^*$ & The optimal head permutation \\
$\mathcal{L}_{Q,K}(U)$ & The loss function for aligning query-key matrices with matrix $U$ \\
$\mathcal{L}_{V,O}(V)$ & The loss function for aligning value-output matrices with matrix $V$ \\
$g_j(x)$ & The 1D scalar objective function for RoPE alignment in subspace $j$ \\
$\eta_{Q,j}, \eta_{K,j}$ & Constants representing squared Frobenius norms to align RoPE \\
$\gamma_{Q,j}, \gamma_{K,j}$ & Constants representing complex correlation scalars to align RoPE \\
\bottomrule
\end{tabularx}
\end{table}

\clearpage

\begin{center}
{\bf \Large{Supplement to ``Functional Equivalence in Attention: \\ A Comprehensive Study with \\ Applications to  Linear Mode Connectivity''}}
\end{center}

\DoToC

\section{Organization of the Paper and Appendix}
\label{summarizesection}

Although this work is lengthy, \textit{its core contributions can be distilled into a compact framework that is accessible even to readers interested solely in theoretical analysis, solely in empirical evaluation, or in both}. This section serves as the preamble to the Appendix, where we provide a comprehensive overview of our main results, encompassing both theoretical developments and experimental findings. The purpose of this summary is to orient the reader before engaging with the detailed technical content that follows, and to clarify how each component contributes to the overarching narrative of the work.

\textbf{Main Paper.} The organization of the main paper is as follows.
\begin{enumerate}
    \item Section~\ref{main:section{Introduction}} provides an introduction and related work on Linear Mode Connectivity. Related concepts, such as functional equivalence and alignment methods, are also introduced in connection with prior literature.
    \item Section~\ref{main:section{Preliminary}} reviews vanilla attention, including its parameter space, symmetry group, and a result from literature--Theorem~\ref{main:thm_old_result}--which establishes complete functional equivalence for vanilla attention.
    \item Section~\ref{main:section{Weight Space}} examines how positional encodings may alter the internal structure of attention, thereby rendering the analysis from the vanilla case no longer directly applicable. While absolute PEs of the additive type do not affect the structure, relative PEs (with particular emphasis on Rotary PE) fundamentally change the attention mechanism. The corresponding symmetry group for the RoPE case is presented, which is strictly smaller than in the vanilla or APE setting. This reduction in symmetry implies that the function class realized by RoPE attention is strictly larger, providing a theoretical explanation for its increasing prominence in practice.
    \item Section~\ref{main:section{Functional Equivalence}} focuses primarily on the RoPE case. First, we extend the RoPE setting to a general attention formulation that accommodates all cases of interest. In this formulation, the similarity score between two tokens at their specific positional indices is expressed as a bilinear form or quadratic norm. The result on functional equivalence of this settings is provided in Theorem~\ref{main:thm:main_fe_rope}. This framework subsumes vanilla attention, sinusoidal PE, and RoPE. To the best of our knowledge, this constitutes the most general formulation of attention studied under functional equivalence to date. Using this formulation, we then characterize the functional equivalence of the RoPE case, presented in Theorem~\ref{main:main_theorem_2}.
    \item Section~\ref{main:Matching_Algo} introduces an alignment method that serves as a tool for examining linear mode connectivity (LMC) in attention-based models. We propose a two-stage alignment algorithm for multi-head attention layers, applicable to both standard MHA and MHA with RoPE. The first stage matches the ordering of attention heads between two models by solving a linear assignment problem. The second stage aligns the internal parameters of each matched head pair independently for Query-Key and Value-Output components, optimizing over the appropriate symmetry group (GL($d_h$) for standard MHA or H($d_h$) for RoPE) via gradient descent or efficient scalar minimization. Remarks extend the method to include biases, full Transformer blocks, and full Transformer models.
    \item Section~\ref{main:section{Experiments}} examines LMC under four re-initialization strategies, with emphasis on the first attention layer and full model resets, while intermediate cases are reported in the Appendix. Experiments are conducted across diverse Vision and NLP tasks. Ablation studies confirm the effectiveness of the two-stage matching algorithm in reducing barriers: Ablation study for Stage 1 demonstrates that head permutation is crucial for finding LMC, while Ablation study for Stage 2 shows its importance that incorporating gradient descent optimization further improves alignment and reduces barriers.
    \item Section~\ref{main:section{Conclusion}} summarizes our findings, discusses limitations, and outlines future directions.
    
\end{enumerate}

\textbf{Appendix.} The appendices provide complete proofs of the theoretical results in the main paper, the proposed matching algorithms, as well as additional experimental details.

\textit{Theoretical Proofs.} Appendices~\ref{appendix:attentionbig},~\ref{appendix:Functional Equivalence of Attention Mechanism with Positional Encoding},~\ref{appendix:A General Formulation for Multihead Attention and its  Functional Equivalence},~\ref{appendix:Key Lemmas for the Functional Equivalence of General MultiHead Attention}, and~\ref{appendix:Functional Equivalence of Multihead Attention with Rotary Positional Encoding} contain all theoretical aspects and proofs related to functional equivalence. The main theoretical results of our work are Theorem~\ref{main:thm:main_fe_rope} and Theorem~\ref{main:main_theorem_2}. These two theorems are self-contained and can be understood directly from their statements, with all assumptions and settings specified in the main paper. \textit{For readers not interested in the detailed proofs, this summary should suffice to convey the essence of our theoretical contributions, and the corresponding sections may be safely skipped.}

\begin{enumerate}
    \item Appendix~\ref{appendix:attentionbig} formally defines the attention mechanism and its parameter space, followed by a description of how positional encodings are incorporated into attention.
    \item Appendix~\ref{appendix:Functional Equivalence of Attention Mechanism with Positional Encoding} briefly describes the symmetry structures of vanilla attention, attention with absolute PEs, and attention with relative PEs (with emphasis on RoPE).
    \item Appendix~\ref{appendix:A General Formulation for Multihead Attention and its  Functional Equivalence} introduces the general attention formulation. Theorem~\ref{thm:main_fe_rope}, which is Theorem~\ref{main:thm:main_fe_rope} in the main paper, establishes the functional equivalence of this general setting. The proof can be sketched as follows: starting from the softmax operator, we multiply through the denominators to rewrite the expression as an exponential polynomial, and then apply results and techniques from this area to complete the argument. All key intermediate results used as lemmas in the proof are stated in a self-contained manner in Appendix~\ref{appendix:Key Lemmas for the Functional Equivalence of General MultiHead Attention}, which includes
    \begin{enumerate}
    \item Appendix~\ref{appendix:section_A Result on the Linear Independence of Exponential Polynomials over the Field of Rational Functions} presents a result on the linear independence of exponential polynomials over the field of rational functions.
    \item Appendix~\ref{appendix:Hall_section} recalls Hall's Marriage Theorem, a classical result in combinatorics that is employed in some double-counting arguments used in our proof.
    \item Appendix~\ref{appendix:mobius_function} provides background on the Möbius function, with a particular focus on the partition lattice, and states a combinatorial identity that is used in our proof.
    \item Appendix~\ref{appendix:section_A Technical Result on Weighted Sums over Distinct Tuples} establishes a lemma on weighted sums over tuples, which is applied in our proof.
\end{enumerate}
    \item Appendix~\ref{appendix:Functional Equivalence of Multihead Attention with Rotary Positional Encoding} applies the functional equivalence analysis of the general attention case to the specific setting of RoPE. Theorem~\ref{appendix:main_theorem_2}, corresponding to Theorem~\ref{main:main_theorem_2} in the main paper, provides the full details of this analysis. The proof proceeds as follows: RoPE is first reformulated as a special case of the general attention formulation via reparameterization; we then apply Theorem~\ref{thm:main_fe_rope}~(\ref{main:thm:main_fe_rope}), and finally invoke a structural property of the rotary matrix, stated in Lemma~\ref{appendix:lemma_rotary} of Appendix~\ref{appendix:subsection{A Lemma on the Rotary Matrix}}, to recover the relationship between the original attention parameters.
\end{enumerate}

\textit{Matching Algorithm.} Appendix~\ref{appendix:matching_algorithm} develops the two-stage alignment procedure: first permuting attention heads via a linear assignment problem, then refining parameters with structured transformations. Key lemmas provide gradients for general linear updates, an SVD-based orthogonal initialization, and a RoPE-specific reduction to 2D subproblems. Algorithm~\ref{alg:attention_alignment} summarizes the complete method.

\textit{Experimental Details.} Appendix~\ref{appendix:hyperparams} provides a comprehensive description of the experimental setup, including datasets, training protocols, and hyperparameters, along with additional results to ensure reproducibility. Appendix~\ref{appendix:experiments} further illustrates the interpolation results through detailed figures:  
\begin{enumerate}
    \item Appendix~\ref{appendix:experiment_attn_first} reports experiments on re-initializing only the first attention layer, highlighting its dominant role in shaping early representations.
    \item Appendix~\ref{appendix:experiment_attn_all} investigates re-initialization of all attention layers, showing the cumulative effect of disrupting contextual interactions across the network.
    \item Appendix~\ref{appendix:experiment_transformer_first_layer} studies re-initialization of the first Transformer layer, coupling attention and its adjacent feedforward block to examine early-layer sensitivity.
    \item Appendix~\ref{appendix:experiment_full_model} evaluates the most extreme setting where the entire Transformer is re-initialized, quantifying the magnitude of barriers introduced by full resets.
    \item Appendix~\ref{appdx:ablation_head_permu} presents ablation studies on head permutation, including the two-stage matching algorithm. Stage~1 demonstrates the necessity of optimal head alignment for preserving linear mode connectivity, while Stage~2 leverages gradient refinement to further reduce interpolation barriers.
\end{enumerate}
The experimental findings indicate that linear mode connectivity (LMC) manifests robustly in encoder-only architectures across a diverse set of vision and text classification benchmarks, including MNIST, CIFAR-10/100, ImageNet-21K $\to$ CIFAR transfer, ImageNet-1K, AGNews, IMDB Reviews, and DBpedia. By contrast, for large-scale language modeling datasets such as Enwik8, WikiText-103, and One Billion Word, LMC is exhibited exclusively under first attention layer and first-layer re-initialization. This phenomenon can be attributed to the reliance on GPT-2 models--decoder-only Transformers employing causal attention--which inherently impose more restrictive conditions on interpolation and connectivity.
\section{Multihead Attention Mechanism}
\label{appendix:attentionbig}

\subsection{Multihead Attention}\label{appendix:attention}
    
    \textbf{General Formulation of Multihead Attention. } Let $d$ be a positive integer presenting the dimension of tokens and $L$ be a positive integer presenting the sequence length. Denote the space of all sequences of tokens as $
    \mathcal{S} \coloneqq \sqcup_{L=1}^\infty \mathbb{R}^{L \times d}$. Consider a parameterized similarity map, which assigns a score to a pair of tokens, and a parameterized embedding map, which produces token representations, as follows
\begin{align}
    f(\cdot, \cdot ~; \phi) \colon \mathbb{R}^d \times \mathbb{R}^d \to \mathbb{R}, ~~\text{and}~~~~~ g(\cdot ~; \pi) \colon \mathbb{R}^d \to \mathbb{R}^d.
\end{align}
The parameters are denoted $\phi \in \Phi$ and $\pi \in \Pi$, respectively. Given an input sequence $\mathbf{x} = (x_1,\ldots,x_L)^\top \in \mathbb{R}^{L \times d}$, the multihead attention mechanism with $h$ heads is defined by
    \begin{align}
    &\textnormal{MHA}\left(\mathbf{x} ~ ; \{\phi_i, \pi_i\}_{i=1}^h \right) = \sum_{i=1}^h \textnormal{softmax} \begin{bmatrix}
            f(x_1, x_1 ~;  \phi_i) & f(x_1, x_2 ~;  \phi_i) & \cdots & f(x_1, x_L ~; \phi_i) \\
            f(x_2, x_1 ~;  \phi_i) & f(x_2, x_2 ~;  \phi_i) & \cdots & f(x_2, x_L ~;  \phi_i) \\
            \vdots & \vdots & \ddots & \vdots \\
            f(x_L, x_1 ~;  \phi_i) & f(x_L, x_2 ~;  \phi_i) & \cdots & f(x_L, x_L ~;  \phi_i) \\
        \end{bmatrix} \cdot \begin{bmatrix}
      g(x_1 ~; \pi_i)  \\
      g(x_2 ~; \pi_i) \\
      \vdots \\
      g(x_L ~; \pi_i)
        \end{bmatrix}.
    \end{align}
Here the \textit{attention matrix} $\text{softmax}\left[f(x_m,x_n ~; \phi_i \right]_{m,n\in [L]}$ of $\mathbf{x}$ is obtained by applying the softmax operator row-wise, so that each row represents a probability distribution over the contributions of input tokens to a given output token.

\medskip
\textbf{Parameter Space of Multihead Attention.}  
In standard practice, the similarity function is implemented via query–key projections. With a fixed head dimension $d_h \in \mathbb{N}$, one sets $\phi = (W^Q, W^K)$ where $W^Q, W^K \in \mathbb{R}^{d \times d_h}$,
and defines $f(x, y ~; \phi) = (x W^Q)(y W^K)^\top$. The embedding function is parameterized by $\pi = (W^V, W^O)$ where $W^V, W^O \in \mathbb{R}^{d \times d_h}$, and defined as $g(x ~; \pi) = (x W^V) (W^O)^\top$. Typically, the head dimension is chosen as $d_h = d/h$. The multihead attention map takes the form
    \begin{align}
        &\textnormal{MHA}\big(\mathbf{x} ~ ; \{W^Q_i, W^K_i, W^V_i, W^O_i\}_{i=1}^h \big) = \sum_{i=1}^h \textup{softmax}\left((\mathbf{x}W^Q_i)\left(\mathbf{x}W^K_i\right)^\top \right) \cdot \left(\mathbf{x}W^V_i\right) (W^O_i)^\top.
    \end{align}
    The parameters and  the parameter space of a multihead attention with $h$ heads is thus given by
\begin{align}
\theta = \bigl(W^Q_i, W^K_i, W^V_i, W^O_i \bigr)_{i=1}^h \in\Theta(d,d_h,h)  \coloneqq \left(\mathbb{R}^{d \times d_h}\right)^{4h}.
\end{align}
\subsection{Positional Encoding}

The multihead attention mechanism, as formulated in subsection \ref{appendix:attention}, is inherently permutation-invariant: the similarity scores $f(x_j, x_k ~; \phi_i)$ and value projections $g(x_k ~; \pi_i)$ depend solely on the token representations, disregarding their sequential order. This property enables parallel computation but renders the model incapable of distinguishing sequences that differ only in token positions. To inject order information, positional encodings (PEs) are essential. We categorize PEs into two primary classes: \emph{absolute positional encodings (APEs)}, which associate a unique vector with each absolute position, and \emph{relative positional encodings (RPEs)}, which encode pairwise relative displacements to promote translation equivariance.

\subsubsection{Absolute Positional Encodings}

In the absolute paradigm, each position $m \in \{1, \dots, L\}$ is mapped to a fixed vector $p_m \in \mathbb{R}^d$, independent of the sequence content $\mathbf{x} = (x_1, \dots, x_L)^\top \in \mathbb{R}^{L \times d}$. The positional vectors are added elementwise to the token embeddings, yielding $ \mathbf{x} + \mathbf{p}$ where $\mathbf{p} = (p_1, \dots, p_L)^\top \in \mathbb{R}^{L \times d}$. The multihead attention then processes this augmented input:
\allowdisplaybreaks
\begin{align}
    &\textnormal{MHA}\big(\mathbf{x} + \mathbf{p} ~; \{\phi_i,\pi_i\}_{i=1}^h \big) \notag \\
    & \hspace{40pt} = \sum_{i=1}^h \textnormal{softmax} \begin{bmatrix}
        f(x_1 + p_1, x_1 + p_1 ~; \phi_i) & \cdots & f(x_1 + p_1, x_L + p_L ~; \phi_i) \\
        f(x_2 + p_2, x_1 + p_1 ~; \phi_i) & \cdots & f(x_2 + p_2, x_L + p_L ~; \phi_i) \\
        \vdots & \ddots & \vdots \\
        f(x_L + p_L, x_1 + p_1 ~; \phi_i) & \cdots & f(x_L + p_L, x_L + p_L ~; \phi_i) \\
    \end{bmatrix}  \cdot \begin{bmatrix}
        g(x_1 + p_1 ~; \pi_i) \\
        g(x_2 + p_2 ~; \pi_i) \\
        \vdots \\
        g(x_L + p_L ~; \pi_i)
    \end{bmatrix}.
\end{align}
A foundational instantiation is the \emph{sinusoidal encoding} from the original Transformer \citep{vaswani2017attention}, where components of $p_m \in \mathbb{R}^d$ (assuming $d$ even) are
\begin{align}
    p_{m,2k} &= \sin\left( \frac{m}{10000^{2k/d}} \right), &
    p_{m,2k+1} &= \cos\left( \frac{m}{10000^{2k/d}} \right),
\end{align}
for $0 \le k < d/2$. This deterministic, parameter-free construction embeds positions in a periodic space, allowing relative distances to be recovered via linear combinations of vectors. It supports extrapolation to unseen lengths, though empirical gains are modest \citep{dai2019transformer}. Alternatively, \emph{learned absolute embeddings} treat $\{p_m\}_{m=1}^L$ as trainable parameters optimized jointly with the model \citep{devlin2019bert}. This approach adapts to task-specific patterns, often boosting in-domain performance, but lacks the inductive bias of sinusoids and generalizes poorly beyond the maximum training length $L_\text{train}$, as unseen $p_m$ for $m > L_\text{train}$ are undefined. For vision tasks, APEs extend to 2D grids in models like the Vision Transformer (ViT) \citep{dosovitskiy2020image}, where learnable $p_{u,v} \in \mathbb{R}^d$ for patch positions $(u,v) \in \{1,\dots,H\} \times \{1,\dots,W\}$ are added to patch embeddings $x_{u,v}$, preserving absolute spatial structure.

\subsubsection{Relative Positional Encodings}

Unlike APEs that inject a unique signal for each absolute position, RPEs integrate relational information directly into the self-attention mechanism. Formally, RPEs parameterize the similarity function $f(\cdot, \cdot)$ in the attention mechanism with pairwise terms $\phi_i^{m,n}$ that depend on the positions $m$ and $n$ for each attention head $i$. The multi-head attention output is then computed as:
\begin{align}
    &\textnormal{MHA}\left(\mathbf{x} ~ ; \bigl\{ \{\phi_i^{m,n}\}_{m,n}, \pi_i\}_{i=1}^h \right) \notag \\
    & \hspace{60pt} = \sum_{i=1}^h \textnormal{softmax} \begin{bmatrix}
        f(x_1, x_1 ~ ; \phi_i^{1,1}) & f(x_1, x_2 ~ ; \phi_i^{1,2}) & \cdots & f(x_1, x_L ~ ; \phi_i^{1,L}) \\
        f(x_2, x_1 ~ ; \phi_i^{2,1}) & f(x_2, x_2 ~ ; \phi_i^{2,2}) & \cdots & f(x_2, x_L ~ ; \phi_i^{2,L}) \\
        \vdots & \vdots & \ddots & \vdots \\
        f(x_L, x_1 ~ ; \phi_i^{L,1}) & f(x_L, x_2 ~ ; \phi_i^{L,2}) & \cdots & f(x_L, x_L ~ ; \phi_i^{L,L}) \\
    \end{bmatrix} \cdot \begin{bmatrix}
        g(x_1 ~ ; \pi_i) \\
        g(x_2 ~ ; \pi_i) \\
        \vdots \\
        g(x_L ~ ; \pi_i)
    \end{bmatrix},
\end{align}
with value projections $g$ unaffected by positions. Translation equivariance is enforced via
\begin{align}
    \phi_i^{m,n} = \phi_i^{m+k, n+k}, \qquad \forall m,n,k \in \mathbb{Z},
\end{align}
so $\phi_i^{m,n}$ depends only on the relative offset $m-n$, making attention scores functions of token content and displacement.
\\
Several influential RPE variants have been proposed. Early work by \citet{shaw2018self} introduced additive relative embeddings, which augment the key vectors with learnable embeddings corresponding to the clipped relative distance between the query and key. A simpler and highly effective approach, popularized by the T5 model, involves adding a learned scalar bias directly to the pre-softmax attention logits, where biases are efficiently parameterized by bucketing nearby relative positions \citep{raffel2020exploring}. Building on this, ALiBi (Attention with Linear Biases) proposed a parameter-free scheme where the bias is a fixed, head-specific linear penalty proportional to the token distance, a simple yet powerful inductive bias that grants remarkable extrapolation capabilities \citep{press2021train}.

While these additive and bias-based methods are effective, a novel approach, \textbf{Rotary Positional Encoding (RoPE)} \citep{su2024roformer}, has emerged as the predominant method. It is utilized in most of the popular Large Language Models, including the LLaMA \citep{touvron2023llama}, PaLM \citep{chowdhery2023palm}, CodeGen \citep{nijkamp2022codegen}, and DeepSeek \citep{liu2024deepseek} families of models.

\textbf{Rotary Positional Encoding (RoPE).} Instead of adding signals to keys or attention logits, RoPE applies position-dependent orthogonal rotations to the query and key vectors. This elegantly encodes relative position information by leveraging the property that the inner product of two rotated vectors depends only on their original content and the relative rotation angle.

Assuming the head dimension $d_h$ is even, the block-diagonal rotation matrix $R_n \in \mathbb{R}^{d_h \times d_h}$ for a token at position $n$ is defined as
\begin{align}
    R_n = \begin{bmatrix}
        \cos(n \varphi_1) & -\sin(n \varphi_1) & 0 & \cdots & 0  & 0\\
        \sin(n \varphi_1) & \cos(n \varphi_1) & 0 & \cdots & 0  & 0\\
        0 & 0 & \cos(n \varphi_2) & \cdots & 0  & 0\\
        \vdots & \vdots & \vdots & \ddots & \vdots & \vdots \\
        0 & 0 & 0 & \cdots & \cos(n \varphi_{d_h/2}) & -\sin(n \varphi_{d_h/2}) \\
        0 & 0 & 0 & \cdots & \sin(n \varphi_{d_h/2}) & \cos(n \varphi_{d_h/2})
    \end{bmatrix},
\end{align}
with $\varphi_i = 10000^{-2(i-1)/d_h}$ for $i=1,\dots,d_h/2$. Rotations are applied per head to the head dimension $d_h$ via the standard projections $W^Q_i, W^K_i \in \mathbb{R}^{d \times d_h}$:
\begin{align}
f(x_m, x_n ~; \phi_i^{m,n}) = \bigl( x_m W^Q_i R_m \bigr) \bigl( x_n W^K_i R_n \bigr)^\top,
\end{align}
Values remain unrotated: $g(x_j ~; \pi_i) = (x_j W^V_i) (W^O_i)^\top$. 

\begin{remark}[Comparison between Absolute and Relative Encoding]
APEs provide a straightforward global anchor via additive vectors $p_m$, with sinusoids offering extrapolation structure and learned variants task adaptation, though both risk overfitting to training lengths. RPEs, by contrast, emphasize relational offsets through translation-invariant $\phi_i^{m,n}$, yielding superior equivariance and generalization -- especially in RoPE and ALiBi, which balance expressivity and efficiency. Recent advances further enhance RPE extrapolation: position interpolation (PI) rescales frequencies for longer contexts \citep{chen2023extending}, YaRN dynamically adjusts rotations \citep{peng2023yarn}, and data-adaptive methods like DAPE learn offset-specific encodings \citep{zheng2024dape}.
\end{remark}

\section{Functional Equivalence of Attention Mechanism with Positional Encoding}
\label{appendix:Functional Equivalence of Attention Mechanism with Positional Encoding}

In this section, we investigate the functional equivalence of the attention mechanism. Building on the discussion from the previous section, our focus is on how positional encodings influence the functional equivalence of the standard attention formulation. Since a comprehensive analysis of all available positional encoding schemes would be prohibitively lengthy--given the wide variety that have been proposed--we restrict our attention to the two most classical forms that continue to be widely used in contemporary Transformer architectures: sinusoidal positional encoding and rotary positional encoding.

\subsection{Attention with no Positional Encoding}
\allowdisplaybreaks

\textbf{Group Action on the Parameter Space.} Define the following group
\begin{align}
    G_{\text{Att}}(d_h,h) \coloneqq S_h \times \left(\textup{GL}(d_h) \times \textup{GL}(d_h)\right)^h.
\end{align}
This is precisely the direct product between the permutation group $S_h$ and $h$ copies of $\textup{GL}(d_h) \times \textup{GL}(d_h)$. Each group element $g \in G_{\text{Att}}(d_h,h)$ has the form $g \coloneqq (\sigma, (U_i,V_i)_{i=1}^h)$, where $\sigma \in S_h$ and $U_i, V_i \in \textup{GL}(d_h)$. The natural action of $G_{\text{Att}}(d_h,h)$ on the parameter space $\Theta(d,d_h,h)$ is defined by
\begin{align}
    g\theta \coloneqq \left(W^Q_{\sigma(i)}\cdot U_i^\top,~~ W^K_{\sigma(i)}\cdot U_i^{-1}, ~~W^V_{\sigma(i)} \cdot V_i^\top, ~~W^O_{\sigma(i)}\cdot V_i^{-1} \right)_{i=1}^h 
\end{align}
This action preserves the functionality of the MHA map: For all $\theta \in \Theta(d,d_h,h)$ and all $g \in G_{\text{Att}}(d_h,h)$,
\begin{align}
    \textup{MHA}(\cdot ~ ; \theta) = \textup{MHA}(\cdot ~ ; g\theta).
\end{align}
The contribution of the general linear group action vanishes through cancellation in the matrix multiplications, while the action induced by the permutation $\sigma$ commutes with the addition operator. Taken together, these actions characterize the full symmetry of the multihead attention mechanism, as established in the following result from \citep{tran2025equivariant}.
\begin{theorem}[See \citep{tran2025equivariant}]
    Let 
    \begin{align}
        \theta = \left(W^Q_i, W^K_i, W^V_i, W^O_i \right)_{i=1}^h \in \Theta(d,d_h,h), ~\text{ and} \quad\bar{\theta} = \left(\bar{W}^Q_i, \bar{W}^K_i, \bar{W}^V_i, \bar{W}^O_i \right)_{i=1}^{\bar{h}} \in \Theta(d,d_h,\bar{h}),
    \end{align}
    be two parameterizations of \textnormal{MHA} maps. Suppose that:
    \begin{enumerate}
        \item Every $d \times d_h$ matrix appearing in $\theta$ and $\bar{\theta}$ has full column rank $d_h$;
        \item The $h$ matrices $\{W^Q_i(W^K_i)^\top\}_{i=1}^h$ are pairwise distinct; and, the $\bar h$ matrices $\{\bar{W}^Q_i(\bar{W}^K_i)^\top\}_{i=1}^{\bar{h}}$ are pairwise distinct.
    \end{enumerate}
    If the two \textnormal{MHA} maps are identical, then $h = \bar{h}$, and there exists $g \in G_{\textup{Att}}(d_h,h)$ such that $\bar{\theta} = g\theta$.
\end{theorem}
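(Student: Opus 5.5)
The plan is to reduce the statement to a linear-independence property of attention heads, obtained from the general framework, and then recover the parameters head by head. The vanilla map is the special case of the general formulation in Equation~(\ref{main:eq_2}) with $A_i^{m,n} \coloneqq \text{sym}(W^Q_i(W^K_i)^\top)$ for $m=n$, $A_i^{m,n} \coloneqq W^Q_i(W^K_i)^\top$ for $m\neq n$, and $B_i \coloneqq W^V_i(W^O_i)^\top$. Both structural conditions (stationarity and symmetric diagonal) hold trivially.

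\textbf{Step 1: verify the hypotheses of Theorem~\ref{main:thm:main_fe_rope}.} By full rank, each $M_i \coloneqq W^Q_i(W^K_i)^\top$ has rank $d_h\ge 1$, so it is nonzero. Its symmetrization is nonzero unless $M_i$ is skew. If it is skew, the quadratic form $x M_i x^\top$ vanishes, and the self-score is $0$ rather than a nonzero matrix. I would handle this by also using the off-diagonal $A_i^{m,n}=M_i\neq 0$, which keeps the families distinct. I expect the appendix proof of Theorem~\ref{main:thm:main_fe_rope} to really use only distinctness of the families together with nonzero off-diagonal entries. If it needs more, I would simply accept this non-generic edge case.

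Pairwise distinctness of the families $\{A_i^{m,n}\}$ follows from distinctness of the $M_i$, read off at any $m\neq n$.

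\textbf{Step 2: match heads.} Suppose $\text{MHA}(\cdot;\theta)=\text{MHA}(\cdot;\bar\theta)$. Group the $h+\bar h$ heads into classes according to equal $M$ matrices. Within $\theta$ each class has at most one head, and the same holds within $\bar\theta$. Merging equal heads by summing their $B$'s turns the zero map into a general MHA with pairwise distinct families. Theorem~\ref{main:thm:main_fe_rope} then forces each merged $B$ to vanish.

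A class containing a single head gives $B_i=W^V_i(W^O_i)^\top=0$. This is impossible, since the product of two full-column-rank $d\times d_h$ matrices has rank $d_h$. So every class contains exactly one head from $\theta$ and one from $\bar\theta$. This yields a bijection $\sigma$ with $h=\bar h$, and for each $i$,
\[
\bar W^Q_i(\bar W^K_i)^\top = W^Q_{\sigma(i)}(W^K_{\sigma(i)})^\top, \qquad \bar W^V_i(\bar W^O_i)^\top = W^V_{\sigma(i)}(W^O_{\sigma(i)})^\top.
\]

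\textbf{Step 3: recover the $\textup{GL}(d_h)$ factors.} This is the standard full-rank factorization uniqueness. If $XY^\top=\bar X\bar Y^\top$ with all four matrices of rank $d_h$, then $\bar X$ and $X$ share a column space, so $\bar X = X U^\top$ for a unique $U\in\textup{GL}(d_h)$. Cancelling the left-invertible $X$ gives $\bar Y = Y U^{-1}$. Applying this to both the $Q$–$K$ and the $V$–$O$ pairs assembles $g=(\sigma,(U_i,V_i)_i)\in G_{\text{Att}}(d_h,h)$ with $\bar\theta=g\theta$.

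The main obstacle is Step 1's edge case together with the merging in Step 2. Both require that Theorem~\ref{main:thm:main_fe_rope} applies after combining heads with identical score families. I would check this carefully against the appendix statement, and fall back on the genericity remark if needed.
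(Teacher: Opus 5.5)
Your Steps~2 and~3 are correct, and they follow the route the paper uses for the RoPE analogue, Theorem~\ref{main:main_theorem_2}; the vanilla statement itself is only cited in the paper, not reproved. Merging heads with identical score families is exactly Corollary~\ref{cor:main_fe_rope}. The rank-$d_h$ property of $W^V_i(W^O_i)^\top$ forces a bijection, and full-rank factorization recovers $(U_i,V_i)$.

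\textbf{The gap is in Step~1.} You cannot discharge it by appeal to genericity, because the statement's hypotheses include the bad case. With $d_h$ even, take $W^K_i=W^Q_i\Omega$ for a skew-symmetric $\Omega\in\textup{GL}(d_h)$. Every factor has rank $d_h$, but $W^Q_i(W^K_i)^\top=-W^Q_i\Omega(W^Q_i)^\top$ is skew. Hence $A_i^{m,m}=\textup{sym}(M_i)=0$, and Theorem~\ref{main:thm:main_fe_rope} does not apply.

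Your expectation that its proof only needs nonzero off-diagonal scores is also not borne out. Step~3 of the appendix proof requires every $x_mA_i^{m,n}x_n^\top$ to be nonvanishing, including the diagonal term whenever some $t_i$ equals the query index $k$. In the general setting, Steps~4--5 can be forced to use $t=k$, since two families may differ only at offset $0$. As written, your argument proves the theorem only under the extra hypothesis that no $W^Q_i(W^K_i)^\top$ or $\bar W^Q_i(\bar W^K_i)^\top$ is skew-symmetric.

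\textbf{The repair is short and bypasses the general theorem.} In vanilla attention the score family is constant off the diagonal, so you never need the index $k$.
\begin{itemize}
\item After your merging, let $M_1,\dots,M_{H'}$ be the pairwise distinct score matrices, with merged coefficients $C_1,\dots,C_{H'}$.
\item Clear denominators in row $k$ as in Equation~(\ref{appendix:eq_4}), take $L>H'$, and fix a tuple $t=(t_1,\dots,t_{H'})$ of pairwise distinct indices, all different from $k$.
\item Its exponent $\sum_p x_kM_px_{t_p}^\top$ has the nonzero coefficient block $M_p$ on $(x_k,x_{t_p})$ for every $p$. So any tuple $t'$ with the same exponent must contain each $t_p$, and is therefore a permutation $t'=(t_{\sigma(1)},\dots,t_{\sigma(H')})$.
\item Comparing the $(x_k,x_{t_j})$ blocks gives $M_{\sigma^{-1}(j)}=M_j$ for all $j$, so $\sigma=\mathrm{id}$.
\item Lemma~\ref{appendix:lemma-1} then gives $\sum_p x_{t_p}C_p\equiv 0$. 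Since the $x_{t_p}$ are disjoint blocks of variables, every $C_p=0$.
\end{itemize}
This is exactly the conclusion you wanted from Theorem~\ref{main:thm:main_fe_rope}, now with no condition on $\textup{sym}(M_i)$ and without the partition/Hall/M\"obius machinery. With it, your Steps~2 and~3 go through unchanged.
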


\begin{remark}
    While the theorem imposes certain assumptions on the parameters of the MHA maps, it is important to emphasize that these conditions hold almost surely. For instance, a randomly chosen real matrix has full column rank with probability one, and a finite collection of real numbers is almost surely pairwise distinct. At a high level, the result may thus be interpreted as follows: after excluding a negligibly small subset of the parameter space (e.g., a set of measure zero or the complement of a dense set), the functional equivalence of MHA maps is completely characterized by the action of the symmetry group.
\end{remark}

\subsection{Sinusoidal Positional Encoding}
Consider the case of sinusoidal positional encoding (PE). In this case, the positional encoding does not alter the internal structure of the multihead attention itself; it merely applies a shift to the input sequence. Furthermore, the encoding map $\mathcal{S} \rightarrow \mathcal{S}$, where $\mathbf{x} \mapsto \mathbf{x} + \mathbf{p}$, is bijective. Consequently, the introduction of sinusoidal PE has no effect on the analysis of functional equivalence for multihead attention. In particular, the functional equivalence classes in the presence of sinusoidal PE coincide exactly with those in the case without positional encoding.

\subsection{Rotary Positional Encoding}

The multihead attention mechanism with Rotary Positional Encoding (RoPE) is defined as
\begin{align}
    &\textnormal{MHA}_{\textup{RoPE}}\left(\mathbf{x} ~; \{W^Q_i, W^K_i, W^V_i, W^O_i\}_{i=1}^h \right) = \sum_{i=1}^h\textup{softmax}\Big[x_mW_i^QR_{m-n}(W_i^K)^\top x_n^\top\Big]_{m, n \in [L]} \cdot \mathbf{x}W^V_i(W^O_i)^\top.
    \end{align}
The parameters and parameter space of $\textnormal{MHA}{\textup{RoPE}}$ coincide with those of the standard multihead attention map, namely
\begin{align}
        \theta = \left(W^Q_i, W^K_i, W^V_i, W^O_i \right)_{i=1}^h \in \Theta(d,d_h,h) = \left(\mathbb{R}^{d \times d_h}\right)^{4h}.
    \end{align}

\textbf{Group Action on the Parameter Space.} In contrast to the standard MHA maps, for $\textnormal{MHA}_{\textup{RoPE}}$, the action of $G_{\text{Att}}(d_h,h)$ on $\Theta(d,d_h,h)$ no longer preserves functionality. In particular, for $\theta \in \Theta(d,d_h,h)$ and $g \in G_{\text{Att}}(d_h,h)$, one generally has
\begin{align}
    \textnormal{MHA}_{\textup{RoPE}}(\cdot ~ ; \theta) \neq \textnormal{MHA}_{\textup{RoPE}}(\cdot ~ ; g\theta).
\end{align}
To define the symmetry group of $\textnormal{MHA}_{\textup{RoPE}}$, first, denote these following matrices
\begin{align}
    P \coloneqq \begin{bmatrix}1&0\\ 0&1\end{bmatrix},~ \text{and}  \qquad J \coloneqq \begin{bmatrix}0&-1\\ 1&0\end{bmatrix}.
\end{align}
For each $i \in [d_h/2]$, define the matrices $P_i, J_i \in \mathbb{R}^{d_h \times d_h}$ as block-diagonal matrices with $d_h/2$ consecutive $2 \times 2$ diagonal blocks: The $i$-th diagonal block of $P_i$ (resp., $J_i$) is given by $P$ (resp., $J$), while all other diagonal blocks are zero matrices:
\begin{align}
    P_i = \text{diag}(0,\ldots,0,
        \underset{i\text{-th block}}{P},0,\ldots,0), \text{ and }~~~~ ~
    J_i = \text{diag}(0,\ldots,0,
        \underset{i\text{-th block}}{J},0,\ldots,0).
\end{align}
Define the following group
\begin{align}
    \text{H}(d_h) \coloneqq \left\{U = \sum_{i=1}^{d_h/2} (a_iP_i + b_iJ_i) \in \mathbb{R}^{d_h \times d_h} ~ \colon ~ (a_i,b_i) \in \mathbb{R}^2 \setminus\{(0,0)\}  \text{ for } i \in [d_h/2]\right\},
\end{align}
and
\begin{align}
    G_{\text{RoPE}}(d_h,h) \coloneqq S_h \times \left(\text{H}(d_h) \times \textup{GL}(d_h)\right)^h.
\end{align}
The group $G_{\text{RoPE}}(d_h,h)$ is a subgroup of $G_{\text{Att}}$. The group action of $G_{\text{Att}}(d_h,h)$ on $\Theta$ restricts naturally to 
a group action of $G_{\text{RoPE}}(d_h,h)$ on $\Theta(d,d_h,h)$. 
The central observation of this section is that this action preserves 
the functionality of the $\textnormal{MHA}_{\textup{RoPE}}$ map. 
In particular, for all $\theta \in \Theta(d,d_h,h)$ and all 
$g \in G_{\text{RoPE}}(d_h,h)$, one has
\begin{align}
    \textnormal{MHA}_{\textup{RoPE}}(\cdot ~; \theta) = \textnormal{MHA}_{\textup{RoPE}}(\cdot   ~; g\theta).
\end{align}
\begin{remark}
In the next section, we present the main result of this work, which 
establishes that the group $G_{\text{RoPE}}$ completely characterizes 
the symmetry structure of the $\textnormal{MHA}_{\textup{RoPE}}$ map.
\end{remark}

\section{A General Formulation for Multihead Attention and its  Functional Equivalence}
\label{appendix:A General Formulation for Multihead Attention and its  Functional Equivalence}

\subsection{A General Formulation for Multihead Attention}

We consider a general setting where the functions $f$ and $g$ are parameterized as follows:
\begin{align}
    f(\cdot, \cdot ~; A \in \mathbb{R}^{d \times d}) &: \mathbb{R}^d \times \mathbb{R}^d \longrightarrow \mathbb{R}, 
    & (x,y) &\longmapsto x A y^\top, \\
    g(\cdot ~; B \in \mathbb{R}^{d \times d}) &: \mathbb{R}^d \longrightarrow \mathbb{R}, 
    & x &\longmapsto xB.
\end{align}

This general MHA map with $h$ heads is parameterized by two families of matrices:
\begin{align}
    \{A_i^{m,n}\}_{i=1}^h, \quad \{B_i\}_{i=1}^h, ~\text{ where each $A_i^{m,n}, B_i \in \mathbb{R}^{d \times d}$}.
\end{align}
The formulation is as follows:
\begin{align}
    &\textnormal{MHA}\left(\mathbf{x} ~;  \bigl\{ \{\phi_i^{m,n}\}_{m,n}, \pi_i\big\}_{i=1}^h \right) = \sum_{i=1}^h \textnormal{softmax} \begin{bmatrix} x_mA_i^{m,n}x_n^\top \end{bmatrix}_{m,n \in[L]} \cdot \mathbf{x}B_i.
\end{align}

We begin with two observations that facilitate the subsequent analysis.
\begin{enumerate}
    \item (\emph{Relative positional encoding assumption.})  
    For all $m,n \geq 1$ and for all shifts $k \geq 0$, we assume
    \begin{align}
        A^{m,n} = A^{m+k,n+k}.
    \end{align}
    This corresponds to the natural stationarity condition imposed by relative positional encodings.
    
    \item (\emph{Diagonal self-similarity terms are symmetric.})  
    For each $m \geq 1$, the matrix $A_i^{m,m}$ parameterizes the function $f$ that computes the similarity score of the $m$-th token with itself at the $i$-th head, namely $x_m A_i^{m,m} x_m^\top.$ Since every quadratic form corresponds uniquely to a symmetric matrix, we may, without loss of generality, symmetrize $A_i^{m,m}$:
    \begin{align}
        \text{sym}(A_i^{m,m}) \coloneqq \frac{A_i^{m,m} + (A_i^{m,m})^\top}{2}.
    \end{align}
    Henceforth, we assume that all $A_i^{m,m}$ are symmetric.
\end{enumerate}

Under this framework, we now consider the situation where two MHA maps with $h$ heads and with $\bar{h}$ heads, yield identical outputs:
\begin{align}\label{eq:multihead_equiv}
    \textnormal{MHA}\left(\mathbf{x} ~; \big\{\{A_i^{m,n}\}_{m,n}, B_i\big\}_{i=1}^h \right) = \textnormal{MHA}\left(\mathbf{x} ~  ; \big\{\{\bar{A}_i^{m,n}\big\}_{m,n},\bar{B}_i\}_{i=1}^{\bar{h}} \right).
\end{align}

Since $g(\cdot \colon B) = -\, g(\cdot \colon -B)$, Equation~(\ref{eq:multihead_equiv}) is equivalent to the assertion that a MHA map with $h+\bar{h}$ heads vanishes identically:
\begin{align}
    0 = \textnormal{MHA}\left(\mathbf{x} ~; \big\{\{A_i^{m,n}\}_{m,n}\big\}_{i=1}^h \sqcup \big\{\{\bar{A}_i^{m,n}\}_{m,n}\big\}_{i=1}^{\bar{h}}, 
     ~\{B_i\}_{i=1}^h \sqcup \{-\bar{B}_i\}_{i=1}^{\bar{h}} \right).
\end{align}
Thus, the first step in analyzing functional equivalence is to characterize precisely when a MHA map is identically zero. 
Before presenting the proof, we introduce the following notion. 
We say that two families $\{X_i\}_{i\in I}$ and $\{Y_i\}_{i\in I}$ are said to be \emph{distinct} if there exist index $i \in I$ such that $X_i \neq Y_i$ .

\subsection{Functional Equivalence of General Multihead Attention}
\label{appendix:generalattproof}

\begin{theorem}[Theorem~\ref{main:thm:main_fe_rope} in the main paper] \label{thm:main_fe_rope}
Consider the \textnormal{MHA} map with $h$ heads, parameterized by families of matrices 
$\{\{A_i^{m,n}\}_{m,n}\}_{i=1}^h \subset \mathbb{R}^{d \times d}$ 
and $\{B_i\}_{i=1}^h \subset \mathbb{R}^{d \times d}$. Assume that the $h$ attention parameter families $\{A^{m,n}_i\}_{m,n}$, $i \in [h]$ are pairwise distinct, and further that $A^{m,n}_i$ is nonzero for all $i \in [h]$ and $m,n \ge 1$. If the
$\textnormal{MHA}$ map is identically zero,
    then $B_1, \ldots, B_h$ are equal to $0$.
\end{theorem}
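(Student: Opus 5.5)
The plan is to turn the identity $\textnormal{MHA}\equiv 0$ into the vanishing of an exponential polynomial in the token entries, and then read off the $B_i$ from coefficients of exponentials that occur only once. Fix a length $L$ and a query position $m\in[L]$. Write $q_i^{m,n}(\mathbf{x}) \coloneqq x_m A_i^{m,n} x_n^\top$, a quadratic form in the entries of $\mathbf{x}$. Row $m$ of the identity then reads
\begin{align*}
\sum_{i=1}^h \frac{\sum_{n=1}^L e^{q_i^{m,n}(\mathbf{x})}\, x_n B_i}{\sum_{n'=1}^L e^{q_i^{m,n'}(\mathbf{x})}} = 0 \qquad \text{for all } \mathbf{x}\in\mathbb{R}^{L\times d}.
\end{align*}
Multiplying by $\prod_{i}\sum_{n'} e^{q_i^{m,n'}}$ gives a finite sum $\sum_{\mathbf{s}} P_{\mathbf{s}}(\mathbf{x})\, e^{Q_{\mathbf{s}}(\mathbf{x})} = 0$. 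Here $\mathbf{s}=(s_1,\dots,s_h)\in[L]^h$ records which column each head contributes, the exponent is $Q_{\mathbf{s}} = \sum_j q_j^{m,s_j}$, and the coefficients $P_{\mathbf{s}}$ are vector-valued polynomials that are linear in $\mathbf{x}$ with coefficients among the $B_i$. The linear-independence result for exponential polynomials over the field of rational functions (Appendix~\ref{appendix:section_A Result on the Linear Independence of Exponential Polynomials over the Field of Rational Functions}) then says that, for each distinct quadratic form $Q$, the sum of all $P_{\mathbf{s}}$ with $Q_{\mathbf{s}}=Q$ vanishes identically.

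Next, I would extract the $B_i$ from these coefficient identities. The term with $Q_{\mathbf{s}}=Q$ has coefficient $\sum_{i} x_{s_i} B_i$, summed over the tuples $\mathbf{s}$ realizing $Q$. I would use the nonvanishing of every $A_i^{m,n}$ and the freedom to take $L$ large to produce exponents realized by few tuples. Since $A_i^{m,n}\neq 0$, the form $x_mA_i^{m,n}x_n^\top$ genuinely involves $x_n$ when $n\neq m$. So a tuple $\mathbf{s}$ is determined by its exponent up to the relabelings permitted by coincidences $A_i^{m,n}=A_j^{m,n}$ across heads. For the same reason, the polynomial coefficient $x_{s_i}B_i$ can be separated by which token $x_{s_i}$ it carries.

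Concretely, I would choose $\mathbf{s}$ in which heads are sent to pairwise distinct columns $n\neq m$ (possible once $L>h$). The exponent then splits as a sum of bilinear forms in the disjoint pairs $(x_m,x_{s_j})$, and the $x_n$-component of the coefficient identity isolates $\sum_{i\in I} B_i$. Here $I$ is the set of heads that can occupy column $n$ in some tuple with the same exponent, i.e.\ the heads whose matrix at offset $m-n$ agrees with a prescribed one. Stationarity makes $A_i^{m,n}$ depend only on the offset $m-n$. So, by moving $m$ and the columns over a long sequence, each head can be placed at every offset, and we obtain the relations $\sum_{i\in I}B_i=0$ for every class $I$ of heads sharing the same matrix at a chosen set of offsets.

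The hard step is the combinatorics of coincident exponents. Different tuples can give the same $Q_{\mathbf{s}}$ whenever heads agree at some offsets, and the families are only assumed distinct somewhere, possibly only at large offsets. My plan is to index these coincidences by partitions of the head set: two heads lie in the same block if they agree at all offsets used. I would then invert the resulting system of block-sum relations by Möbius inversion on the partition lattice (Appendix~\ref{appendix:mobius_function}), together with the weighted-sums-over-distinct-tuples lemma (Appendix~\ref{appendix:section_A Technical Result on Weighted Sums over Distinct Tuples}). Hall's theorem (Appendix~\ref{appendix:Hall_section}) enters when checking that the required injective assignments of heads to distinct columns exist. Finally, pairwise distinctness of the families means that, once enough offsets are included, the finest partition consists of singletons, which yields $B_i=0$ for each $i$. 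I expect the bookkeeping of this inversion to be the main obstacle: it must make exponent coincidences caused by agreement at some offsets cancel exactly against each other.
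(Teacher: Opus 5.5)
Your first half coincides with the paper's: clearing the softmax denominators, expanding into a sum over tuples $\mathbf{s}\in[L]^h$, and grouping tuples with equal exponent via the linear-independence lemma. The step where you read off relations is wrong as stated, however, and the part you defer is the real content of the proof.

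For a tuple $\mathbf{s}$ with distinct columns, the tuples with the same exponent are the reassignments of heads to the columns $s_1,\dots,s_h$ in which column $s_j$ receives a head from the block of $j$ in the agreement partition at offset $m-s_j$. The $x_{s_j}$-component of the coefficient identity is therefore $\sum_i N_iB_i=0$, where $N_i$ counts the reassignments putting head $i$ in column $s_j$; it is not $\sum_{i\in I}B_i$. For instance, take $h=3$, place heads $1,2$ at offsets where all heads agree, and place head $3$ at an offset with partition $\{\{1\},\{2,3\}\}$. Then the $x_{s_1}$-component reads $2B_1+B_2+B_3=0$. Relations of the form ``$\sum_{i\in I}B_i=0$ for every class $I$ of heads agreeing at a chosen set of offsets'' are thus not available from the identities directly. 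They are the \emph{output} of the weighted-sums theorem, and you have not produced a system satisfying its hypothesis.

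The missing device is the paper's control of the weights $N_i$:
\begin{enumerate}
\item Fix a query position $k$ at which the families $\{A_i^{k,n}\}_{n\ge1}$ are already distinct.
\item Take a partition $\{U_p\}$ of the heads realized at infinitely many columns (the set $S$). Heads sitting at columns of $S$ can then only be exchanged within their blocks.
\item Choose a minimal family of separating columns $t_1,\dots,t_\gamma$ with $\bigcap_i V^{t_i}=\{1\}$, where $V^{t}=U^{t}(1)\cap U_1$.
\item Use minimality and Hall's theorem to obtain distinct representatives $v_i\in V^{t_i}$.
\item For each nonempty $T\subseteq[\gamma]$, place $v_i$ at column $t_i$ for $i\in T$ and every other head at its own column of $S$.
\end{enumerate}
Only then do the weights factor as $(|U_1|-|T|)!\prod_{p\ge2}|U_p|!\cdot|F_{T,i,v}|$, where $|F_{T,i,v}|$ is the number of injective tuples in $\prod_{j\in T}V^{t_j}$ whose $i$-th entry is $v$. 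This gives $\sum_{v\in V^{t_i}}|F_{T,i,v}|B_v=0$ for all such $T$ and $i\in T$, which is exactly what the weighted-sums theorem needs to conclude $B_1=\sum_{v\in\bigcap_iV^{t_i}}B_v=0$. Note that the M\"obius inversion inside that theorem runs over the partition lattice of $T$ (coincidence patterns among the entries of a tuple), not over agreement partitions of the heads as you propose.

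There is also a second, independent gap. You only use columns $n\neq m$, so offset $0$ is never probed and the hypothesis that each $A_i^{m,m}$ is symmetric is never used. The theorem is false without that convention. Two heads with equal off-diagonal matrices whose diagonal matrices differ by a nonzero antisymmetric matrix yield identical attention matrices, so $B_2=-B_1\neq0$ gives an identically zero map. Correspondingly, for $h=2$ heads that agree at every nonzero offset but have distinct symmetric diagonal matrices, the only relation your columns can produce is $B_1+B_2=0$. You must allow $s_j=m$ and, as the paper does, use that $x_mA_i^{m,m}x_m^\top$ is a nonzero quadratic form (because $A_i^{m,m}$ is symmetric and nonzero). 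This shows that any tuple with the same exponent as $\mathbf{s}$ is still a permutation of $\mathbf{s}$, and it is where the symmetry hypothesis is indispensable.
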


\begin{proof}
To enhance clarity, we begin by outlining the main steps of the proof at a high level:

\begin{enumerate}
    \item \textbf{Preliminary setup.}  
    We first record some initial observations and introduce the necessary notation in preparation for the proof.  
    In particular, we note that it suffices to show that at least one of the coefficients $B_i$ must vanish.  
    Once this is established, symmetry in the construction allows us to conclude that in fact all $B_i$ must be equal to zero, thereby proving the theorem.

    \item \textbf{Reformulation as an exponential polynomial.}  
    We show that, the MHA that is identically zero leads to
    \begin{align}
        0 = \sum_{(t_1, \ldots, t_h) \in [L]^h} 
            \exp\!\left(\sum_{i=1}^h x_k A^{k,t_i}_i x_{t_i}^\top \right) 
            \left( \sum_{i=1}^h x_{t_i}B_i \right).
    \end{align}
    This identity arises naturally from a double-counting argument.  
    The resulting expression has the structure of an exponential polynomial that is identically zero.  
    To analyze such expressions, we invoke the linear independence results for exponential functions over rational fields, which allow us to isolate relations among the coefficients.

    \item \textbf{Structural constraints on the $B_i$.}  
    By applying the above linear independence principle, we identify a fundamental structural constraint on the coefficients $B_i$.  
    Specifically, the symmetry conditions imposed by the $A^{k,t}_i$ on admissible permutations force the $B_i$ to satisfy a family of linear relations indexed by $i \in [h]$.  
    These constraints form the core of the argument: they reduce the problem of analyzing a complicated exponential sum to verifying the consistency of a system of linear equations in the $B_i$.

    \item \textbf{Partition-based refinement.}  
    We next examine the equalities that occur within the sets of $h$ elements $\{A^{k,t}_i\}_{i=1}^h$.  
    This step is preparatory: it shows that the relations identified in the previous step are not only necessary but also sufficient to deduce that at least one $B_i$ must vanish.  
    The analysis exploits the partition structure $\{U_p\}$, together with the existence of carefully chosen subsets $V^{t_j}$, to sharpen the constraint and isolate specific indices.

    \item \textbf{Conclusion.}  
    Finally, we combine the above ingredients to conclude the proof.  
    The linear relations obtained in \textbf{Step~3}, when applied to the partition refinement of \textbf{Step~4}, imply that one of the $B_i$’s must equal zero.  
    By the initial reduction in \textbf{Step~1}, this suffices to deduce that in fact all $B_i = 0$.  
    This completes the proof of the theorem.
\end{enumerate}

We proceed to present the complete details of the proof.

\textbf{Step 1.}

Since the MHA is identically zero, for every $k \in [L]$, one has
    \begin{align} \label{appendix:main_eq_attrope_detail}
        \sum_{i=1}^h \left(\sum_{j=1}^L \dfrac{\text{exp}(x_kA_i^{k,j}x_j^\top)}{\sum_{q=1}^L \text{exp}(x_kA_i^{k,q}x_q^\top)} \cdot x_jB_i \right) = 0.
    \end{align}

Since the $h$ families $\{A^{m,n}_1\}_{m,n}, \{A^{m,n}_2\}_{m,n}, \ldots, \{A^{m,n}_h\}_{m,n}$ 
are pairwise distinct, and for each $i$, $A^{m,n}_i$ depends only on the difference $(m-n)$, one can choose a sufficiently large $L$ and an index $k$ such that the $h$ families
$\{A^{k,n}_1\}_{n \geq 1}$,  $\{A^{k,n}_2\}_{n \geq 1}$,  $\ldots$, $\{A^{k,n}_h\}_{n \geq 1}$
are pairwise distinct. For the remainder of the proof, we fix such a $k$ and consider all $L \geq k$.  

By induction, it suffices to establish that at least one of $B_1, \ldots, B_h$ vanishes. Indeed, if this holds, then the problem reduces to a MHA map with fewer heads, and repeating the argument shows that all $B_1, \ldots, B_h$ must be zero. Consequently, our goal is to prove that there exists at least one index $1 \leq i \leq h$ such that $B_i = 0$.

\medskip

\textbf{Step 2.} 

First, rewrite Equation~(\ref{appendix:main_eq_attrope_detail}) in a more convenient form.  
By multiplying out all denominators in Equation~(\ref{appendix:main_eq_attrope_detail}), we obtain
\begin{align} \label{appendix:eq_3}
    \sum_{i=1}^h \left(\sum_{j=1}^L \exp\!\left(x_k A_i^{k,j} x_j^\top\right) 
    \cdot \prod_{p \in [h]\setminus\{i\}} \left(\sum_{q=1}^L \exp\!\left(x_k A_p^{k,q} x_q^\top\right)\right) 
    \cdot x_j B_i \right) = 0.
\end{align}
We now observe that the LHS of Equation~(\ref{appendix:eq_3}) can be re-expressed as
\begin{align}\label{appendix:eq_2}
    &\sum_{i=1}^h \left(\sum_{j=1}^L \exp\!\left(x_k A_i^{k,j} x_j^\top\right) 
    \cdot \prod_{p \in [h]\setminus\{i\}} \left(\sum_{q=1}^L \exp\!\left(x_k A_p^{k,q} x_q^\top\right)\right) 
    \cdot x_j B_i \right) \notag \\
    &\hspace{180pt} = \sum_{(t_1, \ldots, t_h) \in [L]^h} 
    \exp\!\left(\sum_{i=1}^h x_k A_i^{k,t_i} x_{t_i}^\top\right) 
    \left(\sum_{i=1}^h x_{t_i} B_i \right).
\end{align}
To verify Equation~(\ref{appendix:eq_2}), define for $i \in [h]$ and $j \in [L]$,
\begin{align}
    a_{i,j} \coloneqq \exp\!\left(x_k A_i^{k,j} x_j^\top\right), 
    \qquad 
    b_{i,j} \coloneqq x_j B_i.
\end{align}
In this notation, the claimed identity becomes
\begin{align}\label{appendix:eq_1}
    \sum_{i=1}^{h}\left(\sum_{j=1}^{L} a_{i,j} \;
    \prod_{p \in [h]\setminus\{i\}} \;\sum_{q=1}^{L} a_{p,q}\;\cdot b_{i,j}\right)
    = \sum_{(t_1,\dots,t_h)\in [L]^h}\left( \prod_{i=1}^{h} a_{i,t_i}\right)\left(\sum_{i=1}^{h} b_{i,t_i}\right).
\end{align}
For $(i,\mathbf{t})\in [h] \times [L]^h$, define the weight
\begin{align}
    w(i,\mathbf t)\coloneqq \Big(\prod_{p=1}^{h} a_{p,t_p}\Big)\, b_{i,t_i}.
\end{align}
We will compute the following quantity in two ways,
\begin{align}
    \sum_{(i,\mathbf{t})\in [h] \times [L]^h} w(i,\mathbf{t}).
\end{align}
\textit{Group by the distinguished index $i$.} Fix $i\in [h]$. Then
\begin{align}
\sum_{\mathbf t\in [L]^h} w(i,\mathbf t)
&=\sum_{t_i=1}^{L}\ \sum_{(t_p)_{p\neq i}\in [L]^{h-1}}
\Big(\prod_{p=1}^{h} a_{p,t_p}\Big)\, b_{i,t_i} \notag \\& =\sum_{t_i=1}^{L} a_{i,t_i}\,b_{i,t_i}\;
\sum_{(t_p)_{p\neq i}\in [L]^{h-1}}\ \prod_{p\neq i} a_{p,t_p} 
=\sum_{t_i=1}^{L} a_{i,t_i}\,b_{i,t_i} ~ \prod_{p\neq i}\ \sum_{q=1}^{L} a_{p,q},
\end{align}
The last equation comes from expanding the product enumerates every choice of $(t_p)_{p\neq i}$ exactly once.
Hence
\begin{align}
    \sum_{\mathbf t\in [L]^h} w(i,\mathbf t)
=\sum_{j=1}^{L} a_{i,j}\;\Big(\prod_{p\neq i}\ \sum_{q=1}^{L} a_{p,q}\Big)\,b_{i,j}.
\end{align}
Summing over $i=1,\dots,h$ yields the LHS of Equation~(\ref{appendix:eq_1}).

\textit{Group by the tuple $\mathbf t$.}  Fix $\mathbf t=(t_1,\dots,t_h)\in [L]^h$. Then
\begin{align}
    \sum_{i=1}^{h} w(i,\mathbf t)
=\sum_{i=1}^{h}\Big(\prod_{p=1}^{h} a_{p,t_p}\Big)\, b_{i,t_i}
=\Big(\prod_{p=1}^{h} a_{p,t_p}\Big)\,\Big(\sum_{i=1}^{h} b_{i,t_i}\Big).
\end{align}
Summing over all $\mathbf t$ yields the RHS of Equation~(\ref{appendix:eq_1}).

In conclusion, both groupings compute the same total $\sum_{(i,\mathbf t)\in\Omega} w(i,\mathbf t)$,
so Equation~(\ref{appendix:eq_1}) holds. Substituting back
\(
a_{i,j}=\text{exp}(x_k A_i^{k,j}x_j^\top),\;
b_{i,j}=x_j B_i
\)
recovers the original identity. From Equation~(\ref{appendix:eq_3}) and Equation~(\ref{appendix:eq_2}), we conclude that
\begin{align} \label{appendix:eq_4}
    0 = \sum_{(t_1, \ldots, t_h) \in [L]^h} \left[\text{exp}\left(\sum_{i=1}^h x_kA^{k,t_i}_ix_{t_i}^\top\right) \left( \sum_{i=1}^h x_{t_i}B_i\right)\right].
\end{align}
Note that in Equation~(\ref{appendix:eq_4}), both sides represent vectors in $\mathbb{R}^d$.  
If we examine a single coordinate of this vector, the identity remains valid by restricting each $B_i$ to the corresponding column indexed by that coordinate.  
Hence, without loss of generality, we may interpret Equation~(\ref{appendix:eq_4}) under the convention that each $B_i$ is regarded as a column vector in $\mathbb{R}^d$ corresponding to the chosen coordinate.

\medskip

\textbf{Step 3.} 

For $(t_1, \ldots, t_h) \in \mathbb{N}^h$, define
\begin{align}
    &g_{(t_1, \ldots, t_h)}(\mathbf{x}) \coloneqq \sum_{i=1}^h x_kA^{k,t_i}_ix_{t_i}^\top &&\in \mathbb{R}[\mathbf{x}],\\
    &h_{(t_1, \ldots, t_h)}(\mathbf{x}) \coloneqq \sum_{i=1}^h x_{t_i}B_i &&\in \mathbb{R}[\mathbf{x}], \\
    &f_{(t_1, \ldots, t_h)}(\mathbf{x}) \coloneqq \exp\!\left(g_{k,(t_1, \ldots, t_h)}(\mathbf{x}) \right)h_{(t_1, \ldots, t_h)}(\mathbf{x}). &&
\end{align}
Then Equation~(\ref{appendix:eq_4}) can be rewritten as
\begin{align} \label{appendix:eq_5}
    0 
    = \sum_{(t_1, \ldots, t_h) \in [L]^h} f_{(t_1, \ldots, t_h)}(\mathbf{x})= \sum_{(t_1, \ldots, t_h) \in [L]^h} 
       \exp\!\left(g_{(t_1, \ldots, t_h)}(\mathbf{x}) \right)\,
       h_{(t_1, \ldots, t_h)}(\mathbf{x}).
\end{align}
Observe that each polynomial $g_{(t_1, \ldots, t_h)} \in \mathbb{R}[\mathbf{x}]$ has constant term equal to zero.  
By Lemma~\ref{appendix:lemma-1}, Equation~(\ref{appendix:eq_5}) implies that, for each $g \in \mathbb{R}[\mathbf{x}]$, grouping together all indices $(t_1, \ldots, t_h)$ such that $g_{(t_1,\ldots,t_h)} = g$ yields
\begin{align} \allowdisplaybreaks
    0 = \sum_{(t_1,\ldots,t_h)\in [L]^h~ \colon ~g_{(t_1,\ldots,t_h)}=g} 
        h_{(t_1, \ldots, t_h)}(\mathbf{x}).
\end{align}
One has the following observation. Consider an arbitrary tuple $(t_1,\ldots,t_h)\in [L]^h$ such that $t_1,\ldots,t_h$ are pairwise distinct.  
Assume that there exists another tuple $(t'_1,\ldots,t'_h)\in [L]^h$ satisfying $g_{(t_1,\ldots,t_h)} = g_{(t'_1,\ldots,t'_h)}$. 
Since all $A^{m,n}_i$ are nonzero and $A^{m,m}_i$ is symmetric, it follows that every polynomial of the form $x_m A^{m,n}_i x_n$ is nonvanishing.  
Consequently, in $g_{k,(t_1,\ldots,t_h)}$, for each $i \in [h]$, there must exist polynomial terms that involve at least one entry of $x_{t_i}$.  
(This requirement that the $t_i$’s be pairwise distinct is crucial, as it prevents possible cancellation of terms.)  
Hence, for each $i \in [h]$, there exists $j \in [h]$ such that $t_i = t'_j$.  
Moreover, since the $t_i$’s are pairwise distinct, it follows that $(t'_1,\ldots,t'_h)$ must be a \textit{permutation} of $(t_1,\ldots,t_h)$.  
From Equation~(\ref{appendix:eq_5}) and Lemma~\ref{appendix:lemma-1}, one therefore obtains
\begin{align}
    0 = \sum_{\sigma \in S_h} h_{(t_{\sigma(1)}, \ldots, t_{\sigma(h)})}(\mathbf{x}).
\end{align}
It should be emphasized, however, that the condition $(t'_1,\ldots,t'_h)$ being a permutation of $(t_1,\ldots,t_h)$ is not sufficient, in itself, to guarantee that $g_{(t_1,\ldots,t_h)} = g_{(t'_1,\ldots,t'_h)}$.  
To examine this more closely, let $(t'_1,\ldots,t'_h) = (t_{\sigma(1)}, \ldots, t_{\sigma(h)})$ for some $\sigma \in S_h$.  From the assumption $g_{(t_1,\ldots,t_h)} = g_{(t'_1,\ldots,t'_h)}$, we have
\begin{align}
    \sum_{i=1}^h x_k A^{k,t_i}_i x_{t_i}^\top 
    = \sum_{i=1}^h x_k A^{k,t_{\sigma(i)}}_i x_{t_{\sigma(i)}}^\top, ~\text{  is equivalent to }~ \sum_{i=1}^h x_k A^{k,t_i}_i x_{t_i}^\top 
    = \sum_{i=1}^h x_k A^{k,t_i}_{\sigma^{-1}(i)} x_{t_i}^\top,
\end{align}
which in turn is equivalent to requiring that $A^{k,t_i}_i = A^{k,t_i}_{\sigma^{-1}(i)}$ for all $i \in [h]$.  
This shows explicitly the additional algebraic condition that must hold in order for two permutations to yield the same polynomial $g$.  
Note that this constitutes a sufficient condition on $\sigma \in S_h$ to ensure that $g_{(t_1,\ldots,t_h)} = g_{(t'_1,\ldots,t'_h)}$ whenever $(t'_1,\ldots,t'_h) = (t_{\sigma(1)}, \ldots, t_{\sigma(h)})$.  

Accordingly, one deduces
\begin{align}
        0 &= \sum_{\sigma \in S_h ~ \colon ~ A^{k,t_j}_j = A^{k,t_j}_{\sigma^{-1}(j)} ,~ \forall j \in [h]} h_{(t_{\sigma(1)}, \ldots, t_{\sigma(h)})}(\mathbf{x}) = \sum_{\sigma \in S_h ~ \colon ~ A^{k,t_j}_j = A^{k,t_j}_{\sigma^{-1}(j)},~ \forall j \in [h]} \left( \sum_{i=1}^h x_{t_{\sigma(i)}}B_i\right) \notag\\
        &= \sum_{\sigma \in S_h ~ \colon ~ A^{k,t_j}_j = A^{k,t_j}_{\sigma^{-1}(j)} ,~ \forall j \in [h]} \left( \sum_{i=1}^h x_{t_i}B_{\sigma^{-1}(i)}\right) = \sum_{\sigma \in S_h ~ \colon ~ A^{k,t_j}_j = A^{k,t_j}_{\sigma(j)} ,~ \forall j \in [h]} \left( \sum_{i=1}^h x_{t_i}B_{\sigma(i)}\right) \notag \\
        &= \sum_{i=1}^h \left(x_{t_i} \cdot \sum_{\sigma \in S_h ~ \colon ~ A^{k,t_j}_j = A^{k,t_j}_{\sigma(j)} ,~ \forall j \in [h]}B_{\sigma(i)}\right).
\end{align}
Thus, since the entries $t_1, \ldots, t_h$ are pairwise distinct, the monomials $x_{t_i}$ are linearly independent.  
It therefore follows that, for each $i \in [h]$, one must have
\begin{align} \label{appendix:eq_6}
    0 = \sum_{\sigma \in S_h ~ \colon ~ A^{k,t_j}_j = A^{k,t_j}_{\sigma(j)} \; \forall j \in [h]}B_{\sigma(i)}.
\end{align}
Equation~(\ref{appendix:eq_6}) encapsulates the key structural constraint on the coefficients $B_i$.  
It shows that, once the $A_i^{k,t}$’s impose symmetry conditions on admissible permutations, the $B_i$’s must satisfy a family of linear relations indexed by $i \in [h]$.  
This relation will serve as the main tool in subsequent steps, where we will exploit the partition structure of the $U_p$’s to force specific $B_i$’s to vanish.

\medskip

\textbf{Step 4.} 

For each $t \in \mathbb{N}$, define $\{U^{t}_p\}_{p=1}^{\alpha_t}$ to be the unique partition of $[h]$ such that, for $i,j \in [h]$, one has $A_i^{k,t} = A_j^{k,t}$ if and only if $i$ and $j$ belong to the same set $U^{t}_p$.  
Since the number of possible partitions of $\{1, \ldots, h\}$ is finite, there exists a partition $\{U_p\}_{p=1}^{\alpha}$ such that the equality $\{U^{t}_p\}_{p=1}^{\alpha_t} = \{U_p\}_{p=1}^{\alpha}$
holds for infinitely many values of $t \in \mathbb{N}$.  
Let $S$ denote the set of all such positive integers $t$.  By reindexing the head indices if necessary, we may assume that $U_1 = \{1, \ldots, m\}$. Next, observe that since the $h$ families
\begin{align}
    \{A^{k,n}_1\}_{n \ge 1}, \quad \{A^{k,n}_2\}_{n \ge 1}, \quad \ldots, \quad \{A^{k,n}_h\}_{n \ge 1}
\end{align} 
are pairwise distinct, there exists a positive integer $K$ such that the truncated sequences
\begin{align}
    \{A^{k,n}_1\}_{n=1}^K, \quad \{A^{k,n}_2\}_{n=1}^K, \quad \ldots, \quad \{A^{k,n}_h\}_{n=1}^K
\end{align}
are already pairwise distinct.  
We then discard all integers $t \leq K$ from the set $S$, and by a slight abuse of notation, continue to denote the resulting subset by the same symbol $S$.  Finally, for each partition $\{U^{t}_p\}_{p=1}^{\alpha_t}$, we denote by $U^t(1)$ the unique set that contains the index $1$.

\medskip

\textit{(i) The intersection of $K$ sets $U^1(1), U^2(1), \ldots, U^K(1)$ is precisely $\{1\}$, i.e. $U^1(1) \cap U^2(1) \cap \cdots \cap U^K(1) = \{1\}$.}
\medskip

Indeed, since $1 \in U^t(1)$ for all $t = 1, \ldots, K$, it follows immediately that
\begin{align}
    1 \in U^1(1) \cap U^2(1) \cap \cdots \cap U^K(1).
\end{align}
Suppose, for the sake of contradiction, that there exists some $i \in [h]$ with $i > 1$ such that
\begin{align}
    i \in U^1(1) \cap U^2(1) \cap \cdots \cap U^K(1).
\end{align}
By the construction of $U^t(1)$, this assumption implies that $A_1^{k,t} = A_i^{k,t}$ for all $t = 1, \ldots, K$.  
Equivalently, the infinite sequences $\{A_1^{k,n}\}_{n \ge 1}$ and $\{A_i^{k,n}\}_{n \ge 1}$ coincide.  
This, however, contradicts the fact that their finite truncations
\begin{align}
    \{A^{k,n}_1\}_{n = 1}^K, \quad \{A^{k,n}_2\}_{n = 1}^K, \quad \ldots, \quad \{A^{k,n}_h\}_{n = 1}^K
\end{align}
are pairwise distinct by the choice of $K$. Therefore, no such $i > 1$ can exist. The only common element across all $U^1(1), \ldots, U^K(1)$ is the index $1$, which establishes the claim.

\medskip

\textit{(ii) For each $t = 1, \ldots, K$, define the set} $V^t \coloneqq U^t(1)\cap \{1, 2, \ldots, m\} \subset \{1, 2, \ldots, m\}$. \textit{Then, one has their intersection is precisely $\{1\}$, i.e.,} $V^1 \cap V^2 \cap \cdots \cap V^K = \{1\}$.

\medskip

Indeed, one computes
\begin{align}
    V^1 \cap V^2 \cap \cdots \cap V^K 
    = \bigcap_{t=1}^K \big(U^t(1) \cap \{1, \ldots, m\} \big) 
    = \bigcap_{t=1}^K U^t(1) \;\cap\; \{1, \ldots, m\}
    = \{1\} \cap \{1, \ldots, m\}
    = \{1\}.
\end{align}
\textit{(iii) Among the $K$ sets $V^1, \ldots, V^K$, there exists a positive integer $\gamma < m$ such that one can select $\gamma$ sets, say $V^{t_1}, \ldots, V^{t_\gamma}$ with $1 \le t_1 < t_2 < \cdots < t_\gamma \le K$, satisfying the following property: the intersection of these $\gamma$ sets is $\{1\}$, whereas the intersection of any $\gamma-1$ among them is no longer $\{1\}$.}

\medskip

To prove this, let $\gamma$ be the smallest positive integer such that there exist $\gamma$ sets among $V^1, \ldots, V^K$ whose intersection equals $\{1\}$.  
The existence of such a $\gamma$ is guaranteed since the intersection of all $K$ sets is $\{1\}$.  
Denote these $\gamma$ sets by $V^{t_1}, \ldots, V^{t_\gamma}$.  
By the minimality of $\gamma$, if one removes any single set from $\{V^{t_1}, \ldots, V^{t_\gamma}\}$, the intersection of the remaining $\gamma-1$ sets cannot be $\{1\}$. It remains to show that $\gamma < m$.  
By minimality, it suffices to establish the existence of fewer than $m$ sets among $\{V^1, \ldots, V^K\}$ whose intersection is $\{1\}$.  
Since $V^1 \cap V^2 \cap \cdots \cap V^K = \{1\}$,
for each $i \in \{2, \ldots, m\}$ there must exist at least one set among $V^1, \ldots, V^K$ that does not contain $i$.  
As there are $m-1$ such indices $i$, we can collect at most $m-1$ sets that collectively exclude all of these elements.  
Consequently, the intersection of these at most $m-1$ sets is $\{1\}$, which proves $\gamma \leq m-1 < m$. This completes the proof. The argument is essentially a pigeonhole-type principle: since every element $i \in \{2, \ldots, m\}$ must be excluded by at least one set, and there are $m-1$ such elements in total, at most $m-1$ sets suffice to ensure that all of them are removed, leaving only $1$ in the intersection.

\medskip

\textit{(iv) In those $\gamma$ sets $V^{t_1}, \ldots, V^{t_\gamma}$ in (iii), for each $i \in [\gamma]$, one can choose $v_i \in V^{t_i}$ such that $v_1, \ldots, v_\gamma$ are pairwise distinct.}

\medskip

This is a standard application of the Hall Marriage Theorem (see Appendix~\ref{appendix:Hall_section}). For convenience, rename $V^{t_i}$ as $W^i$ for $i \in [\gamma]$. For each $k \in \{1,\ldots,\gamma\}$, by assumption, we may choose
\begin{align}
    b_k \in \Bigl( \bigcap_{i \ne k} W^{i} \Bigr) \setminus \{1\}.
\end{align}
By construction, $b_k \neq 1$, and $b_k \in W^{i}$ for all $i \ne k$. Moreover, $b_k \notin W^{k}$, since otherwise $b_k$ would belong to $\bigcap_{i=1}^{\gamma} W^{i} = \{1\}$, a contradiction. Let $B = \{b_1,\ldots,b_\gamma\}$. Consider the bipartite graph with left vertices $\{W^{1},\ldots,W^{\gamma}\}$ and right vertices $\{1\}\cup B \subseteq \{1,\ldots,m\}$, with an edge $W^{i} \leftrightarrow x$ whenever $x \in W^{i}$. A system of distinct representatives (SDR) of size $\gamma$ in this graph yields the desired elements $v_i \in W^{i}$. By Hall's theorem, it suffices to show that for every nonempty $J \subseteq \{1,\ldots,\gamma\}$, the neighborhood $N(J)$ satisfies $|N(J)| \ge |J|$.

\begin{itemize}
    \item If $|J|=1$, say $J = \{i\}$, then $1 \in W^{i}$. Furthermore, for every $k \ne i$ we have $b_k \in W^{i}$. Thus
\begin{align}
    |N(J)| \ge 1 + (\gamma-1) = \gamma \ge |J|.
\end{align}
\item If $|J|\ge 2$, fix $k \in \{1,\ldots,\gamma\}$.
\begin{itemize}
    \item If $k \notin J$, then $b_k \in W^{i}$ for every $i \in J$, hence $b_k \in N(J)$.
    \item If $k \in J$, pick any $j \in J \setminus \{k\}$. Since $b_k \in W^{j}$, it follows that $b_k \in N(J)$.
\end{itemize}
Thus every $b_k$ belongs to $N(J)$, and clearly $1 \in N(J)$. Hence
\begin{align}
    |N(J)| \ge |B| + 1 = \gamma + 1 \ge |J|.
\end{align}
\end{itemize}
Since Hall’s condition is satisfied, there exists a matching that assigns to each $W^{i}$ a distinct element of $\{1\}\cup B$ contained in $W^{i}$.  
These assigned elements provide the required representatives $v_i \in W^{i}$, which are pairwise distinct.

\medskip

\textbf{Step 5.}

To deliver the result of this part, we now employ the token indices $t_1, \ldots, t_\gamma$ identified in \textit{(iii)} and \textit{(iv)} of \textbf{Step 4}, together with the token indices in the set $S$ also obtained in \textbf{Step 4}.  
We recall the properties of these token indices that will be used: 
\begin{enumerate}
    \item For all $t \in S$, the partition $\{U^{t}_p\}_{p=1}^{\alpha_t}$, defined in \textbf{Step 4}, coincides with $\{U_p\}_{p=1}^{\alpha}$. In particular, by reindexing the head indices, we may assume $U_1 = \{1, \ldots, m\}$. This guarantees that the structure of the partition is stable across infinitely many $t \in S$, providing us with a consistent reference framework.
    \item For all $t_i$ with $i \in [\gamma]$, where $\gamma < m$, recall that $V^{t_i} = U^{t_i}(1) \cap \{1, \ldots, m\}$. One can select $\gamma$ head indices $v_i \in V^{t_i}$ such that they are pairwise distinct. This property will be crucial later when we need to ensure that certain representatives can be chosen without overlap.
\end{enumerate}

We also recall the main result from \textbf{Step 3}, namely Equation~(\ref{appendix:eq_6}): for any $(s_1, \ldots, s_h) \in [L]^h$ with pairwise distinct entries, and for each $i \in [h]$, one has
\begin{align} \label{appendix:eq_7}
    0 = \sum_{\sigma \in S_h \; : \; A^{k,s_j}_j = A^{k,s_j}_{\sigma(j)} \; \forall j \in [h]} B_{\sigma(i)}.
\end{align}
This identity is the foundation of the argument: Under the given matching condition on the coefficients $A^{k,s_j}_j$, a nontrivial linear combination of the $B_i$’s must vanish. Now, in Equation~(\ref{appendix:eq_7}), let us consider $(s_1, \ldots, s_h) \in [L]^h$ constructed as follows.  First, observe that the index set $\{1, \ldots, h\}$ can be decomposed into three disjoint parts:
\begin{align}
    \{1, \ldots, h\} 
    = \{v_1, \ldots, v_\gamma\} \;\sqcup\; \big(\{1, \ldots, m\} \setminus \{v_1, \ldots, v_\gamma\}\big) \;\sqcup\; \big(U_2 \sqcup U_3 \sqcup \cdots \sqcup U_\alpha\big).
\end{align}
The first component corresponds to the specially chosen distinct representatives $v_i$, the second to the remaining elements of $U_1$, and the third to all indices belonging to the other partition classes $U_2, \ldots, U_\alpha$. Now fix a subset $T \subset [\gamma]$. Define $(s_1, \ldots, s_h) \in [L]^h$ by setting, for each $j \in [h]$,
\begin{enumerate}
    \item If $j = v_i$ for some $i \in T$, then set $s_j = s_{v_i} = t_i$. In other words, the positions corresponding to $T$ are aligned with the distinguished token indices $t_i$.
    \item If $j \in \{1, \ldots, m\} \setminus \{v_i : i \in T\}$, take $s_j$ to be an arbitrary element of $S$. This ensures consistency with the partition structure while leaving us flexibility in the assignment.
    \item If $j \in U_p$ for some $2 \le p \le \alpha$, then take $s_j$ to be an arbitrary element of $S$. Again, this choice respects the partitioning of indices into classes $U_p$.
\end{enumerate}

For the chosen $(s_1, \ldots, s_h) \in [L]^h$, we analyze which $\sigma \in S_h$ satisfy the condition $A^{k,s_j}_j = A^{k,s_j}_{\sigma(j)}$ for all $j \in [h]$.  
We make the following observations, case by case:
\begin{enumerate}
    \item For $j \in U_2 \sqcup U_3 \sqcup \cdots \sqcup U_\alpha$, say $j \in U_p$ with $2 \le p \le \alpha$, the condition $A^{k,s_j}_j = A^{k,s_j}_{\sigma(j)}$ implies $\sigma(j) \in U_p$. Hence
    \begin{align}
        \sigma(U_2 \sqcup U_3 \sqcup \cdots \sqcup U_\alpha) 
        = U_2 \sqcup U_3 \sqcup \cdots \sqcup U_\alpha,
    \end{align}
    and consequently $\sigma(U_1) = U_1$. In particular, if $j \in U_1$, then $\sigma(j) \in U_1$.
    \item For $j \in \{1, \ldots, m\} \setminus \{v_i : i \in T\}$, if $A^{k,s_j}_j = A^{k,s_j}_{\sigma(j)}$, then necessarily $\sigma(j) \in U_1 = \{1, \ldots, m\}$. Thus the entire set $U_1$ is stable under $\sigma$, but the specific images of these indices may vary within $U_1$.
    \item For $j = v_i$ with $i \in T$, if $A^{k,s_j}_j = A^{k,s_j}_{\sigma(j)}$, then $\sigma(j) \in U^{s_{v_i}}(1) = U^{t_i}(1)$. From the previous point, we also know $\sigma(j) \in U_1$. Taken together, these conditions imply that $\sigma(j) \in V^{t_i} = U^{t_i}(1) \cap U_1$. In other words, the image of $v_i$ under $\sigma$ is constrained to lie inside the restricted set $V^{t_i}$.
\end{enumerate}

Therefore, specifying a $\sigma \in S_h$ that satisfies $A^{k,s_j}_j = A^{k,s_j}_{\sigma(j)}$ for all $j \in [h]$ is equivalent to:
\begin{enumerate}
    \item For each $j = v_i$ with $i \in T$, choosing $\sigma(j) = \sigma(v_i) \in V^{t_i}$,
    \item For each $j \in \{1, \ldots, m\} \setminus \{v_i : i \in T\}$, choosing $\sigma(j) \in U_1 \setminus \{\sigma(v_i) : i \in T\}$ arbitrarily,
    \item For each $j \in U_p$ with $2 \le p \le \alpha$, choosing $\sigma(j) \in U_p$.
\end{enumerate}

In conclusion, the structure of admissible permutations $\sigma$ in Equation~(\ref{appendix:eq_7}) is fully determined by the subset $T \subset [\gamma]$ and the representatives $v_i \in V^{t_i}$ chosen in \textbf{Step 4}.  
This description clarifies how the constraints arising from the partition classes $U_p$ and the distinguished representatives $v_i$ together restrict the allowed form of $\sigma$.  
Consequently, the sum in Equation~(\ref{appendix:eq_7}) can be partitioned into contributions indexed by subsets $T \subset [\gamma]$, which will be the key mechanism for deriving vanishing conditions on the $B_i$’s in the subsequent step.

With these observations in hand, we now perform explicit computations.  
Fix one choice of $(s_1, \ldots, s_h) \in [L]^h$ satisfying the above construction, and in Equation~(\ref{appendix:eq_7}) take $i = v_i$ for some $i \in T$.  
The equation then specializes to
\begin{align}
    0 &= \sum_{\sigma \in S_h \, : \, A^{k,t_j}_j = A^{k,t_j}_{\sigma(j)} \; \forall j \in [h]} B_{\sigma(v_i)} \notag \\
      &= \sum_{v \in V^{t_i}} B_{v} \cdot \Big(\text{the number of $h$-tuples in the Cartesian product} \prod_{j \in T} V^{t_j} \times U_1^{\,m - |T|} \times \prod_{p=2}^\alpha U_p^{\,|U_p|}, \notag \\
    &\hspace{45pt} \text{such that all $h$ entries are pairwise distinct, and the coordinate corresponding to $V^{t_i}$ is fixed to be $v$} \Big).
\end{align}
The interpretation is as follows: each valid permutation $\sigma$ contributes one admissible tuple, and the contribution is grouped according to which element $v \in V^{t_i}$ is assigned to the coordinate corresponding to $V^{t_i}$.  
The factor multiplying $B_v$ therefore counts exactly the number of such admissible tuples. Now, observe that once the coordinates corresponding to the $V^{t_j}$’s are chosen, all the remaining coordinates can be filled freely within their respective partition blocks.  
In particular:
\begin{itemize}
    \item The indices in $\{1, \ldots, m\} \setminus \{v_i : i \in T\}$ may be permuted arbitrarily within $U_1$, yielding a factor of $(m - |T|)!$.
    \item For each $p \in \{2, \ldots, \alpha\}$, the indices in $U_p$ may also be permuted arbitrarily, contributing a factor of $|U_p|!$.
\end{itemize}
Hence the above expression simplifies to
\begin{align}
    0 &= \sum_{v \in V^{t_i}} B_{v} \cdot (m - |T|)! \cdot \prod_{p=2}^\alpha |U_p|!  \cdot \bigg(\text{the number of $h$-tuples in the Cartesian product } \prod_{j \in T} V^{t_j}, \notag \\
      &\hspace{45pt} \text{such that all entries are pairwise distinct, and the coordinate corresponding to $V^{t_i}$ equals $v$} \bigg).
\end{align}
Since the factorial factors are nonzero constants independent of the choice of $v$, we may divide them out to obtain the equivalent condition
\begin{align}
    0 &= \sum_{v \in V^{t_i}} B_{v} \cdot \Big(\text{the number of $h$-tuples in the Cartesian product } \prod_{j \in T} V^{t_j}, \notag \\
      &\hspace{45pt} \text{such that all entries are pairwise distinct, and the coordinate corresponding to $V^{t_i}$ equals $v$} \Big).
\end{align}
This identity holds for every choice of subset $T \subset [\gamma]$ and for every $v \in V^{t_i}$ with $i \in [\gamma]$.  
The key point is that the coefficients $B_v$ appear only through such linear relations, weighted by combinatorial counts of admissible tuples.  
By applying Corollary~\ref{appendix:corollary_1}, we deduce that
\begin{align}
    0 = \sum_{i \in V^{t_1} \cap V^{t_2} \cap \cdots \cap V^{t_\gamma}} B_i.
\end{align}

Finally, recall from the construction in \textit{(iii)} of \textbf{Step 4} that the intersection $V^{t_1} \cap V^{t_2} \cap \cdots \cap V^{t_\gamma}$ is exactly $\{1\}$.  
Therefore, the above equation reduces to $B_1 = 0$. We have established that $B_1 = 0$. By the preceding argument at the beginning of the proof, this immediately implies that all $B_i$ vanish identically. Hence, we conclude that $B_i = 0$ for every $i$, which completes the proof.
\end{proof}
\begin{remark}
Theorem~\ref{thm:main_fe_rope} can be viewed as a statement about the linear independence of attention heads. Although the theorem is formulated under specific assumptions on the parameters of the MultiHead maps, these conditions are satisfied with probability one. In essence, the result asserts that -- except for a negligibly small subset of the parameter space (e.g., a measure-zero set or the complement of a dense subset) -- the functional equivalence of general MultiHead maps can be completely characterized. The probabilistic nature of these assumptions aligns with those commonly made in prior studies on the functional equivalence of deep neural networks.
\end{remark}
We have the following corollary of Theorem~\ref{thm:main_fe_rope}.

\begin{corollary} \label{cor:main_fe_rope}
Consider two \textnormal{MHA} maps with $h$ and $\bar{h}$ heads, parameterized by families of matrices 
\begin{align}
        \big\{\{A_i^{m,n}\}_{m,n}\big\}_{i=1}^h , \big\{B_i\big\}_{i=1}^h, ~ \text{ and} \quad
        \big\{\{\bar{A}_i^{m,n}\}_{m,n}\big\}_{i=1}^{\bar{h}}, \big\{\bar{B}_i\big\}_{i=1}^{\bar{h}} ~~~~\text{ in } \mathbb{R}^{d \times d},
\end{align}
respectively. Assume that $A^{m,n}_i$ and $\bar{A}^{m,n}_i$ are nonzero for all feasible triples $(i,m,n)$.  
If the two \textnormal{MHA} maps are identical, then for every parameter family $\{A^{m,n}\}_{m,n}$ in $\mathbb{R}^{d \times d}$, we have the identity
\begin{align}
    \sum_{i \in [h] \,:\, \{A^{m,n}_i\}_{m,n} = \{A^{m,n}\}_{m,n}} B_i 
    = \sum_{i \in [\bar{h}] \,:\, \{\bar{A}^{m,n}_i\}_{m,n} = \{A^{m,n}\}_{m,n}} \bar{B}_i.
\end{align}
\end{corollary}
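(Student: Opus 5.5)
The plan is to reduce the corollary to Theorem~\ref{thm:main_fe_rope} by merging heads that share the same attention parameter family. As observed before the theorem, identity of the two maps is equivalent to the vanishing of the combined $\textnormal{MHA}$ map with $h+\bar h$ heads, attention families $\{\{A_i^{m,n}\}_{m,n}\}_{i=1}^h \sqcup \{\{\bar A_i^{m,n}\}_{m,n}\}_{i=1}^{\bar h}$, and value matrices $\{B_i\}_{i=1}^h \sqcup \{-\bar B_i\}_{i=1}^{\bar h}$. These $h+\bar h$ families need not be pairwise distinct, so the theorem cannot be applied directly. However, heads with identical attention families produce identical softmax matrices. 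Such heads can therefore be combined by linearity: since $\textnormal{softmax}[\cdot]\,\mathbf{x}B + \textnormal{softmax}[\cdot]\,\mathbf{x}B' = \textnormal{softmax}[\cdot]\,\mathbf{x}(B+B')$, any group of heads sharing a family collapses to a single head whose value matrix is the sum.

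Concretely, I would let $\mathcal{F}$ be the finite set of distinct parameter families occurring among the $h+\bar h$ heads. For each $F = \{A^{m,n}\}_{m,n} \in \mathcal{F}$, I would define
\begin{align*}
C_F \coloneqq \sum_{i \in [h] \,:\, \{A_i^{m,n}\}_{m,n} = F} B_i \;-\; \sum_{i \in [\bar h] \,:\, \{\bar A_i^{m,n}\}_{m,n} = F} \bar B_i .
\end{align*}
The combined map then equals the $\textnormal{MHA}$ map with $|\mathcal{F}|$ heads, indexed by $F \in \mathcal{F}$, with attention families $F$ and value matrices $C_F$, and this map is identically zero.

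Next, I would check the hypotheses of Theorem~\ref{thm:main_fe_rope} for this merged map. The families indexed by $\mathcal{F}$ are pairwise distinct by construction. Every matrix $A^{m,n}$ in each family is nonzero because of the nonvanishing assumption of the corollary. The two structural conditions (stationarity and symmetric diagonal terms) are inherited from the original heads, since each $F$ is one of their families. The theorem therefore gives $C_F = 0$ for all $F \in \mathcal{F}$, which is exactly the claimed identity whenever $F \in \mathcal{F}$. If the family $\{A^{m,n}\}_{m,n}$ does not belong to $\mathcal{F}$, both sums are empty and the identity reads $0 = 0$.

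The only delicate point is the edge case where $\mathcal{F}$ is empty or where merging leaves some heads with zero value matrix. Neither causes trouble, since the theorem places no hypothesis on the $B$'s. I expect no genuine obstacle here; the essential step is simply the linearity-based regrouping, which removes the distinctness hypothesis.
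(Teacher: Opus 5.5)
Your proposal is correct and is exactly the derivation the paper intends. The paper states this corollary without a written proof, relying on its earlier observation that equality of the two maps is equivalent to the vanishing of the $(h+\bar h)$-head map with value matrices $B_i$ and $-\bar B_i$; your linearity-based merging of heads with identical attention families is precisely the step needed to satisfy the pairwise-distinctness hypothesis of Theorem~\ref{thm:main_fe_rope}, which then gives $C_F = 0$ for every family.
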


\section{Key Lemmas for the Functional Equivalence of General MultiHead Attention}
\label{appendix:Key Lemmas for the Functional Equivalence of General MultiHead Attention}
In this section, we introduce the preliminary concepts and fundamental results that will serve as the foundation for the proofs of our main theorems.

\subsection{A Result on the Linear Independence of Exponential Polynomials over the Field of Rational Functions}
\label{appendix:section_A Result on the Linear Independence of Exponential Polynomials over the Field of Rational Functions}

Let $n$ be a positive integer. Recall that 
$\mathbb{R}[\mathbf{x}] = \mathbb{R}[x_1, \ldots, x_n]$ 
denotes the polynomial ring in $n$ variables over $\mathbb{R}$.  
Its field of fractions is denoted by $\mathbb{R}(\mathbf{x})$, that is,
\begin{align}
    \mathbb{R}(\mathbf{x}) 
    = \left\{ \frac{p}{q} ~ \colon ~ 
    p, q \in \mathbb{R}[\mathbf{x}], \; q \neq 0 \right\},
\end{align}
the field of all rational functions in the variables 
$x_1, \ldots, x_n$ with real coefficients.  We now state and prove a standard result concerning the linear independence of exponential polynomials over $\mathbb{R}(\mathbf{x})$.

\begin{lemma}\label{appendix:lemma-1}
Let $p_1, \ldots, p_m$ be polynomials in $\mathbb{R}[\mathbf{x}]$ such that $p_i - p_j$ is nonconstant whenever $i \ne j$.  
Suppose $q_1, \ldots, q_m$ are rational functions in $\mathbb{R}(\mathbf{x})$ satisfying $q_1 \cdot e^{p_1} + \cdots + q_m \cdot e^{p_m} = 0$. Then necessarily $q_1 = \cdots = q_m = 0$.
\end{lemma}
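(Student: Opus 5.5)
We prove Lemma~\ref{appendix:lemma-1} by induction on $m$, in the classical Liouville/Borel style. First clear denominators so that all $q_i$ are polynomials; this changes nothing, since multiplying the relation by a common nonzero denominator keeps each coefficient nonzero iff it was nonzero. Suppose, for contradiction, that a nontrivial relation exists. Choose one with $m$ minimal, so that every $q_i \neq 0$. Among all such minimal relations, we may also choose one in which the total degree of $q_m$ is minimal. The case $m=1$ is immediate, since $e^{p_1}$ never vanishes and hence $q_1=0$.

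For $m\ge 2$, divide by $e^{p_m}$ and set $r_i=p_i-p_m$. This gives
\begin{align*}
\sum_{i=1}^{m-1} q_i e^{r_i} + q_m = 0,
\end{align*}
where each $r_i$ is nonconstant and the differences $r_i-r_j=p_i-p_j$ are still nonconstant. Now differentiate with respect to a variable $x_\ell$ in which $r_1$ genuinely depends. The result is
\begin{align*}
\sum_{i=1}^{m-1}\bigl(\partial_\ell q_i + q_i\,\partial_\ell r_i\bigr)e^{r_i} + \partial_\ell q_m = 0 .
\end{align*}
This is again a relation with polynomial coefficients, exponents $r_1,\dots,r_{m-1},0$ (pairwise nonconstant differences), and the same number of terms. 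Its last coefficient $\partial_\ell q_m$ has strictly smaller degree than $q_m$, or is zero.

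To exploit this, take $\deg q_m\cdot(\text{original relation}) $-type combinations. Concretely, multiply the original relation by $\partial_\ell q_m$ and the derived one by $q_m$, then subtract. This eliminates the $e^{0}$ term and leaves the $(m-1)$-term relation
\begin{align*}
\sum_{i=1}^{m-1} \bigl(q_m\partial_\ell q_i + q_m q_i\partial_\ell r_i - q_i\partial_\ell q_m\bigr) e^{r_i}=0 .
\end{align*}
By minimality of $m$, all these coefficients vanish. In particular, for $i=1$ we obtain
\begin{align*}
\partial_\ell\!\left(\frac{q_1}{q_m}\right) = -\frac{q_1}{q_m}\,\partial_\ell r_1 .
\end{align*}
Setting $\phi=q_1/q_m\neq 0$, this reads $\partial_\ell(\phi e^{r_1})=0$.

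The main obstacle is concluding from this ODE. Here we use that $\phi$ is rational and $r_1$ is a polynomial. Viewing everything as functions of $x_\ell$ with the other variables fixed generically, we have $\phi = c\,e^{-r_1}$ with $c$ independent of $x_\ell$. Since $r_1$ has positive degree in $x_\ell$, the function $e^{-r_1}$ grows or decays faster than any rational function as $|x_\ell|\to\infty$ along a suitable direction. Hence $c=0$, contradicting $\phi\ne 0$. Equivalently, one can compare degrees: writing $\phi=a/b$ gives $b\,\partial_\ell a - a\,\partial_\ell b = -ab\,\partial_\ell r_1$, and the right-hand side has strictly larger $x_\ell$-degree than the left whenever $\deg_{x_\ell} r_1\ge 1$. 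This is the cleanest route and avoids analysis. The contradiction shows that no nontrivial relation exists, so $q_1=\cdots=q_m=0$.
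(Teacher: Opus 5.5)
Your proof is correct. Its skeleton is the paper's: induction on $m$ (which you phrase as a minimal counterexample), elimination of the $e^{p_m}$ term, one differentiation, and the induction hypothesis applied to the exponents $p_j-p_m$ to force $\partial_\ell(q_j/q_m)+(q_j/q_m)\,\partial_\ell(p_j-p_m)=0$. Your combination $q_m\cdot(\text{derived})-\partial_\ell q_m\cdot(\text{original})$ is just the paper's normalization of the constant term to $1$, with denominators cleared.

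The real difference is the endgame. The paper differentiates in every variable and concludes that $(q_j/q_m)e^{p_j-p_m}$ is a constant $c_j$. It then takes for granted that $c_j\neq0$ would make $e^{p_j-p_m}$ rational and hence constant. It also passes silently from zero gradient to global constancy, although the domain of a rational function may be disconnected. You instead work in one variable $x_\ell$ with the degree count
\[
\deg_{x_\ell}\bigl(q_m\partial_\ell q_1-q_1\partial_\ell q_m\bigr)\le\deg_{x_\ell}q_1+\deg_{x_\ell}q_m-1<\deg_{x_\ell}\bigl(q_1q_m\,\partial_\ell r_1\bigr),
\]
which needs only that $\mathbb{R}[\mathbf{x}]$ is an integral domain. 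This argument:
\begin{itemize}
\item is purely algebraic, and in effect proves the fact the paper assumes;
\item needs only the $i=1$ coefficient, since minimality makes every $q_i\neq0$, rather than all coefficients plus the paper's final ``$1=0$'' step.
\end{itemize}
The price is a little bookkeeping: clearing denominators, and setting up the minimal counterexample over all admissible exponent families so that it applies to $r_1,\dots,r_{m-1}$.

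As a cleanup, delete the unused minimality of $\deg q_m$, the remark that $\partial_\ell q_m$ has smaller degree, and the garbled ``$\deg q_m\cdot(\text{original relation})$-type'' sentence. The analytic alternative in your last paragraph is likewise unnecessary once the degree argument is in place.
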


\begin{proof}
We proceed by induction on $m$.  

\textit{Base case.}  For $m = 1$, the statement is immediate. Indeed, since $e^{p_1}$ never vanishes, $q_1 \cdot e^{p_1} = 0$ implies $q_1 = 0$.  

\textit{Inductive step.} Assume the result holds for every collection of fewer than $m$ exponentials.  
Let $q_1, \ldots, q_m \in \mathbb{R}(\mathbf{x})$ satisfy
\begin{align} \label{appendix:eq-1}
    q_1 \cdot e^{p_1} + \cdots + q_m \cdot e^{p_m} = 0.
\end{align}
We wish to show that all $q_i$ vanish. Suppose, for contradiction, that not all $q_i$ are zero. Without loss of generality, assume $q_m \neq 0$.  Dividing through Equation~(\ref{appendix:eq-1}) by $q_m e^{p_m}$ yields
\begin{align} \label{appendix:eq-3}
    \frac{q_1}{q_m} \cdot e^{p_1 - p_m} 
    + \cdots 
    + \frac{q_{m-1}}{q_m} \cdot e^{p_{m-1} - p_m} 
    + 1 = 0.
\end{align}
This expresses $1$ as a linear combination of the exponentials $e^{p_j - p_m}$ with coefficients in $\mathbb{R}(\mathbf{x})$. Now differentiate both sides of Equation~(\ref{appendix:eq-3}) with respect to each variable $x_i$ for $i = 1, \ldots, n$.  
Since the derivative of $1$ is zero, we obtain
\begin{align} \label{appendix:eq-2}
    \sum_{j=1}^{m-1} \left( 
        \frac{\partial}{\partial x_i}\!\left(\frac{q_j}{q_m}\right) 
        + \frac{q_j}{q_m} \cdot \frac{\partial}{\partial x_i}(p_j - p_m) 
    \right) e^{p_j - p_m} = 0.
\end{align}
Each coefficient in parentheses lies in $\mathbb{R}(\mathbf{x})$.  Since $p_1 - p_m, \ldots, p_{m-1} - p_m$ are pairwise distinct and nonconstant, the corresponding exponentials $e^{p_j - p_m}$ are linearly independent over $\mathbb{R}(\mathbf{x})$ by the induction hypothesis.  
Therefore, each coefficient in Equation~(\ref{appendix:eq-2}) must vanish, i.e.,
\begin{align}
    \frac{\partial}{\partial x_i}\!\left(\frac{q_j}{q_m}\right) 
    + \frac{q_j}{q_m} \cdot \frac{\partial}{\partial x_i}(p_j - p_m) = 0,
\end{align}
for every $i = 1, \ldots, n$ and $j = 1, \ldots, m-1$.  
Equivalently,
\begin{align}
    \frac{\partial}{\partial x_i} \left( \frac{q_j}{q_m} \cdot e^{p_j - p_m} \right) = 0.
\end{align}
This shows that for each $j = 1, \ldots, m-1$, the function ${q_j}/{q_m} \cdot e^{p_j - p_m}$ is independent of all variables $x_1, \ldots, x_n$, and hence must be a constant $c_j \in \mathbb{R}$.  If some $c_j \ne 0$, then $q_j \ne 0$ and we would have $e^{p_j - p_m} = {c_j q_m}/{q_j}$, 
which would imply that $e^{p_j - p_m}$ is a rational function, and therefore constant.  
This contradicts the assumption that $p_j - p_m$ is nonconstant.  Thus, each $c_j = 0$, forcing $q_j = 0$ for all $j = 1, \ldots, m-1$.  
Substituting into Equation~(\ref{appendix:eq-3}) then yields $1 = 0$, an impossibility.  Hence our assumption was false, and all $q_i = 0$.  
By induction, the lemma follows.
\end{proof}

\begin{remark}
Lemma~\ref{appendix:lemma-1} formalizes the intuitive fact that exponential functions with distinct polynomial exponents cannot cancel each other when combined with rational-function coefficients.  
It can be viewed as a multivariate generalization of the classical result that functions of the form $e^{a x}$ with distinct real numbers $a$ are linearly independent over the field of rational functions in one variable.  
Here, the same principle extends to exponential polynomials in several variables, with the essential role played by the assumption that the differences $p_i - p_j$ are nonconstant. This generalization is crucial for arguments in Theorem~\ref{thm:main_fe_rope}, involving exponential polynomials over $\mathbb{R}(\mathbf{x})$.
\end{remark}

\subsection{Hall’s Marriage Theorem and Systems of Distinct Representatives}
\label{appendix:Hall_section}

In this section, we recall a classical result from combinatorics, known as \emph{Hall's Marriage Theorem} \citep{Hall1935OnRO}, which provides necessary and sufficient conditions for the existence of a system of distinct representatives (SDR). This theorem will play a crucial role in our arguments, as our construction ultimately reduces to the problem of selecting distinct representatives from a family of subsets. Let $\mathcal{A} = \{A_1, A_2, \ldots, A_s\}$ be a finite family of subsets of a ground set $X$.  
A \emph{system of distinct representatives} (SDR) for $\mathcal{A}$ is a set $\{a_1, a_2, \ldots, a_s\}$ such that $a_i \in A_i$ for each $i$ and all $a_1, \ldots, a_s$ are pairwise distinct.  
Equivalently, an SDR is an injective choice function assigning to each $A_i$ an element $a_i \in A_i$.  

The existence of an SDR is a classical question in combinatorics, and Hall's theorem provides a complete characterization.

\begin{theorem}[Hall's Marriage Theorem]
\label{thm:hall}
Let $\mathcal{A} = \{A_1, A_2, \ldots, A_s\}$ be a finite family of subsets of a set $X$.  
Then $\mathcal{A}$ admits a system of distinct representatives if and only if the following condition (Hall's condition) holds:
\begin{align}
    \big|\!\bigcup_{i \in J} A_i\big| \;\;\ge |J| 
    \quad \text{for every subset } J \subseteq \{1,2,\ldots,s\}.
\end{align}
\end{theorem}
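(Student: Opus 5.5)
We prove Theorem~\ref{thm:hall} by strong induction on the number $s$ of sets, following the classical Halmos--Vaughan argument.

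\emph{Necessity.} If $a_1,\ldots,a_s$ is an SDR, then for any $J \subseteq \{1,\ldots,s\}$ the $|J|$ distinct elements $\{a_i : i \in J\}$ lie in $\bigcup_{i\in J} A_i$. Hence $\big|\bigcup_{i\in J}A_i\big| \ge |J|$.

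\emph{Sufficiency.} The base case $s=1$ holds: Hall's condition with $J=\{1\}$ gives $A_1 \neq \emptyset$, so any $a_1 \in A_1$ works. For $s \ge 2$, assume the result for all families with fewer than $s$ members and split into two cases according to whether Hall's condition holds with slack.

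\textbf{Case 1: slack.} Suppose $\big|\bigcup_{i\in J}A_i\big| \ge |J|+1$ for every nonempty proper $J \subsetneq \{1,\ldots,s\}$. Pick any $a_s \in A_s$, which is possible since $A_s \neq \emptyset$. Consider the family $A_i' := A_i \setminus \{a_s\}$ for $i<s$. For nonempty $J \subseteq \{1,\ldots,s-1\}$, removing one element costs at most one, so
\[
\Big|\bigcup_{i\in J}A_i'\Big| \ge \Big|\bigcup_{i\in J}A_i\Big| - 1 \ge |J|.
\]
By induction $\{A_i'\}_{i<s}$ has an SDR, and appending $a_s$ gives an SDR of $\mathcal{A}$.

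\textbf{Case 2: a tight proper subfamily.} Otherwise there is a nonempty proper $J_0$ with $\big|\bigcup_{i\in J_0}A_i\big| = |J_0|$. Write $Y := \bigcup_{i\in J_0}A_i$.
\begin{itemize}
\item The subfamily $\{A_i\}_{i\in J_0}$ inherits Hall's condition and has fewer than $s$ members. By induction it has an SDR, whose $|J_0|$ distinct representatives must exhaust $Y$.
\item For the complementary family $A_i'' := A_i \setminus Y$ with $i \notin J_0$, take any $K \subseteq \{1,\ldots,s\}\setminus J_0$ and apply Hall's condition to $K \cup J_0$:
\[
\Big|\bigcup_{i\in K}A_i''\Big| = \Big|\bigcup_{i\in K\cup J_0}A_i\Big| - |Y| \ge |K|+|J_0| - |J_0| = |K|.
\]
So this family also satisfies Hall's condition. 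Since $J_0$ is nonempty it has fewer than $s$ members, and induction gives it an SDR inside $X \setminus Y$.
\end{itemize}
The two SDRs are disjoint, because one lies in $Y$ and the other in $X\setminus Y$. Their union is therefore an SDR of $\mathcal{A}$.

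The main obstacle is Case~2: one must check that Hall's condition survives deleting $Y$. The key step is the identity $\big|\bigcup_{K}A_i''\big| = \big|\bigcup_{K\cup J_0}A_i\big| - |Y|$, which holds because $Y \subseteq \bigcup_{K\cup J_0}A_i$. Tightness of $J_0$ is exactly what makes this subtraction lose nothing. Every other step is routine bookkeeping.
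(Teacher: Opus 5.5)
Your proof is correct. The paper does not prove this theorem at all. It cites Hall's original article and notes that necessity is clear, by exactly the counting argument you give. It then uses sufficiency as a black box, in Step~4(iv) of the proof of Theorem~\ref{thm:main_fe_rope}, where it verifies Hall's condition for a specific bipartite graph. Your Halmos--Vaughan induction therefore supplies what the paper omits, and each step checks out.
\begin{itemize}
\item In Case~1, every nonempty $J\subseteq\{1,\ldots,s-1\}$ is a proper subset of $\{1,\ldots,s\}$, so the slack absorbs the deletion of $a_s$.
\item In Case~2, the failure of slack together with Hall's condition forces $|Y|=|J_0|$, and both subfamilies have fewer than $s$ members.
\item The computation $\bigl|\bigcup_{i\in K}A_i''\bigr| = \bigl|\bigcup_{i\in K\cup J_0}A_i\bigr| - |Y|$ uses only $Y\subseteq\bigcup_{i\in K\cup J_0}A_i$ and $K\cap J_0=\emptyset$.
\end{itemize}

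One small wording point: that identity holds whether or not $J_0$ is tight. What tightness buys is $|Y|=|J_0|$, so subtracting $|Y|$ costs exactly the $|J_0|$ gained by enlarging $K$ to $K\cup J_0$.

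Also, in the paper's application all sets are subsets of $\{1,\ldots,m\}$, so the cardinal arithmetic is unproblematic. If some $A_i$ were infinite, the same argument still works with the convention $\infty-k=\infty$.
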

In words, Hall's condition states that for every subcollection of the sets $A_i$, the total number of available elements in their union must be at least as large as the number of sets in the subcollection. This condition is clearly necessary: if $|J|$ sets are assigned representatives, then at least $|J|$ distinct elements are required.  
The theorem asserts that this necessary condition is also sufficient. Hall's theorem has many applications in combinatorics, graph theory, and matching theory.  
In the language of bipartite graphs, it gives a necessary and sufficient condition for the existence of a perfect matching from the left vertex set into the right vertex set.  

\begin{remark}
Hall’s Marriage Theorem plays a central role in the argument of Theorem~\ref{thm:main_fe_rope}. Moreover, its application is closely connected to the statements of Theorem~\ref{appendix:theorem_1} and Corollary~\ref{appendix:corollary_1}.
\end{remark}

\subsection{The Möbius Function on the Partition Lattice}
\label{appendix:mobius_function}

This section introduces the necessary background on incidence algebras and Möbius inversion over finite posets. We then establish an identity for the Möbius function that will serve as a fundamental tool throughout the remainder of the paper. We also present several connections between this identity and other well-studied combinatorial concepts, with the aim of providing readers with greater intuition about its significance. For comprehensive treatments of these topics, we refer the reader to \citep{rota1964foundations, stanley2011enumerative}.

\subsubsection{Incidence Algebras and Möbius Inversion on Finite Posets}
Let $(P,\le)$ be a finite poset. The \emph{incidence algebra} $I(P)$ over $\mathbb{C}$ consists of all functions
\begin{align}
    f \coloneqq \{(x,y)\in P\times P: x\le y\}\longrightarrow \mathbb{C}.
\end{align}
with convolution
\begin{align}
    (f*g)(x,y) \coloneqq \sum_{x\le z\le y} f(x,z)\,g(z,y),\quad\text{ for all } x\le y.
\end{align}
The identity for convolution is the Kronecker delta $\delta(x,y)$ (i.e.\ $\delta(x,y)=1$ if $x=y$, and $0$ otherwise).
The \emph{zeta function} $\zeta\in I(P)$ is $\zeta(x,y)\equiv 1$ for $x\le y$.
An element $f\in I(P)$ is invertible if and only if $f(x,x)\neq 0$ for all $x\in P$; in that case $f^{-1}$ is its inverse under convolution.

\textbf{Möbius function.} The \emph{Möbius function} $\mu=\mu_P\in I(P)$ is defined as the convolution inverse of $\zeta$:
\begin{align}
    \mu*\zeta=\zeta*\mu=\delta.
\end{align}
Equivalently, for all $x\le y$ in $P$, one has
\begin{equation}\label{eq:mobius-def}
\sum_{x\le z\le y}\mu(x,z)=\delta(x,y).
\end{equation}
As a consequence, if $f,g:P\to\mathbb{C}$ satisfy
\begin{align}
    f(x)=\sum_{y\ge x} g(y), \quad\text{ for all } ~x\in P,
\end{align}
then \emph{Möbius inversion} yields
\begin{align}
g(x)=\sum_{y\ge x}\mu(x,y)\,f(y), \quad\text{ for all } ~x\in P.    
\end{align}
\textbf{Products of posets.} If $P,Q$ are finite posets, their product $P\times Q$ is ordered componentwise. Define
\begin{align}
    (\zeta_{P}\otimes \zeta_{Q})\bigl((p_1,q_1),(p_2,q_2)\bigr) \coloneqq
\zeta_{P}(p_1,p_2)\,\zeta_{Q}(q_1,q_2).
\end{align}
A direct computation in $I(P\times Q)$ shows
\begin{align}
    \zeta_{P\times Q}&=\zeta_P\otimes \zeta_Q, \\
(\mu_P\otimes \mu_Q)*(\zeta_P\otimes \zeta_Q) &=\delta_P\otimes \delta_Q=\delta_{P\times Q}.
\end{align}
Hence
\begin{equation}\label{eq:multiplicativity}
\mu_{P\times Q}\bigl((p_1,q_1),(p_2,q_2)\bigr)=\mu_P(p_1,p_2)\,\mu_Q(q_1,q_2).
\end{equation}

\subsubsection{The Partition Lattice and Interval Factorization}
Let $U$ be a finite set with $|U|=n$. The set $\Pi(U)$ of all set partitions of $U$, ordered by refinement, forms a finite lattice with minimum $\hat{0}$ (all singletons) and maximum $\hat{1}$ (one block).
The goal of this section is to derive the following explicit formula, stated in the following theorem:
\begin{theorem} \label{appendix:theorem_mobius_identity}
    For $\pi \in \Pi(U)$, one has:
    \begin{equation}\label{eq:mobius-final}
\mu_{\Pi(U)}(\hat{0},\pi)=\prod_{B\in\pi}(-1)^{|B|-1}(|B|-1)!.
\end{equation}
\end{theorem}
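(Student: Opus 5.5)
The plan is to prove Theorem~\ref{appendix:theorem_mobius_identity} in two stages. First we show that the lower interval $[\hat{0},\pi]$ in $\Pi(U)$ factors as a product of full partition lattices. Then we compute $\mu_{\Pi(B)}(\hat{0},\hat{1})$ for a single block by induction.

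\textbf{Interval factorization.} A partition $\sigma\le\pi$ is determined by refining each block $B\in\pi$ independently. So the map $\sigma\mapsto(\sigma|_B)_{B\in\pi}$, which restricts $\sigma$ to each block, is an order isomorphism
\begin{align*}
[\hat{0},\pi]\;\cong\;\prod_{B\in\pi}\Pi(B),
\end{align*}
sending $\hat 0$ to $(\hat 0_B)_B$ and $\pi$ to $(\hat 1_B)_B$. Injectivity, surjectivity and order preservation in both directions are immediate from the definition of refinement. The Möbius function of an interval depends only on the interval as a poset, because Equation~(\ref{eq:mobius-def}) only involves elements between $x$ and $y$. Combining this with the multiplicativity~(\ref{eq:multiplicativity}), extended to finitely many factors by induction, gives
\begin{align*}
\mu_{\Pi(U)}(\hat 0,\pi)=\prod_{B\in\pi}\mu_{\Pi(B)}(\hat 0,\hat 1).
\end{align*}
It therefore suffices to show that $\mu_n:=\mu_{\Pi([n])}(\hat 0,\hat 1)=(-1)^{n-1}(n-1)!$.

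\textbf{Single block.} This is the step needing a genuine computation. I would use Weisner's theorem, or more elementarily the dual form of Equation~(\ref{eq:mobius-def}), namely $\sum_{\hat0\le\sigma\le\hat1}\mu(\sigma,\hat 1)=0$ for $n\ge2$. The cleanest route I would try first is the following.

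Fix the atom $a$ that merges elements $1$ and $2$. Weisner's identity states $\sum_{\sigma:\,\sigma\vee a=\hat1}\mu(\hat0,\sigma)=0$ for $a\neq\hat0$. The partitions with $\sigma\vee a=\hat 1$ but $\sigma\ne\hat1$ are exactly those with two blocks, one containing $1$ and the other containing $2$.

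By the factorization already proved, such a $\sigma$ with blocks of sizes $k$ and $n-k$ has $\mu(\hat0,\sigma)=\mu_k\mu_{n-k}$. Counting these partitions gives the recursion
\begin{align*}
\mu_n=-\sum_{k=1}^{n-1}\binom{n-2}{k-1}\mu_k\,\mu_{n-k}.
\end{align*}
Substituting the inductive hypothesis, the right-hand side becomes
\begin{align*}
-(-1)^{n-2}\sum_{k=1}^{n-1}\binom{n-2}{k-1}(k-1)!\,(n-k-1)! = (-1)^{n-1}(n-1)(n-2)! = (-1)^{n-1}(n-1)!,
\end{align*}
since each summand equals $(n-2)!$ and there are $n-1$ of them.

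If I prefer not to quote Weisner, I would prove it directly: set $f(\tau)=\sum_{\sigma\vee a=\tau}\mu(\hat0,\sigma)$. Then $\sum_{\tau\ge a}f(\tau)=\sum_{\sigma}\mu(\hat0,\sigma)\,[\sigma\le \tau\text{ range}]$, and Möbius inversion on the upper interval above $a$, together with $\sum_{\sigma\le\tau}\mu(\hat0,\sigma)=0$, gives $f=0$.

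\textbf{Main obstacle.} I expect the main obstacle to be a clean justification of Weisner's identity, or of an equivalent recursion, within the incidence-algebra setup above. The factorization step is routine. As a fallback, I would use the exponential-formula argument: for each $q$, the identity $q^n=\sum_\sigma (q)_{|\sigma|}$ counts maps $[n]\to[q]$ by their kernels, where $(q)_j$ denotes the falling factorial. Möbius inversion of this identity and comparison of the coefficient of $q$ yields $\mu_n=(-1)^{n-1}(n-1)!$.
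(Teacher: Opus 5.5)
Your proof is correct. The first stage (restriction to blocks, $[\hat0,\pi]\cong\prod_{B\in\pi}\Pi(B)$, then multiplicativity) is the same as the paper's Step~1. The difference is in how you evaluate $\mu_n=\mu_{\Pi_n}(\hat0,\hat1)$.

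The paper uses the defining relation on all of $\Pi_n$, namely $\sum_{\tau\in\Pi_n}\prod_{B\in\tau}\mu_{|B|}=0$ for $n\ge 2$. It then quotes the exponential formula to package this as $\exp\bigl(\sum_{k\ge1}\mu_k z^k/k!\bigr)=1+z$ and reads off $\mu_k$ from $\log(1+z)$.

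You instead apply Weisner's identity with the atom joining $1$ and $2$. This restricts the relevant sum to $\hat1$ together with the two-block partitions separating $1$ from $2$. The resulting convolution recursion $\mu_n=-\sum_{k=1}^{n-1}\binom{n-2}{k-1}\mu_k\mu_{n-k}$ is solved by a one-line induction, since each summand is $(-1)^{n-2}(n-2)!$.

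What each route buys: yours is more elementary and self-contained. It needs no generating functions and no black-box exponential formula, only an identity that follows in two lines from the defining relation. The paper's route produces all $\mu_k$ at once without singling out an atom.

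Two points of precision. First, when you write out Weisner's identity, phrase it for each upper bound: for every $b\ge a$, $\sum_{a\le\tau\le b}f(\tau)=\sum_{\sigma\le b}\mu(\hat0,\sigma)=0$, because $\sigma\vee a\le b\iff\sigma\le b$. These equations for all $b\in[a,\hat1]$ force $f\equiv0$. The single sum over all $\tau\ge a$ that you wrote gives only one equation, which is not enough.

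Second, your fallback needs the whole family $q^{k(\pi)}=\sum_{\sigma\ge\pi}(q)_{k(\sigma)}$ to invert, where $k(\cdot)$ is the number of blocks, not just the case $\pi=\hat0$. With that, it is the standard argument. It also has the bonus of computing $\mu_n$ without using the factorization at all.
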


For clarity, we begin with an outline of the proof. The reasoning unfolds in two stages.
\begin{enumerate}
    \item \textbf{Interval factorization.}
Restriction to blocks induces a canonical isomorphism:
\begin{align}
    [\hat{0},\pi] \cong \prod_{B\in\pi}\Pi(B).
\end{align}

By multiplicativity of the Möbius function on products, one has:
\begin{align}
    \mu_{\Pi(U)}(\hat{0},\pi)=\prod_{B\in\pi}\mu_{\Pi(B)}(\hat{0}_B,\hat{1}_B).
\end{align}
\item \textbf{One–block evaluation.}
Using the exponential formula for labelled set partitions, for all $n \ge 1$, one has:
\begin{align}
\mu_{\Pi([n])}(\hat{0},\hat{1})=(-1)^{n-1}(n-1)!.
\end{align}

Substituting into the product from Step~1 yields
\begin{align}
    \mu_{\Pi(U)}(\hat{0},\pi)=\prod_{B\in\pi}(-1)^{|B|-1}(|B|-1)!.
\end{align}
\end{enumerate}
Having outlined the strategy, we now provide the full proof with all intermediate steps made explicit.
\begin{proof}
We structure the proof into several steps for the sake of clarity and readability.

\textbf{Step 1.} 

A partition $\pi\in\Pi(U)$ is a set of disjoint nonempty blocks $B\subseteq U$ covering $U$. For $\sigma,\pi\in\Pi(U)$ write $\sigma\le\pi$ if every block of $\sigma$ is contained in a block of $\pi$. For $\sigma\le\pi$ and a block $B\in\pi$, let $\sigma|_B$ be the restriction of $\sigma$ to $B$ (intersect each block of $\sigma$ with $B$ and remove empties). Denote by $\hat{1}_B$ the one-block partition of $B$. We have the following result.

\begin{lemma}[Interval factorization]\label{lem:interval-factorization}
For $\sigma\le\pi$ in $\Pi(U)$, restriction induces a poset isomorphism
\begin{align}
    \Phi ~\colon~ [\sigma,\pi]\ \longrightarrow\ \prod_{B\in\pi} ~\Pi\bigl(\sigma|_B,\hat{1}_B\bigr),\qquad
\Phi(\tau)\colon\bigl(\tau|_B\bigr)_{B\in\pi}.
\end{align}
Its inverse maps $(\rho_B)_{B\in\pi}$ to the join $\bigvee_{B\in\pi}\rho_B$, which coincides with the partition whose restriction to each $B$ equals $\rho_B$.
\end{lemma}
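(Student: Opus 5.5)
The strategy is to check three things directly from the definitions: $\Phi$ is well defined, $\Phi$ is a bijection whose inverse is the join, and both $\Phi$ and $\Phi^{-1}$ preserve the refinement order.

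\textbf{Well-definedness.} Let $\tau\in[\sigma,\pi]$. Every block of $\tau$ lies inside a single block $B$ of $\pi$, so $\tau|_B$ is simply the collection of blocks of $\tau$ contained in $B$. This collection is a partition of $B$.

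Since $\sigma\le\tau$, every block of $\sigma$ contained in $B$ lies in some block of $\tau$, and that block is also contained in $B$. Hence $\sigma|_B\le\tau|_B\le\hat{1}_B$, so $\Phi(\tau)$ lies in the product of intervals $\prod_{B\in\pi}[\sigma|_B,\hat{1}_B]$.

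\textbf{Bijectivity.} Given $(\rho_B)_{B\in\pi}$ with $\sigma|_B\le\rho_B$, set $\Psi((\rho_B)_B):=\bigcup_{B\in\pi}\rho_B$, the union of the block collections. This is a partition of $U$ because the $B$'s are disjoint and cover $U$.

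It refines $\pi$, since every block of $\rho_B$ sits in $B$. It is refined by $\sigma$, since every block of $\sigma$ lies in some $B$ (because $\sigma\le\pi$) and hence in a block of $\rho_B$.

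This partition coincides with the join $\bigvee_B\rho_B$ when each $\rho_B$ is viewed as a partition of $U$ padded with singletons outside $B$. The reason is that the pieces live on disjoint supports, so the transitive closure of "same block" never crosses between different $B$'s.

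Because restricting the union to $B$ recovers $\rho_B$, we get $\Phi\circ\Psi=\mathrm{id}$. Because every block of $\tau$ lies in a unique $B$, we get $\Psi\circ\Phi=\mathrm{id}$.

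\textbf{Order preservation.} Suppose $\tau\le\tau'$. Any block of $\tau$ inside $B$ is contained in a block of $\tau'$, and that block must also lie in $B$, since it meets $B$ and refines $\pi$. Thus $\tau|_B\le\tau'|_B$ for every $B$.

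Conversely, suppose $\rho_B\le\rho'_B$ for all $B$. Then each block of $\bigcup_B\rho_B$ is contained in a block of $\bigcup_B\rho'_B$. So $\Phi$ is an order isomorphism.

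The only delicate point is the identification of the inverse with the lattice join. I would handle it explicitly by noting that the join in $\Pi(U)$ is the finest common coarsening, and that the disjoint union of the $\rho_B$ is already a common coarsening. Any common coarsening must contain each block of each $\rho_B$ inside one of its own blocks, so the disjoint union is the finest one.
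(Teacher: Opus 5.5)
Your proof is correct and takes essentially the same route as the paper: $\Phi$ is restriction to the blocks of $\pi$, the inverse glues the blockwise partitions $\rho_B$ (their disjoint union), and you check that both composites are the identity. Your version is more complete than the paper's, since you verify order preservation in both directions and justify why the glued partition equals the join $\bigvee_{B\in\pi}\rho_B$ (with each $\rho_B$ padded by singletons outside $B$); the paper only asserts both facts.
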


\begin{proof} 
If $\tau\in[\sigma,\pi]$, then $\sigma\le\tau\le\pi$ implies that each block of $\tau$ lies inside some block of $\pi$, so $\tau|_B$ is a partition of $B$ refining $\sigma|_B$, hence $\sigma|_B\le\tau|_B\le\hat{1}_B$. Thus $\Phi$ is well-defined and order-preserving. Conversely, if $(\rho_B)_{B\in\pi}$ satisfies $\sigma|_B\le\rho_B\le\hat{1}_B$, define $\rho$ by declaring that $x,y\in U$ lie in the same block of $\rho$ iff either $x,y\in B$ and $x\sim_{\rho_B}y$ for some $B\in\pi$, or $x,y$ lie in different blocks of $\pi$ (which never happens since we work blockwise). Then $\rho$ is a partition with $\sigma\le \rho\le\pi$ and $\rho|_B=\rho_B$. One checks $\Phi(\rho)=(\rho_B)$ and $\bigvee_{B\in\pi}(\tau|_B)=\tau$, hence $\Phi$ is an isomorphism.
\end{proof}

Setting $\sigma=\hat{0}$ in Lemma~\ref{lem:interval-factorization} yields
\begin{equation}\label{eq:interval-product}
[\hat{0},\pi]\ \cong\ \prod_{B\in\pi}\Pi(B).
\end{equation}
Applying the multiplicativity Equation~(\ref{eq:multiplicativity}) to Equation~(\ref{eq:interval-product}), one has
\begin{equation}\label{eq:block-factor}
\mu_{\Pi(U)}(\hat{0},\pi)=\prod_{B\in\pi}\mu_{\Pi(B)}(\hat{0}_B,\hat{1}_B).
\end{equation}
Therefore, to compute $\mu_{\Pi(U)}(\hat{0},\pi)$ for arbitrary $\pi$, it suffices to evaluate the single-block quantity
\begin{align}
    m(n) \coloneqq \mu_{\Pi_n}(\hat{0},\hat{1}),
\end{align}
for $n \in \mathbb{N}$, where $\Pi_n$ denotes the partition lattice on an $n$-element set.

\textbf{Step 2.}

We now determine $m(n)$ exactly. One has a Möbius sum constraint as follows: by Equation~(\ref{eq:mobius-def}), for every finite poset and any $x<y$, one has
\begin{align}
    \sum_{x\le z\le y}\mu(x,z)=0.
\end{align}
In $\Pi_n$, taking $x=\hat{0}$ and $y=\hat{1}$ gives
\begin{equation}\label{eq:mobius-zero-sum}
\sum_{\tau\in\Pi_n}\mu_{\Pi_n}(\hat{0},\tau)=0,
\end{equation}
for all $n \ge 2$. For $n=0,1$, the sum equals $1$ (the unique element of the interval). By Equation~(\ref{eq:block-factor}) applied inside $\Pi_n$, one has
\begin{align}
    \mu_{\Pi_n}(\hat{0},\tau)=\prod_{B\in\tau} m(|B|).
\end{align}
Define
\begin{align}
F_n\coloneqq\sum_{\tau\in\Pi_n}\ \prod_{B\in\tau} m(|B|).    
\end{align}
Then, for $n \ge 2$, one has $F_0=1, F_1=1, F_n=0$. A standard labeled-partition identity (the exponential formula) asserts that for any sequence $(a_k)_{k\ge 1}$,
\begin{align}
    \sum_{n\ge 0}\left(\sum_{\tau\in\Pi_n}\prod_{B\in\tau} a_{|B|}\right)\frac{z^n}{n!}
=
\exp\!\left(\sum_{k\ge 1} a_k\,\frac{z^k}{k!}\right).
\end{align}
Applying this with $a_k=m(k)$ yields
\begin{equation}\label{eq:EGF}
\sum_{n\ge 0} F_n\frac{z^n}{n!}
=
\exp\left(\sum_{k\ge 1} m(k)\frac{z^k}{k!}\right).
\end{equation}
The LHS of Equation~(\ref{eq:EGF}) equals $1+z$. Taking the formal logarithm gives
\begin{align}
    \sum_{k\ge 1} m(k)\frac{z^k}{k!}=\log(1+z)=\sum_{k\ge 1}(-1)^{k-1}\frac{z^k}{k}.
\end{align}
Equating coefficients, for $k \ge 1$, one has
\begin{equation}\label{eq:m-one-block}
m(k)=k!\cdot \frac{(-1)^{k-1}}{k}=(-1)^{k-1}\,(k-1)!.
\end{equation}
Substituting Equation~(\ref{eq:m-one-block}) into the block factorization Equation~(\ref{eq:block-factor}) gives the desired expression in Equation~(\ref{eq:mobius-final}):
\begin{align}
    \mu_{\Pi(U)}(\hat{0},\pi)=\prod_{B\in\pi}(-1)^{|B|-1}\,(|B|-1)!.
\end{align}
This concludes the proof.
\end{proof}

The identity established in Theorem~\ref{appendix:theorem_mobius_identity} plays a pivotal role in the proof of Theorem~\ref{appendix:theorem_1}, which, in turn, functions as a supporting lemma for the proof of Theorem~\ref{thm:main_fe_rope} -- the main result of this work.

\subsection{A Technical Result on Weighted Sums over Distinct Tuples}
\label{appendix:section_A Technical Result on Weighted Sums over Distinct Tuples}

We now present a result concerning weighted sums over distinct tuples. The results developed in this section form the backbone of our argument in the proof of Theorem~\ref{thm:main_fe_rope}, and they encapsulate the main technical difficulty of that proof.

\begin{theorem} \label{appendix:theorem_1}
    Given positive integers $m,n\ge 1$. For each $i\in[m]$, let $A_i$ be a subset of $[n]$. Let $x_1,\dots,x_n$ be $n$ real numbers. For any nonempty $S\subseteq[m]$, define
\begin{align}
F_S
:=\Bigl\{(a_i)_{i\in S}:\ a_i\in A_i\ ~ \text{for all }i\in S,\ \text{and all } a_i\text{'s are pairwise distinct}\Bigr\}.
\end{align}
For $i\in S$ and $a\in A_i$, define the fiber
\begin{align}
F_{S,i,a} \coloneqq
\left\{(a_j)_{j\in S}\in F_S:\ a_i=a\right\}.
\end{align}
For any nonempty $T\subseteq[m]$, define $A_T \coloneqq \bigcap_{i\in T}A_i$, and
\begin{align}
G(T) \coloneqq \sum_{a\in A_T}x_a.
\end{align}
Assume that, for every nonempty $S \subseteq [m]$ and every $i\in S$, one has
\begin{align}
\sum_{a\in A_i}\,|F_{S,i,a}|\,x_a = 0.
\end{align}
Then, for every nonempty $T\subseteq[m]$, one has
\begin{align}
G(T)=\sum_{a\in A_T}x_a = 0.
\end{align}
\end{theorem}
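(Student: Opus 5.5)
The plan is an induction on $|T|$. The key tool is a Möbius-inversion formula, on the partition lattice, that expresses the hypothesis $\sum_{a\in A_i}|F_{S,i,a}|\,x_a=0$ as a linear combination of the quantities $G(B)$ for $B\subseteq S$ containing $i$. The coefficient of the top term $G(S)$ will be $\mu_{\Pi(S)}(\hat 0,\hat 1)=(-1)^{|S|-1}(|S|-1)!$ from Theorem~\ref{appendix:theorem_mobius_identity}, which is nonzero. Every other term will involve $G(B)$ with $B\subsetneq S$, and these vanish by the induction hypothesis.

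\emph{Step 1 (inclusion--exclusion for distinctness).} Fix a nonempty $S\subseteq[m]$. For a tuple $(a_j)_{j\in S}$ with $a_j\in A_j$, let $\ker(a)\in\Pi(S)$ be the partition whose blocks are the level sets $j\sim j'\iff a_j=a_{j'}$. The tuple lies in $F_S$ exactly when $\ker(a)=\hat 0$. For every $\pi\in\Pi(S)$,
\begin{align}
\sum_{\tau\ge \pi}[\ker(a)=\tau]=[\ker(a)\ge\pi]=[\,a \text{ is constant on each block of }\pi\,].
\end{align}
Möbius inversion, with the definition in Equation~(\ref{eq:mobius-def}), then gives
\begin{align}
[\ker(a)=\hat 0]=\sum_{\pi\in\Pi(S)}\mu_{\Pi(S)}(\hat 0,\pi)\,[\,a \text{ is constant on each block of }\pi\,].
\end{align}
I expect this step to be the main point needing care. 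One must check that the inversion runs over the correct interval $[\hat 0,\hat 1]$, so that only $\mu(\hat 0,\pi)$ appears.

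\emph{Step 2 (evaluating the fibre sums).} Fix $i\in S$ and sum the identity of Step~1 over all tuples with $a_i=a$, weighting by $x_a$. For a fixed $\pi$, a tuple constant on the blocks of $\pi$ is the same as a choice of one common value per block $B$, taken from $A_B=\bigcap_{j\in B}A_j$. Let $B_i(\pi)$ denote the block containing $i$. The condition $a_i=a$ forces $a\in A_{B_i(\pi)}$, and each remaining block contributes a factor $|A_B|$. Hence
\begin{align}
0=\sum_{a\in A_i}|F_{S,i,a}|\,x_a=\sum_{\pi\in\Pi(S)}\mu_{\Pi(S)}(\hat 0,\pi)\,G\bigl(B_i(\pi)\bigr)\prod_{B\in\pi,\,B\neq B_i(\pi)}|A_B|.
\end{align}

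\emph{Step 3 (induction).} For the base case $|T|=1$, say $T=\{i\}$, we have $F_{\{i\},i,a}=\{(a)\}$, so the hypothesis reads $G(\{i\})=0$ directly. For the inductive step, suppose $G(B)=0$ for all nonempty $B$ with $|B|<|T|$. Apply Step~2 with $S=T$ and any $i\in T$. Every partition $\pi\neq\hat 1$ has $B_i(\pi)\subsetneq T$, so its term vanishes by the induction hypothesis. Only the term $\pi=\hat 1$ survives, and the identity becomes
\begin{align}
0=(-1)^{|T|-1}(|T|-1)!\,G(T).
\end{align}
Thus $G(T)=0$, which completes the induction.
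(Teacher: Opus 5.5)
Your proposal is correct and follows the paper's proof essentially step for step. The paper also applies Möbius inversion on $\Pi(S)$ to the equality partition of a tuple, obtaining $\sum_{a\in A_i}|F_{S,i,a}|\,x_a=\sum_{\pi\in\Pi(S)}\mu(\hat 0,\pi)\,G(B_i(\pi))\prod_{B\neq B_i(\pi)}|A_B|$, and then inducts on $|T|$, where only the one-block partition survives, with nonzero coefficient $(-1)^{|T|-1}(|T|-1)!$.
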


\begin{proof}
Let $S$ be a nonempty finite set. Denote by $\Pi(S)$ the lattice of set partitions of $S$ ordered by refinement: For $\sigma,\pi\in\Pi(S)$, we write $\sigma\le \pi$ if every block of $\sigma$ is contained in a block of $\pi$. Any $\pi\in\Pi(S)$ is a family of disjoint nonempty blocks whose union is $S$. For a block $B\subseteq S$ define
\begin{align}
A_B\coloneqq\bigcap_{j\in B} A_j,
\qquad\text{and}\qquad
|A_B|\coloneqq\Bigl|\bigcap_{j\in B}A_j\Bigr|.    
\end{align}
Let $\mu$ denote the Möbius function of $\Pi(S)$ (with respect to refinement). $\mu$ is determined by $\sum_{\sigma:\,\sigma\le \pi}\mu(\sigma)=\mathbf{1}_{\{\pi=\hat{0}\}}$, where $\hat{0}$ is the discrete partition. It is well-known that:
\begin{equation}\label{appendix:eq_a5}
\mu(\pi)=\prod_{B\in\pi}(-1)^{|B|-1}\, (|B|-1)!.
\end{equation}
Fix a nonempty $S\subseteq[m]$, an index $i\in S$, and an element $a\in[n]$.
Let $\mathcal{G}_S$ be the set of all functions $g:S\to[n]$ satisfying $g(j)\in A_j$ for all $j\in S$. For $g\in\mathcal{G}_S$, define its equality partition $\pi(g)\in\Pi(S)$ by:
\begin{align}
    j\sim_{\pi(g)} k ~~~ \text{ if and only if } ~~~ g(j)=g(k).
\end{align}
Thus $\pi(g)$ records which indices are assigned the same value by $g$. One has $g$ is injective on $S$ if and only if $\pi(g)=\hat{0}$. The set $F_S$ of injective choices can be described as:
\begin{align}
    F_S=\Bigl\{g\in\mathcal{G}_S:\ \pi(g)=\hat{0}\Bigr\},
\end{align}
and the \emph{fiber} fixing the value at the distinguished index $i$ is:
\begin{align}
F_{S,i,a}=\Bigl\{g\in\mathcal{G}_S:\ g(i)=a,\ \pi(g)=\hat{0}\Bigr\}.    
\end{align}
For $\pi\in\Pi(S)$ and $i\in S$, let $B_i(\pi)$ denote the unique block of $\pi$ containing $i$.
Define:
\begin{align}
    N_{S,i,a}(\pi)
\coloneqq\Bigl|\bigl\{g\in\mathcal{G}_S:\ g \text{ is constant on each block of }\pi,\ g(i)=a\bigr\}\Bigr|.
\end{align}
That is, $N_{S,i,a}(\pi)$ counts maps that are constant along blocks of $\pi$ (so the only equalities allowed among coordinates are those forced by $\pi$) and take the prescribed value $a$ at the index $i$. For every $\pi\in\Pi(S)$, one has:
\begin{equation} \label{appendix:eq_a1}
N_{S,i,a}(\pi)
=\mathbf{1}_{\{a\in A_{B_i(\pi)}\}}\;
\prod_{B\in\pi ~\text{ and } ~B\neq B_i(\pi)} |A_B|.
\end{equation}
Indeed, if $g$ is constant on each block of $\pi$, the value on the block $B_i(\pi)$ must equal $g(i)=a$. This is possible exactly when $a\in\bigcap_{j\in B_i(\pi)}A_j=A_{B_i(\pi)}$, which contributes the indicator $\mathbf{1}_{\{a\in A_{B_i(\pi)}\}}$. Then, for any other block $B\in\pi$ with $B\neq B_i(\pi)$, the common value of $g$ on $B$ can be chosen arbitrarily from the intersection $A_B=\bigcap_{j\in B}A_j$, independently across distinct blocks. Therefore there are $|A_B|$ choices for each such block, and multiplying over all $B\neq B_i(\pi)$ yields the product in Equation~(\ref{appendix:eq_a1}). Now, for $g\in\mathcal{G}_S$, define the two indicator functions on $\Pi(S)$:
\begin{align}
E(g)\coloneqq \mathbf{1}_{\{\pi(g)=\hat{0}\}}, ~ \text{and} \quad
C_\pi(g)\coloneqq \mathbf{1}_{\{\pi(g)\ge \pi\}}\quad(\pi\in\Pi(S)).    
\end{align}

Here $\pi(g)\ge \pi$ means that $g$ is constant on every block of $\pi$.
By general Möbius inversion on posets, one has:
\begin{align}
    E(g)\;=\;\sum_{\pi\in\Pi(S)}\mu(\pi)\,C_\pi(g), ~\text { since } ~ \sum_{\sigma\le \pi(g)}\mu(\sigma)=\mathbf{1}_{\{\pi(g)=\hat{0}\}}.
\end{align}
Now fix $i\in S$ and $a\in[n]$, multiply the last identity by $\mathbf{1}_{\{g(i)=a\}}$, and sum over all $g\in\mathcal{G}_S$, one has:
\begin{align}
    \bigl|F_{S,i,a}\bigr|
=\sum_{g\in\mathcal{G}_S}\mathbf{1}_{\{g(i)=a\}}E(g)
=\sum_{\pi\in\Pi(S)}\mu(\pi)\sum_{g\in\mathcal{G}_S}\mathbf{1}_{\{g(i)=a\}}C_\pi(g).
\end{align}
The inner sum is precisely $N_{S,i,a}(\pi)$ by definition. Using Equation~(\ref{appendix:eq_a1}), one therefore obtains the explicit expansion:
\begin{equation} \label{appendix:eq_a2}
|F_{S,i,a}|
=\sum_{\pi\in\Pi(S)} \mu(\pi)\,
\mathbf{1}_{\{a\in A_{B_i(\pi)}\}}
\prod_{B\in\pi ~ \text{ and }  ~ B\neq B_i(\pi)} |A_B|.
\end{equation}
Multiply Equation~(\ref{appendix:eq_a2}) by $x_a$ and sum over all $a\in A_i$ (equivalently, over all $a\in[n]$, since the indicator in Equation~(\ref{appendix:eq_a2}) already forces $a\in A_i$ when $i \in B_i(\pi)$):
\begin{align}
    \sum_{a\in A_i} |F_{S,i,a}|\,x_a
=
\sum_{\pi\in \Pi(S)} \mu(\pi)\!
\left(\prod_{B\in\pi ~ \text{ and }  ~ B\neq B_i(\pi)} |A_B|\right)
\left(\sum_{a\in A_{B_i(\pi)}} x_a\right).
\end{align}
With the shorthand $G(T):=\sum_{a\in A_T}x_a$ this becomes
\begin{equation}\label{appendix:eq_a4}
\sum_{a\in A_i} |F_{S,i,a}|\,x_a
=
\sum_{\pi\in \Pi(S)} \mu(\pi)\!
\left(\prod_{B\in\pi ~ \text{ and }  ~ B\neq B_i(\pi)} |A_B|\right)
G\bigl(B_i(\pi)\bigr).
\end{equation}
By the hypothesis, the LHS of Equation~(\ref{appendix:eq_a4}) is $0$. Hence
\begin{equation} \label{appendix:eq_a3}
0 = \sum_{\pi\in \Pi(S)} \mu(\pi)
\left(\prod_{B\in\pi ~ \text{ and }  ~ B\neq B_i(\pi)} |A_B|\right)
G\bigl(B_i(\pi)\bigr),
\end{equation}
for every nonempty $S\subseteq[m]$ and every $i\in S$. Observe that, in Equation~(\ref{appendix:eq_a3}), the term $G\!\bigl(B_i(\pi)\bigr)$ only involves nonempty subsets $B_i(\pi)$ with $i\in B_i(\pi)\subseteq S$.

Back to the problem. We now show that $G(T)=0$ for every nonempty $T\subseteq[m]$ by induction on $k \coloneqq |T|$.

\textit{Base case.}

Let $T=\{i\}$ for some $i\in[m]$. Take $S=\{i\}$ in the given hypothesis, one has
\begin{align}
\sum_{a\in A_i}|F_{S,i,a}|x_a=0.    
\end{align}
Since $S$ has one element, an injective choice on $S$ is just a choice of a value in $A_i$, hence
$|F_{\{i\},i,a}|=\mathbf{1}_{\{a\in A_i\}}$. Therefore
\begin{align}
    0=\sum_{a\in A_i}|F_{\{i\},i,a}|x_a=\sum_{a\in A_i}x_a=G(\{i\}),
\end{align}
which establishes the base case.

\medskip
\emph{Inductive step.}

Fix $k\ge 2$ and assume the claim holds for all nonempty $U\subseteq[m]$ with $|U|<k$, i.e.,
$G(U)=0$ whenever $1\le |U|\le k-1$.
Let $T\subseteq[m]$ with $|T|=k$, and fix any distinguished index $i\in T$.
Apply Equation~(\ref{appendix:eq_a3}) with $S=T$, we analyze the sum over $\pi\in\Pi(T)$ by separating the
one–block partition from the rest.

\smallskip
\emph{(i) The contribution of the one–block partition.}

There is a unique partition $\pi^\star=\{T\}$ with a single block. For this partition we have
$B_i(\pi^\star)=T$, and the product over $B\neq B_i(\pi^\star)$ is an empty product, hence equals $1$ by convention.
By Equation~(\ref{appendix:eq_a5}) with $|T|=k$, one has:
\begin{align}
    \mu(\pi^\star)=(-1)^{k-1}(k-1)!.
\end{align}
Thus, the term of Equation~(\ref{appendix:eq_a3}) corresponding to $\pi^\star$ equals
\begin{align}
\mu(\pi^\star)\cdot 1\cdot G\bigl(B_i(\pi^\star)\bigr)
=(-1)^{k-1}(k-1)!G(T).    
\end{align}

\smallskip
\emph{(ii) The contribution of all other partitions.}

Let $\pi\in\Pi(T)$ with $\pi\neq\pi^\star$. Then $B_i(\pi)$ is a proper, nonempty subset of $T$ (it still contains $i$ but does not equal $T$).
Consequently $|B_i(\pi)|\le k-1$. By the inductive hypothesis,
\[
G\!\bigl(B_i(\pi)\bigr)=0 .
\]
Hence every summand in Equation~(\ref{appendix:eq_a3}) with $\pi\neq\pi^\star$ vanishes, regardless of the multiplicative factor
$\prod_{B\neq B_i(\pi)}|A_B|$ and the value of $\mu(\pi)$.

\smallskip
Collecting (a) and (b), identity Equation~(\ref{appendix:eq_a3}) with $S=T$ reduces to
\begin{align}
    0=(-1)^{k-1}(k-1)!G(T).
\end{align}
Since $(-1)^{k-1}(k-1)! \neq 0$, we conclude $G(T)=0$.

By induction on $k$, the relation $G(T)=0$ holds for every nonempty $T\subseteq[m]$.
\end{proof}

We have a direct corollary of Theorem~\ref{appendix:theorem_1}.

\begin{corollary}
\label{appendix:corollary_1}
Given positive integers $m,n\ge 1$. For each $i\in[m]$, let $A_i$ be a subset of $[n]$. Let $x_1,\dots,x_n$ be $n$ real numbers. For any nonempty $S\subseteq[m]$, define
\begin{align}
F_S
:=\Bigl\{(a_i)_{i\in S}:\ a_i\in A_i\ ~ \text{for all }i\in S,\ \text{and all } a_i\text{'s are pairwise distinct}\Bigr\}.
\end{align}
For $i\in S$ and $a\in A_i$, define the fiber
\begin{align}
F_{S,i,a} \coloneqq
\left\{(a_j)_{j\in S}\in F_S:\ a_i=a\right\}.
\end{align}
Assume that, for every nonempty $S \subseteq [m]$ and every $i\in S$, one has
\begin{align} \label{appendix:eq_a6}
\sum_{a\in A_i}\,|F_{S,i,a}|\,x_a = 0.
\end{align}
Then, one has
\begin{align}
G(T)=\sum_{a\in A_1 \cap \ldots \cap A_m}x_a = 0.
\end{align}
\end{corollary}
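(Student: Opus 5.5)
The plan is to derive the corollary directly from Theorem~\ref{appendix:theorem_1} by specializing the conclusion to the single set $T = [m]$. The hypotheses of Corollary~\ref{appendix:corollary_1} coincide verbatim with those of Theorem~\ref{appendix:theorem_1}. The same subsets $A_1,\ldots,A_m \subseteq [n]$, the same reals $x_1,\ldots,x_n$, and the same sets $F_S$ and fibers $F_{S,i,a}$ appear. The vanishing assumption~(\ref{appendix:eq_a6}) is imposed for every nonempty $S \subseteq [m]$ and every $i \in S$. So the first step is simply to check this matching of hypotheses, with no new estimate or construction needed.

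Next, I invoke Theorem~\ref{appendix:theorem_1}. It asserts that $G(T) = \sum_{a \in A_T} x_a = 0$ for every nonempty $T \subseteq [m]$, where $A_T = \bigcap_{i \in T} A_i$. Since $m \ge 1$, the full index set $T = [m]$ is nonempty. Hence
\begin{align*}
A_{[m]} = A_1 \cap \cdots \cap A_m,
\end{align*}
and the theorem gives $\sum_{a \in A_1 \cap \cdots \cap A_m} x_a = 0$, which is exactly the claimed identity.

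There is essentially no obstacle here. The only point deserving care is notational: the symbol $G(T)$ in the corollary's conclusion should be read as $G([m])$. One should also note the degenerate case where $A_1 \cap \cdots \cap A_m$ is empty. Then the sum is empty and the conclusion holds trivially, consistent with the theorem.

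It is worth remarking how the corollary is used in Step~5 of the proof of Theorem~\ref{thm:main_fe_rope}. There it is applied with the $\gamma$ sets $V^{t_1},\ldots,V^{t_\gamma}$ playing the role of $A_1,\ldots,A_m$, and with the coordinates $B_v$ playing the role of the $x_a$, applied coordinatewise. The required hypotheses for all subsets $S$ are exactly the relations obtained by varying $T \subset [\gamma]$ there.
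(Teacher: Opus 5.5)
Your proposal is correct and matches the paper's own argument: the corollary is obtained by specializing Theorem~\ref{appendix:theorem_1} to the nonempty index set $T=[m]$, for which $A_T=A_1\cap\cdots\cap A_m$. You are also right that the $G(T)$ in the stated conclusion should be read as $G([m])$.
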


\begin{proof}
    By taking $T=[m]$ in Theorem~\ref{appendix:theorem_1}, one obtains the asserted main conclusion.
\end{proof}

\section{Functional Equivalence of Multihead Attention with Rotary Positional Encoding}
\label{appendix:Functional Equivalence of Multihead Attention with Rotary Positional Encoding}

\subsection{Main Result on Functional Equivalence}
\label{appendix:Main Result on Functional Equivalence}

\begin{theorem}[Theorem~\ref{main:main_theorem_2} in the main paper] \label{appendix:main_theorem_2}
    Given two positive integers $d$ and $d_h$ with $d > d_h$. Consider two $\textnormal{MHA}_{\textnormal{RoPE}}$  maps with $h$ and $\bar{h}$ heads, with rotary positional encoding. They are parameterized by families of matrices 
    \begin{align}
        \theta = \left(W^Q_i, W^K_i, W^V_i, W^O_i \right)_{i=1}^h \in \Theta(d,d_h,h), ~\text{ and} \quad\bar{\theta} = \left(\bar{W}^Q_i, \bar{W}^K_i, \bar{W}^V_i, \bar{W}^O_i \right)_{i=1}^{\bar{h}} \in \Theta(d,d_h,\bar{h}),
    \end{align}
respectively. Denote
    \begin{align*}
         A_i^{0} \coloneqq~ \textup{sym} \big(W^Q_i(W^K_i)^\top \big), ~\text{ and } ~
         A_i^{n} \coloneqq~ W^Q_iR_n(W_i^K)^\top ~\text{ if} ~ n \neq 0. 
        ~~\text{ Same for $\bar{A}_i^n$.}
    \end{align*}
    
    Assume that 
\begin{enumerate}
    \item All matrices $A_i^n$ and  $\bar{A}_i^n$, for feasible $i$ and $n \in \mathbb{Z}$, are nonzero.
    \item From $\theta$,  the $h$ families $\{A_1^n\}_{n \in \mathbb{Z}}, \ldots, \{A_h^n\}_{n \in \mathbb{Z}}$ are pairwise distinct. The same condition holds for $\bar\theta$.
    \item All matrices $W_i^Q, W_i^K, W_i^V, W_i^O$ and $\bar{W}_i^Q, \bar{W}_i^K, \bar{W}_i^V, W_i^O$, for feasible $i$, are of rank $d_h$.
\end{enumerate}
If the two $\textnormal{MHA}_{\textnormal{RoPE}}$ maps are identical,
then $h = \bar{h}$. Moreover, there exists $g \in G_{\textup{RoPE}}(d_h,h)$ such that $\bar{\theta} = g\theta$.
\end{theorem}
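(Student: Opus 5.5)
We reduce to the general formulation and apply Corollary~\ref{cor:main_fe_rope}; the remaining work is to recover $\theta$ from the head-wise invariants. Set $A_i^{m,n}\coloneqq A_i^{m-n}$ and $B_i\coloneqq W_i^V(W_i^O)^\top$, and similarly for $\bar\theta$. By Equation~(\ref{main:eq_6}), both $\textnormal{MHA}_{\textup{RoPE}}$ maps are general MHA maps satisfying stationarity and self-similarity symmetry; the diagonal blocks are symmetric because $A_i^0$ was defined through $\textup{sym}$. Assumption 1 makes every $A_i^{m,n}$ nonzero.

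\textbf{Step 1: matching heads.} Equality of the two maps means the combined $(h+\bar h)$-head map with values $B_i$ and $-\bar B_i$ vanishes identically. Group heads with equal families $\{A^n\}_{n\in\mathbb Z}$ and apply Corollary~\ref{cor:main_fe_rope}. Within $\theta$ the families are pairwise distinct, and likewise within $\bar\theta$, so each family class contains at most one head from each side. Hence for every class, $B_i=\bar B_j$ when the class contains a head $i$ of $\theta$ and a head $j$ of $\bar\theta$, and otherwise the single $B$ in it is zero. Assumption 3 gives $\operatorname{rank}B_i=d_h\ge 1$ (a product of two full column rank $d\times d_h$ matrices), so no $B$ vanishes. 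Therefore every class is a matched pair, $h=\bar h$, and there is $\sigma\in S_h$ with $\bar A^n_i=A^n_{\sigma(i)}$ for all $n$ and $\bar B_i=B_{\sigma(i)}$. After relabeling we may take $\sigma=\mathrm{id}$.

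\textbf{Step 2: value--output part.} From $\bar W^V_i(\bar W^O_i)^\top=W^V_i(W^O_i)^\top$ with all factors of full column rank, uniqueness of full rank factorization \citep{piziak1999full} yields $V_i\in\mathrm{GL}(d_h)$ with $\bar W^V_i=W^V_iV_i^\top$ and $\bar W^O_i=W^O_iV_i^{-1}$. Concretely, the column spaces of $W_i^V$ and $\bar W_i^V$ coincide, which defines $V_i$, and $W^O_i$ is then determined by left-invertibility of $W_i^V$.

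\textbf{Step 3: query--key part.} For $n=1$ we have $\bar W^Q_iR_1(\bar W^K_i)^\top=W^Q_iR_1(W^K_i)^\top$, and all factors have full rank, $R_1$ being invertible. Full rank factorization again gives $U\in\mathrm{GL}(d_h)$ with $\bar W^Q_i=W^Q_iU^\top$ and $\bar W^K_i R_1^\top=W^K_iR_1^\top U^{-1}$, i.e.\ $\bar W^K_i=W^K_iR_1^\top U^{-1}R_1^{-\top}$. Substituting into the identity for $n=2$, and cancelling $W^Q_i$ on the left and $(W^K_i)^\top$ on the right using their left inverses, gives
\[
U^\top R_2 R_1^{-1}U^{-\top}R_1=R_2,
\]
that is, $U^\top R_1U^{-\top}=R_1$. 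So $U^\top$ commutes with $R_1$, and then $\bar W^K_i=W^K_iU^{-1}$. The family for general $n$ follows automatically, including $A^0$, whose symmetrization is then preserved.

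\textbf{Main obstacle.} The key step is Lemma~\ref{appendix:lemma_rotary}: the commutant of $R_1$ in $\mathrm{GL}(d_h)$ is exactly $\mathrm{H}(d_h)$. Over $\mathbb C$, $R_1$ is diagonalizable with eigenvalues $e^{\pm \mathrm{i}\varphi_j}$. These $d_h$ numbers are pairwise distinct provided $\varphi_j\notin\pi\mathbb Z$ and $\varphi_j\pm\varphi_k\notin 2\pi\mathbb Z$ for $j\ne k$, which holds because the frequencies lie in $(0,1]$ and are distinct. A matrix commuting with $R_1$ therefore preserves each real $2$-dimensional rotation block. On each block it commutes with a rotation by an angle that is not a multiple of $\pi$, so it has the form $aP+bJ$. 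Invertibility then forces $(a,b)\ne(0,0)$. Hence $U^\top\in\mathrm{H}(d_h)$, and $U\in\mathrm{H}(d_h)$ since $\mathrm{H}(d_h)$ is closed under transposition ($J^\top=-J$). Together with Steps 1 and 2, this gives $g=(\sigma,(U_i,V_i))\in G_{\textup{RoPE}}(d_h,h)$ with $\bar\theta=g\theta$.
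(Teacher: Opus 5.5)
Your proof is correct, and your Steps 1 and 2 follow the paper's argument. You reduce to the general formulation, use Corollary~\ref{cor:main_fe_rope} together with pairwise distinctness and $B_i\neq 0$ to obtain $h=\bar h$ and the head permutation, and use uniqueness of full rank factorization for the value--output pair.

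The query--key step takes a genuinely different route. The paper applies Lemma~\ref{appendix:lemma_rotary}. Its proof first uses the identities for all $n\neq 0$ to separate frequencies, via linear independence of the sequences $n\mapsto\cos(n\varphi_j),\sin(n\varphi_j)$, obtaining $XP_jY=ZP_jT$ and $XJ_jY=ZJ_jT$ for each block. It then performs a $2\times 2$ rank factorization block by block, and the $J_j$-relation forces each $2\times 2$ factor to commute with $J$. You instead do one global rank factorization at offset $n=1$ and use offset $n=2$ to get $U^\top R_1U^{-\top}=R_1$. Everything then reduces to the centralizer of $R_1$, which you compute spectrally.

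What you call Lemma~\ref{appendix:lemma_rotary} is not the statement the paper proves; the paper's lemma is the full recovery statement, not a centralizer computation. Since your centralizer argument is self-contained, this is only a labeling issue.

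Your route buys three things:
\begin{itemize}
\item It needs only two relative offsets.
\item It avoids the frequency-separation step altogether.
\item It confines the RoPE-specific input to one centralizer computation. The rest of your Step~3 works verbatim for any encoding $R_n=R^n$ with $R$ invertible, identifying the query--key symmetries with those $U$ for which $U^\top$ commutes with $R$.
\end{itemize}
The paper's block-wise route, in exchange, only requires each two-column block $XP_j$, $P_jY$ to have rank $2$, rather than full column rank of $W^Q_i$ and $W^K_i$. It also exhibits the decoupling into independent $2\times 2$ problems directly. Under Assumption~3 both sets of rank hypotheses hold, so neither approach loses anything here.
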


\begin{proof}
    For $i \in [h]$ and $m,n \ge 1$, define $A_i^{m,n} = A_i^{m-n}$ and $B_i \coloneqq W_i^V(W_i^O)^\top$. Same for  $\bar{A}_i^{m,n}$ and $\bar{B}_i$. Then, one has
\begin{align}
\textnormal{MHA}\Bigl(\mathbf{x} ~; \{\{A_i^{m,n}\}_{m,n}, B_i\}_{i=1}^h \Bigr) &= \textnormal{MHA}_{\textnormal{RoPE}}\Bigl(\mathbf{x} ~ ; \theta \Bigr) = \textnormal{MHA}_{\textnormal{RoPE}}\Bigl(\mathbf{x} ~ ; \bar\theta \Bigr) =  \textnormal{MHA}\Bigl(\mathbf{x} ~ ; \{\{\bar{A}_i^{m,n}\}_{m,n},\bar{B}_i\}_{i=1}^{\bar{h}} \Bigr).
\end{align}
   From the condition 2, the property of parameters from these maps fit to the setting of Corollary~\ref{cor:main_fe_rope}, which is that $A^{m,n}_i$ and $\bar{A}^{m,n}_i$ are nonzero for all feasible triples $(i,m,n)$. Thus, for every parameter family $\{A^{m,n}\}_{m,n} \subset \mathbb{R}^{d \times d}$, the following identity holds
\begin{align} \label{appendix:eq_8}
    \sum_{i \in [h] ~ : ~ \{A^{m,n}_i\}_{m,n} = \{A^{m,n}\}_{m,n}} B_i 
    = \sum_{i \in [\bar{h}] ~ : ~ \{\bar{A}^{m,n}_i\}_{m,n} = \{A^{m,n}\}_{m,n}} \bar{B}_i.
\end{align}
The $h$ families $\{A^{m,n}_1\}_{m,n \ge 1}, \{A^{m,n}_2\}_{m,n \ge 1}, \ldots, \{A^{m,n}_h\}_{m,n \ge 1}$ 
    are pairwise distinct. Together with Equation~(\ref{appendix:eq_8}), consider $\{A^{m,n}\}_{m,n} = \{A_i^{m,n}\}_{m,n}$, one has the LHS of Equation~(\ref{appendix:eq_8}) is equal to $B_i$. Thus,
    \begin{align}\label{appendix:eq_9}
        B_i = \sum_{j \in [\bar{h}] ~ : ~ \{\bar{A}^{m,n}_j\}_{m,n} = \{A_i^{m,n}\}_{m,n}} \bar{B}_j.
    \end{align}
    Note that, since all the matrices $W^V_i$ and $W^O_i$ have rank $d_h$, it implies that all $B_i$ are non-zero. From Equation~(\ref{appendix:eq_9}), for each $i \in [h]$, since the left-hand side is non-zero, the right-hand side has at least one index $j \in [\bar{h}]$ such that $\bar{B}_j$ is non-zero and $\{\bar{A}^{m,n}_j\}_{m,n} = \{A_i^{m,n}\}_{m,n}$. Since $h$ families $\{A^{m,n}_1\}_{m,n \ge 1}, \{A^{m,n}_2\}_{m,n \ge 1}, \ldots, \{A^{m,n}_h\}_{m,n \ge 1}$, 
    are pairwise distinct, one implies that each $i$ has its corresponding $j$'s distinctly from others. Thus, $h \le \bar{h}$. By a symmetric argument, one also has $h \ge \bar{h}$. In conclusion, one has $h=\bar{h}$. Moreover, by the above argument, for each $i$, there exists exactly one $j \in [h]$ such that $\{\bar{A}^{m,n}_j\}_{m,n} = \{A_i^{m,n}\}_{m,n}$. Moreover, this also implies that $B_j = B_i$. In conclusion, there exists a permutation $\sigma \in S_h$ such that
    \begin{align}
        \bar{A}^{m,n}_i = A_{\sigma(i)}^{m,n}, ~\text{ for all} ~m,n \ge 1, ~\text{ and } ~\bar{B}_{\sigma(i)} = B_i. 
    \end{align}
    From Lemma~\ref{appendix:lemma_rotary}, there exists matrices $\{U_i\}_{i=1}^h \subset \textup{H}(d_h)$ such that 
    \begin{align}
        \bar{W}^Q_i = W^Q_{\sigma(i)} \cdot U_i^\top, ~~~~~\bar{W}^K_i = W^K_{\sigma(i)} \cdot (U_i)^{-1}.
    \end{align}
    From the rank factorization \citep{piziak1999full}, there exists matrices $\{V_i\}_{i=1}^h \subset \textup{GL}(d_h)$ such that 
    \begin{align}
        \bar{W}^V_i = W^V_{\sigma(i)} \cdot V_i^\top, ~~~~~\bar{W}^O_i = W^O_{\sigma(i)} \cdot (V_i)^{-1}.
    \end{align}
    This concludes the proof.
\end{proof}

\subsection{A Lemma Concerning the Rotary Matrix}
\label{appendix:subsection{A Lemma on the Rotary Matrix}}

Given $d = 2m$ be an even integer. Consider the RoPE matrix at position $1$ as 
\begin{align}
    R=\text{diag} \big(R(\theta_1),\dots,R(\theta_{d/2})\big) \in \mathbb{R}^{d\times d}, 
    ~~ \text{where} ~
R(\theta)=\begin{bmatrix}\cos\theta&-\sin\theta\\ \sin\theta&\cos\theta\end{bmatrix}.
\end{align}
Denote the $n \times n$ identity matrix as $I_n$. For $i=1,\dots,m$, define the $2$-dimensional coordinate plane \begin{align}
\text{$E_i:=\text{span}\{e_{2i-1},e_{2i}\}\subset\mathbb{R}^d$, 
where $e_{2i-1},e_{2i}$ are the $(2i-1)$-th and $2i$-th coordinate basis vectors.}   
\end{align}
Define $P_i \coloneqq e_{2i-1}e_{2i-1}^\top+e_{2i}e_{2i}^\top$ and $J_i \coloneqq e_{2i}e_{2i-1}^\top-e_{2i-1}e_{2i}^\top$ in $\mathbb{R}^{d \times d}$. 
In words, $P_i$ and $J_i$ are the $d \times d$ matrices that the $i$-th $2 \times 2$ diagonal block is
\begin{align}
    I \coloneqq \begin{bmatrix}1&0\\ 0&1\end{bmatrix}, ~~~~~~~ J \coloneqq \begin{bmatrix}0&-1\\ 1&0\end{bmatrix},
\end{align}
respectively. The matrix $R$ now can be written as $R = \sum_{i=1}^m \left(\cos\theta_i P_i + \sin\theta_iJ_i\right)$. 
We have the following result.

\begin{lemma}\label{appendix:lemma_rotary}
    Given an integer $D \ge d$. Consider matrices $X,Z \in \mathbb{R}^{D \times d}$ and $Y,T \in \mathbb{R}^{d\times D}$. Assume that, for all non zero interger $n$, we have $XR^nY = ZR^nT$. Assume that all the angles $\theta_i \in (0, \pi)$ are pairwise distinct, and $XP_i$ and $P_iY$ are of rank $2$ for $i \in [m]$. Then, there exists an invertible matrix $U \in \mathbb{R}^{d\times d}$ of the form
    \begin{align}
        U = \sum_{i=1}^m (a_iP_i + b_iJ_i) ~~ \text{with} ~~ (a_i,b_i) \in \mathbb{R}^2 \setminus\{(0,0)\}  ~ \text{ for } ~ i = 1, \ldots, m,
    \end{align}
    such that $Z = XU$ and $T = U^{-1}Y.$
\end{lemma}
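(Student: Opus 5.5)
The plan is to split the identity $XR^nY = ZR^nT$ into separate identities for each $2\times 2$ block of $R$. Then each block is handled by a rank-factorization argument, which produces a $2\times 2$ matrix that must commute with $J$.

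\textbf{Step 1 (expand in the blocks).} Since $P_i^2=P_i$, $J_iP_i=P_iJ_i=J_i$, and products of the block matrices for different $i$ vanish, we have $R^n=\sum_{i=1}^m\bigl(\cos(n\theta_i)P_i+\sin(n\theta_i)J_i\bigr)$. Therefore the hypothesis reads
\begin{align*}
\sum_{i=1}^m \cos(n\theta_i)\,\bigl(XP_iY-ZP_iT\bigr)+\sin(n\theta_i)\,\bigl(XJ_iY-ZJ_iT\bigr)=0\quad\text{for all } n\neq 0 .
\end{align*}

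\textbf{Step 2 (separate the frequencies).} We want to conclude that $XP_iY=ZP_iT$ and $XJ_iY=ZJ_iT$ for every $i$. Rewrite the cosines and sines through $\lambda_j^{\pm n}$, where $\lambda_j=e^{\sqrt{-1}\theta_j}$. Because the $\theta_j\in(0,\pi)$ are pairwise distinct, the $2m$ numbers $\lambda_1^{\pm1},\dots,\lambda_m^{\pm1}$ are pairwise distinct and nonzero. Evaluating the identity at $n=1,\dots,2m$ gives a linear system in the $2m$ complex matrix coefficients. Its matrix is $[\mu_k^n]_{n,k}$ with the $\mu_k$ ranging over these $2m$ numbers, which is a Vandermonde matrix multiplied by the diagonal matrix $\mathrm{diag}(\mu_k)$. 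This matrix is invertible, so every coefficient of $\lambda_j^{\pm n}$ vanishes, and taking sums and differences yields the two claimed identities. I expect this step to be the main point needing care: it is where the restriction to nonzero $n$ and the assumption $\theta_i\in(0,\pi)$ are used. Choosing $n=1,\dots,2m$ avoids $n=0$, and the condition $\theta_i\in(0,\pi)$ rules out the coincidences $\lambda_j=\lambda_j^{-1}$ and $\lambda_j=\lambda_k^{-1}$.

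\textbf{Step 3 (blockwise rank factorization).} Let $X_i,Z_i\in\mathbb{R}^{D\times 2}$ be the columns $2i-1,2i$ of $X$ and $Z$, and let $Y_i,T_i\in\mathbb{R}^{2\times D}$ be the corresponding rows of $Y$ and $T$. Step 2 then gives
\begin{align*}
X_iY_i=Z_iT_i \quad\text{and}\quad X_iJY_i=Z_iJT_i .
\end{align*}
By hypothesis $X_i$ and $Y_i$ have rank $2$, so $X_iY_i$ has rank $2$. Consequently $Z_i$ and $T_i$ have rank $2$, and $\mathrm{col}(Z_i)=\mathrm{col}(X_iY_i)=\mathrm{col}(X_i)$. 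Hence $Z_i=X_iU_i$ for some $U_i\in\mathrm{GL}(2)$. Cancelling $X_i$ with a left inverse then gives $T_i=U_i^{-1}Y_i$; this is the standard uniqueness of rank factorization. Substituting into the second identity gives $X_iJY_i=X_iU_iJU_i^{-1}Y_i$. Cancelling $X_i$ on the left and $Y_i$ on the right (using one-sided inverses) yields $U_iJ=JU_i$. A direct computation shows that the $2\times 2$ matrices commuting with $J$ are exactly those of the form $a_iI+b_iJ$. Invertibility of $U_i$ forces $(a_i,b_i)\neq(0,0)$.

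\textbf{Step 4 (assemble).} Set $U=\sum_{i=1}^m(a_iP_i+b_iJ_i)$, which is block diagonal with blocks $U_i$. Reading off the blocks gives $Z=XU$ and $T=U^{-1}Y$, as required.
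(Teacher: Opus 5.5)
Your proposal is correct and follows the paper's proof: expand $R^n$ blockwise, separate frequencies to get $XP_iY=ZP_iT$ and $XJ_iY=ZJ_iT$, then use uniqueness of the rank factorization of $X_iY_i$ to obtain $U_i\in\mathrm{GL}(2)$ with $U_iJ=JU_i$, hence $U_i=a_iI+b_iJ$. Your Vandermonde argument in Step 2 supplies a justification for the frequency-separation step, which the paper only asserts; your column-space route to $Z_i=X_iU_i$ is equivalent to the paper's explicit construction $U_i=V_XZ_i$ via one-sided inverses.
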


\begin{proof} We structure the proof into several steps for the sake of clarity and readability.

\textbf{Step 1.} 

Define $A_{1,i} \coloneqq X P_i Y, B_{1,i} \coloneqq  XJ_i Y, A_{2,i} \coloneqq Z P_i T, B_{2,i} \coloneqq  ZJ_i T$ in  $\mathbb{R}^{D\times D}$. One has
\begin{align}
    XR^nY &= \sum_{i=1}^m X\left(\cos(n\theta_i) P_i + \sin(n\theta_i)J_i\right)Y \notag \\
    &= \sum_{i=1}^m \left(\cos(n\theta_i) XP_iY + \sin(n\theta_i)XJ_iY\right) = \sum_{i=1}^m \left(\cos(n\theta_i) A_{1,i} + \sin(n\theta_i)B_{1,i}\right),
    \\
    ZR^nT &= \sum_{i=1}^m Z\left(\cos(n\theta_i) P_i + \sin(n\theta_i)J_i\right)T \notag \\
    &= \sum_{i=1}^m \left(\cos(n\theta_i) ZP_iT + \sin(n\theta_i)ZJ_iT\right) = \sum_{i=1}^m \left(\cos(n\theta_i) A_{2,i} + \sin(n\theta_i)B_{2,i}\right).
\end{align}
Since $XR^nY = ZR^nT$ for all $n \neq 0$, and $\theta_1, \theta_2, \ldots, \theta_m$ are pairwise distinct, one has  $A_{1,i} = A_{2,i}$ and $B_{1,i} = B_{2,i}$ for all $i \in [m]$, which are $XP_iY = ZP_iT$ and $XJ_iY = ZJ_iT$.

\textbf{Step 2.}

Now fix an number $i \in \{1, \ldots, m\}$. Let $X_i$ is the $D \times 2$ matrix constructed by concating the $(2i-1)$-th and $2i$-th columns of $X$, $Y_i$ be the $2\times D$ matrix constructed by concating the  $(2i-1)$-th and $2i$-th rows of $Y$. Similarly, we construct $Z_i, T_i$ for $Z,T$, respectively. By the second assumption, we have both $X_i$ and $Y_i$ have rank $2$. Moreover, from $XP_iY = ZP_iT$ and $XJ_iY = ZJ_iT$, one has $X_iY_i = Z_iT_i$ and $X_iJY_i = Z_iJT_i$. Let $V_X \in \mathbb{R}^{2 \times D}$ be the left inverse matrix of $X_i$ and $V_Y \in \mathbb{R}^{D \times 2}$ be the right inverse matrix of $Y_i$, i.e., $V_X X_i = Y_iV_Y = I_2$. One has
\begin{align}
    I_2 = (V_X X_i)(Y_iV_Y) = V_X(X_iY_i)V_Y = V_X(Z_iT_i)V_Y = (V_XZ_i)(T_iV_Y).
\end{align}
Let $U_i = V_XZ_i$. Then $U^{-1}_i = T_iV_Y$. Moreover, one has
\begin{align}
    X_i = X_i(Y_iV_Y) = (X_iY_i)V_Y = (Z_iT_i)V_Y =  Z_i(T_iV_Y) = Z_iU_i^{-1},
\end{align}
so $Z_i = X_iU_i$. Similarly, 
\begin{align}
    Y_i = (V_XX_i)Y_i & = V_X(X_iY_i) = V_X(Z_iT_i) = (V_XZ_i)T_i = U_iT_i,
\end{align}
so $T_i = U_i^{-1}Y_i$. Now, from $X_iJY_i = Z_iJT_i$, one has
\begin{align}
    J = (V_X X_i) J (Y_i V_Y) = V_X(X_iJY_i)V_Y = V_X(Z_iJT_i)V_Y = (V_XZ_i)J(T_iV_Y) = U_iJU_i^{-1}.
\end{align}
In other words, one has $U_iJ = JU_i$. Then, there exists $(a_i,b_i) \in \mathbb{R}^2 \setminus \{(0,0)\}$ such that $U_i = a_iI_2 + b_iJ$. In conclusion, one has $Z_i = X_iU_i$ and $T_i = U_i^{-1}Y_i$, where $U_i = a_iI_2 + b_iJ$ with $(a_i,b_i) \in \mathbb{R}^2 \setminus \{(0,0)\}$.

\textbf{Step 3.}

Define $U = \text{diag}(U_1, \ldots, U_m)$. From the property of $U_i$'s, we have
\begin{align}
    U = \sum_{i=1}^m (a_iP_i + b_iJ_i) ~~ \text{with} ~~ (a_i,b_i) \in \mathbb{R}^2 \setminus\{(0,0)\}  ~ \text{ for } ~ i = 1, \ldots, m,
\end{align}
and $Z = XU$ and $T = U^{-1}Y$. This concludes the proof.
\end{proof}

This result will be invoked in the proof of Theorem~\ref{appendix:main_theorem_2}.

\begin{remark}[On the assumptions of Lemma~\ref{appendix:lemma_rotary}]
If angles are not distinct or some equal $0$ or $\pi$, first merge blocks with equal $\theta$ and repeat the argument within each frequency class; the conclusion remains that $U$ must commute with $R$ (hence with each $J_i$) on the active subspaces.
If $\mathrm{rank}(X P_i)<2$ or $\mathrm{rank}(P_i Y)<2$ for some $i$, the same derivation shows $C_i$ must commute with $J_i$ on the image subspace; $C_i$ may be non-unique, but the global relation $Z=XU$, $T=U^{-1}Y$ with $U$ commuting with $R$ still describes the solution set restricted to the active coordinates.
\end{remark}

\begin{remark}[Concrete matrix forms]
We provide the explicit form of the matrices used in the above argument for the case $d=6$ (i.e., $m=3$), expressed in the standard basis $(e_1,\dots,e_6)$, to facilitate readability.

\begin{align*}
    P_1 &=
\begin{bmatrix}
1&0&0&0&0&0\\
0&1&0&0&0&0\\
0&0&0&0&0&0\\
0&0&0&0&0&0\\
0&0&0&0&0&0\\
0&0&0&0&0&0
\end{bmatrix}, \qquad 
    &&P_2 =
\begin{bmatrix}
0&0&0&0&0&0\\
0&0&0&0&0&0\\
0&0&1&0&0&0\\
0&0&0&1&0&0\\
0&0&0&0&0&0\\
0&0&0&0&0&0
\end{bmatrix},\qquad ~~P_3 =
\begin{bmatrix}
0&0&0&0&0&0\\
0&0&0&0&0&0\\
0&0&0&0&0&0\\
0&0&0&0&0&0\\
0&0&0&0&1&0\\
0&0&0&0&0&1
\end{bmatrix} \\
J_1 &=
\begin{bmatrix}
0&-1&0&0&0&0\\
1&0&0&0&0&0\\
0&0&0&0&0&0\\
0&0&0&0&0&0\\
0&0&0&0&0&0\\
0&0&0&0&0&0
\end{bmatrix}, \qquad &&J_2 =
\begin{bmatrix}
0&0&0&0&0&0\\
0&0&0&0&0&0\\
0&0&0&-1&0&0\\
0&0&1&0&0&0\\
0&0&0&0&0&0\\
0&0&0&0&0&0
\end{bmatrix}, \qquad J_3 =
\begin{bmatrix}
0&0&0&0&0&0\\
0&0&0&0&0&0\\
0&0&0&0&0&0\\
0&0&0&0&0&0\\
0&0&0&0&0&-1\\
0&0&0&0&1&0
\end{bmatrix},
\end{align*}

\begin{align*}
    R =
\begin{bmatrix}
\cos\theta_1 & -\sin\theta_1 & 0 & 0 & 0 & 0 \\
\sin\theta_1 & \cos\theta_1  & 0 & 0 & 0 & 0 \\
0 & 0 & \cos\theta_2 & -\sin\theta_2 & 0 & 0 \\
0 & 0 & \sin\theta_2 & \cos\theta_2  & 0 & 0 \\
0 & 0 & 0 & 0 & \cos\theta_3 & -\sin\theta_3 \\
0 & 0 & 0 & 0 & \sin\theta_3 & \cos\theta_3
\end{bmatrix},
\end{align*}

\begin{align*}
    U =
\begin{bmatrix}
a_1 & -b_1 & 0 & 0 & 0 & 0 \\
b_1 &  a_1 & 0 & 0 & 0 & 0 \\
0 & 0 & a_2 & -b_2 & 0 & 0 \\
0 & 0 & b_2 &  a_2 & 0 & 0 \\
0 & 0 & 0 & 0 & a_3 & -b_3 \\
0 & 0 & 0 & 0 & b_3 &  a_3
\end{bmatrix}, \qquad
    U^{-1} =
\begin{bmatrix}
\frac{a_1}{a_1^2+b_1^2} & \frac{b_1}{a_1^2+b_1^2} & 0 & 0 & 0 & 0 \\
-\frac{b_1}{a_1^2+b_1^2} & \frac{a_1}{a_1^2+b_1^2} & 0 & 0 & 0 & 0 \\
0 & 0 & \frac{a_2}{a_2^2+b_2^2} & \frac{b_2}{a_2^2+b_2^2} & 0 & 0 \\
0 & 0 & -\frac{b_2}{a_2^2+b_2^2} & \frac{a_2}{a_2^2+b_2^2} & 0 & 0 \\
0 & 0 & 0 & 0 & \frac{a_3}{a_3^2+b_3^2} & \frac{b_3}{a_3^2+b_3^2} \\
0 & 0 & 0 & 0 & -\frac{b_3}{a_3^2+b_3^2} & \frac{a_3}{a_3^2+b_3^2}
\end{bmatrix}.  
\end{align*}
\end{remark}

\section{Supplementary Details on the Matching Algorithm}\label{appendix:matching_algorithm}

\subsection{Lemmas and Proofs for the Algorithm}

\begin{lemma}\label{lemma:stage2_gradient}
Given matrices $ X, X', Y, Y' \in \mathbb{R}^{m \times n} $ with $ m \geq n $, find a matrix $ A \in \text{GL}(n) $ that minimizes the following objective function:
\begin{equation}
f(A) = \| X - X' A^\top  \|_F^2 + \| Y -  Y' A^{-1} \|_F^2,
\end{equation}
where $ \| \cdot \|_F $ denotes the Frobenius norm, and $ \text{GL}(n) $ is the general linear group of invertible $ n \times n $ matrices. Then $\frac{\partial f}{\partial A}$, the gradient w.r.t A, is:
\begin{equation}
2 (A X'^\top-X^\top) X' + 2 (A^{-1})^T Y'^T (Y - Y' A^{-1}) (A^{-1})^T.
\end{equation}
\end{lemma}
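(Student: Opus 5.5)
The plan is to compute the differential of $f$ directly and read off the gradient through the Frobenius inner product $\langle M,N\rangle=\operatorname{trace}(M^\top N)$. Split $f=f_1+f_2$ with $f_1(A)=\|X-X'A^\top\|_F^2$ and $f_2(A)=\|Y-Y'A^{-1}\|_F^2$, and treat each term separately. Since $\mathrm{GL}(n)$ is open in $\mathbb{R}^{n\times n}$, $f$ is smooth there. Hence it suffices to find, for each term, a matrix $G$ with $df(A)[H]=\langle G,H\rangle$ for every perturbation $H\in\mathbb{R}^{n\times n}$.

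For $f_1$, set $E_1=X-X'A^\top$. A perturbation $A\mapsto A+H$ gives $dE_1=-X'H^\top$, so
\[
df_1 = 2\langle E_1,-X'H^\top\rangle = -2\operatorname{trace}(E_1^\top X'H^\top) = -2\langle E_1^\top X',H\rangle .
\]
The gradient is therefore $-2E_1^\top X' = 2(AX'^\top - X^\top)X'$, which matches the first term of the claim.

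For $f_2$, set $E_2=Y-Y'A^{-1}$. The key ingredient is the standard identity $d(A^{-1})=-A^{-1}HA^{-1}$, obtained by differentiating $AA^{-1}=I$. This gives $dE_2=Y'A^{-1}HA^{-1}$, and so
\[
df_2 = 2\operatorname{trace}\big(E_2^\top Y'A^{-1}HA^{-1}\big) = 2\operatorname{trace}\big(A^{-1}E_2^\top Y'A^{-1}H\big),
\]
where the second equality uses cyclicity of the trace. Writing this as $\langle G_2,H\rangle$ identifies
\[
G_2 = 2\big(A^{-1}E_2^\top Y'A^{-1}\big)^\top = 2(A^{-1})^\top Y'^\top E_2 (A^{-1})^\top .
\]
This is exactly the second term of the claim. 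Summing the two gradients yields the stated formula.

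The only delicate step is the sign and transpose bookkeeping in the $f_2$ term, namely the derivative of the inverse together with the trace cyclicity. The first term is routine. To guard against errors in the second term, I would check it in the scalar case $n=1$. There $f_2=\|Y-Y'/a\|^2$ has derivative $2a^{-2}\,Y'^\top(Y-Y'/a)$, which agrees with the formula.
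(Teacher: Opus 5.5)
Your proposal is correct and follows essentially the same route as the paper: split $f$ into the two Frobenius terms, then handle the inverse term via $d(A^{-1})=-A^{-1}HA^{-1}$, trace cyclicity, and identification through $\langle G,H\rangle=\operatorname{trace}(G^\top H)$. The only cosmetic difference is in the first term, where the paper expands it into three traces and differentiates each with standard formulas, whereas you differentiate the residual $E_1$ directly; both give $2(AX'^\top-X^\top)X'$.
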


\begin{proof}
We adopt the matrix convention where the gradient is represented as a column vector. We aim to find an invertible matrix $ A \in \text{GL}(n) $ that minimizes the objective function
\begin{equation}
f(A) = \| X - X' A^\top \|_F^2 + \| Y - Y' A^{-1} \|_F^2,
\end{equation}
where $ X, X', Y, Y' \in \mathbb{R}^{m \times n} $ with $ m \geq n $, and $ \| \cdot \|_F $ denotes the Frobenius norm.

\textbf{Step 1. Express the Objective Function Using the Trace}

Since the Frobenius norm satisfies $ \| M \|_F^2 = \textnormal{trace}(M^T M) $, we can write
\begin{align}
f(A) &= \| X - X' A^\top \|_F^2 + \| Y - Y' A^{-1} \|_F^2 \notag \\&= \text{trace}\left( (X - X' A^\top)^\top (X - X' A^\top) \right) + \textnormal{trace}\left( (Y - Y' A^{-1})^\top (Y - Y' A^{-1}) \right).
\end{align}

\textbf{Step 2. Compute the Gradient} \allowdisplaybreaks

To find the minimum, we compute the gradient of $ f(A) $ with respect to $ A $ and set it to zero. Define $ g_1(A) = \| X - X' A^\top \|_F^2 $ and $ g_2(A) = \| Y - Y' A^{-1} \|_F^2 $, so $ f(A) = g_1(A) + g_2(A)$. To compute the gradient of $ g_1(A) = \| X - X' A^\top \|_F^2 $ with respect to $ A $, we first expand the expression using the trace property $ \| M \|_F^2 = \textnormal{trace}(M^\top M) $:
\begin{align}
g_1(A) &= \textnormal{trace}\left( (X - X' A^\top)^\top (X - X' A^\top) \right)= \textnormal{trace}\left( (X^\top - A X'^\top) (X - X' A^\top) \right) \notag\\
       &= \textnormal{trace}\left( X^\top X - X^\top X' A^\top - A X'^\top X + A X'^\top X' A^\top \right) \notag \\&= \textnormal{trace}(X^\top X) - 2\textnormal{trace}(X^\top X' A^\top)+ \textnormal{trace}(A X'^\top X' A^\top).
\end{align}
Now, we compute the gradient of each term with respect to $ A $:
\begin{align}
\frac{\partial}{\partial A} \textnormal{trace}(X^\top X) = 0, \qquad
\frac{\partial}{\partial A} (-2\textnormal{trace}(X^\top X' A^\top)) = -2X^\top X', \qquad
\frac{\partial}{\partial A} \textnormal{trace}(A X'^\top X' A^\top) = 2 A X'^\top X'.
\end{align}
Summing these results, we obtain the gradient of $ g_1(A) $:
\begin{align}
\frac{\partial g_1(A)}{\partial A} &= 0 - 2 X^\top X' + 2 A X'^\top X' = 2 (A X'^\top-X^\top) X'.
\end{align}
For the second term $ g_2(A) $, since it involves $ A^{-1} $, we use the differential. Note that $ d(A^{-1}) = - A^{-1} dA A^{-1} $. The differential of $ g_2(A) $ is
\begin{align}
d g_2 &= d \left[ \textnormal{trace}\left( (Y - Y' A^{-1})^T (Y - Y' A^{-1}) \right) \right] = 2 \textnormal{trace}\left( (Y - Y' A^{-1})^T d (Y - Y' A^{-1}) \right) \notag\\
      &= 2 \textnormal{trace}\left( (Y - Y' A^{-1})^T Y' (A^{-1} dA A^{-1}) \right)= 2 \textnormal{trace}\left( (Y - Y' A^{-1})^T Y' A^{-1} dA A^{-1} \right).
\end{align}
Using the cyclic property of the trace, $ \textnormal{trace}(P Q R S) = \textnormal{trace}(S P Q R) $, we adjust the expression:
\begin{align}
d g_2 = 2 \textnormal{trace}\left( (Y - Y' A^{-1})^T Y' A^{-1} dA A^{-1} \right) = 2 \textnormal{trace}\left( A^{-1} (Y - Y' A^{-1})^T Y' A^{-1} dA \right).
\end{align}

Since $ d g_2 = \textnormal{trace}\left( \left( \frac{\partial g_2}{\partial A} \right)^\top dA \right) $, we identify
\begin{align}
\frac{\partial g_2}{\partial A} =  (2 A^{-1} (Y - Y' A^{-1})^T Y' A^{-1})^T = 2 (A^{-1})^T Y'^T (Y - Y' A^{-1}) (A^{-1})^T.
\end{align}

Thus, the total gradient of $ f(A) $ is
\begin{align}
\frac{\partial f}{\partial A} = \frac{\partial g_1}{\partial A} + \frac{\partial g_2}{\partial A} = 2 (A X'^\top-X^\top) X' + 2 (A^{-1})^T Y'^T (Y - Y' A^{-1}) (A^{-1})^T.
\end{align}
This completes the proof.
\end{proof}

\begin{lemma}\label{lemma:orthogonal_init}
Given matrices $ X, X', Y, Y' \in \mathbb{R}^{m \times n} $ with $ m \geq n $, the orthogonal matrix $ A \in \mathbb{R}^{n \times n} $ satisfying $ A^\top A = I $ that minimizes the objective function:
\begin{align}
f(A) = \|X - X'A^\top\|_F^2 + \|Y - Y'A^{-1}\|_F^2,
\end{align}
 is $A = U V^\top$, 
where $ U, \Sigma, V $ are from the SVD of $ B = U \Sigma V^\top $, with $ B = X^\top X' + Y^\top Y' $.
\end{lemma}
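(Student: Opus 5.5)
The objective splits into two terms that share only the variable $A$, and on the orthogonal group both quadratic pieces become constants, so I will reduce the problem to an orthogonal Procrustes problem.

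First I use the constraint. For orthogonal $A$ we have $A^{-1}=A^\top$. Expanding with traces,
\begin{align*}
\|X - X'A^\top\|_F^2 = \|X\|_F^2 - 2\,\textnormal{trace}(X^\top X' A^\top) + \textnormal{trace}(A X'^\top X' A^\top).
\end{align*}
The last term equals $\|X'\|_F^2$ by cyclicity of the trace and $A^\top A = I$. The same expansion applies to $\|Y - Y'A^\top\|_F^2$. The objective therefore becomes
\begin{align*}
f(A) = \text{const} - 2\,\textnormal{trace}\big((X^\top X' + Y^\top Y')A^\top\big) = \text{const} - 2\,\textnormal{trace}(BA^\top).
\end{align*}
Minimizing $f$ over orthogonal $A$ is thus the same as maximizing $\textnormal{trace}(BA^\top)$ over orthogonal $A$.

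Next I solve this maximization. Substituting the SVD $B = U\Sigma V^\top$ gives
\begin{align*}
\textnormal{trace}(BA^\top) = \textnormal{trace}(\Sigma\, V^\top A^\top U) = \textnormal{trace}(\Sigma W), \qquad W \coloneqq V^\top A^\top U.
\end{align*}
The map $A \mapsto W$ is a bijection of the orthogonal group onto itself. Every entry of an orthogonal matrix has absolute value at most $1$, and $\Sigma$ has nonnegative diagonal entries. Hence
\begin{align*}
\textnormal{trace}(\Sigma W) = \sum_k \sigma_k W_{kk} \le \sum_k \sigma_k,
\end{align*}
with equality when $W = I$. The condition $W = I$ means $V^\top A^\top U = I$, i.e.\ $A^\top = VU^\top$, so $A = UV^\top$, which is the claimed minimizer.

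The only delicate point is the orientation of the transposes. The objective involves $A^\top$, not $A$, so I must make sure the matrix being maximized against is $B = X^\top X' + Y^\top Y'$ rather than its transpose; otherwise the formula would come out as $VU^\top$ instead of $UV^\top$. Non-uniqueness when $B$ is singular does not matter, since $UV^\top$ still attains the bound for any valid SVD.
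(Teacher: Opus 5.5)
Your proposal is correct and follows essentially the same route as the paper. Both proofs use $A^{-1}=A^\top$ to turn the quadratic terms into constants, reduce the problem to maximizing $\textnormal{trace}(A^\top B)$ with $B = X^\top X' + Y^\top Y'$, and bound $\textnormal{trace}(\Sigma W)$ using that $W = V^\top A^\top U$ is orthogonal, with equality at $A = UV^\top$.
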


\begin{proof}
Since $ A $ is orthogonal, $ A^{-1} = A^\top $, so the objective can be rewritten as:
\begin{align}
f(A) = \|X - X'A^\top\|_F^2 + \|Y - Y'A^\top\|_F^2. 
\end{align}

\textbf{Step 1.}

The Frobenius norm squared is $ \|M\|_F^2 = \text{trace}(M^\top M) $.
We expand the first term of $f(A)$:
\begin{align}
\|X - X'A^\top\|_F^2 &= \text{trace}\left((X - X'A^\top)^\top (X - X'A^\top)\right) = \text{trace}\left((X^\top - A X'^\top)(X - X'A^\top)\right) \notag\\
&= \text{trace}\left(X^\top X - X^\top X'A^\top - A X'^\top X + A X'^\top X'A^\top\right) \notag\\
&= \text{trace}(X^\top X) - 2\text{trace}(X^\top X'A^\top) + \text{trace}(A X'^\top X'A^\top).
\end{align}

For an orthogonal matrix $ A $, since $ A^\top A = A A^\top = I $ and the trace is invariant under cyclic permutations, we have:
\begin{align}
\text{trace}(A^\top X'^\top X' A) &= \text{trace}(X'^\top X' A A^\top) = \text{trace}(X'^\top X').
\end{align}
Thus,
\begin{align}
\|X - X'A^\top\|_F^2 &= \text{trace}(X^\top X) - 2 \text{trace}(X^\top X'A^\top) + \text{trace}(X'^\top X'). 
\end{align}
Similarly, for the second term regarding $Y$:
\begin{align}
\|Y - Y'A^\top\|_F^2 = \text{trace}(Y^\top Y) - 2 \text{trace}(Y^\top Y'A^\top) + \text{trace}(Y'^\top Y').
\end{align}
Substituting into $f(A)$:
\begin{align}
f(A) &= \text{trace}(X^\top X) + \text{trace}(X'^\top X') + \text{trace}(Y^\top Y) + \text{trace}(Y'^\top Y') - 2 \left( \text{trace}(X^\top X'A^\top) + \text{trace}(Y^\top Y'A^\top) \right).
\end{align}
The terms $ \text{trace}(X^\top X) $, $ \text{trace}(X'^\top X') $, $ \text{trace}(Y^\top Y) $, and $ \text{trace}(Y'^\top Y') $ are constant with respect to $ A $.
Thus, minimizing $ f(A) $ is equivalent to maximizing:
\begin{align}
g(A) = \text{trace}(X^\top X'A^\top) + \text{trace}(Y^\top Y'A^\top).
\end{align}
Using the linearity of the trace and the property $ \text{trace}(MN) = \text{trace}(NM) $:
\begin{align}
g(A) = \text{trace}(A^\top (X^\top X' + Y^\top Y')). 
\end{align}
Define $ B = X^\top X' + Y^\top Y' $. Then, the problem reduces to maximizing $ \text{trace}(A^\top B) $ over all orthogonal matrices $ A $.

\textbf{Step 2.}

Compute the singular value decomposition of $ B = U \Sigma V^\top$,
where $ U, V \in \mathbb{R}^{n \times n} $ are orthogonal matrices, and $ \Sigma = \text{diag}(\sigma_1, \dots, \sigma_n) $ is a diagonal matrix with non-negative singular values $ \sigma_i \geq 0 $. Then,
\begin{align}
\text{trace}(A^\top B) = \text{trace}(A^\top U \Sigma V^\top) = \text{trace}(V^\top A^\top U \Sigma). 
\end{align}
Define $ C = V^\top A^\top U $. Since $ A, U, V $ are orthogonal, $ C $ is also an orthogonal matrix. Thus, $\text{trace}(C \Sigma) = \sum_{i=1}^n c_{ii} \sigma_i$, 
where $ c_{ii} $ are the diagonal elements of $ C $. Since $ C $ is orthogonal, its columns (and rows) are orthonormal vectors, which implies $ |c_{ii}| \leq 1 $ for all $ i $. Therefore, $\text{trace}(C \Sigma) \leq \sum_{i=1}^n \sigma_i$, with equality when $ C = I $, i.e., $ c_{ii} = 1 $ for all $ i $ (assuming all $ \sigma_i \geq 0 $).

\textbf{Step 3.}

The maximum value of $ \text{trace}(A^\top B) $ is $ \sum_{i=1}^n \sigma_i $, achieved when $ C = I $, or $V^\top A^\top U = I$. This is equivalent to $A =  U V^\top$. Since $ A = U V^\top $ maximizes $ g(A) $, and $ f(A) $ is of the form $ \text{constant} - 2g(A) $, this choice of $ A $ minimizes $ f(A) $. This completes the proof.
\end{proof}

\begin{lemma}[Optimal Alignment for RoPE Query-Key Matrices]\label{lemma:stage1_qkrope_alignment}
Let $W_Q^{a}, W_K^{a} \in \mathbb{R}^{d \times d_h}$ and $W_Q^{b}, W_K^{b} \in \mathbb{R}^{d \times d_h}$ be the query and key weight matrices for a single attention head from two models, denoted $a$ and $b$. The problem of finding an alignment matrix $U \in \textnormal{H}(d_h)$ that minimizes the loss function

\begin{equation}
\mathcal{L}_{Q,K}(U) = \left\| W_Q^{a} - W_Q^{b} U^\top \right\|_F^2 + \left\| W_K^{a} - W_K^{b} U^{-1} \right\|_F^2
\end{equation}
over $U \in \text{H}(d_h)$ decouples into $d_h/2$ independent subproblems. For each subspace $j=1,\dots,d_h/2$, the subproblem of finding the optimal $2 \times 2$ matrix $U_j$ is equivalent to finding the minimizer $x^\star_j = \arg\min_{x>0} g_j(x)$ of the 1D scalar objective function
\begin{equation}
g_j(x) = x \, \eta_{Q,j} + \frac{\eta_{K,j}}{x} - 4\sqrt{|\gamma_{Q,j}|^2 x + \frac{|\gamma_{K,j}|^2}{x} + 2\textnormal{Re}(\gamma_{Q,j}\bar{\gamma}_{K,j})},
\end{equation}
where the constants $\eta_{Q,j}, \eta_{K,j}, \gamma_{Q,j}, \gamma_{K,j}$ are derived from the corresponding weight submatrices as defined in the proof below (with $\eta$ denoting squared Frobenius norms and $\gamma$ denoting complex correlation scalars). The optimal matrix $U_j^\star$ is then determined by the optimal value $x_j^\star$.
\end{lemma}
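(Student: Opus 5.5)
The plan is to use the block structure of $\text{H}(d_h)$ to split the loss into $d_h/2$ independent pieces, and to treat each $2\times 2$ block $aI+bJ$ as a nonzero complex number $u=a+ib$; each piece then reduces to a one-variable problem in $|u|$. Write $U=\sum_j (a_jP_j+b_jJ_j)=\text{diag}(U_1,\dots,U_{d_h/2})$ with $U_j=a_jI+b_jJ$.

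\textbf{Decoupling.} Right-multiplying $W_Q^b$ by the block-diagonal $U^\top$ acts independently on the column pairs $(2j-1,2j)$, and the same holds for $U^{-1}=\text{diag}(U_j^{-1})$. The squared Frobenius norm is a sum over column pairs, so
\[
\mathcal{L}_{Q,K}(U)=\sum_{j=1}^{d_h/2}\Big(\|W^a_{Q,j}-W^b_{Q,j}U_j^\top\|_F^2+\|W^a_{K,j}-W^b_{K,j}U_j^{-1}\|_F^2\Big),
\]
where $W_{\cdot,j}\in\mathbb{R}^{d\times 2}$ denotes the $j$-th column pair. Each summand depends only on $U_j$, and the constraint $(a_j,b_j)\neq(0,0)$ is separate for each $j$. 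Hence the minimization splits into $d_h/2$ independent subproblems.

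\textbf{Complexification.} Identify each row $(p,q)$ of a $d\times 2$ block with the complex number $p+iq$. A block then becomes a vector $z\in\mathbb{C}^d$, and the real Frobenius norm becomes the Hermitian norm. A direct check gives $(p,q)U_j^\top \leftrightarrow (p+iq)\,u_j$ and $(p,q)U_j^{-1}\leftrightarrow (p+iq)/\bar u_j$. Subproblem $j$ therefore becomes
\[
\min_{u\in\mathbb{C}^\times}\ \|z^a_Q-u\,z^b_Q\|^2+\|z^a_K-z^b_K/\bar u\|^2 .
\]
Set $\eta_{Q,j}=\|z^b_Q\|^2$, $\eta_{K,j}=\|z^b_K\|^2$, $\gamma_{Q,j}=\langle z^a_Q,z^b_Q\rangle$ and $\gamma_{K,j}=\langle z^a_K,z^b_K\rangle$ (Hermitian inner products). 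Writing $u=re^{i\phi}$, so that $1/\bar u=e^{i\phi}/r$, and expanding gives
\[
\text{const}+r^2\eta_{Q,j}+\frac{\eta_{K,j}}{r^2}-2\,\mathrm{Re}\Big(e^{i\phi}\big(r\gamma_{Q,j}+\gamma_{K,j}/r\big)\Big).
\]

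\textbf{Eliminating the phase.} For fixed $r$, the minimum over $\phi$ is attained when $e^{i\phi}$ aligns with the conjugate of $r\gamma_{Q,j}+\gamma_{K,j}/r$. The cross term then equals $-2|r\gamma_{Q,j}+\gamma_{K,j}/r|$. Substituting $x=r^2>0$ and expanding the modulus gives
\[
\Big|r\gamma_{Q,j}+\tfrac{\gamma_{K,j}}{r}\Big|^2=|\gamma_{Q,j}|^2x+\frac{|\gamma_{K,j}|^2}{x}+2\mathrm{Re}(\gamma_{Q,j}\bar\gamma_{K,j}),
\]
which is exactly the expression under the square root in $g_j$. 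The numerical factor in front of the square root ($2$ here, $4$ in the statement) depends only on how $\eta$ and $\gamma$ are normalized. I will fix the normalization of $\gamma$ so that it matches the stated form of $g_j$. The minimizer is then recovered as $U_j^\star=a_jI+b_jJ$ with $a_j+ib_j=\sqrt{x_j^\star}\,e^{i\phi^\star(x_j^\star)}$.

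\textbf{Expected obstacle.} The hard part is bookkeeping rather than any deep step. The conjugations and transposes must be tracked carefully, in particular that $U^\top$ corresponds to multiplication by $u$ and $U^{-1}$ to division by $\bar u$. One also needs $\arg\min_{x>0}g_j$ to exist. This holds when $\eta_{Q,j},\eta_{K,j}>0$: $g_j$ is continuous, and the linear term $x\eta_{Q,j}$ dominates the $O(\sqrt{x})$ square root as $x\to\infty$, while $\eta_{K,j}/x$ dominates the $O(1/\sqrt{x})$ square root as $x\to0^+$, so $g_j\to\infty$ at both ends. If some $\eta$ vanishes, a short degenerate case analysis is needed. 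Finally, when $r\gamma_{Q,j}+\gamma_{K,j}/r=0$ the optimal phase is not unique, but any phase attains the minimum.
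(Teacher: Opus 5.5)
Your proposal is correct and follows essentially the same route as the paper: decouple over the $2\times 2$ blocks, identify each block $aI+bJ$ with $u=a+ib$, optimize the phase for fixed $r=|u|$, and substitute $x=r^2$. The paper does the bookkeeping with the traces $\mathrm{tr}(C)$ and $\mathrm{tr}(CJ)$ of $C=(Q^a)^\top Q^b$ rather than with complexified rows. Its choice $\gamma=\tfrac12\bigl(\mathrm{tr}(C)+i\,\mathrm{tr}(CJ)\bigr)$ is exactly half of your $\langle z^a,z^b\rangle$ (taken conjugate-linear in the first slot), which is what turns your factor $2$ into the stated $4$; your existence argument for $\arg\min_{x>0}g_j$ when $\eta_{Q,j},\eta_{K,j}>0$ is a small addition the paper omits.
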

\begin{proof} We proceed the proofs step-by-step.

\textbf{Step 1.} 

The loss $\mathcal{L}_{Q,K}(U)$ decouples independently across the $d_h/2$ orthogonal 2D subspaces. For each subspace $j=1,\dots,d_h/2$, the corresponding loss term is
\begin{align}
\mathcal{L}_{Q,K}^{(j)}(U_j) = \left\| Q_j^a - Q_j^b U_j^\top \right\|_F^2 + \left\| K_j^a - K_j^b U_j^{-1} \right\|_F^2,
\end{align}
where $Q_j^m = W_{Q,j}^{m}$ and $K_j^m = W_{K,j}^{m} \in \mathbb{R}^{d \times 2}$ are the submatrices for $m \in \{a,b\}$, and $U_j = \left(\begin{smallmatrix} a_j & -b_j \\ b_j & a_j \end{smallmatrix} \right)\in \text{H}(2)$. Each $\mathcal{L}_{Q,K}^{(j)}$ can be minimized independently. For simplicity, we omit the index $j$ in the following. The goal is to find the matrix $U = \left(\begin{smallmatrix} a & -b \\ b & a \end{smallmatrix}\right)$ that minimizes the loss:
\begin{align}
\mathcal{L}(a,b) = \big\|Q^a - Q^b U^\top\big\|_F^2 + \big\|K^a - K^b U^{-1}\big\|_F^2.
\end{align}

\textbf{Step 2.} 

The problem simplifies by identifying the matrix $U$ with a complex number $z = a + ib$. The squared magnitude is $r^2 = a^2 + b^2 = |z|^2 = \det(U)$. Key properties are $U^\top U = U U^\top = r^2 I$ and $U^{-1} = \frac{1}{r^2} U^\top$. Using the property $\|M\|_F^2 = \textnormal{tr}(M^\top M)$, we expand the loss function: By dropping the constant terms $\|Q^a\|_F^2 + \|K^a\|_F^2$, the objective to minimize is:
\begin{align}
\mathcal{L} &= -2\textnormal{tr}((Q^a)^\top Q^b U^\top) + \textnormal{tr}(U (Q^b)^\top Q^b U^\top) -2\textnormal{tr}((K^a)^\top K^b U^{-1}) + \textnormal{tr}((U^{-1})^\top (K^b)^\top K^b U^{-1}) \notag\\
&= -2\textnormal{tr}(C_Q U^\top) + r^2 \|Q^b\|_F^2 - 2\textnormal{tr}(C_K U^{-1}) + \frac{1}{r^2}\|K^b\|_F^2 = r^2 \eta_Q + \frac{\eta_K}{r^2} - 2\textnormal{tr}(C_Q U^\top) - \frac{2}{r^2}\textnormal{tr}(C_K U^\top),
\end{align}
where the constants are defined as $\eta_Q = \|Q^b\|_F^2$, $\eta_K = \|K^b\|_F^2$, $C_Q = (Q^a)^\top Q^b$, and $C_K = (K^a)^\top K^b$. To express the trace terms in complex form, note that $U^\top = aI - bJ$. This yields the identity $\textnormal{tr}(C U^\top) = a\textnormal{tr}(C) - b\textnormal{tr}(C J) = 2\textnormal{Re}(\gamma z)$, where the complex scalar $\gamma = \frac{1}{2}(\textnormal{tr}(C) + i\textnormal{tr}(C J))$. Applying this, the loss becomes:
\begin{align}
\mathcal{L}(z) = |z|^2 \eta_Q + \frac{\eta_K}{|z|^2} - 4\textnormal{Re}(\gamma_Q z) - \frac{4}{|z|^2}\textnormal{Re}(\gamma_K z),
\end{align}
where $\gamma_Q$ and $\gamma_K$ are complex constants derived from $C_Q$ and $C_K$ respectively. Express $z$ in polar form as $z = r e^{i\theta}$, where $r = |z| > 0$. The loss function can be rewritten to isolate terms dependent on the phase angle $\theta$:
\begin{align}
\mathcal{L}(r, \theta) = r^2 \eta_Q + \frac{\eta_K}{r^2} - 4\textnormal{Re}\left( \left(r \gamma_Q + \frac{1}{r} \gamma_K\right) e^{i\theta} \right).
\end{align}
First, optimize the phase $\theta$ for a fixed magnitude $r$. The expression is minimized by maximizing the real part term. The maximum value of $\textnormal{Re}(C e^{i\theta})$ is $|C|$, achieved when $e^{i\theta}$ has angle $-\arg(C)$. Thus, the optimal phase $\theta^\star$ for a given $r$ is:
\begin{align}
\theta^\star(r) = -\arg\left(r \gamma_Q + \frac{1}{r} \gamma_K\right).
\end{align}
Substituting $\theta^\star$ back into the loss yields a 1D scalar objective function depending only on $r$:
\begin{align}
g(r) = r^2 \eta_Q + \frac{\eta_K}{r^2} - 4\left|r \gamma_Q + \frac{1}{r} \gamma_K\right|.
\end{align}
For algebraic convenience, substitute $x = r^2 > 0$. The squared norm term expands as:
\begin{align}
\left|r \gamma_Q + \frac{1}{r} \gamma_K\right|^2 &= \left(\sqrt{x} \gamma_Q + \frac{1}{\sqrt{x}} \gamma_K\right) \left(\sqrt{x} \bar{\gamma}_Q + \frac{1}{\sqrt{x}} \bar{\gamma}_K\right) = x|\gamma_Q|^2 + \frac{1}{x}|\gamma_K|^2 + 2\textnormal{Re}(\gamma_Q\bar{\gamma}_K).
\end{align}
Letting $A = |\gamma_Q|^2$, $B = |\gamma_K|^2$, and $C = 2\textnormal{Re}(\gamma_Q\bar{\gamma}_K)$, the objective function in terms of $x$ is:
\begin{align}
g(x) = x \eta_Q + \frac{\eta_K}{x} - 4\sqrt{A x + \frac{B}{x} + C}.
\end{align}

\textbf{Step 3.} 

To minimize $g(x)$ for $x > 0$, find stationary points by solving $g'(x)=0$:
\begin{align}
g'(x) = \eta_Q - \frac{\eta_K}{x^2} - \frac{2\left(A - \frac{B}{x^2}\right)}{\sqrt{A x + \frac{B}{x} + C}} = 0.
\end{align}
Isolating  the square root term and square both sides, we have
\begin{align}
\left(\eta_Q - \frac{\eta_K}{x^2}\right)^2 = \frac{4\left(A - \frac{B}{x^2}\right)^2}{A x + \frac{B}{x} + C}.
\end{align}
Multiplying by the denominator and clearing fractions by multiplying by $x^4$ yields:
\begin{align}
(\eta_Q x^2 - \eta_K)^2 (A x^2 + C x + B) = 4x (A x^2 - B)^2.
\end{align}
The left side has degree 6 in $x$, while the right side has degree 5, so the stationarity condition corresponds to finding roots of a 6th-degree polynomial.

\textbf{Step 4.} 

Since solving a 6th-degree polynomial analytically is generally infeasible and numerical root-finding can be unstable, a more robust approach is to directly minimize the scalar function $g(x)$ using a 1D optimization method. The procedure is as follows:
\begin{enumerate}
    \item Compute the scalar constants $\eta_Q, \eta_K$ and the complex constants $\gamma_Q, \gamma_K$.
    \item Define the objective function $g(x) = x \eta_Q + \frac{\eta_K}{x} - 4\sqrt{|\gamma_Q|^2 x + \frac{|\gamma_K|^2}{x} + 2\textnormal{Re}(\gamma_Q\bar{\gamma}_K)}$.
    \item Find the minimizer $x^\star = \arg\min_{x>0} g(x)$ using a numerical optimization routine. Here we use the Brent's method~\citep{brent2013algorithms}. The optimal solution is computed as $r^\star = \sqrt{x^\star}$, $\theta^\star = -\arg\left(r^\star \gamma_Q + \frac{1}{r^\star} \gamma_K\right)$, and finally $a = r^\star \cos(\theta^\star)$, $b = r^\star \sin(\theta^\star)$.
\end{enumerate}
This yields the optimal alignment matrix $U_j$ for each subspace $j$.
\end{proof}

\subsection{Algorithm Description}
\begin{algorithm}[H]
\begin{algorithmic}[0]
\caption{Attention Layer Alignment}
\label{alg:attention_alignment}
\STATE \textbf{Input:} $\theta^A$, $\theta^B$.
\STATE \textbf{Output:} Aligned $\theta^{B, \text{aligned}}$.
\STATE \% Stage 1: Head Permutation
\STATE Compute cost matrix $C$.
\STATE Solve LAP for $\pi^*$.
\STATE Reorder $\theta^B \leftarrow \pi^*(\theta^B)$.
\STATE \% Stage 2: Internal Parameter Alignment
\FOR{$i=1$ to $h$}
    \STATE \% Align $Q,K$
    \IF{standard MHA}
        \STATE Minimize $\mathcal{L}_{Q,K}(U_i)$ over $\text{GL}(d_h)$.
    \ELSE
        \STATE Minimize $\mathcal{L}_{Q,K}(U_i)$ over $\text{H}(d_h)$.
    \ENDIF
    \STATE Update: $W_{i,B}^{Q} \leftarrow W_{i,B}^{Q} U_i^\top$, $W_{i,B}^{K} \leftarrow W_{i,B}^{K} U_i^{-1}$.
    \STATE \% Align $V,O$
    \STATE Minimize $\mathcal{L}_{V,O}(V_i)$ over $\text{GL}(d_h)$.
    \STATE Update: $W_{i,B}^{V} \leftarrow W_{i,B}^{V} V_i^{-1}$, $W_{i,B}^{O} \leftarrow V_i W_{i,B}^{O}$.
\ENDFOR
\STATE \textbf{return} $\theta^{B, \text{aligned}}$
\end{algorithmic}
\end{algorithm}
\section{Impact of Attention Reinitialization on Pretrained Transformer Performance}\label{appendix:attention_reinit}
We investigate the effect of targeted attention reinitialization on pretrained Transformer models. Unlike feedforward blocks, attention layers govern contextual interactions and strongly influence early representations. To assess their contribution, we reset the parameters of individual attention modules using standard initialization, while keeping embeddings, LayerNorms, and feedforward blocks fixed. Models are then evaluated directly on their pretrained tasks without fine-tuning. Our study considers ViT-Base on ImageNet-1K for image classification and GPT-2 on WikiText103 for language modeling, with performance measured in accuracy and perplexity, respectively. Figures~\ref{fig:imagenet_attn_idx} and \ref{fig:wikitext103_attn_idx} summarize the results across layers.

We find that reinitializing attention layers beyond the first generally leads to only modest degradation, whereas resetting the initial layer produces a pronounced drop in performance. This asymmetry indicates that early attention plays a uniquely critical role in anchoring representations, while deeper layers remain more resilient due to residual connections and redundancy in the architecture. Based on these findings, subsequent experiments on linear mode connectivity focus on reinitializing the first attention layer, as it provides the most consistent and informative signal of model sensitivity.
\begin{figure}[htp]
\centering
\includegraphics[width=1.0\linewidth]{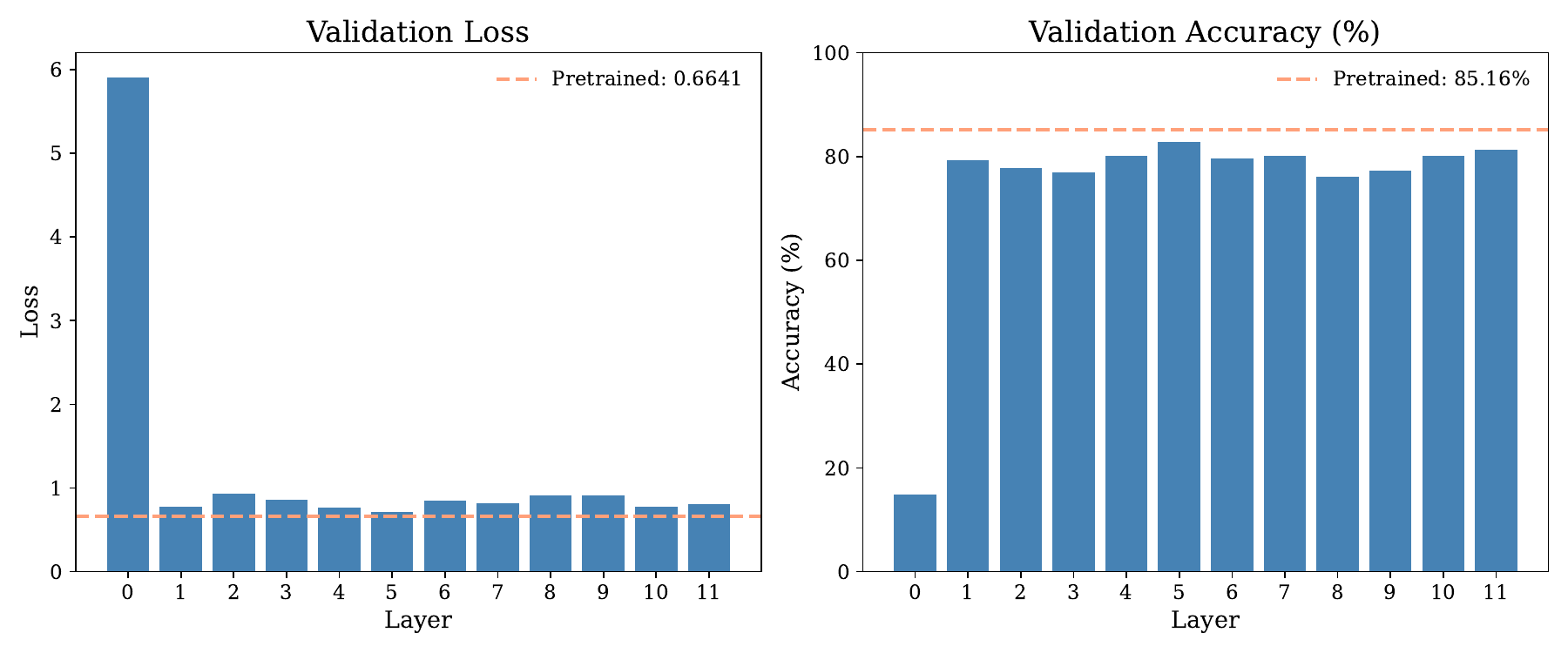}
\caption{Performance degradation in ViT-Base on ImageNet due to attention reinitialization at different layers.}
\label{fig:imagenet_attn_idx}
\end{figure}

\begin{figure}[htp]
\centering
\includegraphics[width=1.0\linewidth]{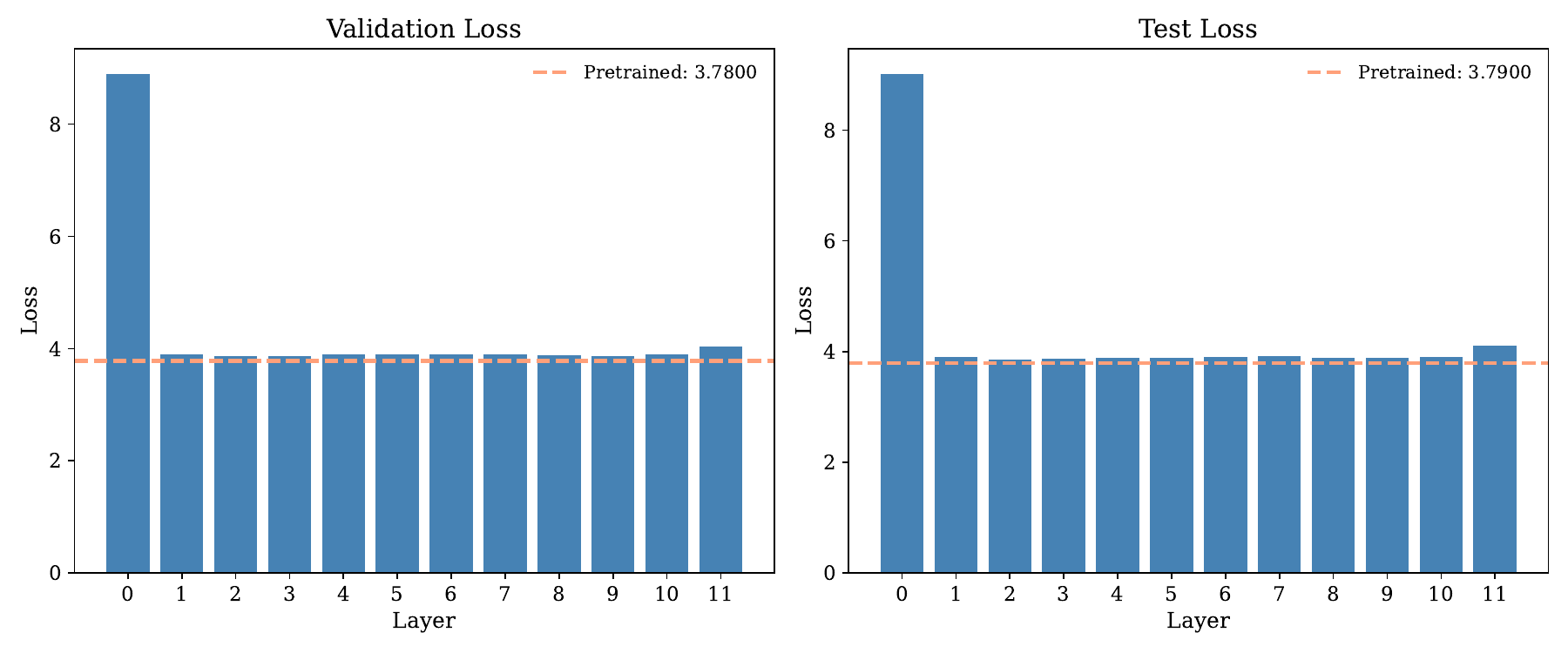}
\caption{Effect of attention reinitialization on GPT-2 perplexity across layers on WikiText103.}
\label{fig:wikitext103_attn_idx}
\end{figure}

\section{Experimental Details and Hyperparameters}
\label{appendix:hyperparams}
Our experiments assess Linear Mode Connectivity (LMC) across a broad spectrum of benchmarks in both vision and natural language processing. The vision suite covers MNIST, CIFAR-10, CIFAR-100, ImageNet-1k, and transfer from ImageNet-21k to smaller classification datasets. For language, we include generative modeling with WikiText103, Enwik8, and the One Billion Word benchmark, together with supervised classification tasks such as AGNews, IMDB reviews, and DBpedia. Each experiment builds on pretrained Transformer architectures, where the core weights remain fixed and only selected attention modules are re-initialized for fine-tuning. Vision tasks use Vision Transformer (ViT) backbones, autoregressive language modeling relies on GPT-2, and text classification tasks are handled by BERT.

\textbf{AGNews.}  For the AGNews dataset, we adopt a compact BERT-style encoder with embedding dimension 96, hidden size 384, and vocabulary size 15,000. Models are trained with depths of 2 or 6 layers and attention configurations of 4 or 8 heads. Pretraining is carried out using the Adam optimizer with a batch size of 512 and learning rate $1 \cdot 10^{-3}$, for up to 6 epochs until convergence. 

\textbf{IMDBreview.} For the IMDB dataset, we adopt a compact BERT-style encoder with embedding dimension 96, hidden size 384, and vocabulary size 15,000. Models are trained with depths of 2, or 6 layers and attention configurations of 4 or 8 heads. Pretraining is performed using the Adam optimizer with a batch size of 128 and learning rate $3 \cdot 10^{-4}$, for up to 7 epochs until convergence. 

\textbf{DBPedia.}  
For the DBPedia dataset, we adopt a compact BERT-style encoder with embedding dimension 96, hidden size 384, vocabulary size 30{,}522, and 219 output classes (max sequence length 256). Models are trained with depths of 2 or 6 layers and attention configurations of 4 or 8 heads. Pretraining is carried out using the Adam optimizer with a batch size of 256 and learning rate $1 \times 10^{-3}$ under a linear decay schedule, for up to 5 epochs until convergence.

\textbf{Enwik8.}  For the Enwik8 dataset, we employ a GPT-2 style Transformer with 12 layers, hidden size of 512, 8 attention heads, and an intermediate size of 2048. The context length is set to 512 tokens, with memory length 512 and evaluation length 128. Pretraining is performed using the Adam optimizer with a batch size of 24 and an initial learning rate of $2.5 \cdot 10^{-4}$, following a cosine decay schedule without warmup, for a total of 60000 steps.  During fine-tuning, we replace the pretrained attention modules with variants containing 4, 8, or 16 heads, and train for 60000 steps.

\textbf{WikiText103.} For the WikiText103 benchmark, we adopt a GPT-2 style Transformer with 12 layers, hidden size of 192, 3 attention heads, and an intermediate size of 768. The model uses learned attention biases, with context length, memory length, and evaluation length all set to 256 tokens. Training is conducted with the Adam optimizer using a batch size of 64 and an initial learning rate of $2.5 \cdot 10^{-4}$. A linear warmup of 2000 steps is followed by a cosine decay learning rate schedule. The pretraining phase runs for 60k steps. For fine-tuning, we replace the attention modules with variants containing 2, 3, or 4 heads, and train each configuration for 60000 steps. 

\textbf{One Billion Word.} For the One Billion Word benchmark, we employ a GPT-2 style Transformer-based language model with sinusoidal positional embeddings, 12 layers, hidden size of 768, 12 attention heads, and an intermediate size of 3072. The vocabulary size is 793,470. Pretraining is performed with target sequence length 256, memory length 256, and evaluation sequence length 256. The model is trained using Adam with a batch size of 96, an initial learning rate of $2.5 \cdot  10^{-4}$, and a cosine decay learning rate schedule with 2000 warmup steps. Training is run for 500000 steps with random seed fixed at 0 for reproducibility.  For fine-tuning, we replace the attention mechanism with variants containing 8, 12, or 16 heads. Each configuration is fine-tuned for 100000 steps.  

\textbf{MNIST.}  
For the MNIST dataset, we adopt a lightweight Vision Transformer with patch size 7, embedding dimension 16, hidden size 64, and depths of 1 or 2 layers paired with 4 or 8 attention heads. Pretraining is carried out using the Adam optimizer with a learning rate of $5 \times 10^{-3}$, training to validation convergence (typically 60–80 epochs, depending on configuration). 

\textbf{CIFAR-10.} For CIFAR-10, we use a Vision Transformer with patch size 4, embedding dimension 128, hidden size 512, and depths of 2, 4, or 6 layers paired with 4 or 8 attention heads. Images are normalized with CIFAR-10 statistics and augmented using random resized crop, horizontal flip, and rotation. Pretraining is performed with the Adam optimizer at a learning rate of $5\times10^{-3}$ for 100 epochs with batch size 100.

\textbf{CIFAR-100.} For CIFAR-100, we adopt a Vision Transformer with patch size 4, embedding dimension 128, hidden size 512, and depths of 6 layers paired with 4 or 8 attention heads. Images are normalized using standard CIFAR-100 statistics and augmented with random resized crop, horizontal flip, and rotation. Pretraining is conducted with the Adam optimizer at a learning rate of $5\times 10^{-3}$ for 100 epochs and batch size 100.

\textbf{Imagenet21k$\to$CIFAR10.} We adopt the ViT-Small-Patch16-224 model, pretrained on ImageNet-21k and subsequently fine-tuned on CIFAR-10. The model consists of 12 layers, a hidden size of 384, an MLP size of 1536, and 6 attention heads, resulting in approximately 22.2M parameters. It employs a patch size and stride of 16. Dropout is disabled (set to 0.0), and the activation function is \texttt{gelu}. Stochastic Gradient Descent (SGD)  is employed during fine-tuning.

\textbf{Imagenet21k$\to$CIFAR100.} We adopt the ViT-Small-Patch16-224 model, pretrained on ImageNet-21k and subsequently fine-tuned on CIFAR-100. The model consists of 12 layers, a hidden size of 384, an MLP size of 1536, and 6 attention heads, resulting in approximately 22.2M parameters. It employs a patch size and stride of 16. Dropout is disabled (set to 0.0), and the activation function is \texttt{gelu}. Stochastic Gradient Descent (SGD)  is employed during fine-tuning.

\textbf{ImageNet-1k.} For ImageNet-1k, we utilize a pretrained Vision Transformer with the following configuration: hidden size of 768, 12 Transformer layers, 12 attention heads, and an intermediate size of 3072. Training is performed for 300 epochs with a batch size of 256 using the Adam optimizer and an initial learning rate of $5\cdot 10^{-4}$. The learning rate follows a cosine decay schedule with 5 epochs of linear warmup. The \texttt{gelu} activation function is employed throughout the network, and both attention and hidden dropout rates are set to 0.0.  During fine-tuning, we systematically replace the pretrained attention layers with variants containing 8, 12, or 16 heads. Depending on the number of re-initialized layers, the fine-tuning budget is set to 30, 50, 100, or 300 epochs, respectively.

\textbf{Runtime Environment.}  
All experiments were executed on NVIDIA H100 GPUs with 80GB of memory. A single GPU was sufficient for every task, except for the One Billion Word benchmark, which required two GPUs. Since training was implemented in JAX, approximately 75\% of the GPU memory (about 60GB) was pre-allocated by default. For data loading and preprocessing, the number of CPU workers was limited to 10. In terms of wall-clock time, small-scale benchmarks--including MNIST, CIFAR-10, CIFAR-100, transfer learning from ImageNet-21k, and text classification datasets (AGNews, IMDB reviews, DBPedia)--each completed in under 30 minutes. For language modeling, both WikiText103 and Enwik8 required about 2 hours for pretraining and fine-tuning. The One Billion Word benchmark was more computationally demanding, requiring up to 2 days. On the vision side, ImageNet-1k fine-tuning could take as long as 6 days, depending on the configuration.
\section{Experiments}\label{appendix:experiments}

\subsection{Linear Mode Connectivity for Attention First Layer}\label{appendix:experiment_attn_first}

\begin{figure}[H]
    \centering
    \begin{subfigure}{0.48\textwidth}
        \centering
        \includegraphics[width=1\textwidth]{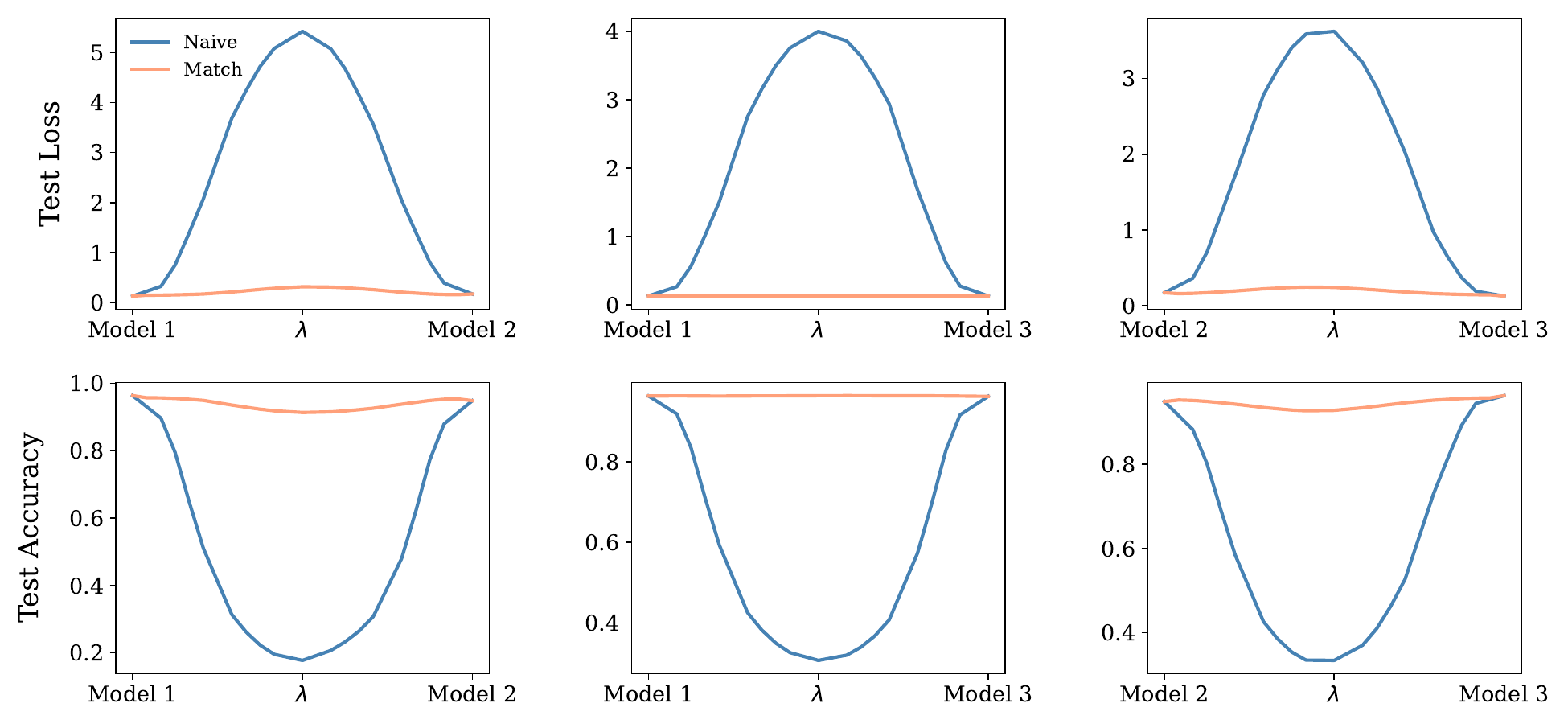} 
        \caption{4 attention heads}
        \label{fig:attn-mnist-1-4-0}
    \end{subfigure}
    \hfill
    \begin{subfigure}{0.48\textwidth}
        \centering
        \includegraphics[width=1\textwidth]{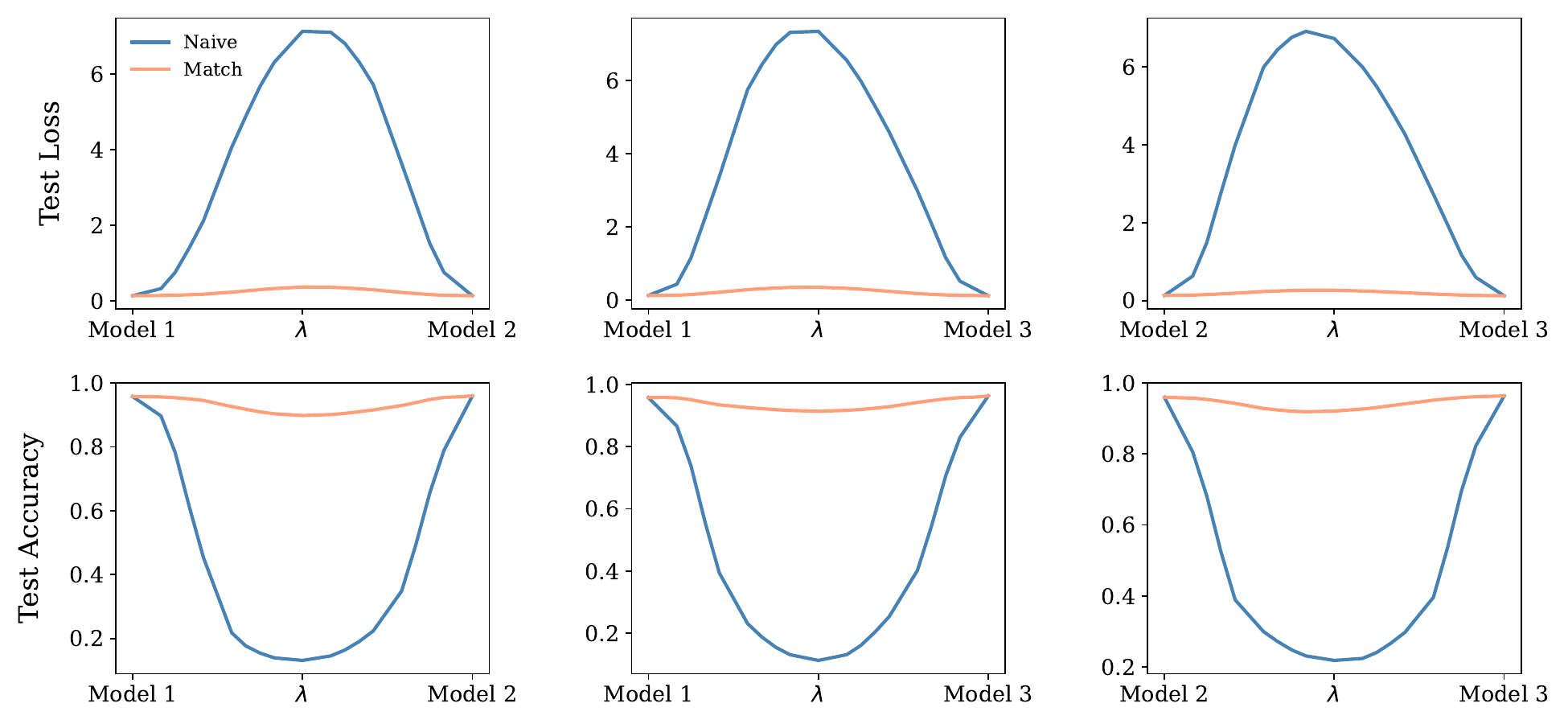} 
        \caption{8 attention heads}
        \label{fig:attn-mnist-1-8-0}
    \end{subfigure}
    \caption{Linear Mode Connectivity for ViT on MNIST with 1 layer}
\end{figure}

\begin{figure}[H]
    \centering
    \begin{subfigure}{0.48\textwidth}
        \centering
        \includegraphics[width=1\textwidth]{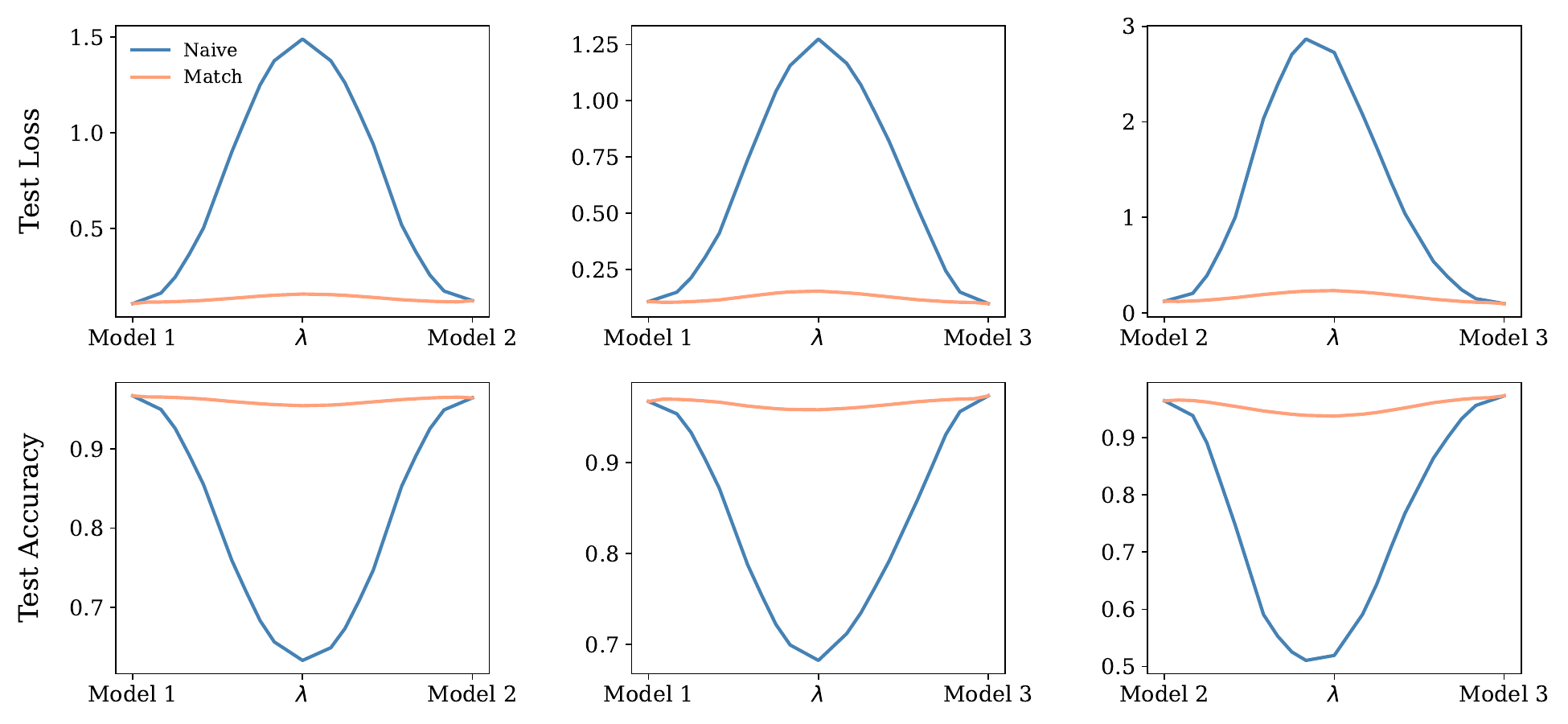} 
        \caption{4 attention heads}
        \label{fig:attn-mnist-2-4-0}
    \end{subfigure}
    \hfill
    \begin{subfigure}{0.48\textwidth}
        \centering
        \includegraphics[width=1\textwidth]{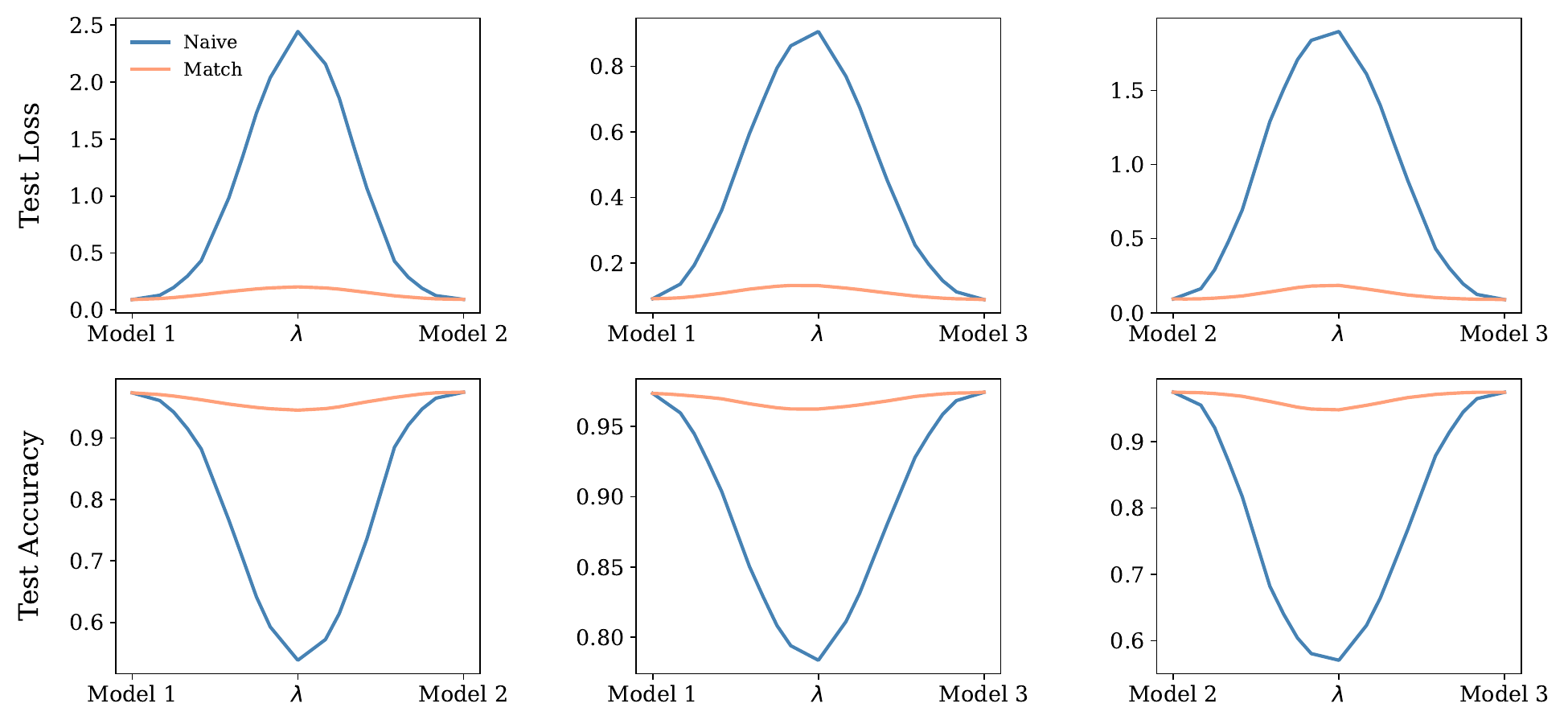} 
        \caption{8 attention heads}
        \label{fig:attn-mnist-2-8-0}
    \end{subfigure}
    \caption{Linear Mode Connectivity for ViT on MNIST with 2 layers}
\end{figure}

\begin{figure}[H]
    \centering
    \begin{subfigure}{0.48\textwidth}
        \centering
        \includegraphics[width=1\textwidth]{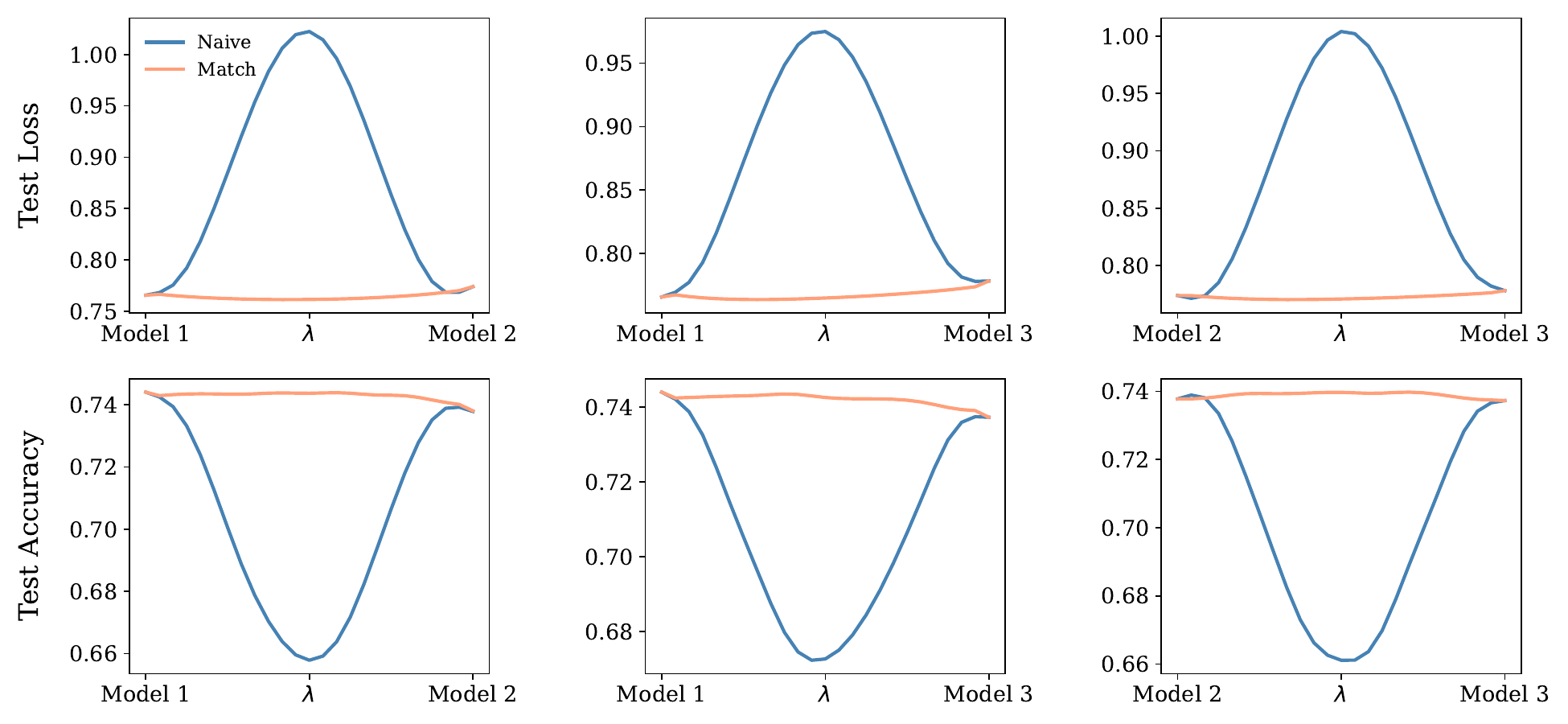} 
        \caption{4 attention heads}
        \label{fig:attn-cifar10-2-4-0}
    \end{subfigure}
    \hfill
    \begin{subfigure}{0.48\textwidth}
        \centering
        \includegraphics[width=1\textwidth]{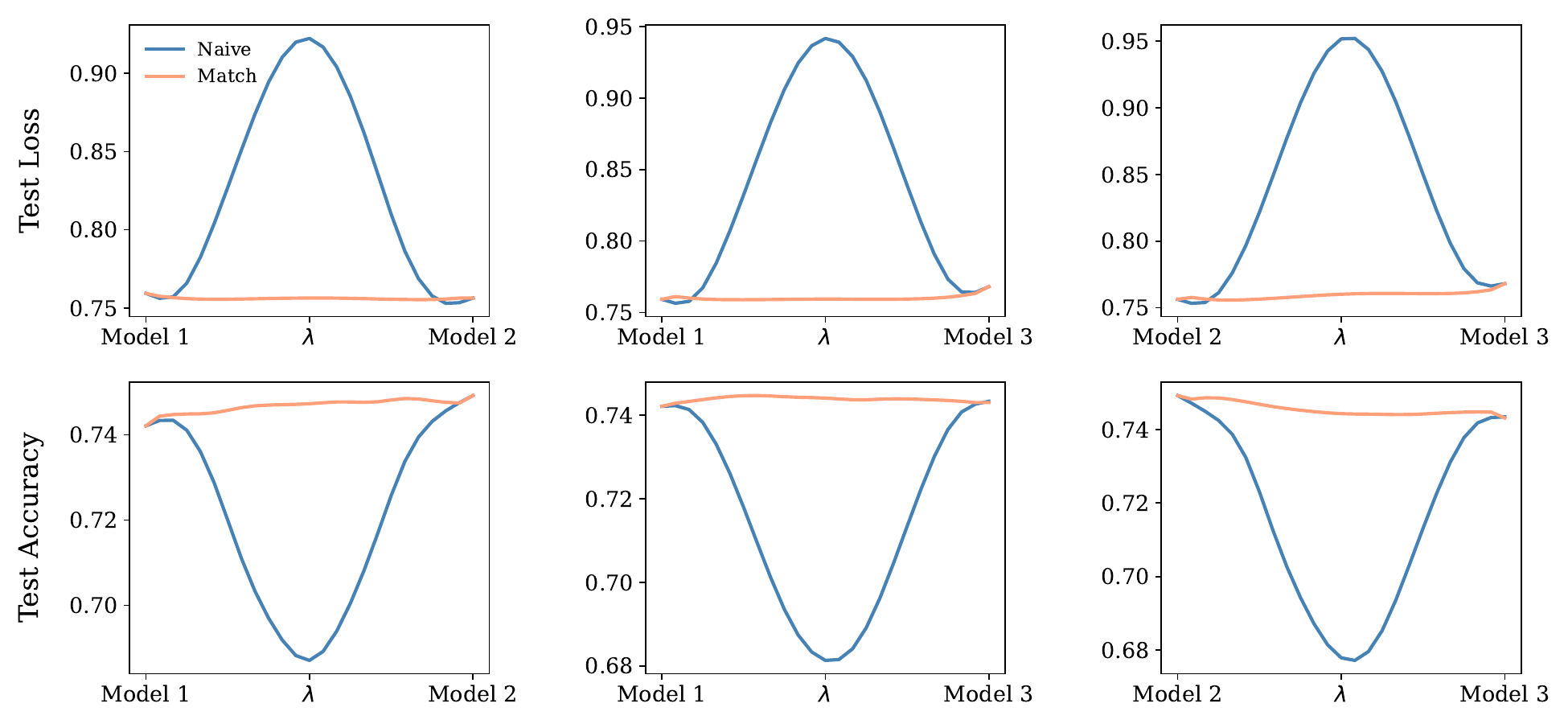} 
        \caption{8 attention heads}
        \label{fig:attn-cifar10-2-8-0}
    \end{subfigure}
    \caption{Linear Mode Connectivity for ViT on CIFAR-10 with 2 layers}
\end{figure}

\begin{figure}[H]
    \centering
    \begin{subfigure}{0.48\textwidth}
        \centering
        \includegraphics[width=1\textwidth]{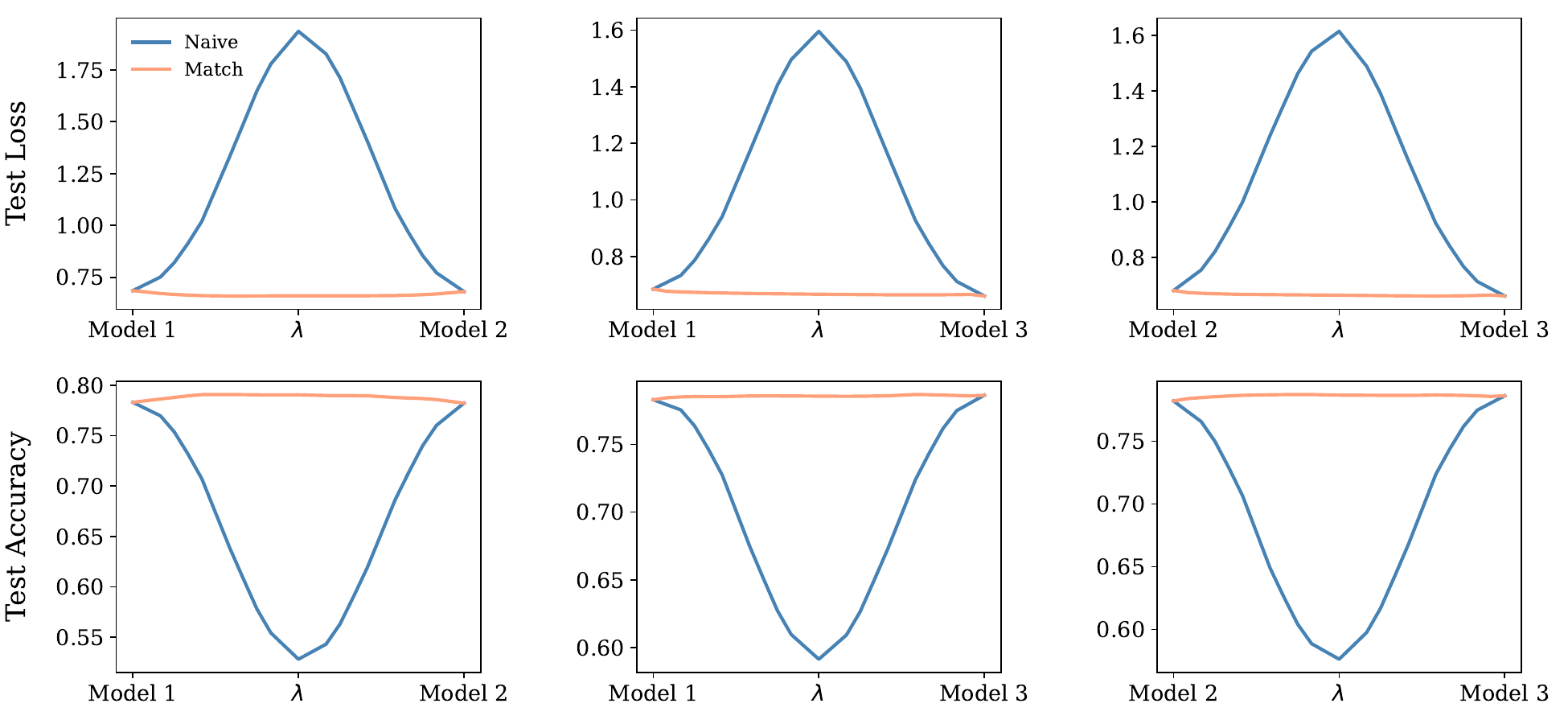} 
        \caption{4 attention heads}
        \label{fig:attn-cifar10-4-4-0}
    \end{subfigure}
    \hfill
    \begin{subfigure}{0.48\textwidth}
        \centering
        \includegraphics[width=1\textwidth]{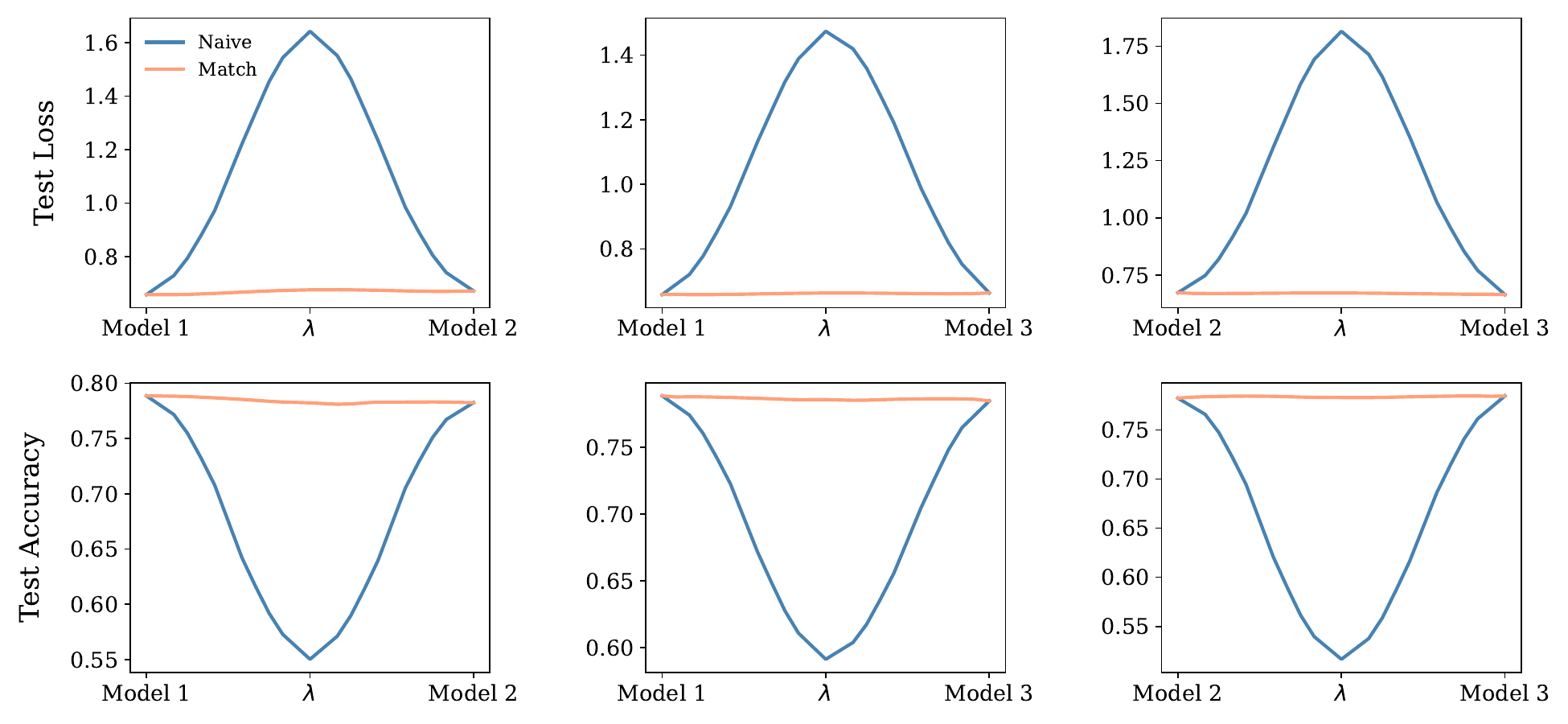} 
        \caption{8 attention heads}
        \label{fig:attn-cifar10-4-8-0}
    \end{subfigure}
    \caption{Linear Mode Connectivity for ViT on CIFAR-10 with 4 layers}
\end{figure}

\begin{figure}[H]
    \centering
    \begin{subfigure}{0.48\textwidth}
        \centering
        \includegraphics[width=1\textwidth]{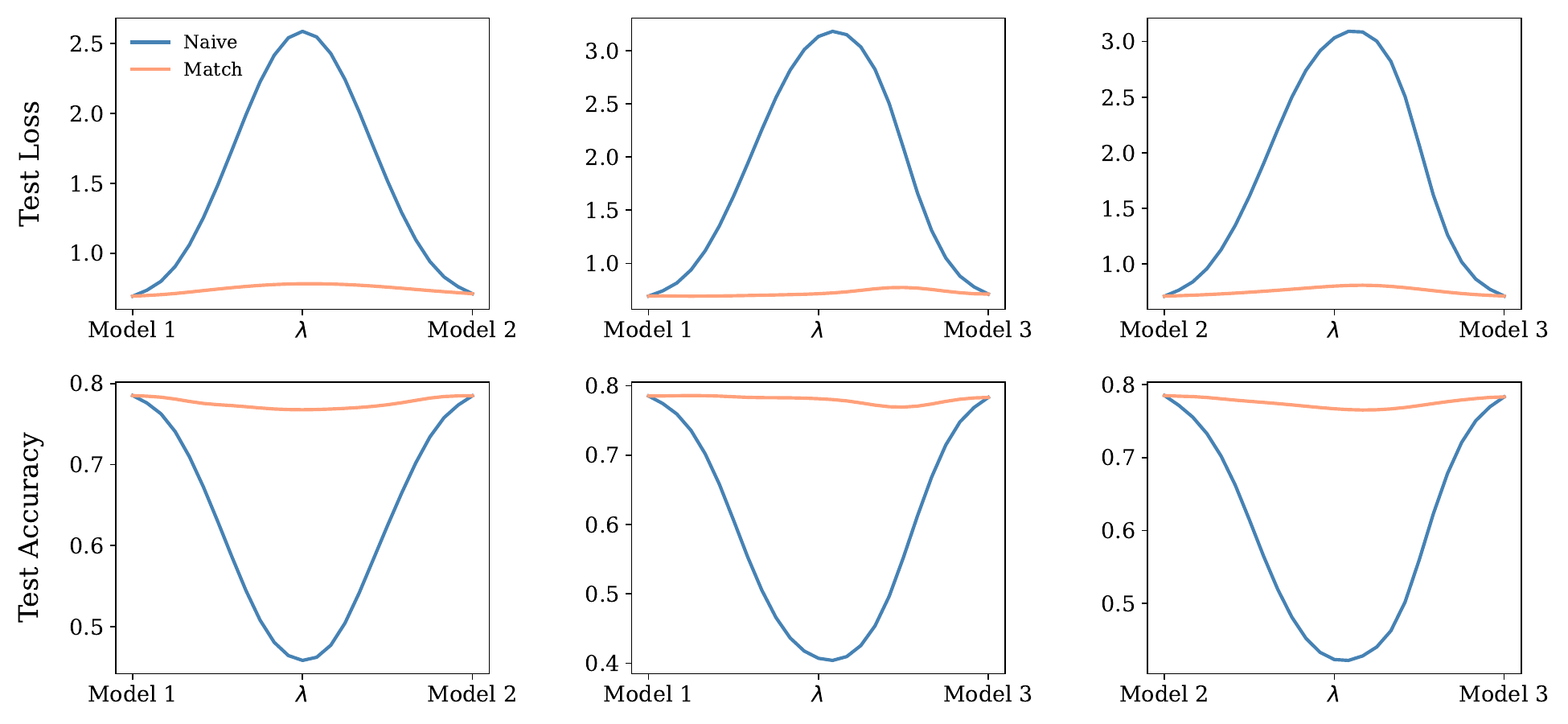} 
        \caption{4 attention heads}
        \label{fig:attn-cifar10-6-4-0}
    \end{subfigure}
    \hfill
    \begin{subfigure}{0.48\textwidth}
        \centering
        \includegraphics[width=1\textwidth]{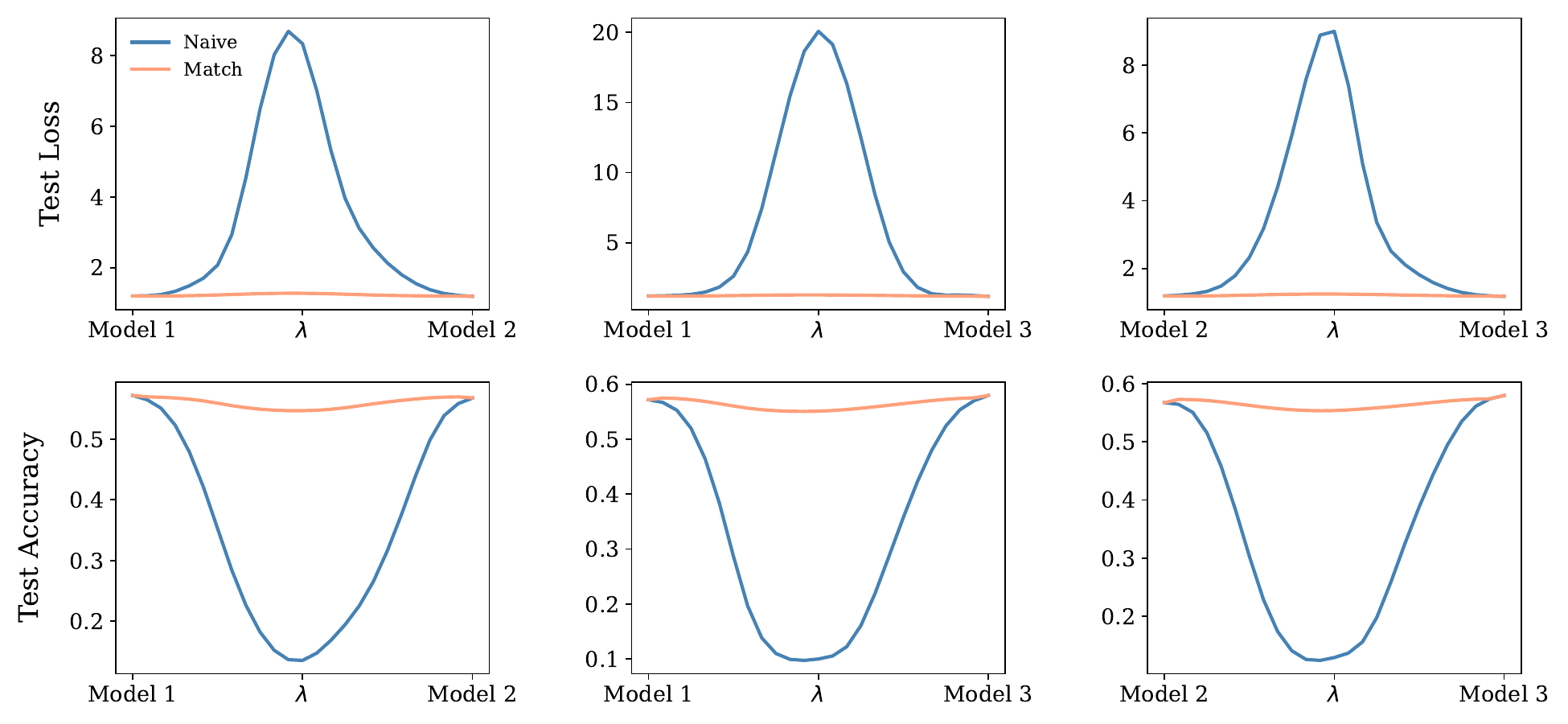} 
        \caption{8 attention heads}
        \label{fig:attn-cifar10-6-8-0}
    \end{subfigure}
    \caption{Linear Mode Connectivity for ViT on CIFAR-10 with 6 layers}
\end{figure}

\begin{figure}[H]
    \centering
    \begin{subfigure}{0.48\textwidth}
        \centering
        \includegraphics[width=1\textwidth]{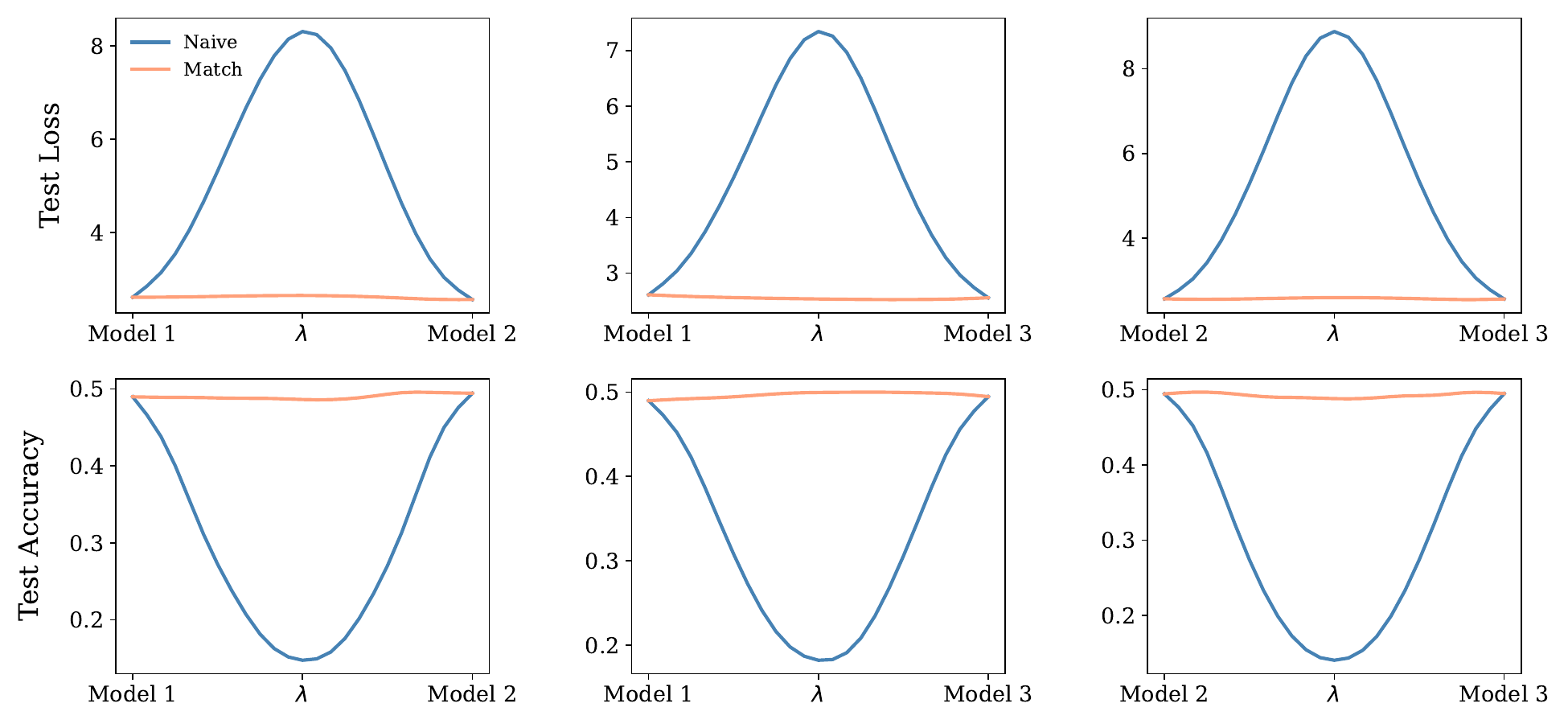} 
        \caption{4 attention heads}
        \label{fig:attn-cifar100-6-4-0}
    \end{subfigure}
    \hfill
    \begin{subfigure}{0.48\textwidth}
        \centering
        \includegraphics[width=1\textwidth]{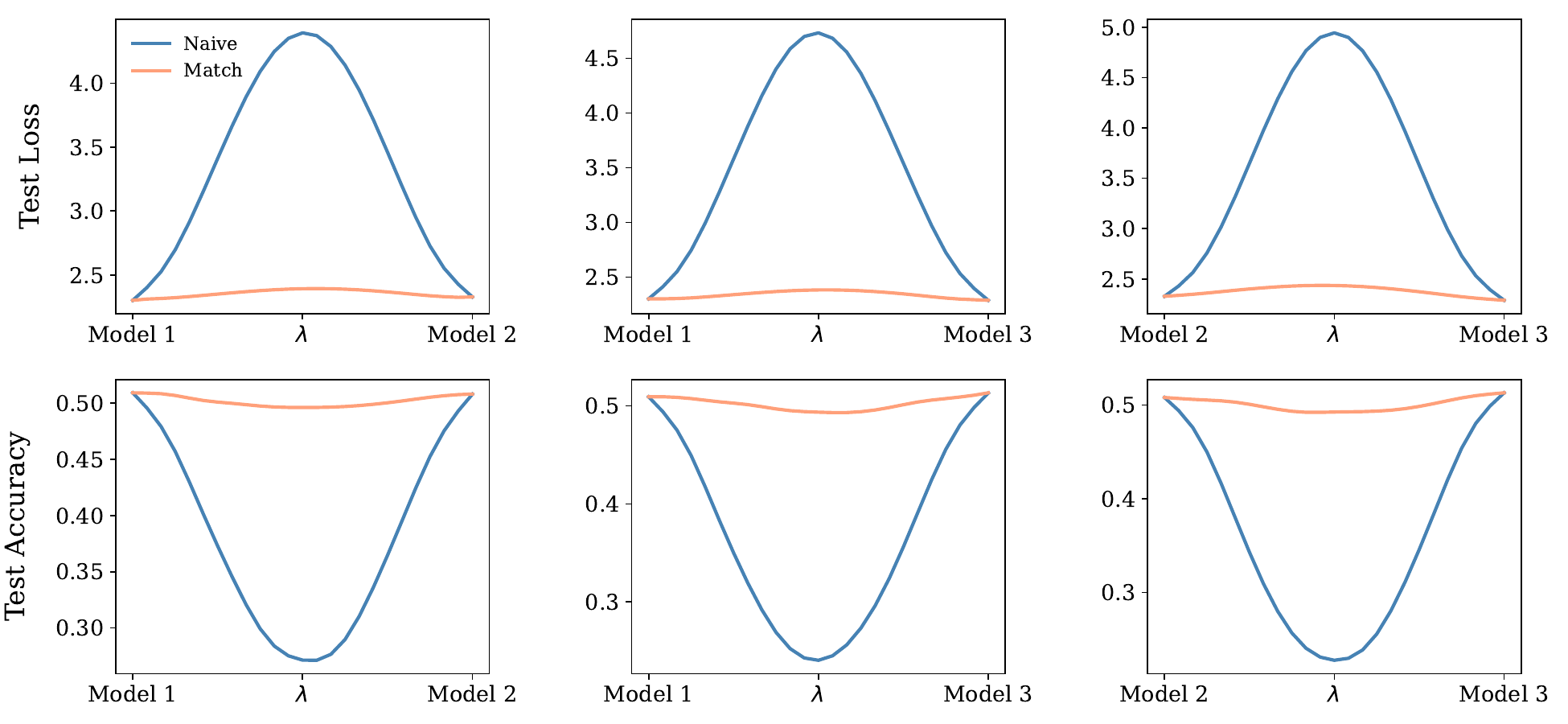} 
        \caption{8 attention heads}
        \label{fig:attn-cifar100-6-8-0}
    \end{subfigure}
    \caption{Linear Mode Connectivity for ViT on CIFAR-100 with 6 layers}
\end{figure}

\begin{figure}[H]
    \centering
    \begin{subfigure}{0.48\textwidth}
        \centering
        \includegraphics[width=1\textwidth]{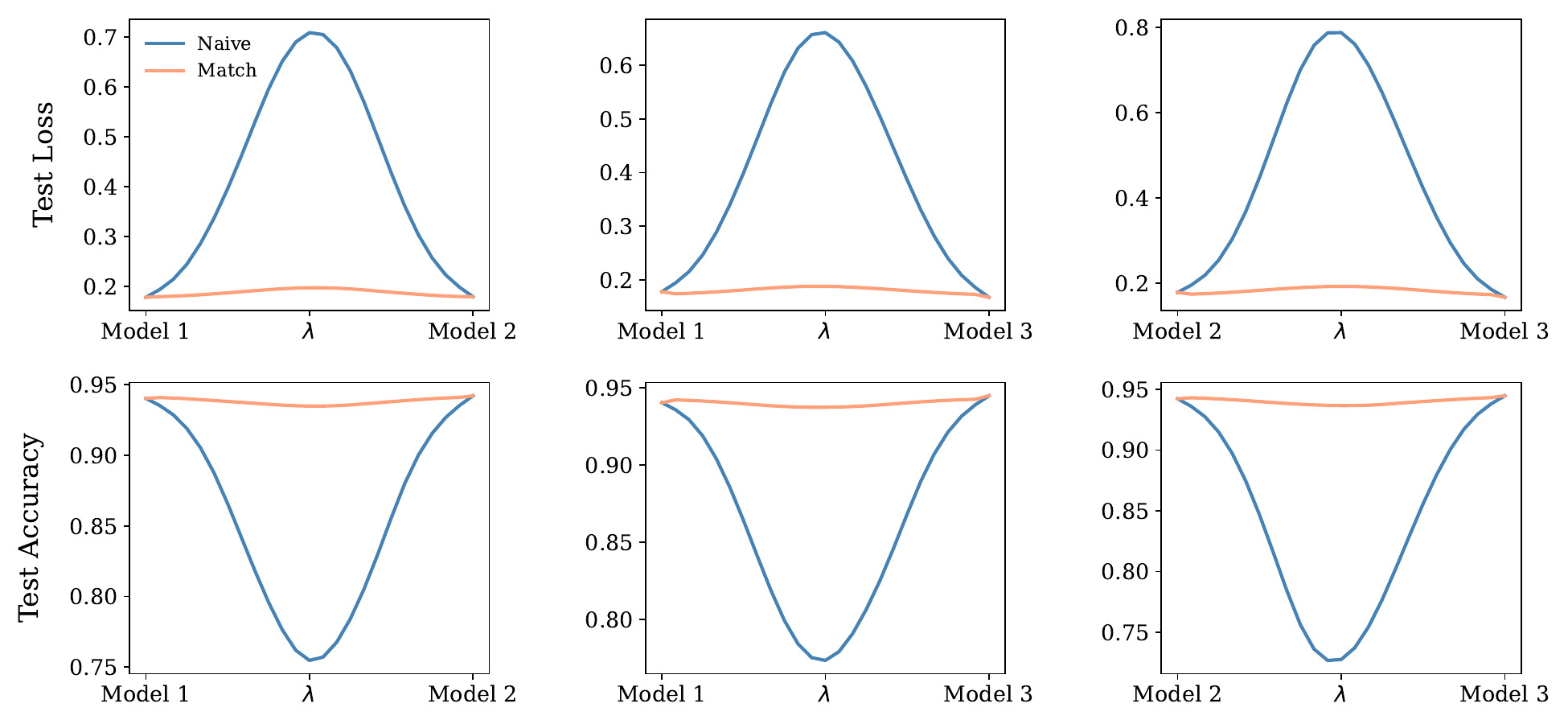} 
        \caption{CIFAR-10}
        \label{fig:attn-imgnetcifar10-12-4-0}
    \end{subfigure}
    \hfill
    \begin{subfigure}{0.48\textwidth}
        \centering
        \includegraphics[width=1\textwidth]{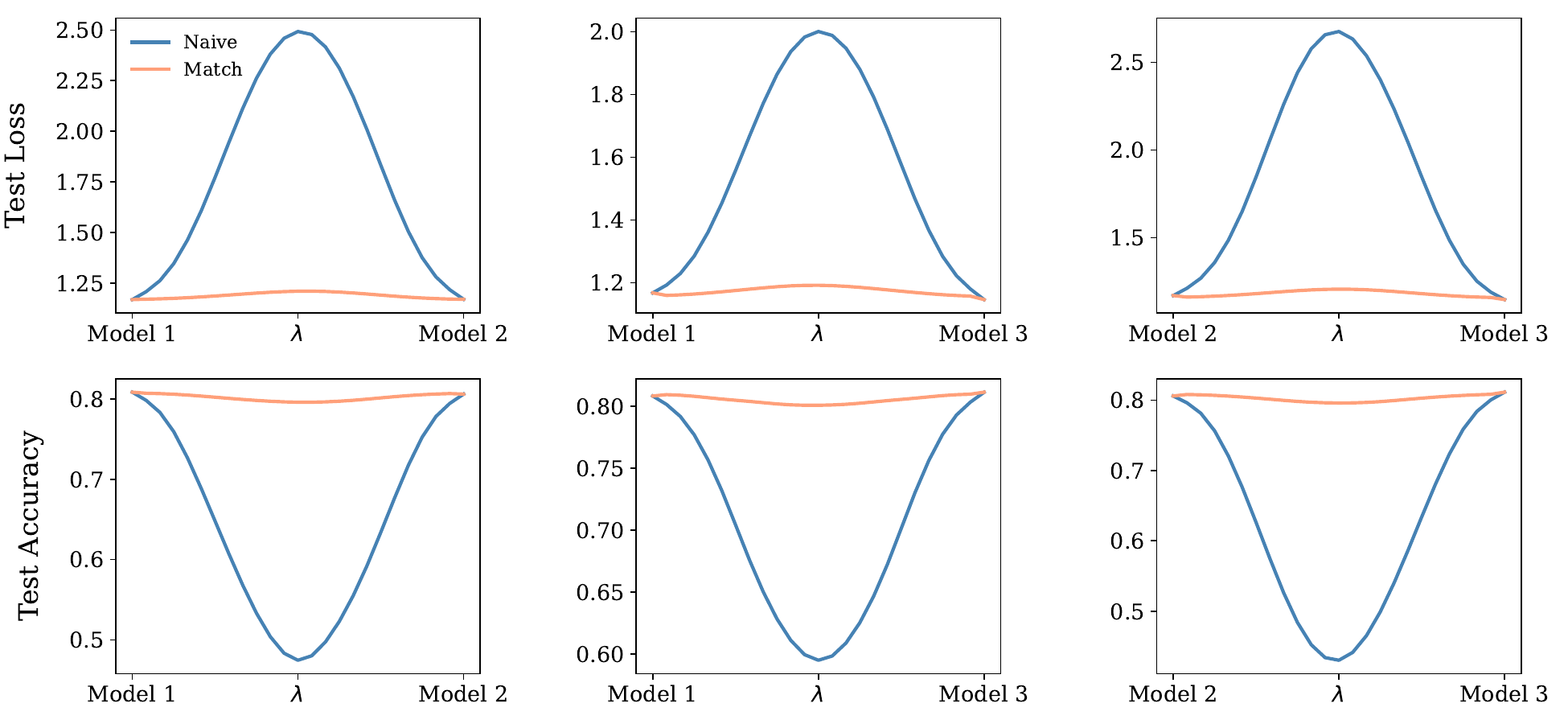} 
        \caption{CIFAR-100}
        \label{fig:attn-imgnetcifar100-12-4-0}
    \end{subfigure}
    \caption{Linear Mode Connectivity for ViT on ImageNet21k$\rightarrow$CIFAR-10/100 with 12 layers and 6 heads}
\end{figure}

\begin{figure}[H]
    \centering
    \begin{subfigure}{0.48\textwidth}
        \centering
        \includegraphics[width=\textwidth]{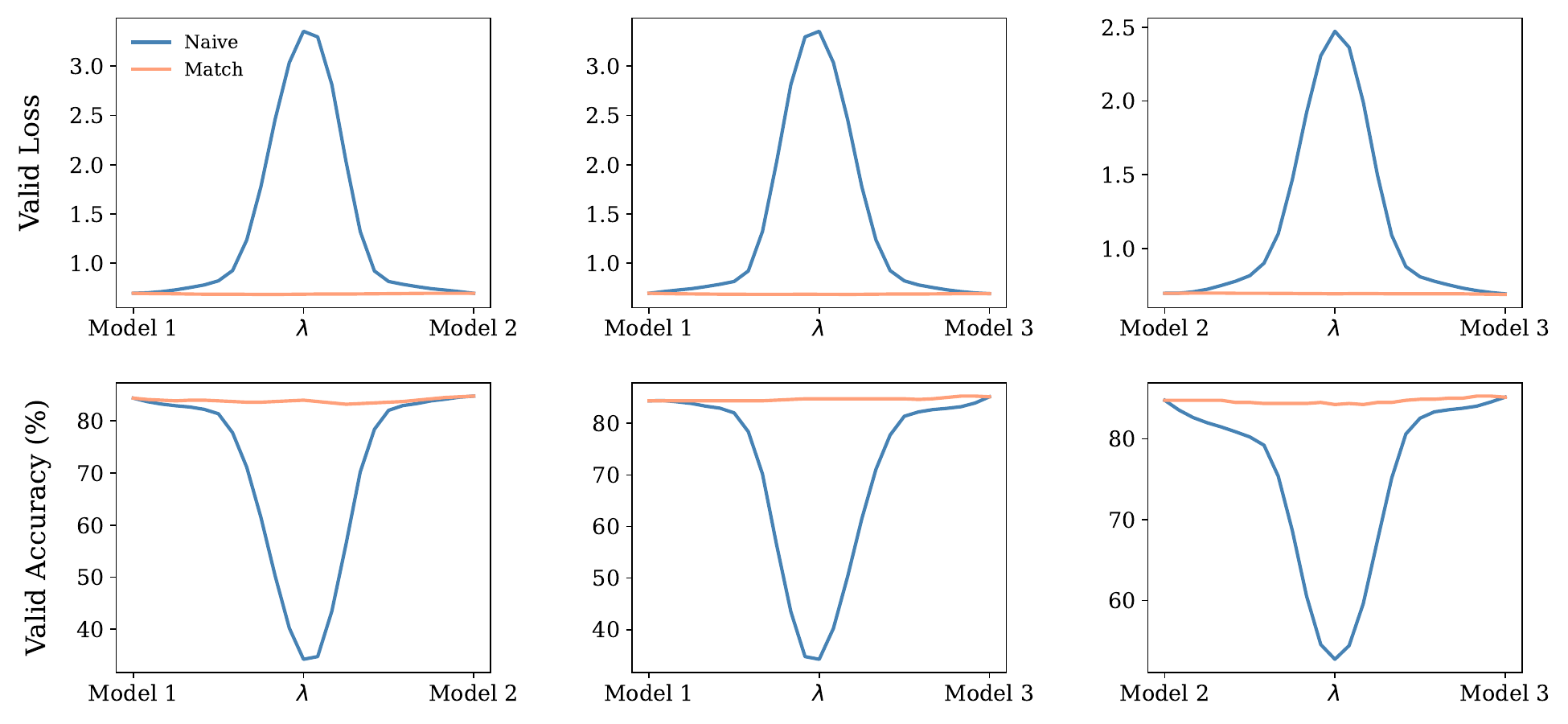} 
        \caption{8 attention heads}
        \label{fig:learnable-imagenet-12-8}
    \end{subfigure}
    \hfill
    \begin{subfigure}{0.48\textwidth}
        \centering
        \includegraphics[width=\textwidth]{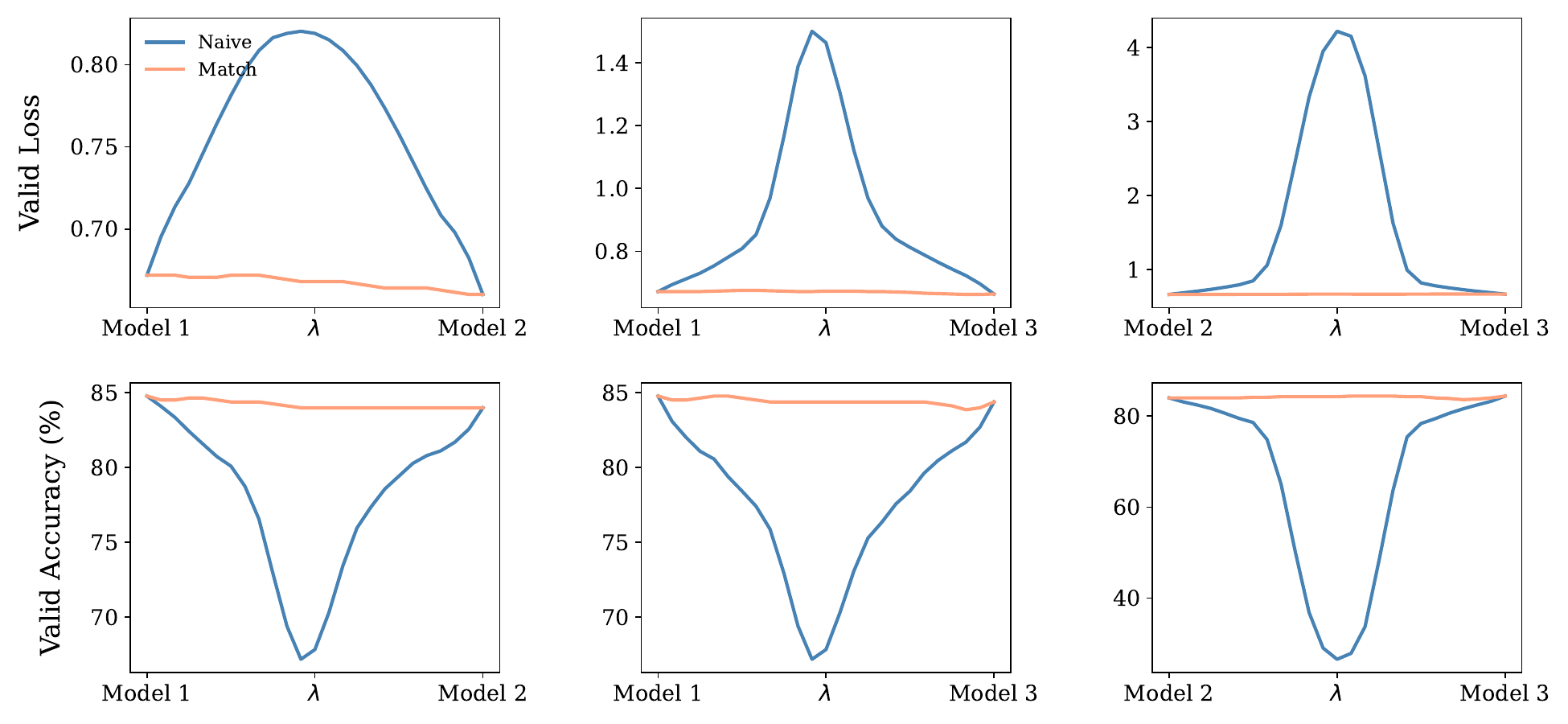} 
        \caption{12 attention heads}
        \label{fig:learnable-imagenet-12-12}
    \end{subfigure}
    \begin{subfigure}{0.48\textwidth}
        \centering
        \includegraphics[width=\textwidth]{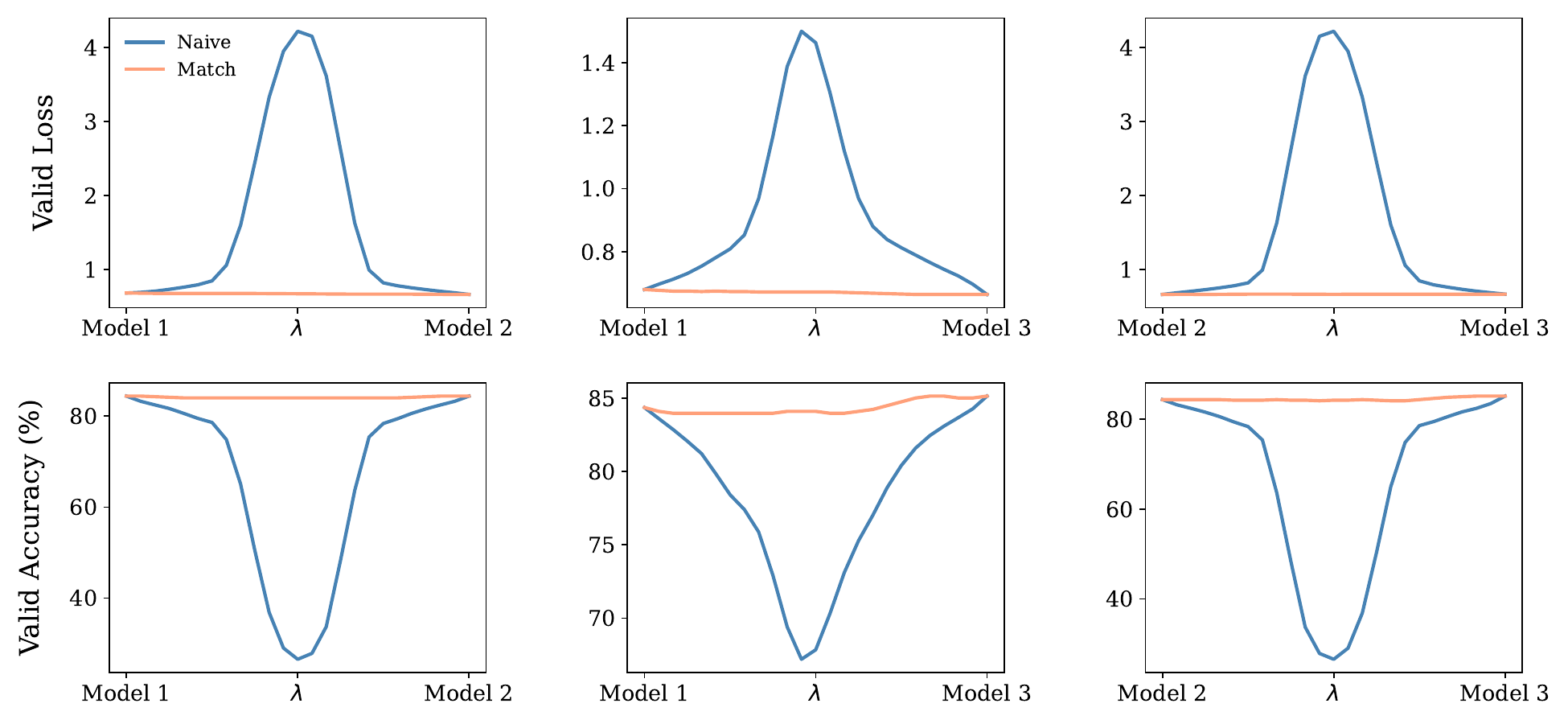} 
        \caption{16 attention heads}
        \label{fig:learnable-imagenet-12-16}
    \end{subfigure}
    \caption{Linear Mode Connectivity for ViT on ImageNet with 12 layers.}
\end{figure}
\begin{figure}[H]
    \centering
    \begin{subfigure}{0.48\textwidth}
        \centering
        \includegraphics[width=1\textwidth]{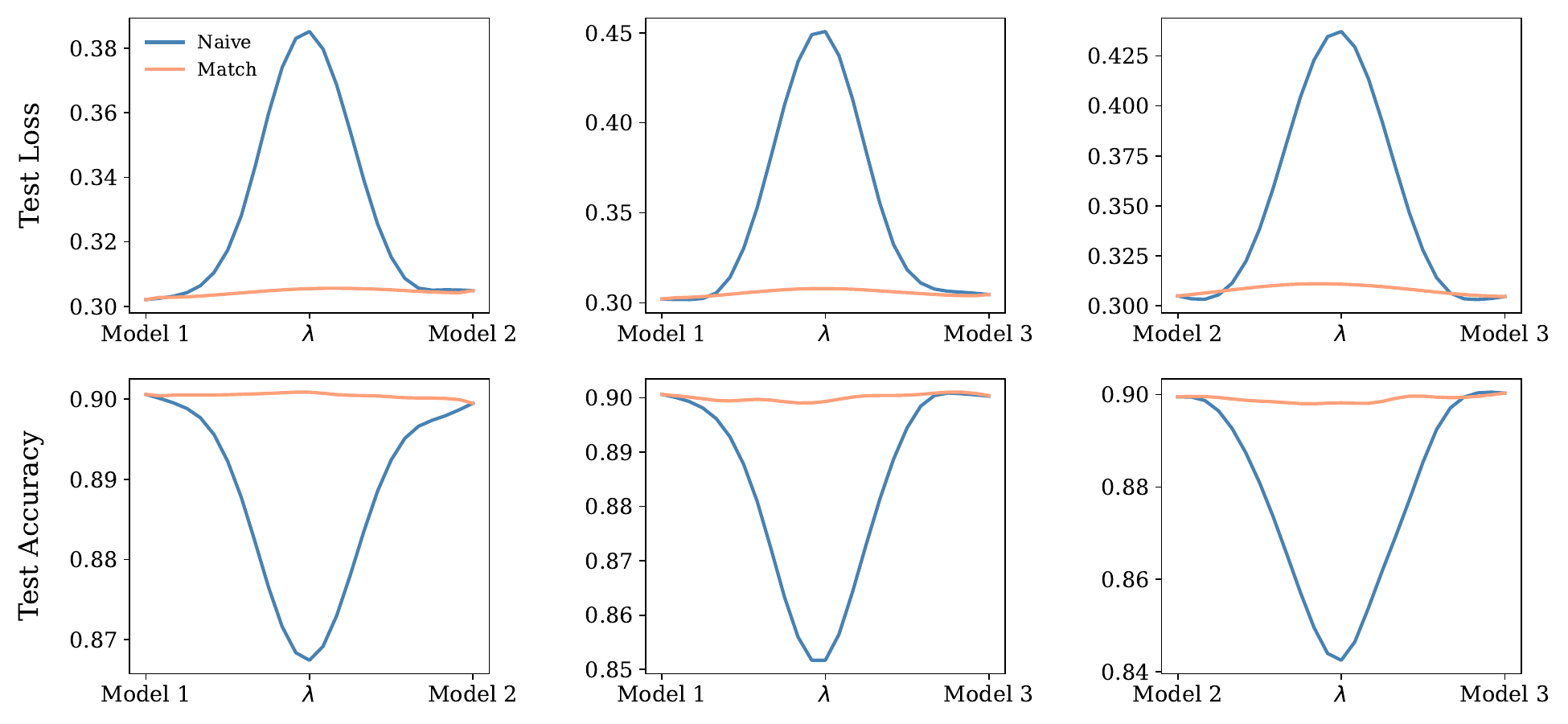} 
        \caption{4 attention heads}
        \label{fig:attn-agnews-2-4-0}
    \end{subfigure}
    \hfill
    \begin{subfigure}{0.48\textwidth}
        \centering
        \includegraphics[width=1\textwidth]{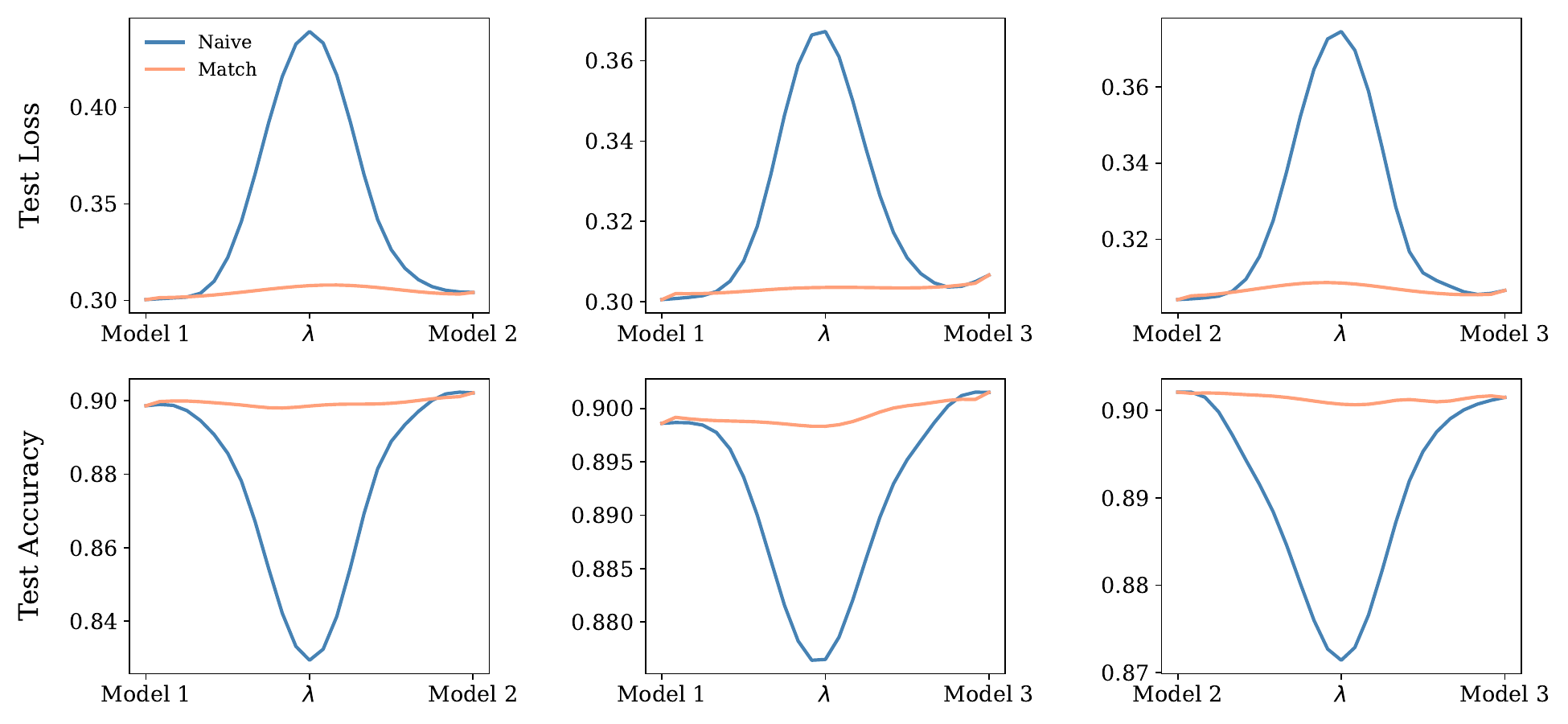} 
        \caption{8 attention heads}
        \label{fig:attn-agnews-2-8-0}
    \end{subfigure}
    \caption{Linear Mode Connectivity for BERT on AGnews with 2 layers}
\end{figure}

\begin{figure}[H]
    \centering
    \begin{subfigure}{0.48\textwidth}
        \centering
        \includegraphics[width=1\textwidth]{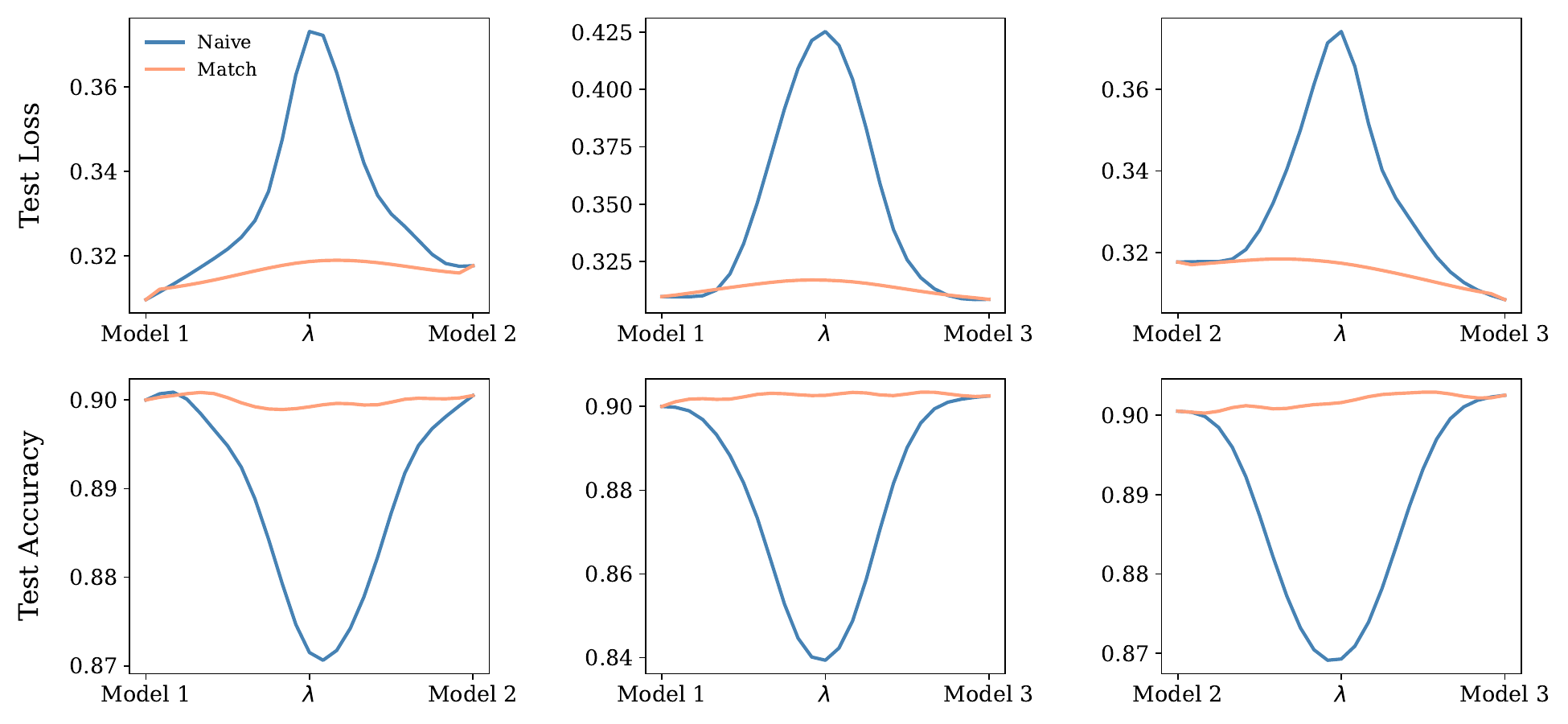} 
        \caption{4 attention heads}
        \label{fig:attn-agnews-6-4-0}
    \end{subfigure}
    \hfill
    \begin{subfigure}{0.48\textwidth}
        \centering
        \includegraphics[width=1\textwidth]{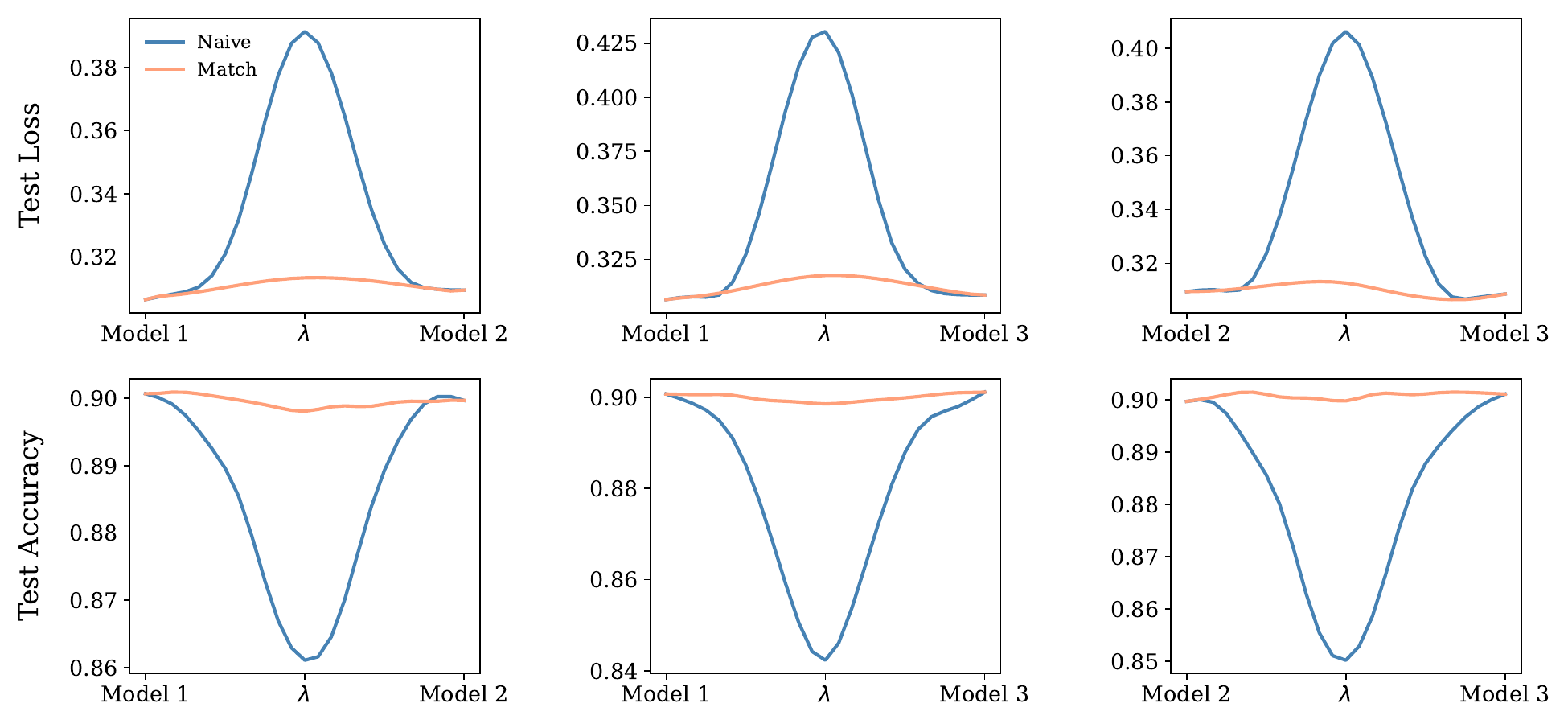} 
        \caption{8 attention heads}
        \label{fig:attn-agnews-6-8-0}
    \end{subfigure}
    \caption{Linear Mode Connectivity for BERT on AGnews with 6 layers}
\end{figure}

\begin{figure}[H]
    \centering
    \begin{subfigure}{0.48\textwidth}
        \centering
        \includegraphics[width=1\textwidth]{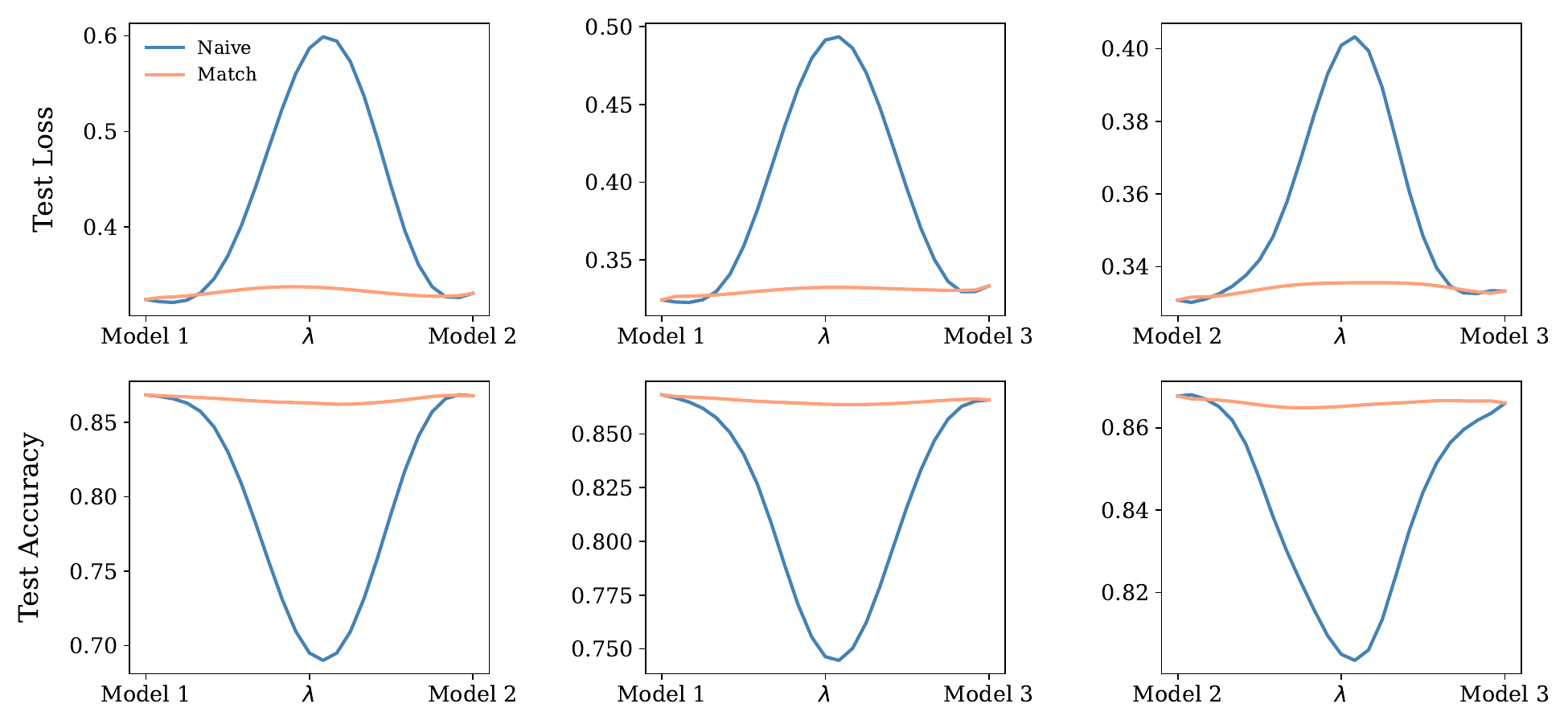} 
        \caption{4 attention heads}
        \label{fig:attn-imdbreview-2-4-0}
    \end{subfigure}
    \hfill
    \begin{subfigure}{0.48\textwidth}
        \centering
        \includegraphics[width=1\textwidth]{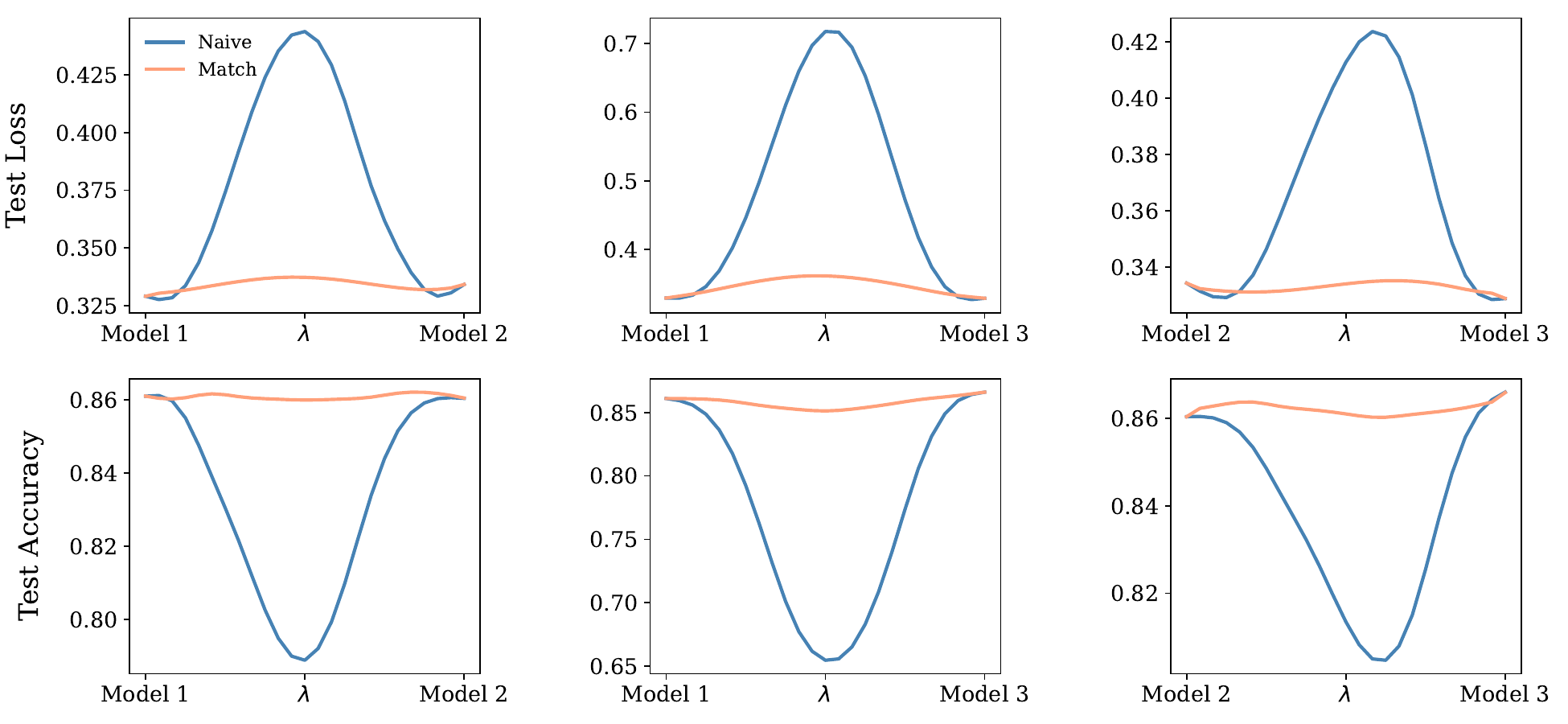} 
        \caption{8 attention heads}
        \label{fig:attn-imdbreview-2-8-0}
    \end{subfigure}
    \caption{Linear Mode Connectivity for BERT on IMDBreview with 2 layers}
\end{figure}

\begin{figure}[H]
    \centering
    \begin{subfigure}{0.48\textwidth}
        \centering
        \includegraphics[width=1\textwidth]{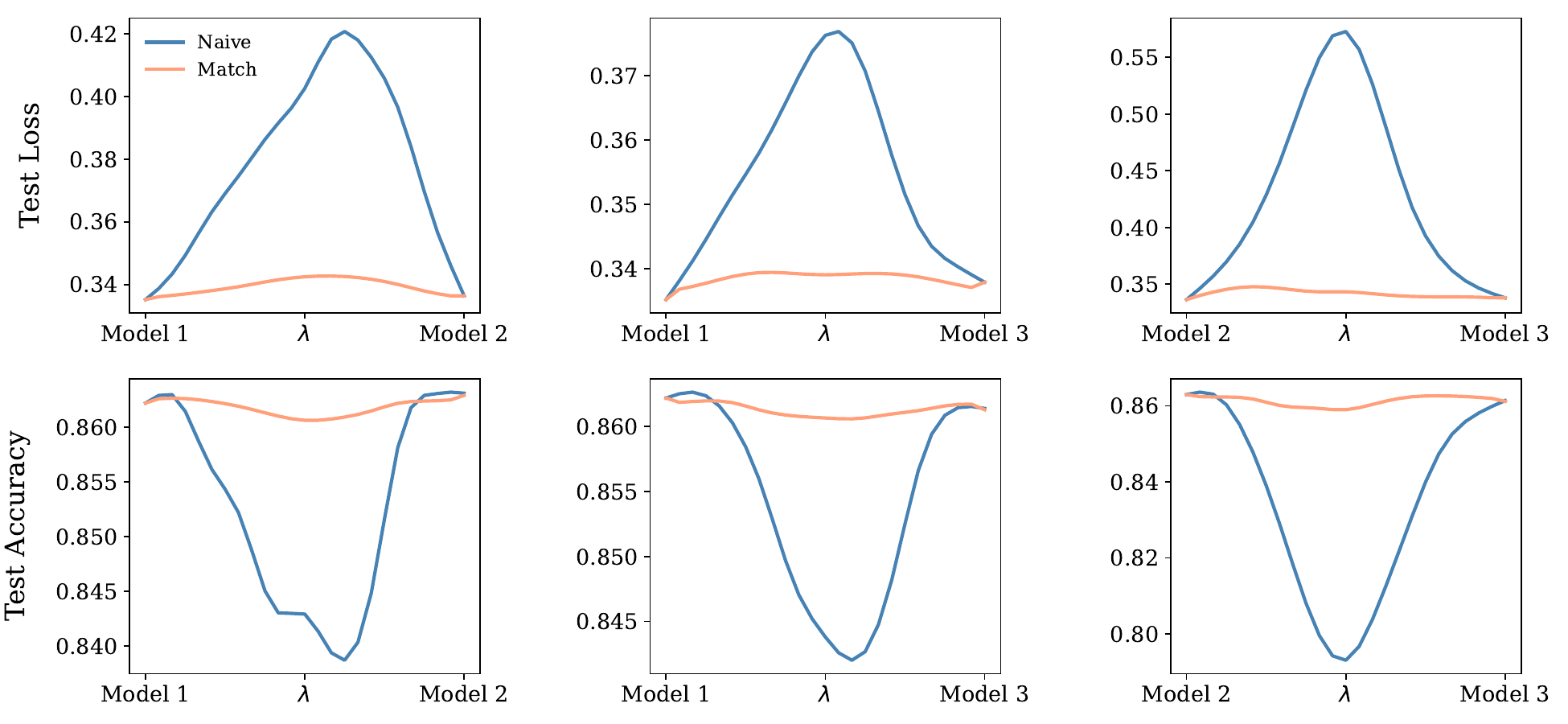} 
        \caption{4 attention heads}
        \label{fig:attn-imdbreview-6-4-0}
    \end{subfigure}
    \hfill
    \begin{subfigure}{0.48\textwidth}
        \centering
        \includegraphics[width=1\textwidth]{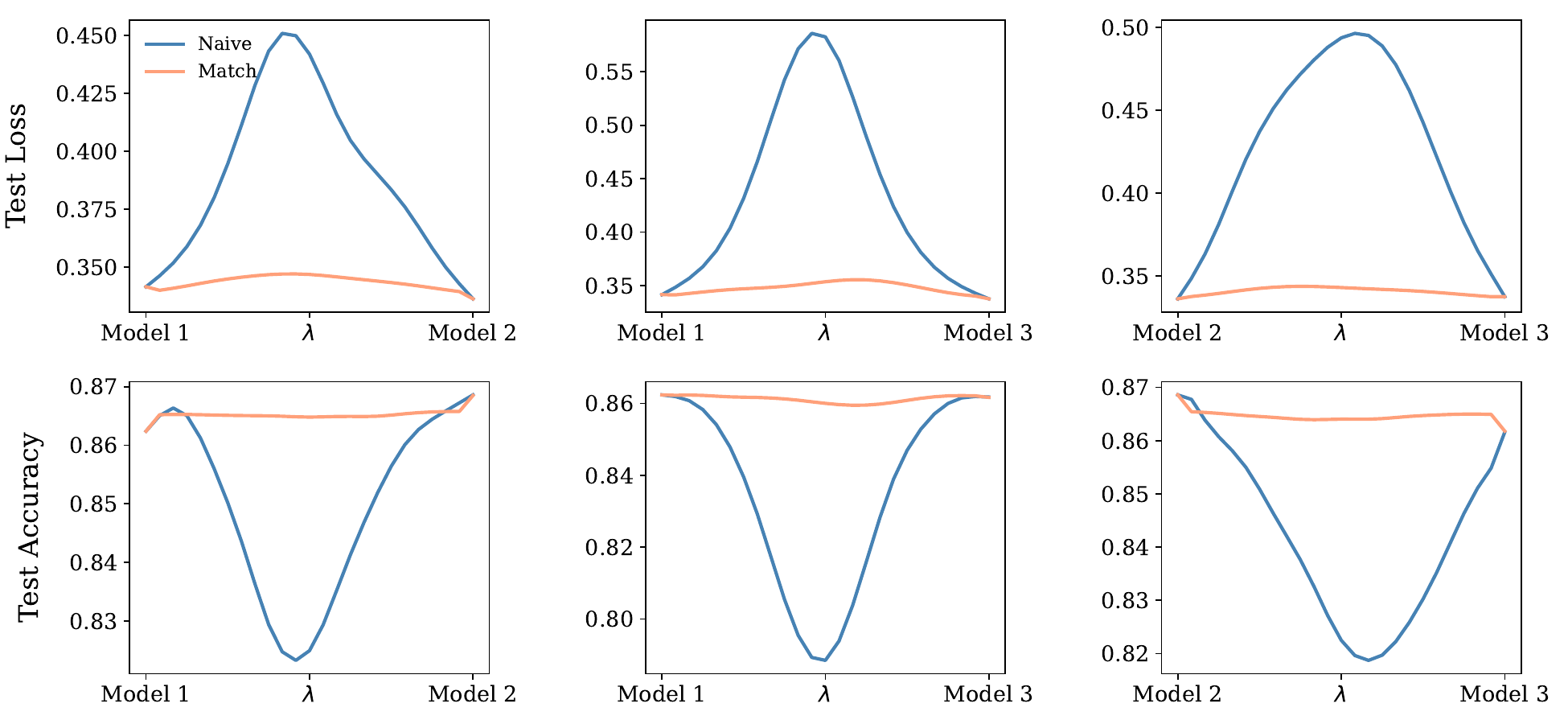} 
        \caption{8 attention heads}
        \label{fig:attn-imdbreview-6-8-0}
    \end{subfigure}
    \caption{Linear Mode Connectivity for BERT on IMDBreview with 6 layers}
\end{figure}

\begin{figure}[H]
    \centering
    \begin{subfigure}{0.48\textwidth}
        \centering
        \includegraphics[width=1\textwidth]{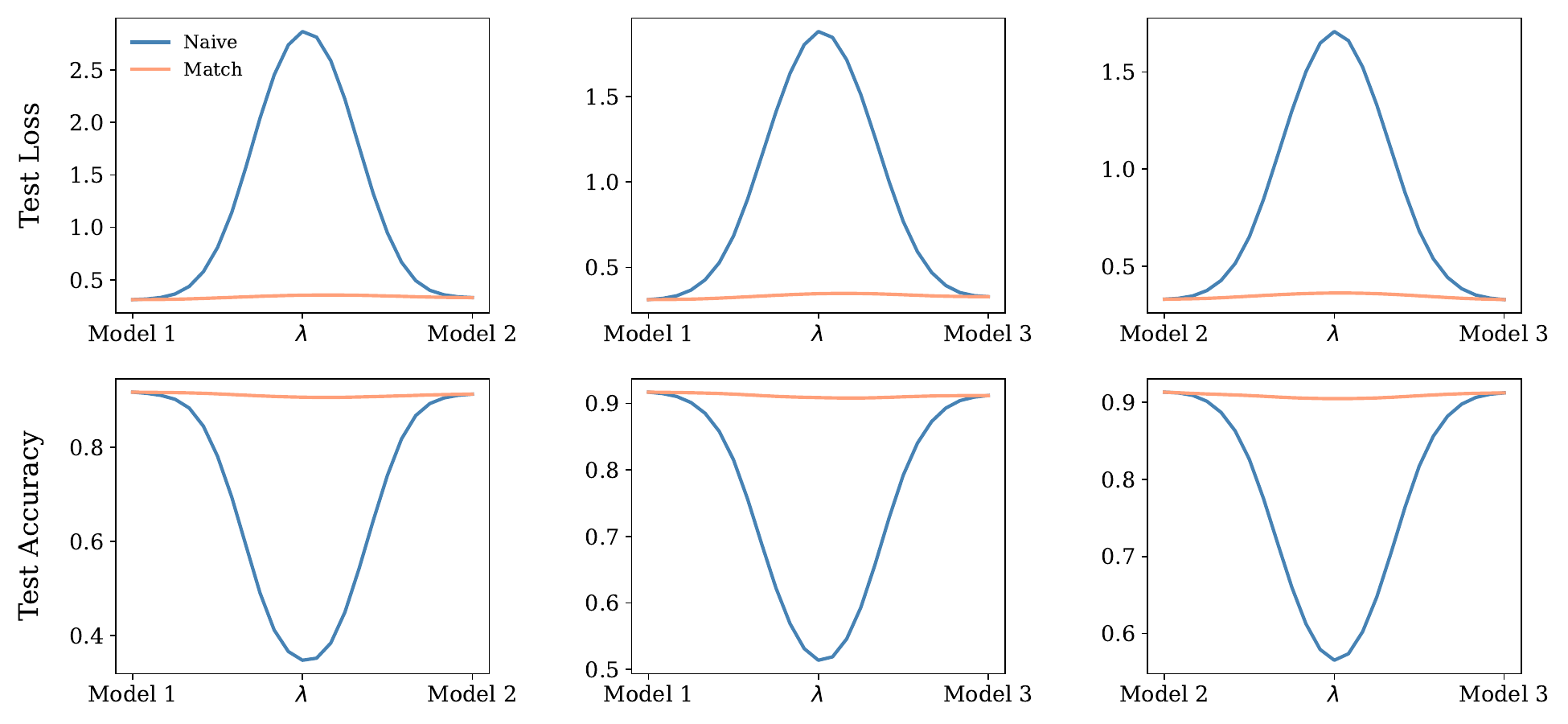} 
        \caption{4 attention heads}
        \label{fig:attn-dbpedia-2-4-0}
    \end{subfigure}
    \hfill
    \begin{subfigure}{0.48\textwidth}
        \centering
        \includegraphics[width=1\textwidth]{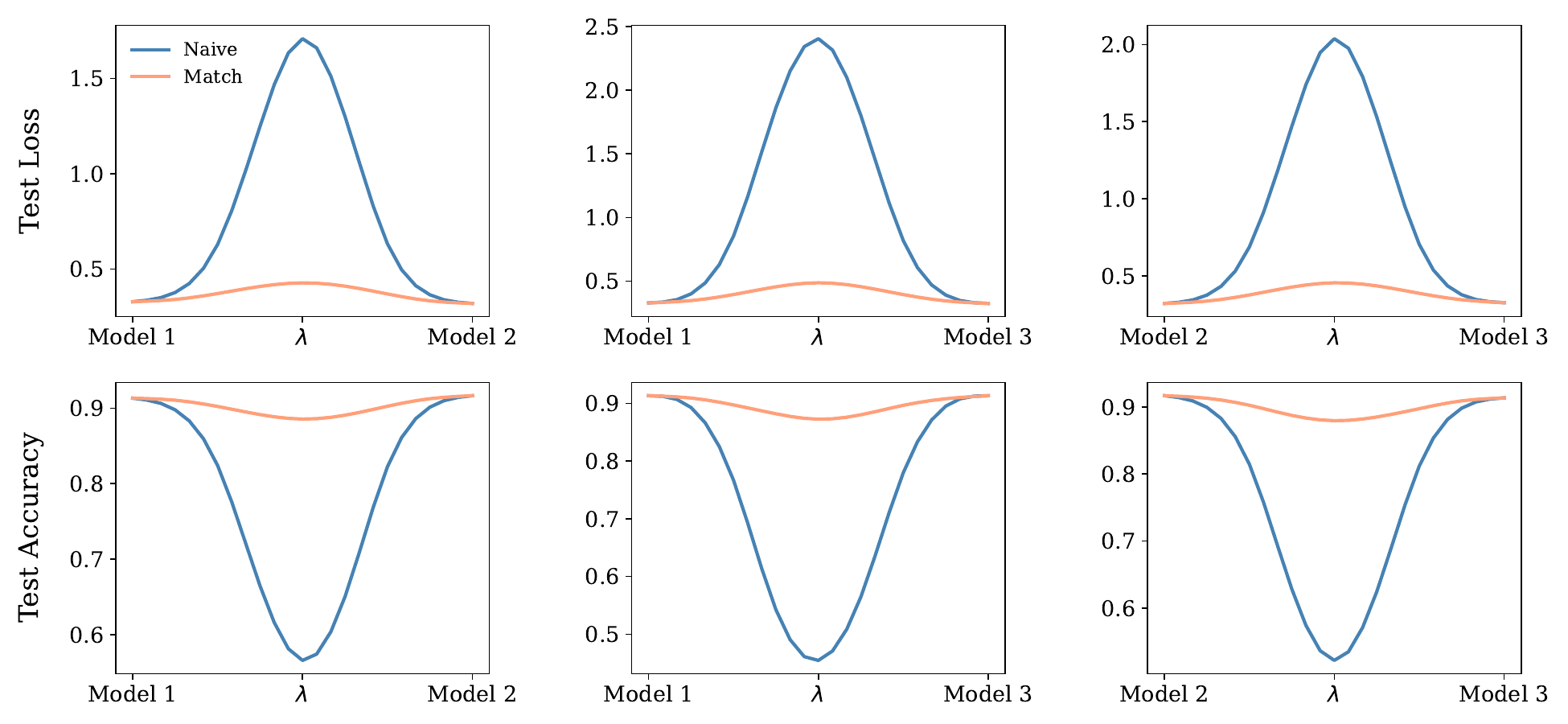} 
        \caption{8 attention heads}
        \label{fig:attn-dbpedia-2-8-0}
    \end{subfigure}
    \caption{Linear Mode Connectivity for BERT on DBPedia with 2 layers}
\end{figure}

\begin{figure}[H]
    \centering
    \begin{subfigure}{0.48\textwidth}
        \centering
        \includegraphics[width=1\textwidth]{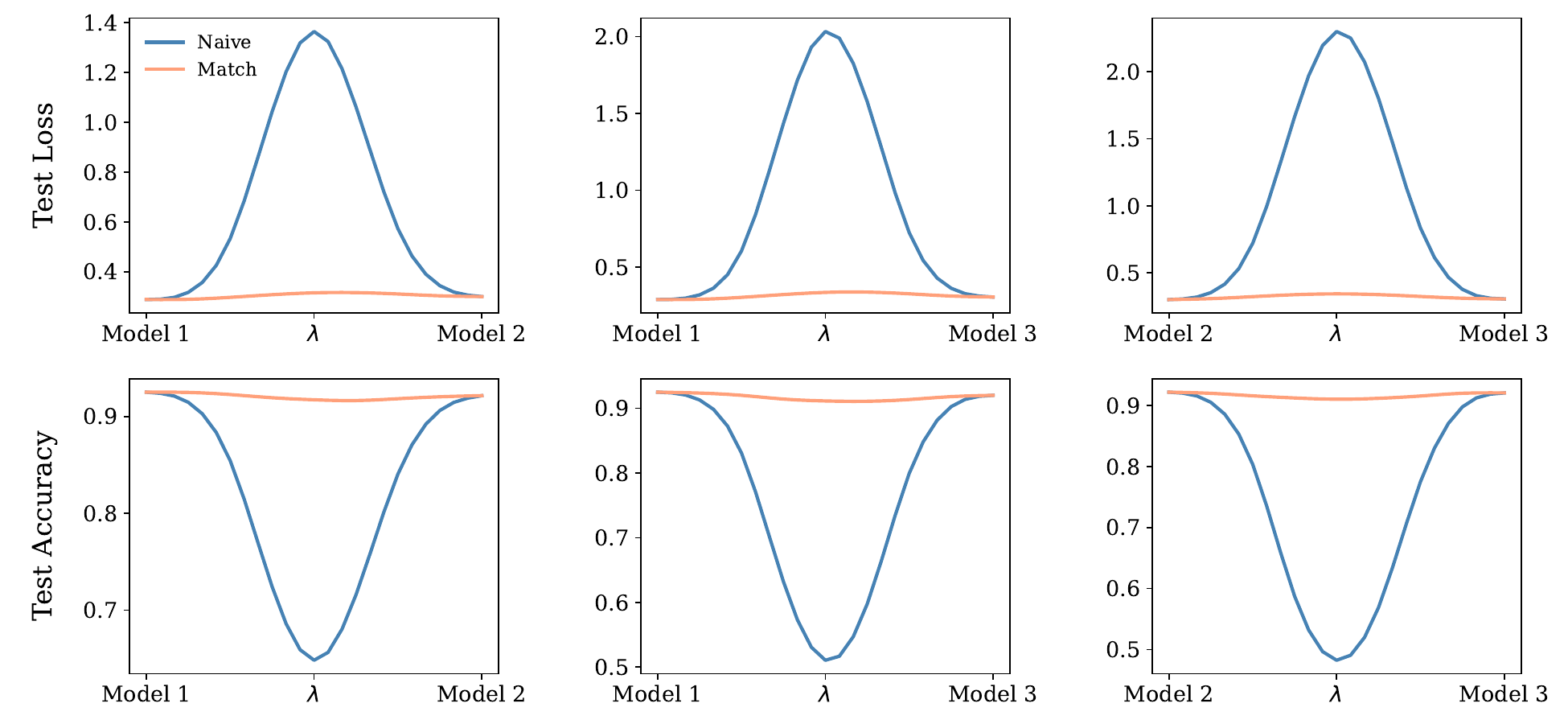} 
        \caption{4 attention heads}
        \label{fig:attn-dbpedia-6-4-0}
    \end{subfigure}
    \hfill
    \begin{subfigure}{0.48\textwidth}
        \centering
        \includegraphics[width=1\textwidth]{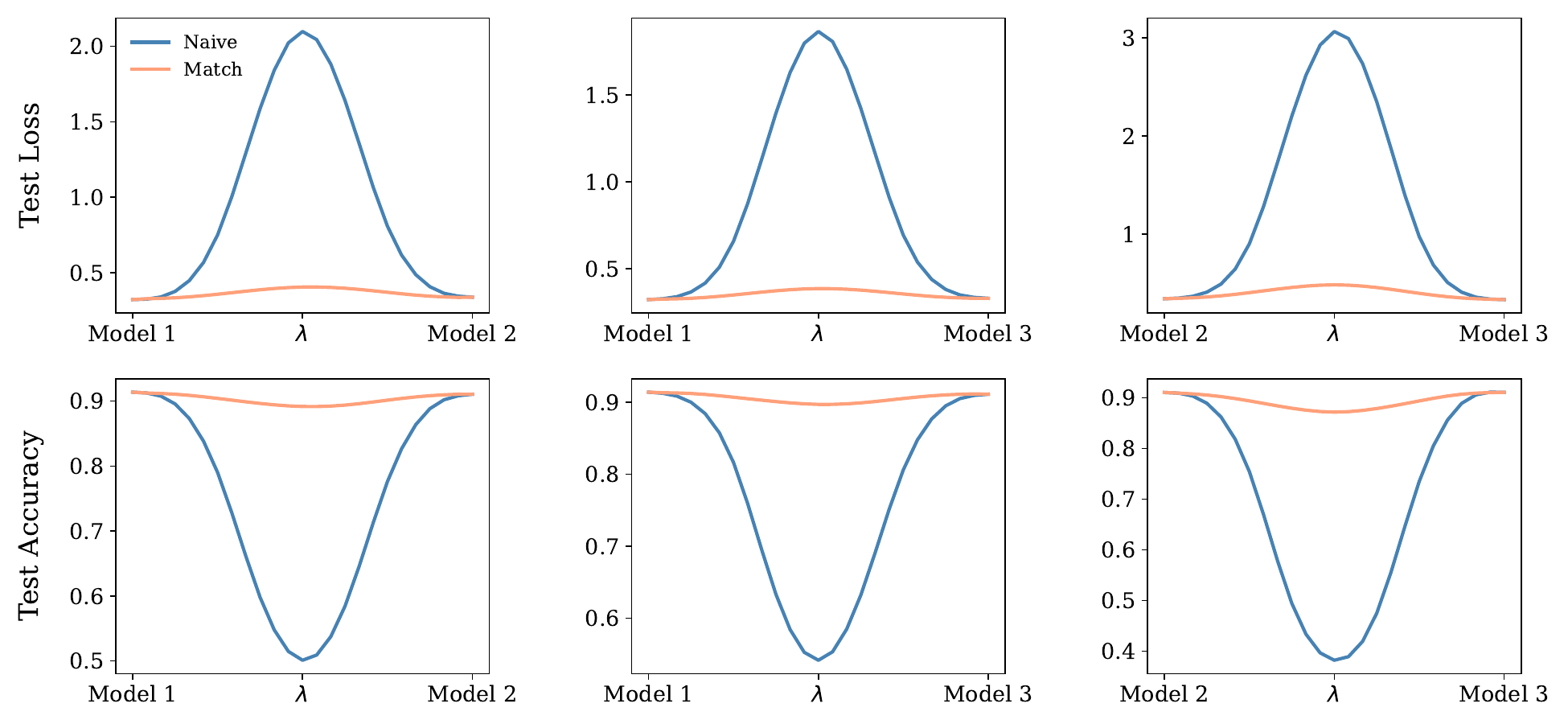} 
        \caption{8 attention heads}
        \label{fig:attn-dbpedia-6-8-0}
    \end{subfigure}
    \caption{Linear Mode Connectivity for BERT on DBPedia with 6 layers}
\end{figure}

\begin{figure}[H]
    \centering
    \begin{subfigure}{0.48\textwidth}
        \centering
        \includegraphics[width=\textwidth]{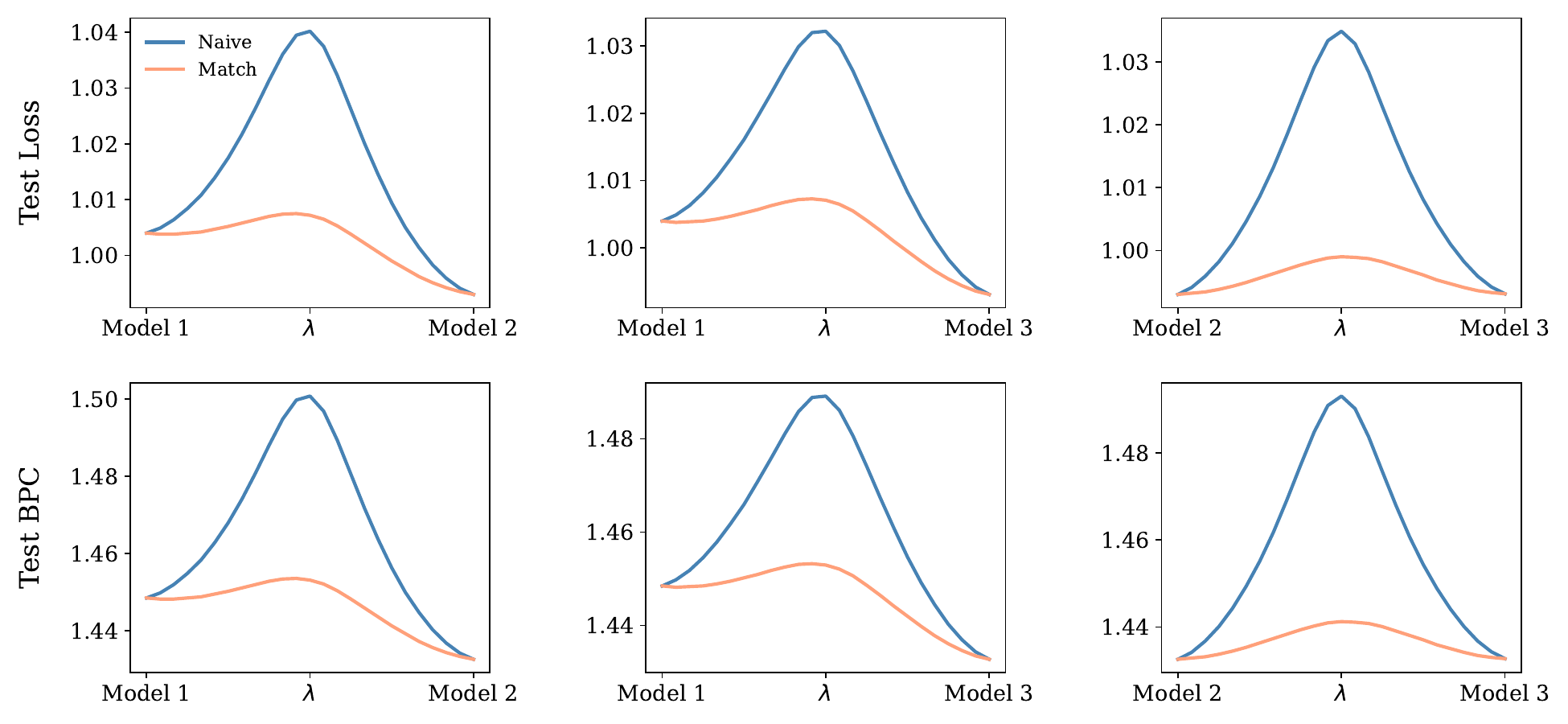} 
        \caption{4 attention heads}
        \label{fig:learnable-enwik8-12-4}
    \end{subfigure}
    \hfill
    \begin{subfigure}{0.48\textwidth}
        \centering
        \includegraphics[width=\textwidth]{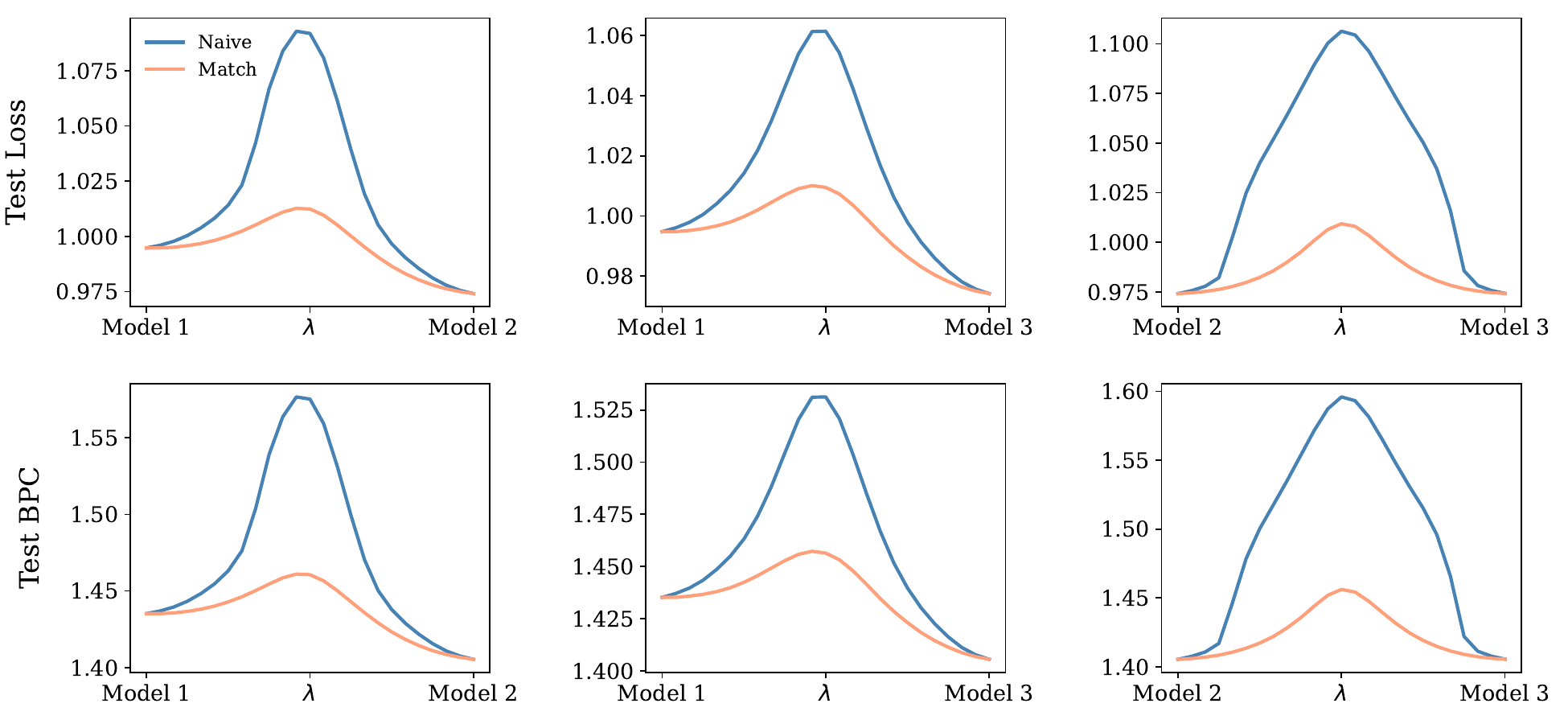} 
        \caption{8 attention heads}
        \label{fig:learnable-enwik8-12-8}
    \end{subfigure}

    \begin{subfigure}{0.45\textwidth}
        \centering
        \includegraphics[width=\textwidth]{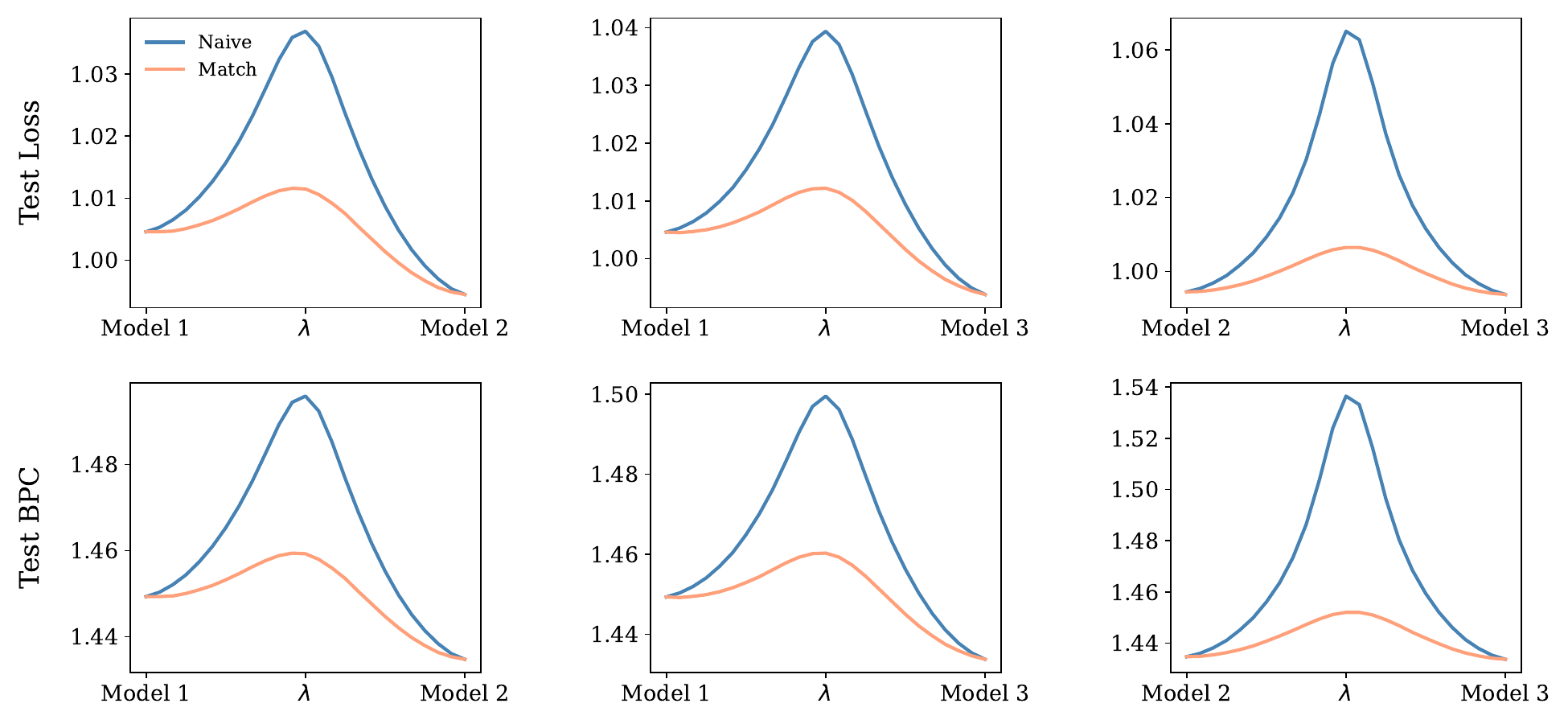} 
        \caption{16 attention heads}
        \label{fig:learnable-enwik8-12-16}
    \end{subfigure}
    \caption{Linear Mode Connectivity for GPT2 on Enwik8 with 12 layers.}
\end{figure}

\begin{figure}[H]
    \centering
    \begin{subfigure}{0.48\textwidth}
        \centering
        \includegraphics[width=\textwidth]{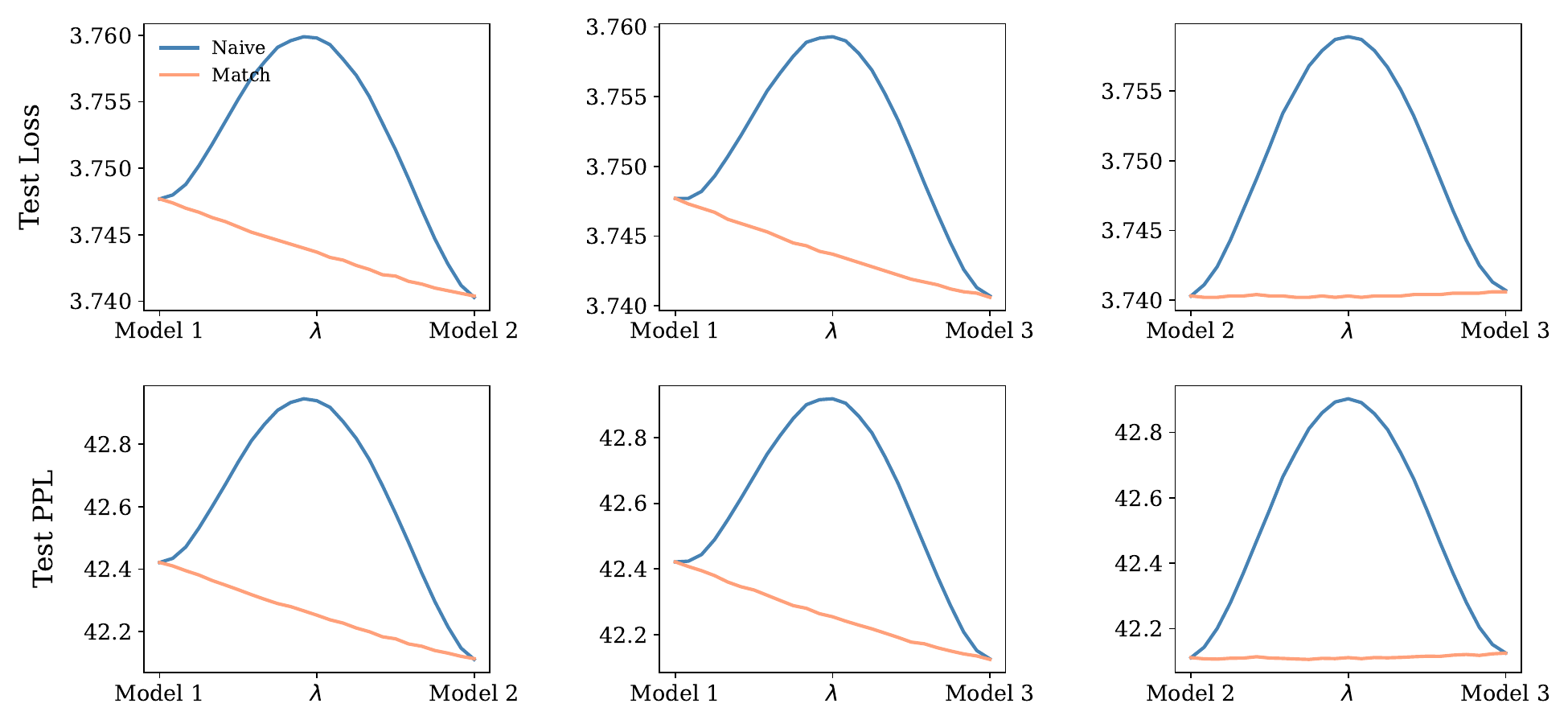} 
        \caption{2 attention heads}
        \label{fig:learnable-wt103-12-2}
    \end{subfigure}
    \hfill
    \begin{subfigure}{0.48\textwidth}
        \centering
        \includegraphics[width=\textwidth]{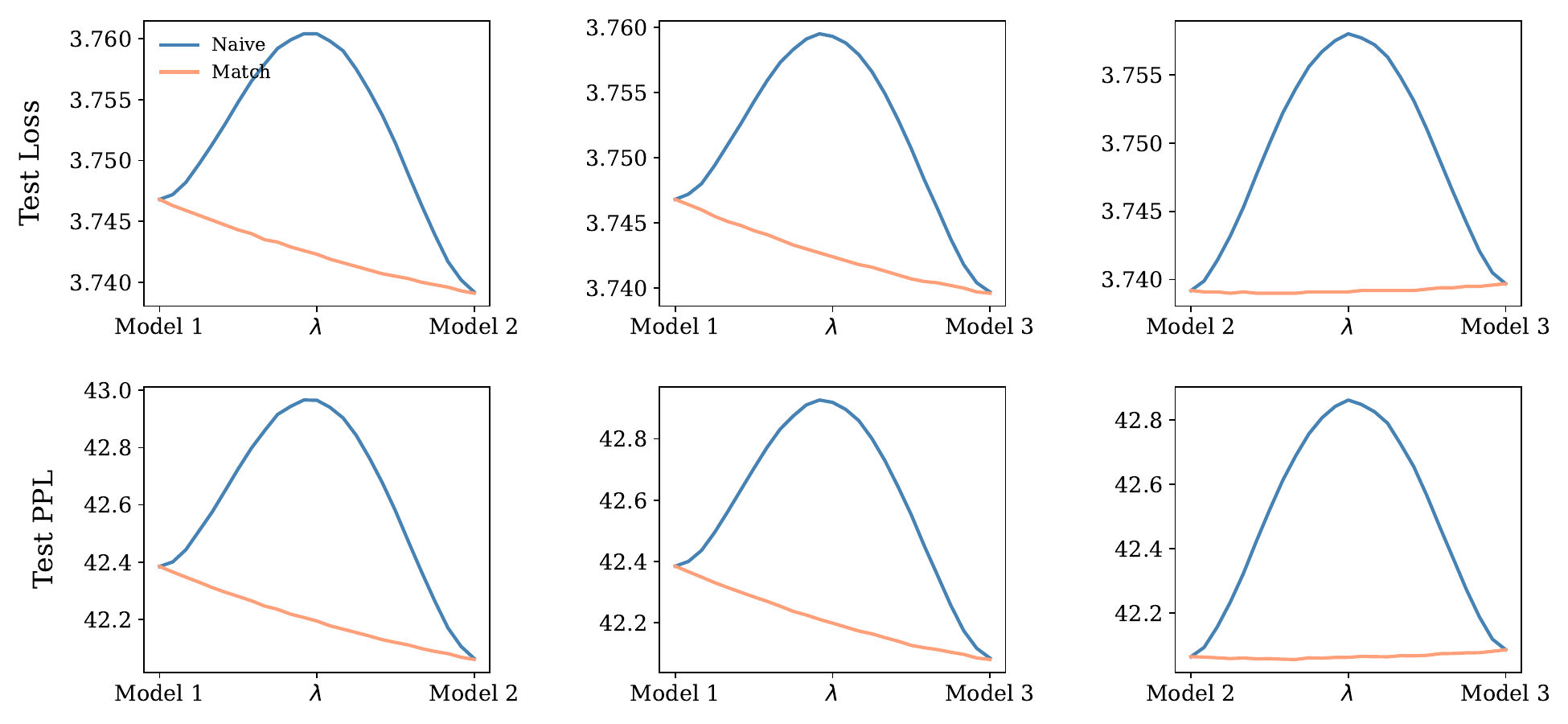} 
        \caption{3 attention heads}
        \label{fig:learnable-wt103-12-3}
    \end{subfigure}

    \begin{subfigure}{0.48\textwidth}
        \centering
        \includegraphics[width=\textwidth]{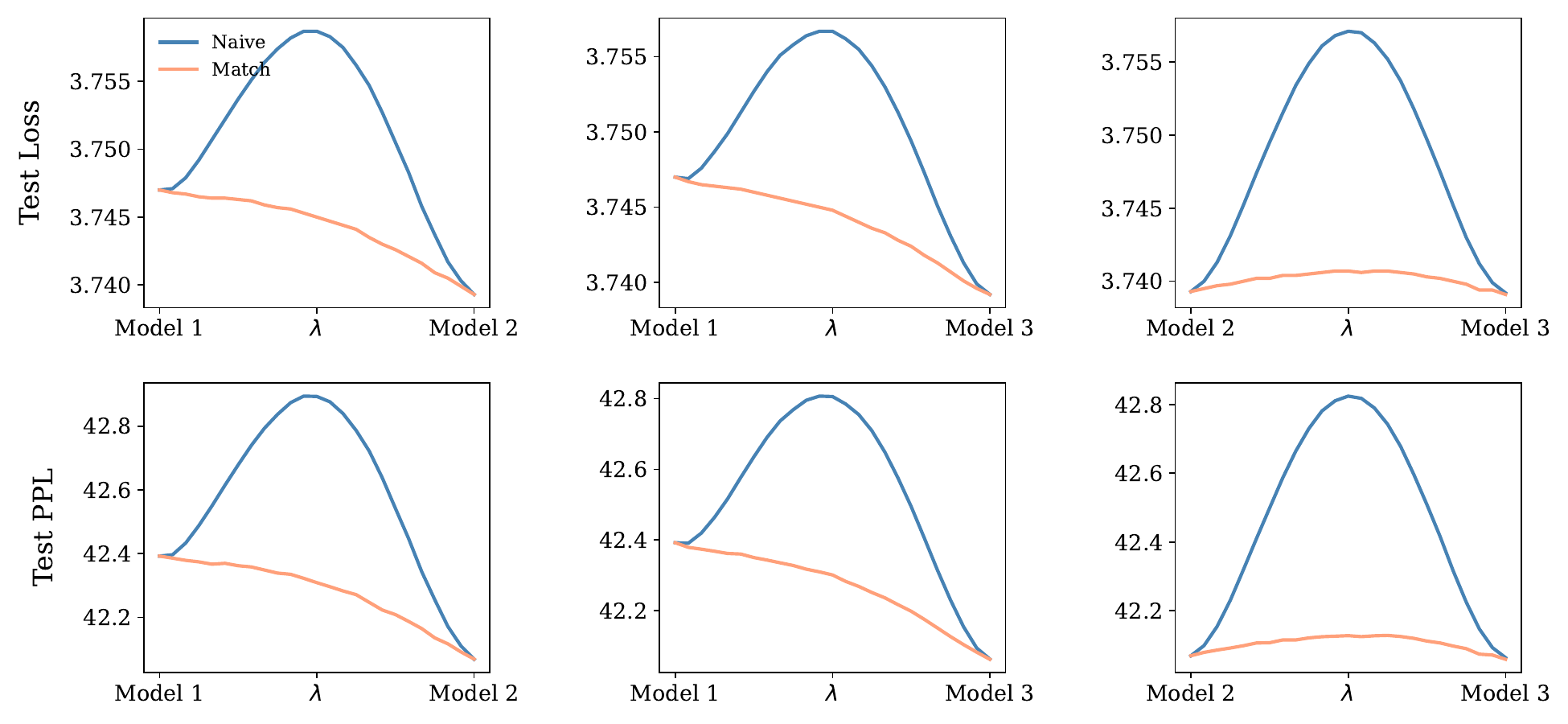} 
        \caption{4 attention heads}
        \label{fig:learnable-wt103-12-4}
    \end{subfigure}
    \caption{Linear Mode Connectivity for GPT2 on Wikitext103 with 12 layers.}
\end{figure}

\begin{figure}[H]
    \centering
    \begin{subfigure}{0.48\textwidth}
        \centering
        \includegraphics[width=\textwidth]{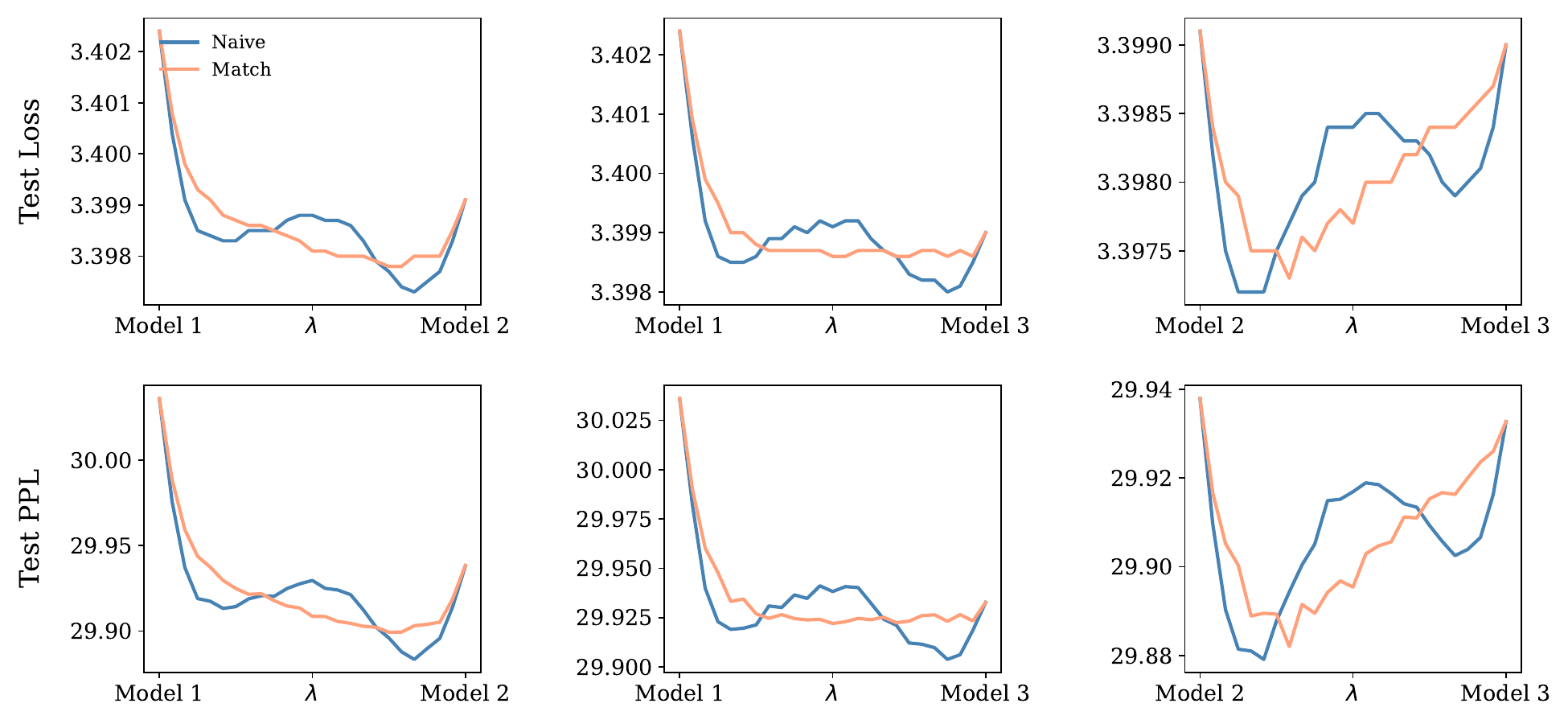} 
        \caption{8 attention heads}
        \label{fig:learnable-lm1b-12-8}
    \end{subfigure}
    \hfill
    \begin{subfigure}{0.48\textwidth}
        \centering
        \includegraphics[width=\textwidth]{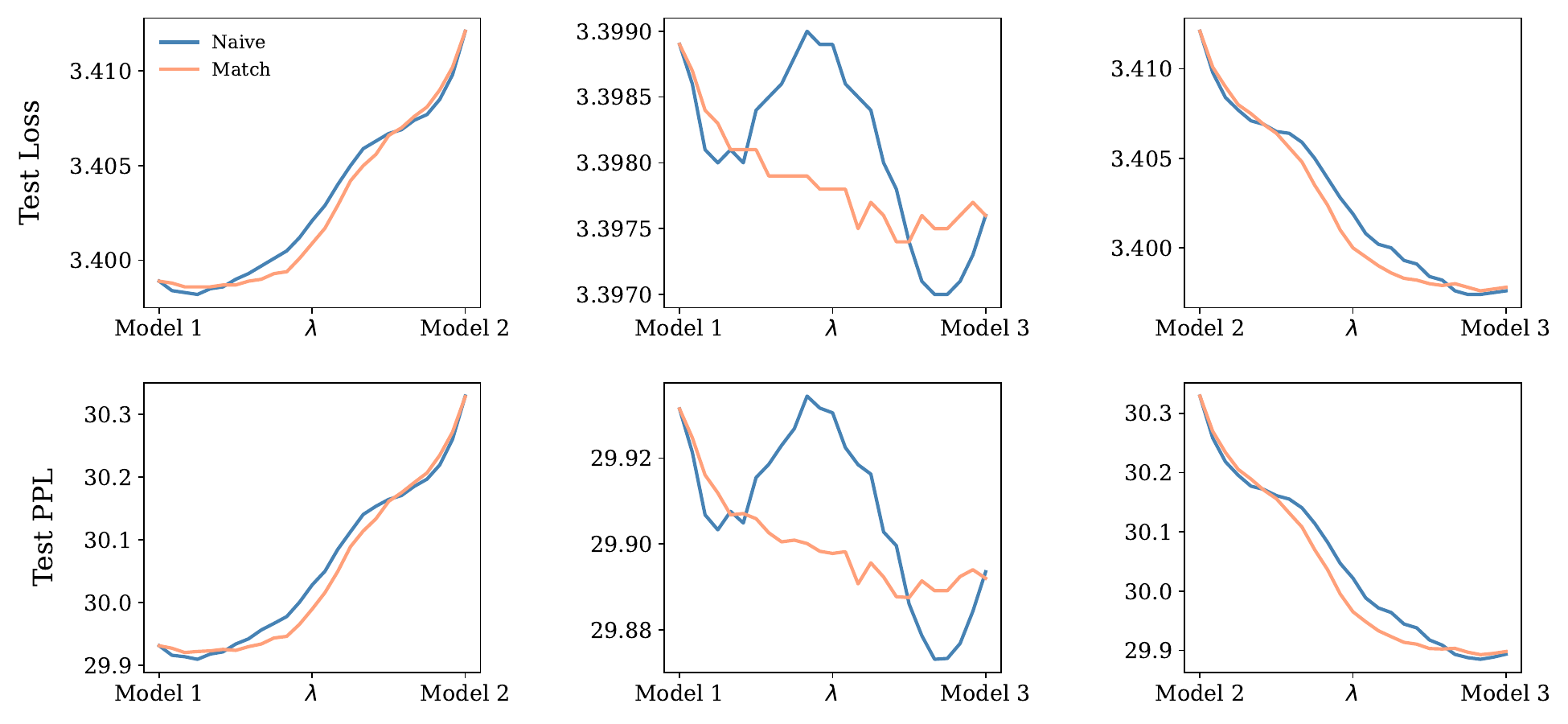} 
        \caption{12 attention heads}
        \label{fig:learnable-lm1b-12-12}
    \end{subfigure}

    \begin{subfigure}{0.48\textwidth}
        \centering
        \includegraphics[width=\textwidth]{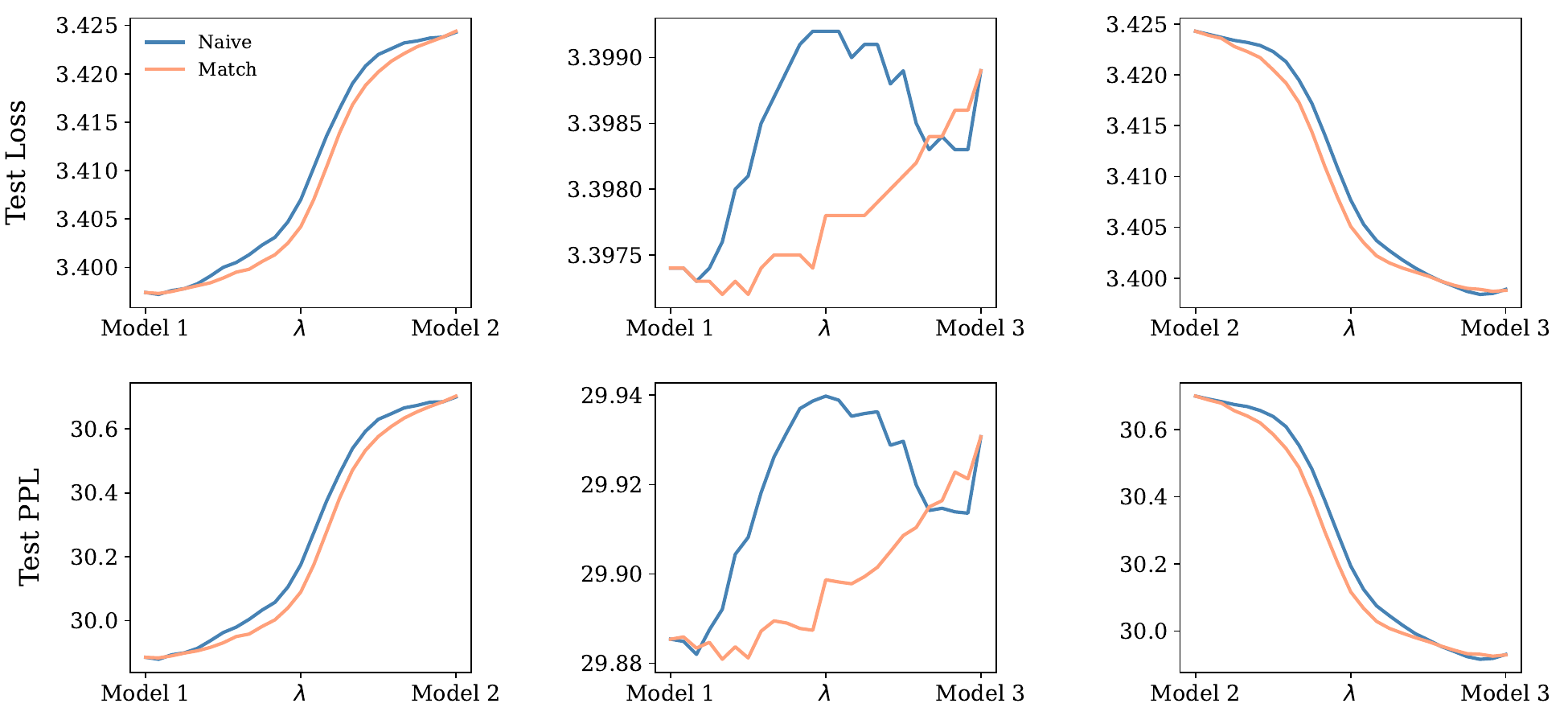} 
        \caption{16 attention heads}
        \label{fig:learnable-lm1b-12-16}
    \end{subfigure}
    \caption{Linear Mode Connectivity for GPT2 on One Billion Words with 12 layers.}
\end{figure}

\begin{figure}[H]
    \centering
    \begin{subfigure}{0.48\textwidth}
        \centering
        \includegraphics[width=1\textwidth]{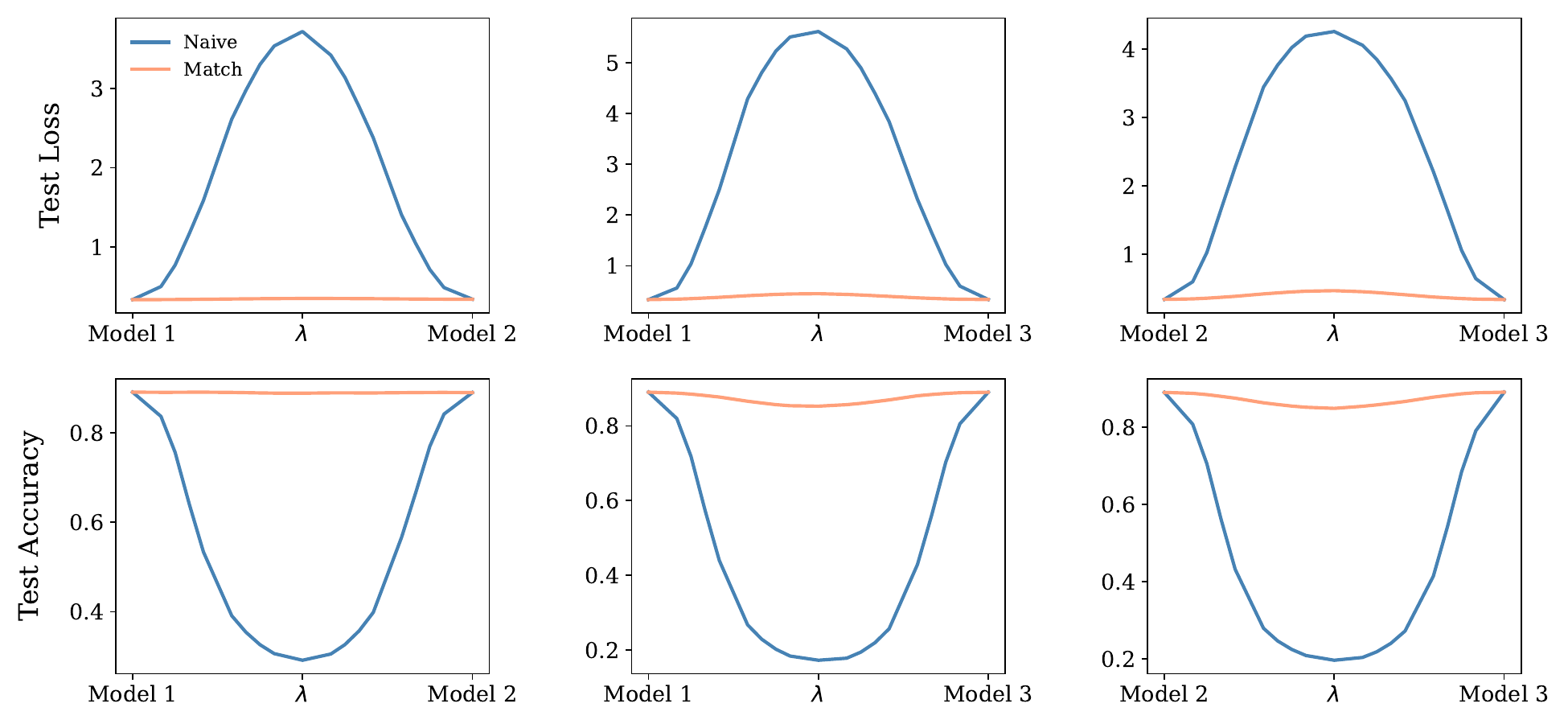} 
        \caption{4 attention heads}
        \label{fig:attn-mnist-1-4-0-rope}
    \end{subfigure}
    \begin{subfigure}{0.48\textwidth}
        \centering
        \includegraphics[width=1\textwidth]{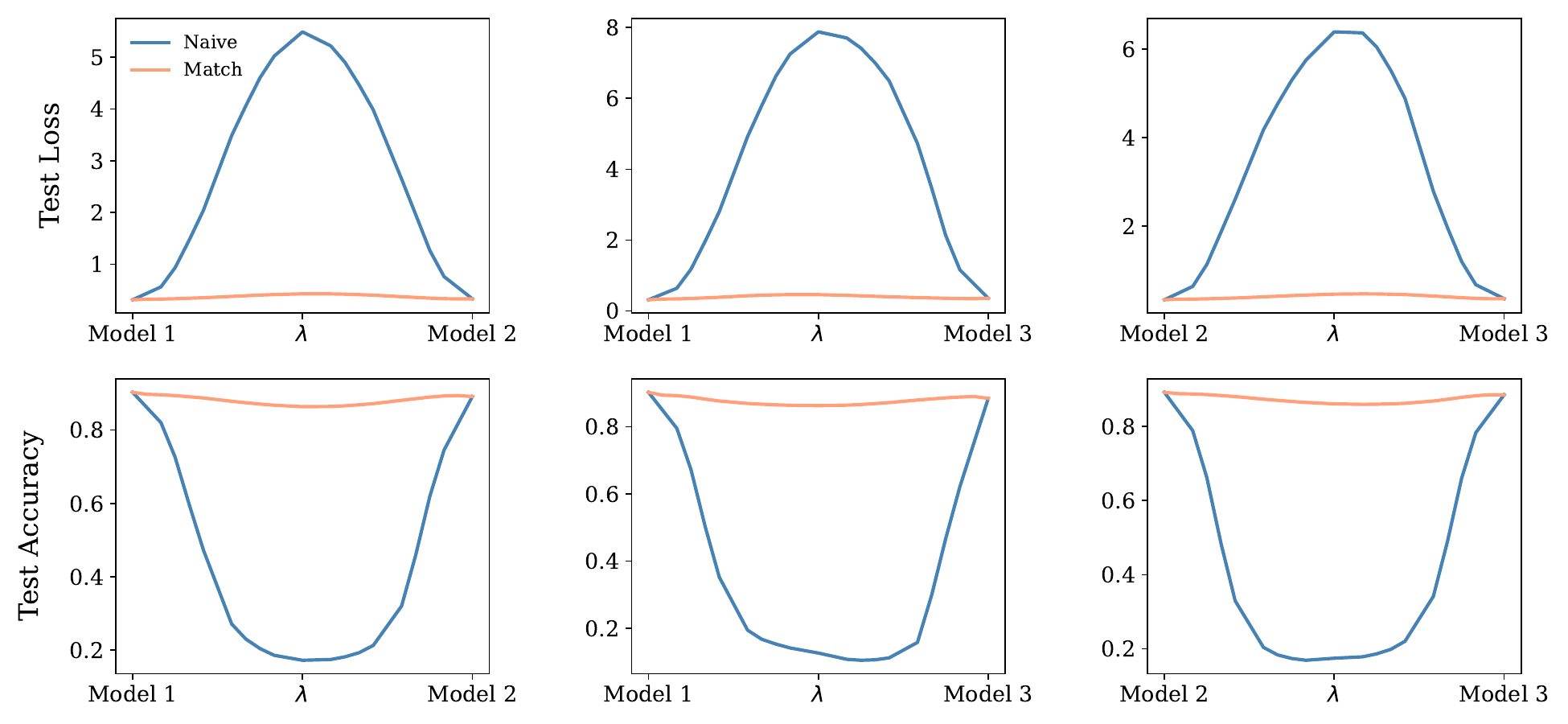} 
        \caption{8 attention heads}
        \label{fig:attn-mnist-1-8-0-rope}
    \end{subfigure}
    \caption{Linear Mode Connectivity for ViT-RoPE on MNIST with 1 layer}
\end{figure}

\begin{figure}[H]
    \centering
    \begin{subfigure}{0.48\textwidth}
        \centering
        \includegraphics[width=1\textwidth]{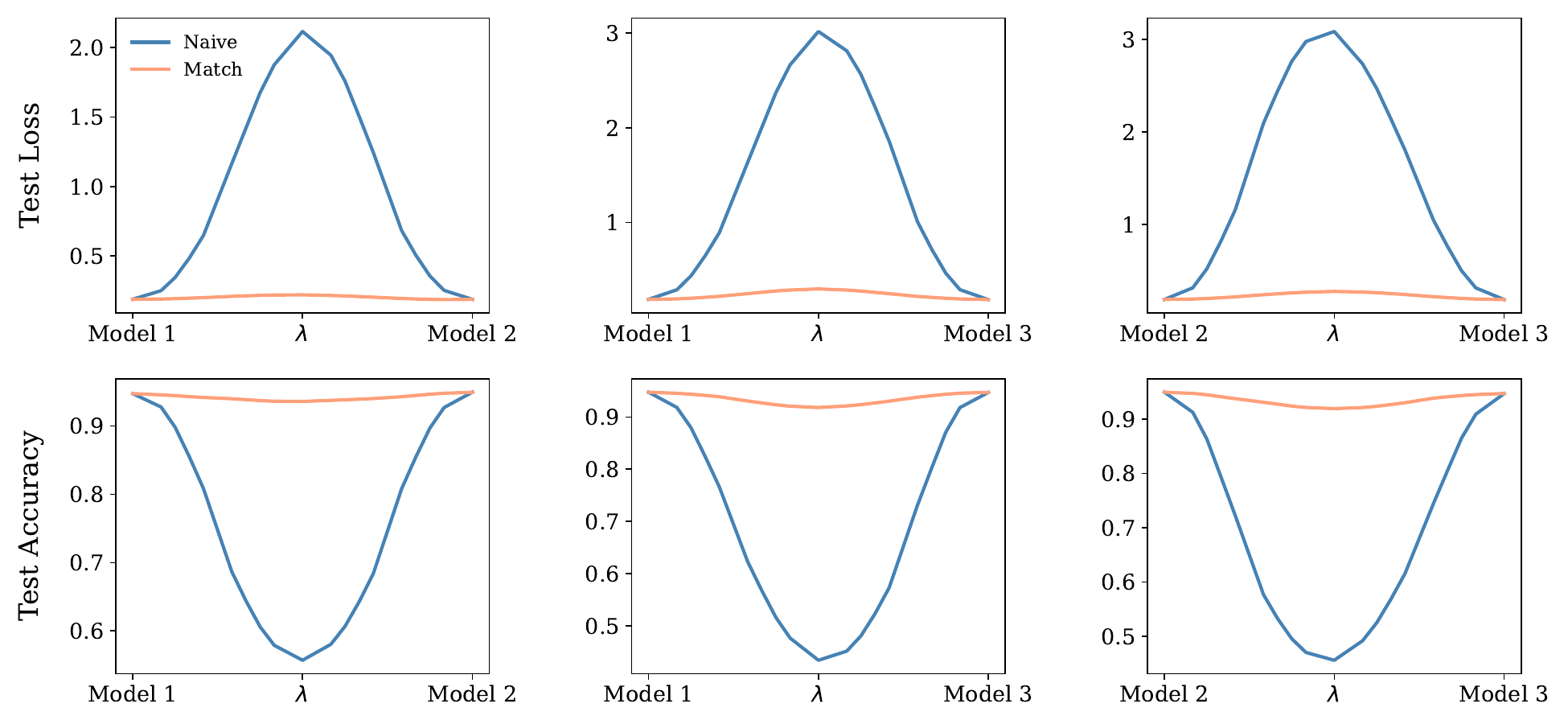} 
        \caption{4 attention heads}
        \label{fig:attn-mnist-2-4-0-rope}
    \end{subfigure}
    \begin{subfigure}{0.48\textwidth}
        \centering
        \includegraphics[width=1\textwidth]{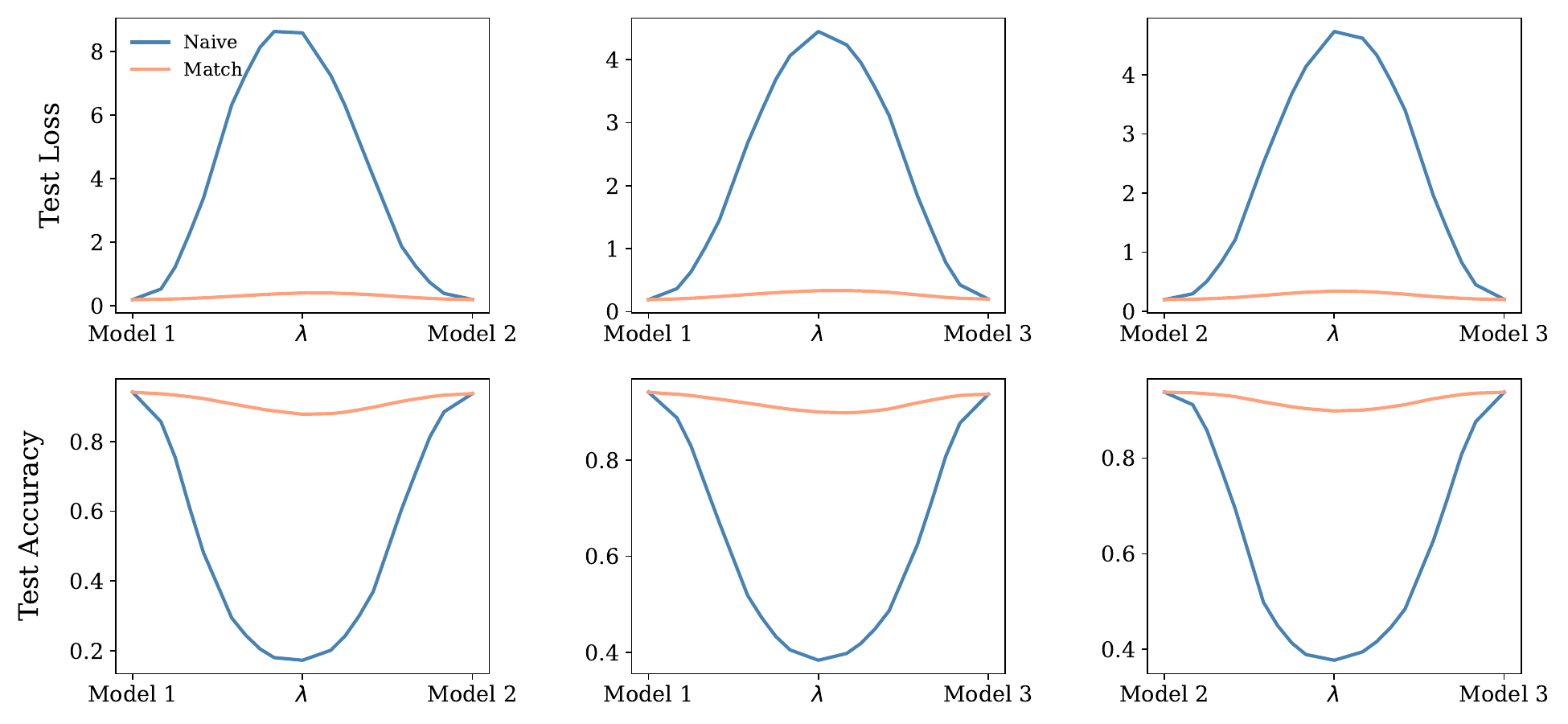} 
        \caption{8 attention heads}
        \label{fig:attn-mnist-2-8-0-rope}
    \end{subfigure}
    \caption{Linear Mode Connectivity for ViT-RoPE on MNIST with 2 layers}
\end{figure}

\begin{figure}[H]
    \centering
    \begin{subfigure}{0.48\textwidth}
        \centering
        \includegraphics[width=1\textwidth]{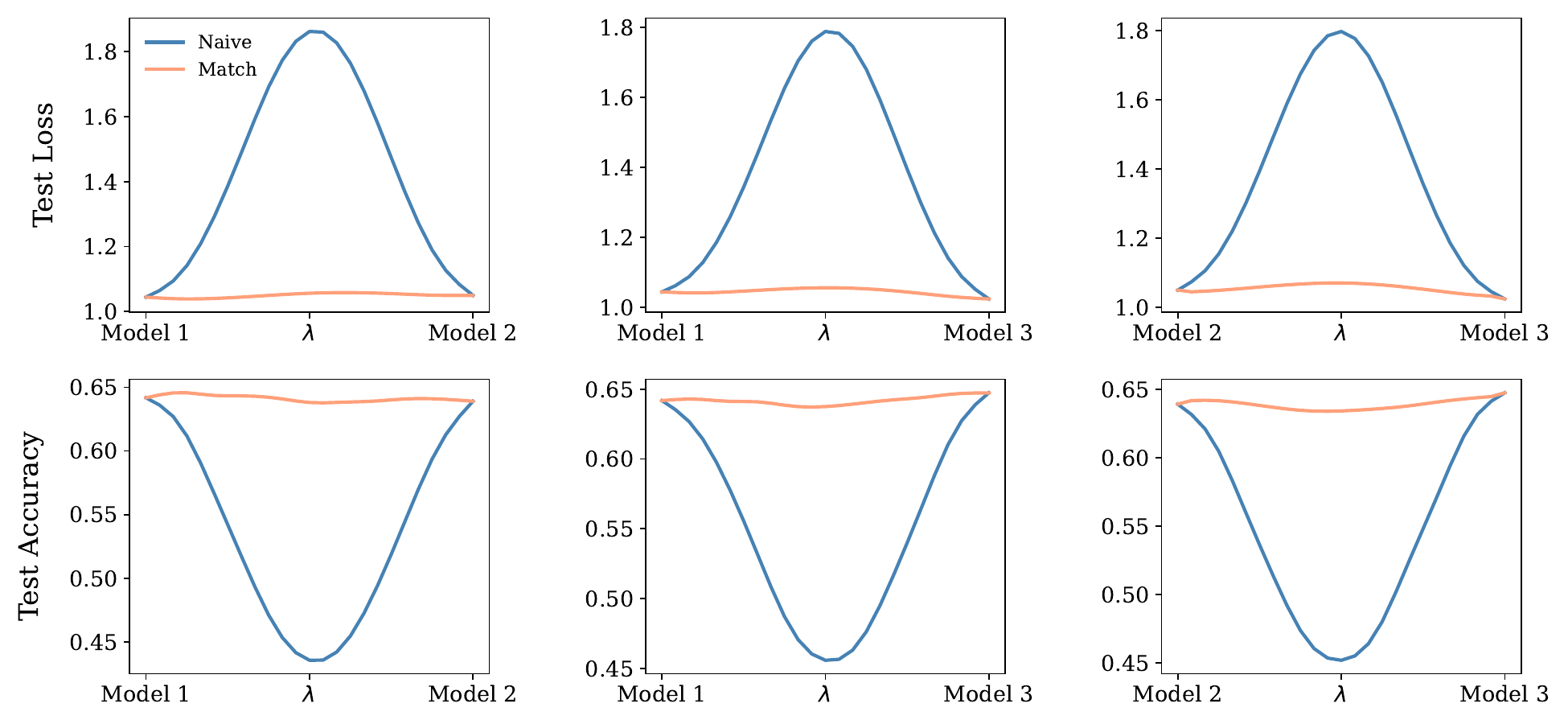} 
        \caption{4 attention heads}
        \label{fig:attn-cifar10-2-4-0-rope}
    \end{subfigure}
    \hfill
    \begin{subfigure}{0.48\textwidth}
        \centering
        \includegraphics[width=1\textwidth]{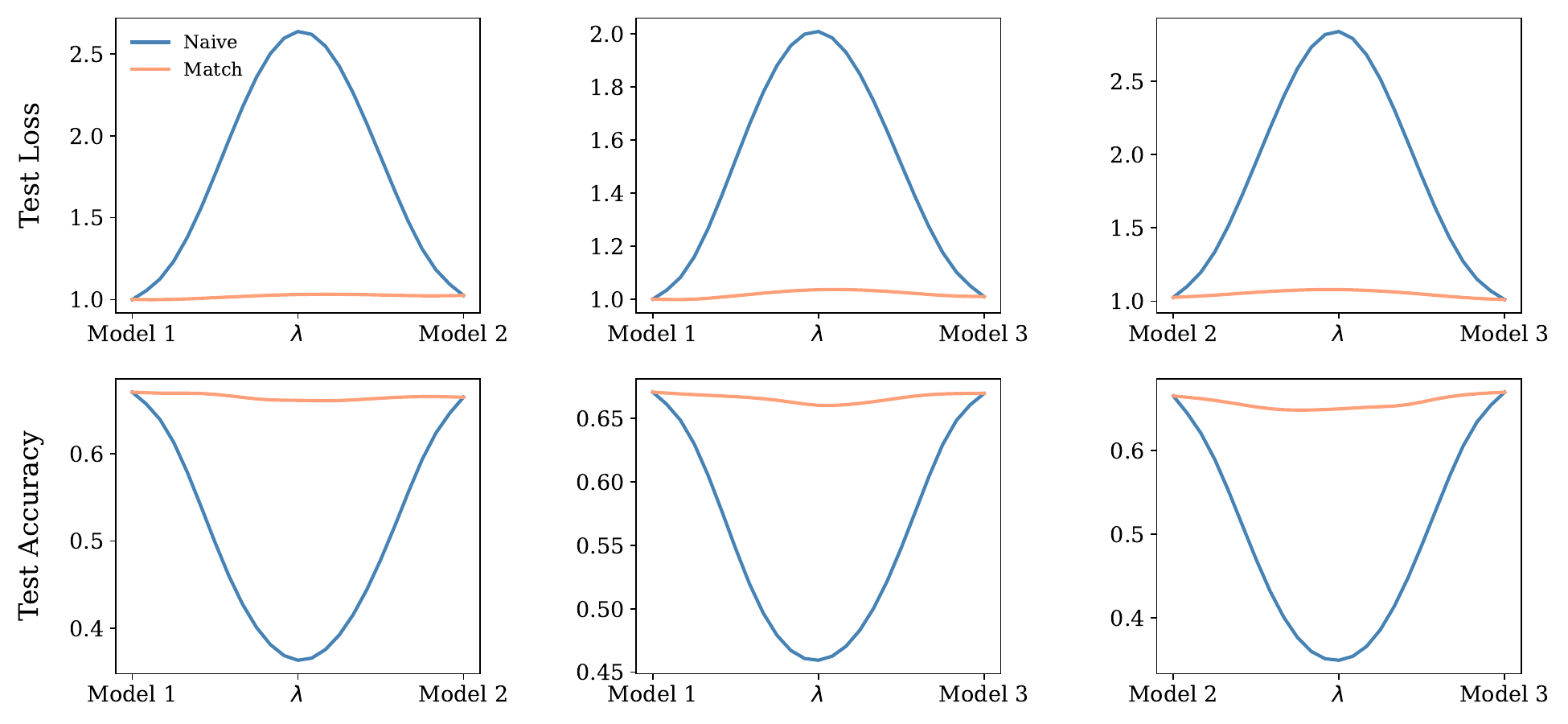} 
        \caption{8 attention heads}
        \label{fig:attn-cifar10-2-8-0-rope}
    \end{subfigure}
    \caption{Linear Mode Connectivity for ViT-RoPE on CIFAR-10 with 2 layers}
\end{figure}

\begin{figure}[H]
    \centering
    \begin{subfigure}{0.48\textwidth}
        \centering
        \includegraphics[width=1\textwidth]{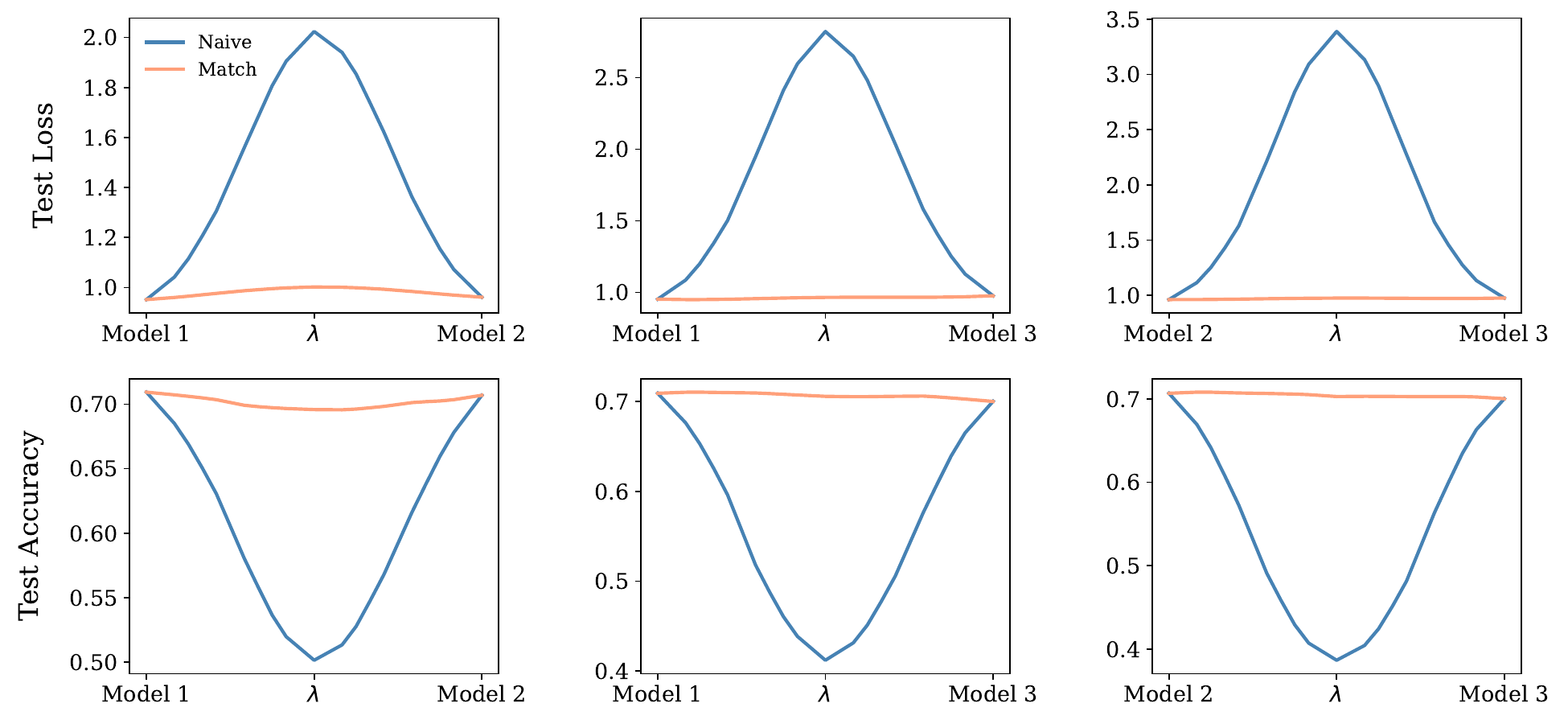} 
        \caption{4 attention heads}
        \label{fig:attn-cifar10-4-4-0-rope}
    \end{subfigure}
    \hfill
    \begin{subfigure}{0.48\textwidth}
        \centering
        \includegraphics[width=1\textwidth]{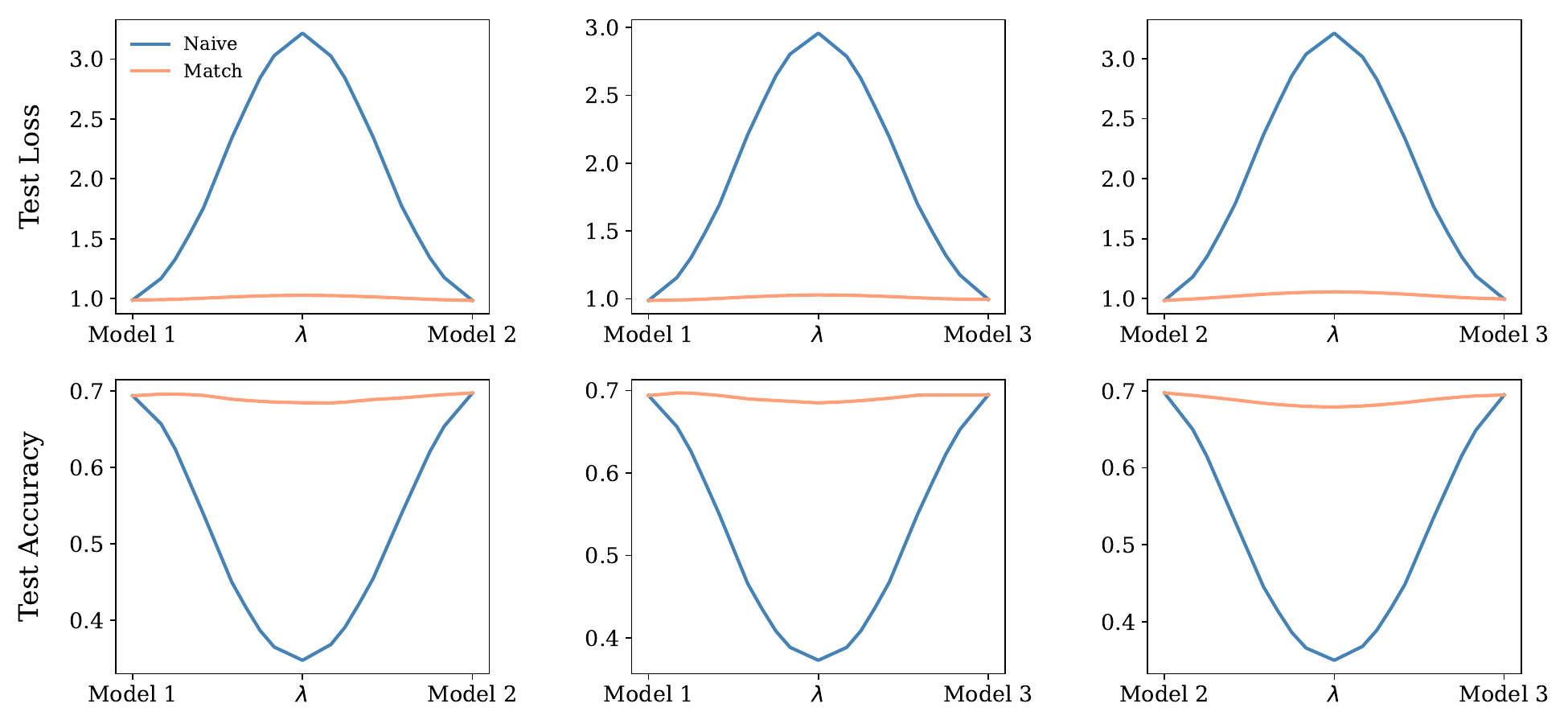} 
        \caption{8 attention heads}
        \label{fig:attn-cifar10-4-8-0-rope}
    \end{subfigure}
    \caption{Linear Mode Connectivity for ViT-RoPE on CIFAR-10 with 4 layers}
\end{figure}

\begin{figure}[H]
    \centering
    \begin{subfigure}{0.48\textwidth}
        \centering
        \includegraphics[width=1\textwidth]{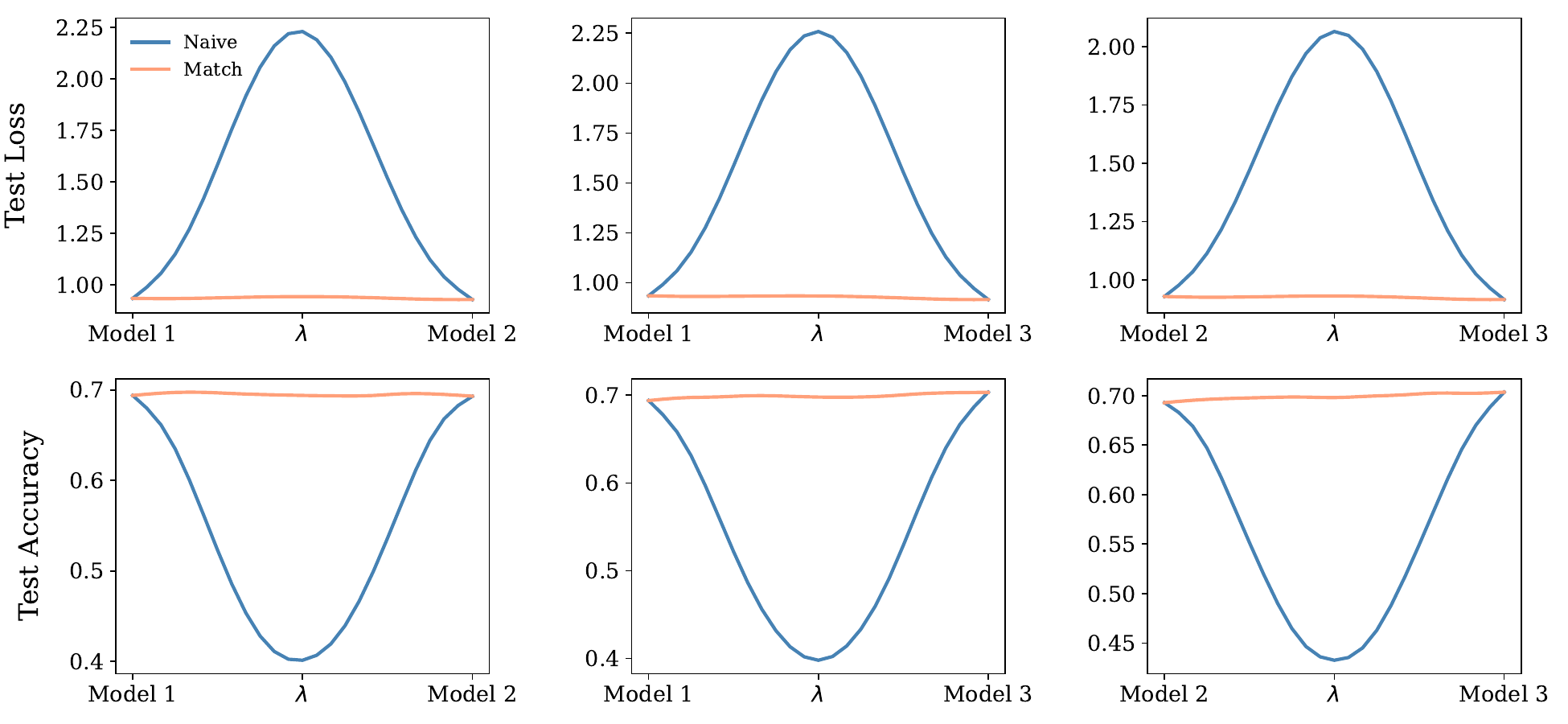} 
        \caption{4 attention heads}
        \label{fig:attn-cifar10-6-4-0-rope}
    \end{subfigure}
    \hfill
    \begin{subfigure}{0.48\textwidth}
        \centering
        \includegraphics[width=1\textwidth]{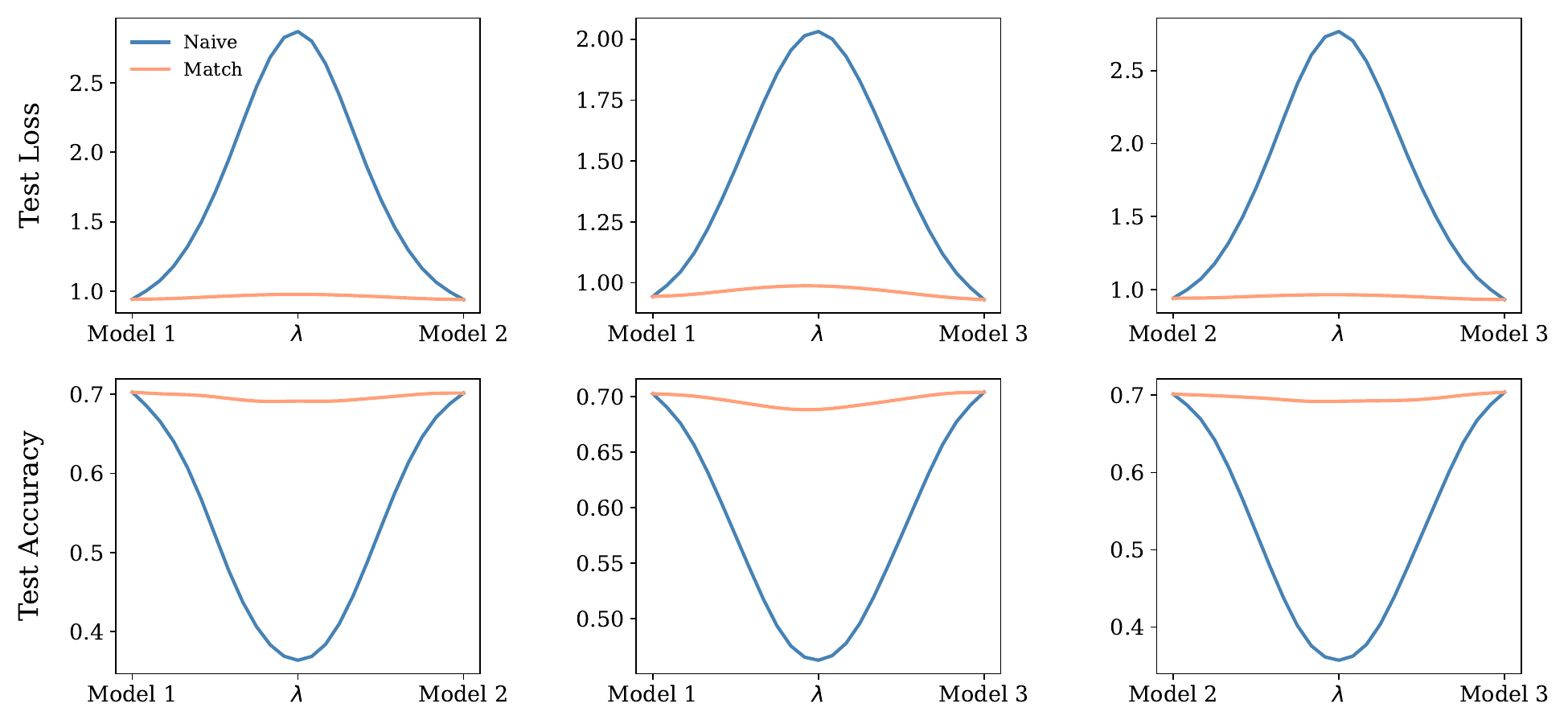} 
        \caption{8 attention heads}
        \label{fig:attn-cifar10-6-8-0-rope}
    \end{subfigure}
    \caption{Linear Mode Connectivity for ViT-RoPE on CIFAR-10 with 6 layers}
\end{figure}

\begin{figure}[H]
    \centering
    \begin{subfigure}{0.48\textwidth}
        \centering
        \includegraphics[width=1\textwidth]{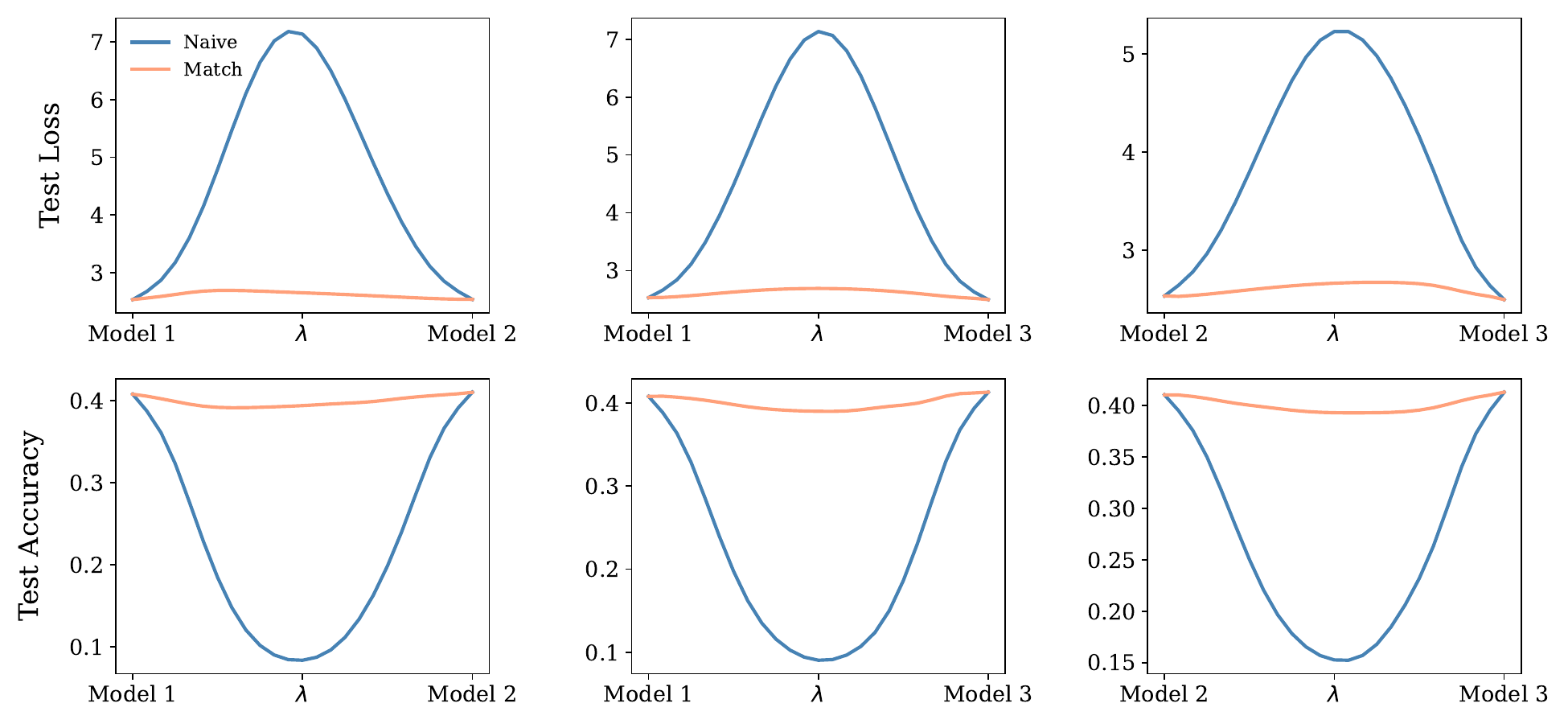} 
        \caption{4 attention heads}
        \label{fig:attn-cifar100-6-4-0-rope}
    \end{subfigure}
    \hfill
    \begin{subfigure}{0.48\textwidth}
        \centering
        \includegraphics[width=1\textwidth]{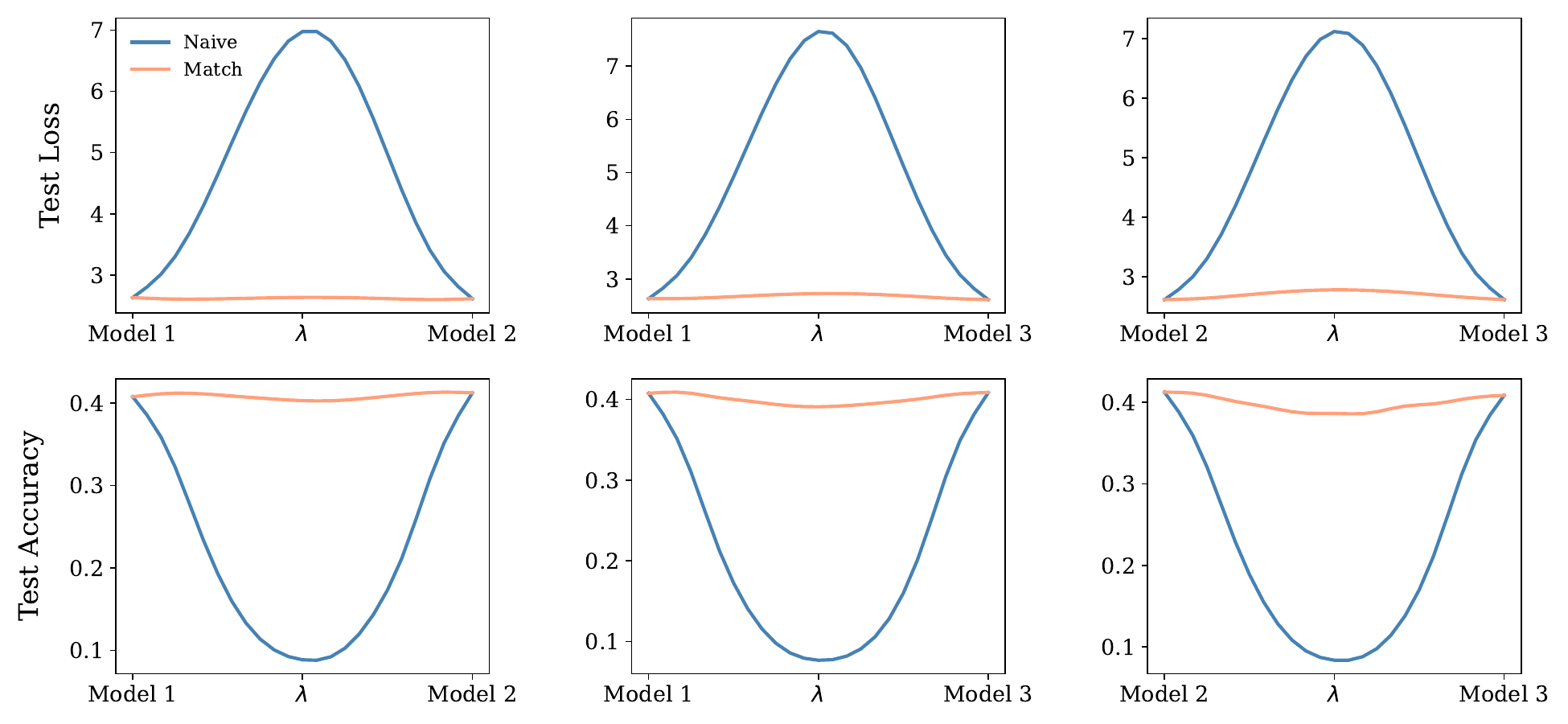} 
        \caption{8 attention heads}
        \label{fig:attn-cifar100-6-8-0-rope}
    \end{subfigure}
    \caption{Linear Mode Connectivity for ViT-RoPE on CIFAR-100 with 6 layers}
\end{figure}

\begin{figure}[H]
    \centering
    \begin{subfigure}{0.48\textwidth}
        \centering
        \includegraphics[width=1\textwidth]{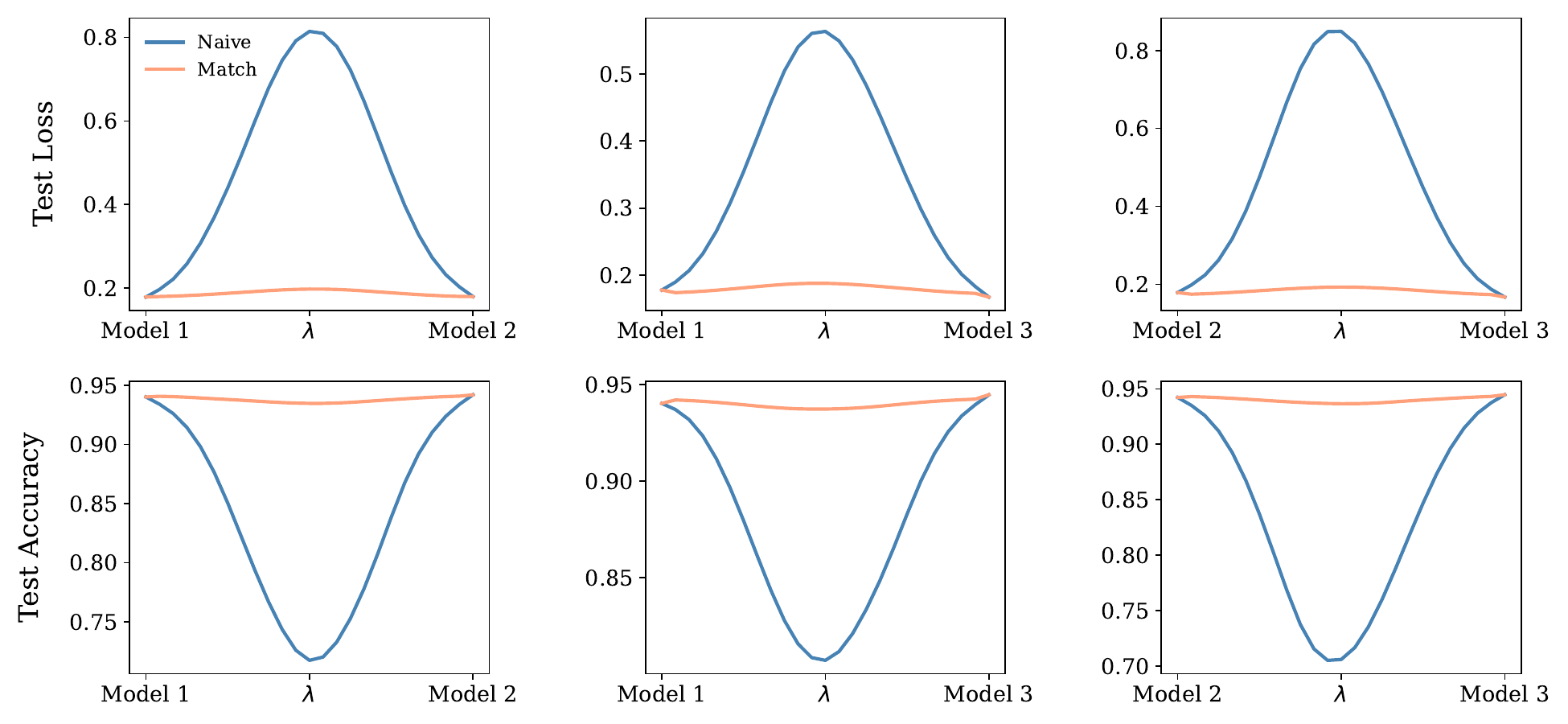} 
        \caption{CIFAR-10}
        \label{fig:attn-imgnetcifar10-12-4-0-rope}
    \end{subfigure}
    \hfill
    \begin{subfigure}{0.48\textwidth}
        \centering
        \includegraphics[width=1\textwidth]{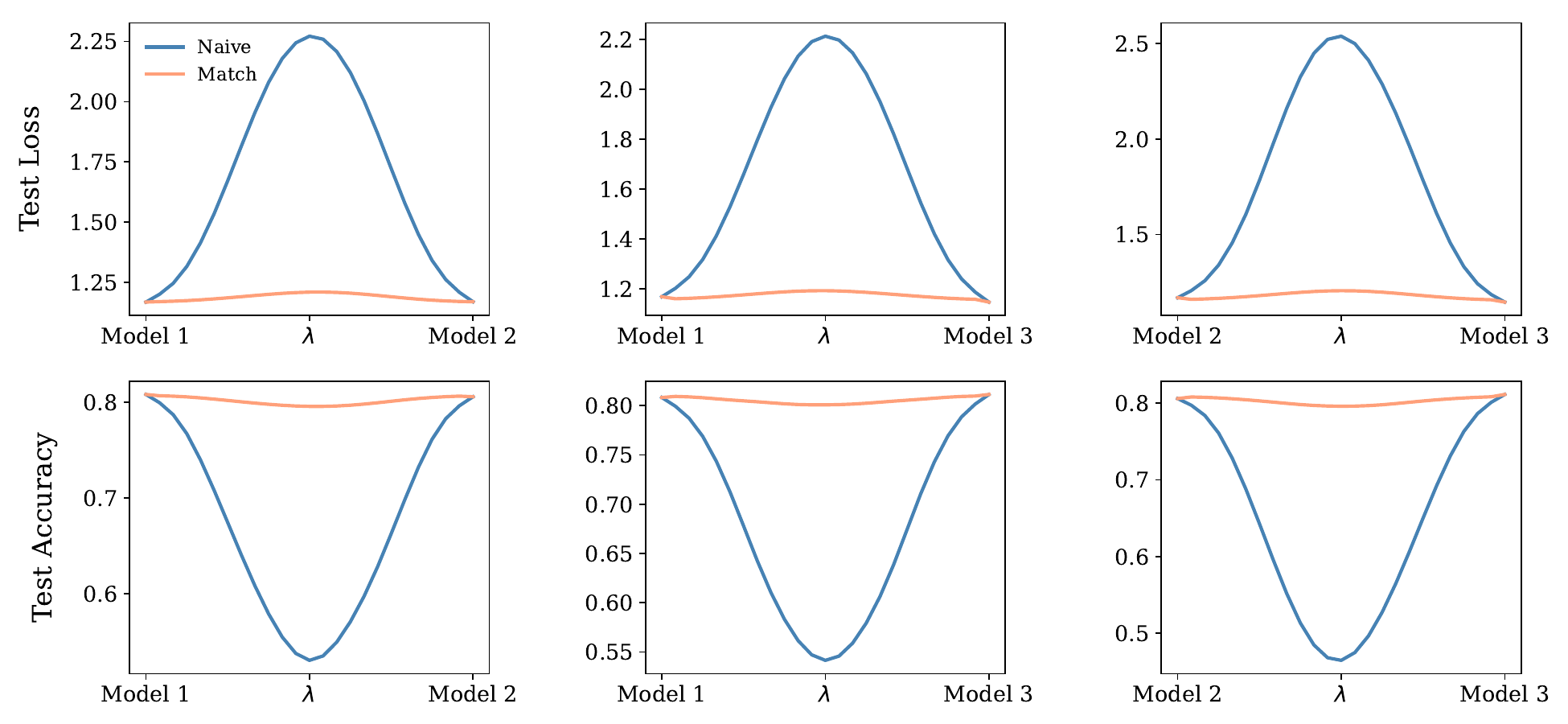} 
        \caption{CIFAR-100}
        \label{fig:attn-imgnetcifar100-12-4-0-rope}
    \end{subfigure}
    \caption{Linear Mode Connectivity for ViT-RoPE on ImageNet21k$\rightarrow$CIFAR-10/100 with 12 layers and 6 heads}
\end{figure}

\begin{figure}[H]
    \centering
    \begin{subfigure}{0.48\textwidth}
        \centering
        \includegraphics[width=\textwidth]{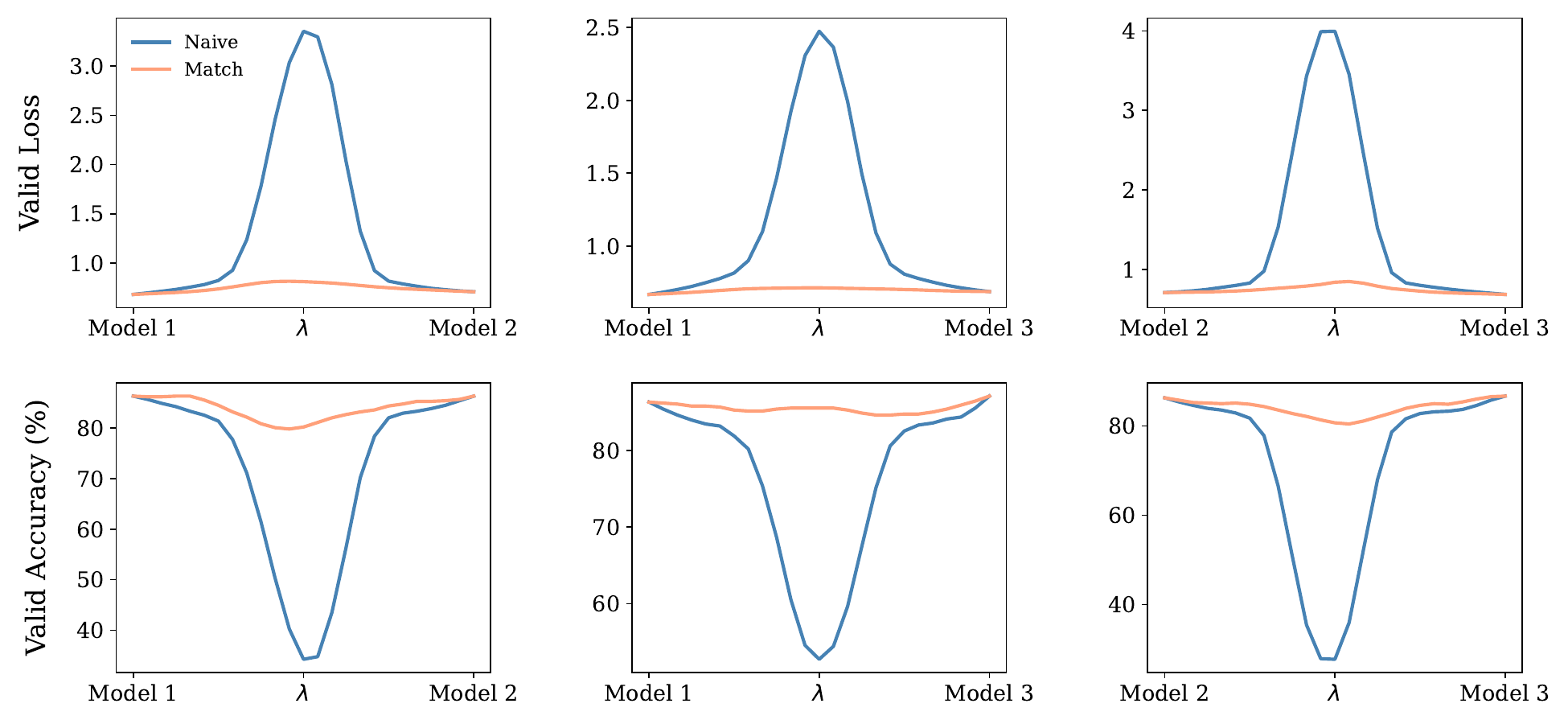} 
        \caption{8 attention heads}
        \label{fig:rope-imagenet-12-8}
    \end{subfigure}
    \hfill
    \begin{subfigure}{0.48\textwidth}
        \centering
        \includegraphics[width=\textwidth]{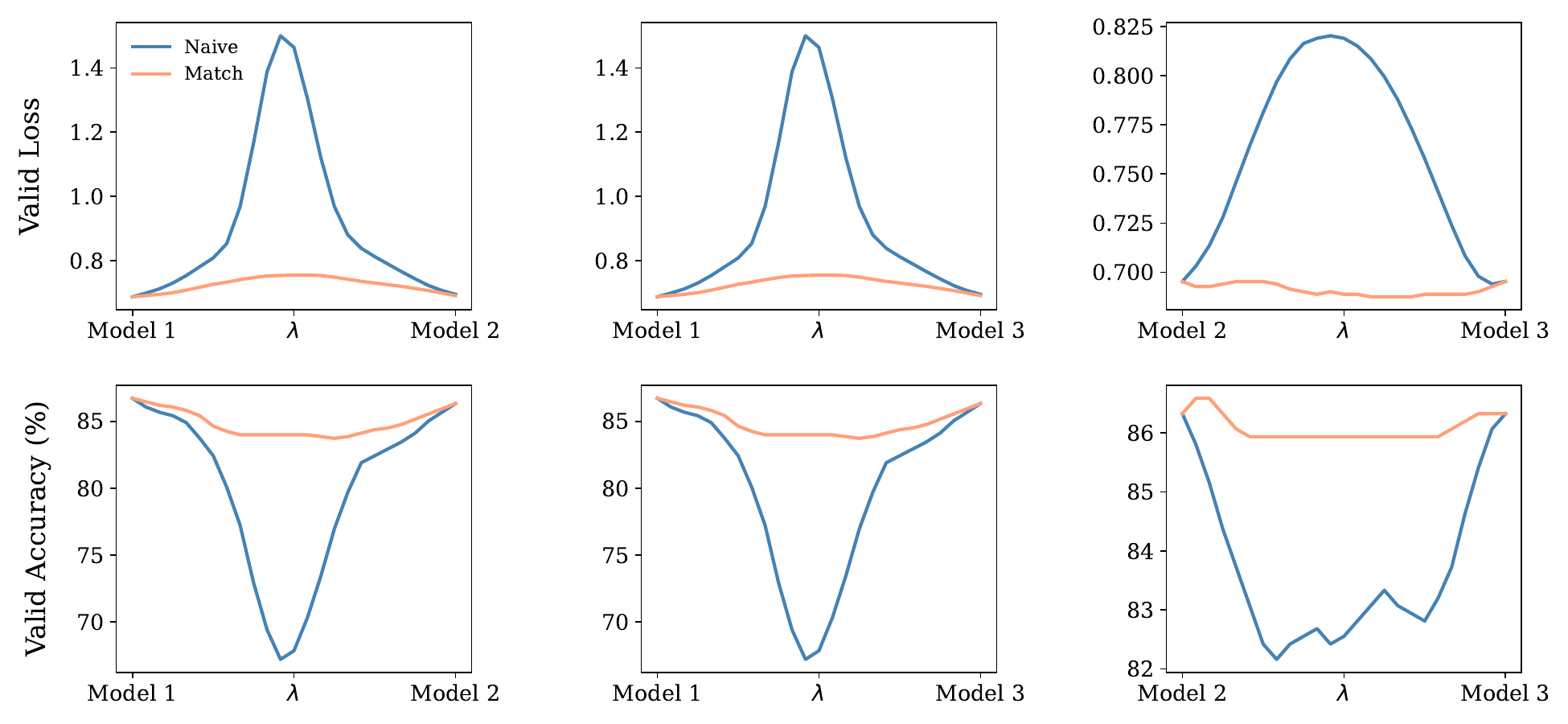} 
        \caption{12 attention heads}
        \label{fig:rope-imagenet-12-12}
    \end{subfigure}

    \begin{subfigure}{0.45\textwidth}
        \centering
        \includegraphics[width=\textwidth]{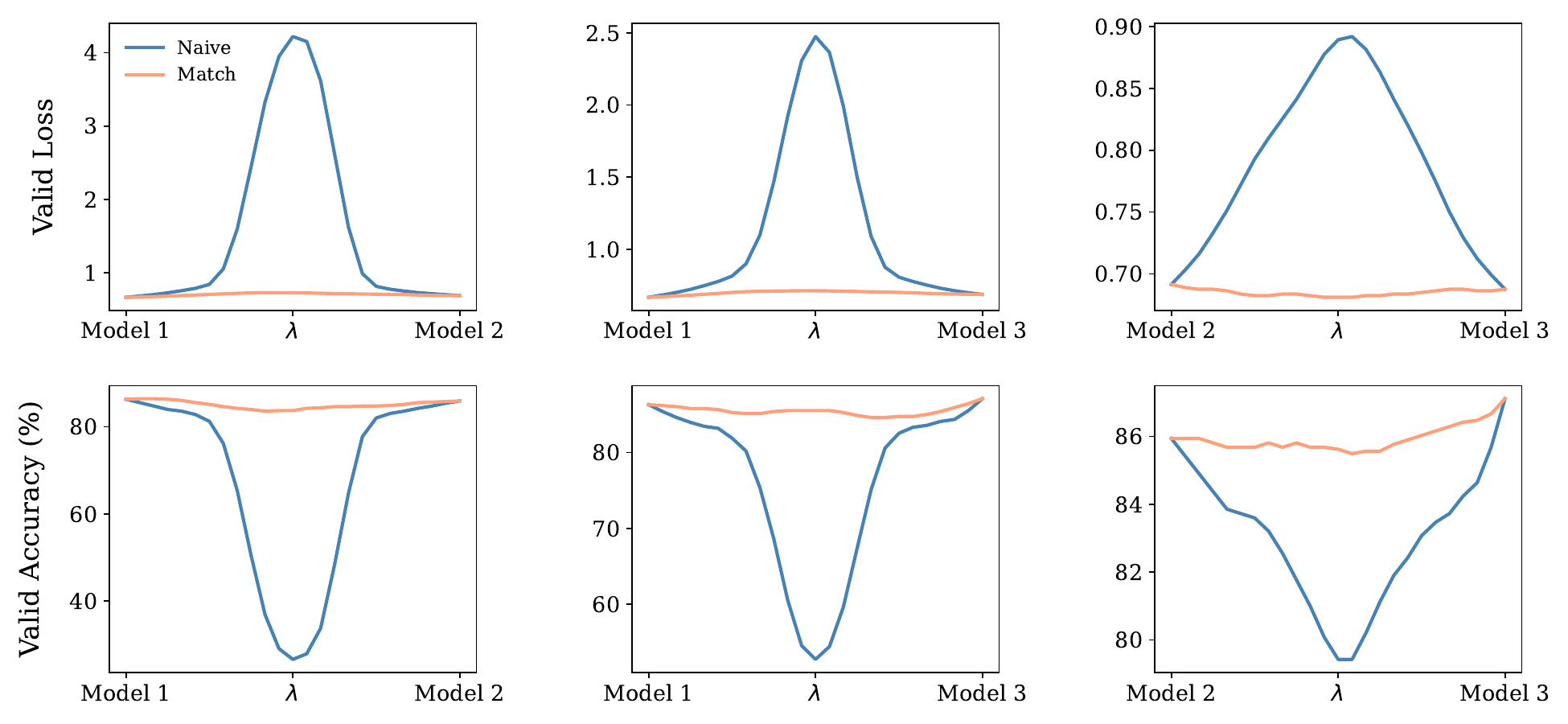} 
        \caption{16 attention heads}
        \label{fig:rope-imagenet-12-16}
    \end{subfigure}
    \caption{Linear Mode Connectivity for ViT-RoPE on ImageNet with 12 layers.}
\end{figure}

\begin{figure}[H]
    \centering
    \begin{subfigure}{0.48\textwidth}
        \centering
        \includegraphics[width=1\textwidth]{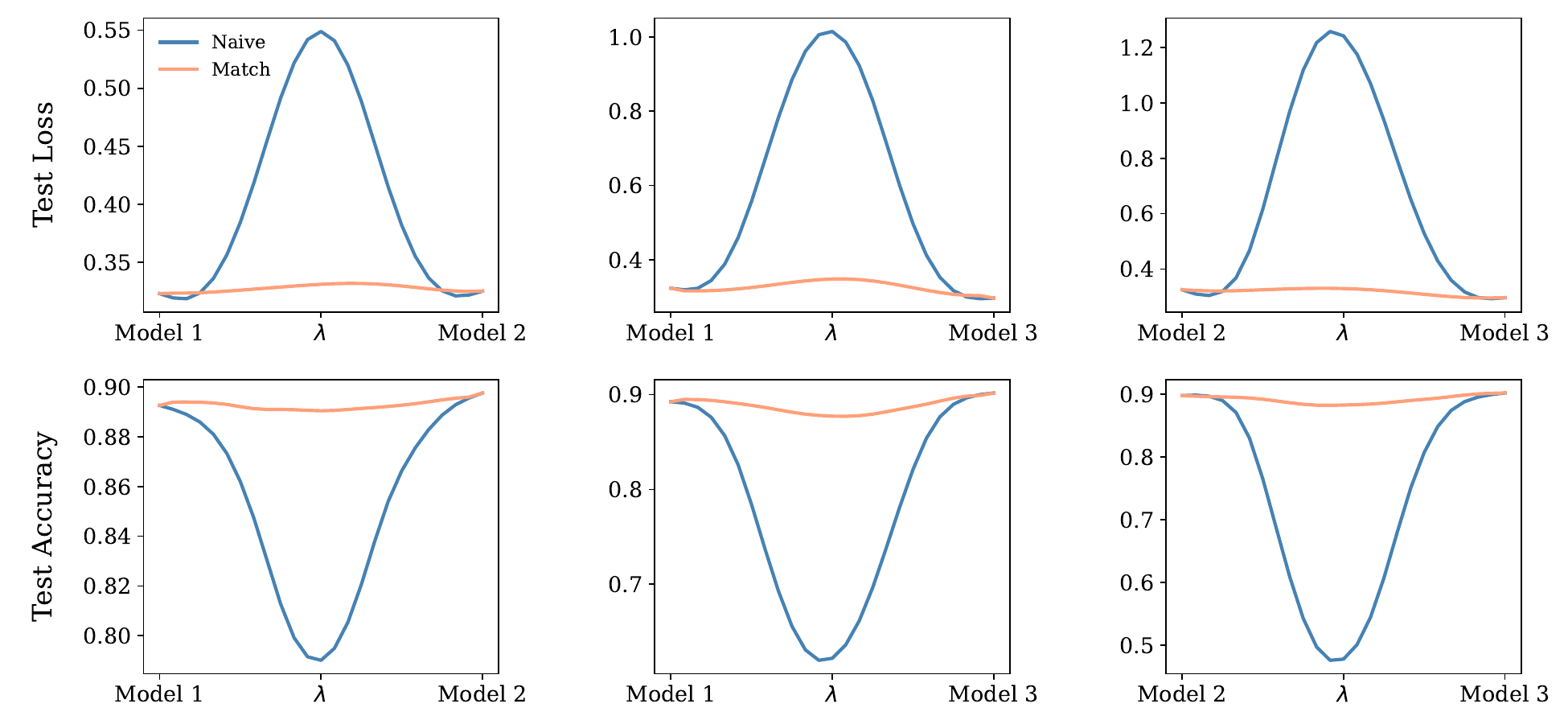} 
        \caption{4 attention heads}
        \label{fig:attn-agnews-2-4-0-rope}
    \end{subfigure}
    \hfill
    \begin{subfigure}{0.48\textwidth}
        \centering
        \includegraphics[width=1\textwidth]{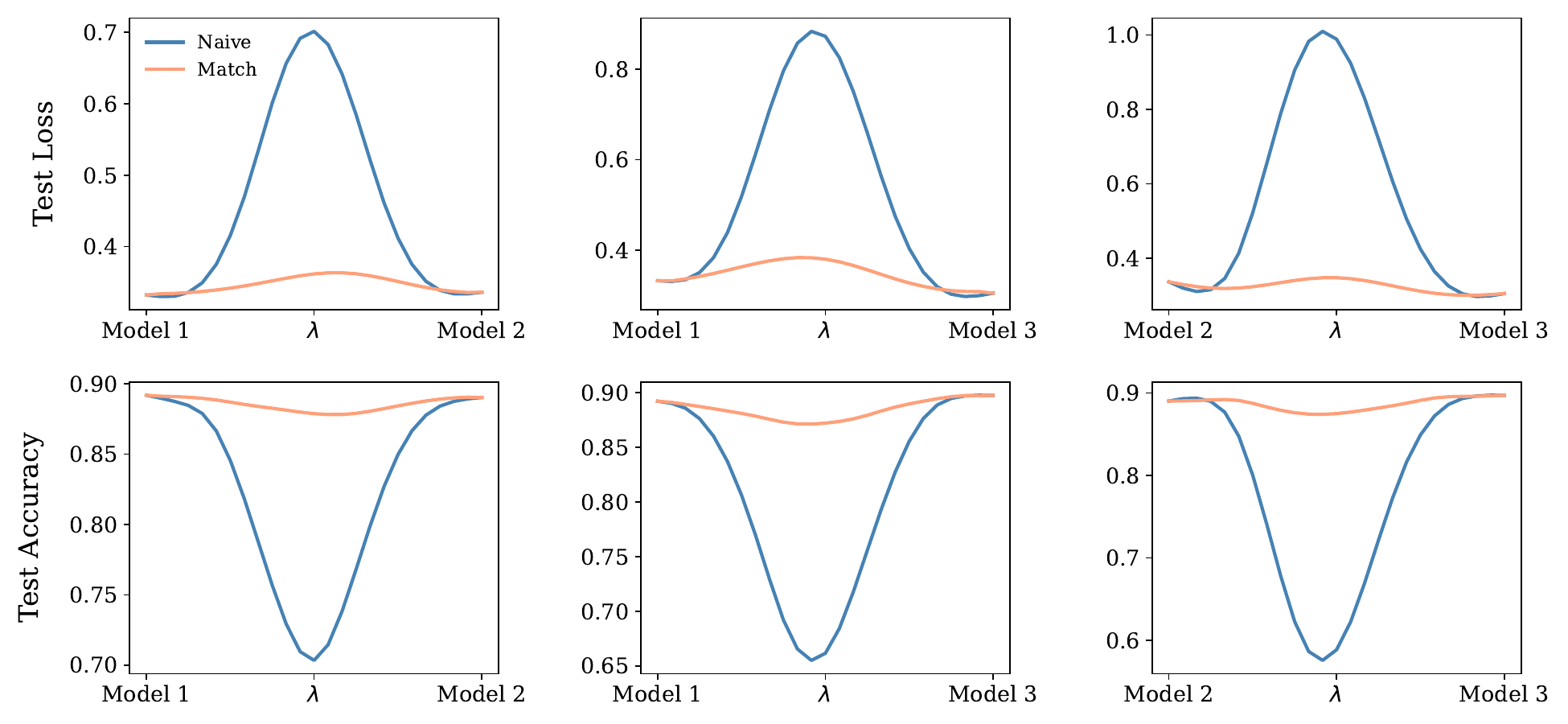} 
        \caption{8 attention heads}
        \label{fig:attn-agnews-2-8-0-rope}
    \end{subfigure}
    \caption{Linear Mode Connectivity for BERT-RoPE on AGnews with 2 layers}
\end{figure}

\begin{figure}[H]
    \centering
    \begin{subfigure}{0.48\textwidth}
        \centering
        \includegraphics[width=1\textwidth]{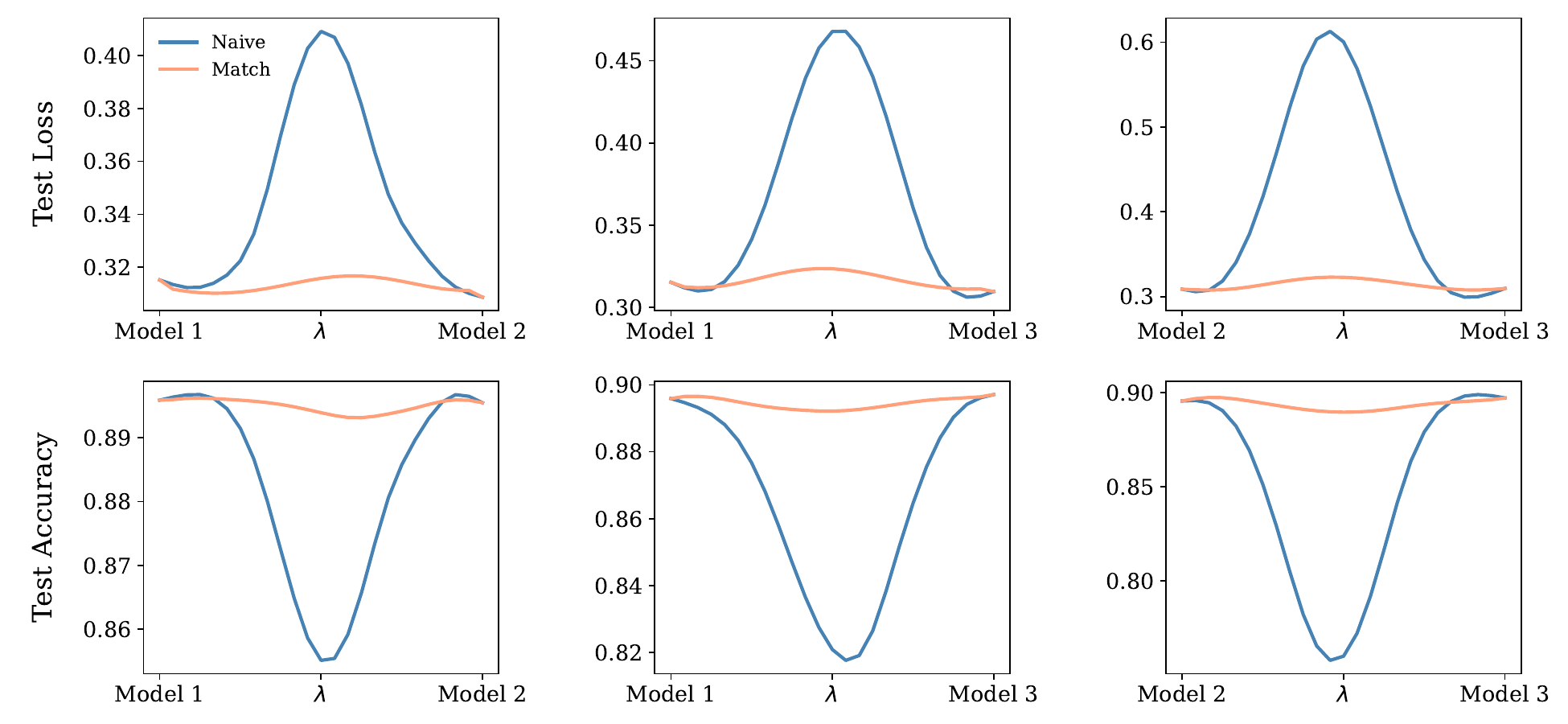} 
        \caption{4 attention heads}
        \label{fig:attn-agnews-6-4-0-rope}
    \end{subfigure}
    \hfill
    \begin{subfigure}{0.48\textwidth}
        \centering
        \includegraphics[width=1\textwidth]{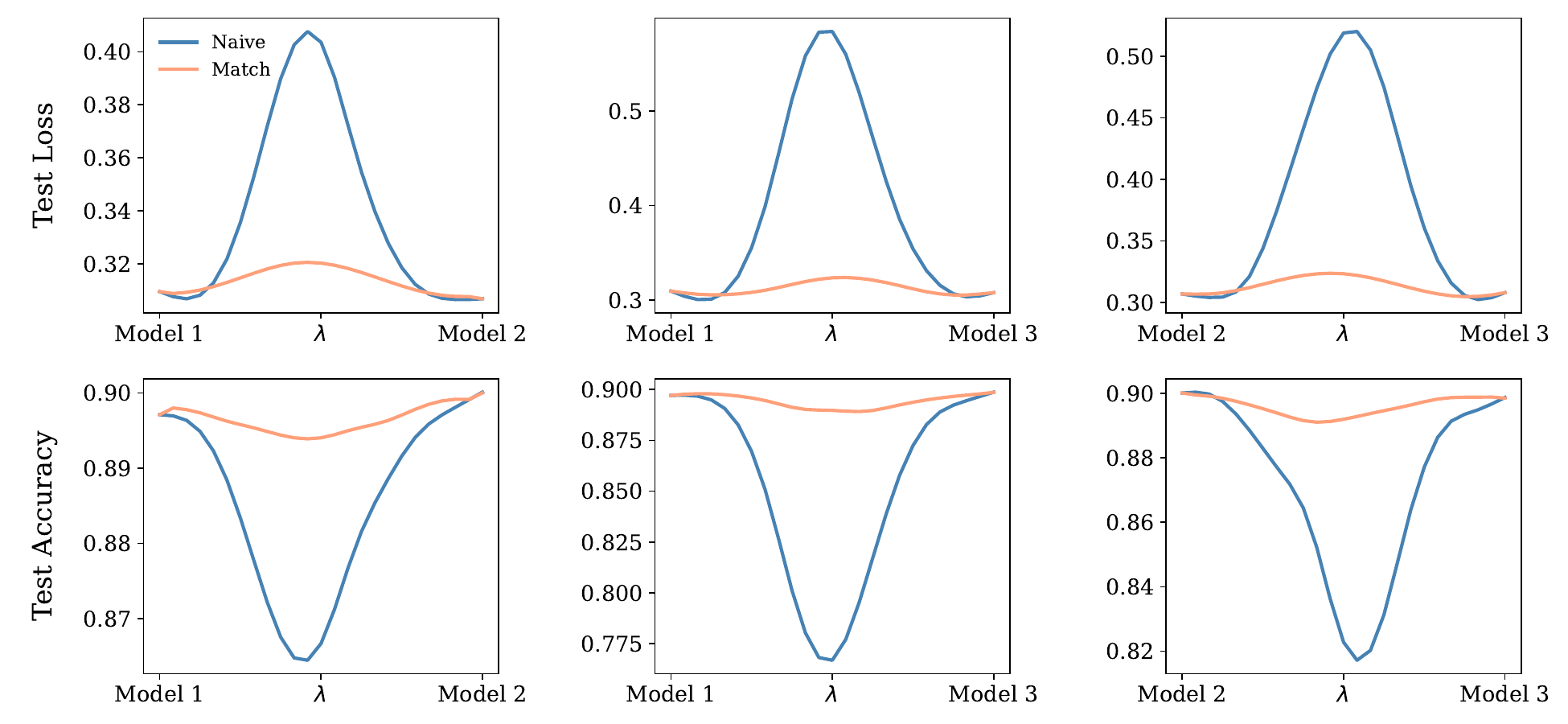} 
        \caption{8 attention heads}
        \label{fig:attn-agnews-6-8-0-rope}
    \end{subfigure}
    \caption{Linear Mode Connectivity for BERT-RoPE on AGnews with 6 layers}
\end{figure}

\begin{figure}[H]
    \centering
    \begin{subfigure}{0.48\textwidth}
        \centering
        \includegraphics[width=1\textwidth]{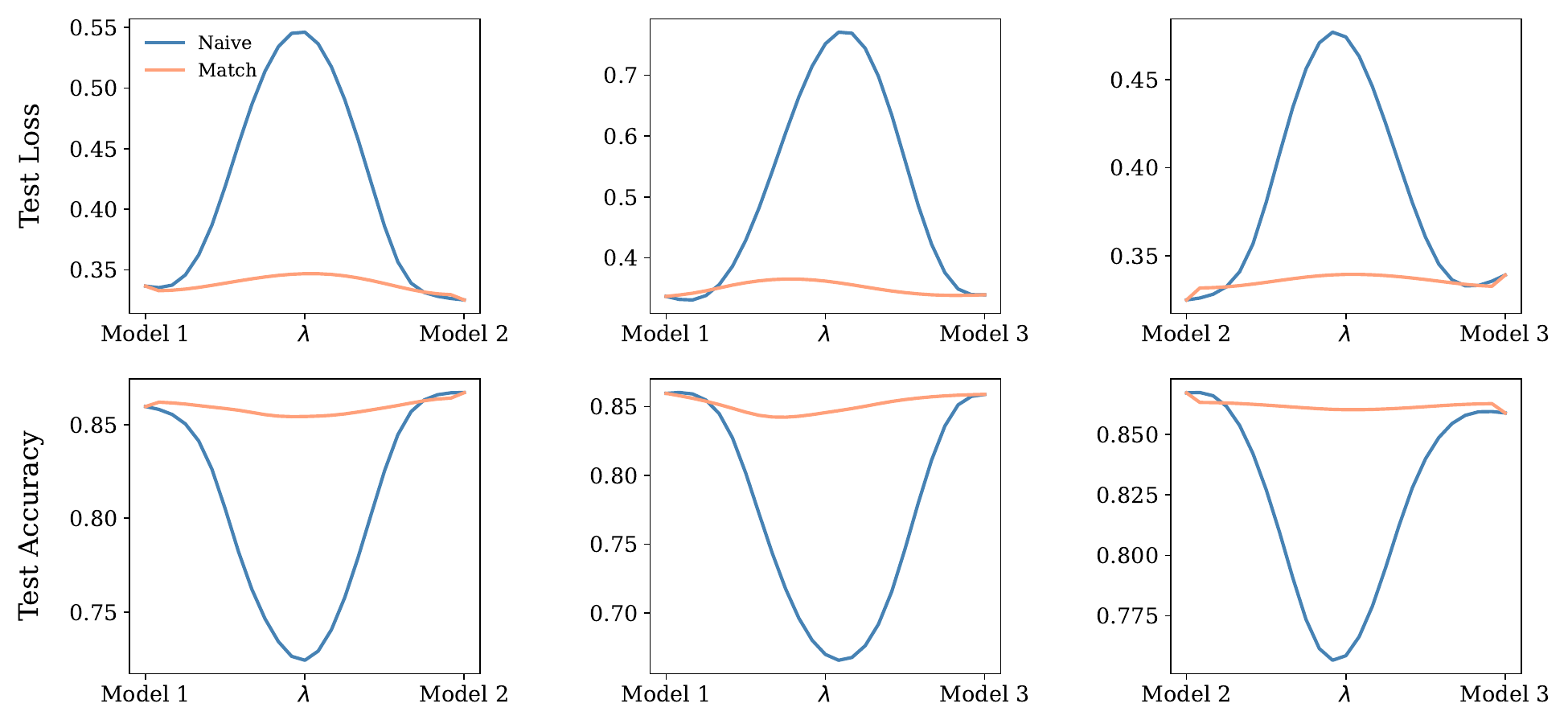} 
        \caption{4 attention heads}
        \label{fig:attn-imdbreview-2-4-0-rope}
    \end{subfigure}
    \hfill
    \begin{subfigure}{0.48\textwidth}
        \centering
        \includegraphics[width=1\textwidth]{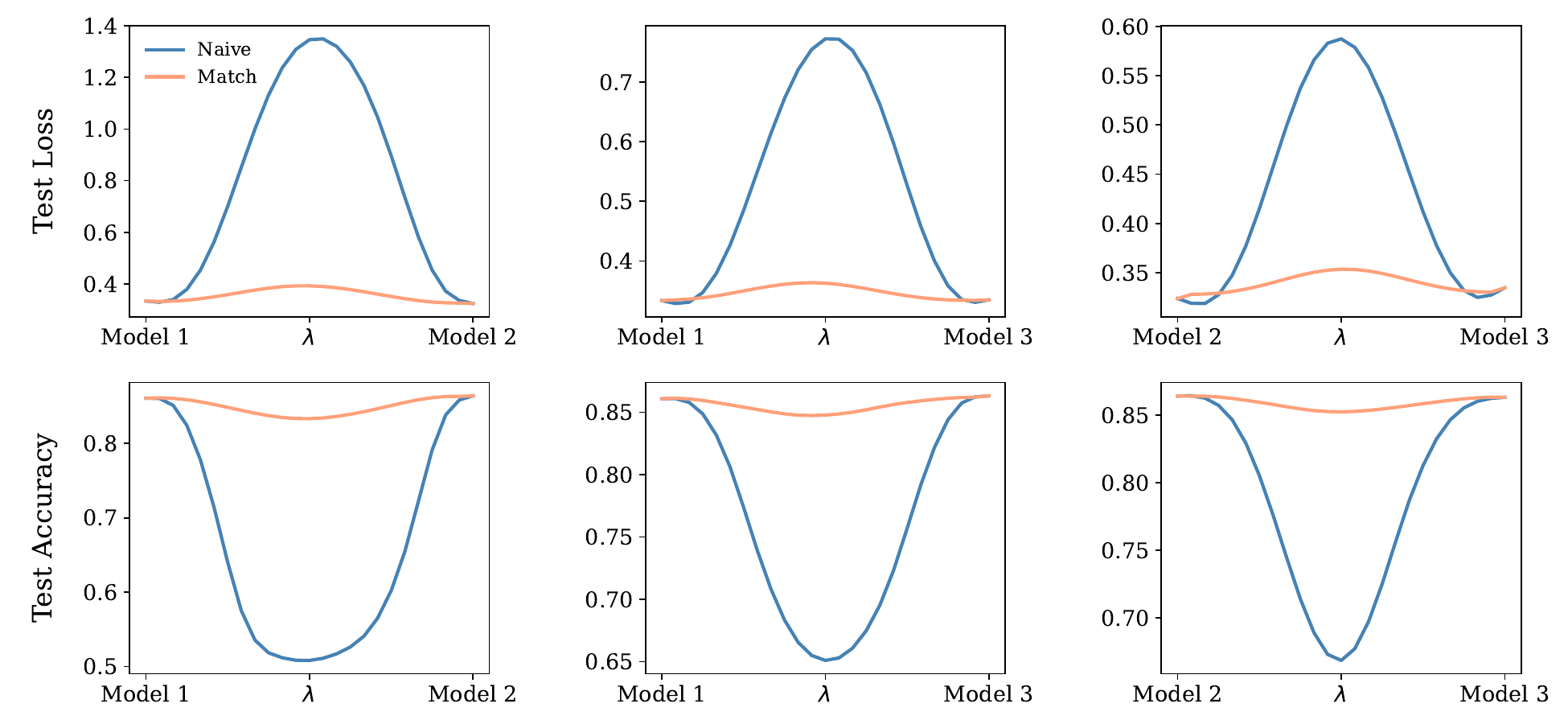} 
        \caption{8 attention heads}
        \label{fig:attn-imdbreview-2-8-0-rope}
    \end{subfigure}
    \caption{Linear Mode Connectivity for BERT-RoPE on IMDBreview with 2 layers}
\end{figure}

\begin{figure}[H]
    \centering
    \begin{subfigure}{0.48\textwidth}
        \centering
        \includegraphics[width=1\textwidth]{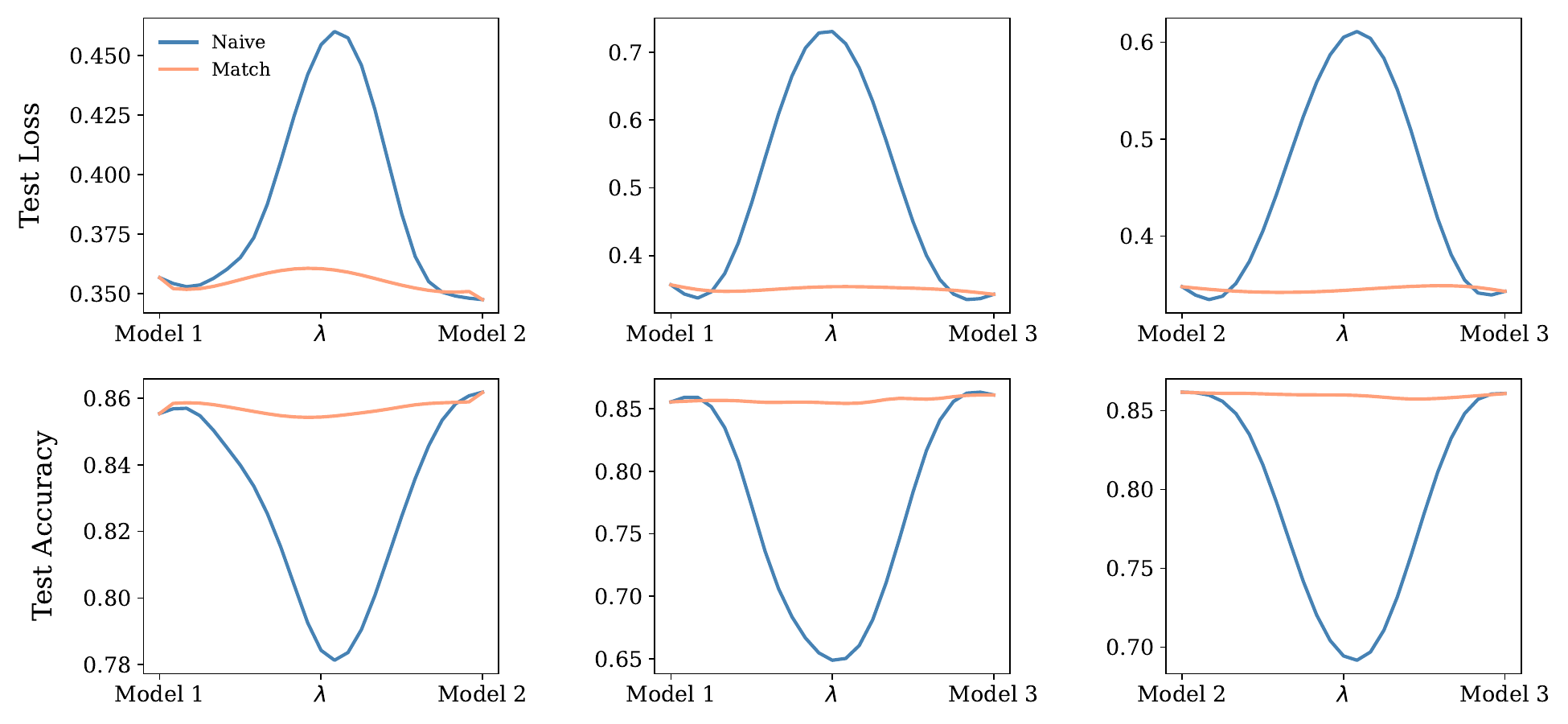} 
        \caption{4 attention heads}
        \label{fig:attn-imdbreview-6-4-0-rope}
    \end{subfigure}
    \hfill
    \begin{subfigure}{0.48\textwidth}
        \centering
        \includegraphics[width=1\textwidth]{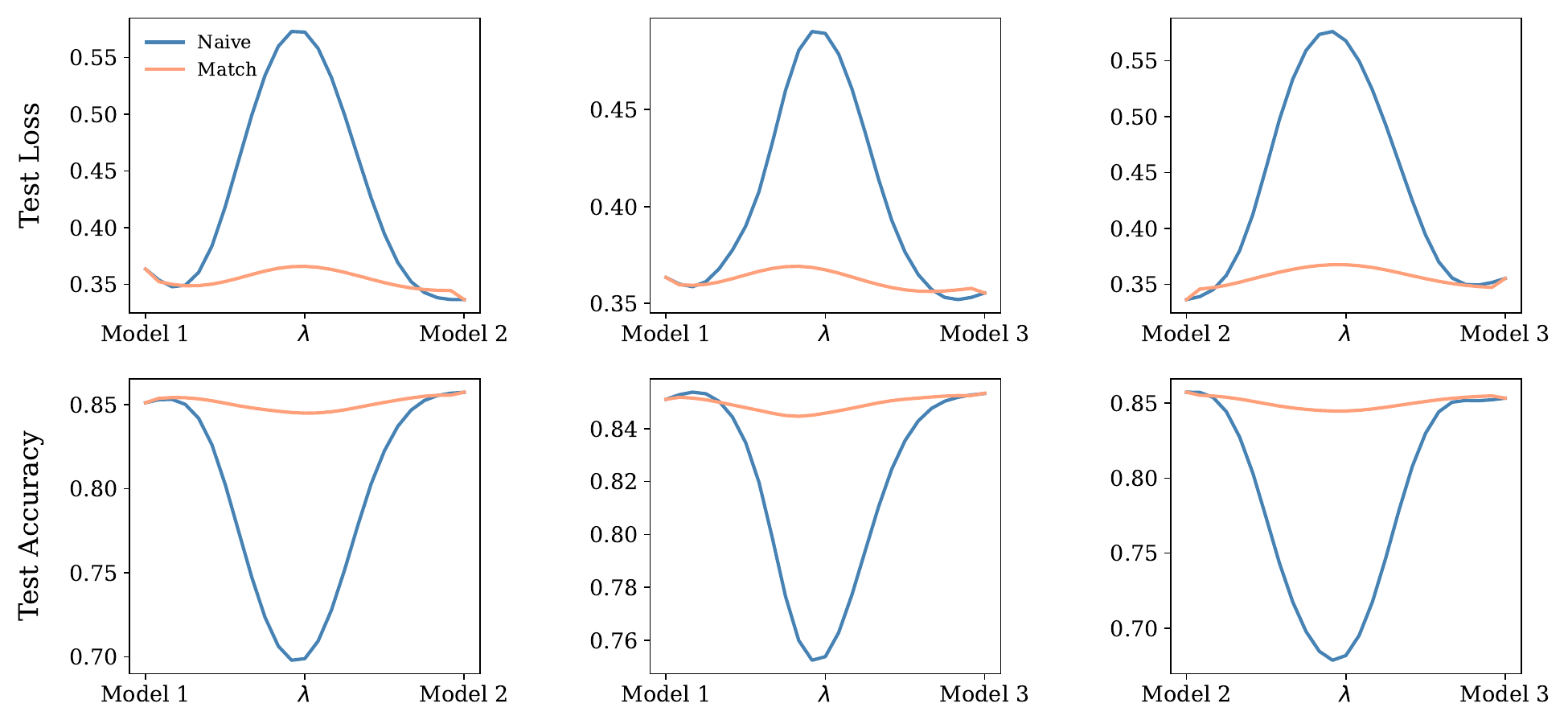} 
        \caption{8 attention heads}
        \label{fig:attn-imdbreview-6-8-0-rope}
    \end{subfigure}
    \caption{Linear Mode Connectivity for BERT-RoPE on IMDBreview with 6 layers}
\end{figure}

\begin{figure}[H]
    \centering
    \begin{subfigure}{0.48\textwidth}
        \centering
        \includegraphics[width=1\textwidth]{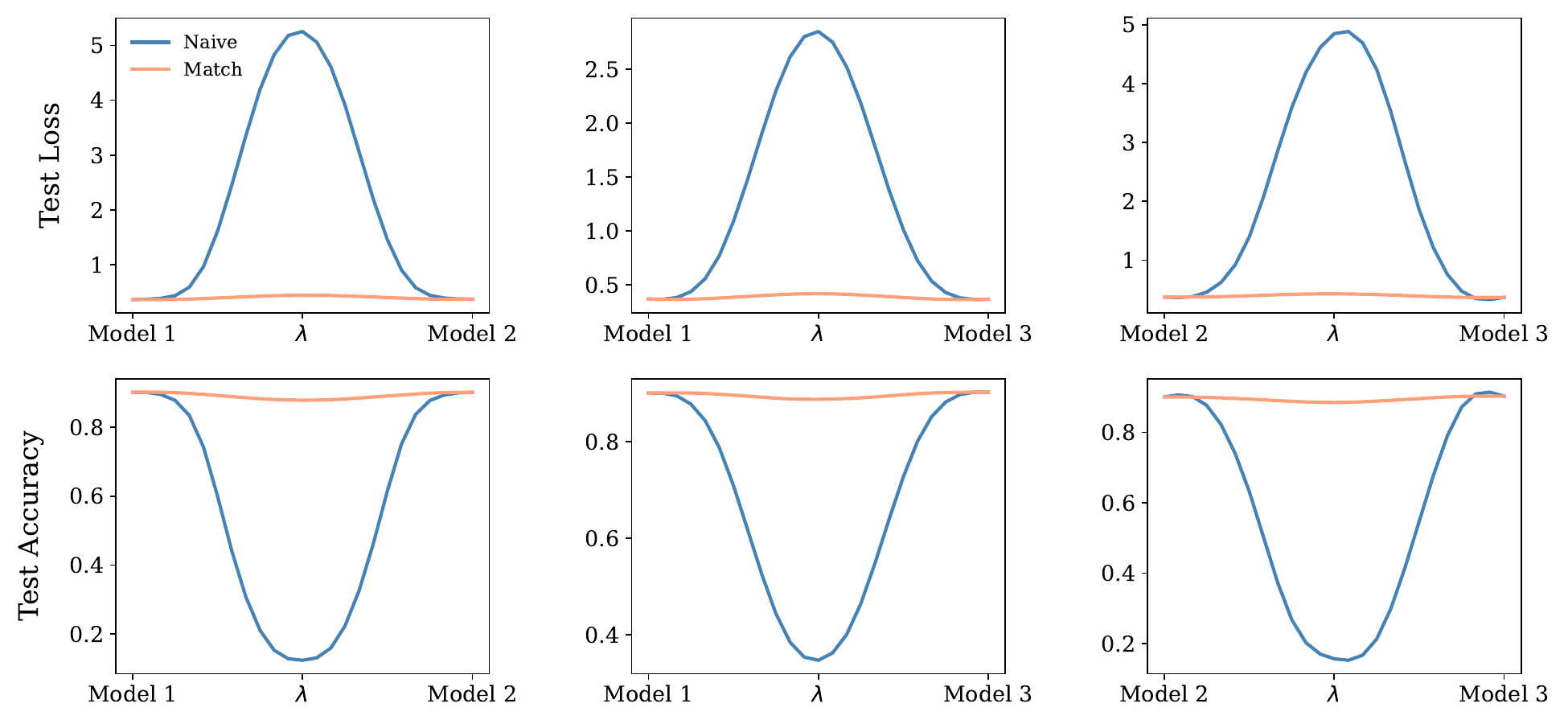} 
        \caption{4 attention heads}
        \label{fig:attn-dbpedia-2-4-0-rope}
    \end{subfigure}
    \hfill
    \begin{subfigure}{0.48\textwidth}
        \centering
        \includegraphics[width=1\textwidth]{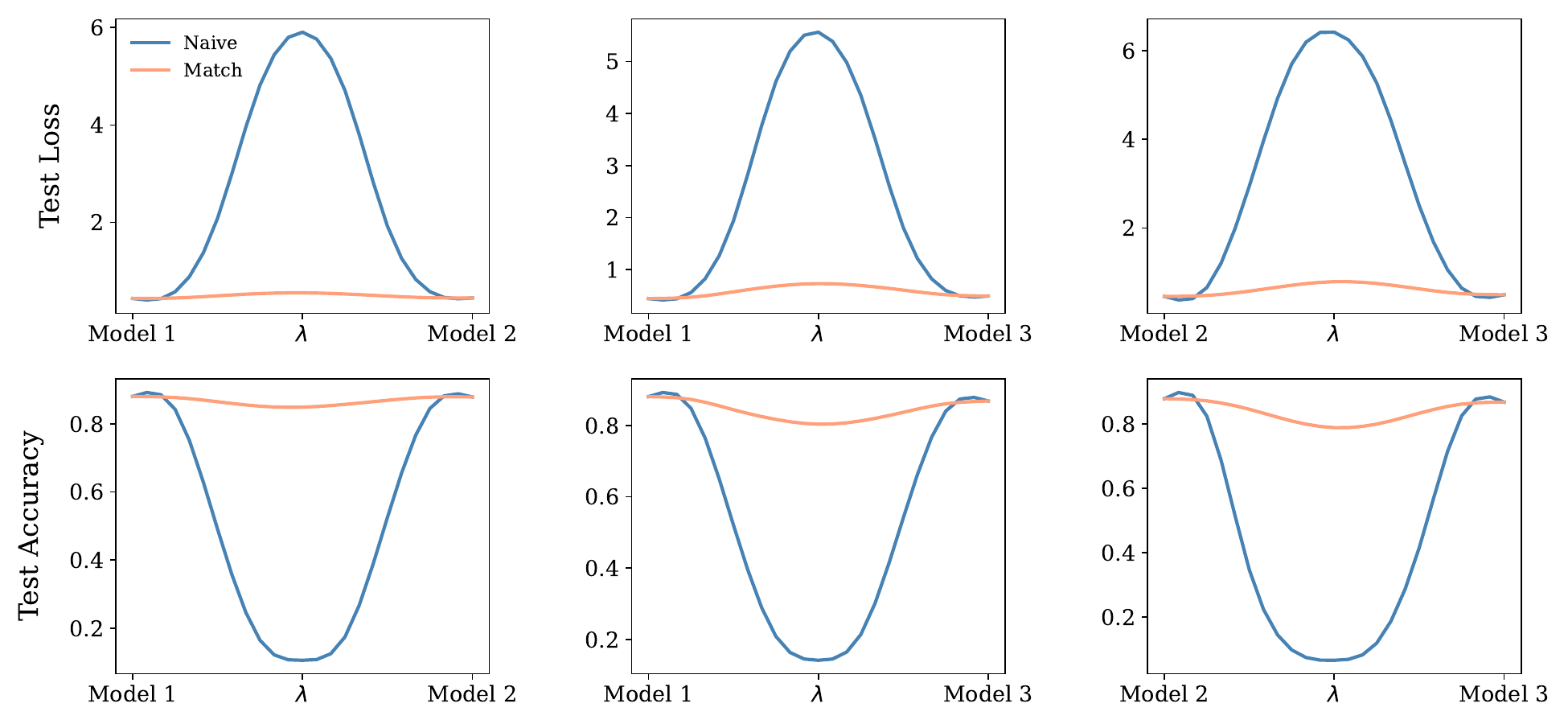} 
        \caption{8 attention heads}
        \label{fig:attn-dbpedia-2-8-0-rope}
    \end{subfigure}
    \caption{Linear Mode Connectivity for BERT-RoPE on DBPedia with 2 layers}
\end{figure}

\begin{figure}[H]
    \centering
    \begin{subfigure}{0.48\textwidth}
        \centering
        \includegraphics[width=1\textwidth]{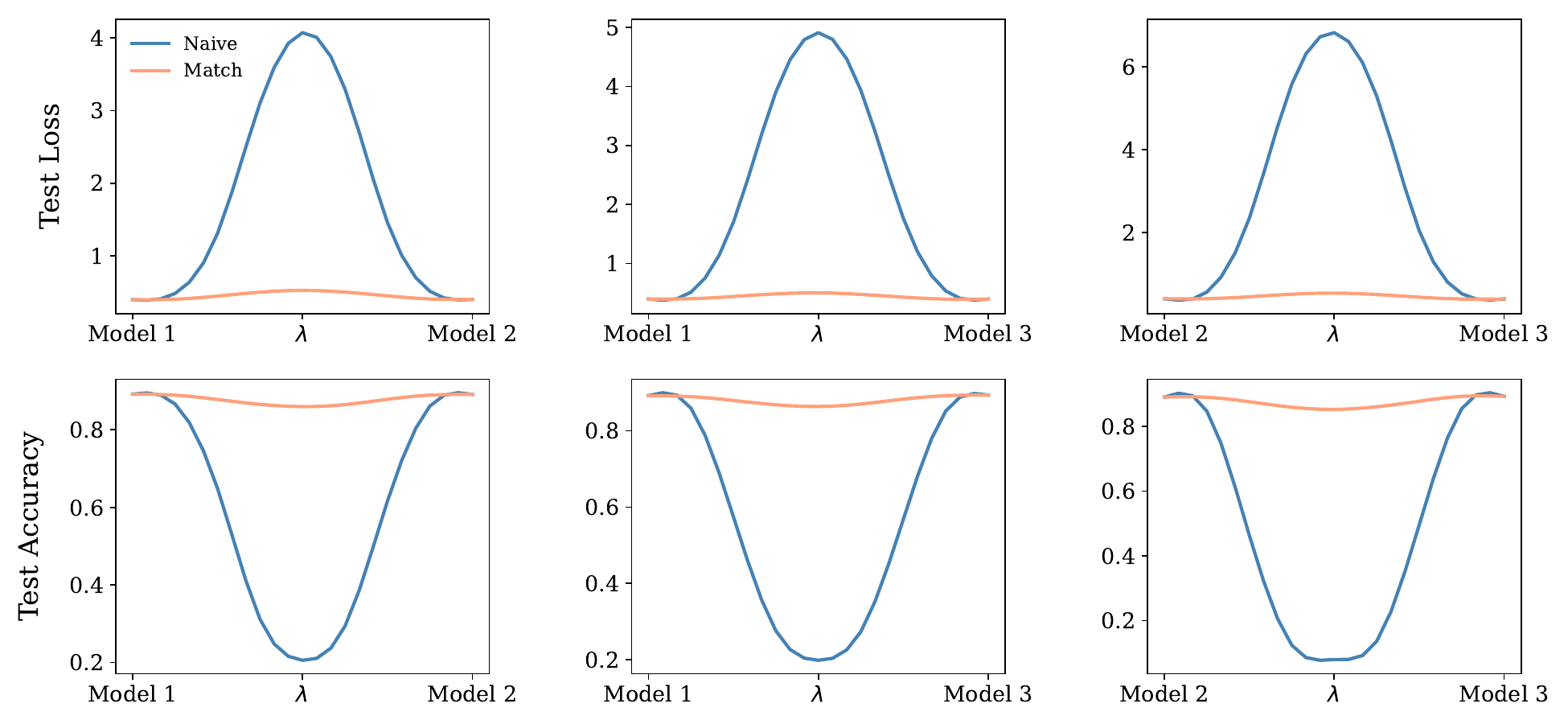} 
        \caption{4 attention heads}
        \label{fig:attn-dbpedia-6-4-0-rope}
    \end{subfigure}
    \hfill
    \begin{subfigure}{0.48\textwidth}
        \centering
        \includegraphics[width=1\textwidth]{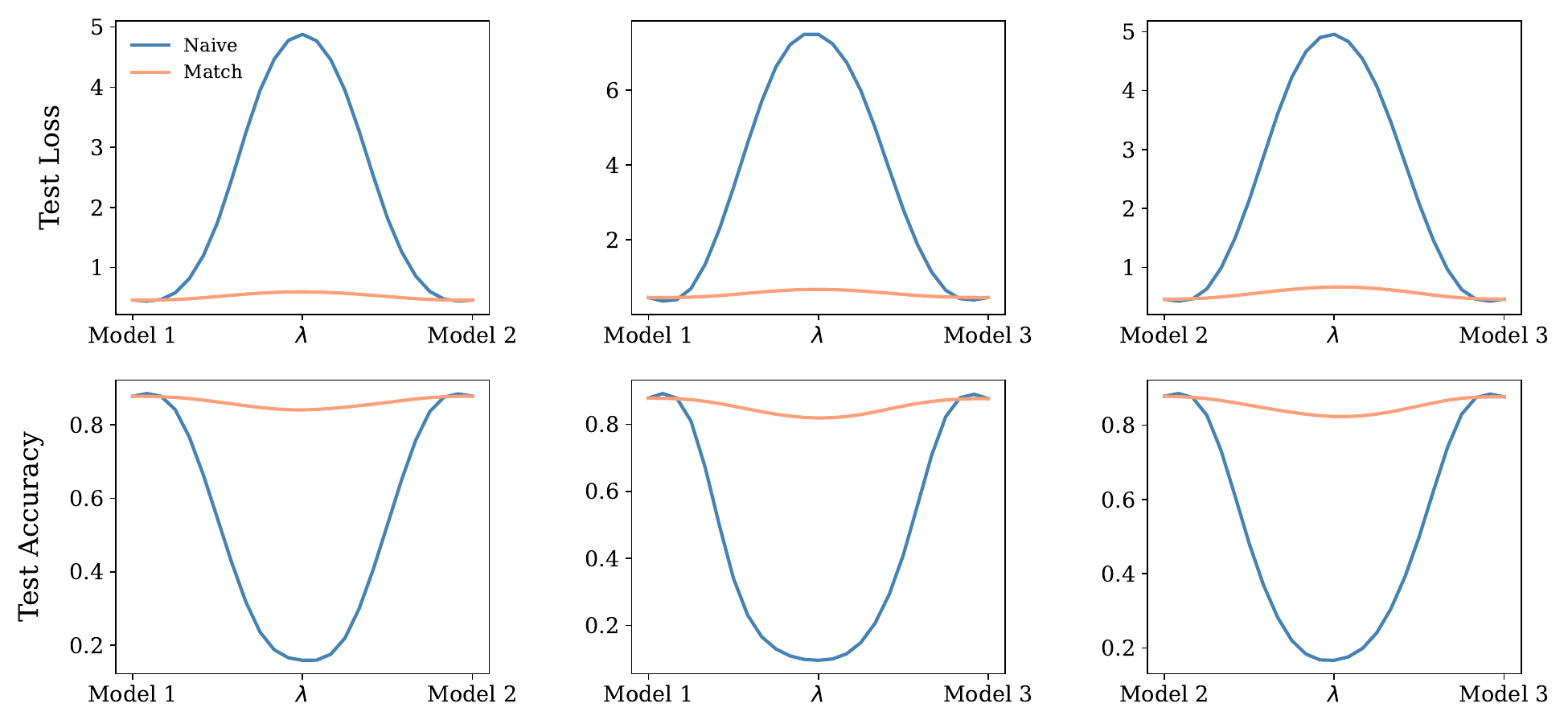} 
        \caption{8 attention heads}
        \label{fig:attn-dbpedia-6-8-0-rope}
    \end{subfigure}
    \caption{Linear Mode Connectivity for BERT-RoPE on DBPedia with 6 layers}
\end{figure}

\begin{figure}[H]
    \centering
    \begin{subfigure}{0.48\textwidth}
        \centering
        \includegraphics[width=\textwidth]{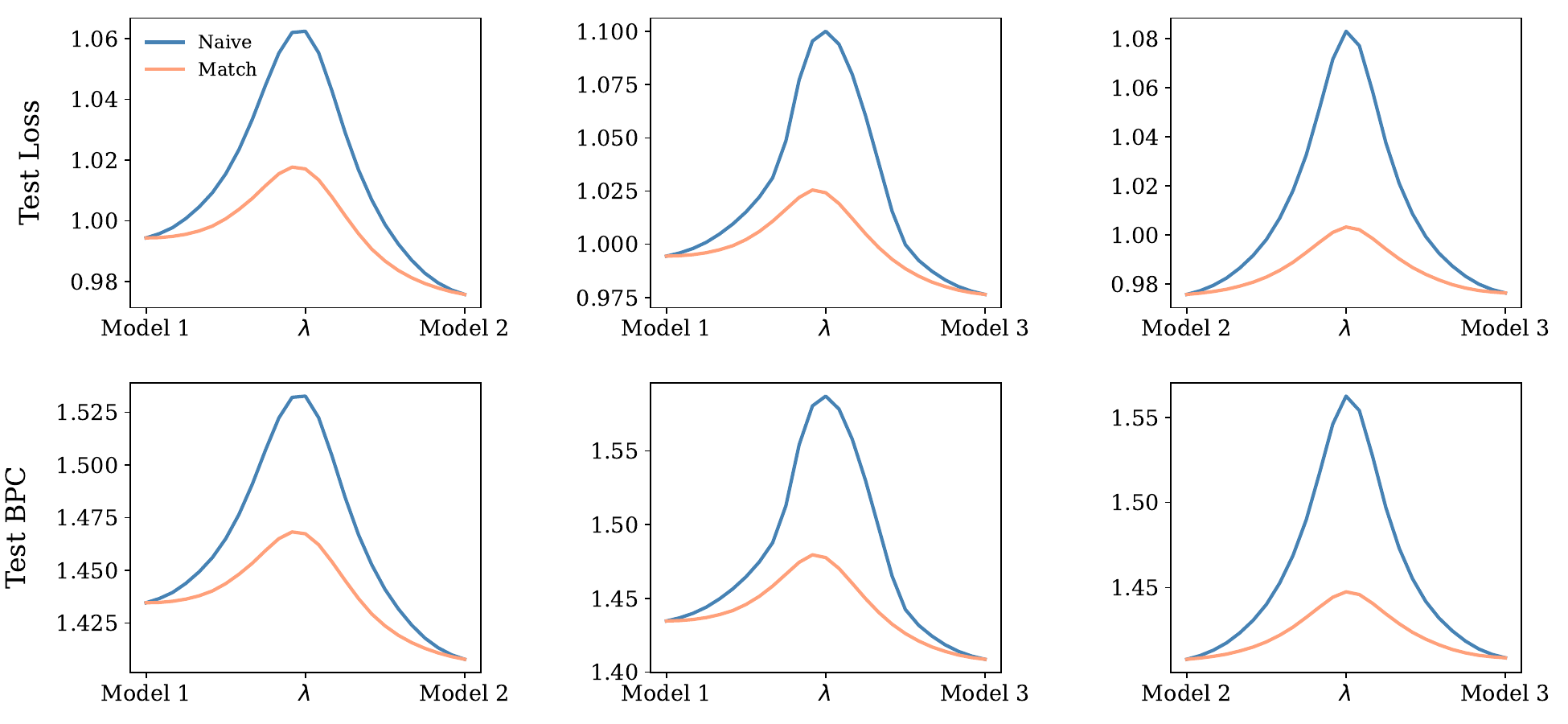} 
        \caption{4 attention heads}
        \label{fig:rope-enwik8-12-4}
    \end{subfigure}
    \hfill
    \begin{subfigure}{0.48\textwidth}
        \centering
        \includegraphics[width=\textwidth]{sections/figures/lmc_firstlayer/enwik8/finetune-rope-indice0-heads8-shared1-routed0-topk0-mlpFalse.pdf} 
        \caption{8 attention heads}
        \label{fig:rope-enwik8-12-8}
    \end{subfigure}
    \begin{subfigure}{0.48\textwidth}
        \centering
        \includegraphics[width=\textwidth]{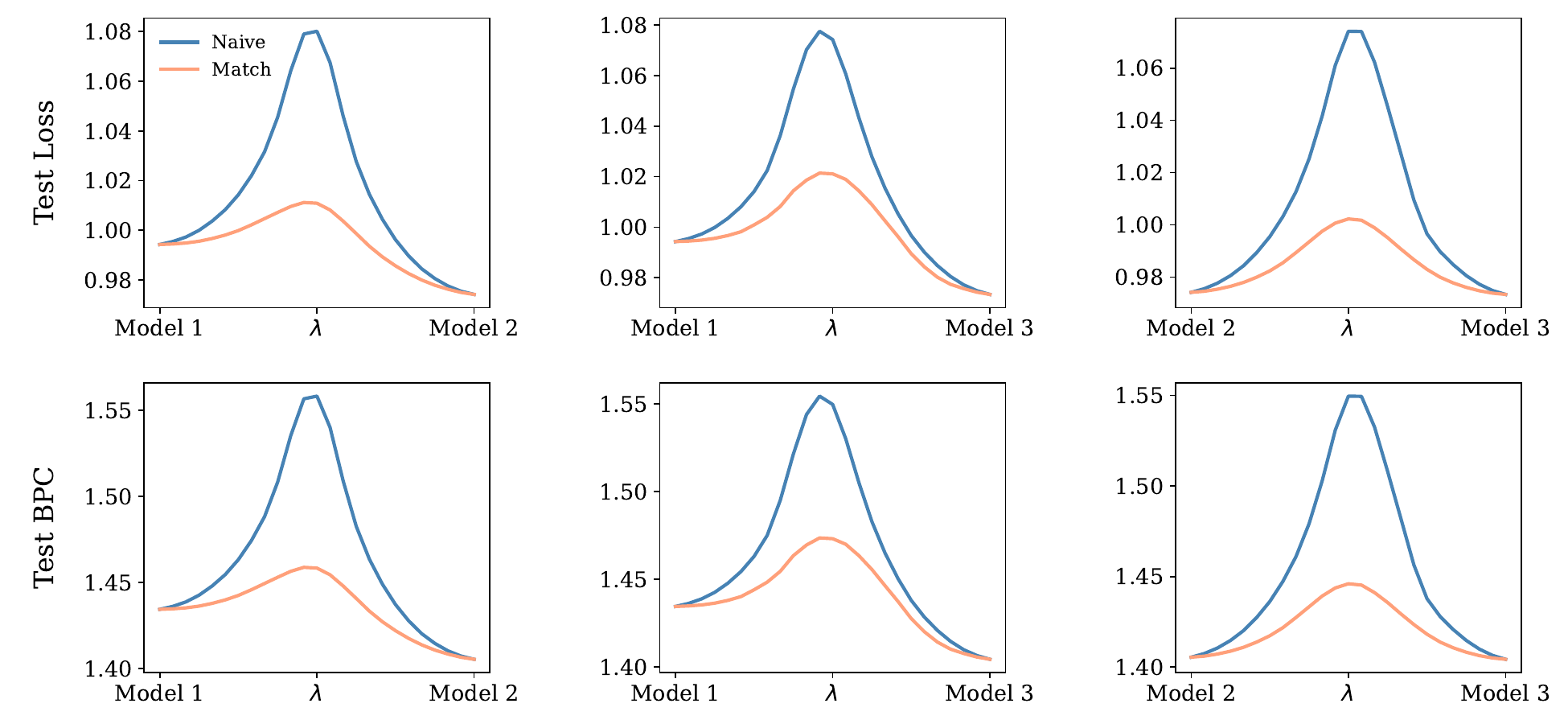} 
        \caption{16 attention heads}
        \label{fig:rope-enwik8-12-16}
    \end{subfigure}
    \caption{Linear Mode Connectivity for GPT2-RoPE on Enwik8 with 12 layers.}
\end{figure}
\begin{figure}[H]
    \centering
    \begin{subfigure}{0.48\textwidth}
        \centering
        \includegraphics[width=\textwidth]{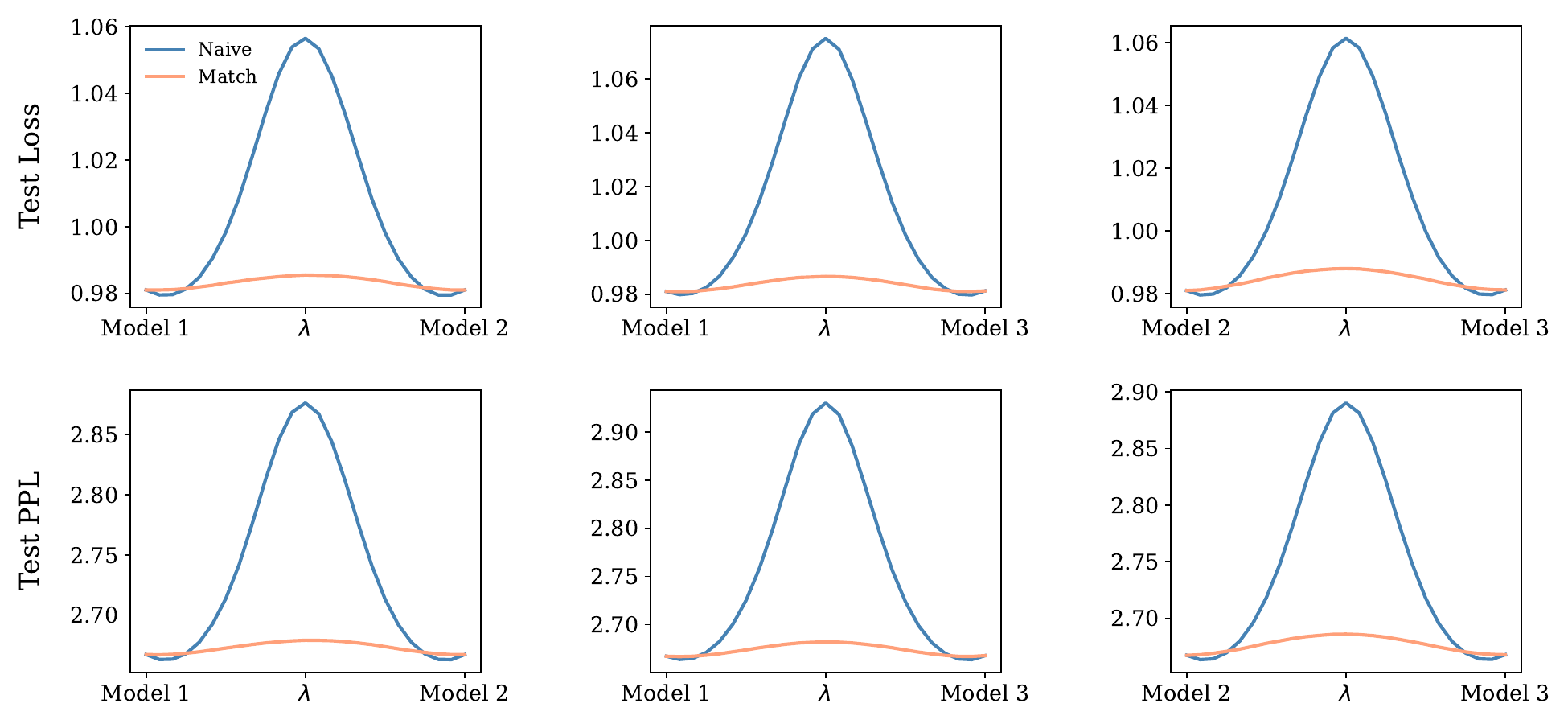} 
        \caption{4 attention heads}
        \label{fig:llama-enwik8-12-4}
    \end{subfigure}
    \hfill
    \begin{subfigure}{0.48\textwidth}
        \centering
        \includegraphics[width=\textwidth]{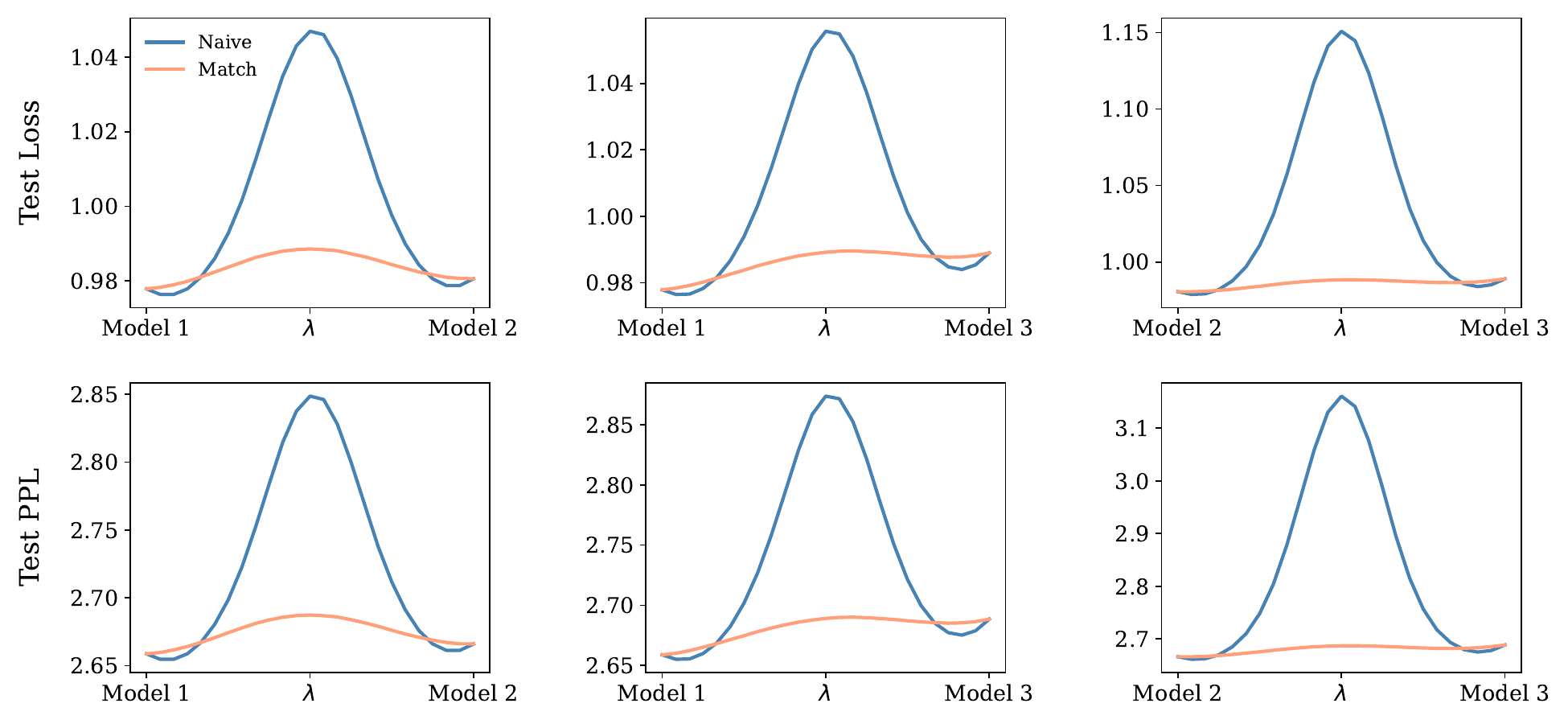} 
        \caption{8 attention heads}
        \label{fig:llama-enwik8-12-8}
    \end{subfigure}
    \begin{subfigure}{0.48\textwidth}
        \centering
        \includegraphics[width=\textwidth]{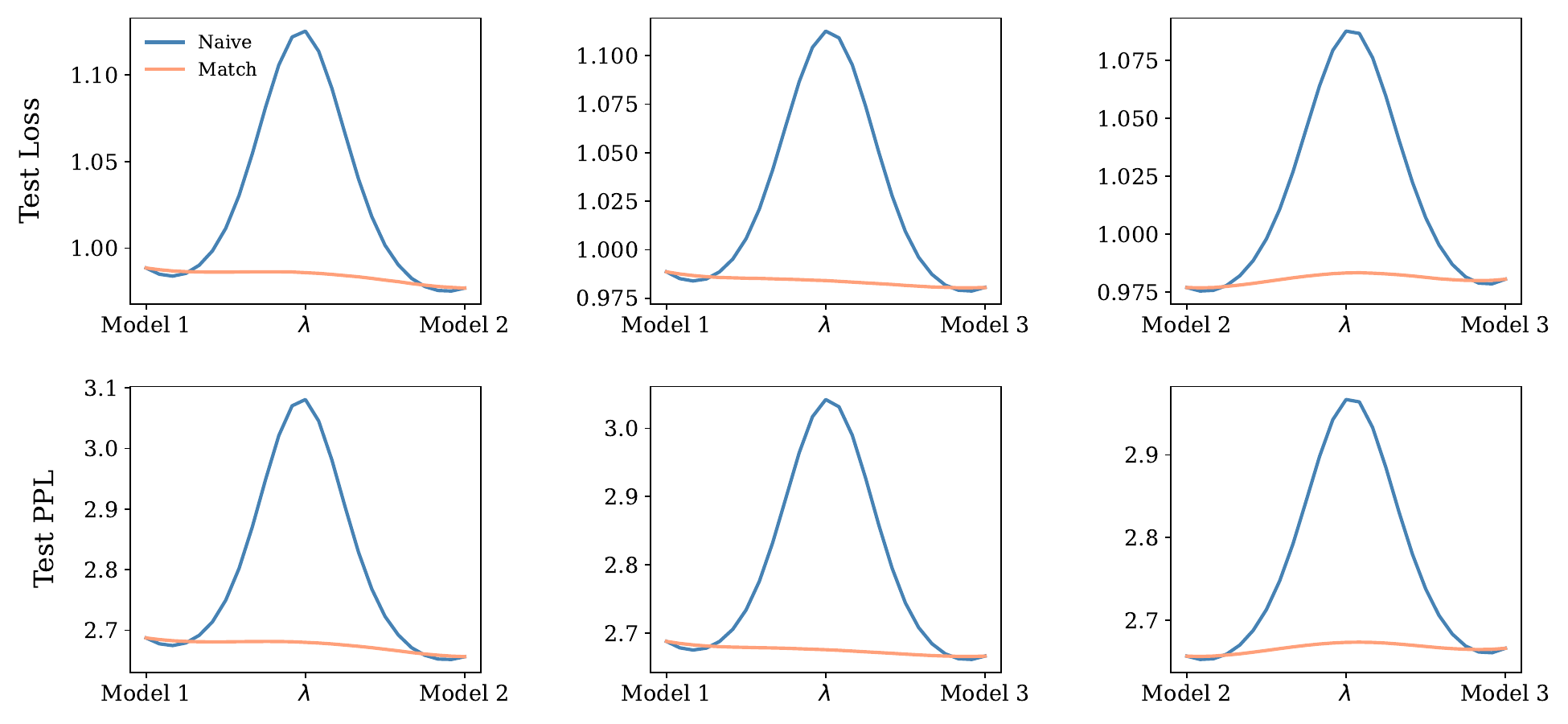} 
        \caption{16 attention heads}
        \label{fig:llama-enwik8-12-16}
    \end{subfigure}
    \caption{Linear Mode Connectivity for Llama on Enwik8 with 12 layers.}
\end{figure}
\begin{figure}[H]
    \centering
    \begin{subfigure}{0.48\textwidth}
        \centering
        \includegraphics[width=\textwidth]{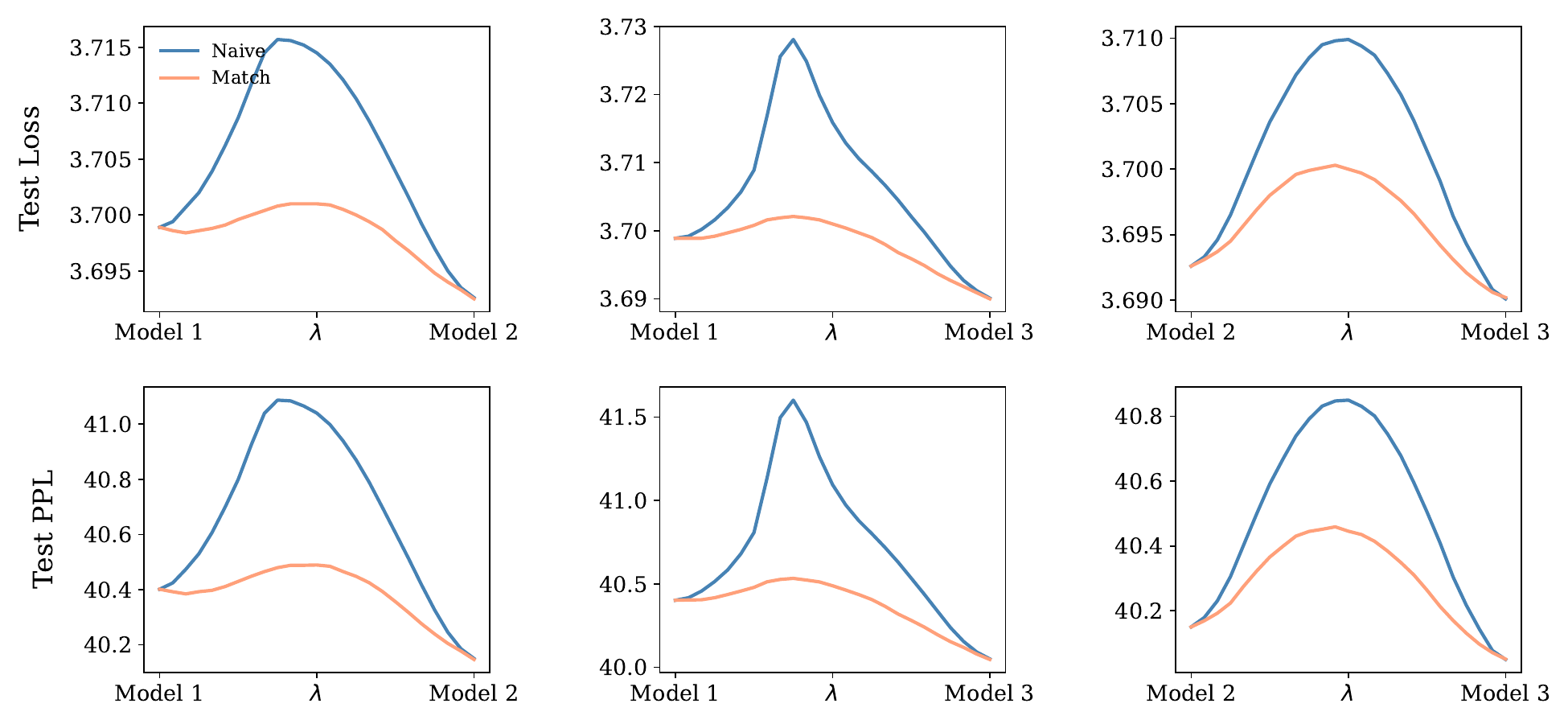} 
        \caption{2 attention heads}
        \label{fig:rope-wt103-12-2}
    \end{subfigure}
    \hfill
    \begin{subfigure}{0.48\textwidth}
        \centering
        \includegraphics[width=\textwidth]{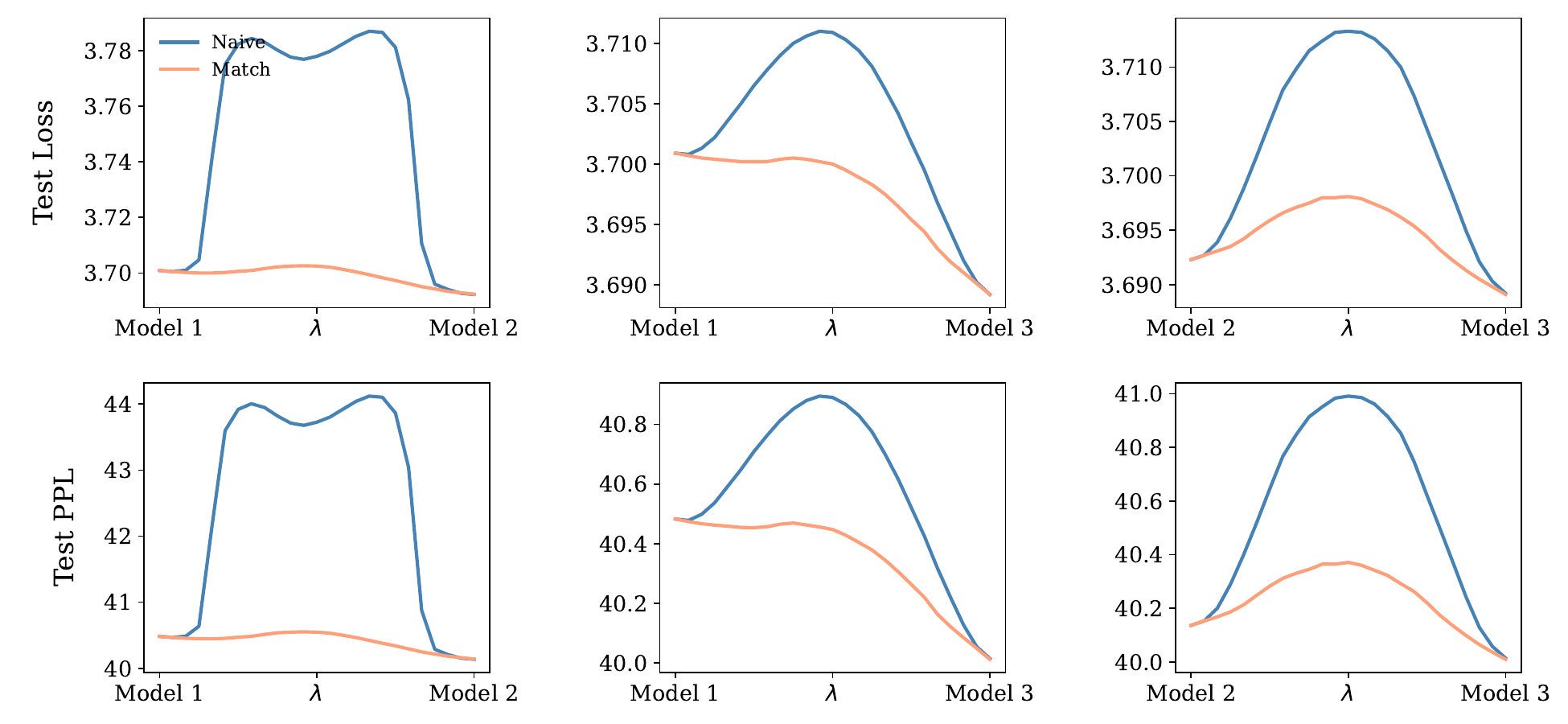} 
        \caption{3 attention heads}
        \label{fig:rope-wt103-12-3}
    \end{subfigure}
    \begin{subfigure}{0.48\textwidth}
        \centering
        \includegraphics[width=\textwidth]{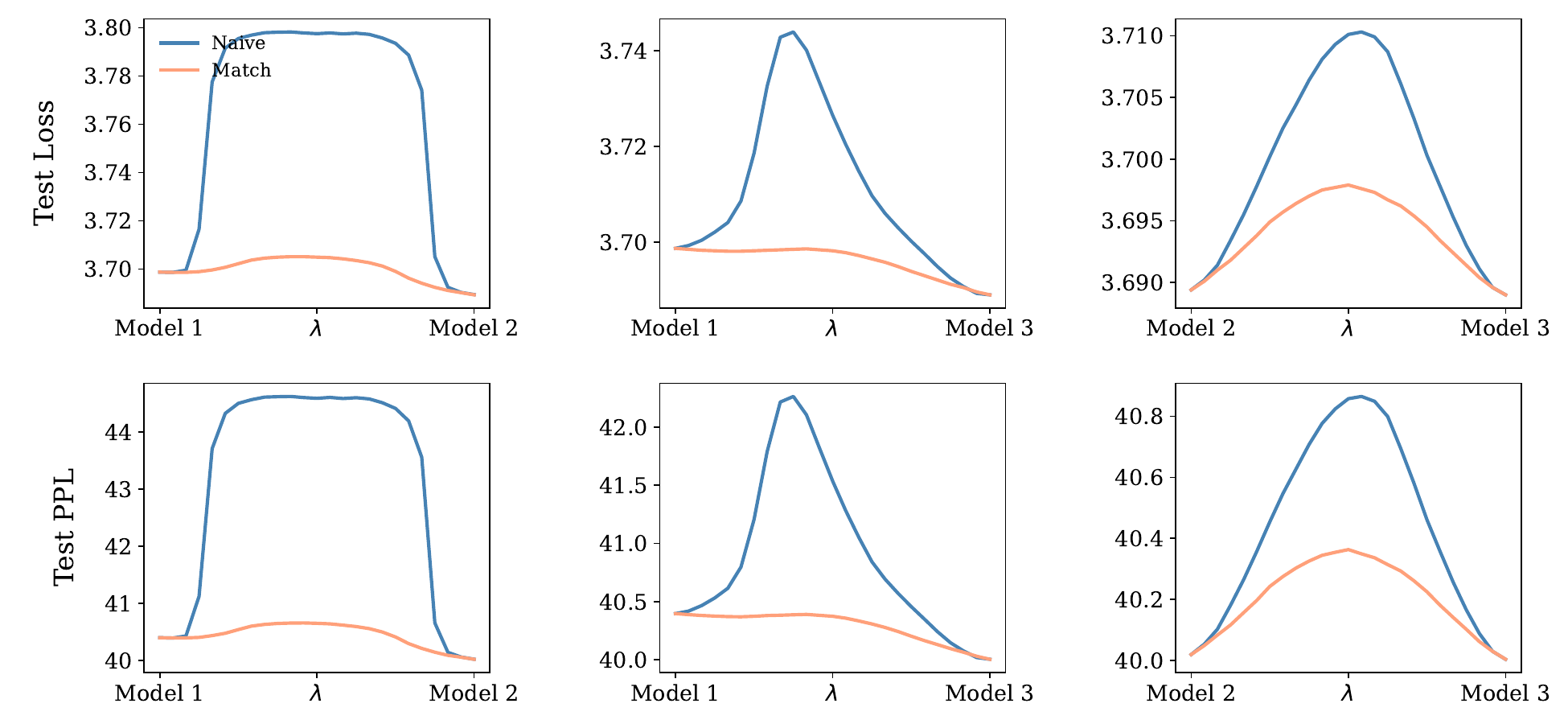} 
        \caption{4 attention heads}
        \label{fig:rope-wt103-12-4}
    \end{subfigure}
    \caption{Linear Mode Connectivity for GPT2-RoPE on Wikitext103 with 12 layers.}
\end{figure}

\begin{figure}[H]
    \centering
    \begin{subfigure}{0.48\textwidth}
        \centering
        \includegraphics[width=\textwidth]{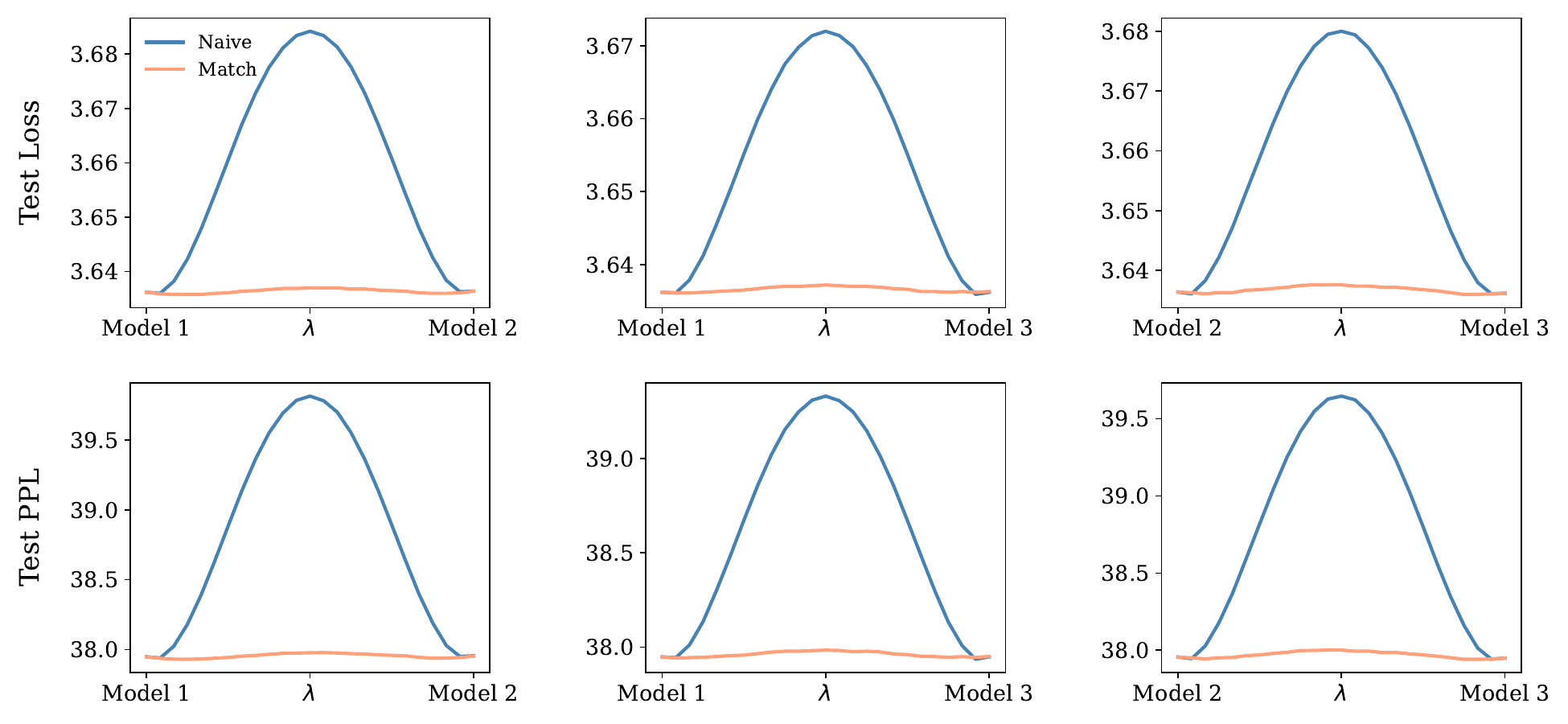} 
        \caption{2 attention heads}
        \label{fig:llama-wt103-12-2}
    \end{subfigure}
    \hfill
    \begin{subfigure}{0.48\textwidth}
        \centering
        \includegraphics[width=\textwidth]{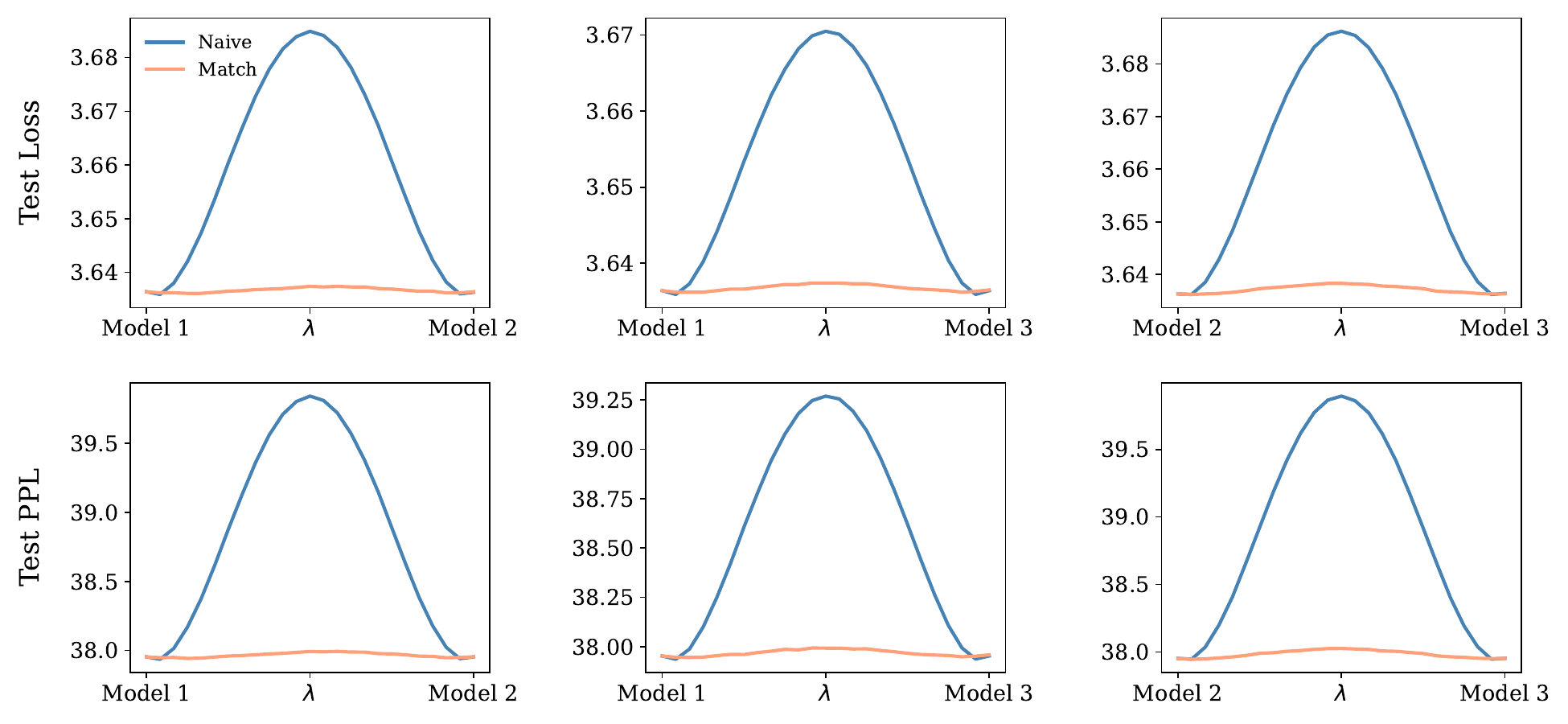} 
        \caption{3 attention heads}
        \label{fig:llama-wt103-12-3}
    \end{subfigure}
    \begin{subfigure}{0.48\textwidth}
        \centering
        \includegraphics[width=\textwidth]{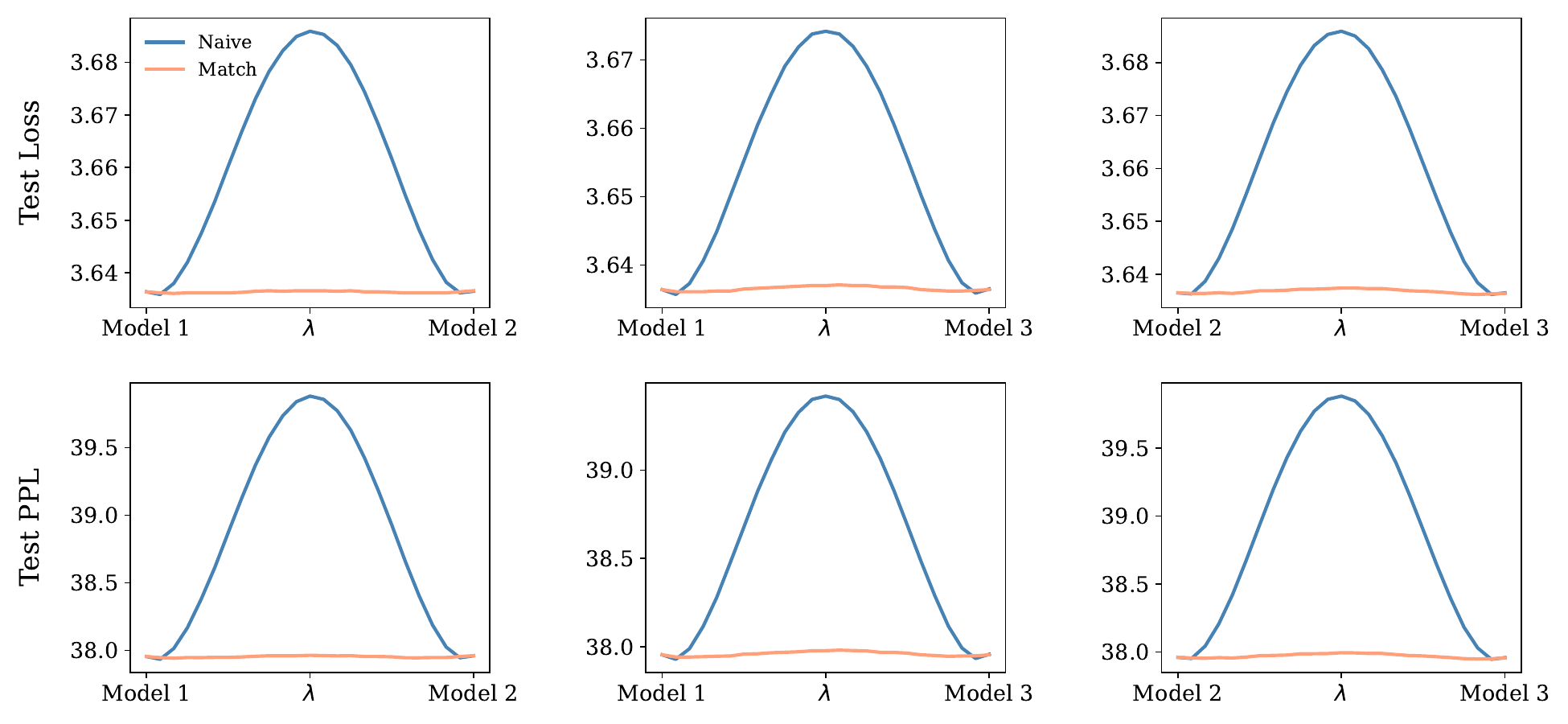} 
        \caption{4 attention heads}
        \label{fig:llama-wt103-12-4}
    \end{subfigure}
    \caption{Linear Mode Connectivity for LLama on Wikitext103 with 12 layers.}
\end{figure}
\begin{figure}[H]
    \centering
    \begin{subfigure}{0.48\textwidth}
        \centering
        \includegraphics[width=\textwidth]{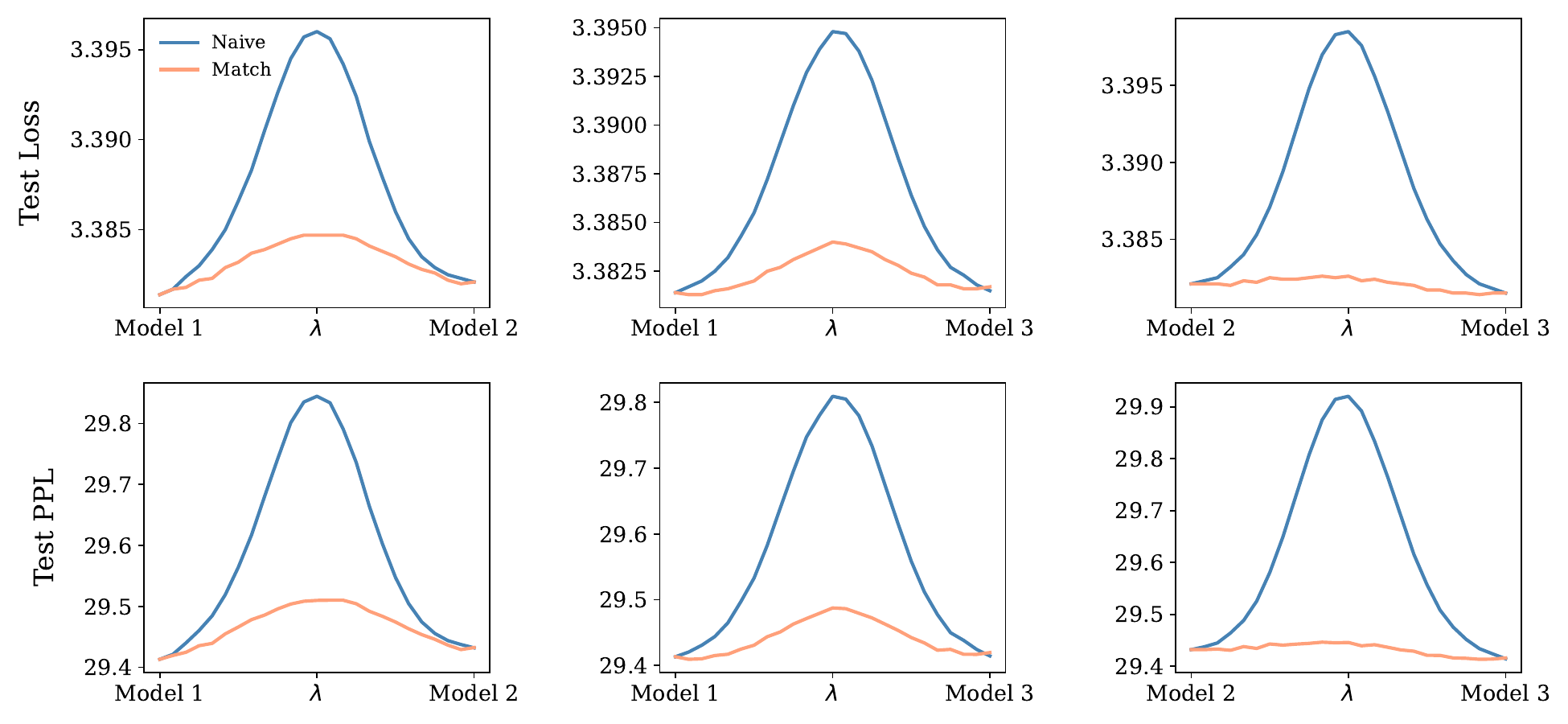} 
        \caption{8 attention heads}
        \label{fig:rope-lm1b-12-8}
    \end{subfigure}
    \hfill
    \begin{subfigure}{0.48\textwidth}
        \centering
        \includegraphics[width=\textwidth]{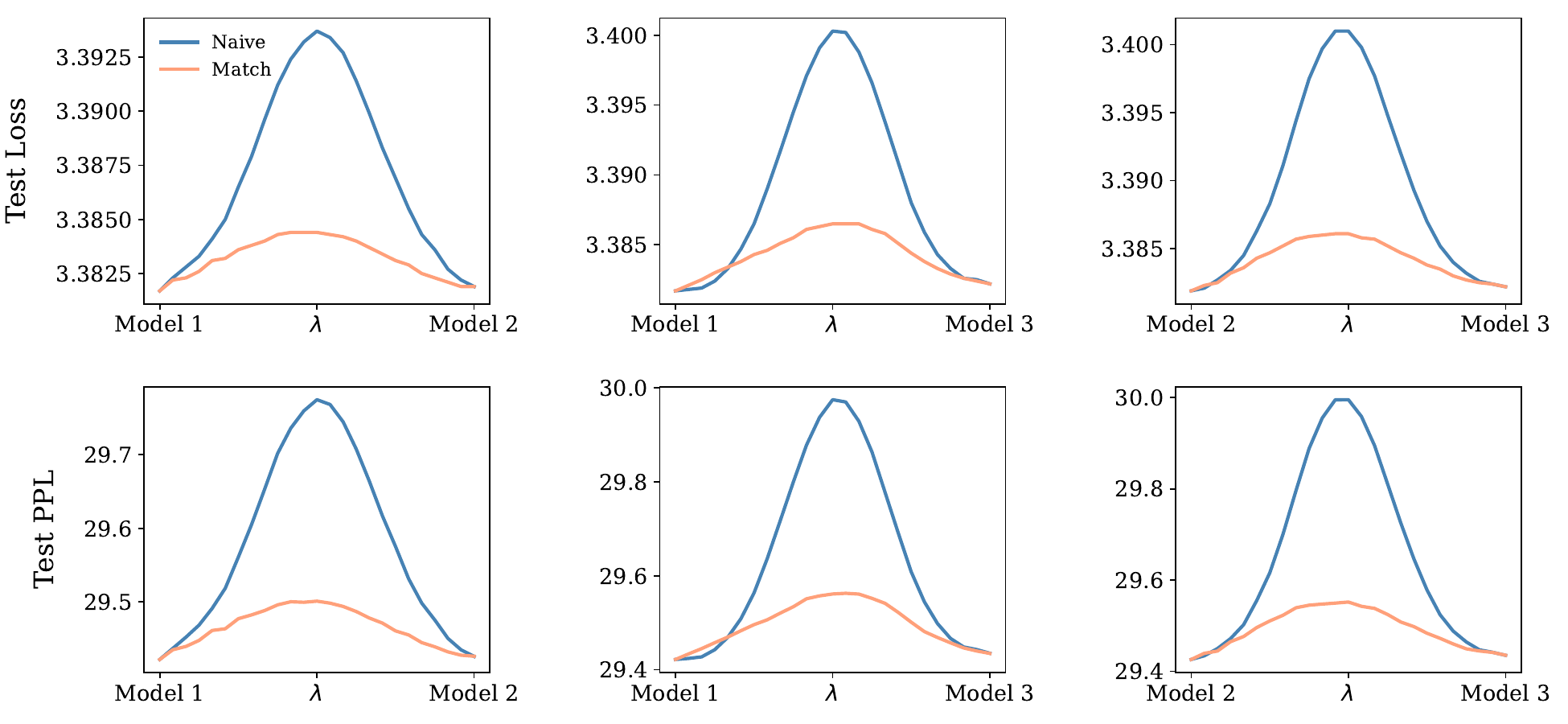} 
        \caption{12 attention heads}
        \label{fig:rope-lm1b-12-12}
    \end{subfigure}
    \begin{subfigure}{0.48\textwidth}
        \centering
        \includegraphics[width=\textwidth]{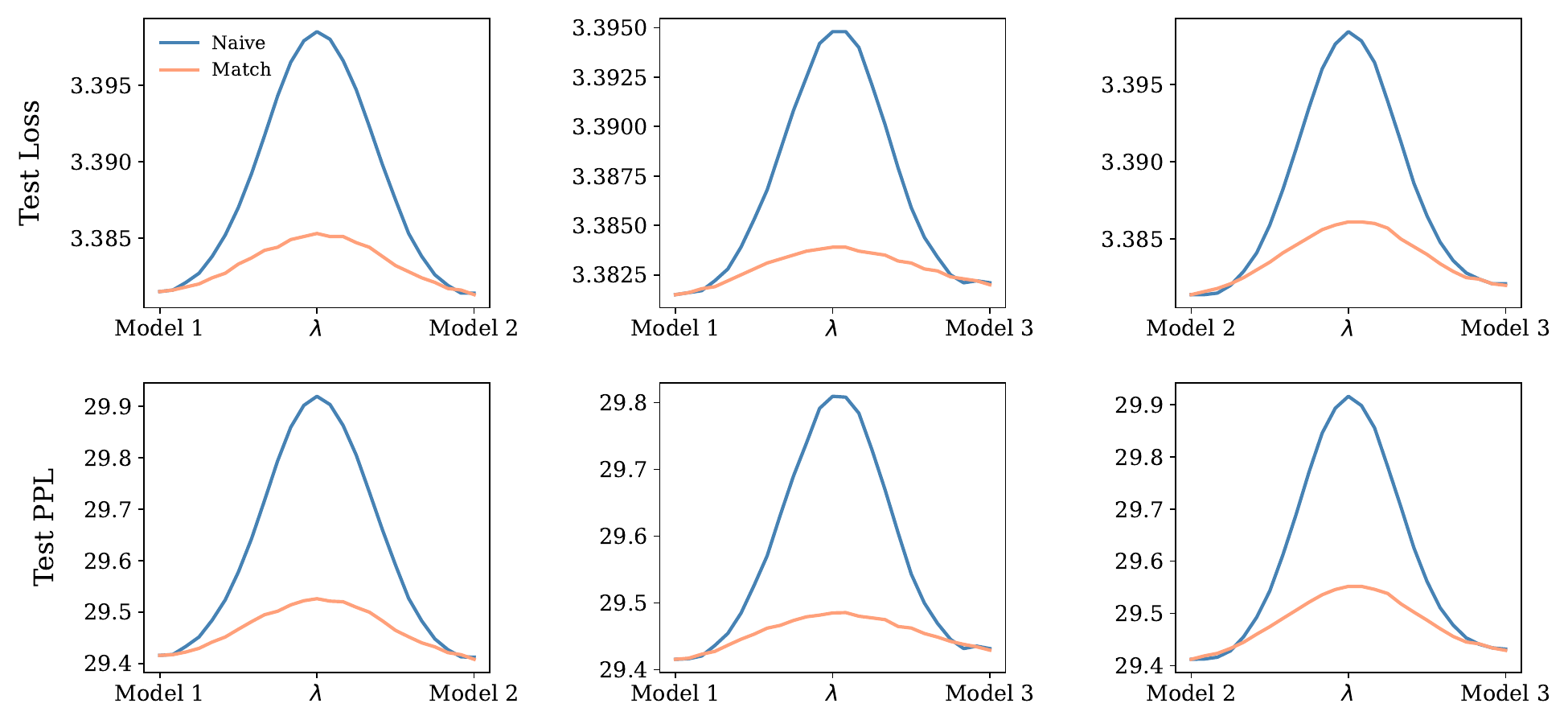} 
        \caption{16 attention heads}
        \label{fig:rope-lm1b-12-16}
    \end{subfigure}
    \caption{Linear Mode Connectivity for GPT2-RoPE on OneBillionWord with 12 layers.}
\end{figure}
\subsection{Linear Mode Connectivity for Attention at All Layers}\label{appendix:experiment_attn_all}
\begin{table}[htp]
\caption{
Experimental configurations for LMC evaluation under re-initialization of \textit{all attention layers}. 
The table reports datasets, model depth, and attention head counts, with figure references showing interpolation curves for APE and RoPE variants. 
Entries of the form $A \rightarrow B$ denote models pretrained on $A$, fine-tuned on $B$, and assessed on $B$.
}
\label{tab:attn_all_layer}
\centering
\renewcommand*{\arraystretch}{1.3}
\begin{adjustbox}{width=\textwidth}
\begin{tabular}{lclll lclll}
    \toprule
    Dataset & Layers & Heads & APE & RoPE 
    & Dataset & Layers & Heads & APE & RoPE \\
    \cmidrule(r){1-5} \cmidrule(r){6-10}
    MNIST & 2 & [4, 8] 
      & [\ref{fig:attn-mnist-2-4-all}, \ref{fig:attn-mnist-2-8-all}] 
      & [\ref{fig:attn-mnist-2-4-all-rope}, \ref{fig:attn-mnist-2-8-all-rope}] 
    & AGNews & 2 & [4, 8] 
      & [\ref{fig:attn-agnews-2-4-all}, \ref{fig:attn-agnews-2-8-all}] 
      & [\ref{fig:attn-agnews-2-4-all-rope}, \ref{fig:attn-agnews-2-8-all-rope}] \\
    CIFAR-10 & 2 & [4, 8] 
      & [\ref{fig:attn-cifar10-2-4-all}, \ref{fig:attn-cifar10-2-8-all}] 
      & [\ref{fig:attn-cifar10-2-4-all-rope}, \ref{fig:attn-cifar10-2-8-all-rope}] 
    &  & 6 & [4, 8] 
      & [\ref{fig:attn-agnews-6-4-all}, \ref{fig:attn-agnews-6-8-all}] 
      & [\ref{fig:attn-agnews-6-4-all-rope}, \ref{fig:attn-agnews-6-8-all-rope}] \\
     & 4 & [4, 8] 
      & [\ref{fig:attn-cifar10-4-4-all}, \ref{fig:attn-cifar10-4-8-all}] 
      & [\ref{fig:attn-cifar10-4-4-all-rope}, \ref{fig:attn-cifar10-4-8-all-rope}] 
    & IMDB & 2 & [4, 8] 
      & [\ref{fig:attn-imdbreview-2-4-all}, \ref{fig:attn-imdbreview-2-8-all}] 
      & [\ref{fig:attn-imdbreview-2-4-all-rope}, \ref{fig:attn-imdbreview-2-8-all-rope}] \\
     & 6 & [4, 8] 
      & [\ref{fig:attn-cifar10-6-4-all}, \ref{fig:attn-cifar10-6-8-all}] 
      & [\ref{fig:attn-cifar10-6-4-all-rope}, \ref{fig:attn-cifar10-6-8-all-rope}] 
    &  & 6 & [4, 8] 
      & [\ref{fig:attn-imdbreview-6-4-all}, \ref{fig:attn-imdbreview-6-8-all}] 
      & [\ref{fig:attn-imdbreview-6-4-all-rope}, \ref{fig:attn-imdbreview-6-8-all-rope}] \\
    CIFAR-100 & 6 & [4, 8] 
      & [\ref{fig:attn-cifar100-6-4-all}, \ref{fig:attn-cifar100-6-8-all}] 
      & [\ref{fig:attn-cifar100-6-4-all-rope}, \ref{fig:attn-cifar100-6-8-all-rope}] 
    & DBPedia & 2 & [4, 8] 
      & [\ref{fig:attn-dbpedia-2-4-all}, \ref{fig:attn-dbpedia-2-8-all}] 
      & [\ref{fig:attn-dbpedia-2-4-all-rope}, \ref{fig:attn-dbpedia-2-8-all-rope}] \\
    ImageNet-21k$\rightarrow$CIFAR-10 & 12 & [6] & [\ref{fig:attn-imgnetcifar10-12-4-all}] & [\ref{fig:attn-imgnetcifar10-12-4-all-rope}] 
    &  & 6 & [4, 8] 
      & [\ref{fig:attn-dbpedia-6-4-all}, \ref{fig:attn-dbpedia-6-8-all}] 
      & [\ref{fig:attn-dbpedia-6-4-all-rope}, \ref{fig:attn-dbpedia-6-8-all-rope}] \\
    ImageNet-21k$\rightarrow$CIFAR-100 & 12 & [6] &  [\ref{fig:attn-imgnetcifar100-12-4-all}] & [\ref{fig:attn-imgnetcifar100-12-4-all-rope}]  
    & Enwik8 (GPT2)& 12 & [8] 
      &[\ref{fig:learnable-enwik8-12-8-all}]  &[\ref{fig:rope-enwik8-12-8-all}]  \\
    ImageNet-1k & 12 & [12] &[\ref{fig:learnable-imagenet-12-12-all}]  & [\ref{fig:rope-imagenet-12-12-all}] 
    & WikiText103 (GPT2)& 12 & [3]  
      & [\ref{fig:learnable-wt103-12-3-all}] & [\ref{fig:rope-wt103-12-3-all}]  \\
    OneBillionWord (GPT2)& 12 & [12] &[\ref{fig:lm1b-all-attn-learnable}]  & [\ref{fig:lm1b-all-attn-rope}] 
    & WikiText103 (Llama)& 12 & [3]  
      & [-] & [\ref{fig:wt103-all-attn-llama}]  \\
    \bottomrule
\end{tabular}
\end{adjustbox}
\end{table}

\begin{figure}[H]
    \centering
    \begin{subfigure}{0.48\textwidth}
        \centering
        \includegraphics[width=1\textwidth]{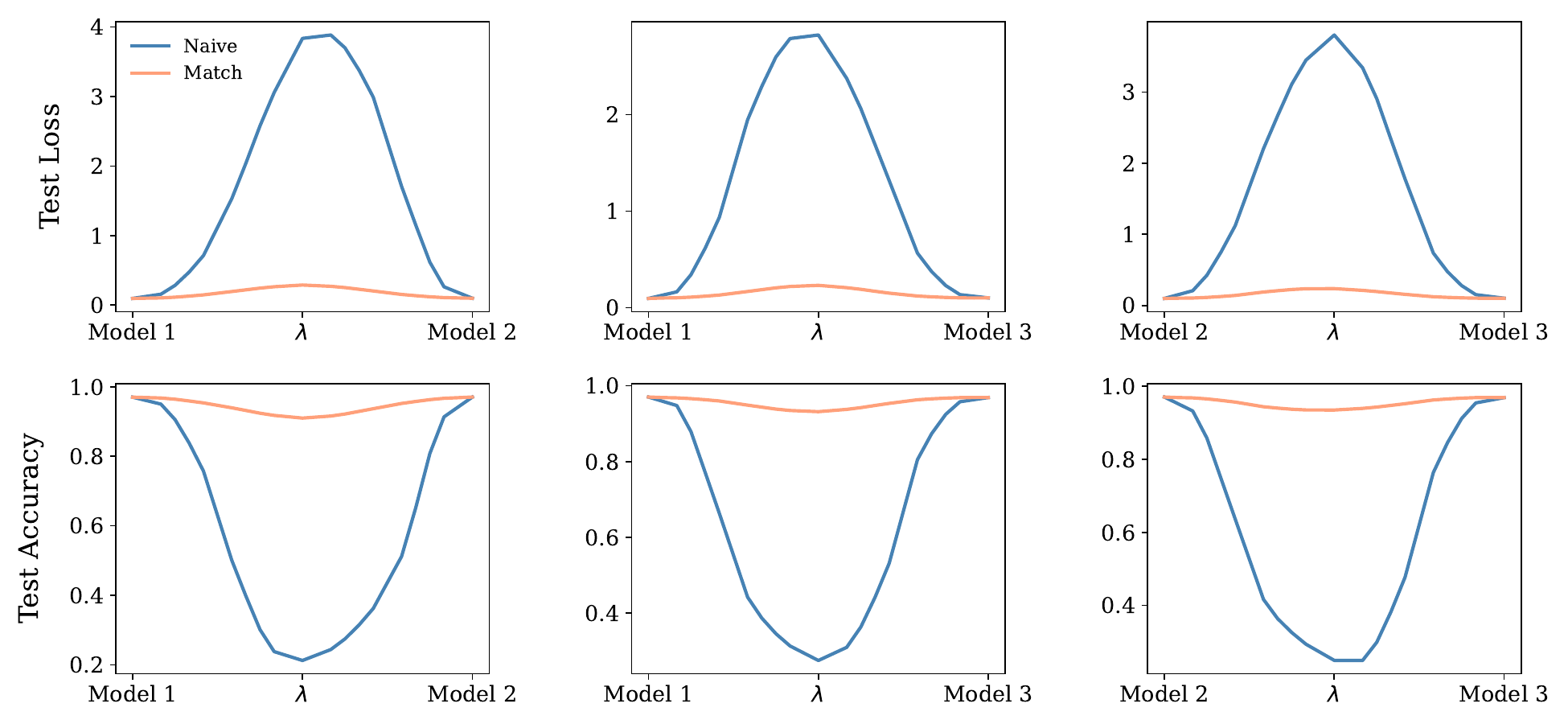} 
        \caption{4 attention heads}
        \label{fig:attn-mnist-2-4-all}
    \end{subfigure}
    \hfill
    \begin{subfigure}{0.48\textwidth}
        \centering
        \includegraphics[width=1\textwidth]{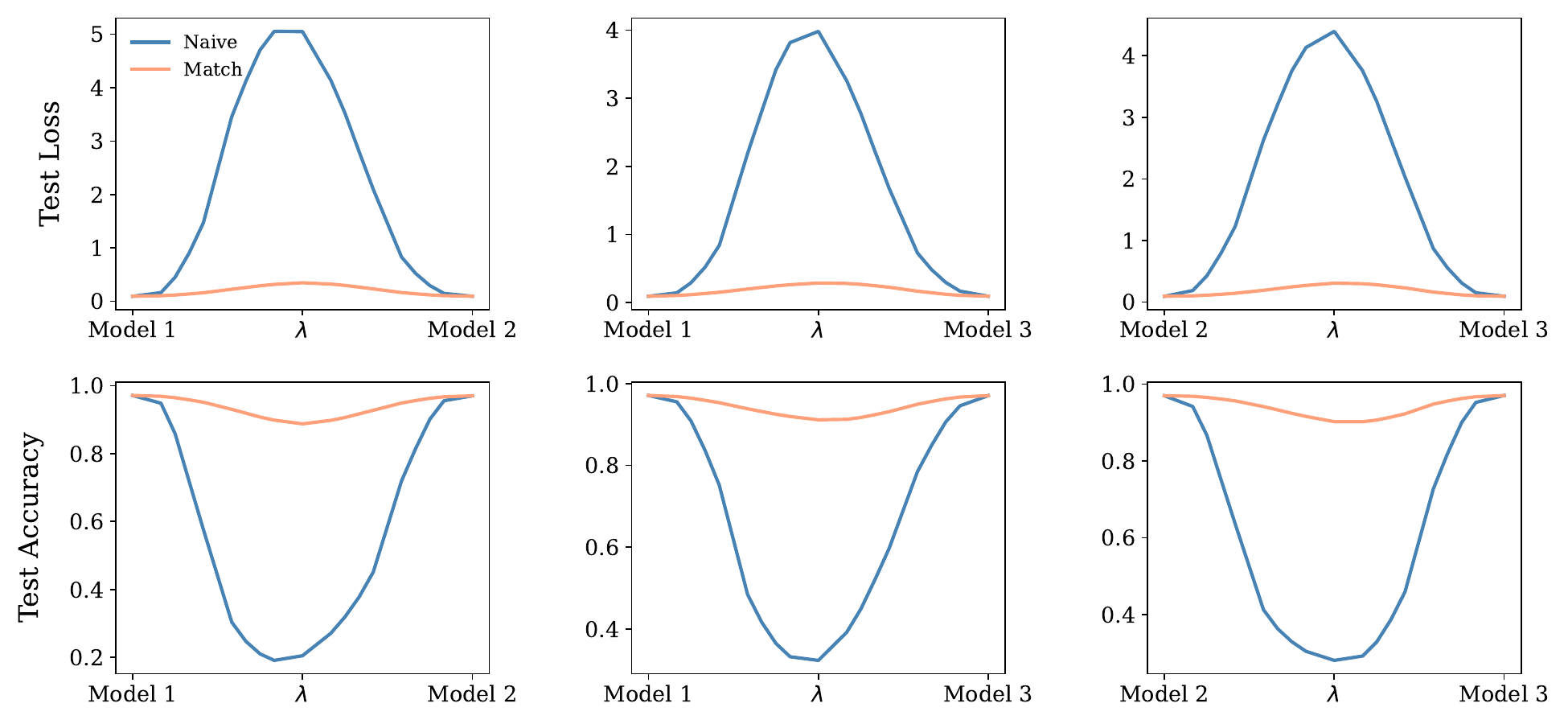} 
        \caption{8 attention heads}
        \label{fig:attn-mnist-2-8-all}
    \end{subfigure}
    \caption{Linear Mode Connectivity for ViT on MNIST with 2 layers}
\end{figure}

\begin{figure}[H]
    \centering
    \begin{subfigure}{0.48\textwidth}
        \centering
        \includegraphics[width=1\textwidth]{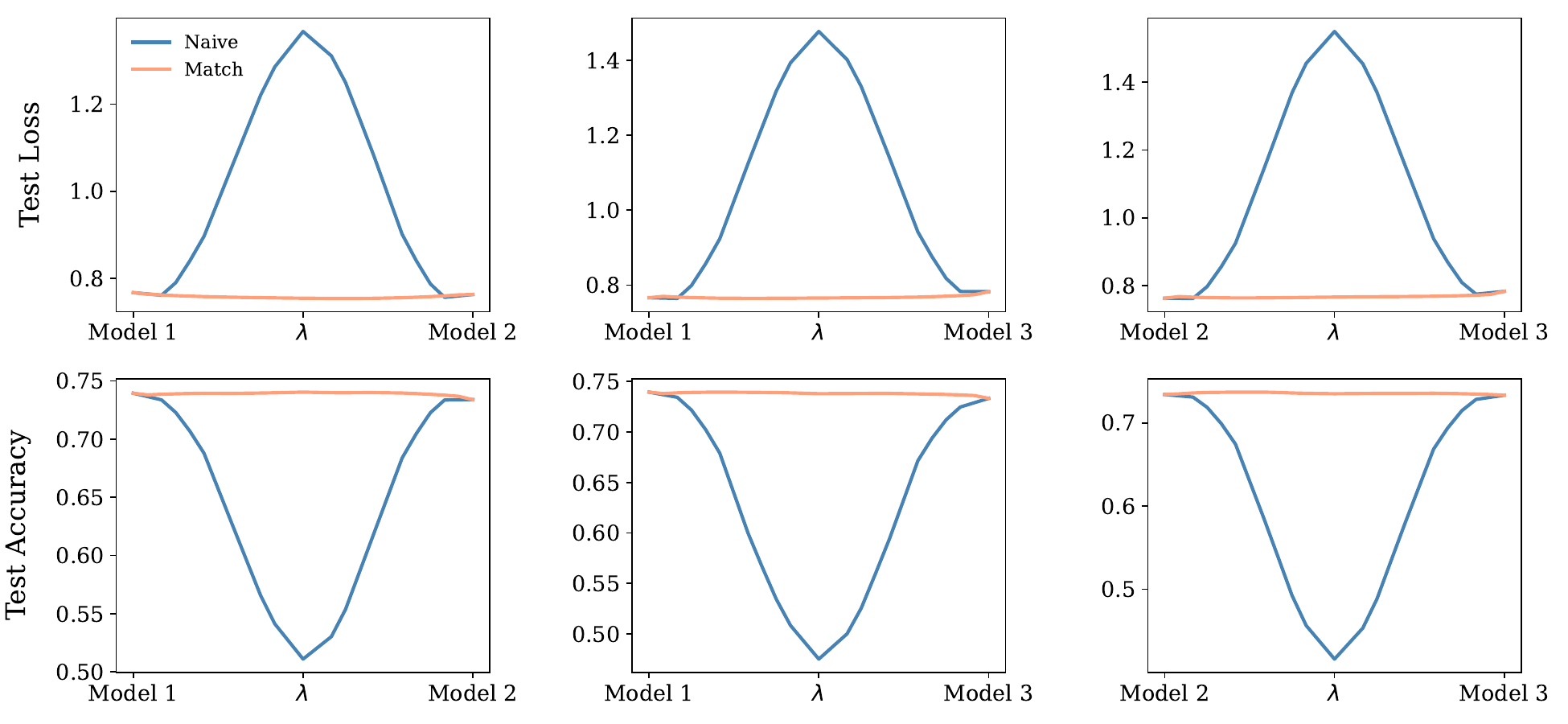} 
        \caption{4 attention heads}
        \label{fig:attn-cifar10-2-4-all}
    \end{subfigure}
    \hfill
    \begin{subfigure}{0.48\textwidth}
        \centering
        \includegraphics[width=1\textwidth]{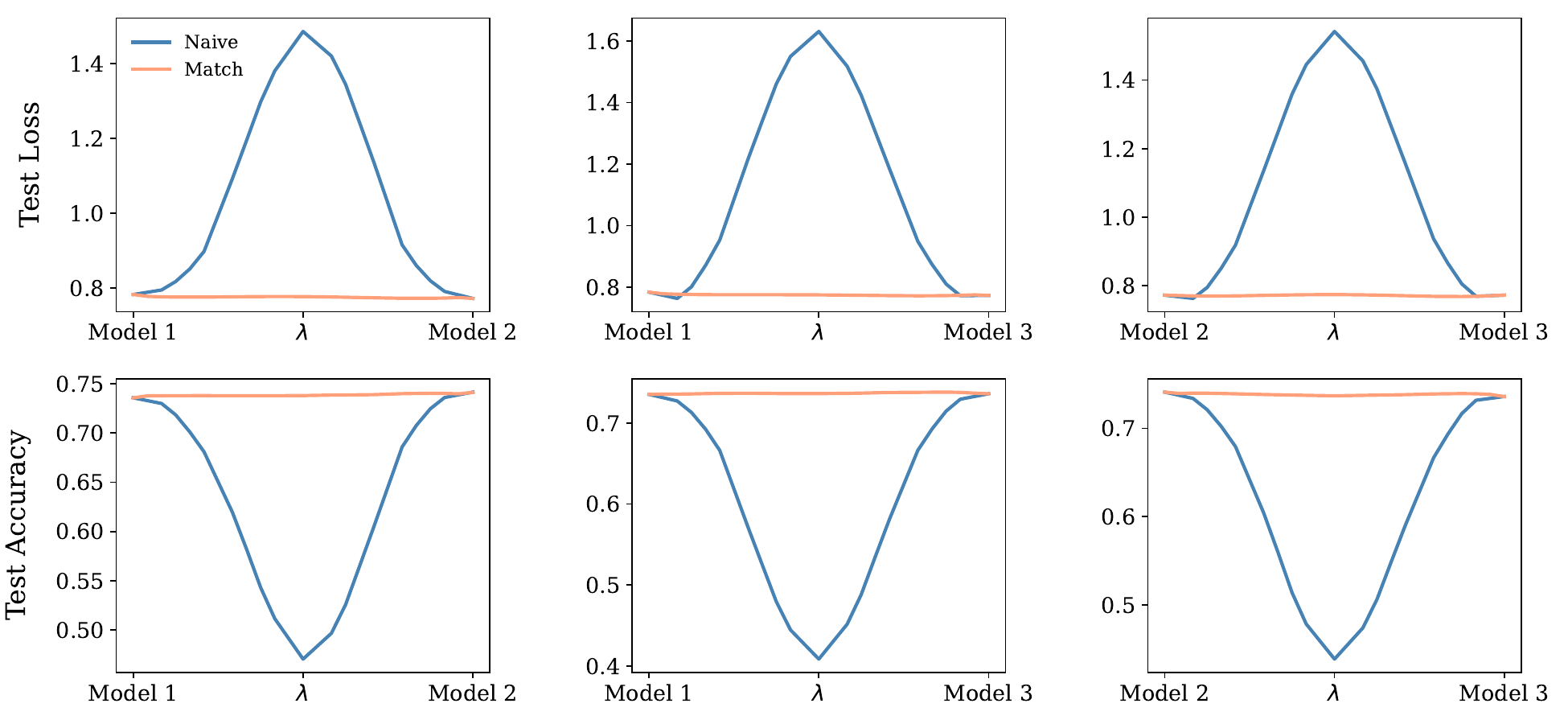} 
        \caption{8 attention heads}
        \label{fig:attn-cifar10-2-8-all}
    \end{subfigure}
    \caption{Linear Mode Connectivity for ViT on CIFAR-10 with 2 layers}
\end{figure}

\begin{figure}[H]
    \centering
    \begin{subfigure}{0.48\textwidth}
        \centering
        \includegraphics[width=1\textwidth]{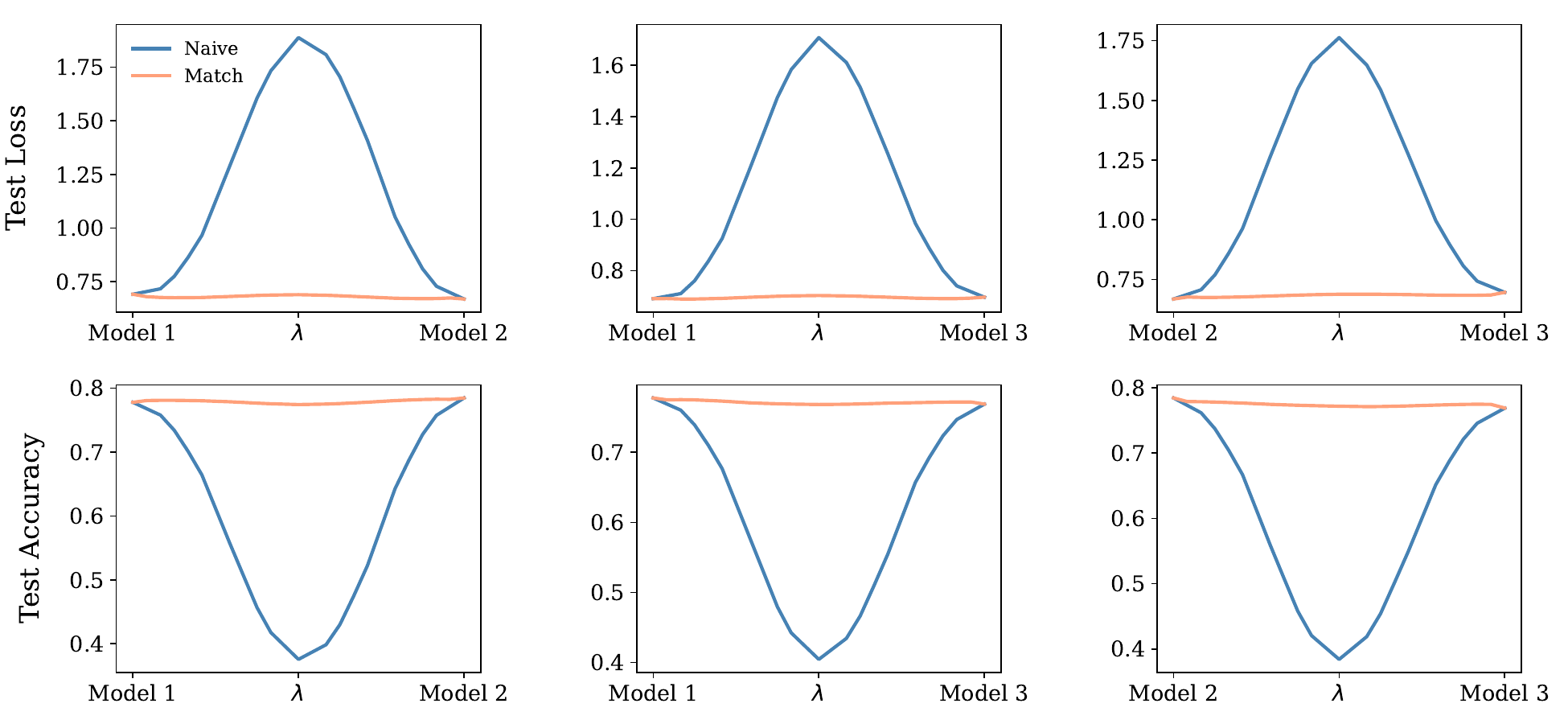} 
        \caption{4 attention heads}
        \label{fig:attn-cifar10-4-4-all}
    \end{subfigure}
    \hfill
    \begin{subfigure}{0.48\textwidth}
        \centering
        \includegraphics[width=1\textwidth]{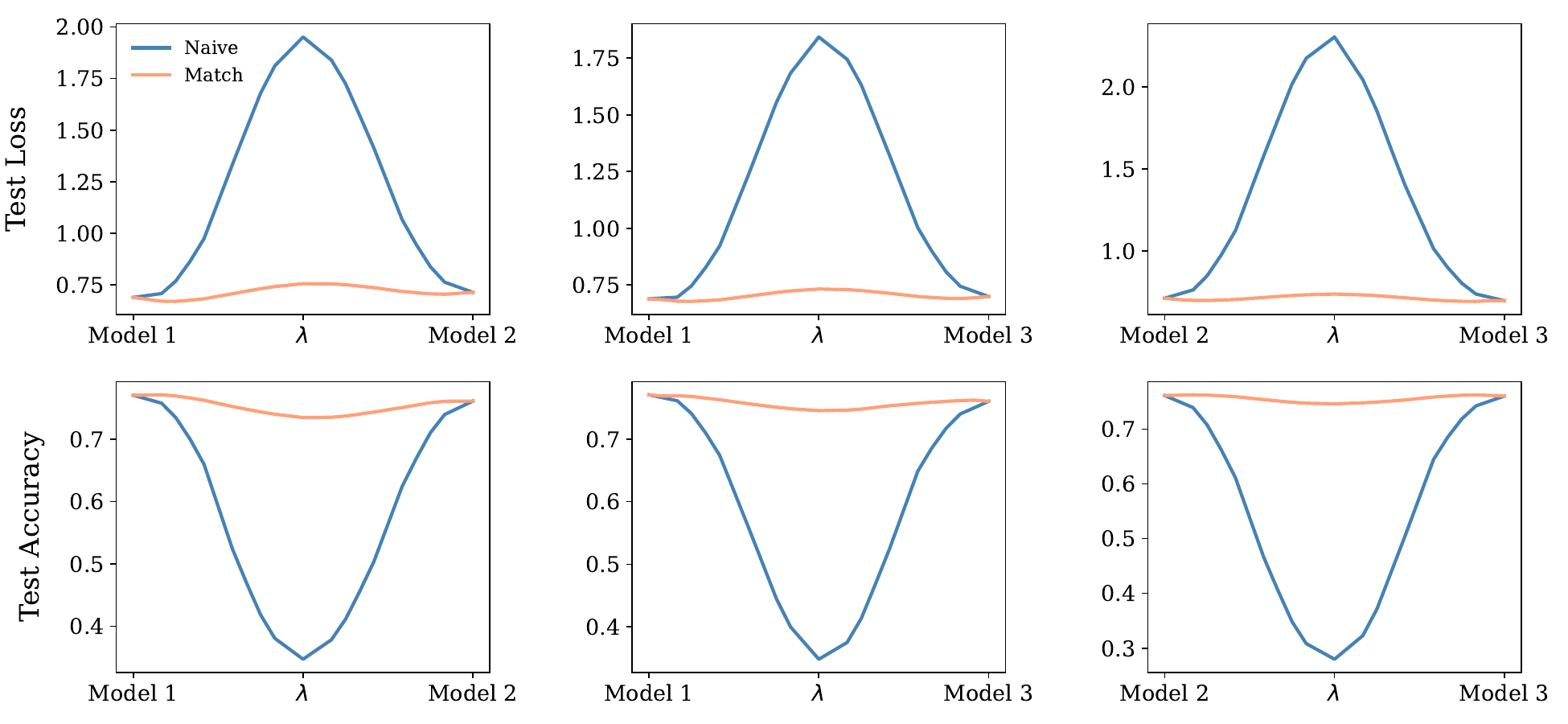} 
        \caption{8 attention heads}
        \label{fig:attn-cifar10-4-8-all}
    \end{subfigure}
    \caption{Linear Mode Connectivity for ViT on CIFAR-10 with 4 layers}
\end{figure}

\begin{figure}[H]
    \centering
    \begin{subfigure}{0.48\textwidth}
        \centering
        \includegraphics[width=1\textwidth]{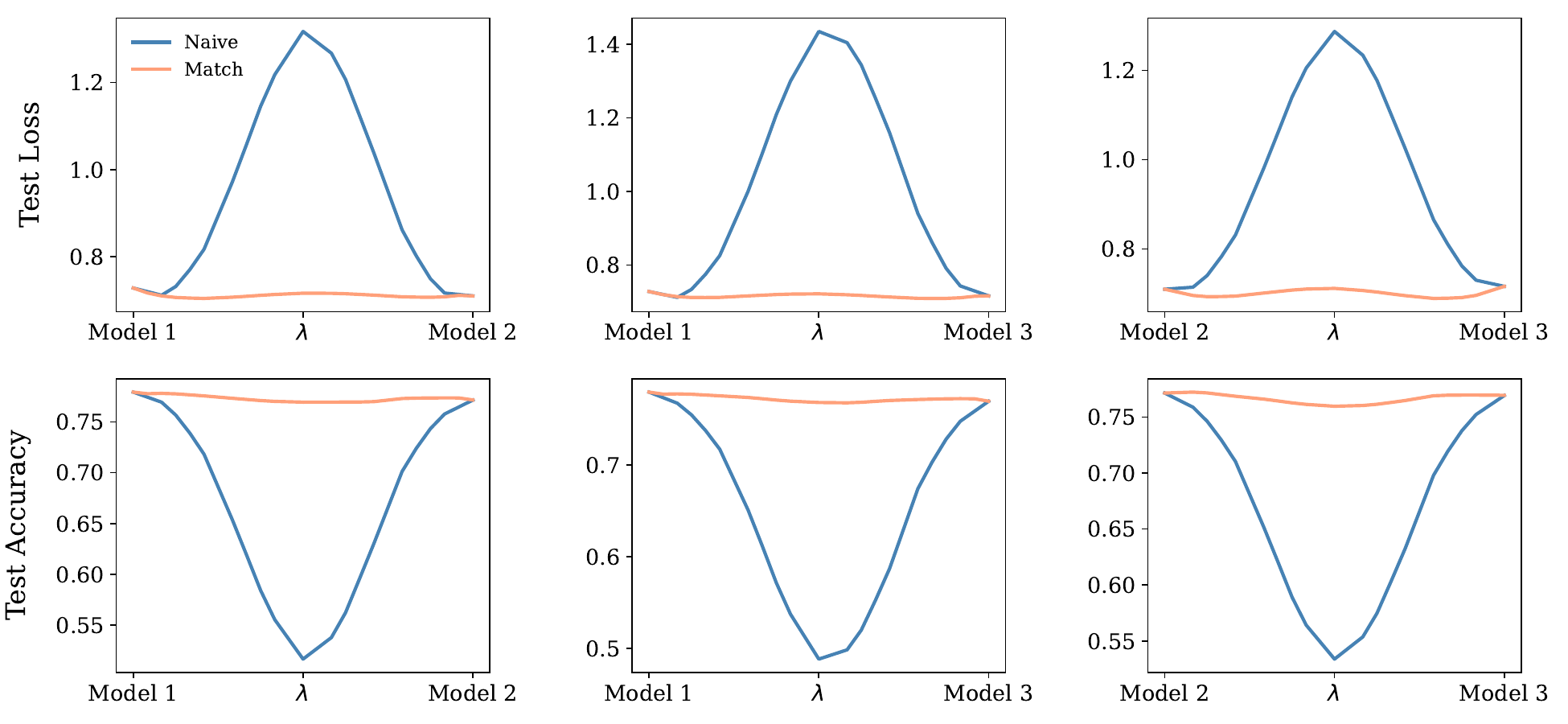} 
        \caption{4 attention heads}
        \label{fig:attn-cifar10-6-4-all}
    \end{subfigure}
    \hfill
    \begin{subfigure}{0.48\textwidth}
        \centering
        \includegraphics[width=1\textwidth]{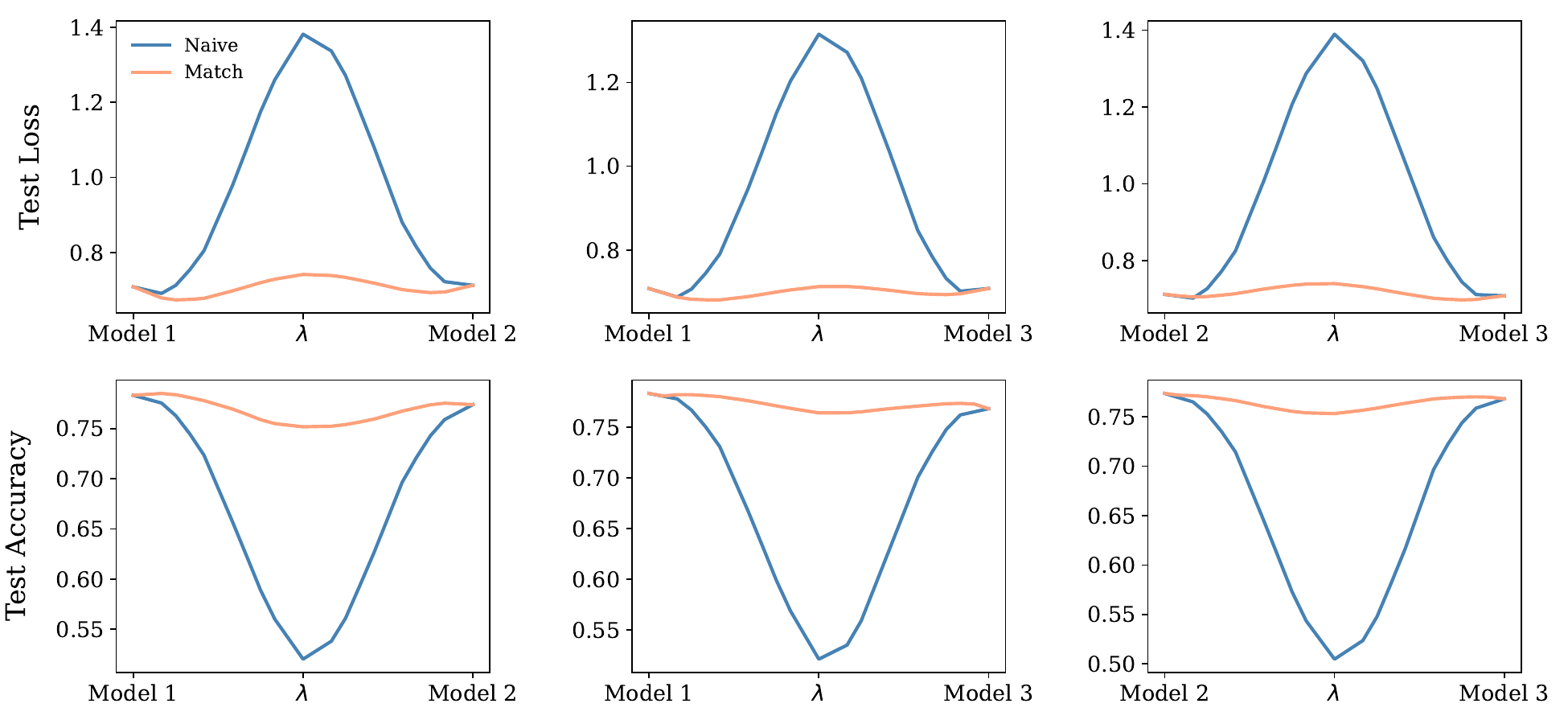} 
        \caption{8 attention heads}
        \label{fig:attn-cifar10-6-8-all}
    \end{subfigure}
    \caption{Linear Mode Connectivity for ViT on CIFAR-10 with 6 layers}
\end{figure}

\begin{figure}[H]
    \centering
    \begin{subfigure}{0.48\textwidth}
        \centering
        \includegraphics[width=1\textwidth]{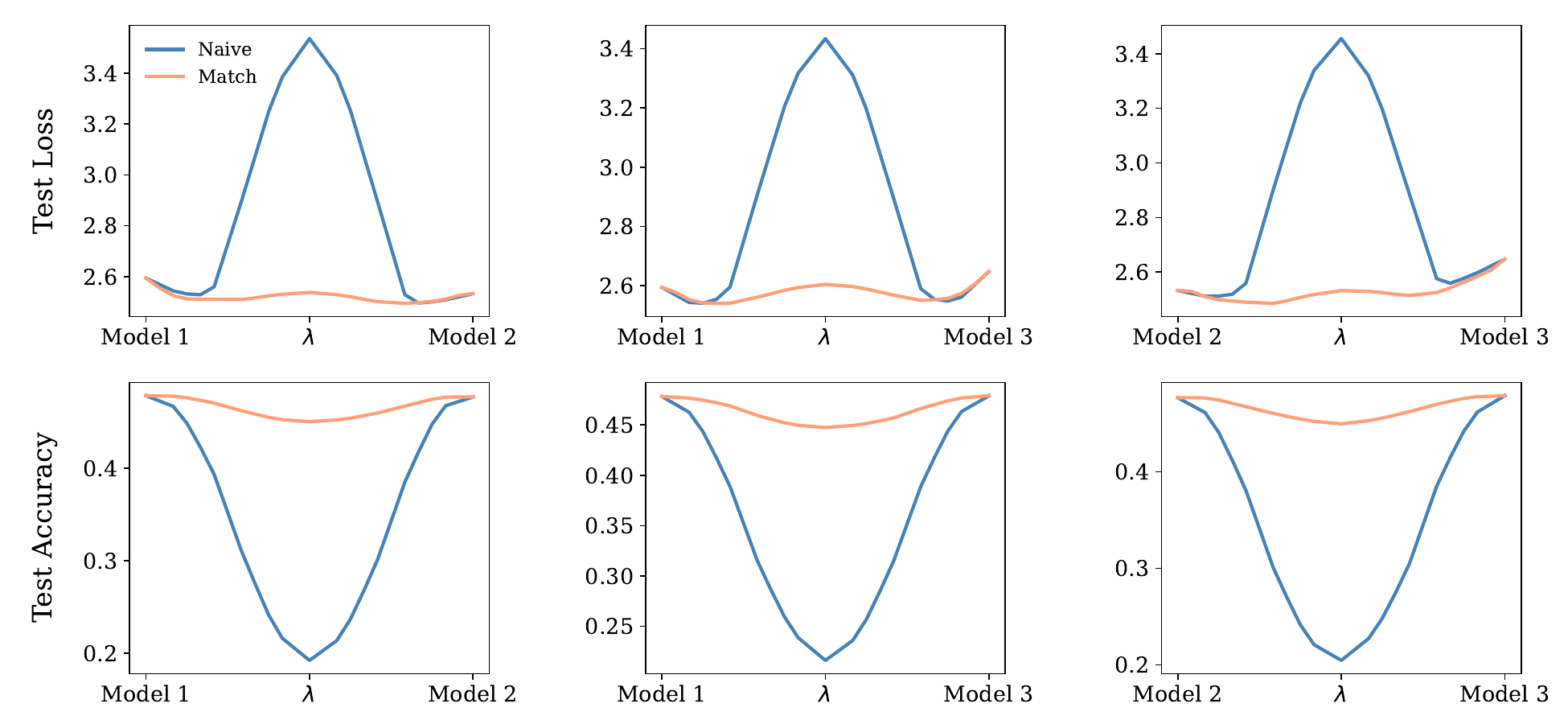} 
        \caption{4 attention heads}
        \label{fig:attn-cifar100-6-4-all}
    \end{subfigure}
    \hfill
    \begin{subfigure}{0.48\textwidth}
        \centering
        \includegraphics[width=1\textwidth]{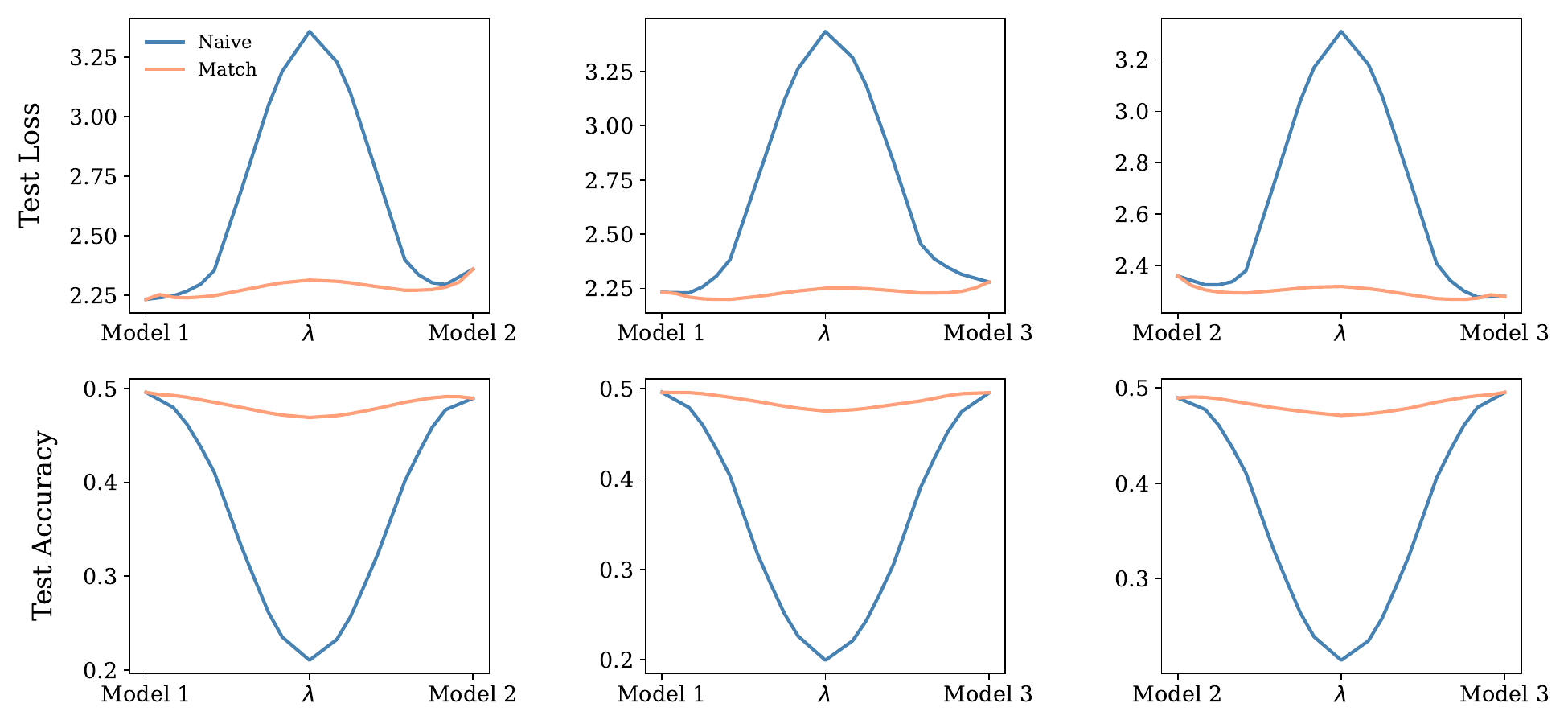} 
        \caption{8 attention heads}
        \label{fig:attn-cifar100-6-8-all}
    \end{subfigure}
    \caption{Linear Mode Connectivity for ViT on CIFAR-100 with 6 layers}
\end{figure}

\begin{figure}[H]
    \centering
    \begin{subfigure}{0.48\textwidth}
        \centering
        \includegraphics[width=1\textwidth]{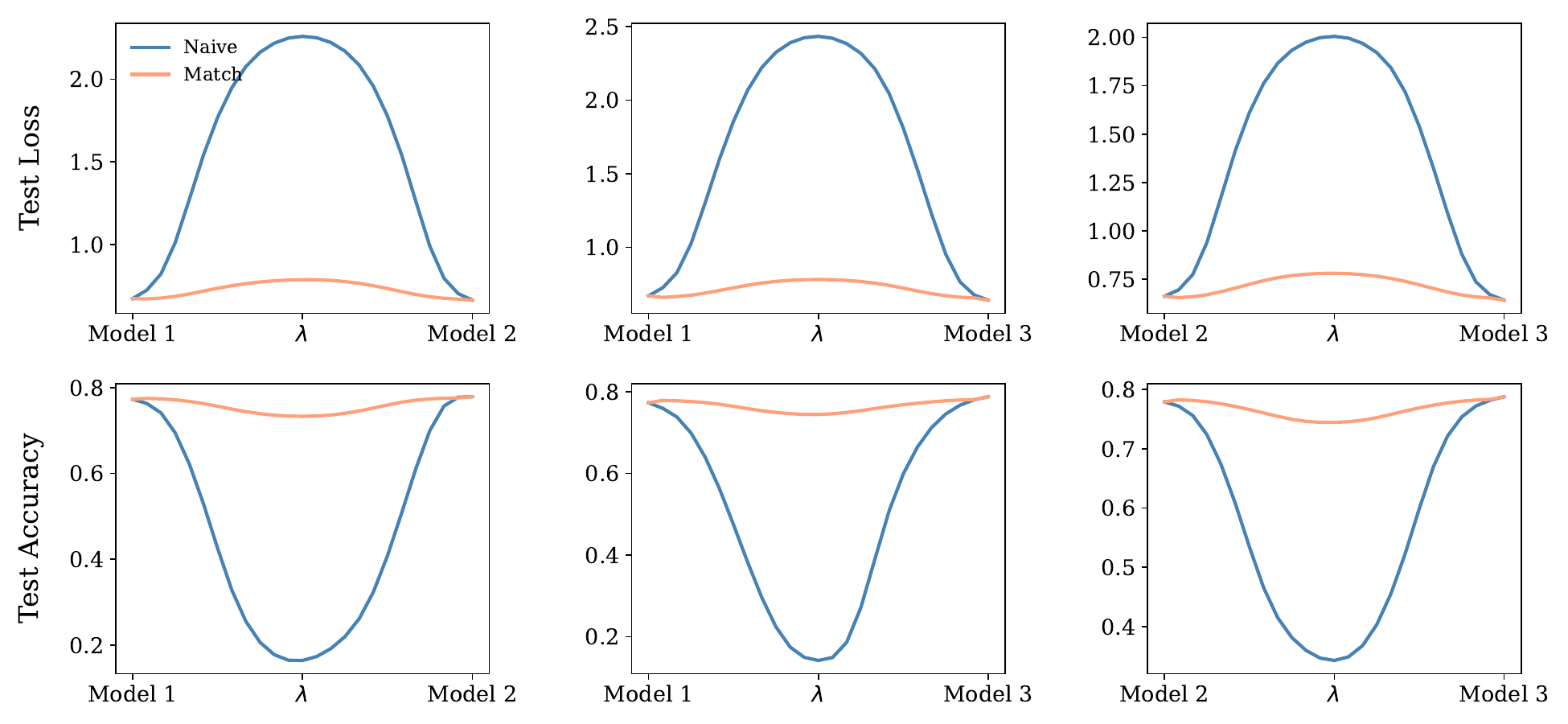} 
        \caption{CIFAR-10}
        \label{fig:attn-imgnetcifar10-12-4-all}
    \end{subfigure}
    \hfill
    \begin{subfigure}{0.48\textwidth}
        \centering
        \includegraphics[width=1\textwidth]{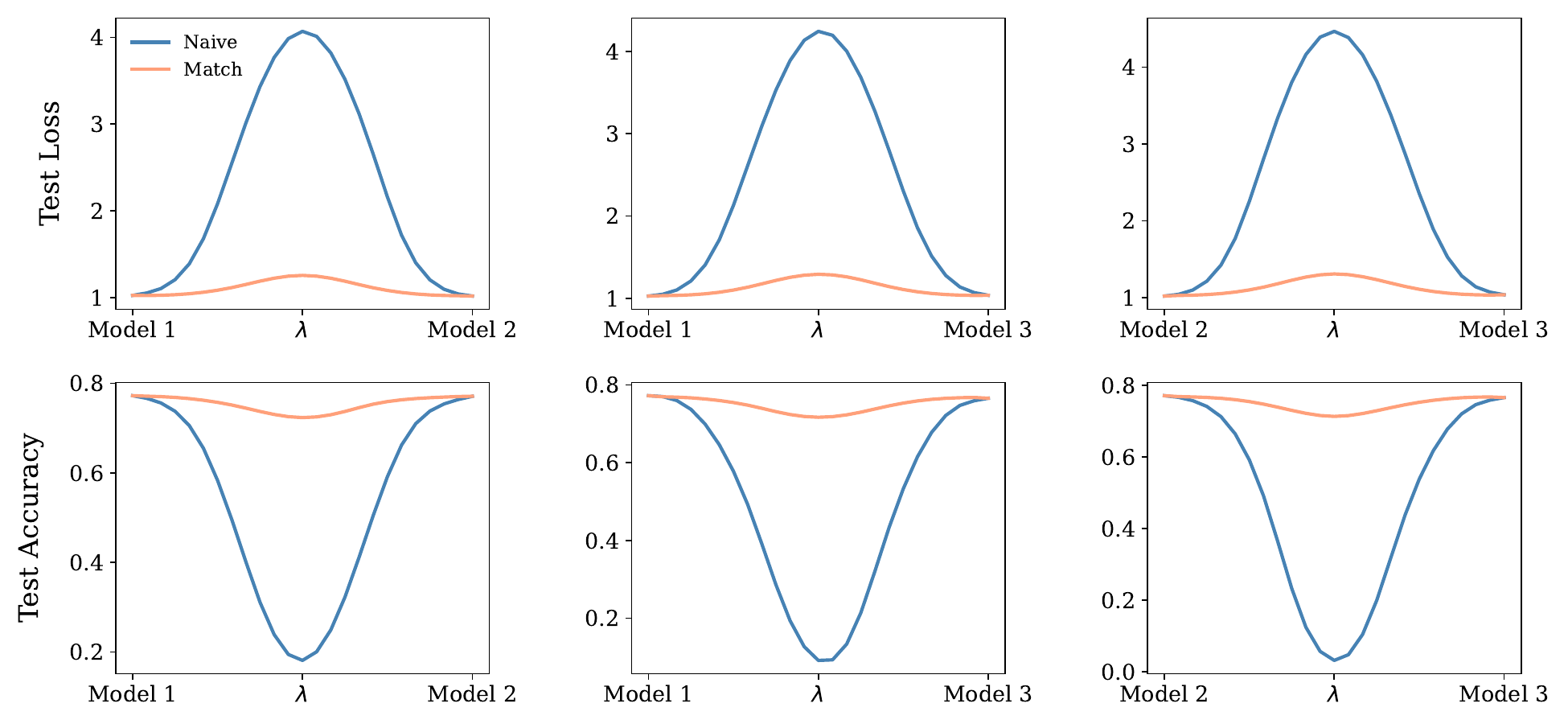} 
        \caption{CIFAR-100}
        \label{fig:attn-imgnetcifar100-12-4-all}
    \end{subfigure}
    \caption{Linear Mode Connectivity for ViT on ImageNet21k$\rightarrow$CIFAR-10/100 with 12 layers and 6 heads}
\end{figure}

\begin{figure}[H]
    \centering
    \begin{subfigure}{0.48\textwidth}
        \centering
        \includegraphics[width=1\textwidth]{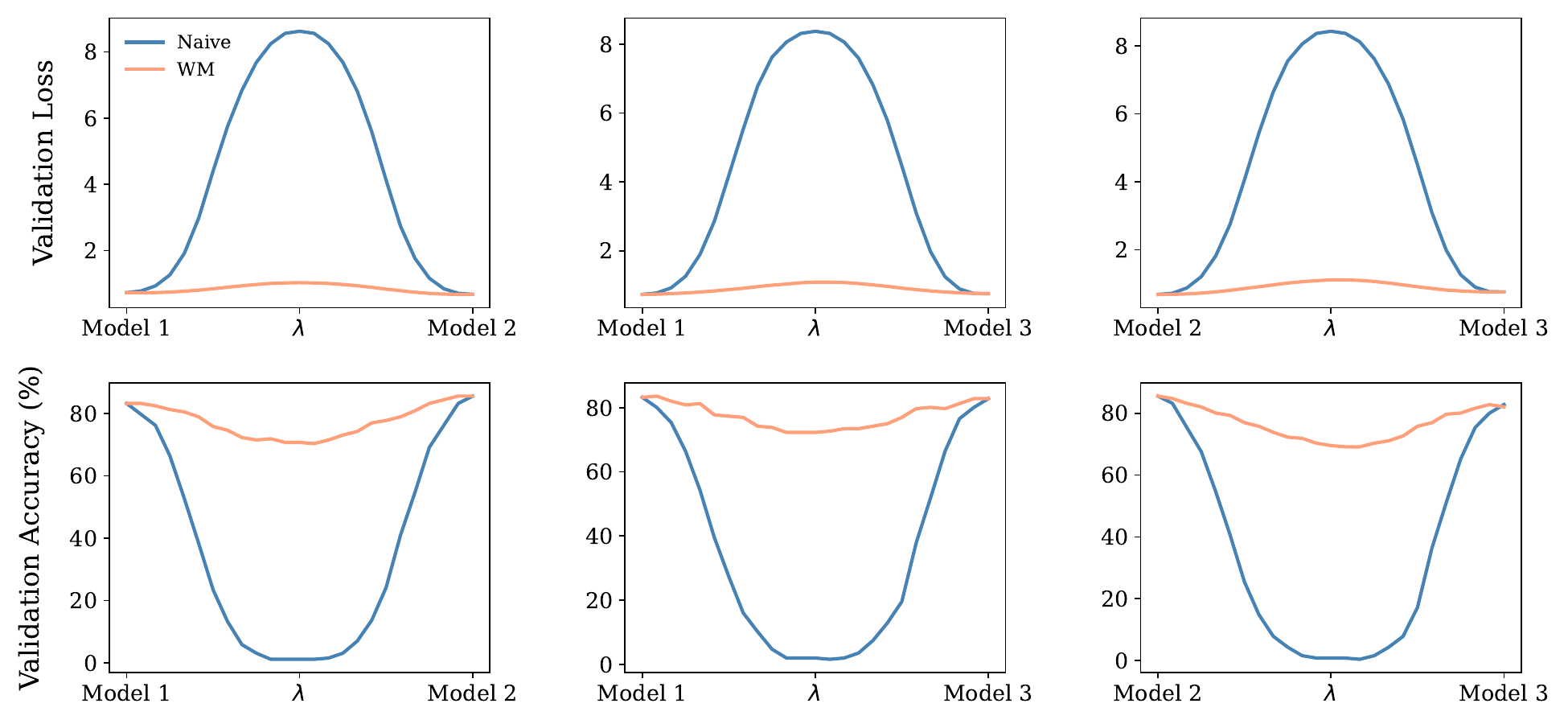} 
        \caption{APE}
        \label{fig:learnable-imagenet-12-12-all}
    \end{subfigure}
    \hfill
    \begin{subfigure}{0.48\textwidth}
        \centering
        \includegraphics[width=1\textwidth]{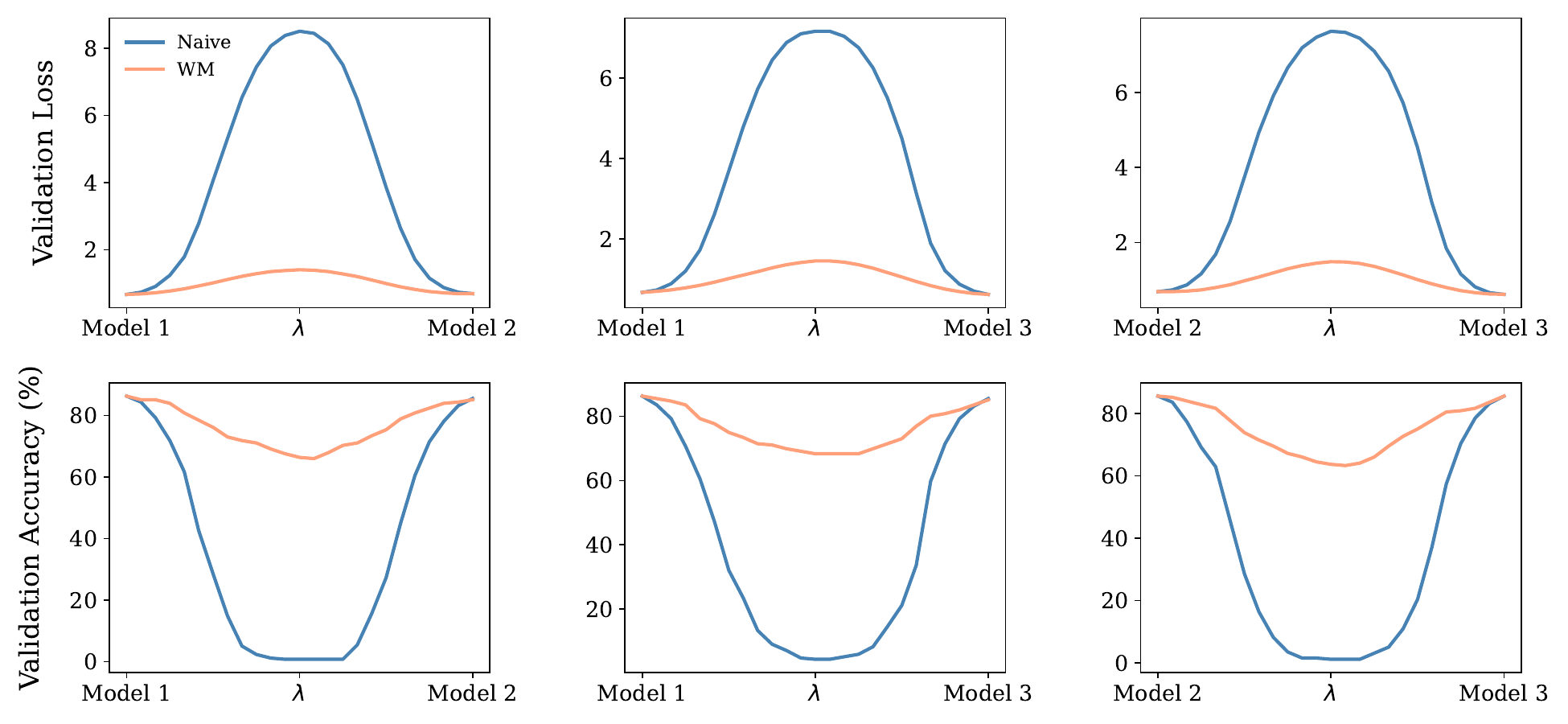} 
        \caption{RoPE}
        \label{fig:rope-imagenet-12-12-all}
    \end{subfigure}
    \caption{Linear Mode Connectivity for ViT with APE and RoPE on ImageNet-1k with 12 layers}
\end{figure}

\begin{figure}[H]
    \centering
    \begin{subfigure}{0.48\textwidth}
        \centering
        \includegraphics[width=1\textwidth]{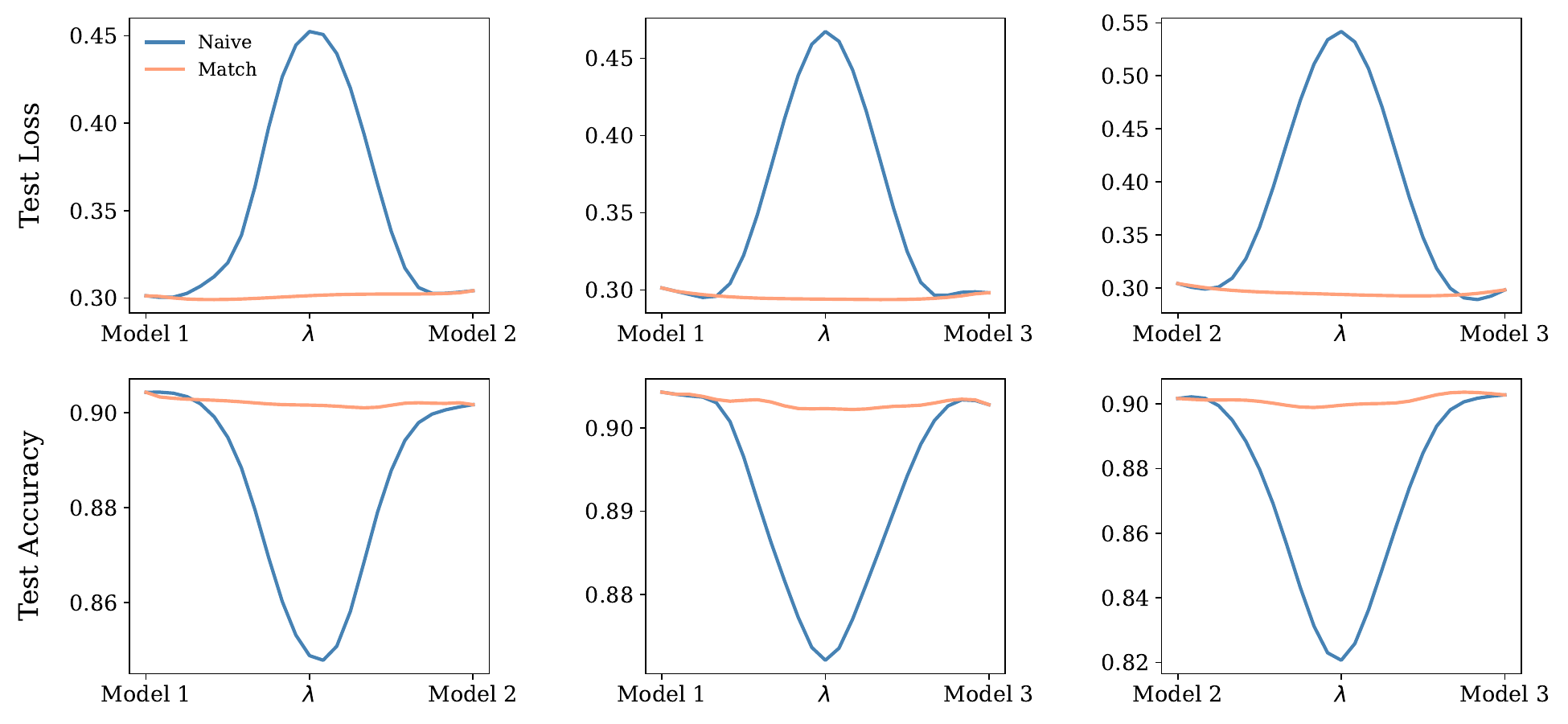} 
        \caption{4 attention heads}
        \label{fig:attn-agnews-2-4-all}
    \end{subfigure}
    \hfill
    \begin{subfigure}{0.48\textwidth}
        \centering
        \includegraphics[width=1\textwidth]{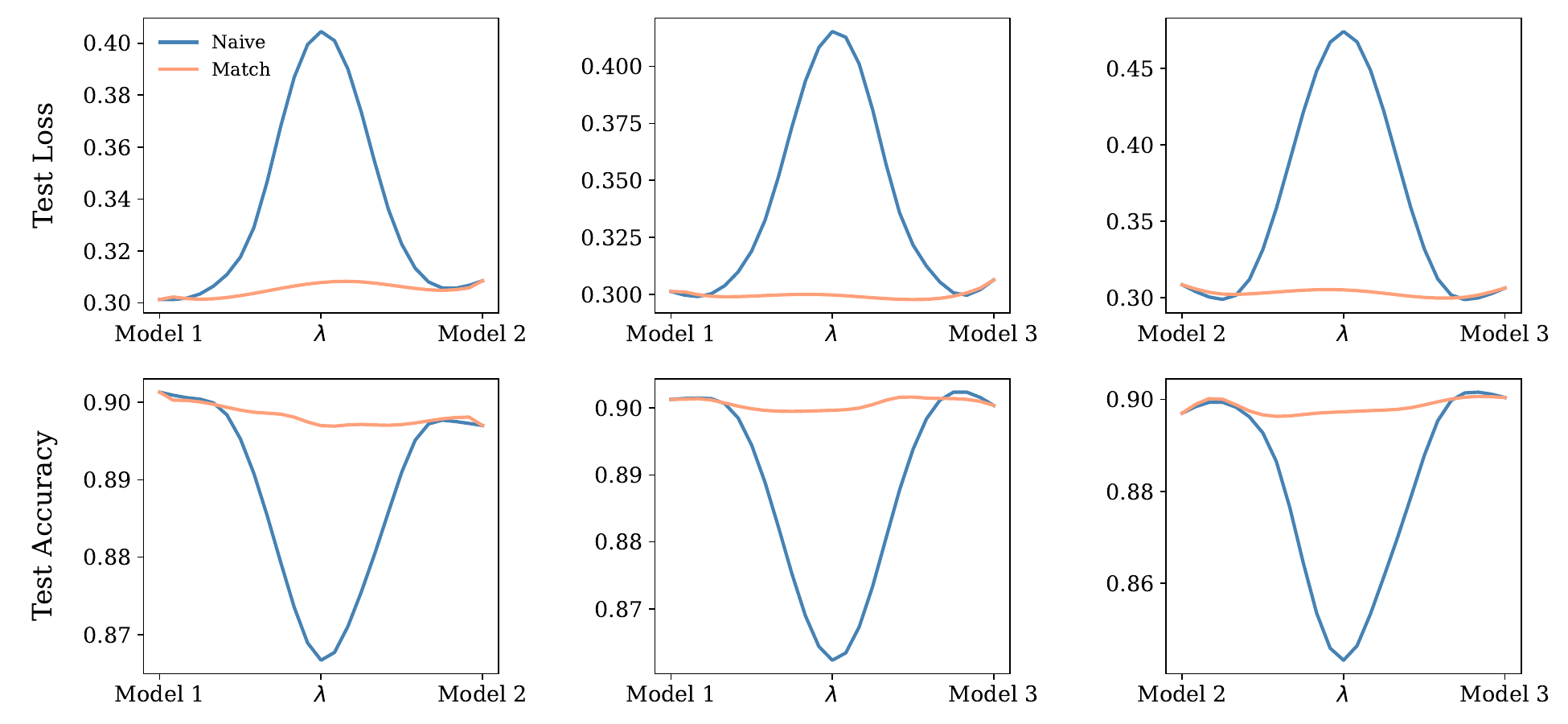} 
        \caption{8 attention heads}
        \label{fig:attn-agnews-2-8-all}
    \end{subfigure}
    \caption{Linear Mode Connectivity for BERT on AGnews with 2 layers}
\end{figure}

\begin{figure}[H]
    \centering
    \begin{subfigure}{0.48\textwidth}
        \centering
        \includegraphics[width=1\textwidth]{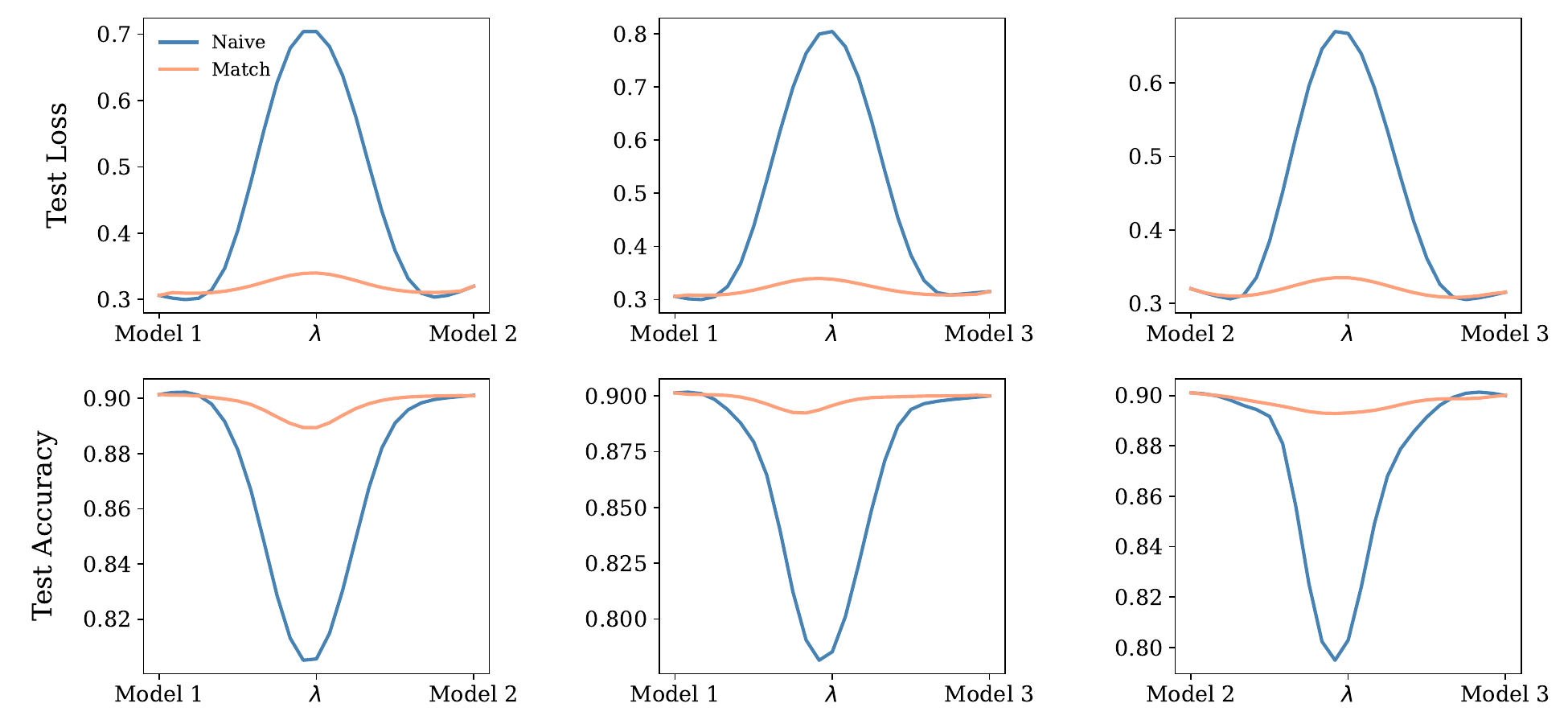} 
        \caption{4 attention heads}
        \label{fig:attn-agnews-6-4-all}
    \end{subfigure}
    \hfill
    \begin{subfigure}{0.48\textwidth}
        \centering
        \includegraphics[width=1\textwidth]{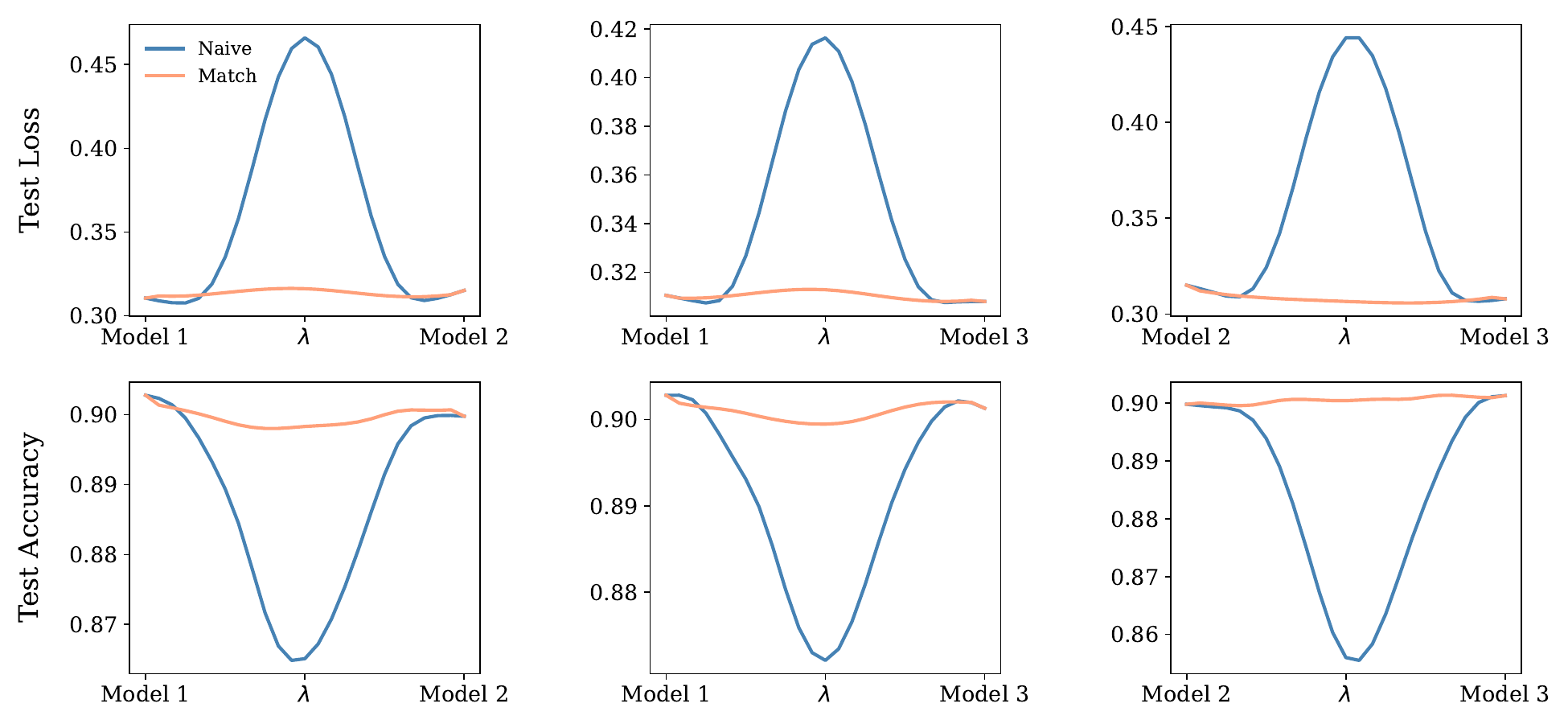} 
        \caption{8 attention heads}
        \label{fig:attn-agnews-6-8-all}
    \end{subfigure}
    \caption{Linear Mode Connectivity for BERT on AGnews with 6 layers}
\end{figure}

\begin{figure}[H]
    \centering
    \begin{subfigure}{0.48\textwidth}
        \centering
        \includegraphics[width=1\textwidth]{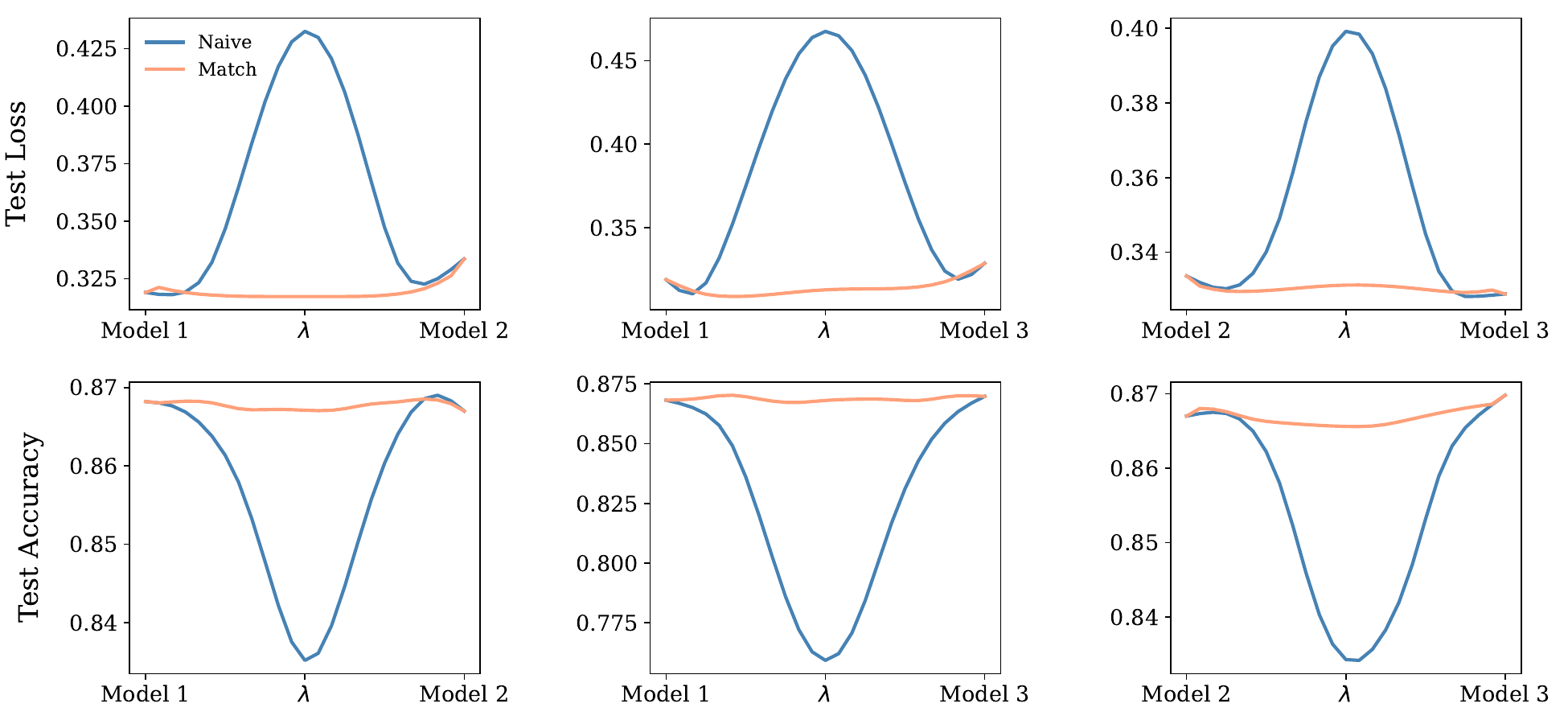} 
        \caption{4 attention heads}
        \label{fig:attn-imdbreview-2-4-all}
    \end{subfigure}
    \hfill
    \begin{subfigure}{0.48\textwidth}
        \centering
        \includegraphics[width=1\textwidth]{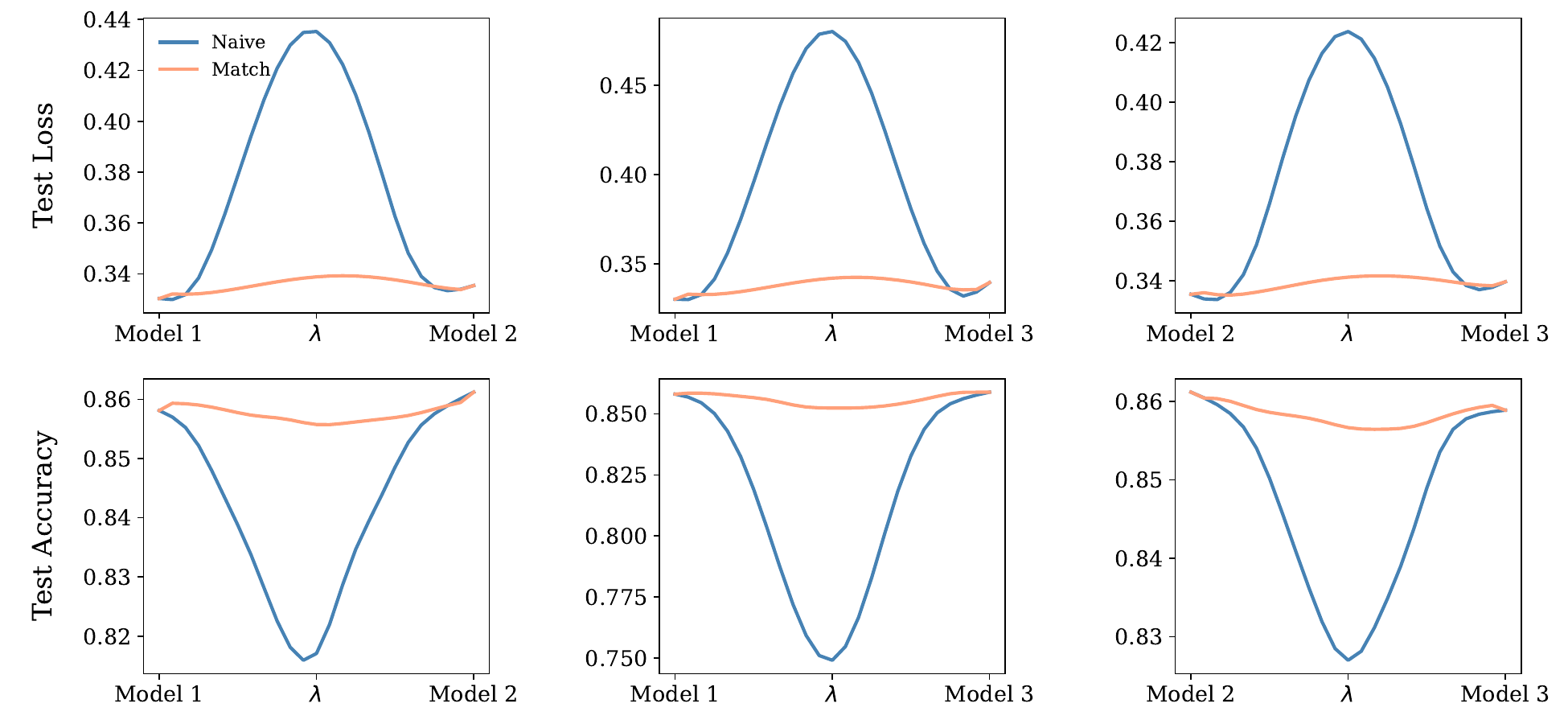} 
        \caption{8 attention heads}
        \label{fig:attn-imdbreview-2-8-all}
    \end{subfigure}
    \caption{Linear Mode Connectivity for BERT on IMDBreview with 2 layers}
\end{figure}

\begin{figure}[H]
    \centering
    \begin{subfigure}{0.48\textwidth}
        \centering
        \includegraphics[width=1\textwidth]{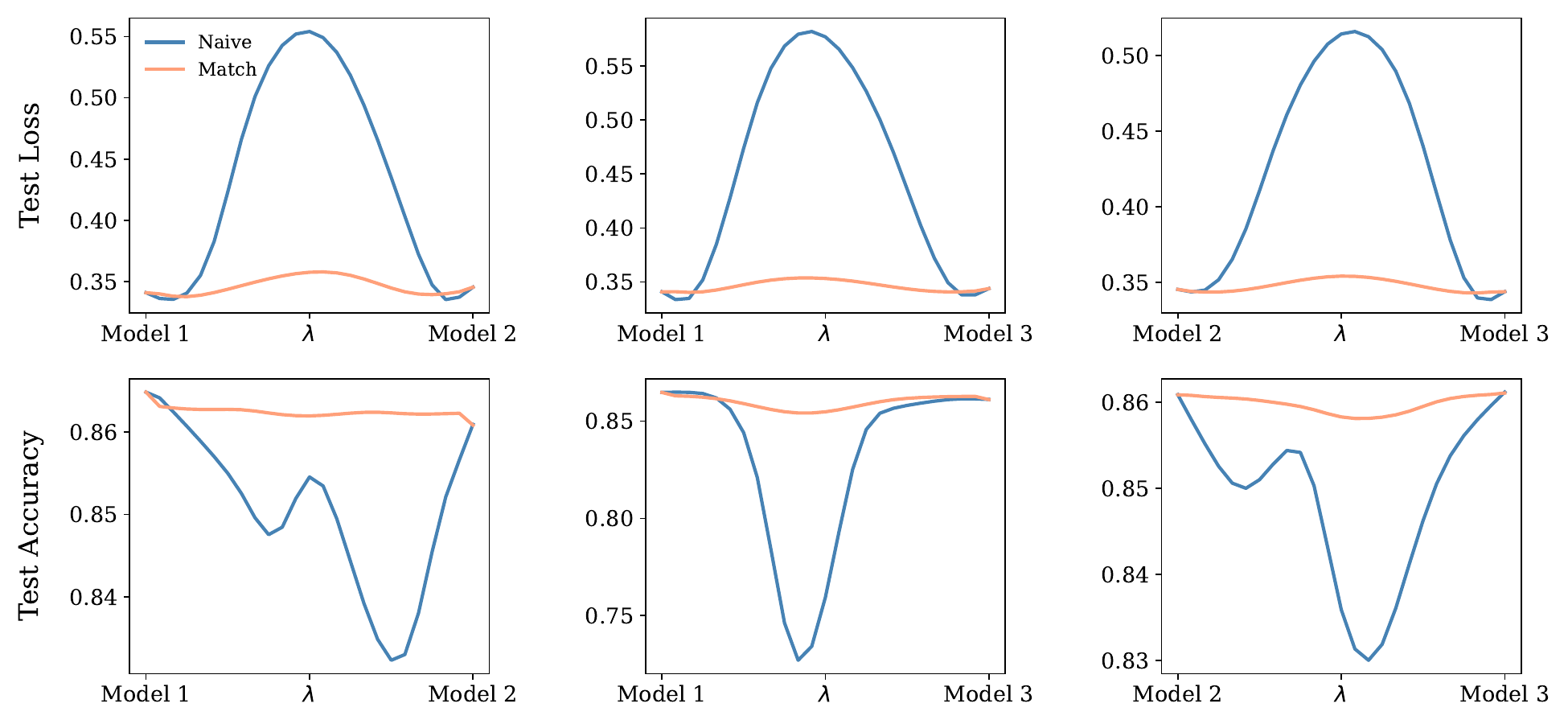} 
        \caption{4 attention heads}
        \label{fig:attn-imdbreview-6-4-all}
    \end{subfigure}
    \hfill
    \begin{subfigure}{0.48\textwidth}
        \centering
        \includegraphics[width=1\textwidth]{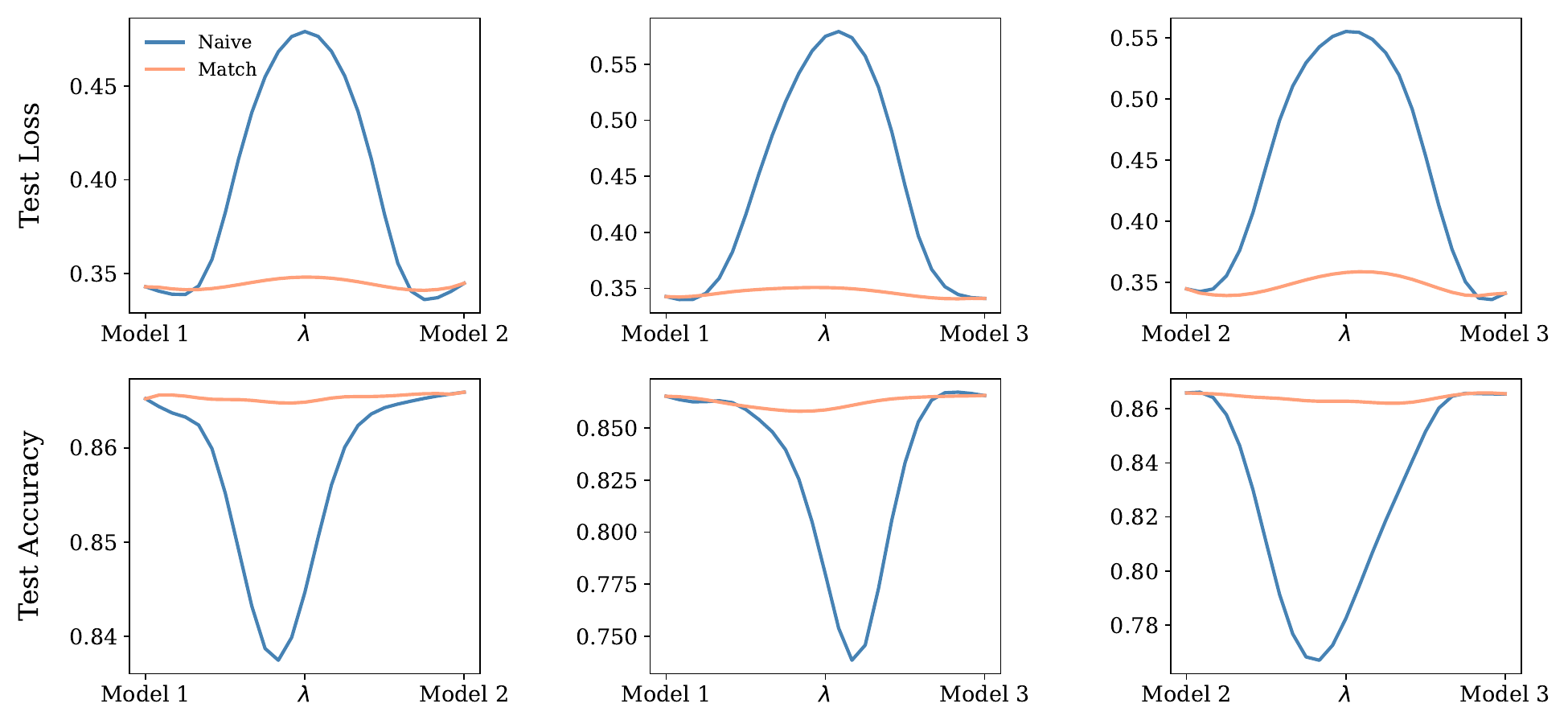} 
        \caption{8 attention heads}
        \label{fig:attn-imdbreview-6-8-all}
    \end{subfigure}
    \caption{Linear Mode Connectivity for BERT on IMDBreview with 6 layers}
\end{figure}

\begin{figure}[H]
    \centering
    \begin{subfigure}{0.48\textwidth}
        \centering
        \includegraphics[width=1\textwidth]{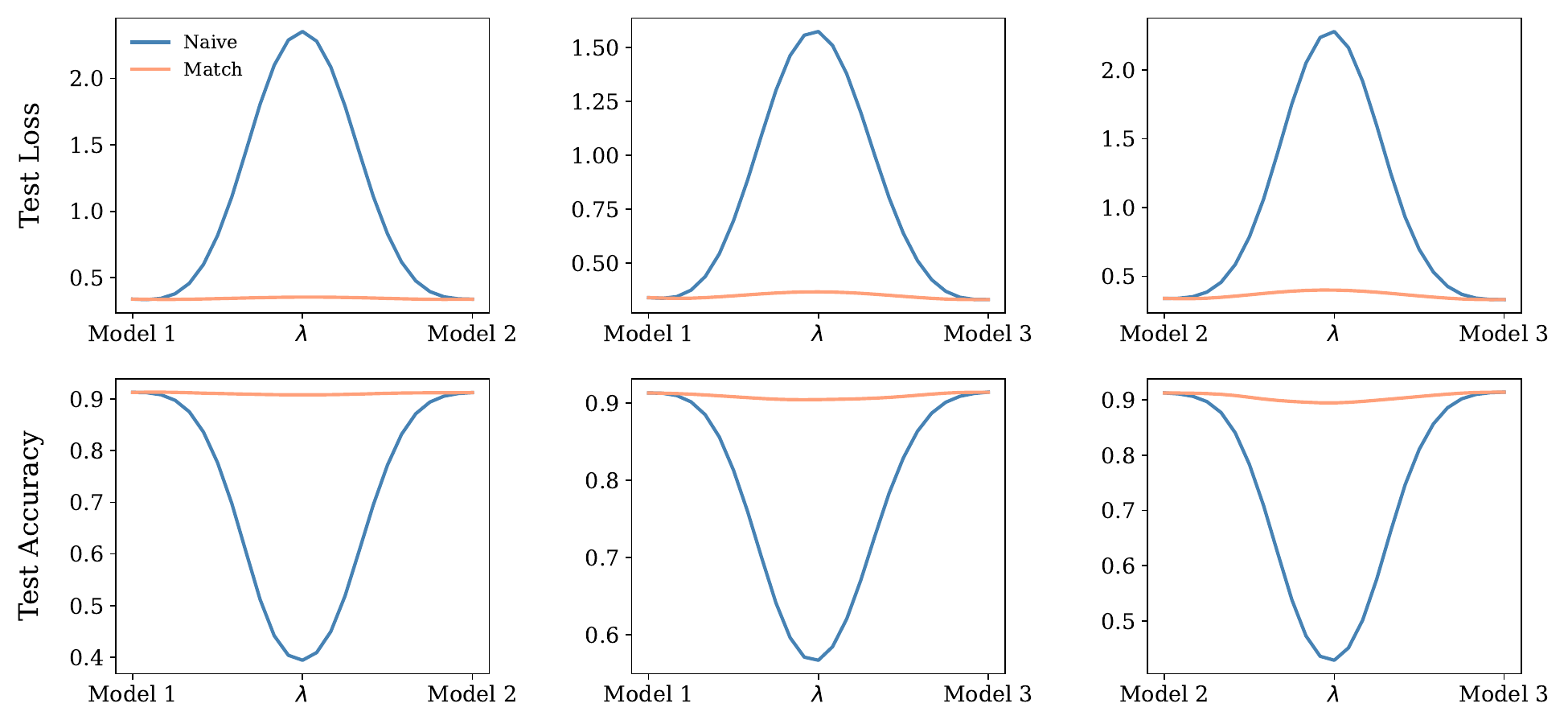} 
        \caption{4 attention heads}
        \label{fig:attn-dbpedia-2-4-all}
    \end{subfigure}
    \hfill
    \begin{subfigure}{0.48\textwidth}
        \centering
        \includegraphics[width=1\textwidth]{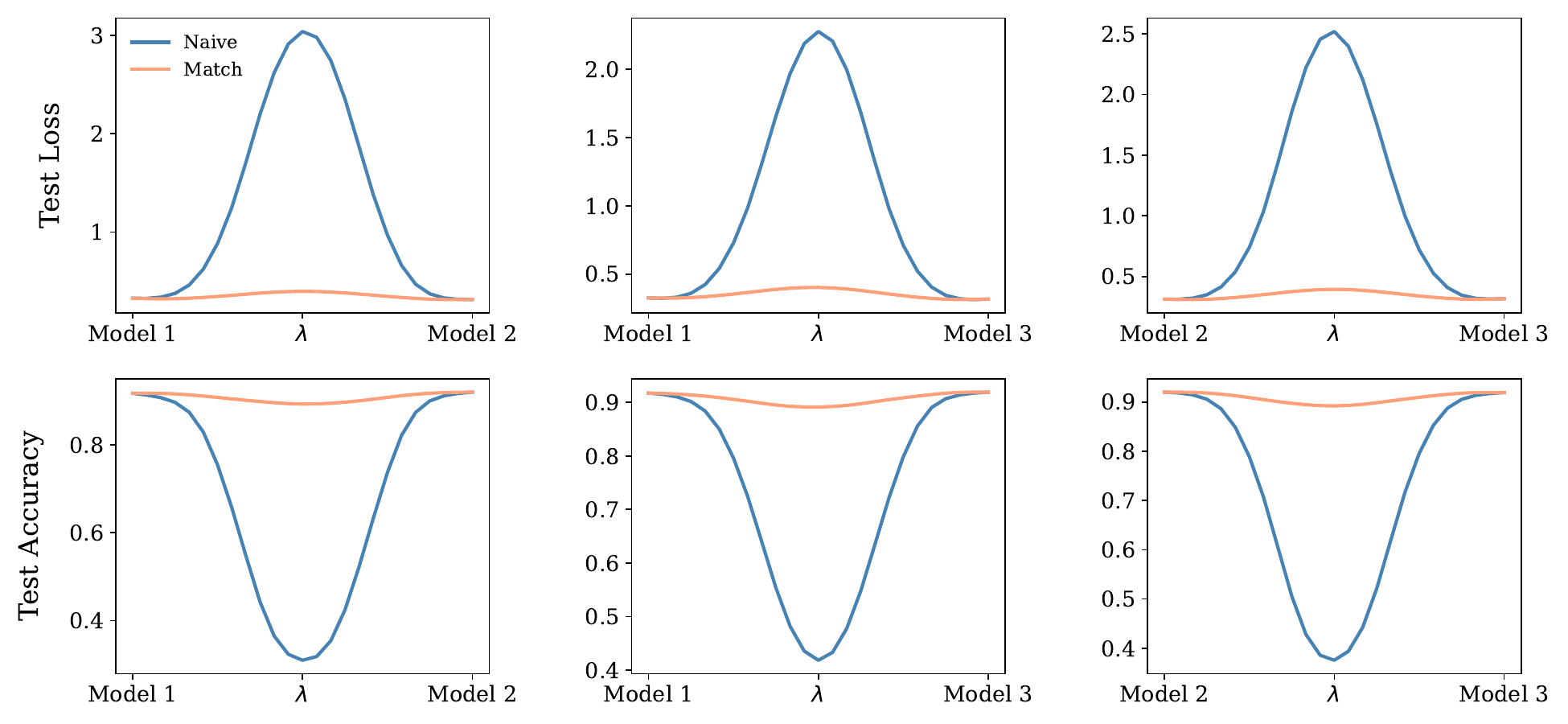} 
        \caption{8 attention heads}
        \label{fig:attn-dbpedia-2-8-all}
    \end{subfigure}
    \caption{Linear Mode Connectivity for BERT on DBPedia with 2 layers}
\end{figure}

\begin{figure}[H]
    \centering
    \begin{subfigure}{0.48\textwidth}
        \centering
        \includegraphics[width=1\textwidth]{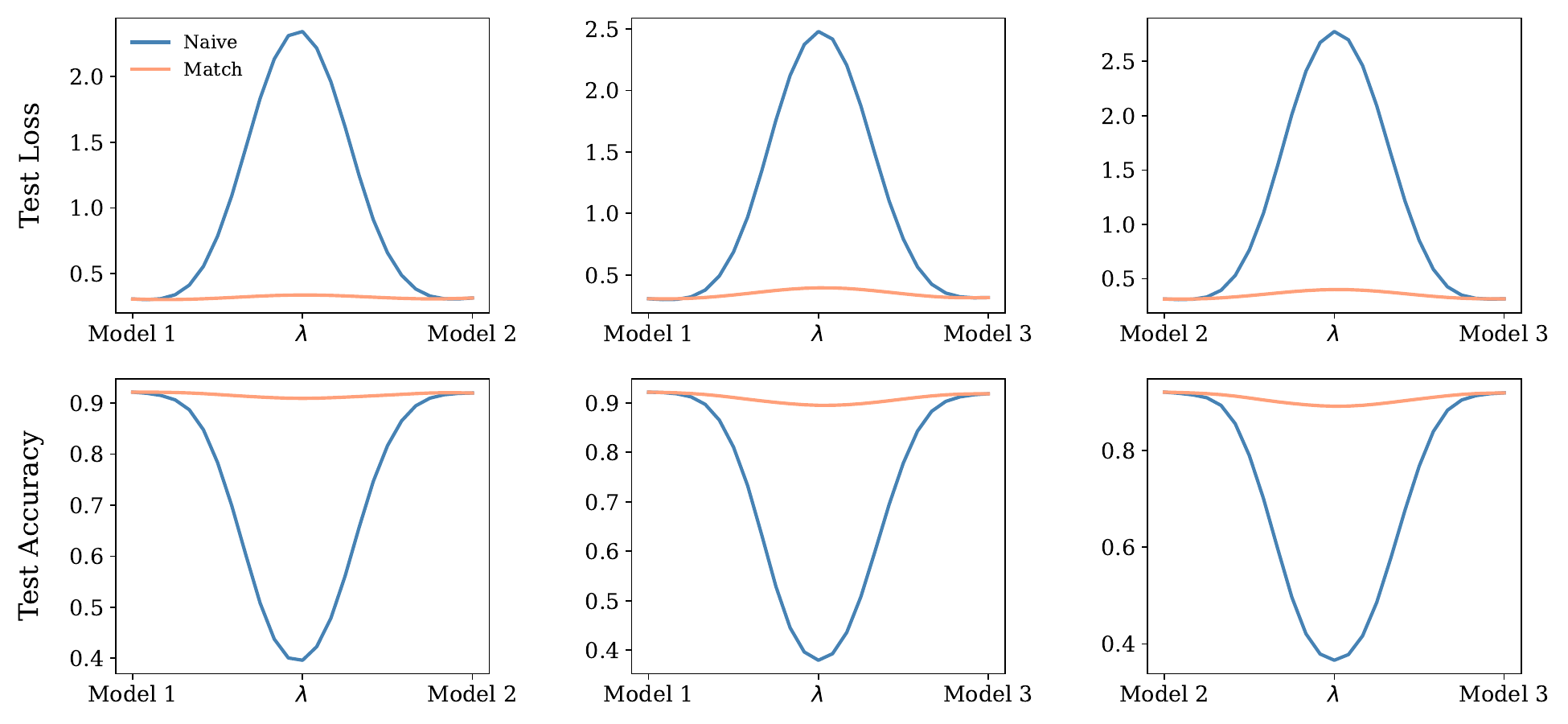} 
        \caption{4 attention heads}
        \label{fig:attn-dbpedia-6-4-all}
    \end{subfigure}
    \hfill
    \begin{subfigure}{0.48\textwidth}
        \centering
        \includegraphics[width=1\textwidth]{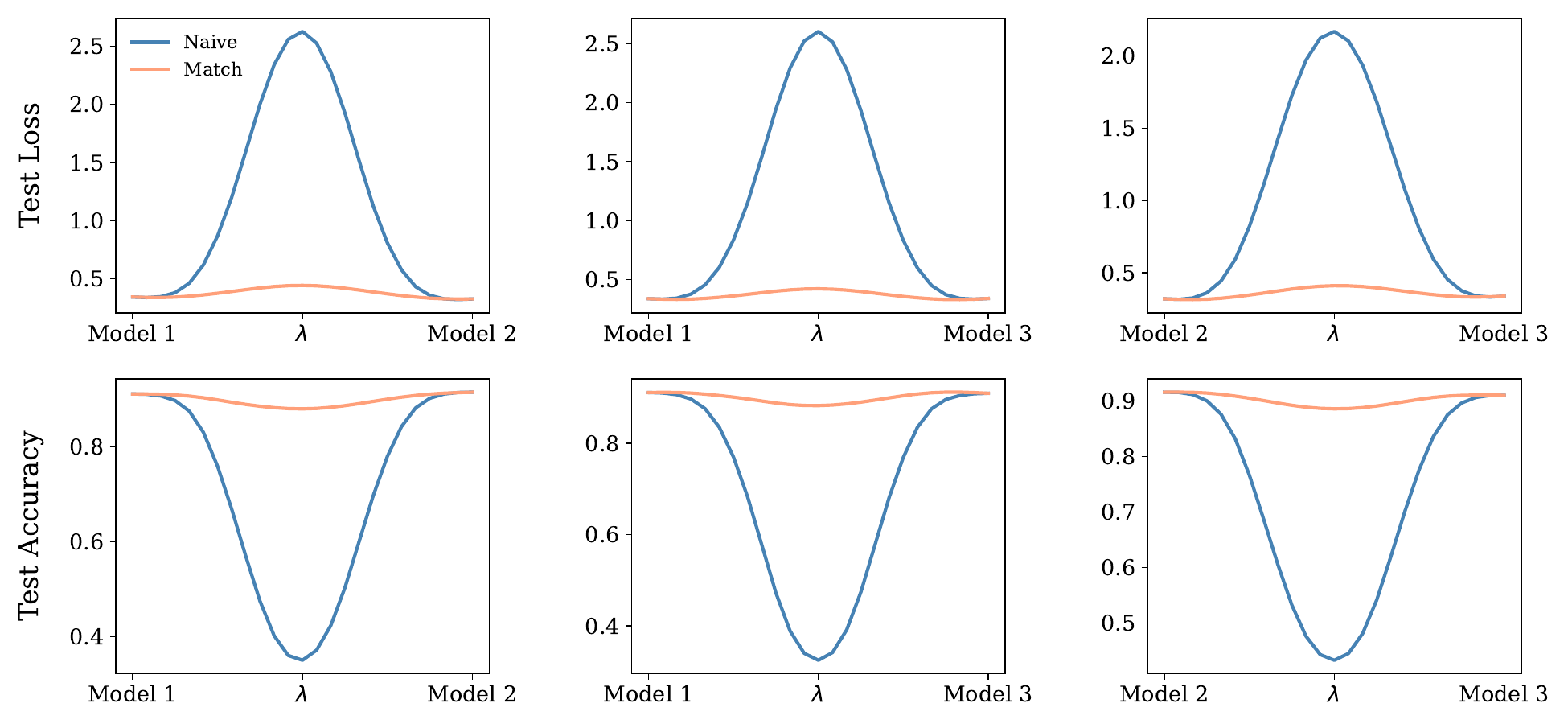} 
        \caption{8 attention heads}
        \label{fig:attn-dbpedia-6-8-all}
    \end{subfigure}
    \caption{Linear Mode Connectivity for BERT on DBPedia with 6 layers}
\end{figure}
\begin{figure}[H]
    \centering
    \begin{subfigure}{0.48\textwidth}
        \centering
        \includegraphics[width=1\textwidth]{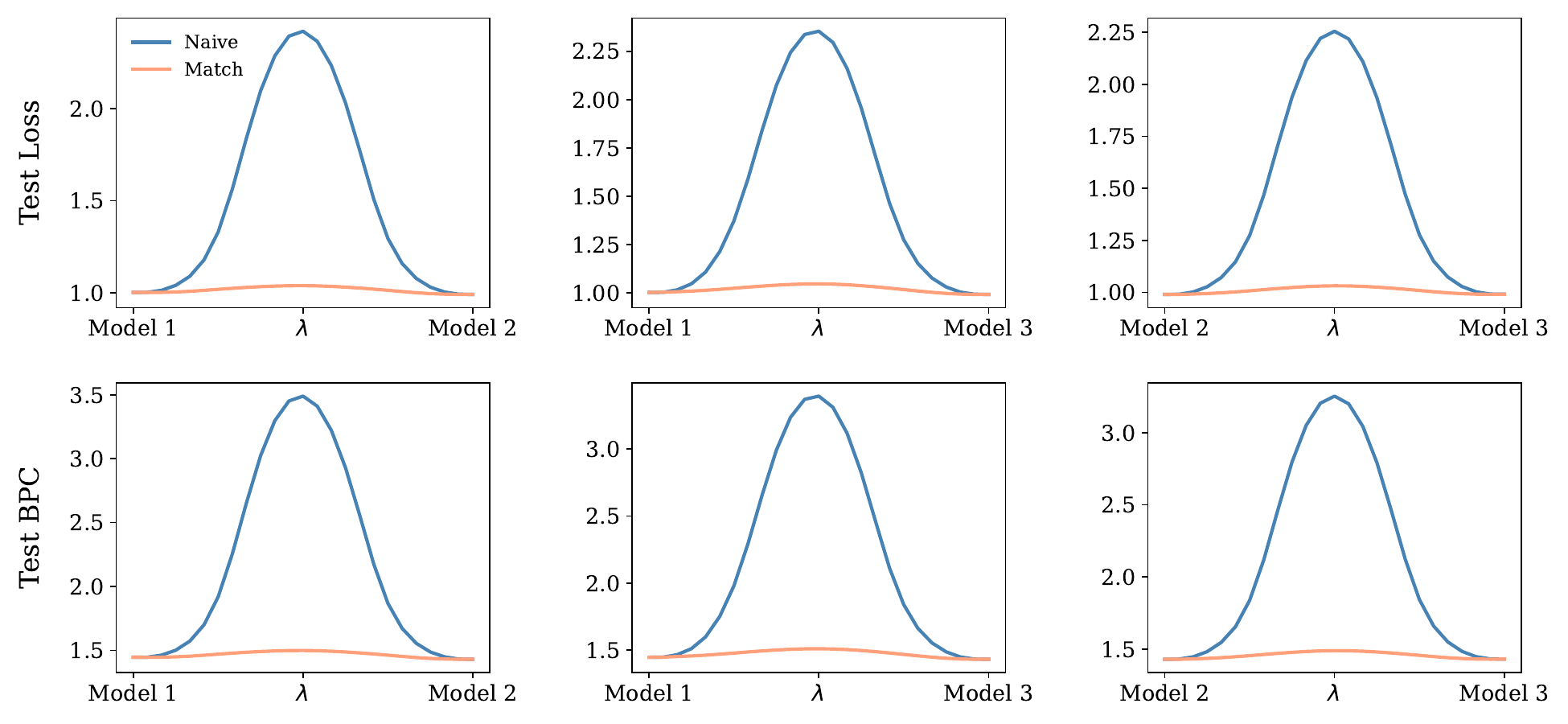} 
        \caption{APE}
        \label{fig:learnable-enwik8-12-8-all}
    \end{subfigure}
    \hfill
    \begin{subfigure}{0.48\textwidth}
        \centering
        \includegraphics[width=1\textwidth]{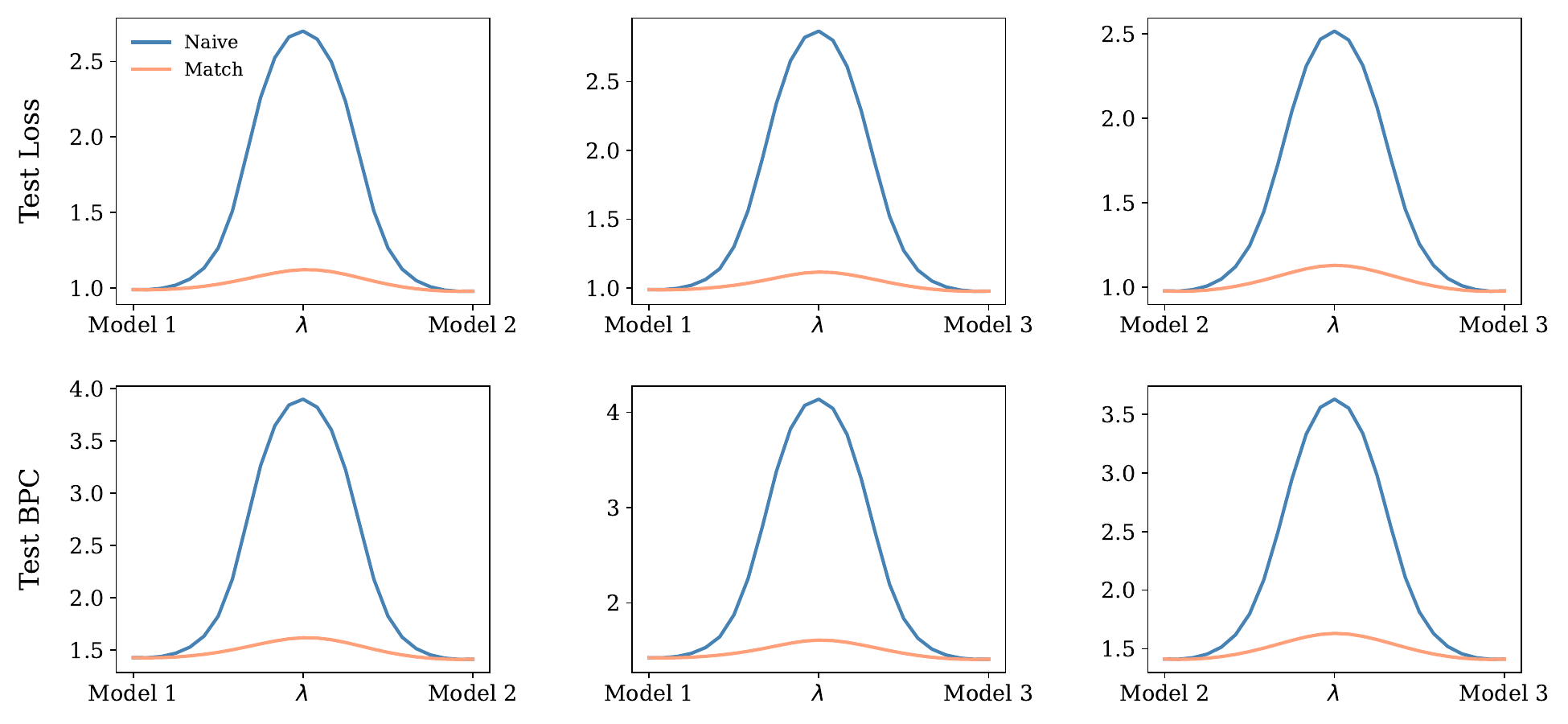} 
        \caption{RoPE}
        \label{fig:rope-enwik8-12-8-all}
    \end{subfigure}
    \caption{Linear Mode Connectivity for GPT2 with APE and RoPE on Enwik8 with 12 layers and 8 heads}
\end{figure}
\begin{figure}[H]
    \centering
    \begin{subfigure}{0.48\textwidth}
        \centering
        \includegraphics[width=1\textwidth]{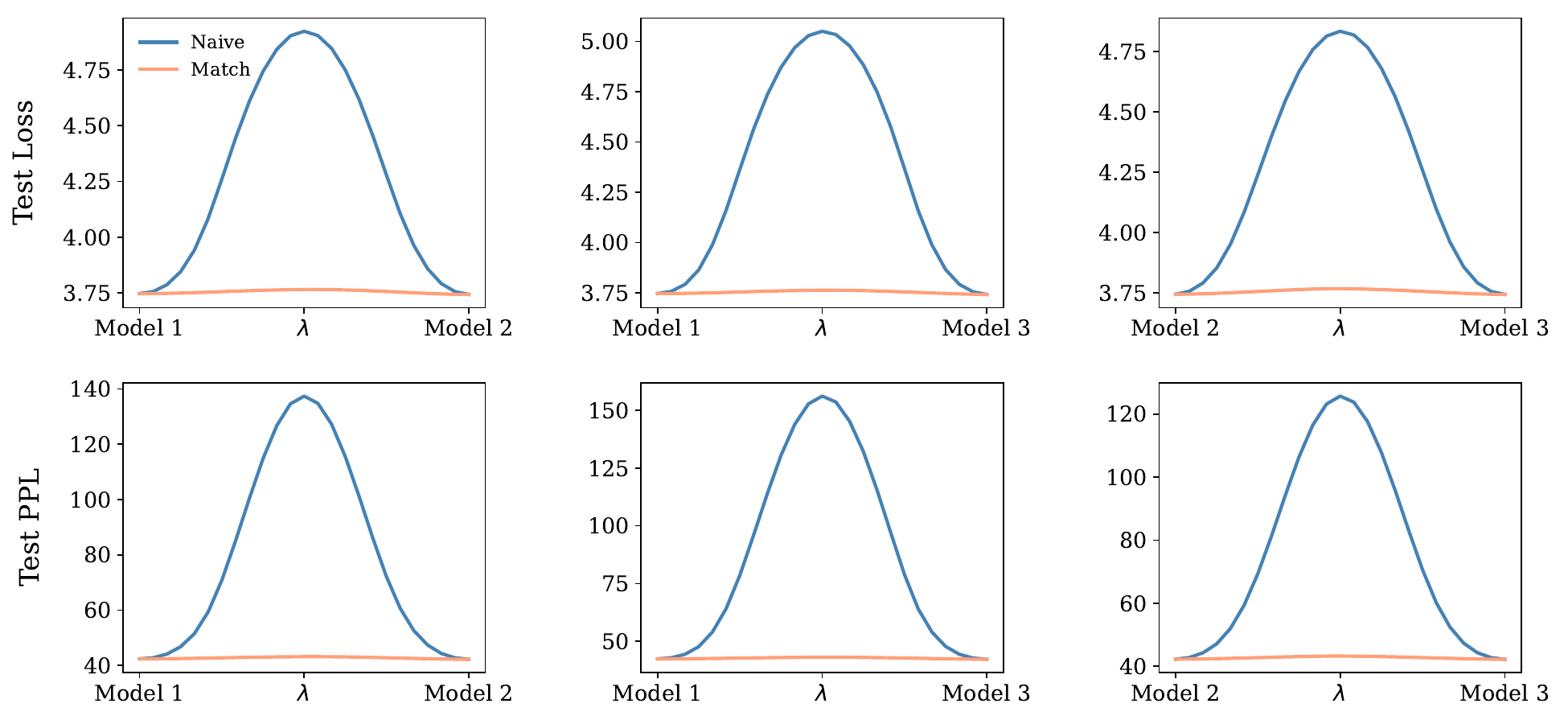} 
        \caption{APE}
        \label{fig:learnable-wt103-12-3-all}
    \end{subfigure}
    \hfill
    \begin{subfigure}{0.48\textwidth}
        \centering
        \includegraphics[width=1\textwidth]{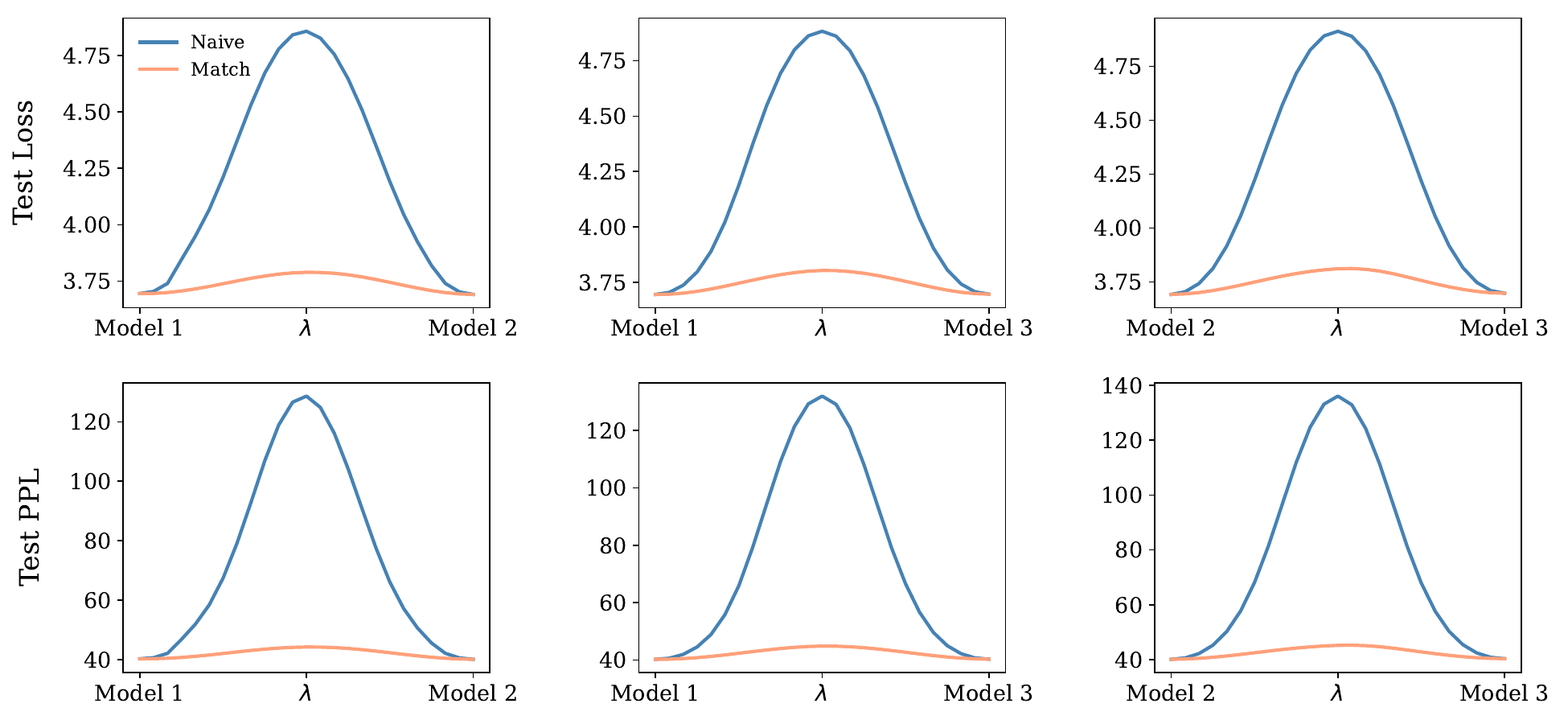} 
        \caption{RoPE}
        \label{fig:rope-wt103-12-3-all}
    \end{subfigure}
    \caption{Linear Mode Connectivity for GPT2 with APE and RoPE on Wikitext103 with 12 layers and 3 heads}
\end{figure}
\begin{figure}[H]
    \centering
    \begin{subfigure}{0.48\textwidth}
        \centering
        \includegraphics[width=1\textwidth]{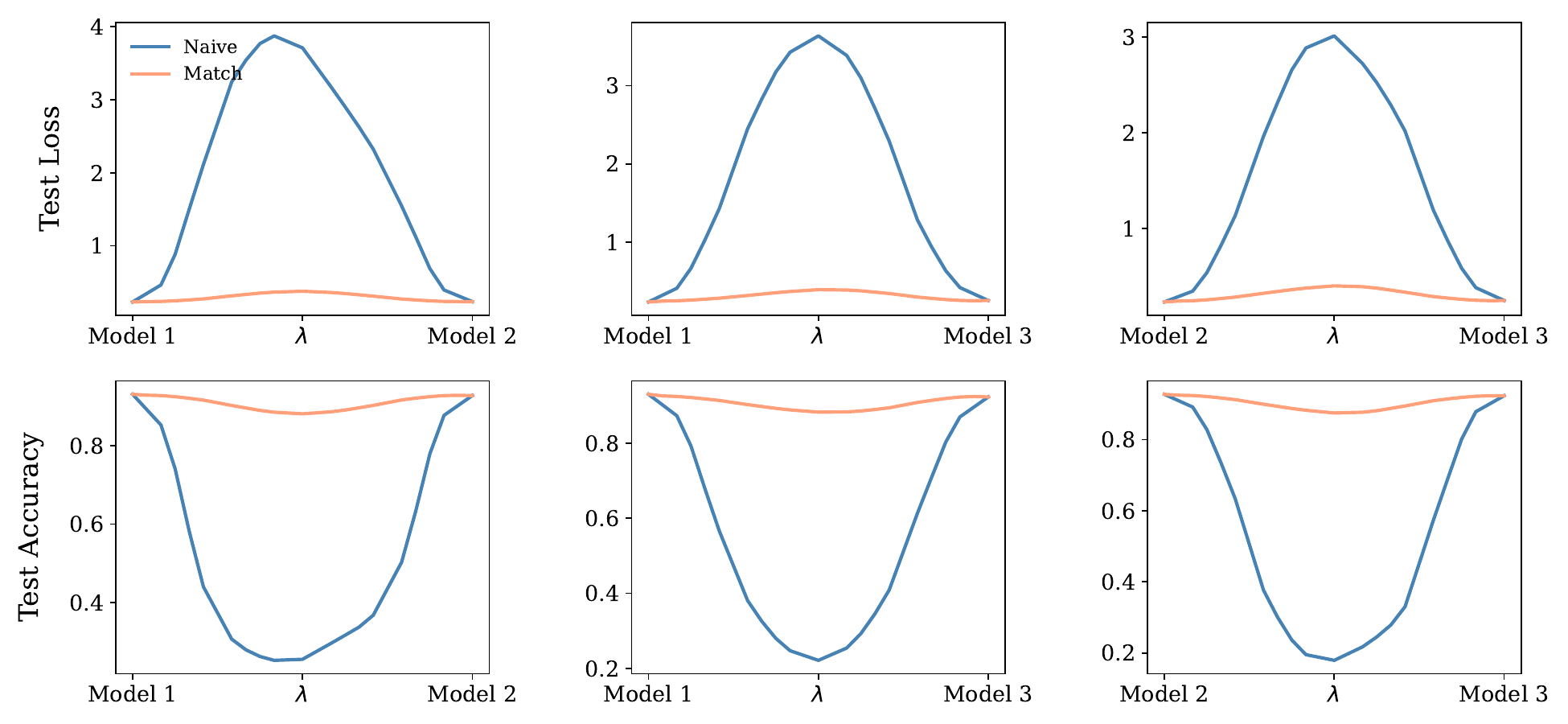} 
        \caption{4 attention heads}
        \label{fig:attn-mnist-2-4-all-rope}
    \end{subfigure}
    \hfill
    \begin{subfigure}{0.48\textwidth}
        \centering
        \includegraphics[width=1\textwidth]{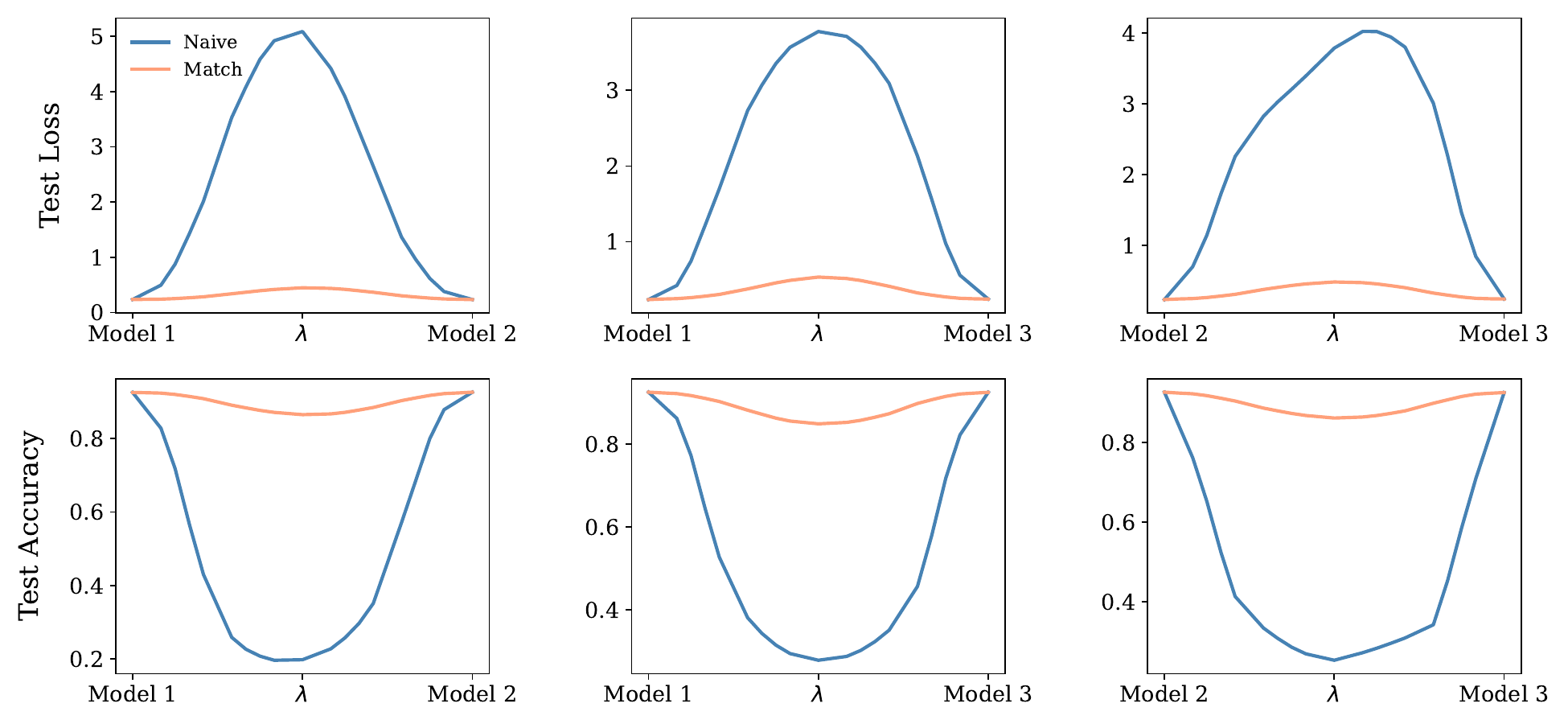} 
        \caption{8 attention heads}
        \label{fig:attn-mnist-2-8-all-rope}
    \end{subfigure}
    \caption{Linear Mode Connectivity for ViT-RoPE on MNIST with 2 layers}
\end{figure}

\begin{figure}[H]
    \centering
    \begin{subfigure}{0.48\textwidth}
        \centering
        \includegraphics[width=1\textwidth]{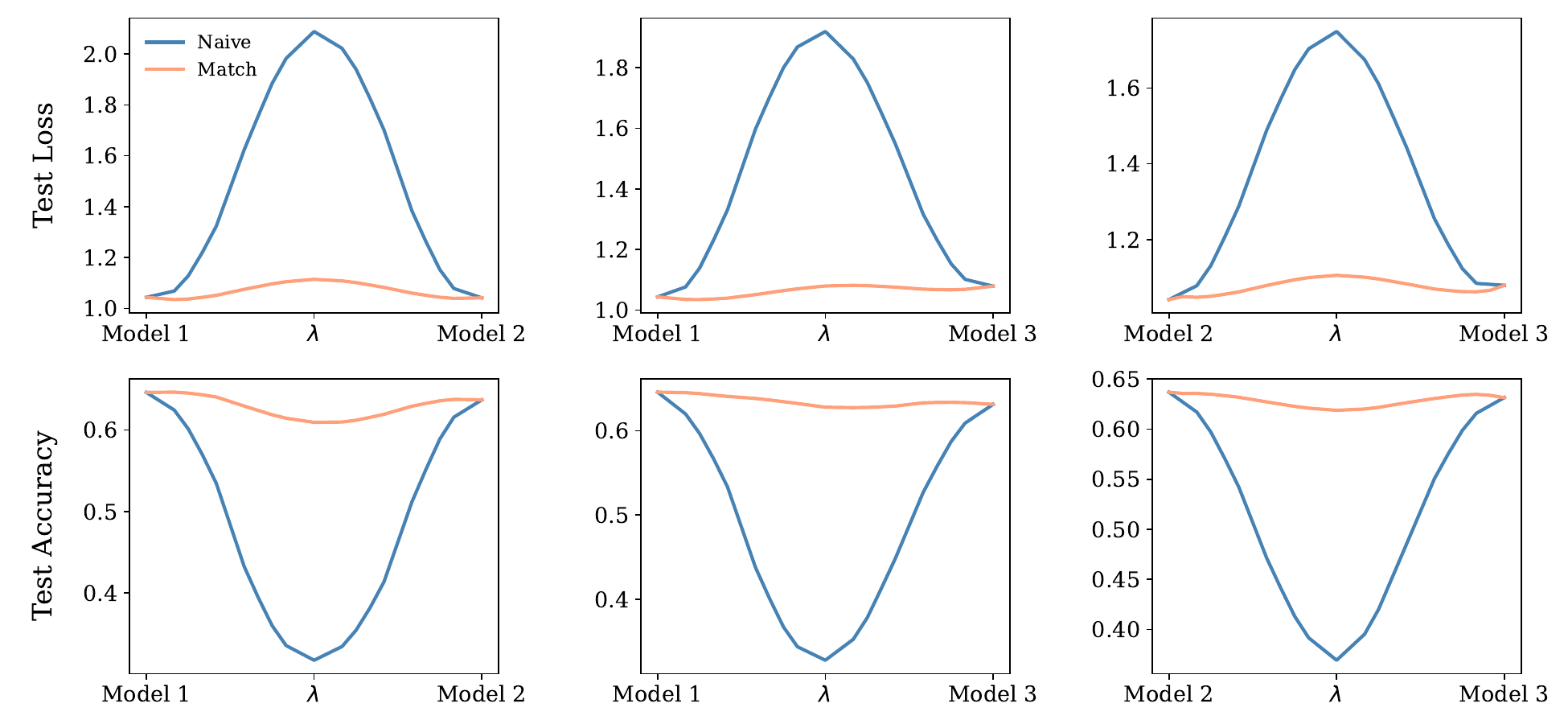} 
        \caption{4 attention heads}
        \label{fig:attn-cifar10-2-4-all-rope}
    \end{subfigure}
    \hfill
    \begin{subfigure}{0.48\textwidth}
        \centering
        \includegraphics[width=1\textwidth]{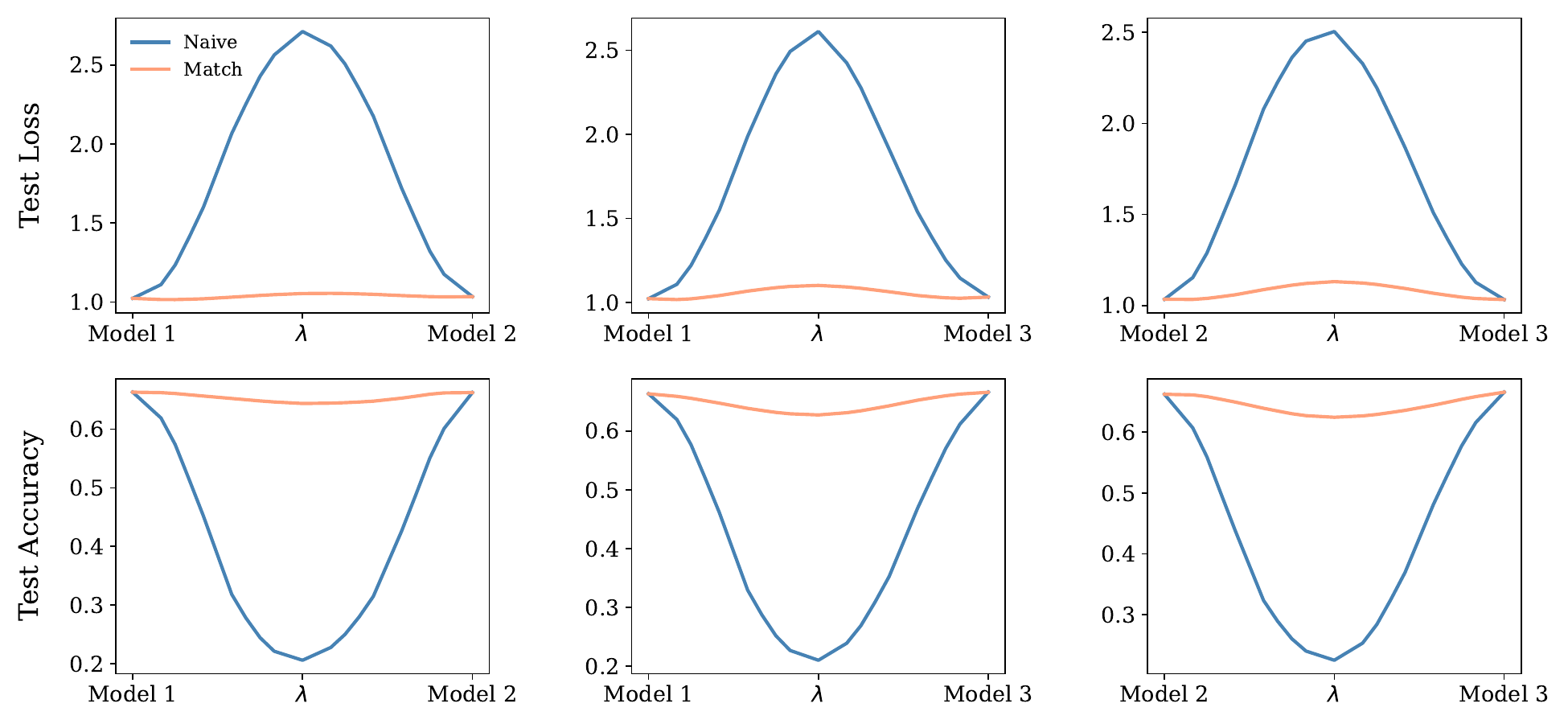} 
        \caption{8 attention heads}
        \label{fig:attn-cifar10-2-8-all-rope}
    \end{subfigure}
    \caption{Linear Mode Connectivity for ViT-RoPE on CIFAR-10 with 2 layers}
\end{figure}

\begin{figure}[H]
    \centering
    \begin{subfigure}{0.48\textwidth}
        \centering
        \includegraphics[width=1\textwidth]{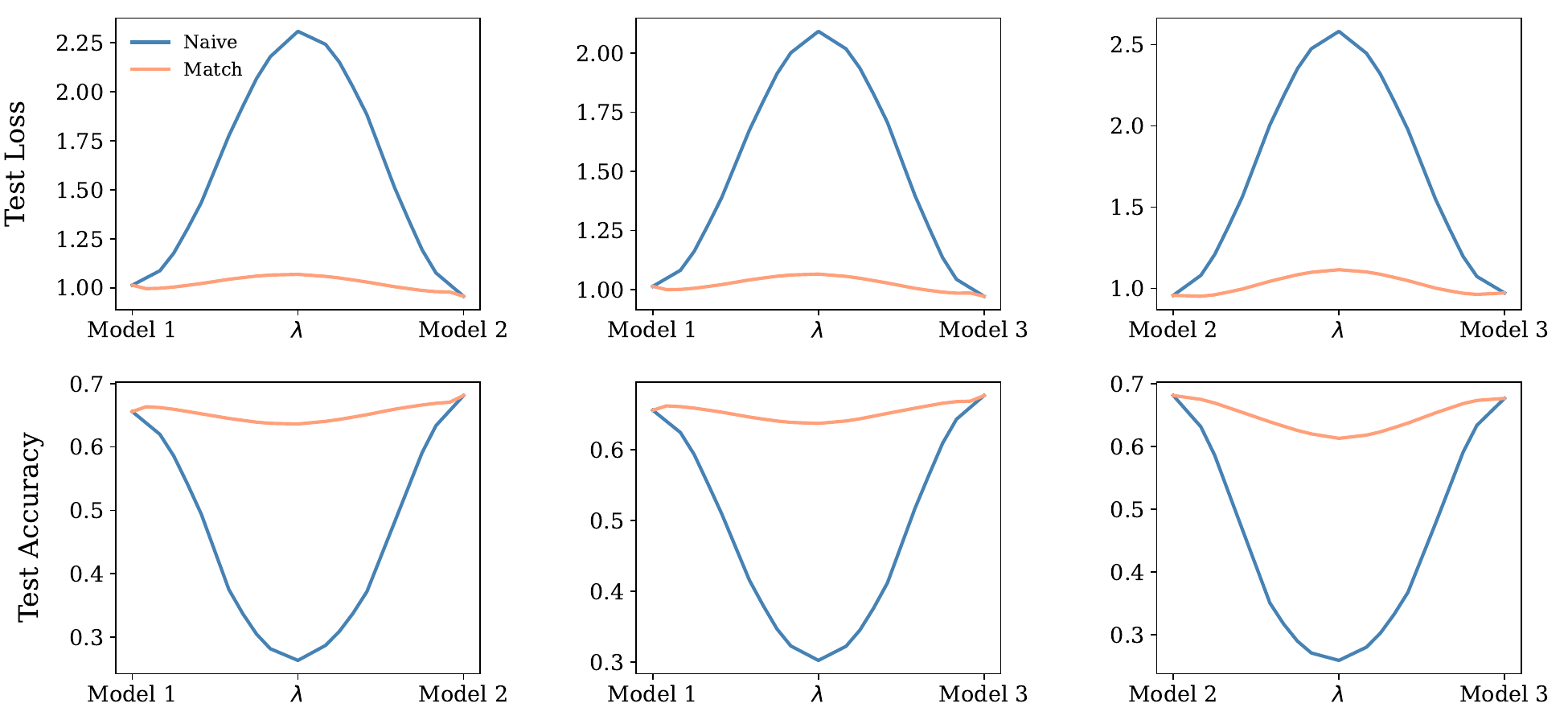} 
        \caption{4 attention heads}
        \label{fig:attn-cifar10-4-4-all-rope}
    \end{subfigure}
    \hfill
    \begin{subfigure}{0.48\textwidth}
        \centering
        \includegraphics[width=1\textwidth]{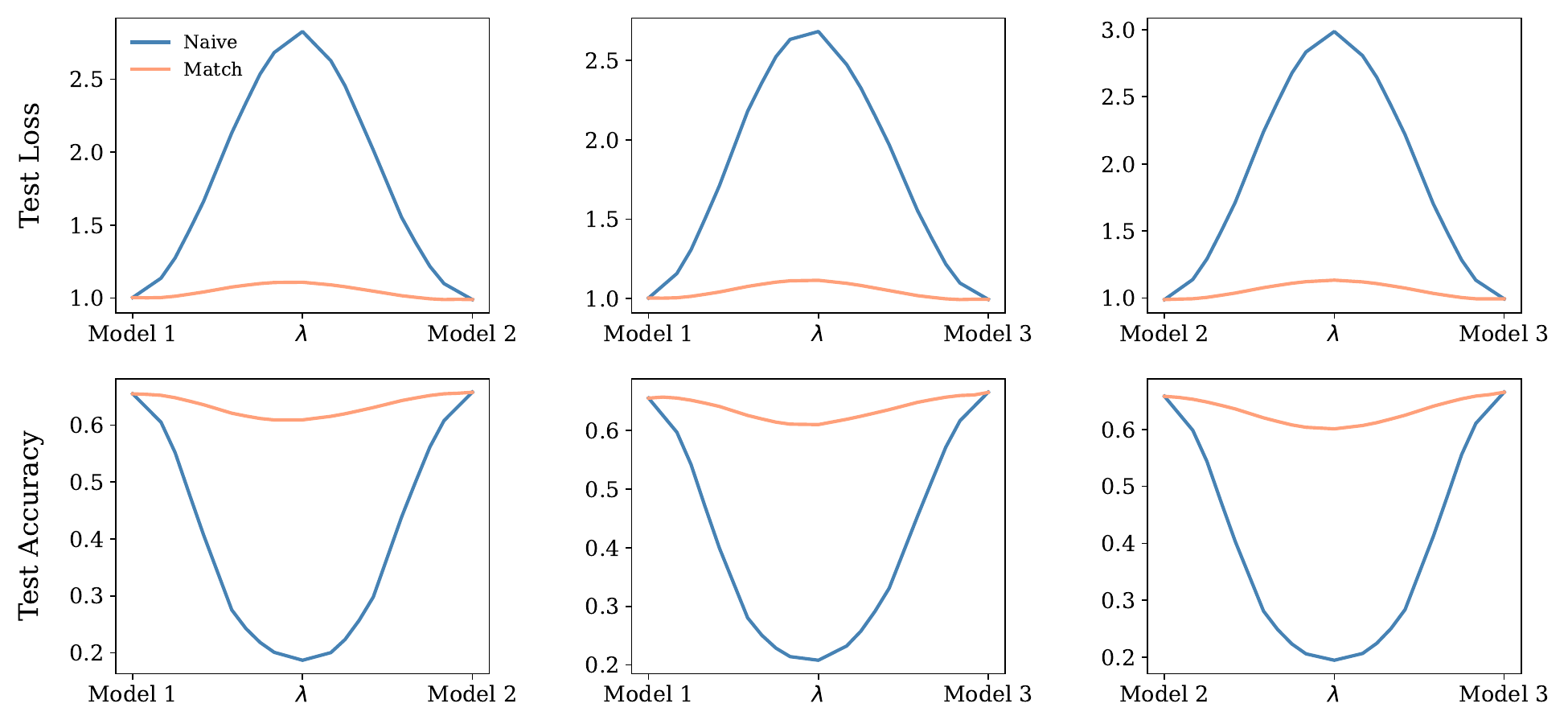} 
        \caption{8 attention heads}
        \label{fig:attn-cifar10-4-8-all-rope}
    \end{subfigure}
    \caption{Linear Mode Connectivity for ViT-RoPE on CIFAR-10 with 4 layers}
\end{figure}

\begin{figure}[H]
    \centering
    \begin{subfigure}{0.48\textwidth}
        \centering
        \includegraphics[width=1\textwidth]{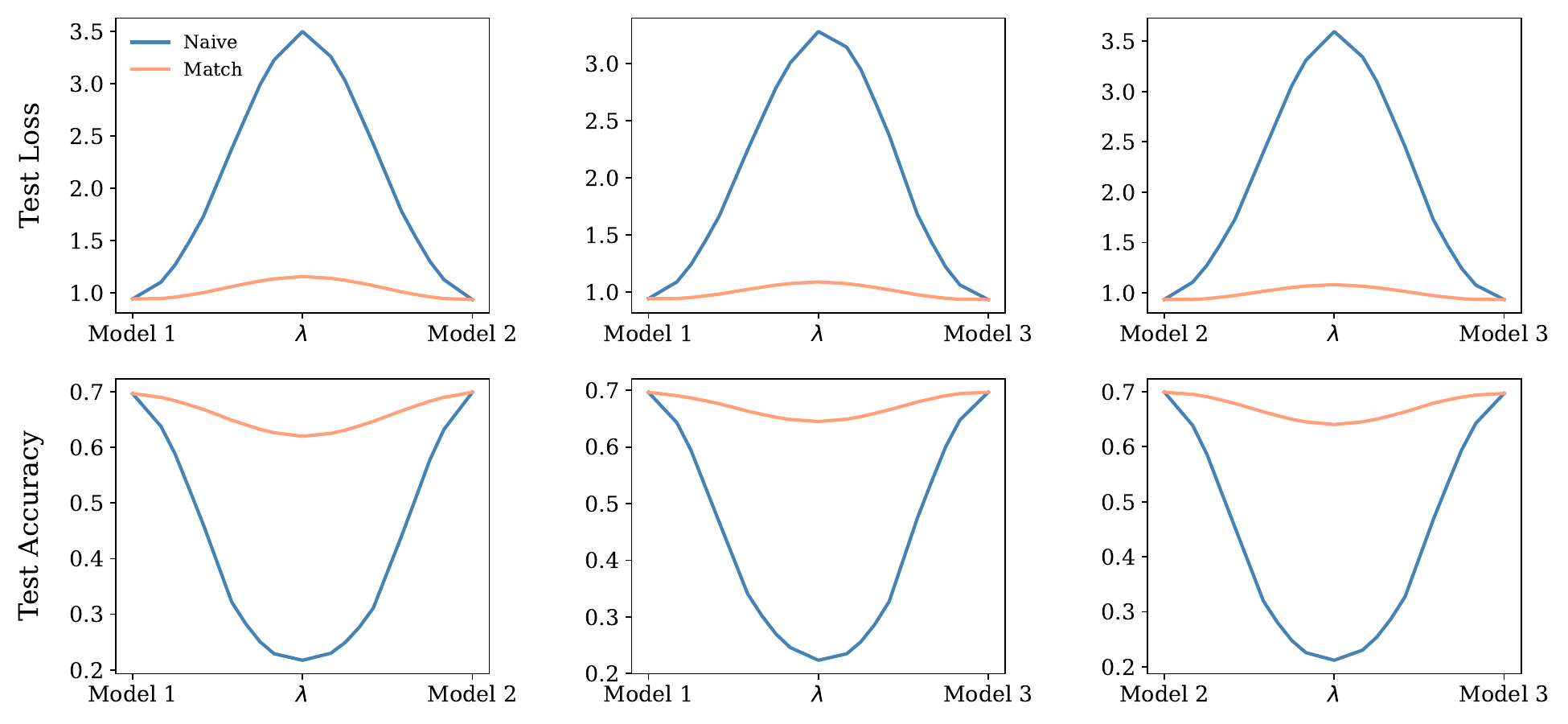} 
        \caption{4 attention heads}
        \label{fig:attn-cifar10-6-4-all-rope}
    \end{subfigure}
    \hfill
    \begin{subfigure}{0.48\textwidth}
        \centering
        \includegraphics[width=1\textwidth]{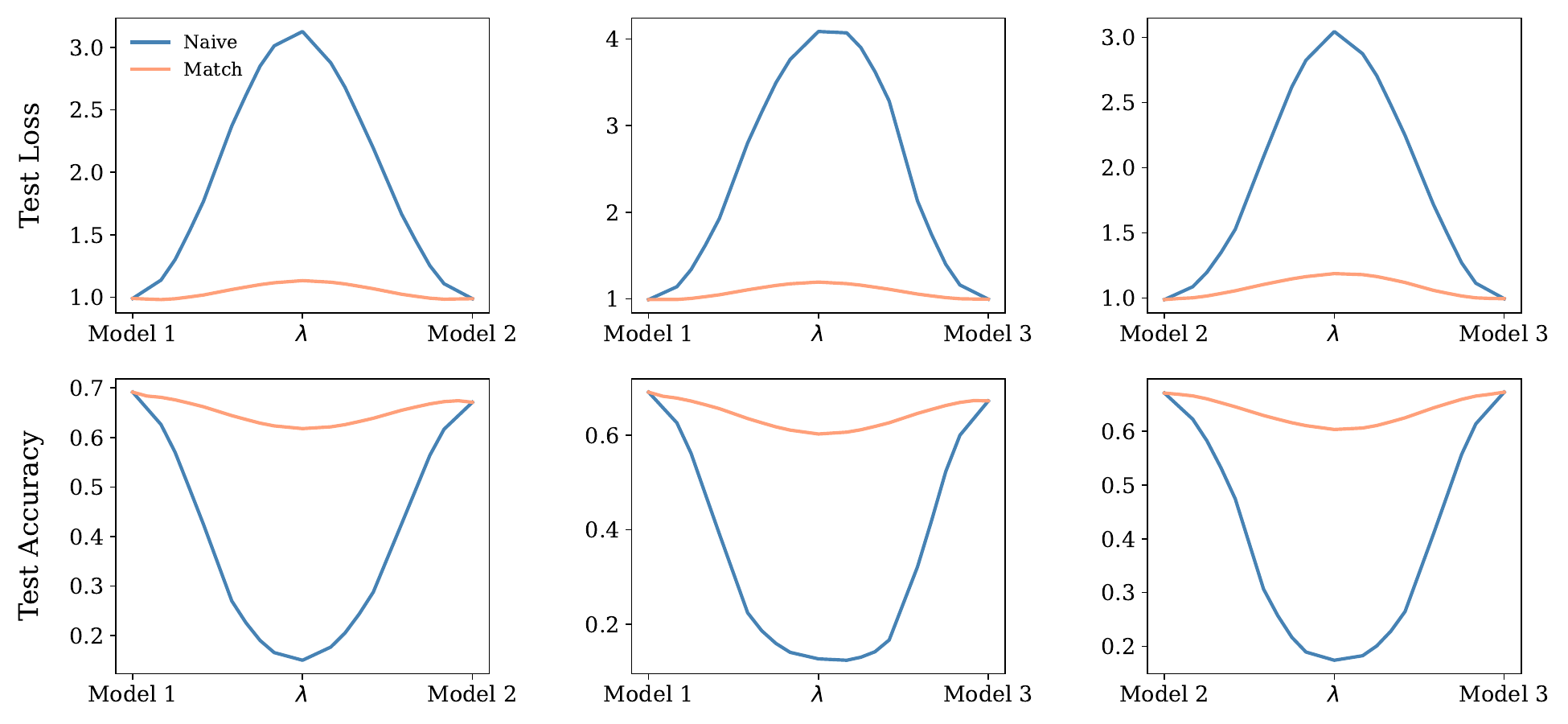} 
        \caption{8 attention heads}
        \label{fig:attn-cifar10-6-8-all-rope}
    \end{subfigure}
    \caption{Linear Mode Connectivity for ViT-RoPE on CIFAR-10 with 6 layers}
\end{figure}

\begin{figure}[H]
    \centering
    \begin{subfigure}{0.48\textwidth}
        \centering
        \includegraphics[width=1\textwidth]{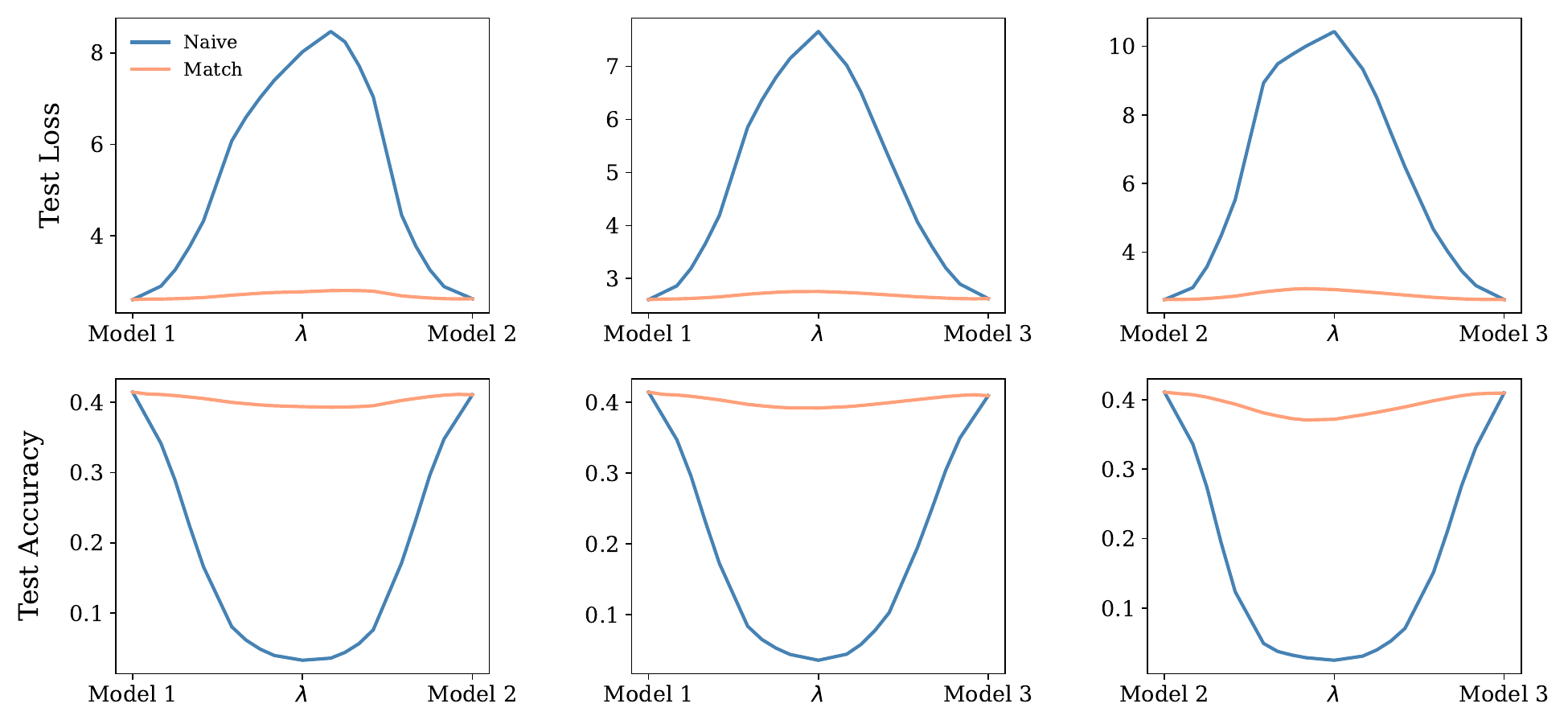} 
        \caption{4 attention heads}
        \label{fig:attn-cifar100-6-4-all-rope}
    \end{subfigure}
    \hfill
    \begin{subfigure}{0.48\textwidth}
        \centering
        \includegraphics[width=1\textwidth]{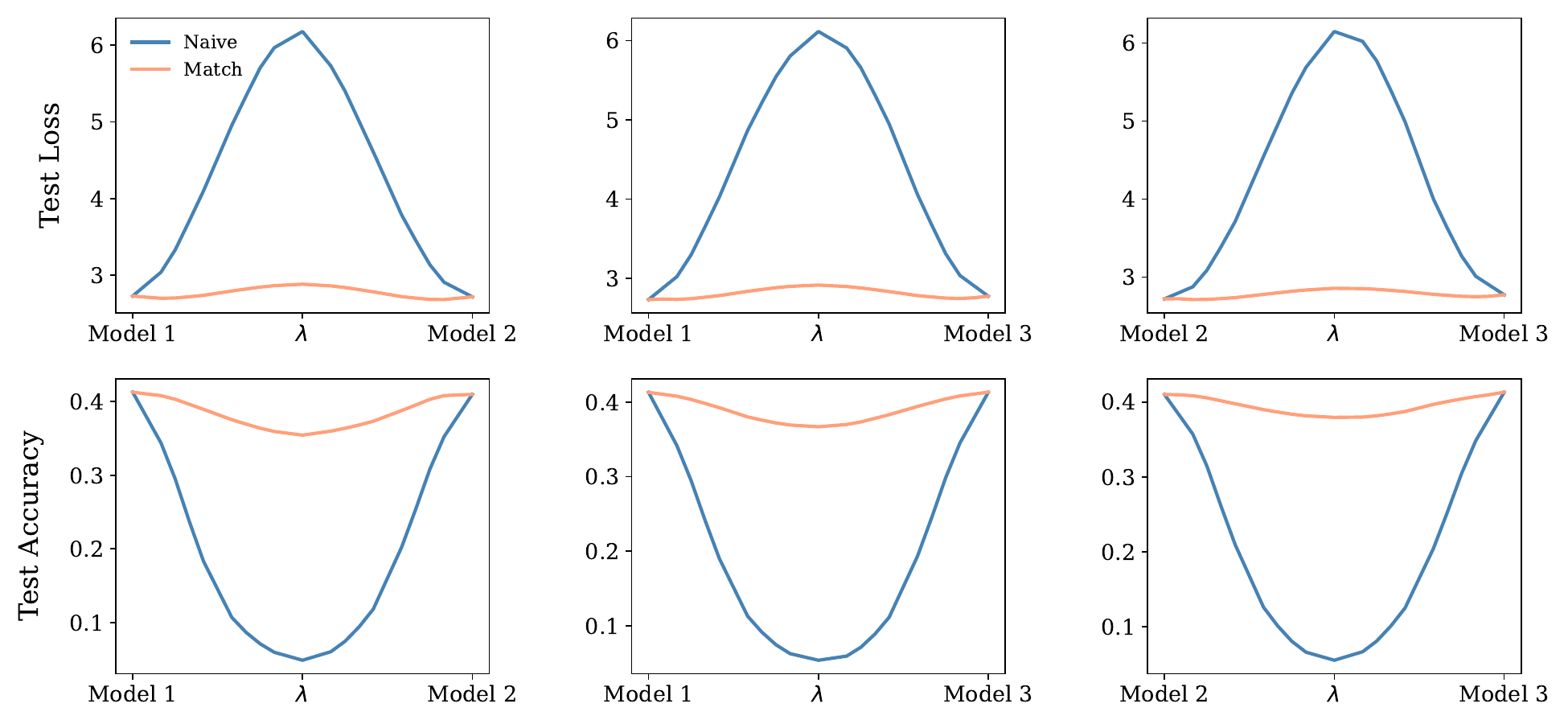} 
        \caption{8 attention heads}
        \label{fig:attn-cifar100-6-8-all-rope}
    \end{subfigure}
    \caption{Linear Mode Connectivity for ViT-RoPE on CIFAR-100 with 6 layers}
\end{figure}

\begin{figure}[H]
    \centering
    \begin{subfigure}{0.48\textwidth}
        \centering
        \includegraphics[width=1\textwidth]{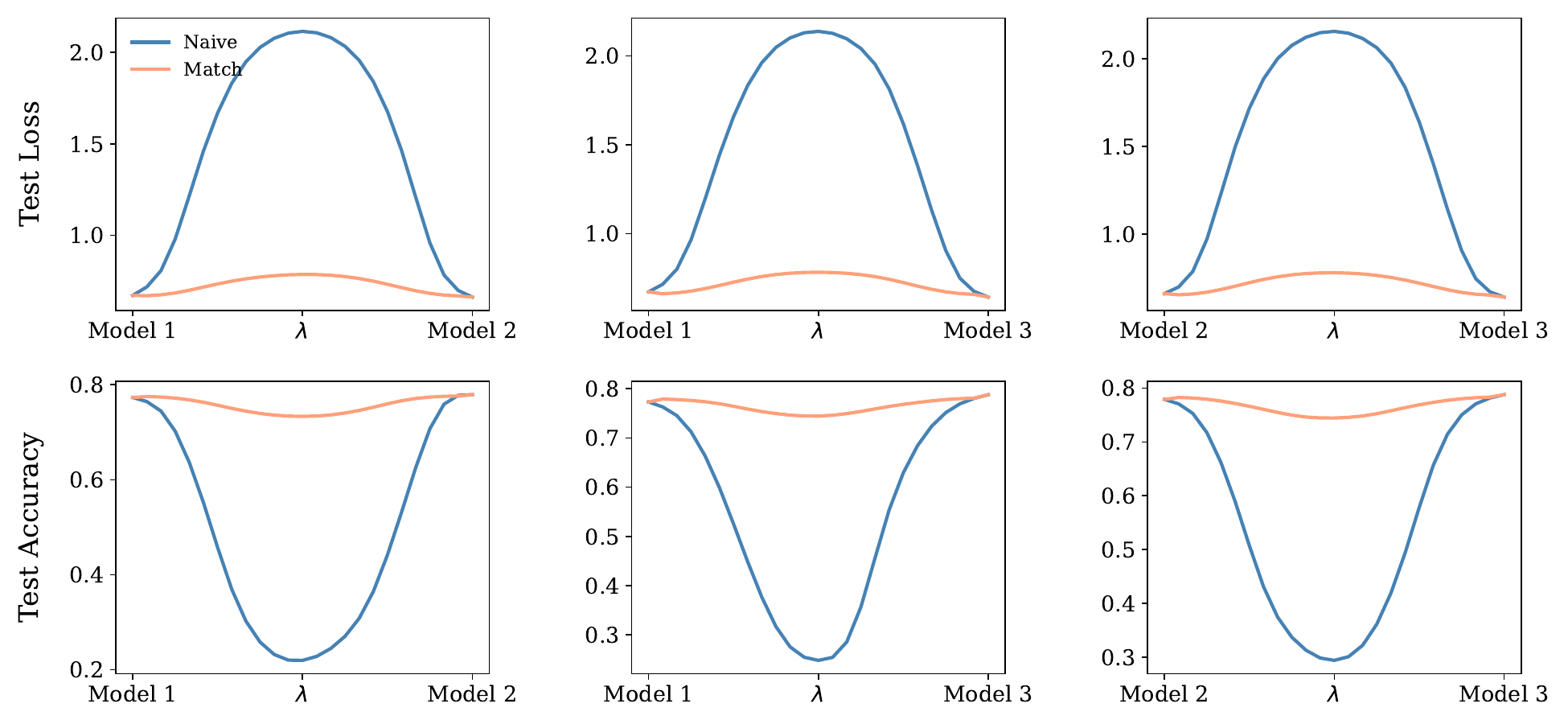} 
        \caption{CIFAR-10}
        \label{fig:attn-imgnetcifar10-12-4-all-rope}
    \end{subfigure}
    \hfill
    \begin{subfigure}{0.48\textwidth}
        \centering
        \includegraphics[width=1\textwidth]{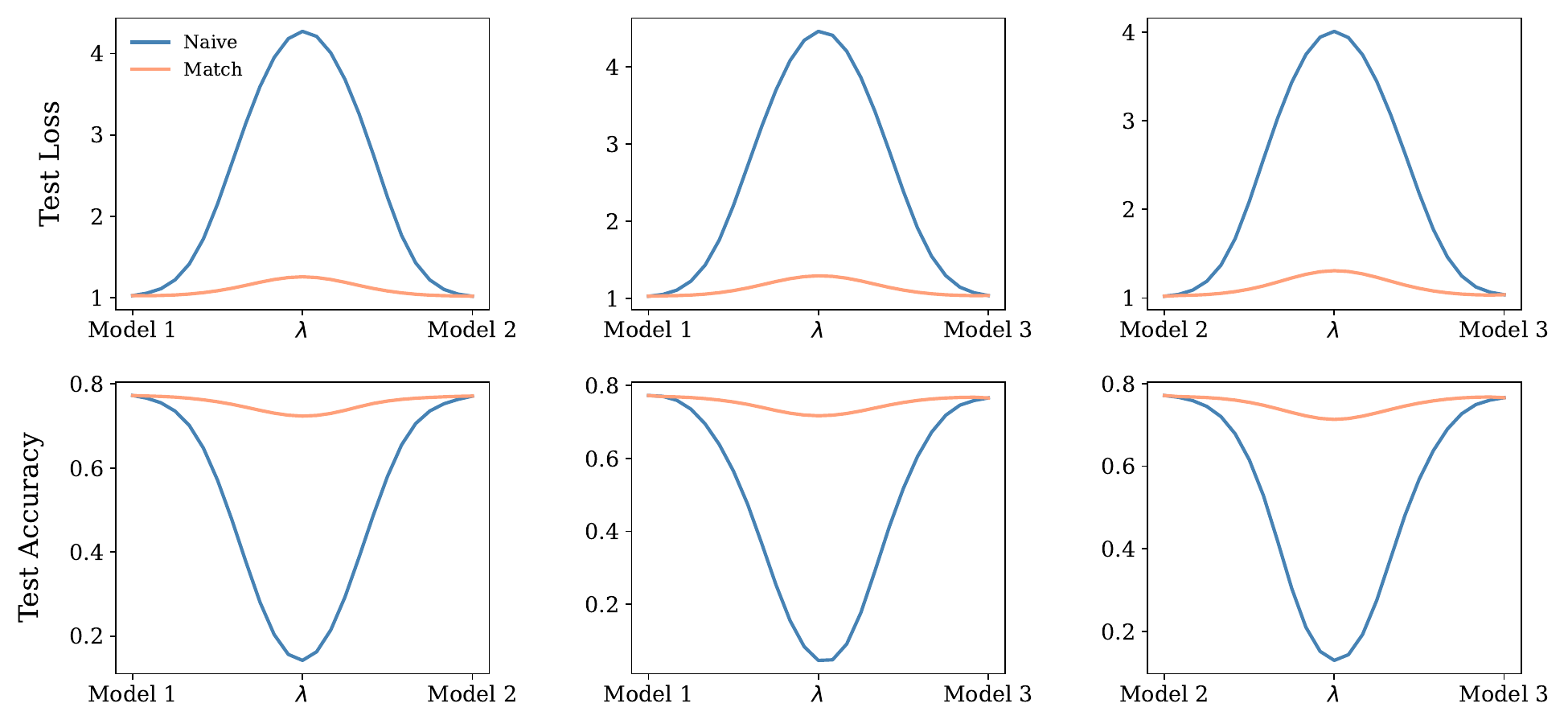} 
        \caption{CIFAR-100}
        \label{fig:attn-imgnetcifar100-12-4-all-rope}
    \end{subfigure}
    \caption{Linear Mode Connectivity for ViT-RoPE on ImageNet21k$\rightarrow$CIFAR-10/100 with 12 layers and 6 heads}
\end{figure}

\begin{figure}[H]
    \centering
    \begin{subfigure}{0.48\textwidth}
        \centering
        \includegraphics[width=1\textwidth]{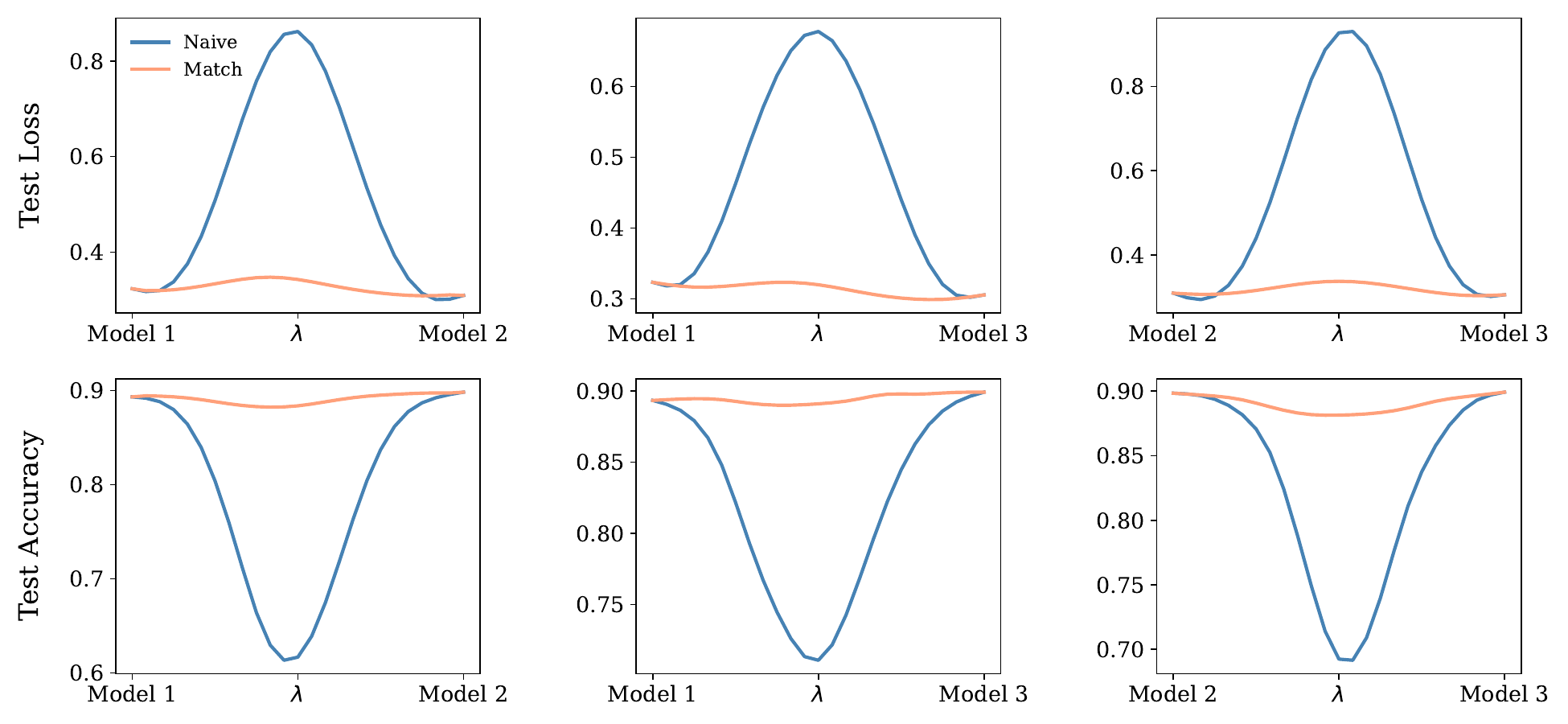} 
        \caption{4 attention heads}
        \label{fig:attn-agnews-2-4-all-rope}
    \end{subfigure}
    \hfill
    \begin{subfigure}{0.48\textwidth}
        \centering
        \includegraphics[width=1\textwidth]{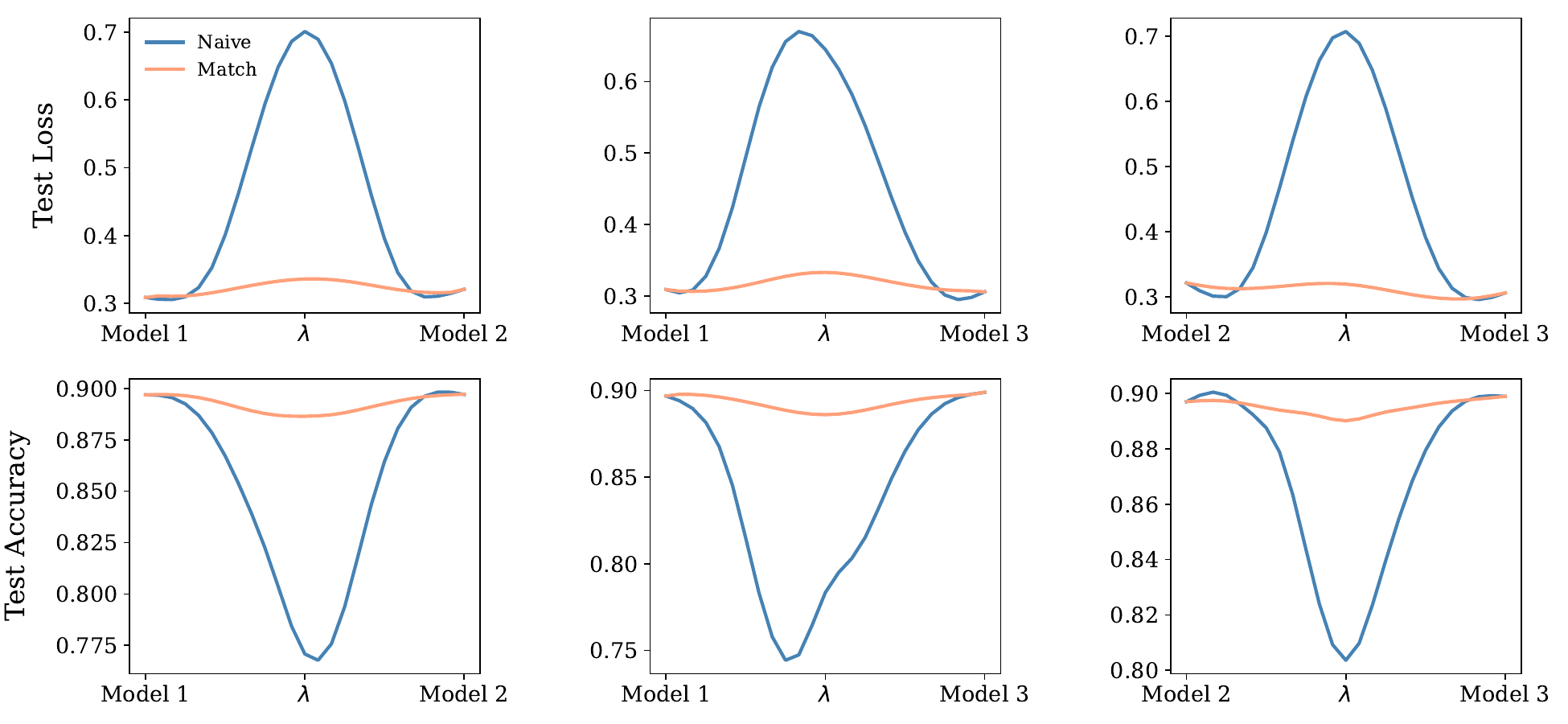} 
        \caption{8 attention heads}
        \label{fig:attn-agnews-2-8-all-rope}
    \end{subfigure}
    \caption{Linear Mode Connectivity for BERT-RoPE on AGnews with 2 layers}
\end{figure}

\begin{figure}[H]
    \centering
    \begin{subfigure}{0.48\textwidth}
        \centering
        \includegraphics[width=1\textwidth]{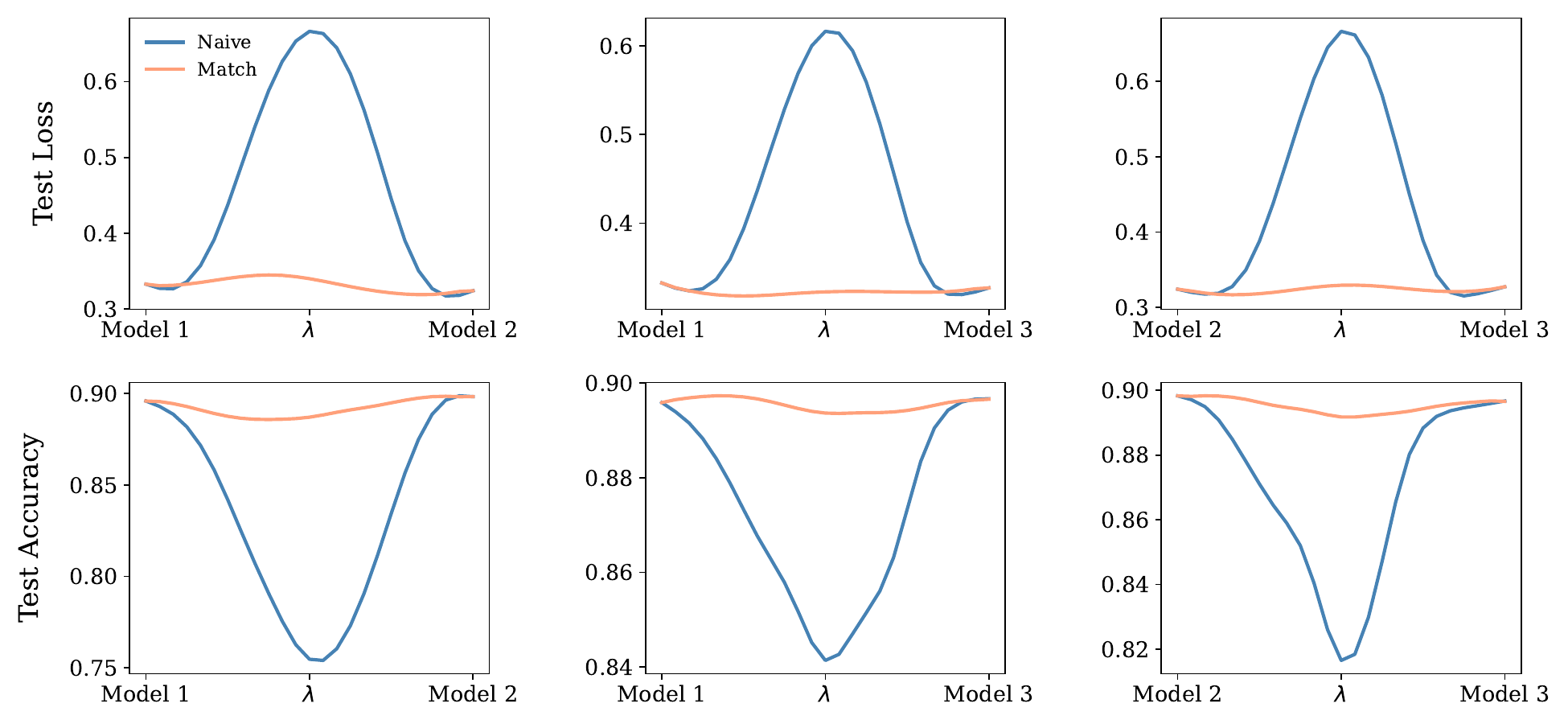} 
        \caption{4 attention heads}
        \label{fig:attn-agnews-6-4-all-rope}
    \end{subfigure}
    \hfill
    \begin{subfigure}{0.48\textwidth}
        \centering
        \includegraphics[width=1\textwidth]{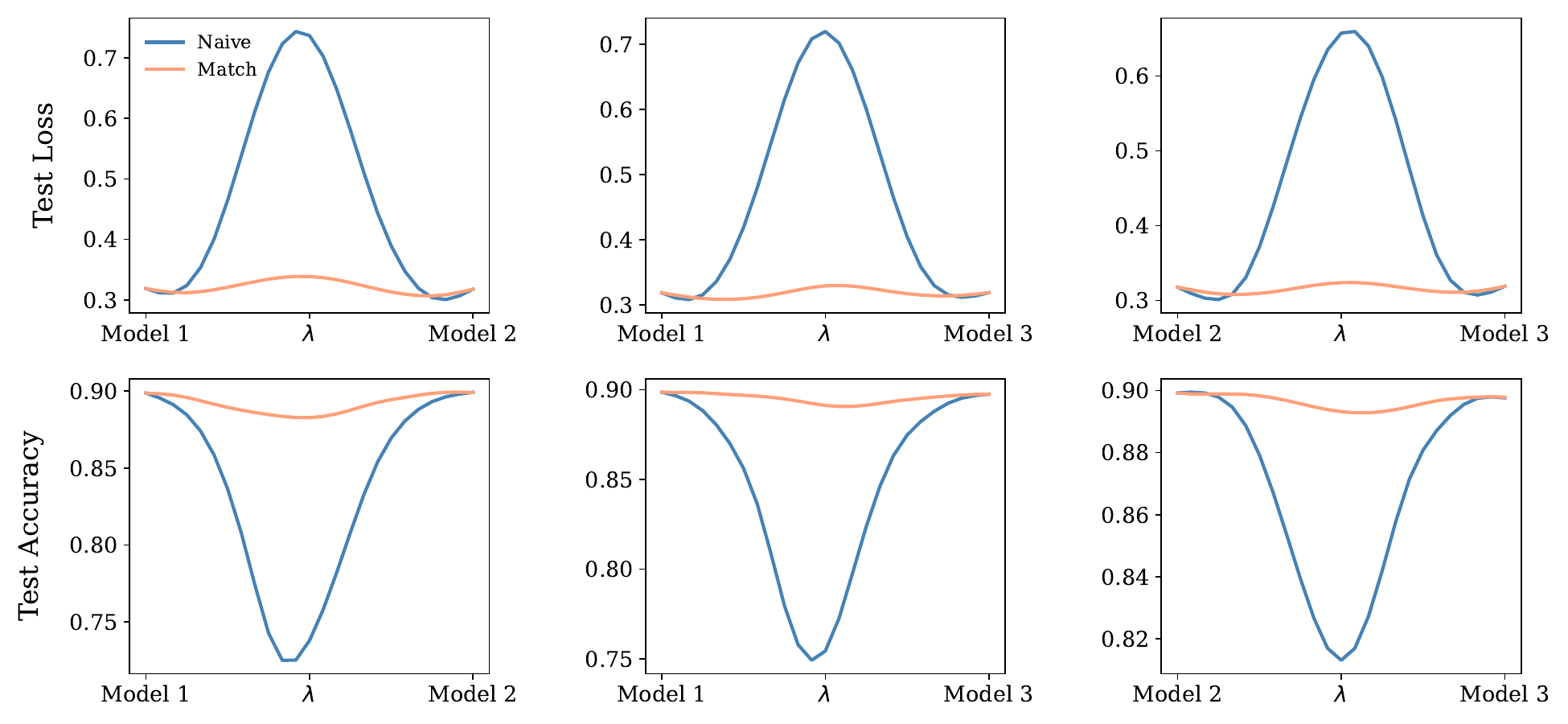} 
        \caption{8 attention heads}
        \label{fig:attn-agnews-6-8-all-rope}
    \end{subfigure}
    \caption{Linear Mode Connectivity for BERT-RoPE on AGnews with 6 layers}
\end{figure}

\begin{figure}[H]
    \centering
    \begin{subfigure}{0.48\textwidth}
        \centering
        \includegraphics[width=1\textwidth]{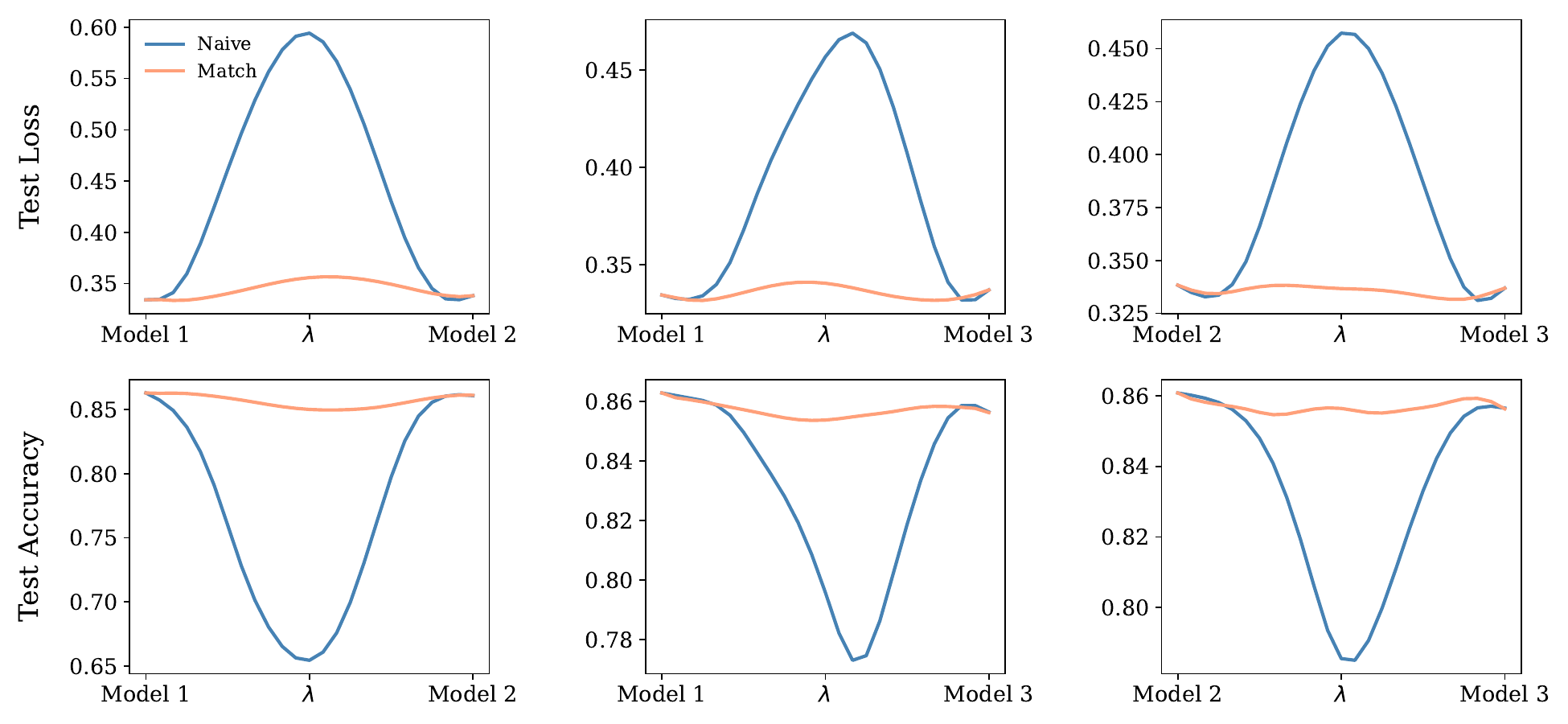} 
        \caption{4 attention heads}
        \label{fig:attn-imdbreview-2-4-all-rope}
    \end{subfigure}
    \hfill
    \begin{subfigure}{0.48\textwidth}
        \centering
        \includegraphics[width=1\textwidth]{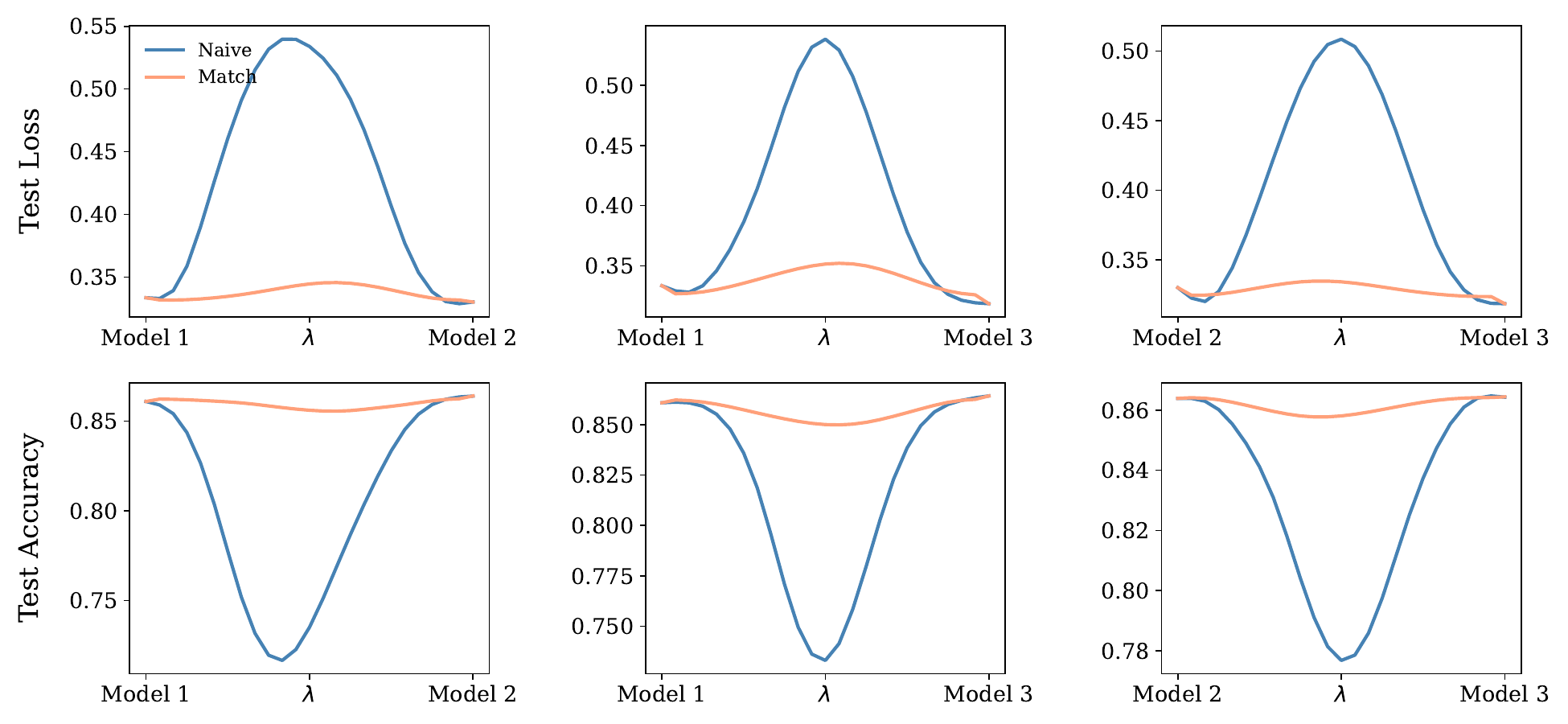} 
        \caption{8 attention heads}
        \label{fig:attn-imdbreview-2-8-all-rope}
    \end{subfigure}
    \caption{Linear Mode Connectivity for BERT-RoPE on IMDBreview with 2 layers}
\end{figure}

\begin{figure}[H]
    \centering
    \begin{subfigure}{0.48\textwidth}
        \centering
        \includegraphics[width=1\textwidth]{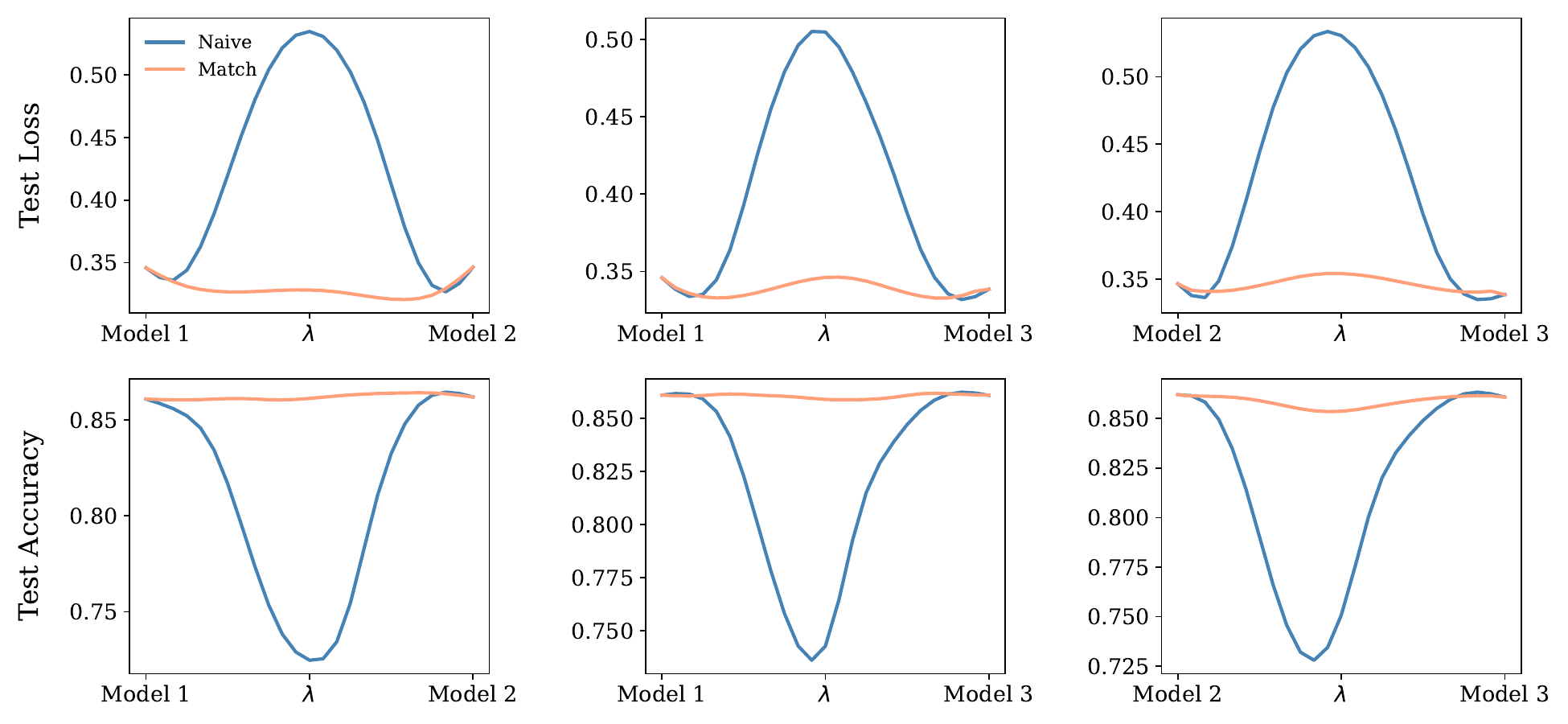} 
        \caption{4 attention heads}
        \label{fig:attn-imdbreview-6-4-all-rope}
    \end{subfigure}
    \hfill
    \begin{subfigure}{0.48\textwidth}
        \centering
        \includegraphics[width=1\textwidth]{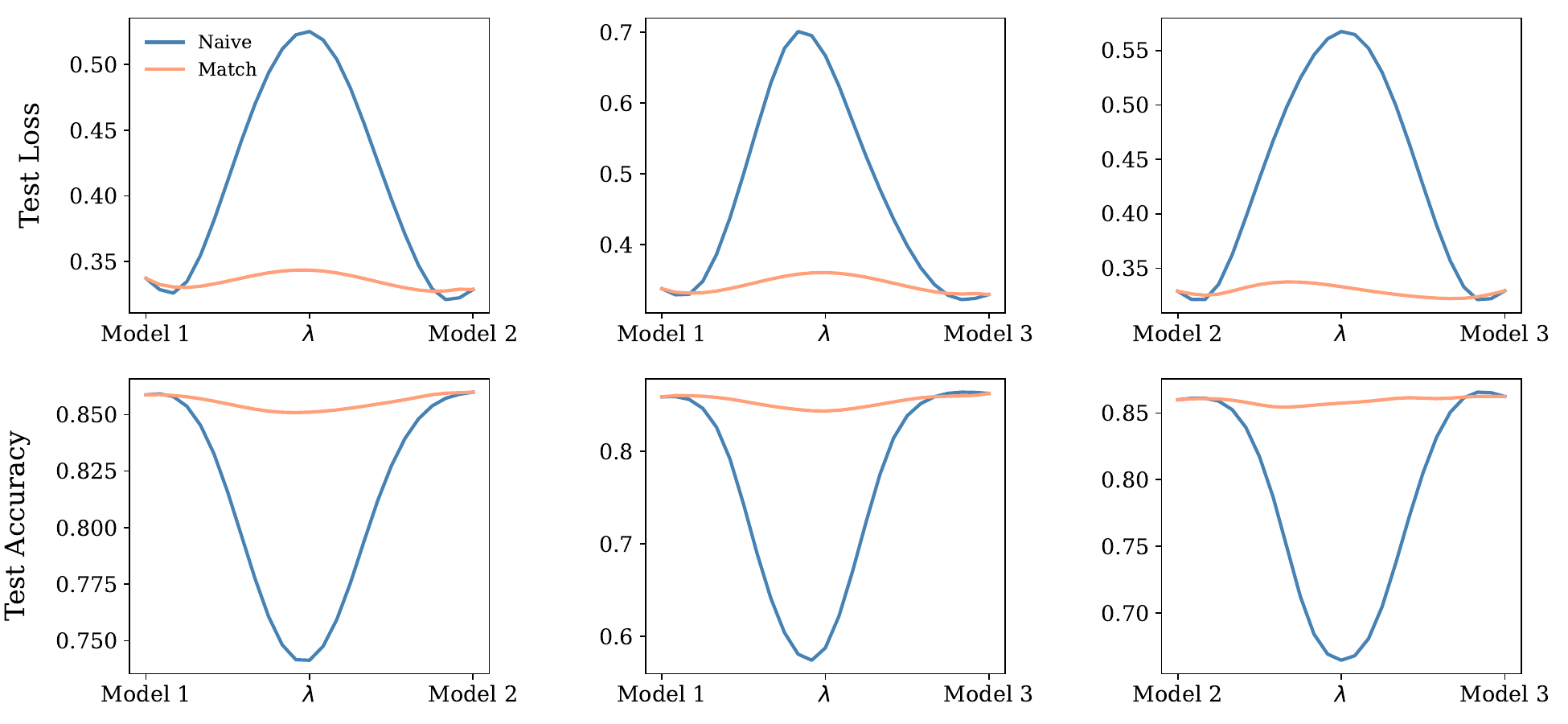} 
        \caption{8 attention heads}
        \label{fig:attn-imdbreview-6-8-all-rope}
    \end{subfigure}
    \caption{Linear Mode Connectivity for BERT-RoPE on IMDBreview with 6 layers}
\end{figure}

\begin{figure}[H]
    \centering
    \begin{subfigure}{0.48\textwidth}
        \centering
        \includegraphics[width=1\textwidth]{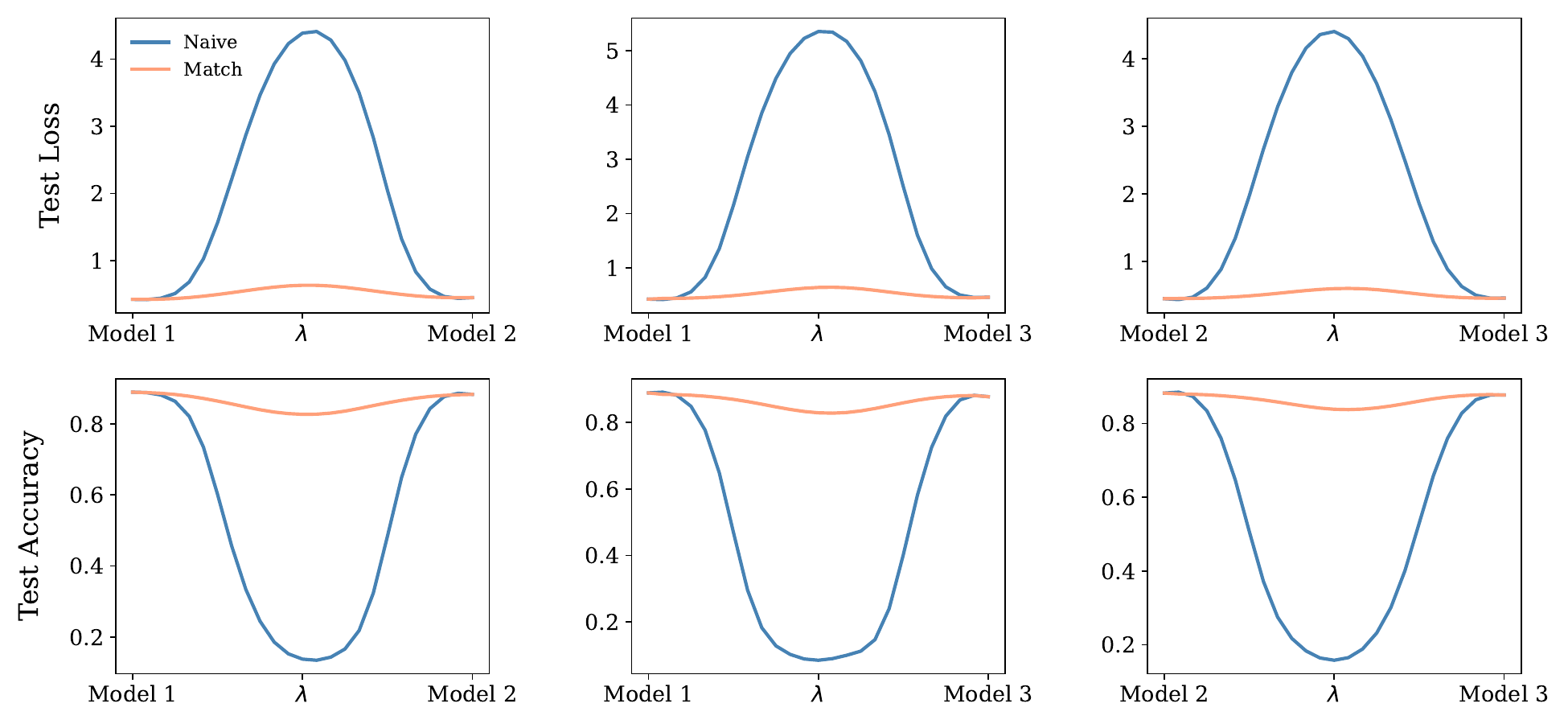} 
        \caption{4 attention heads}
        \label{fig:attn-dbpedia-2-4-all-rope}
    \end{subfigure}
    \hfill
    \begin{subfigure}{0.48\textwidth}
        \centering
        \includegraphics[width=1\textwidth]{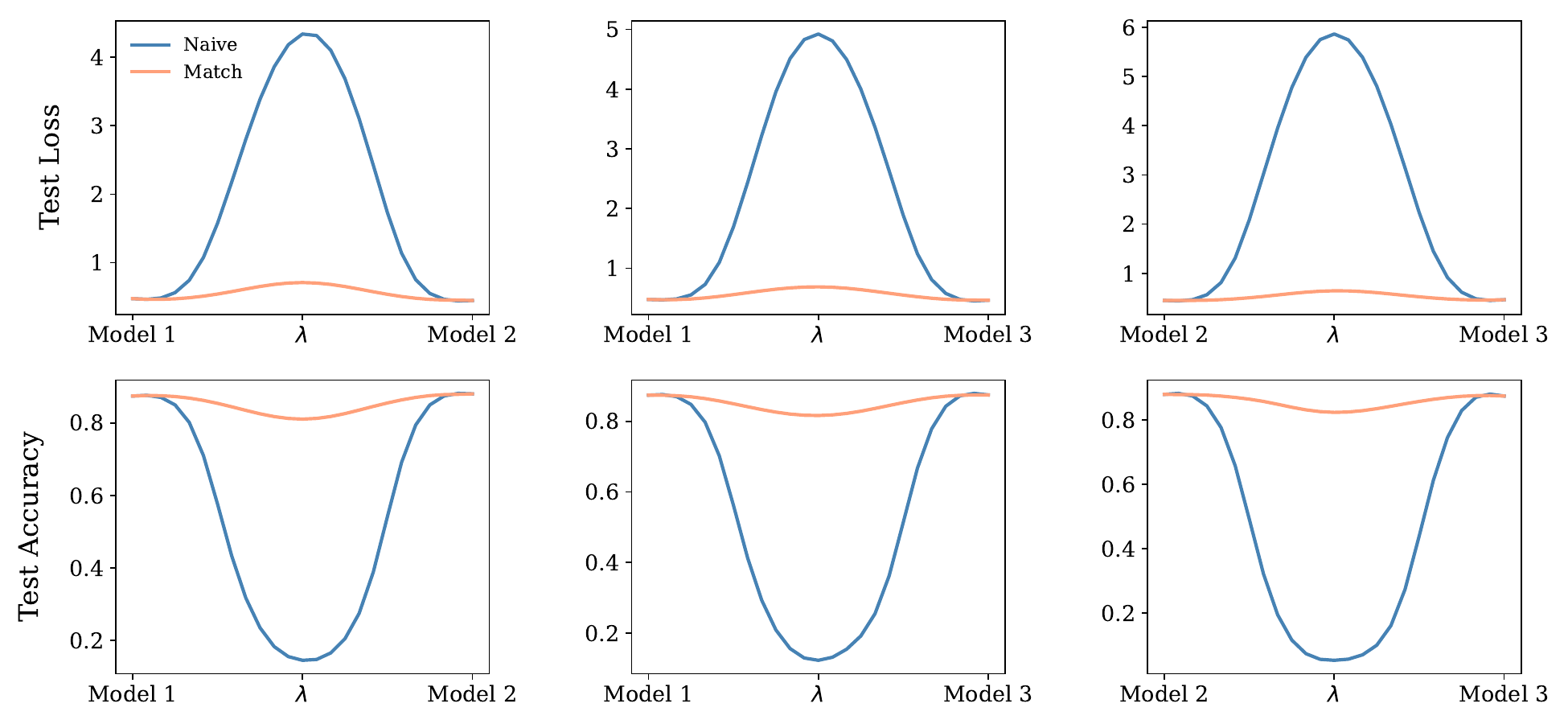} 
        \caption{8 attention heads}
        \label{fig:attn-dbpedia-2-8-all-rope}
    \end{subfigure}
    \caption{Linear Mode Connectivity for BERT-RoPE on DBPedia with 2 layers}
\end{figure}

\begin{figure}[H]
    \centering
    \begin{subfigure}{0.48\textwidth}
        \centering
        \includegraphics[width=1\textwidth]{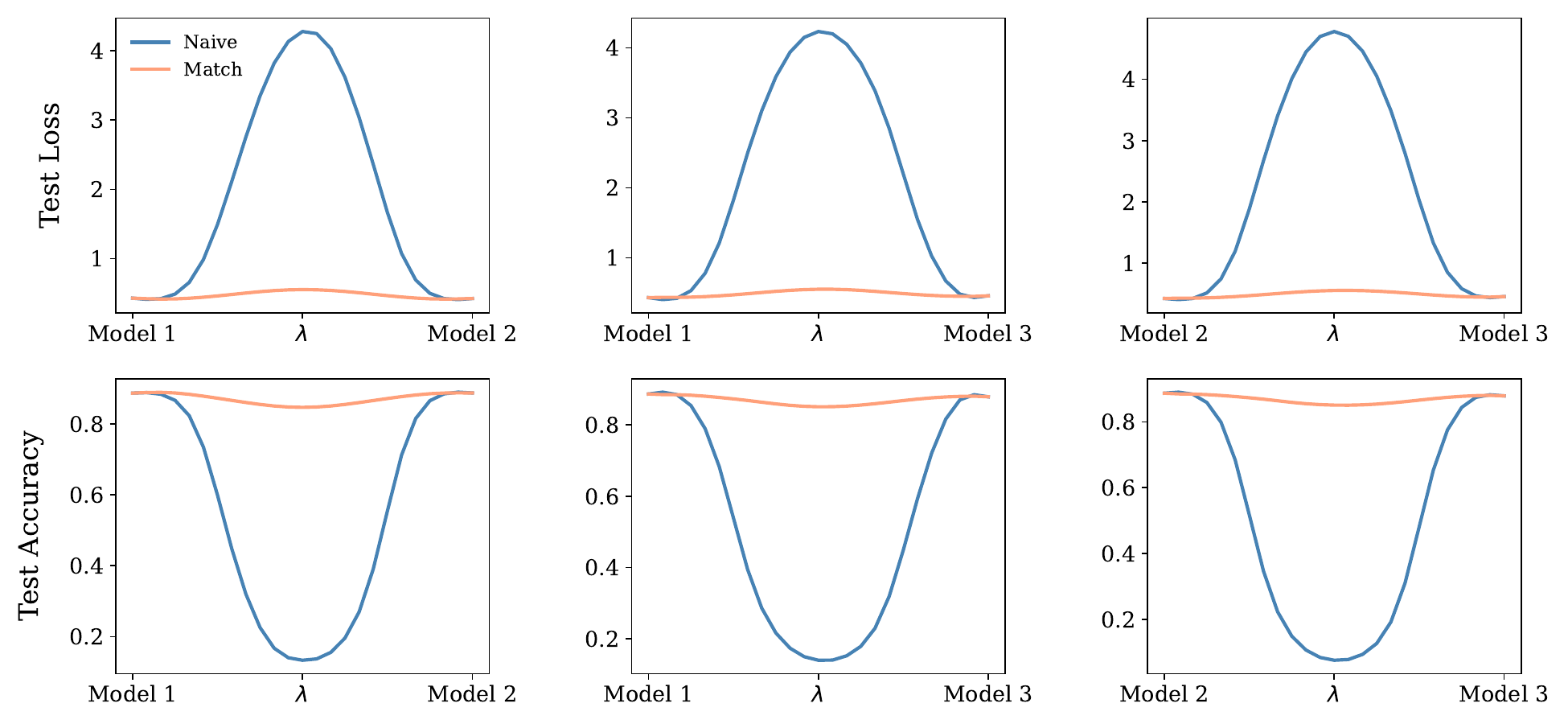} 
        \caption{4 attention heads}
        \label{fig:attn-dbpedia-6-4-all-rope}
    \end{subfigure}
    \hfill
    \begin{subfigure}{0.48\textwidth}
        \centering
        \includegraphics[width=1\textwidth]{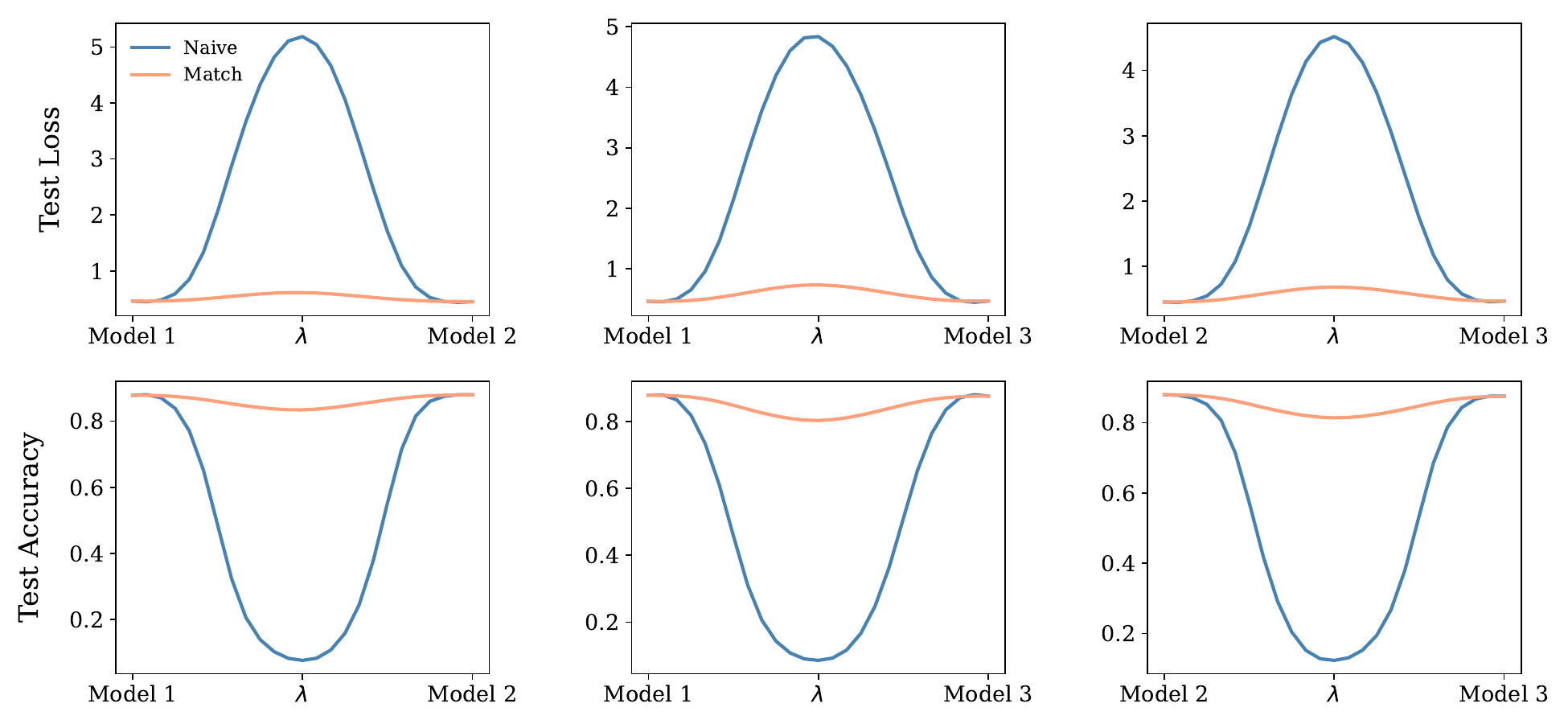} 
        \caption{8 attention heads}
        \label{fig:attn-dbpedia-6-8-all-rope}
    \end{subfigure}
    \caption{Linear Mode Connectivity for BERT-RoPE on DBPedia with 6 layers}
\end{figure}

\begin{figure}[H]
    \centering
    \begin{subfigure}{0.48\textwidth}
        \centering
        \includegraphics[width=1\textwidth]{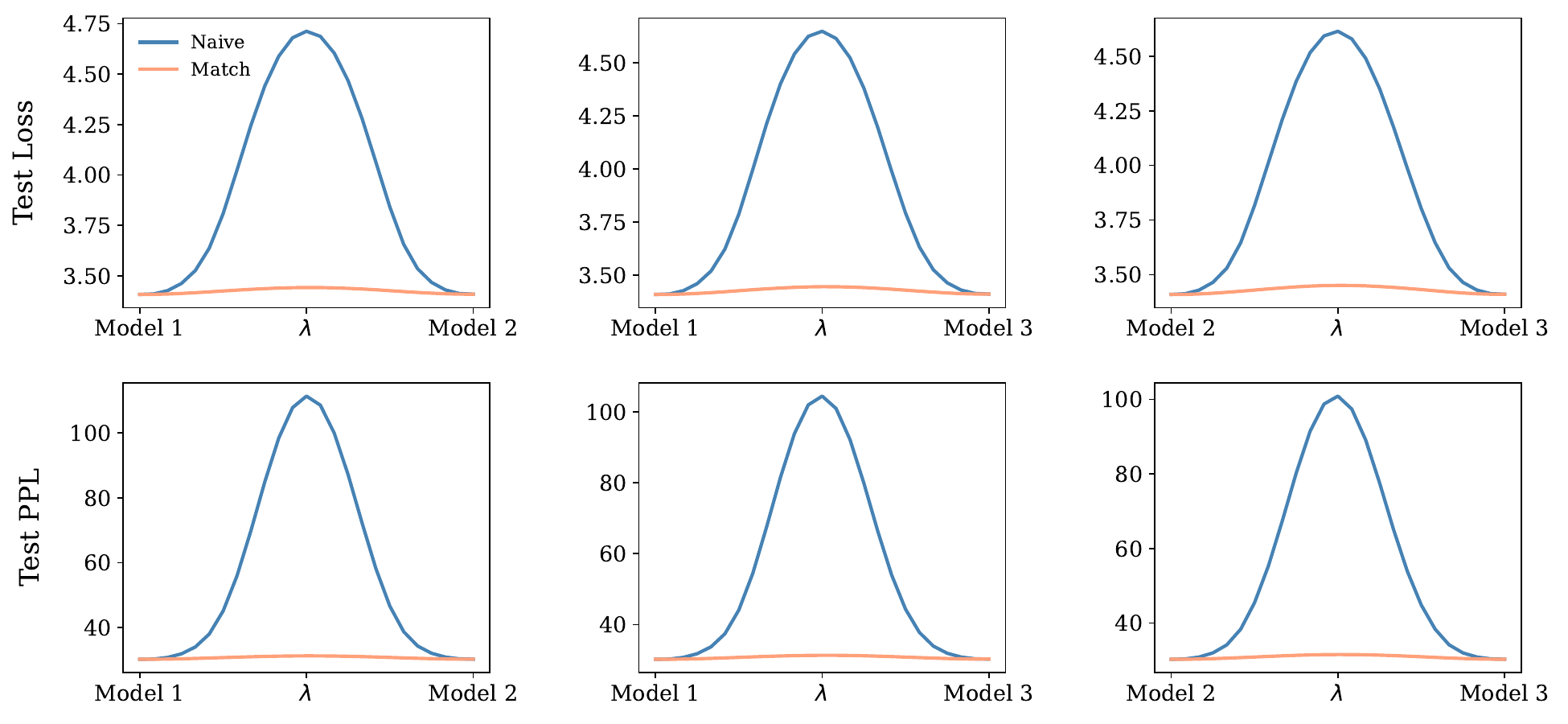} 
        \caption{APE}
        \label{fig:lm1b-all-attn-learnable}
    \end{subfigure}
    \hfill
    \begin{subfigure}{0.48\textwidth}
        \centering
        \includegraphics[width=1\textwidth]{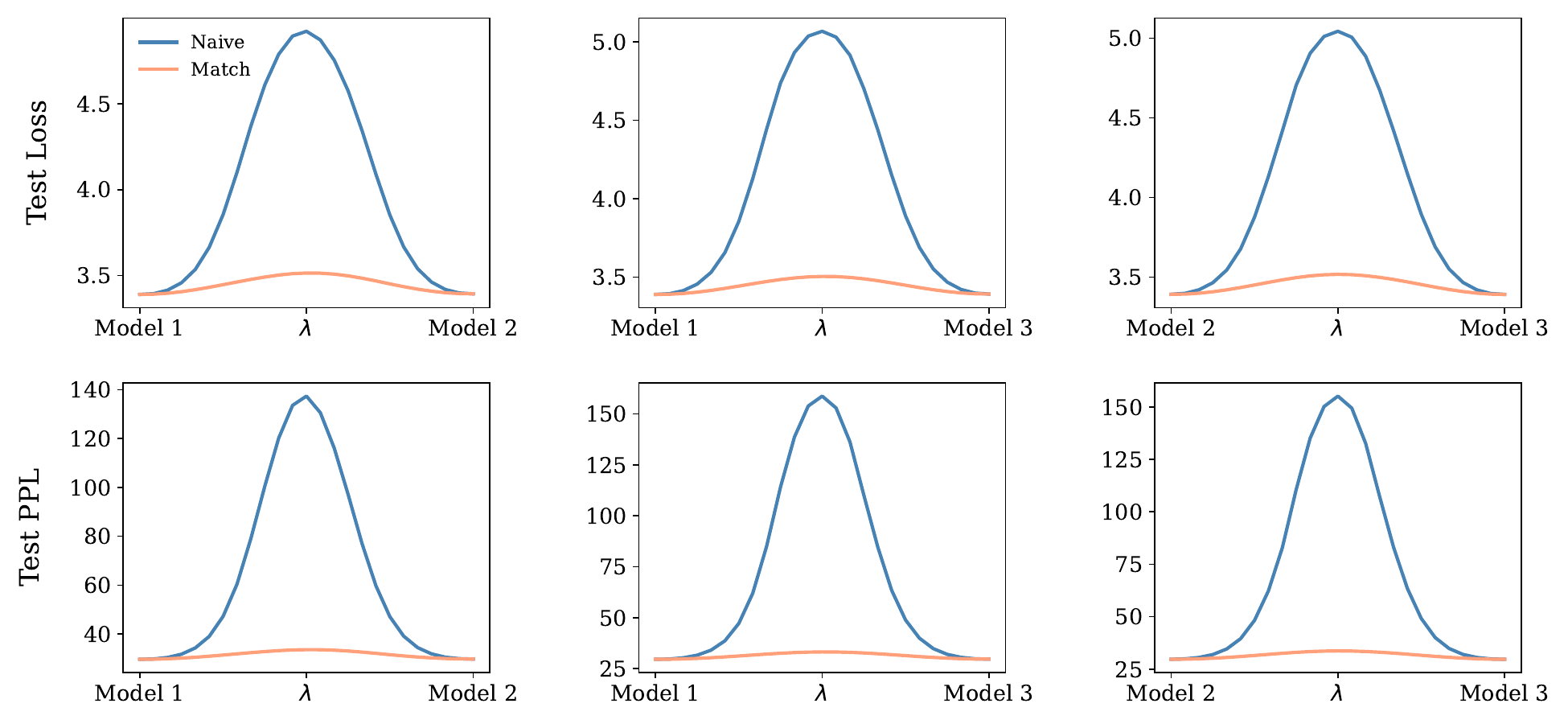} 
        \caption{RoPE}
        \label{fig:lm1b-all-attn-rope}
    \end{subfigure}
    \caption{Linear Mode Connectivity for GPT2 on OneBillionWord with 12 layers}
\end{figure}
\begin{figure}[H]    
    \centering
    \includegraphics[width=0.48\textwidth]{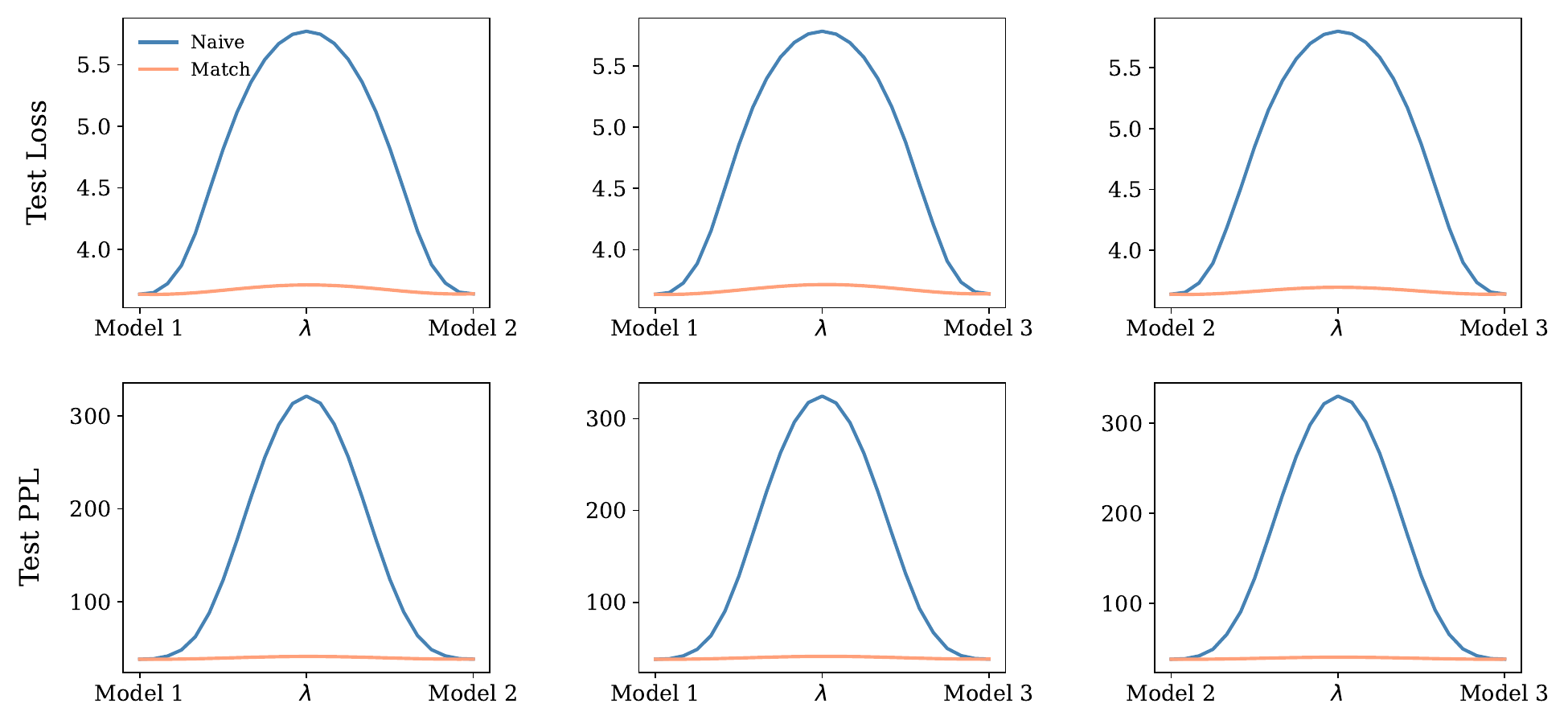} 
    \caption{Linear Mode Connectivity for Llama on Wikitext103 with 12 layers}
    \label{fig:wt103-all-attn-llama}
\end{figure}
\subsection{Linear Mode Connectivity for Transformer First Layer}\label{appendix:experiment_transformer_first_layer}
\begin{table}[H]
\caption{
Experimental setups for LMC under first Transformer layer re-initialization.
The table lists datasets, model depths, and attention head counts, PE type, along with references to figures.
}    \label{tab:first_layer}
    \centering
    \renewcommand*{\arraystretch}{1.3}
\begin{adjustbox}{width=0.9 \textwidth}
\begin{tabular}{lclll lclll}
    \toprule
    Dataset & Layers & Heads & APE & RoPE 
    & Dataset & Layers & Heads & APE & RoPE \\
    \cmidrule(r){1-5} \cmidrule(r){6-10}
    CIFAR-10 & 6 & [8] & [\ref{fig:cifar10-transf}] & [\ref{fig:cifar10-transf-rope}] &
    AGNews & 6 & [8] & [\ref{fig:agnews-transf}] & [\ref{fig:agnews-transf-rope}] \\
    CIFAR-100 & 6 & [8] & [\ref{fig:cifar100-transf}] & [\ref{fig:cifar100-transf-rope}] 
    &DBPedia  & 6 & [8] & [\ref{fig:dbpedia-transf}] & [\ref{fig:dbpedia-transf-rope}]  \\
    ImageNet-1k & 12 & [12] &[\ref{fig:learnable-imagenet-12-12-firstlayer}]  &[\ref{fig:rope-imagenet-12-12-firstlayer}]&
    Wikitext103 (GPT2) & 12 & [12] &[\ref{fig:learnable-wt103-12-3-firstlayer}] &[\ref{fig:rope-wt103-12-3-firstlayer}] \\
    Enwik8 (GPT2) & 12 & [12] &[\ref{fig:learnable-enwik8-12-3-firstlayer}]  &[\ref{fig:rope-enwik8-12-3-firstlayer}]&
    OneBillionWord (GPT2) & 12 & [12] &[\ref{fig:learnable-lm1b-12-8-firstlayer}] &[\ref{fig:rope-lm1b-12-8-firstlayer}] \\
    \bottomrule
\end{tabular}
\end{adjustbox}
\end{table}

\begin{figure}[H]
    \centering
    \begin{subfigure}{0.48\textwidth}
        \centering
        \includegraphics[width=\linewidth]{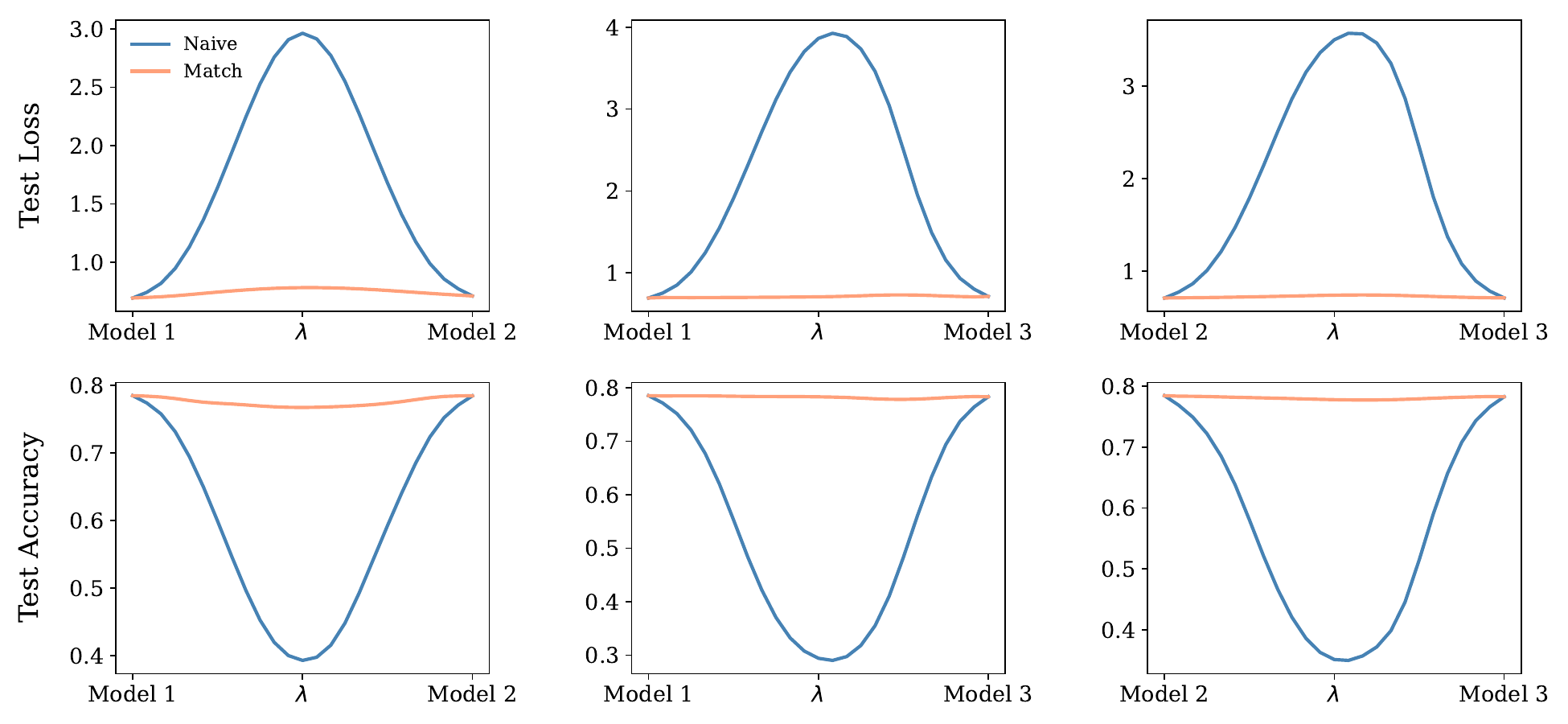}
        \caption{APE}
        \label{fig:cifar10-transf}
    \end{subfigure}
    \hfill
    \begin{subfigure}{0.48\textwidth}
        \centering
        \includegraphics[width=\linewidth]{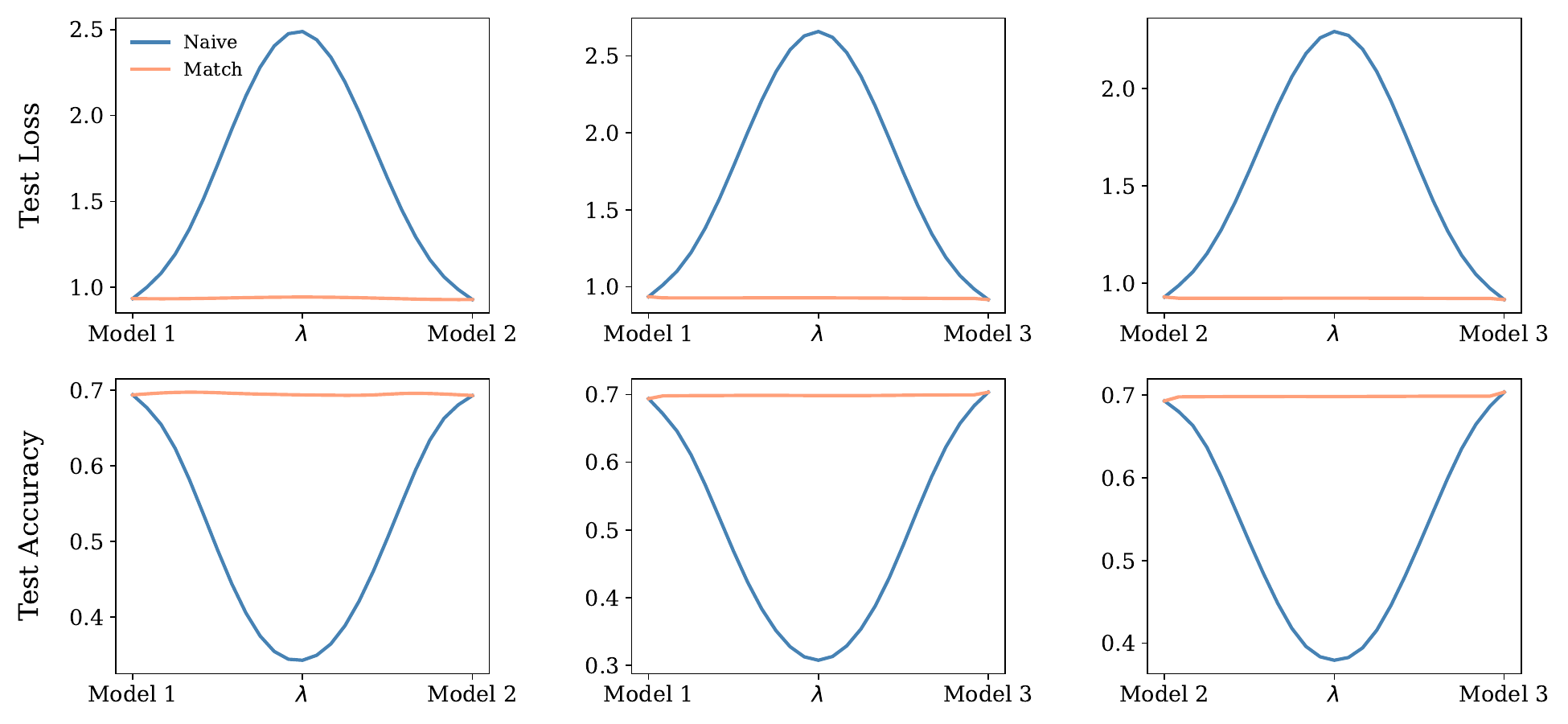}
        \caption{RoPE}
        \label{fig:cifar10-transf-rope}
    \end{subfigure}
    \caption{Linear Mode Connectivity for ViT with APE and RoPE on CIFAR-10 with 6 layers and 8 heads}
\end{figure}

\begin{figure}[H]
    \centering
    \begin{subfigure}{0.48\textwidth}
        \centering
        \includegraphics[width=\linewidth]{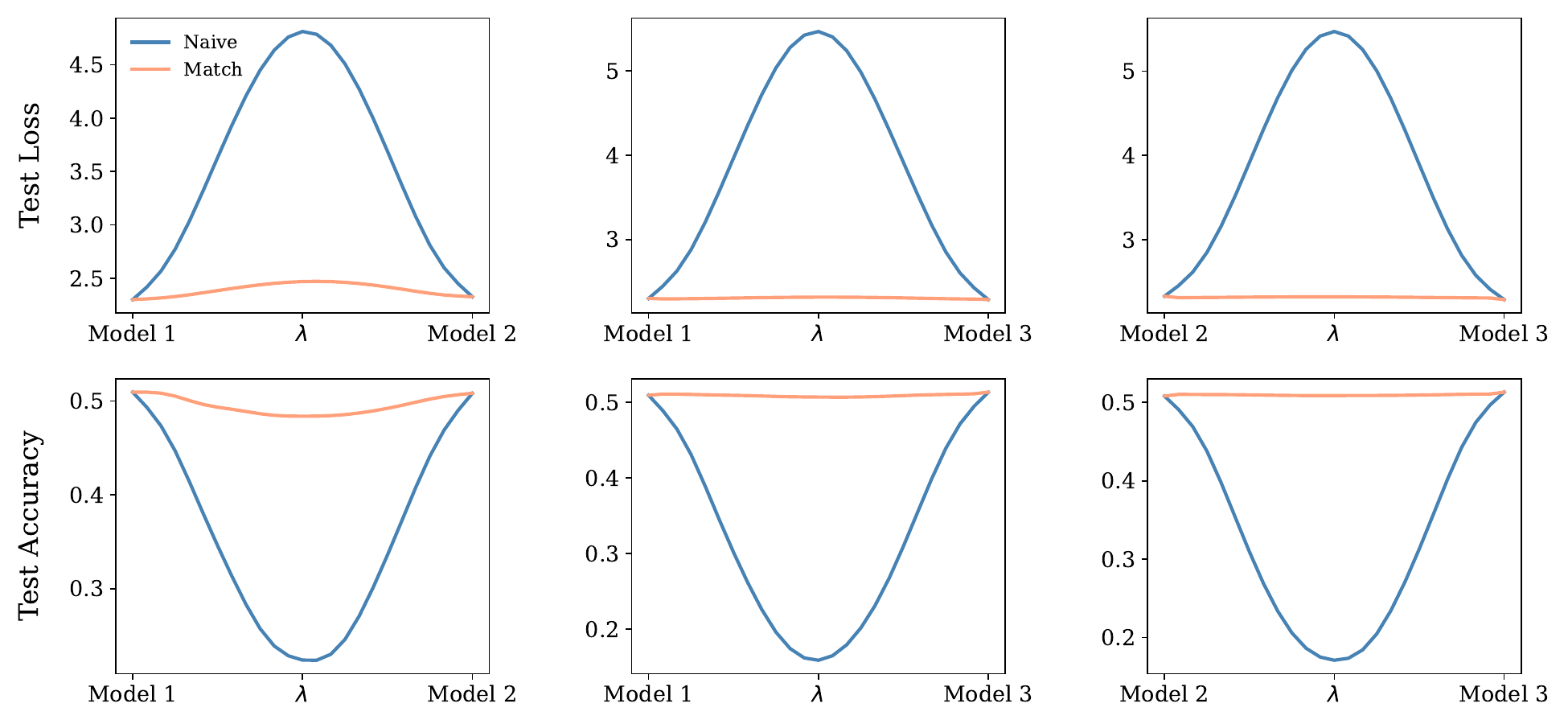}
        \caption{APE}
        \label{fig:cifar100-transf}
    \end{subfigure}
    \hfill
    \begin{subfigure}{0.48\textwidth}
        \centering
        \includegraphics[width=\linewidth]{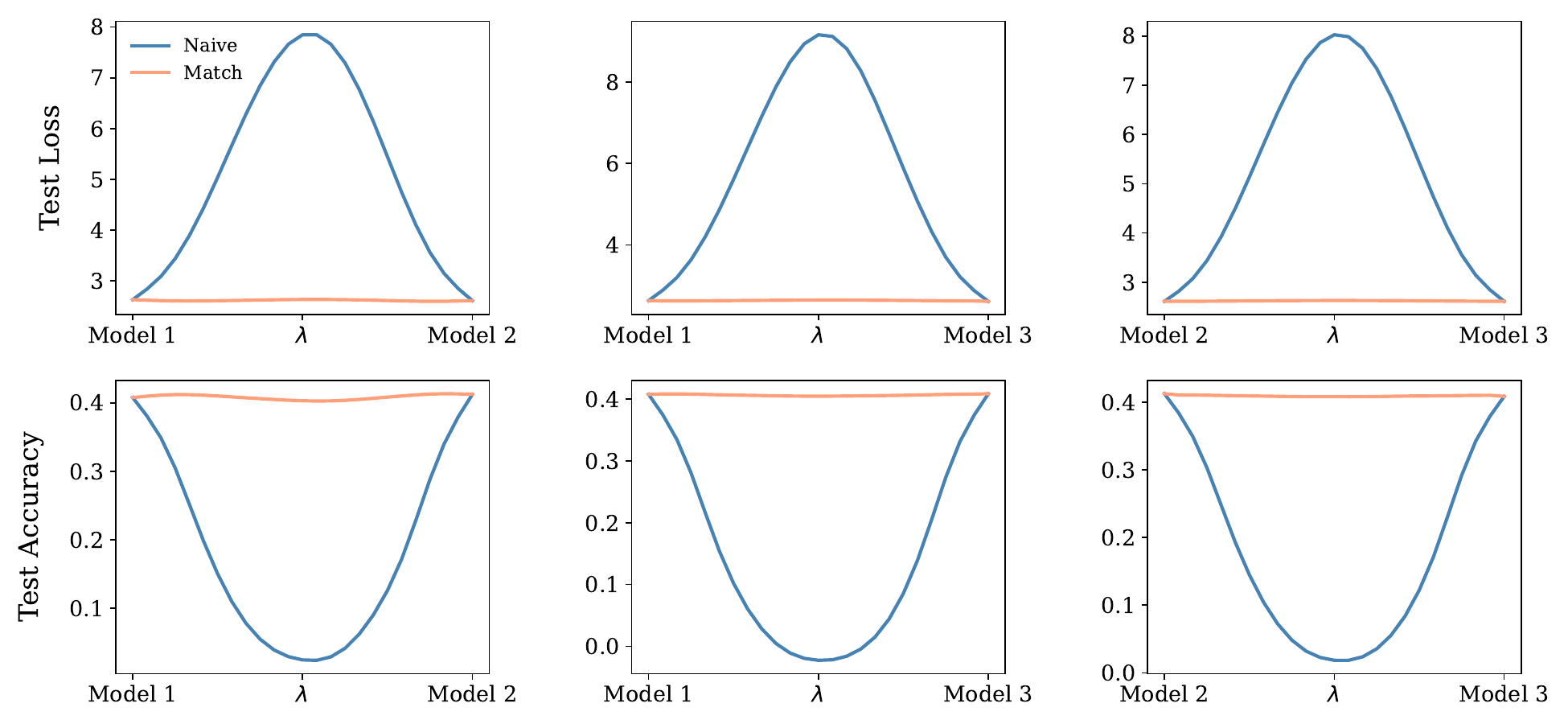}
        \caption{RoPE}
        \label{fig:cifar100-transf-rope}
    \end{subfigure}
    \caption{Linear Mode Connectivity for ViT with APE and RoPE on CIFAR-100 with 6 layers and 8 heads}
\end{figure}

\begin{figure}[H]
    \centering
    \begin{subfigure}{0.48\textwidth}
        \centering
        \includegraphics[width=\linewidth]{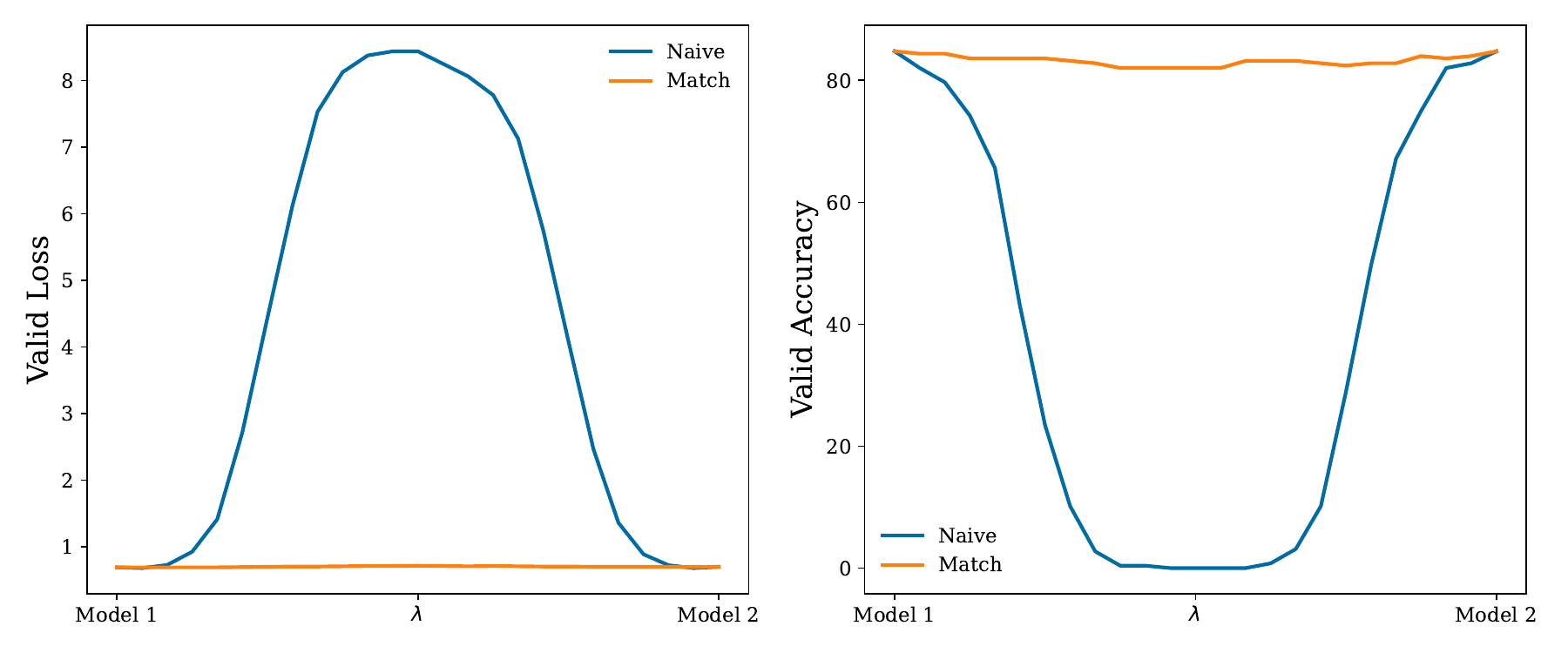}
        \caption{APE}
        \label{fig:learnable-imagenet-12-12-firstlayer}
    \end{subfigure}
    \hfill
    \begin{subfigure}{0.48\textwidth}
        \centering
        \includegraphics[width=\linewidth]{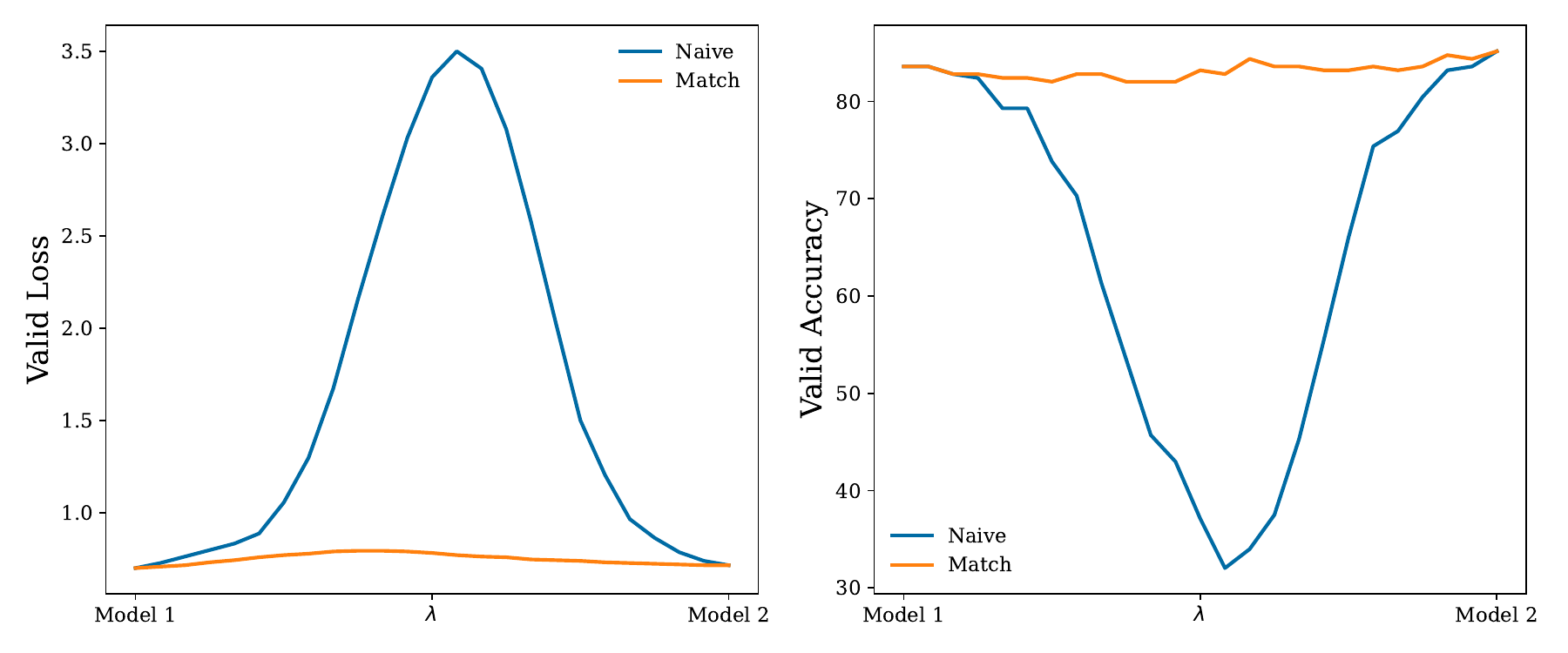}
        \caption{RoPE}
        \label{fig:rope-imagenet-12-12-firstlayer}
    \end{subfigure}
    \caption{Linear Mode Connectivity for ViT with APE and RoPE on ImageNet-1k with 12 layers}
\end{figure}

\begin{figure}[H]
    \centering
    \begin{subfigure}{0.48\textwidth}
        \centering
        \includegraphics[width=\linewidth]{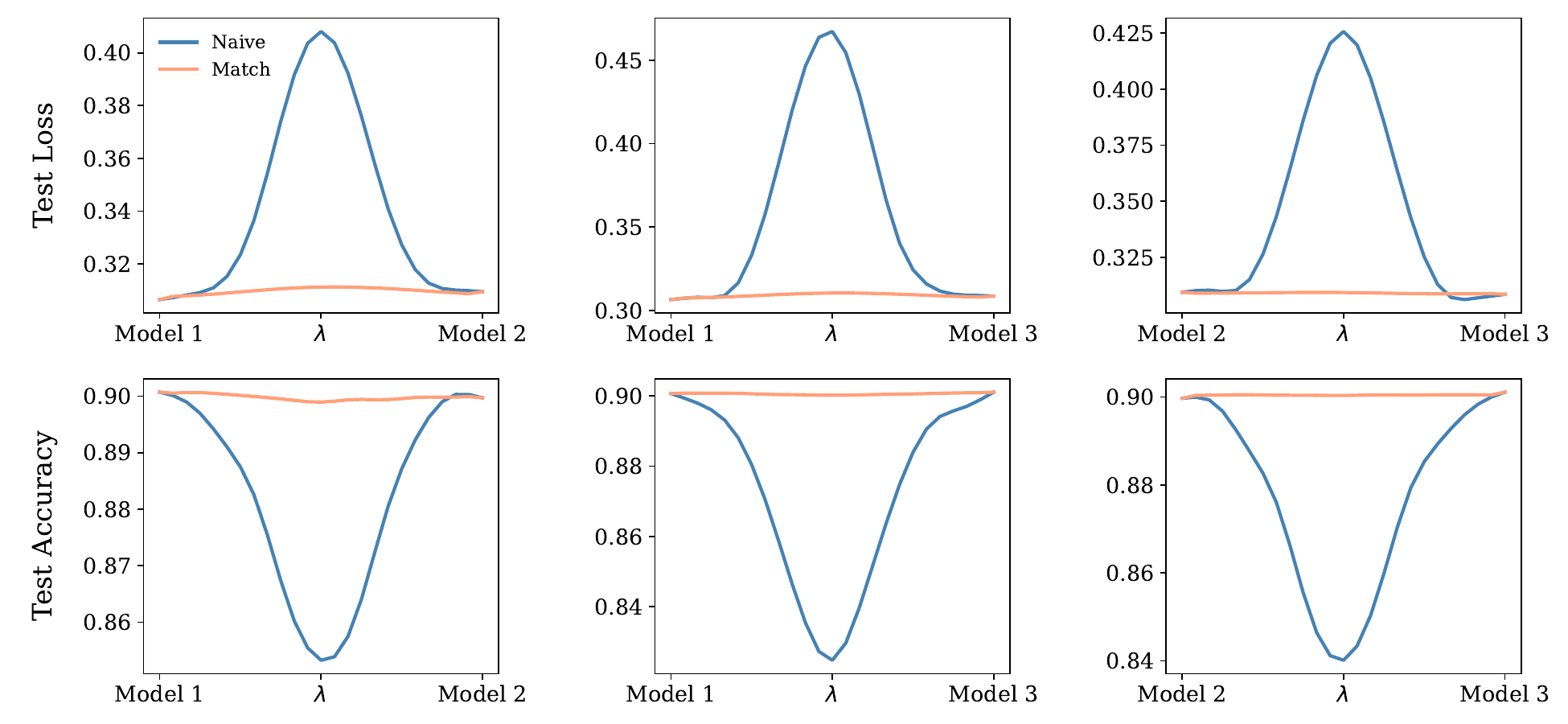}
        \caption{APE}
        \label{fig:agnews-transf}
    \end{subfigure}
    \hfill
    \begin{subfigure}{0.48\textwidth}
        \centering
        \includegraphics[width=\linewidth]{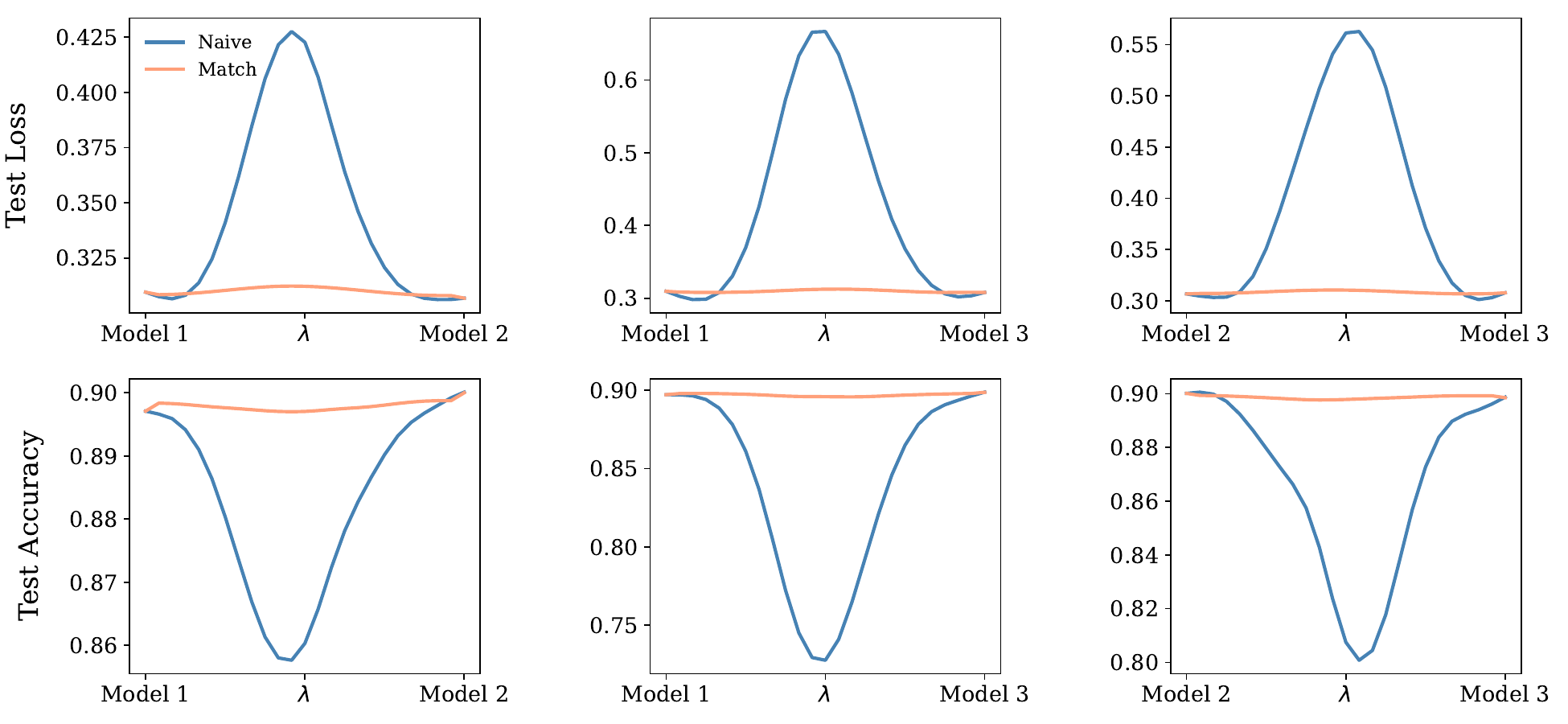}
        \caption{RoPE}
        \label{fig:agnews-transf-rope}
    \end{subfigure}
    \caption{Linear Mode Connectivity for BERT with APE and RoPE on AGNews with 6 layers and 8 heads}
\end{figure}

\begin{figure}[H]
    \centering
    \begin{subfigure}{0.48\textwidth}
        \centering
        \includegraphics[width=\linewidth]{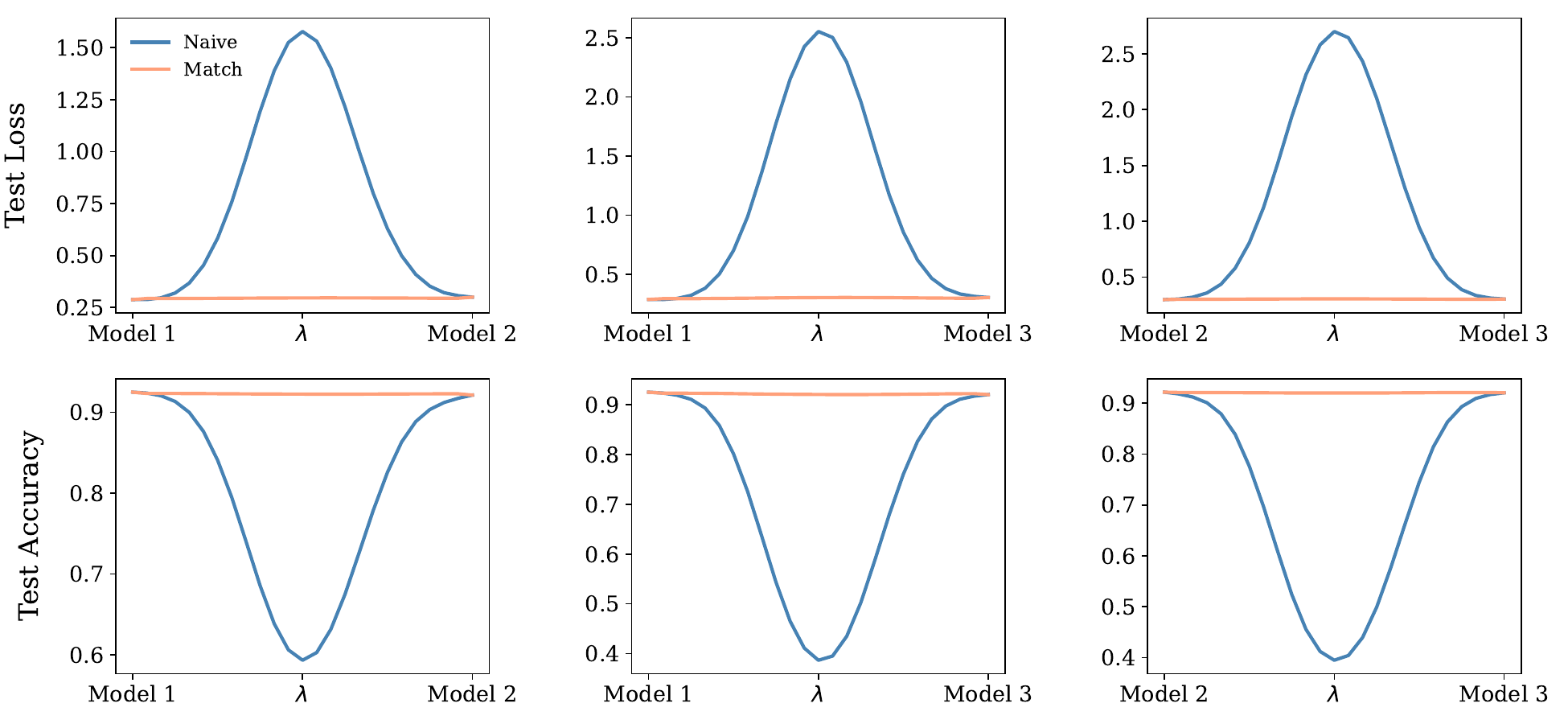}
        \caption{APE}
        \label{fig:dbpedia-transf}
    \end{subfigure}
    \hfill
    \begin{subfigure}{0.48\textwidth}
        \centering
        \includegraphics[width=\linewidth]{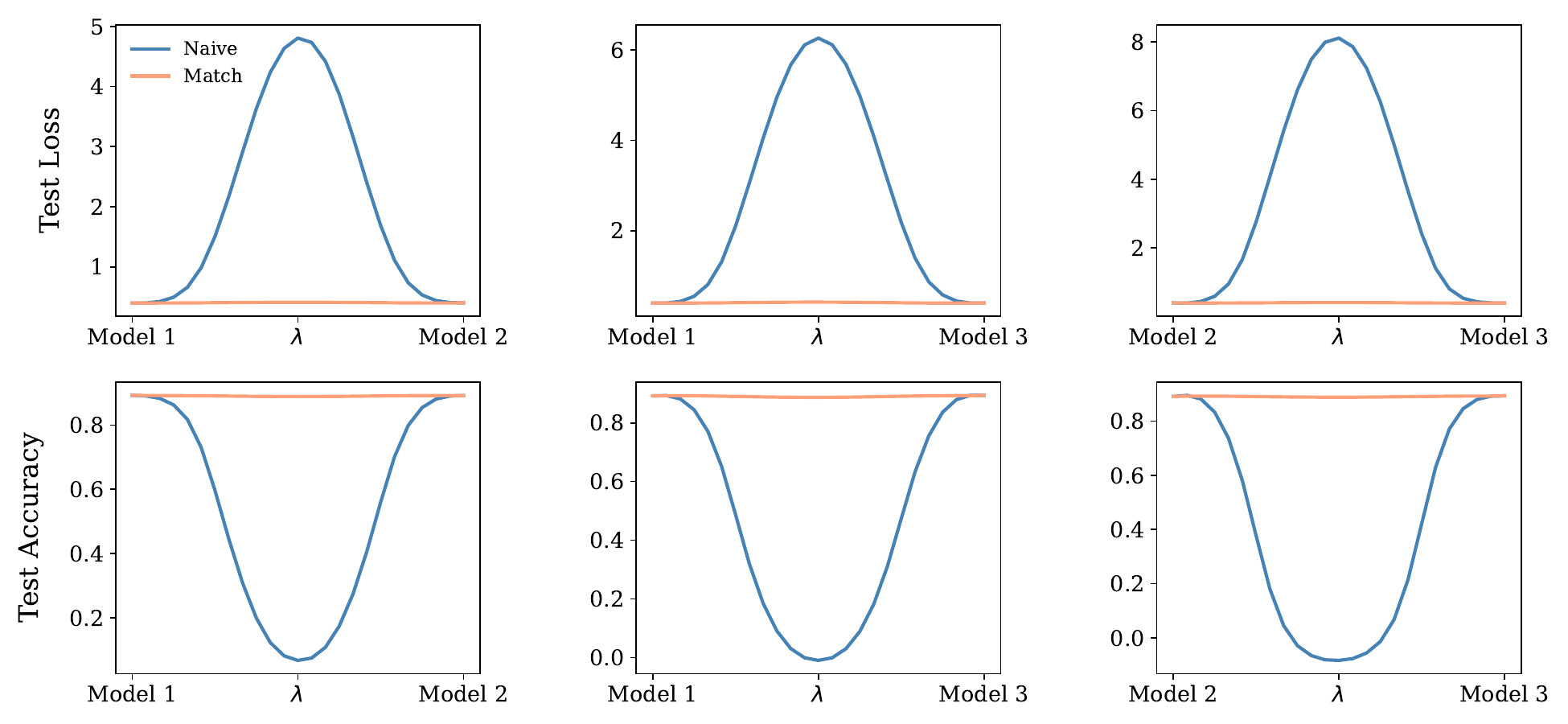}
        \caption{RoPE}
        \label{fig:dbpedia-transf-rope}
    \end{subfigure}
    \caption{Linear Mode Connectivity for BERT with APE and RoPE on DBPedia with 6 layers and 8 heads}
\end{figure}
\begin{figure}[H]
    \centering
    \begin{subfigure}{0.48\textwidth}
        \centering
        \includegraphics[width=\linewidth]{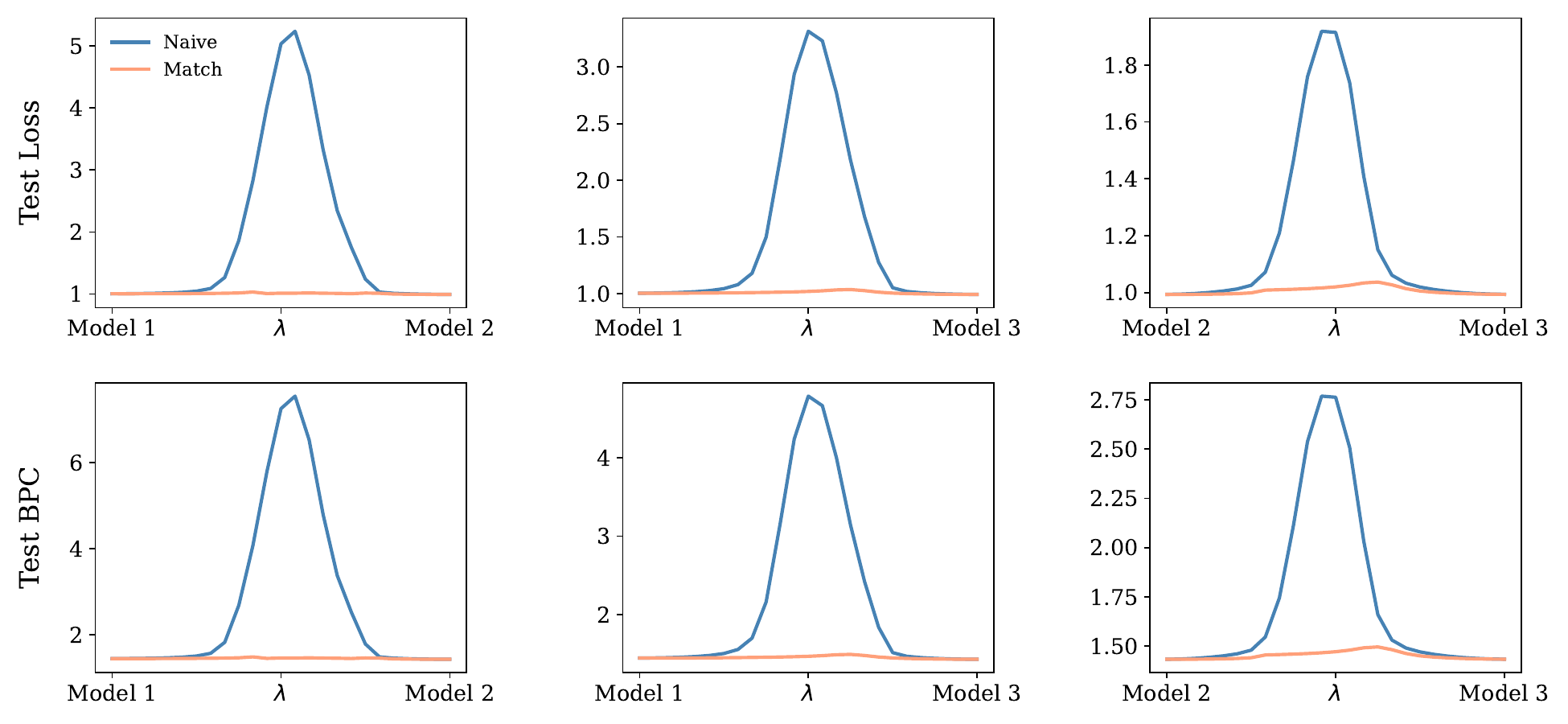}
        \caption{APE}
        \label{fig:learnable-enwik8-12-3-firstlayer}
    \end{subfigure}
    \hfill
    \begin{subfigure}{0.48\textwidth}
        \centering
        \includegraphics[width=\linewidth]{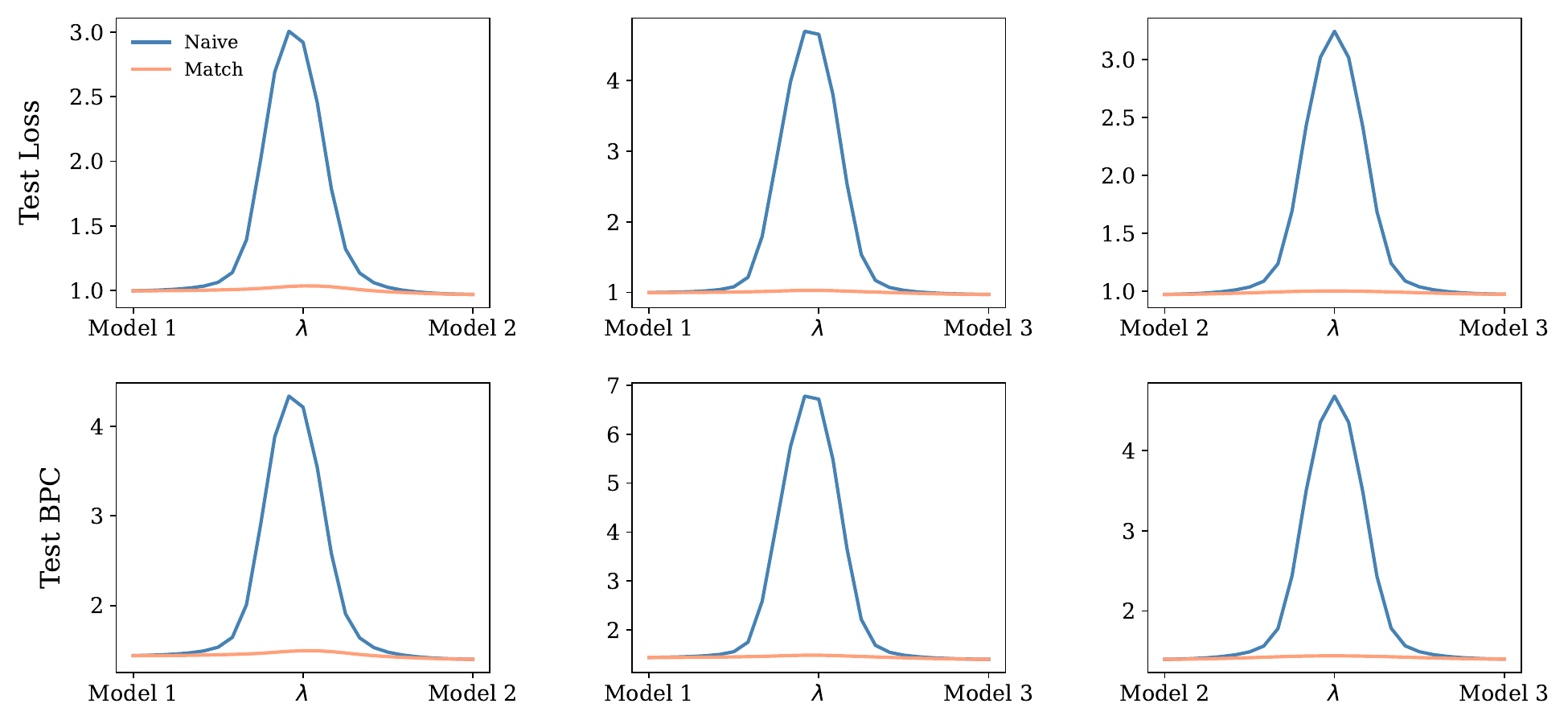}
        \caption{RoPE}
        \label{fig:rope-enwik8-12-3-firstlayer}
    \end{subfigure}
    \caption{Linear Mode Connectivity for GPT2 with APE and RoPE on Enwik8 with 12 layers}
\end{figure}
\begin{figure}[H]
    \centering
    \begin{subfigure}{0.48\textwidth}
        \centering
        \includegraphics[width=\linewidth]{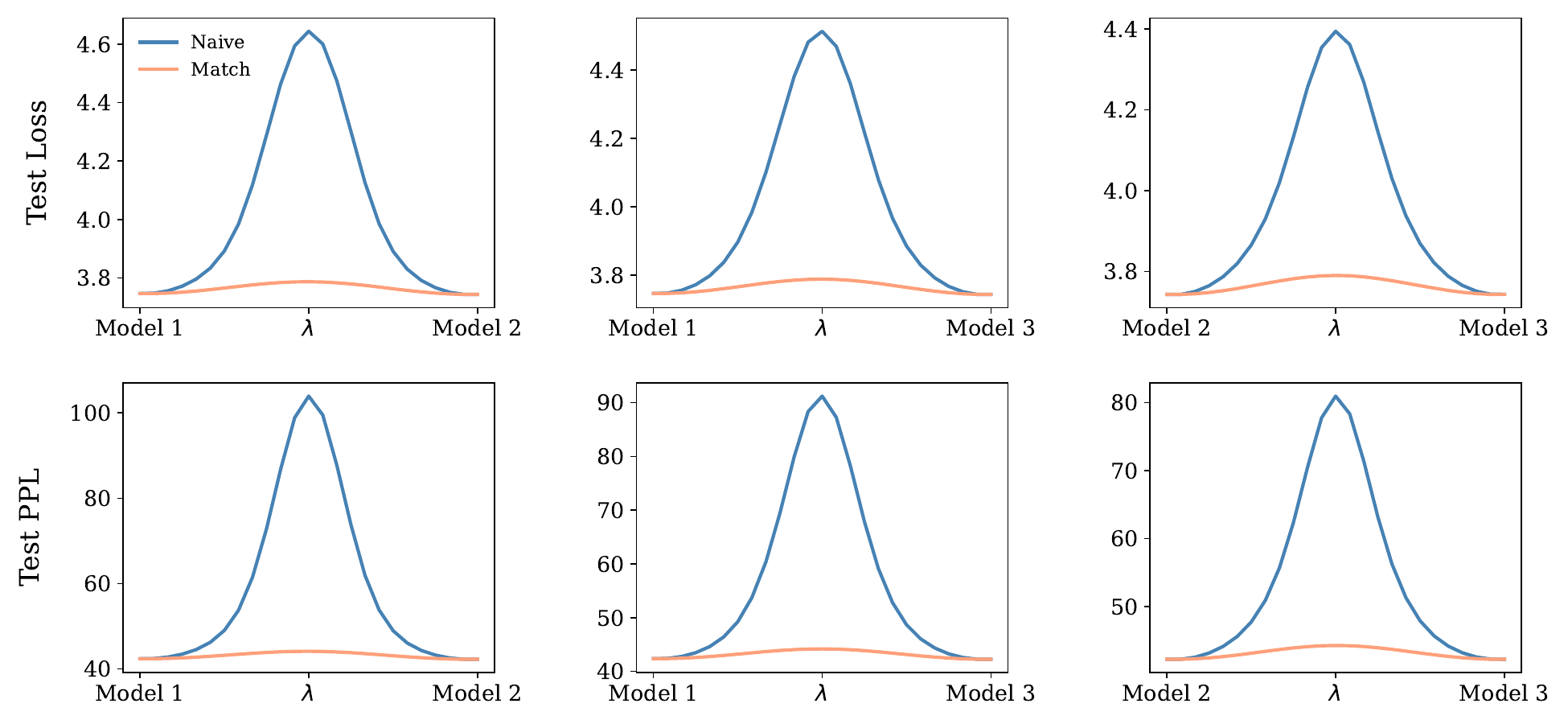}
        \caption{APE}
        \label{fig:learnable-wt103-12-3-firstlayer}
    \end{subfigure}
    \hfill
    \begin{subfigure}{0.48\textwidth}
        \centering
        \includegraphics[width=\linewidth]{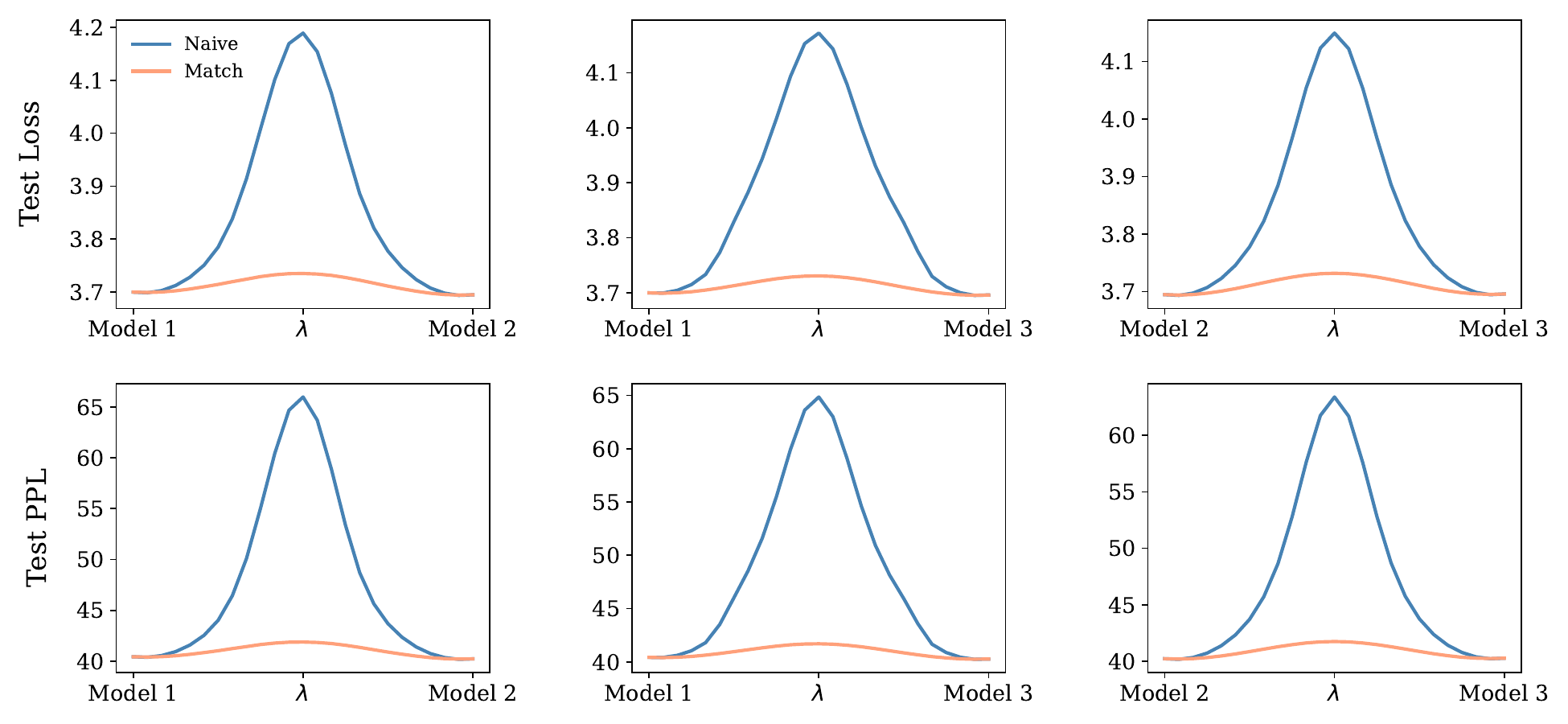}
        \caption{RoPE}
        \label{fig:rope-wt103-12-3-firstlayer}
    \end{subfigure}
    \caption{Linear Mode Connectivity for GPT2 with APE and RoPE on Wikitext103 with 12 layers}
\end{figure}
\begin{figure}[H]
    \centering
    \begin{subfigure}{0.48\textwidth}
        \centering
        \includegraphics[width=\linewidth]{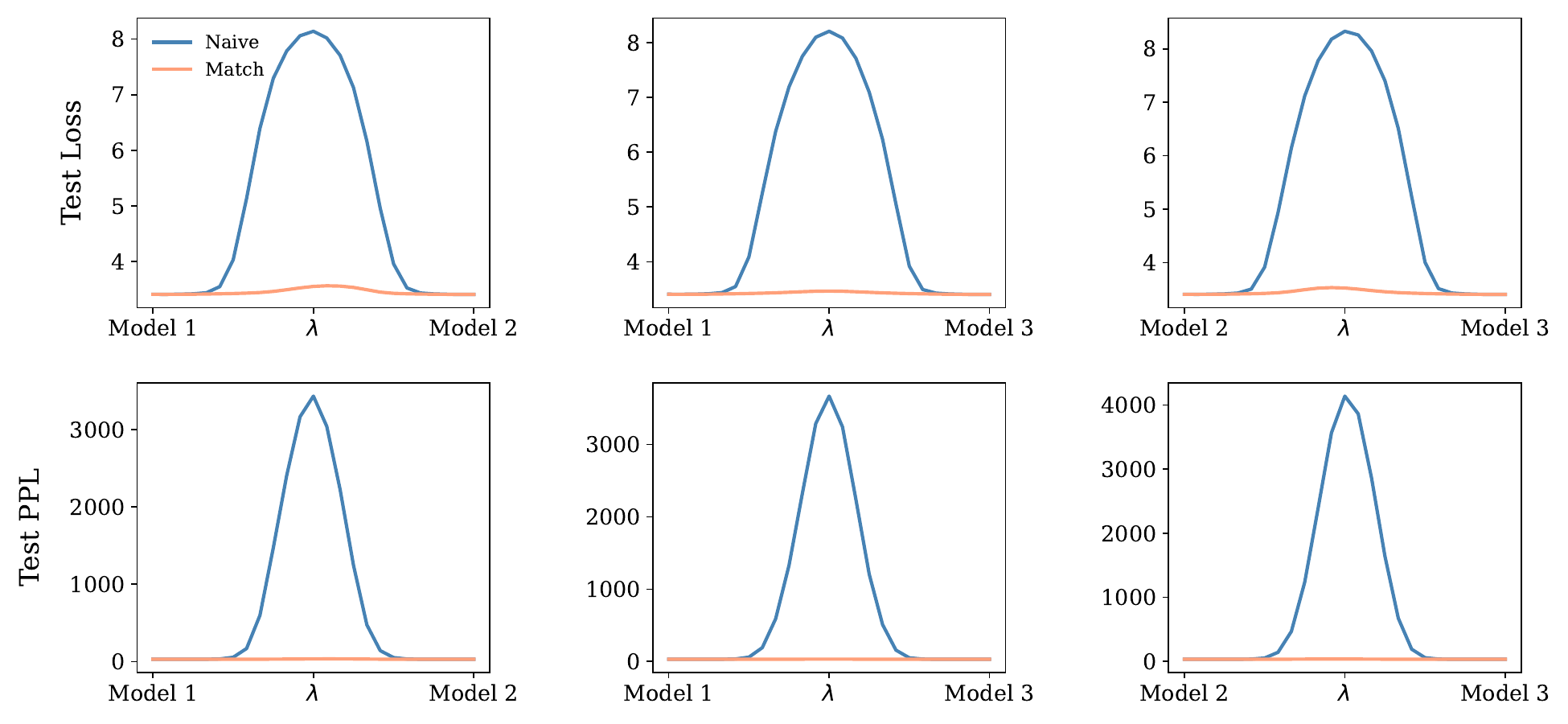}
        \caption{APE}
        \label{fig:learnable-lm1b-12-8-firstlayer}
    \end{subfigure}
    \hfill
    \begin{subfigure}{0.48\textwidth}
        \centering
        \includegraphics[width=\linewidth]{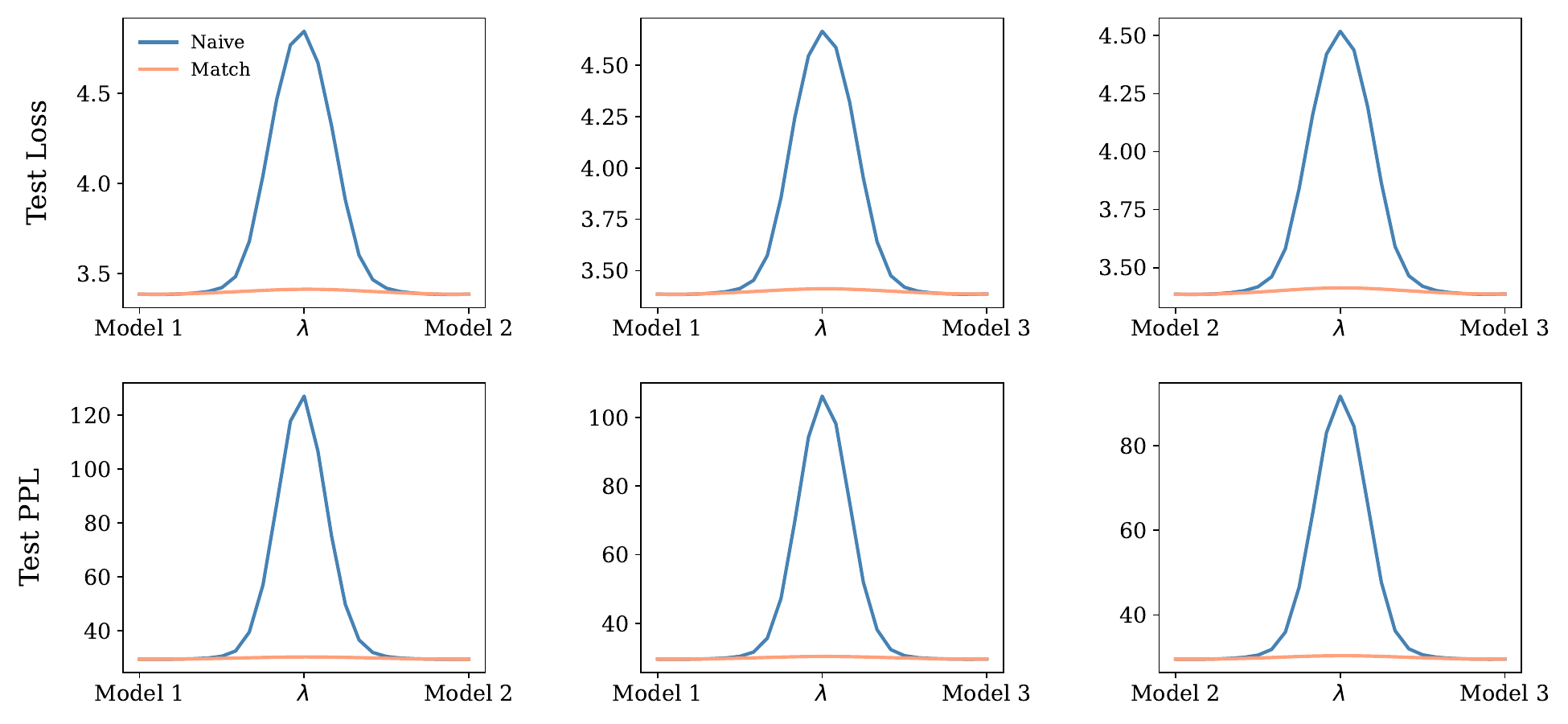}
        \caption{RoPE}
        \label{fig:rope-lm1b-12-8-firstlayer}
    \end{subfigure}
    \caption{Linear Mode Connectivity for GPT2 with APE and RoPE on OneBillionWord with 12 layers}
\end{figure}

\subsection{Linear Mode Connectivity for Full Model}\label{appendix:experiment_full_model}

\begin{table}[t]
\caption{
Experimental setups for LMC under full Transformer re-initialization.
The table lists datasets, model depths, and attention head counts, along with references to figures comparing APE and RoPE.
This configuration represents the most disruptive reset scenario considered in our study.}    
    \label{tab:all_layer}
    \centering
    \renewcommand*{\arraystretch}{1}
    \begin{adjustbox}{width=0.45\textwidth}
    \setlength{\tabcolsep}{12pt} 
    \begin{tabular}{lcccc}
        \toprule
        Dataset & Layers & Heads & APE & RoPE \\
        \midrule
        CIFAR-10 & 6 & [8] & [\ref{fig:cifar10-full}] & [\ref{fig:cifar10-full-rope}] \\
        CIFAR-100 & 6 & [8] & [\ref{fig:cifar100-full}] & [\ref{fig:cifar100-full-rope}] \\
        AGNews & 6 & [8] & [\ref{fig:agnews-full}] & [\ref{fig:agnews-full-rope}] \\
        DBPedia & 6 & [8] & [\ref{fig:dbpedia-full}] & [\ref{fig:dbpedia-full-rope}] \\
        ImageNet-1k & [12] & [12] & [\ref{fig:learnable-imagenet-12-12-fullmodel}] & [\ref{fig:rope-imagenet-12-12-fullmodel}] \\
        Wikitext103 & [12] & [3] & [\ref{fig:learnable-wt103-12-13-fullmodel}] & [\ref{fig:rope-wt103-12-13-fullmodel}] \\
        \bottomrule
    \end{tabular}
    \end{adjustbox}
\end{table}

\begin{figure}[H]
\centering
\includegraphics[width=1.0\linewidth]{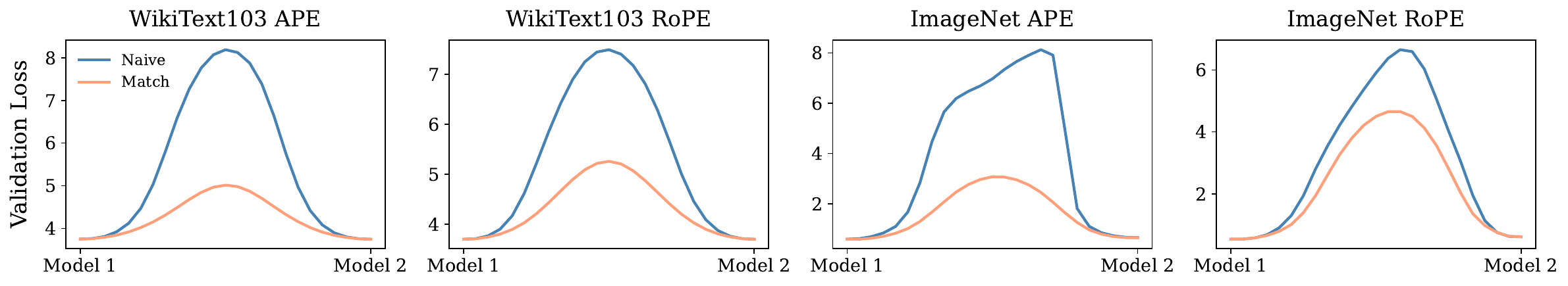}
\caption{
LMC interpolation plots for ViT on ImageNet-1K (subplots 3 and 4) and GPT-2 on WikiText103 (subplots 1 and 2), with APE and RoPE under full Transformer re-initialization.
}\label{fig:LMC_main_full_model}
\end{figure}

\begin{figure}[H]
    \centering
    \begin{subfigure}{0.48\textwidth}
        \centering
        \includegraphics[width=\linewidth]{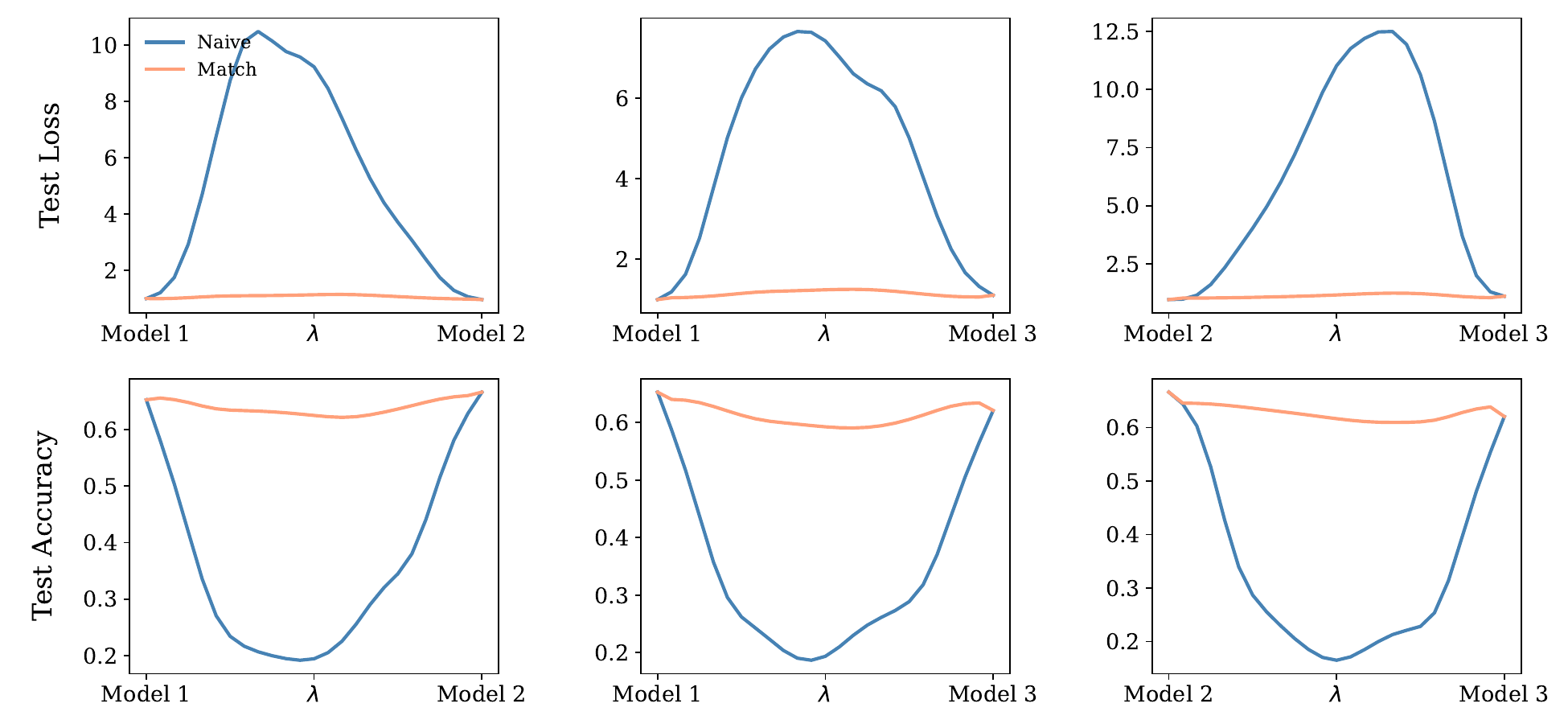}
        \caption{APE}
        \label{fig:cifar10-full}
    \end{subfigure}
    \begin{subfigure}{0.48\textwidth}
        \centering
        \includegraphics[width=\linewidth]{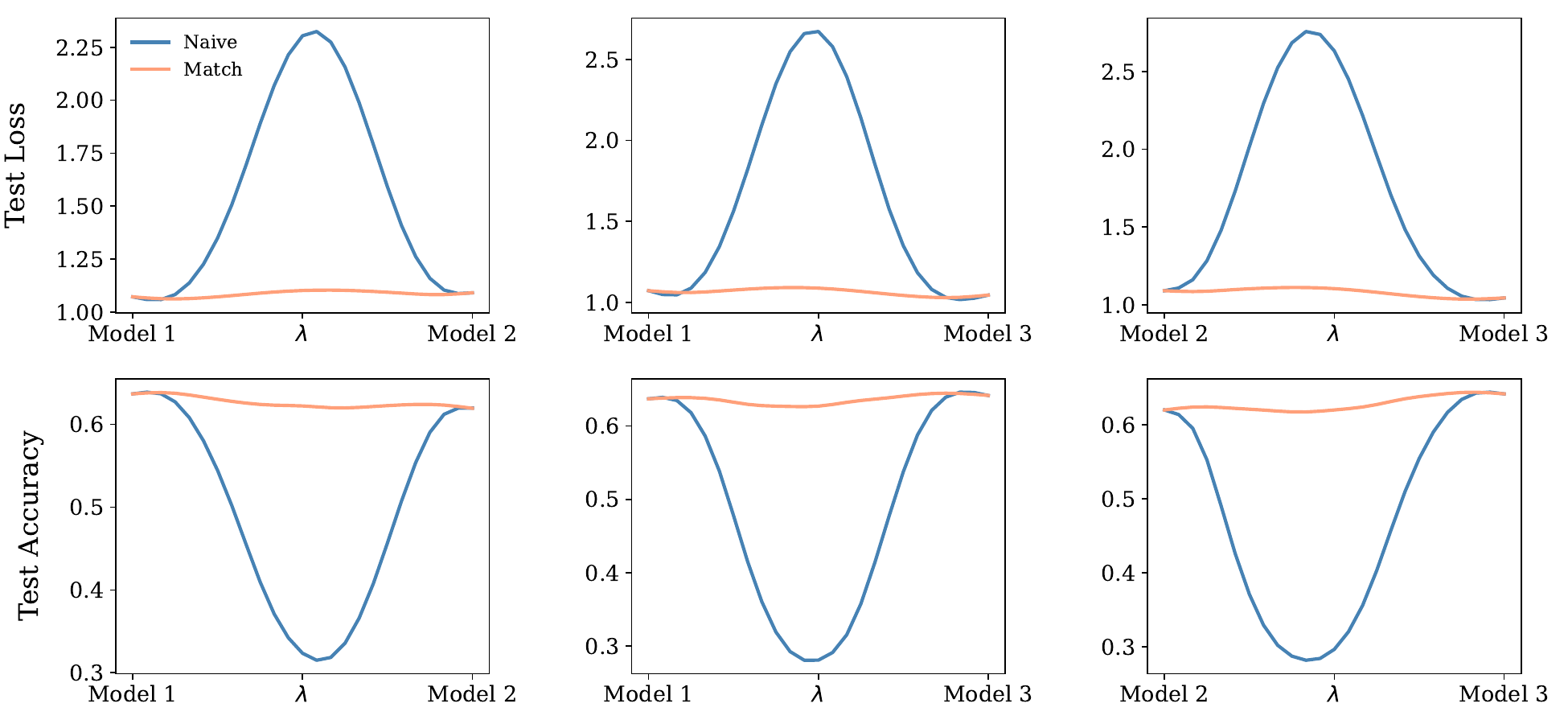}
        \caption{RoPE}
        \label{fig:cifar10-full-rope}
    \end{subfigure}
    \caption{Linear Mode Connectivity for ViT with APE and RoPE on CIFAR-10 with 6 layers and 8 heads}
\end{figure}

\begin{figure}[H]
    \centering
    \begin{subfigure}{0.48\textwidth}
        \centering
        \includegraphics[width=\linewidth]{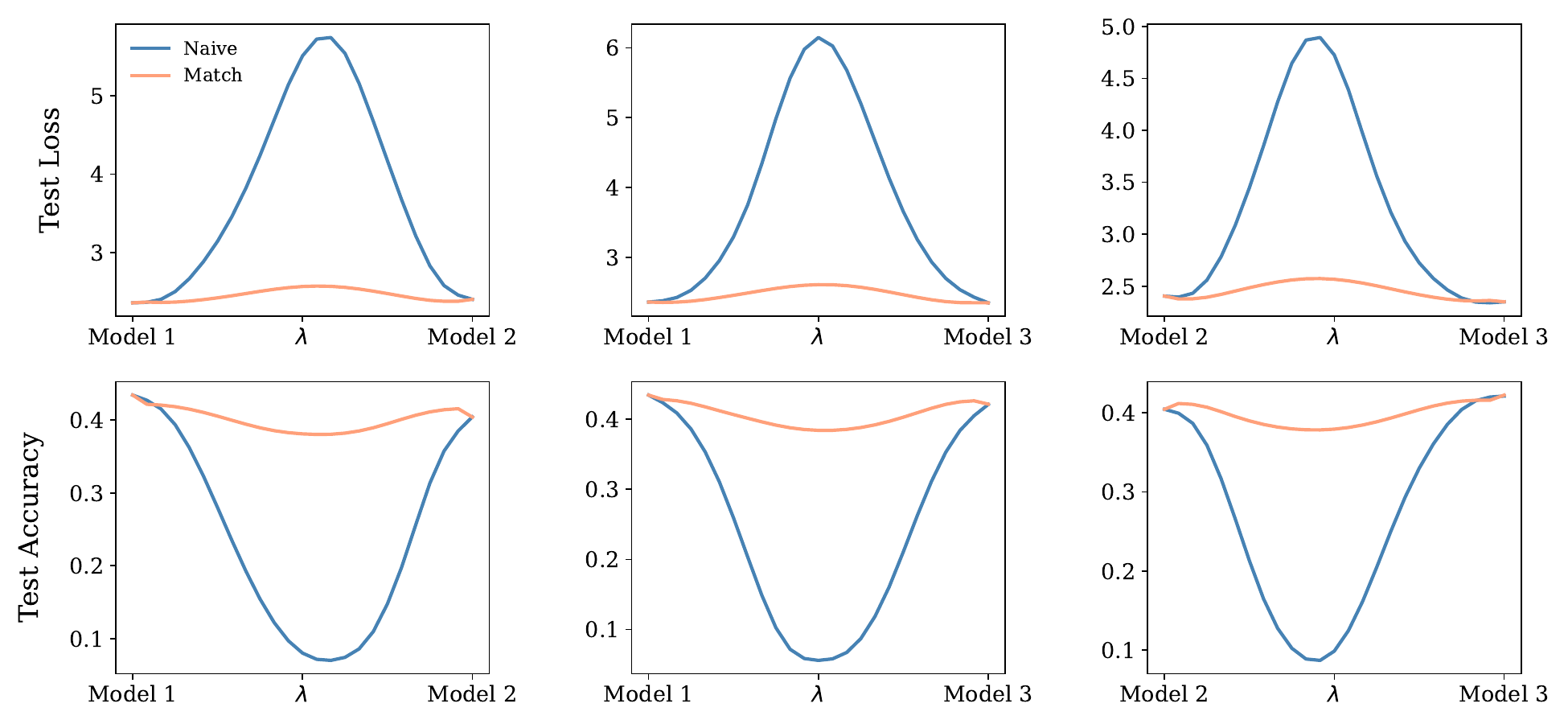}
        \caption{APE}
        \label{fig:cifar100-full}
    \end{subfigure}
    \begin{subfigure}{0.48\textwidth}
        \centering
        \includegraphics[width=\linewidth]{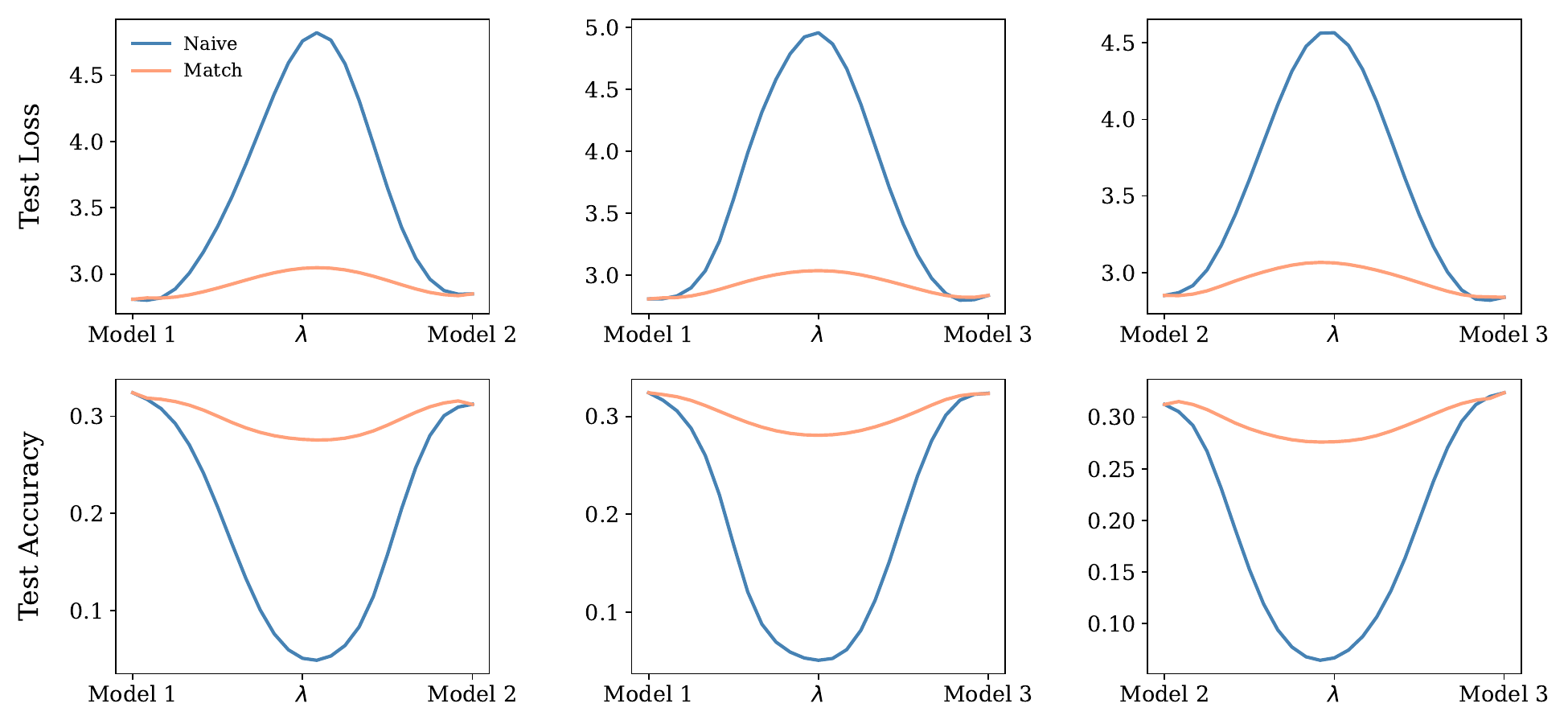}
        \caption{RoPE}
        \label{fig:cifar100-full-rope}
    \end{subfigure}
    \caption{Linear Mode Connectivity for ViT with APE and RoPE on CIFAR-100 with 6 layers and 8 heads}
\end{figure}

\begin{figure}[H]
    \centering
    \begin{subfigure}{0.48\textwidth}
        \centering
        \includegraphics[width=\linewidth]{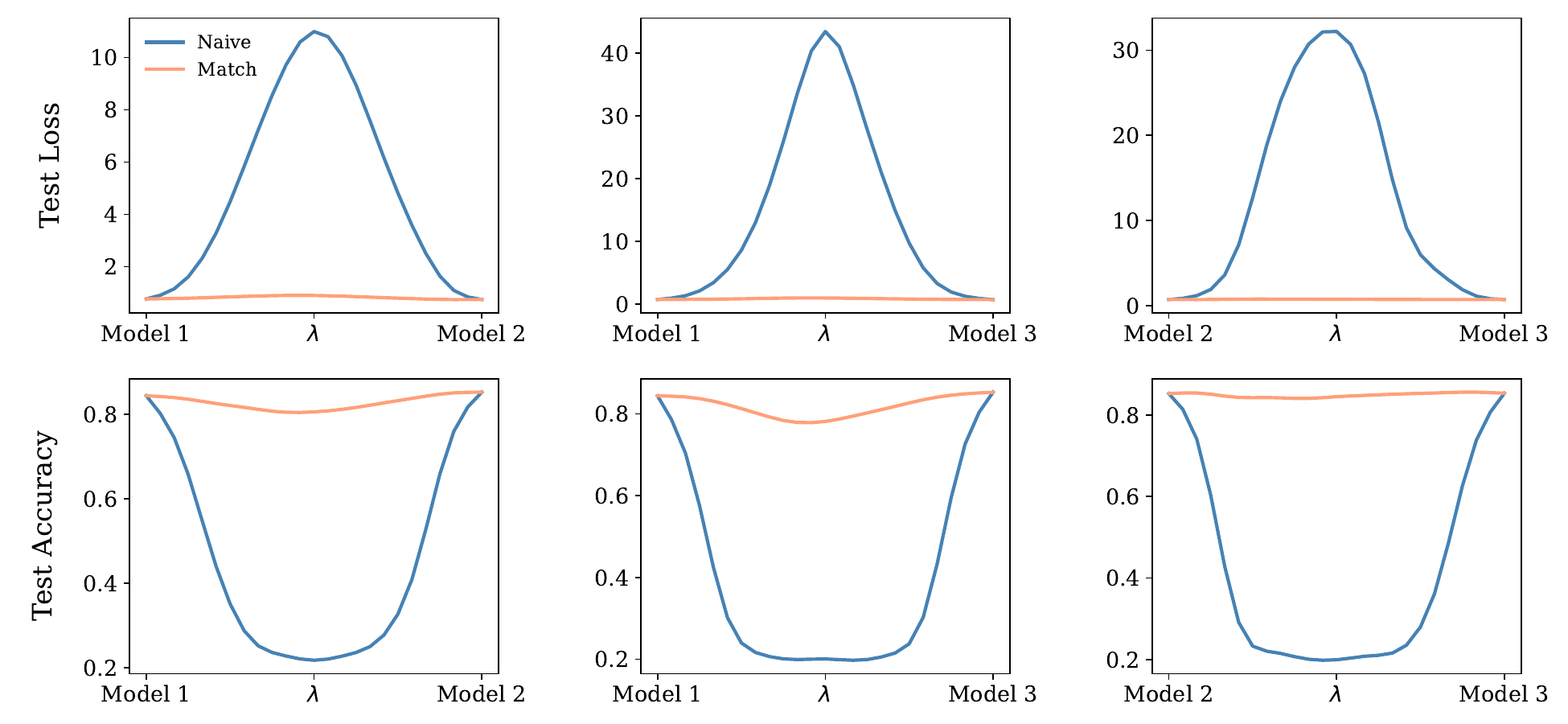}
        \caption{APE}
        \label{fig:agnews-full}
    \end{subfigure}
    \begin{subfigure}{0.48\textwidth}
        \centering
        \includegraphics[width=\linewidth]{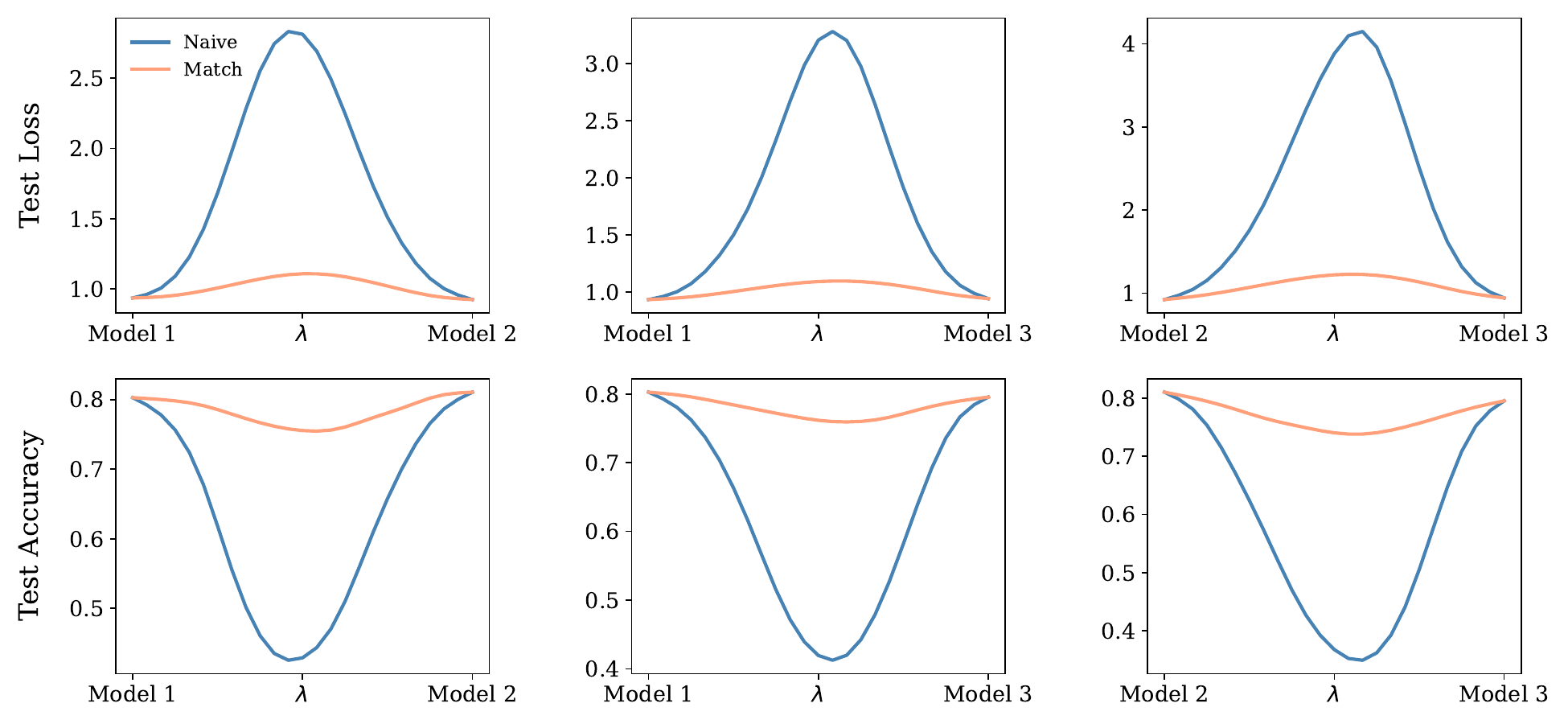}
        \caption{RoPE}
        \label{fig:agnews-full-rope}
    \end{subfigure}
    \caption{Linear Mode Connectivity for BERT with APE and RoPE on AGNews with 6 layers and 8 heads}
\end{figure}

\begin{figure}[H]
    \centering
    \begin{subfigure}{0.48\textwidth}
        \centering
        \includegraphics[width=\linewidth]{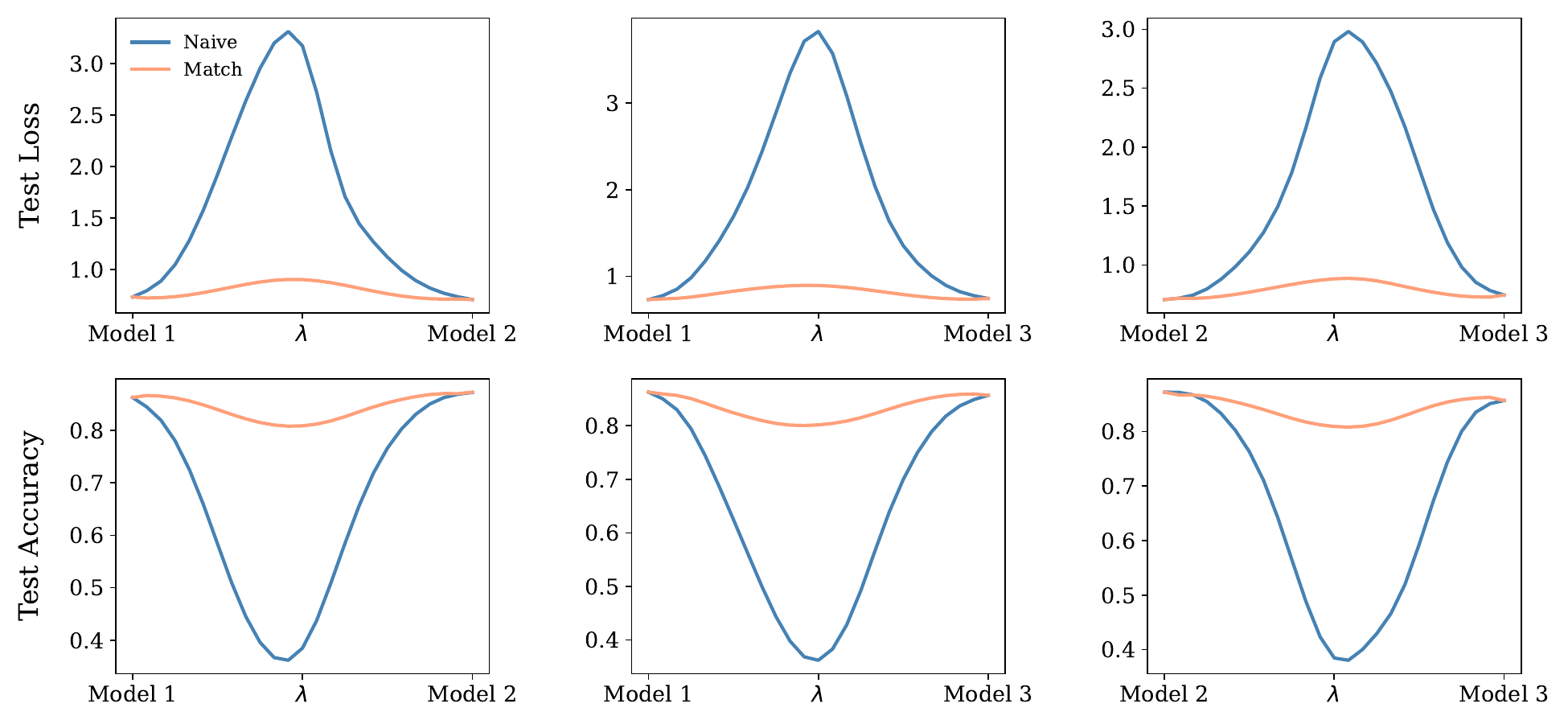}
        \caption{APE}
        \label{fig:dbpedia-full}
    \end{subfigure}
    \begin{subfigure}{0.48\textwidth}
        \centering
        \includegraphics[width=\linewidth]{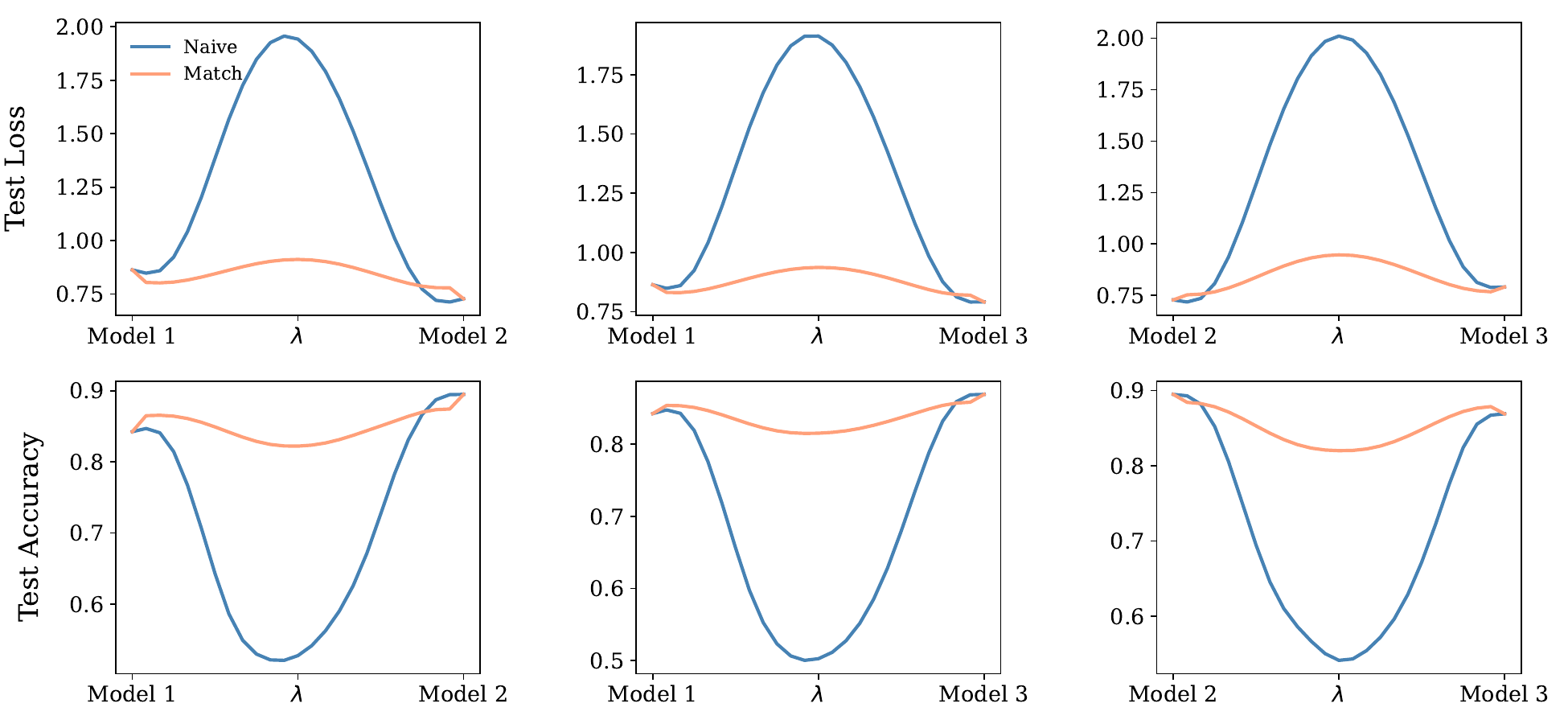}
        \caption{RoPE}
        \label{fig:dbpedia-full-rope}
    \end{subfigure}
    \caption{Linear Mode Connectivity for BERT with APE and RoPE on DBPedia with 6 layers and 8 heads}
\end{figure}

\begin{figure}[H]
    \centering
    \begin{subfigure}{0.48\textwidth}
        \centering
        \includegraphics[width=\linewidth]{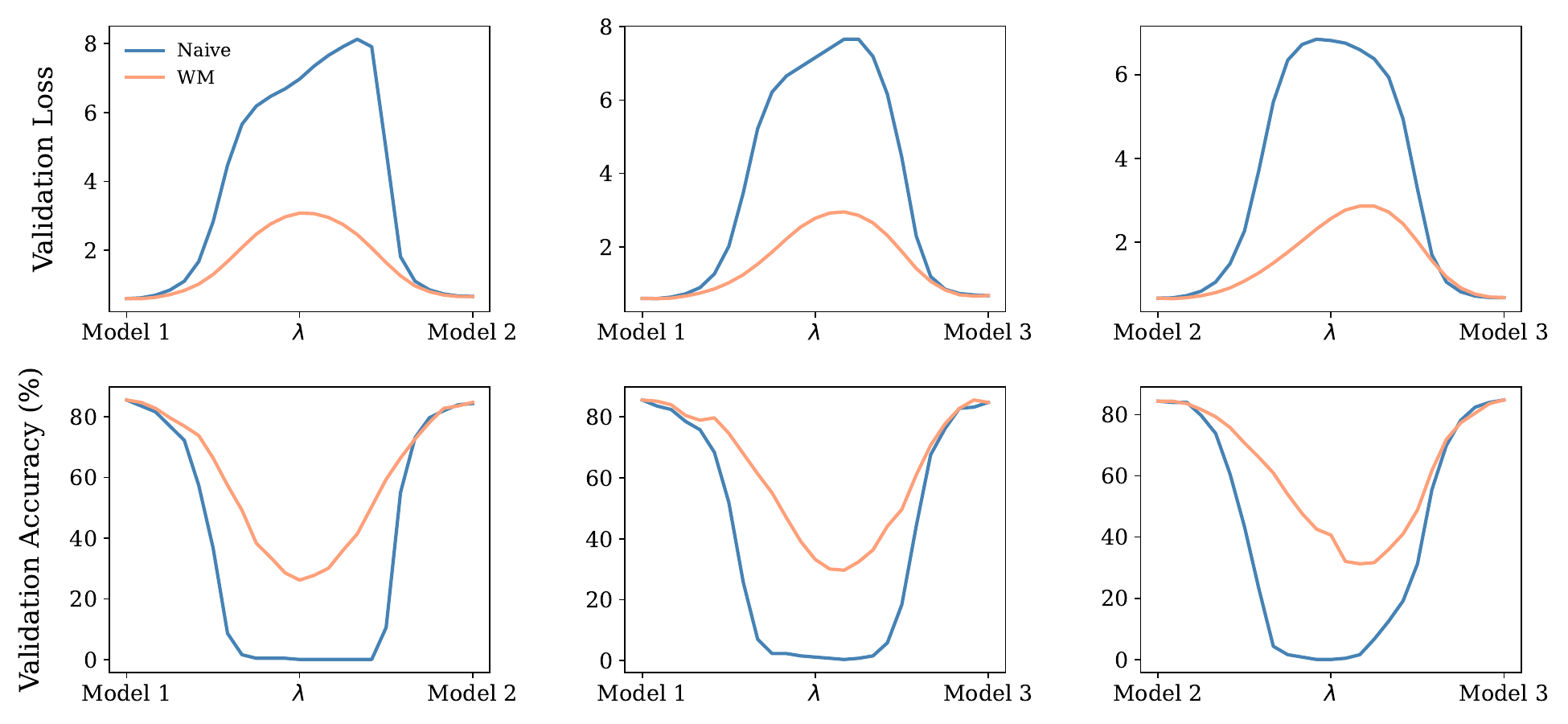}
        \caption{APE}
        \label{fig:learnable-imagenet-12-12-fullmodel}
    \end{subfigure}
    \hfill
    \begin{subfigure}{0.48\textwidth}
        \centering
        \includegraphics[width=\linewidth]{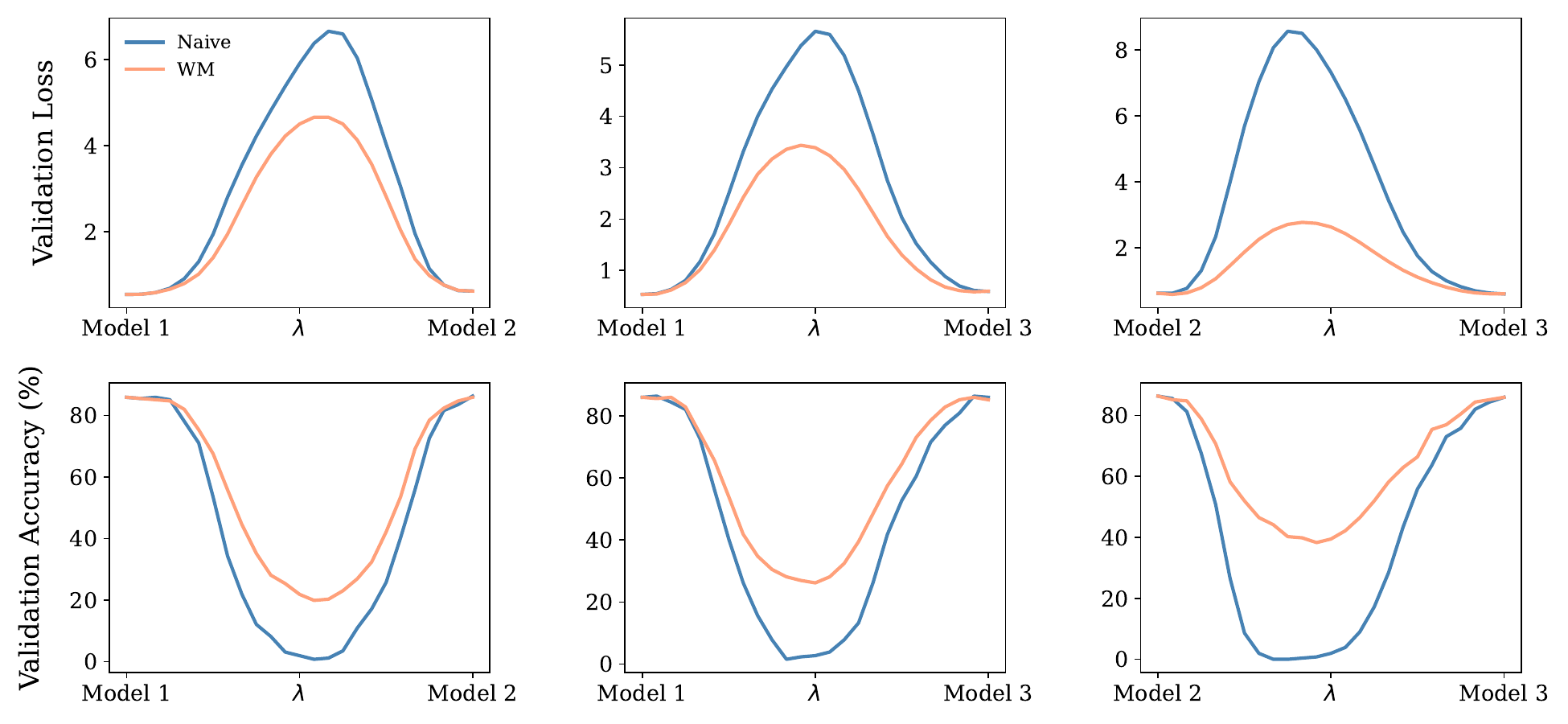}
        \caption{RoPE}
        \label{fig:rope-imagenet-12-12-fullmodel}
    \end{subfigure}
    \caption{Linear Mode Connectivity for ViT with APE and RoPE on ImageNet-1k with 12 layers}
\end{figure}

\begin{figure}[H]
    \centering
    \begin{subfigure}{0.48\textwidth}
        \centering
        \includegraphics[width=\linewidth]{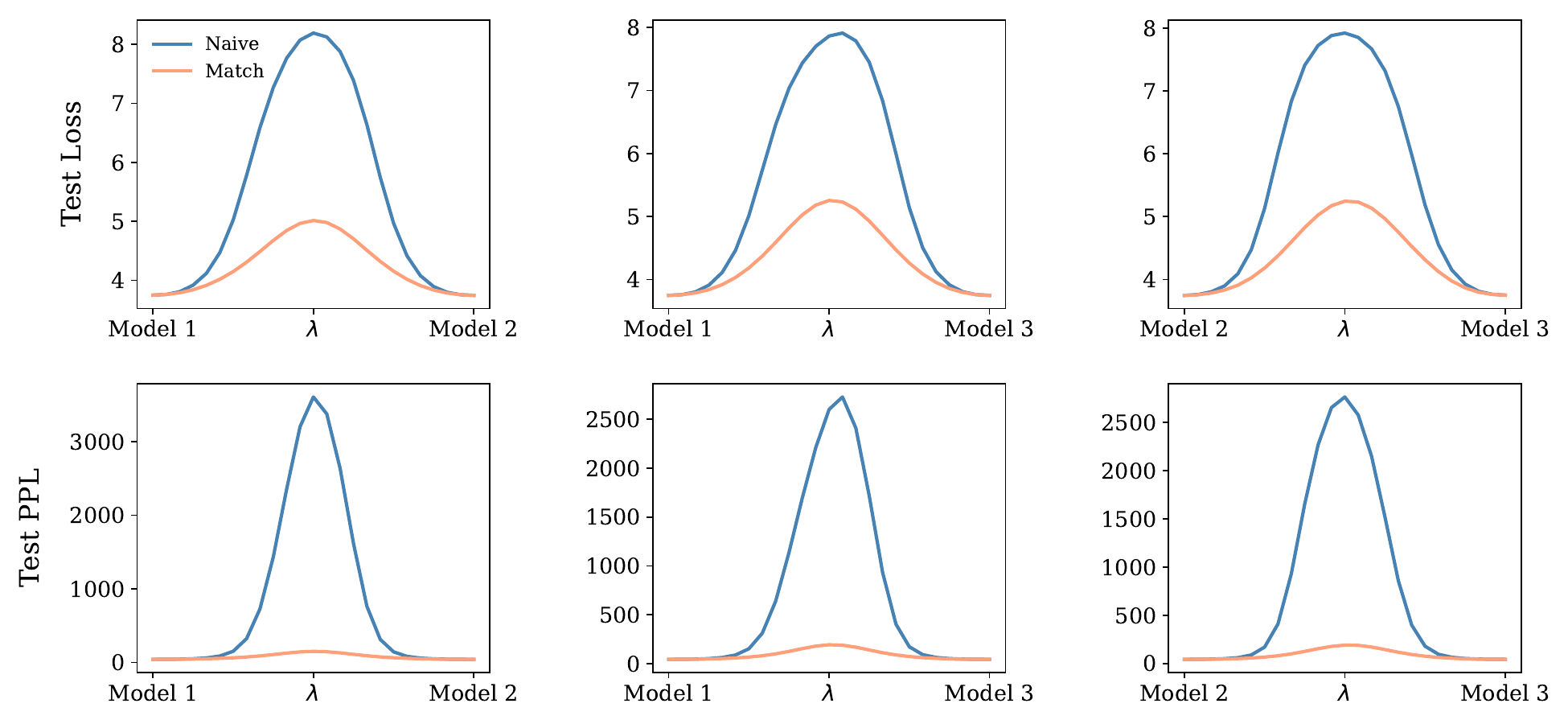}
        \caption{APE}
        \label{fig:learnable-wt103-12-13-fullmodel}
    \end{subfigure}
    \hfill
    \begin{subfigure}{0.48\textwidth}
        \centering
        \includegraphics[width=\linewidth]{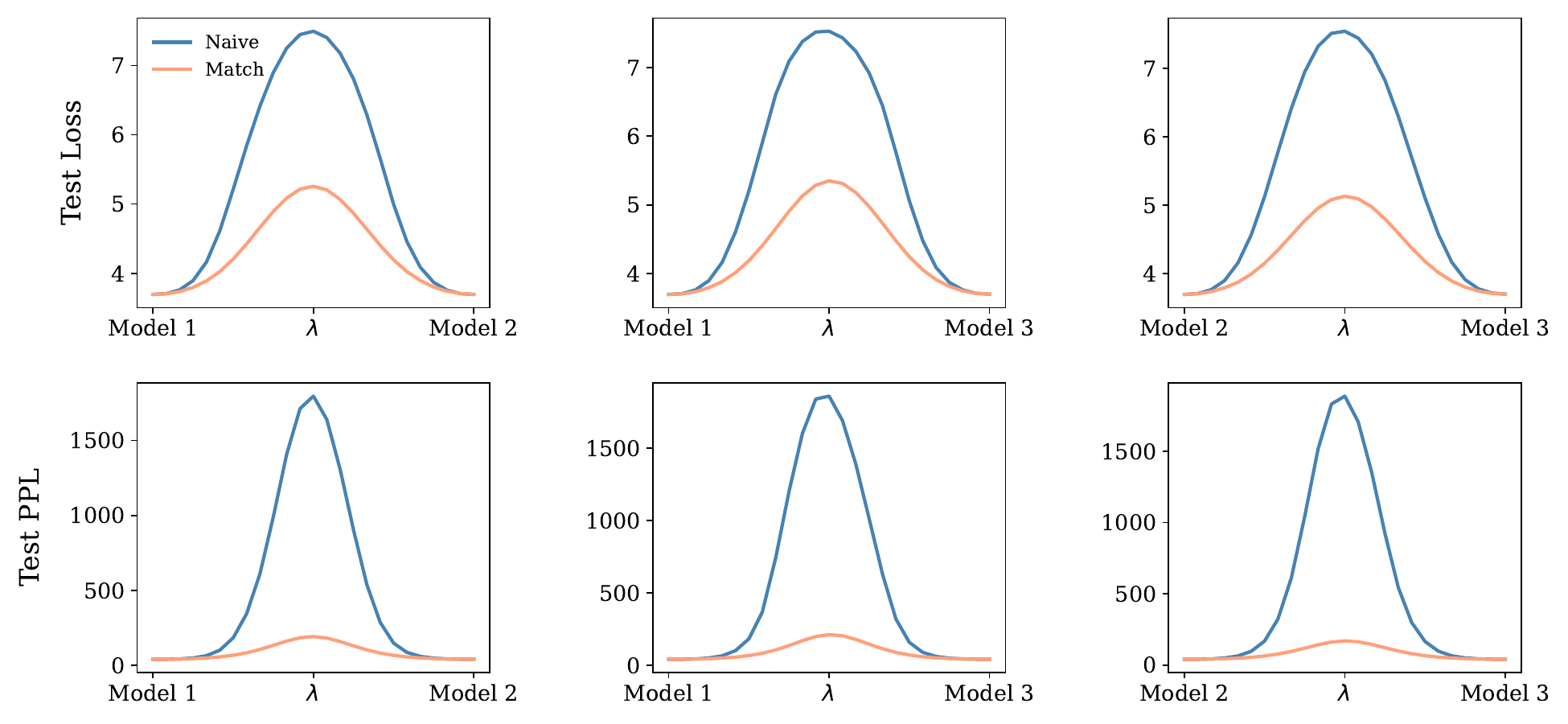}
        \caption{RoPE}
        \label{fig:rope-wt103-12-13-fullmodel}
    \end{subfigure}
    \caption{Linear Mode Connectivity for GPT2  with APE and RoPE on Wikitext103 with 12 layers}
\end{figure}

\subsection{Ablation study on Head Permutation}
\label{appdx:ablation_head_permu}
We plot 24 head permutations, including the one selected by Stage 1 our method, with Stage 2 applied post-reordering for all permutation. For the 4-head case, this encompasses all possible permutations (4! = 24). For the 8-head case, it includes 23 randomly sampled permutations along with the one chosen by our method.

\begin{table}[htp]
    \caption{Ablation study on head permutation}\label{tab:ablation_head_permu}
    \medskip
    \centering
    \renewcommand*{\arraystretch}{1.3}
    \begin{adjustbox}{width=0.7\textwidth}
    \begin{tabular}{lclll}
        \toprule
        Dataset & No. layers & No. heads & APE Figure & RoPE Figure \\
        \midrule
        CIFAR-10 & 2 & [4, 8] & [\ref{fig:attn-cifar10-2-4-0-heads-permu-all}, \ref{fig:attn-cifar10-2-8-0-heads-permu-all}] & [\ref{fig:attn-cifar10-2-4-0-rope-heads-permu-all}, \ref{fig:attn-cifar10-2-8-0-rope-heads-permu-all}] \\
        & 6 & [4, 8] & [\ref{fig:attn-cifar10-6-4-0-heads-permu-all}, \ref{fig:attn-cifar10-6-8-0-heads-permu-all}] & [\ref{fig:attn-cifar10-6-4-0-rope-heads-permu-all}, \ref{fig:attn-cifar10-6-8-0-rope-heads-permu-all}] \\
        CIFAR-100 & 6 & [4, 8] & [\ref{fig:attn-cifar100-6-4-0-heads-permu-all}, \ref{fig:attn-cifar100-6-8-0-heads-permu-all}] & [\ref{fig:attn-cifar100-6-4-0-rope-heads-permu-all}, \ref{fig:attn-cifar100-6-8-0-rope-heads-permu-all}] \\
        IMDBreview & 2 & [4, 8] & [\ref{fig:attn-imdbreview-2-4-0-heads-permu-all}, \ref{fig:attn-imdbreview-2-8-0-heads-permu-all}] & [\ref{fig:attn-imdbreview-2-4-0-rope-heads-permu-all}, \ref{fig:attn-imdbreview-2-8-0-rope-heads-permu-all}] \\
        & 6 & [4, 8] & [\ref{fig:attn-imdbreview-6-4-0-heads-permu-all}, \ref{fig:attn-imdbreview-6-8-0-heads-permu-all}] & [\ref{fig:attn-imdbreview-6-4-0-rope-heads-permu-all}, \ref{fig:attn-imdbreview-6-8-0-rope-heads-permu-all}] \\
        DBPedia & 2 & [4, 8] & [\ref{fig:attn-dbpedia-2-4-0-heads-permu-all}, \ref{fig:attn-dbpedia-2-8-0-heads-permu-all}] & [\ref{fig:attn-dbpedia-2-4-0-rope-heads-permu-all}, \ref{fig:attn-dbpedia-2-8-0-rope-heads-permu-all}] \\
        & 6 & [4, 8] & [\ref{fig:attn-dbpedia-6-4-0-heads-permu-all}, \ref{fig:attn-dbpedia-6-8-0-heads-permu-all}] & [\ref{fig:attn-dbpedia-6-4-0-rope-heads-permu-all}, \ref{fig:attn-dbpedia-6-8-0-rope-heads-permu-all}] \\
        \bottomrule
    \end{tabular}
    \end{adjustbox}
\end{table}

\begin{figure}[H]
    \centering
    \begin{subfigure}{0.48\textwidth}
        \centering
        \includegraphics[width=1\textwidth]{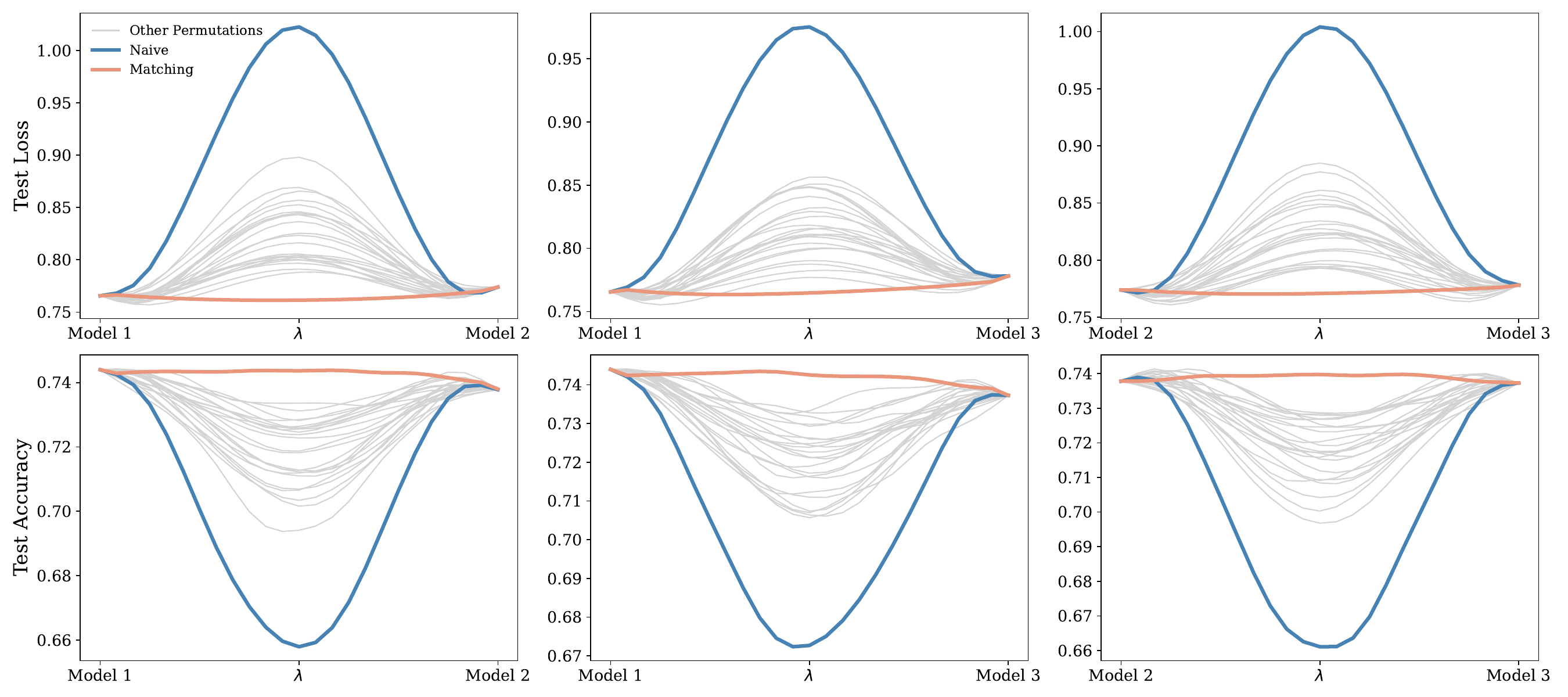} 
        \caption{4 attention heads}
        \label{fig:attn-cifar10-2-4-0-heads-permu-all}
    \end{subfigure}
    \hfill
    \begin{subfigure}{0.48\textwidth}
        \centering
        \includegraphics[width=1\textwidth]{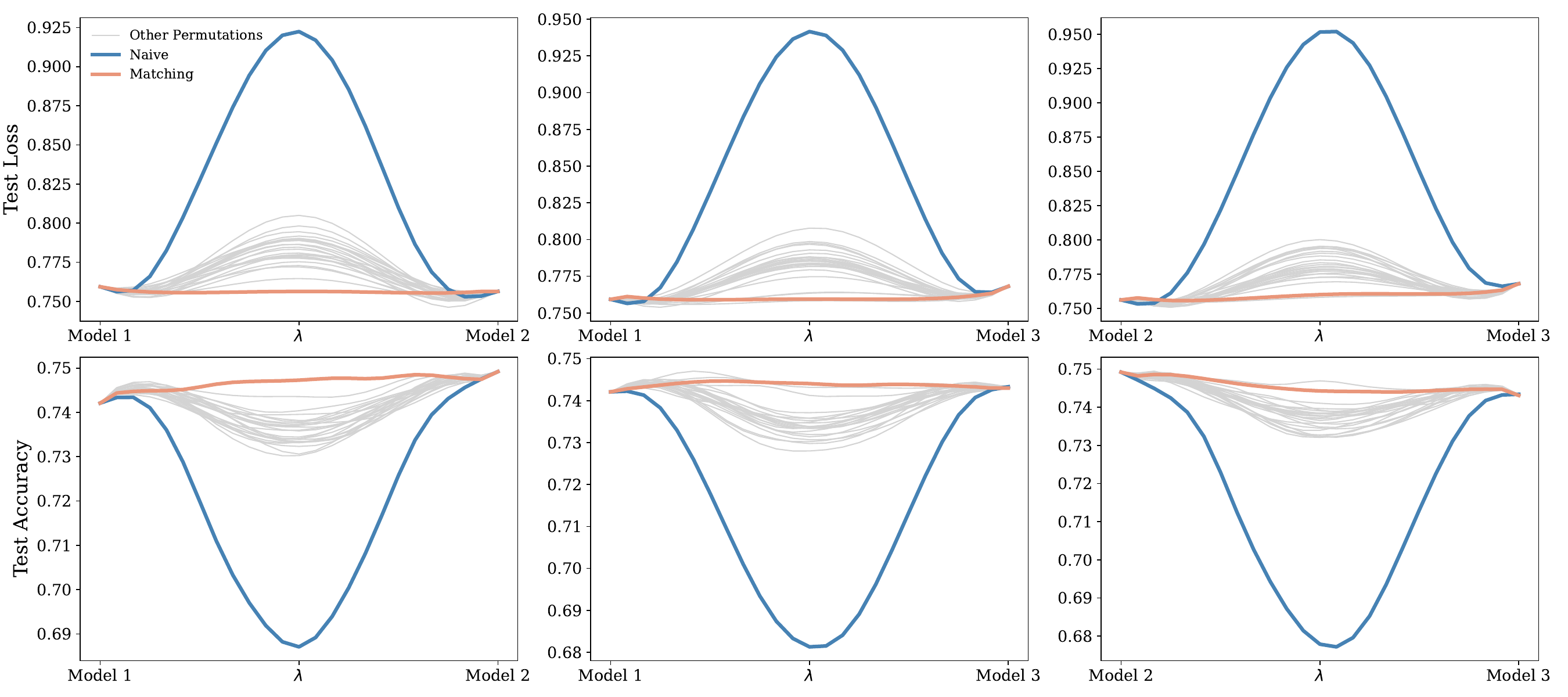} 
        \caption{8 attention heads}
        \label{fig:attn-cifar10-2-8-0-heads-permu-all}
    \end{subfigure}
    \caption{Linear Mode Connectivity for ViT on CIFAR-10 with 2 layers (all head permutations)}
\end{figure}

\begin{figure}[H]
    \centering
    \begin{subfigure}{0.48\textwidth}
        \centering
        \includegraphics[width=1\textwidth]{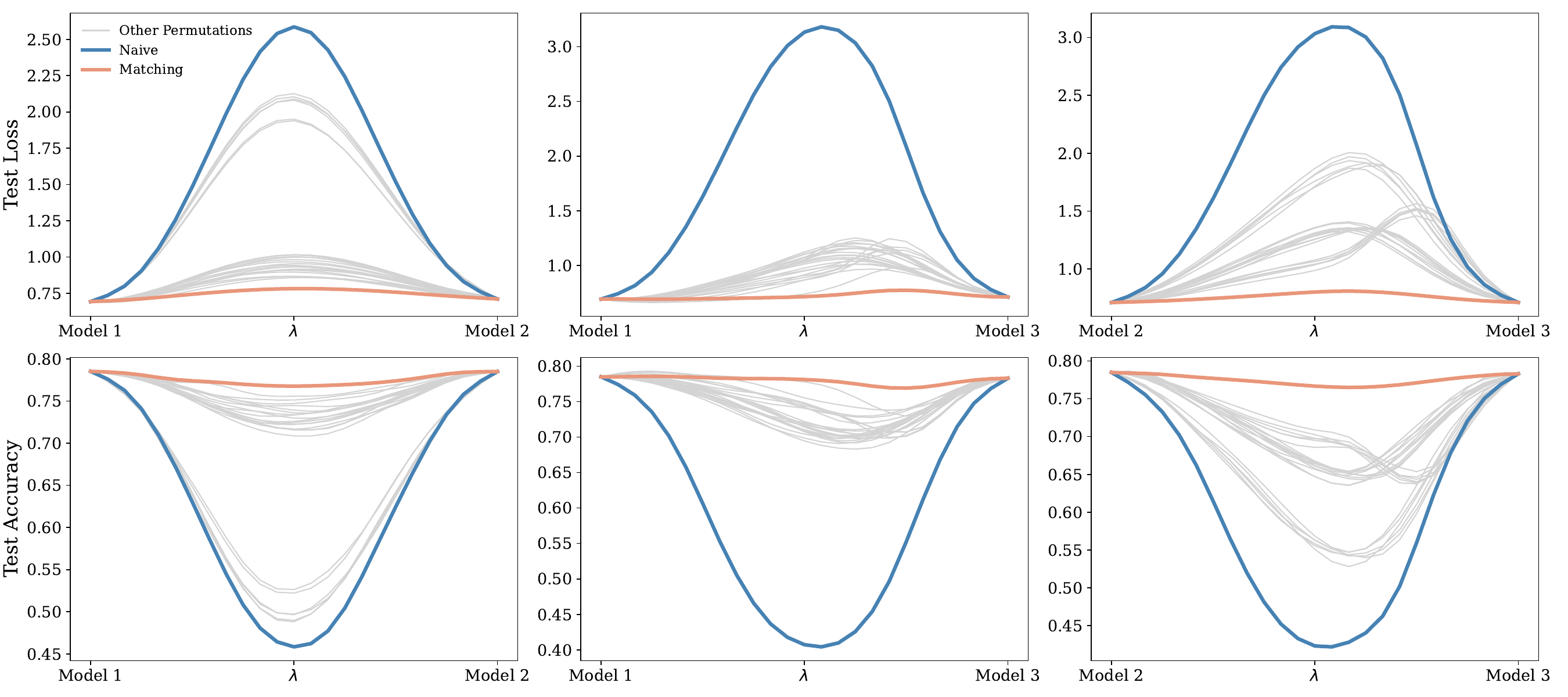} 
        \caption{4 attention heads}
        \label{fig:attn-cifar10-6-4-0-heads-permu-all}
    \end{subfigure}
    \hfill
    \begin{subfigure}{0.48\textwidth}
        \centering
        \includegraphics[width=1\textwidth]{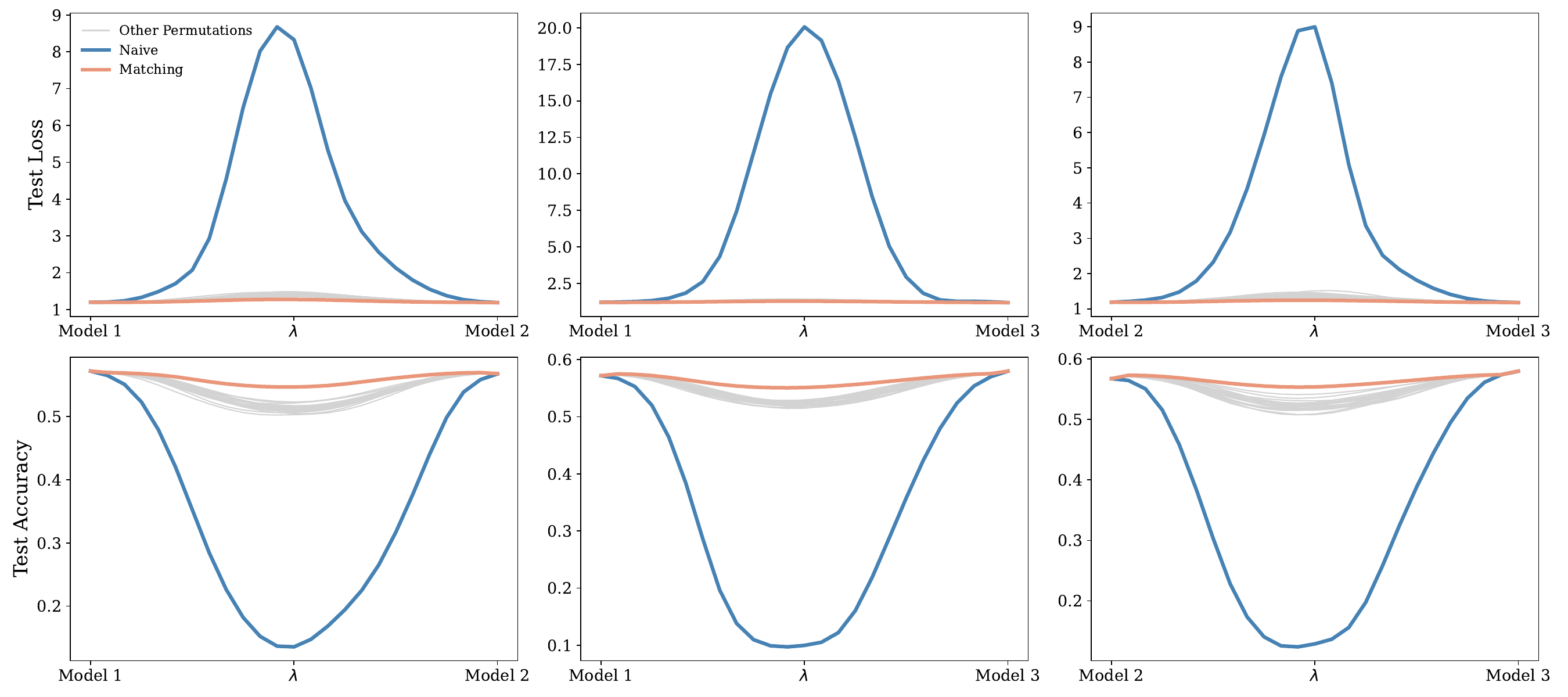} 
        \caption{8 attention heads}
        \label{fig:attn-cifar10-6-8-0-heads-permu-all}
    \end{subfigure}
    \caption{Linear Mode Connectivity for ViT on CIFAR-10 with 6 layers (all head permutations)}
\end{figure}

\begin{figure}[H]
    \centering
    \begin{subfigure}{0.48\textwidth}
        \centering
        \includegraphics[width=1\textwidth]{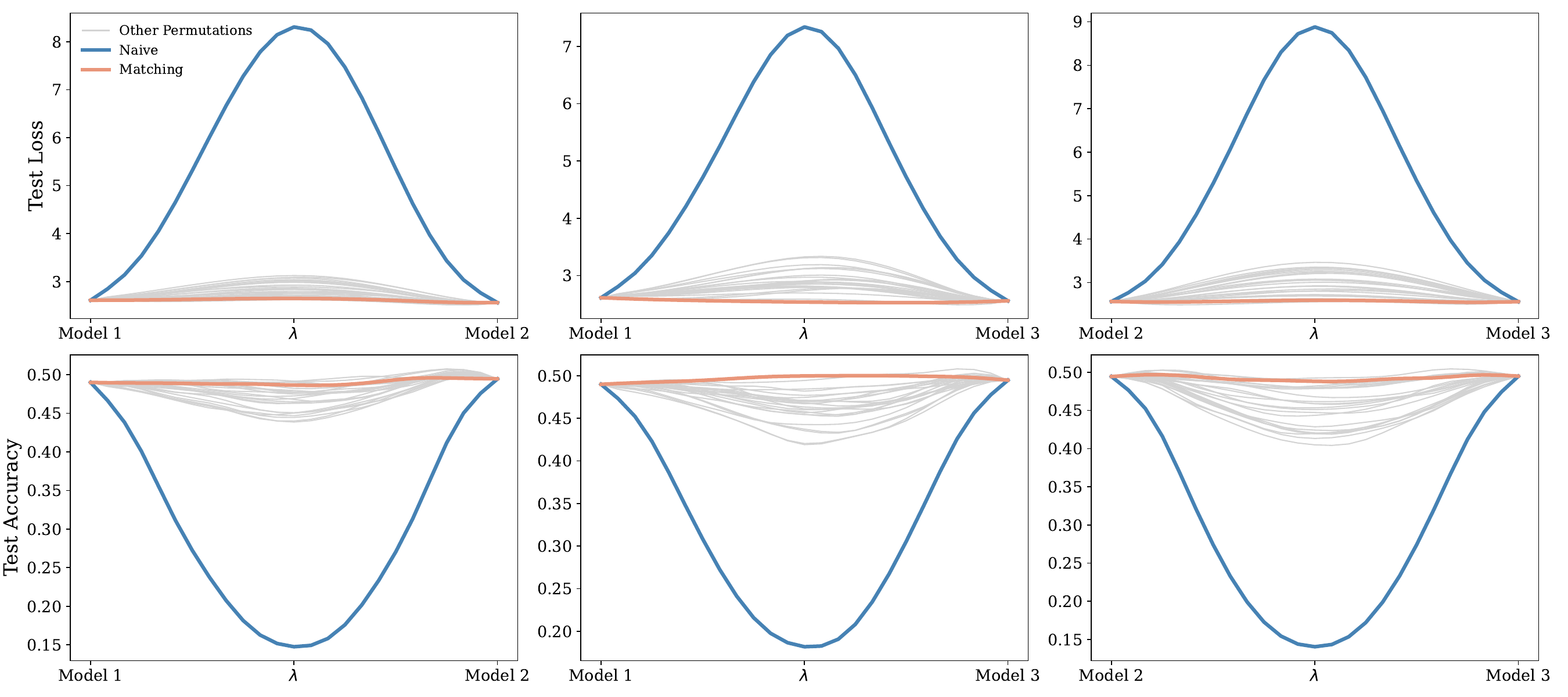} 
        \caption{4 attention heads}
        \label{fig:attn-cifar100-6-4-0-heads-permu-all}
    \end{subfigure}
    \hfill
    \begin{subfigure}{0.48\textwidth}
        \centering
        \includegraphics[width=1\textwidth]{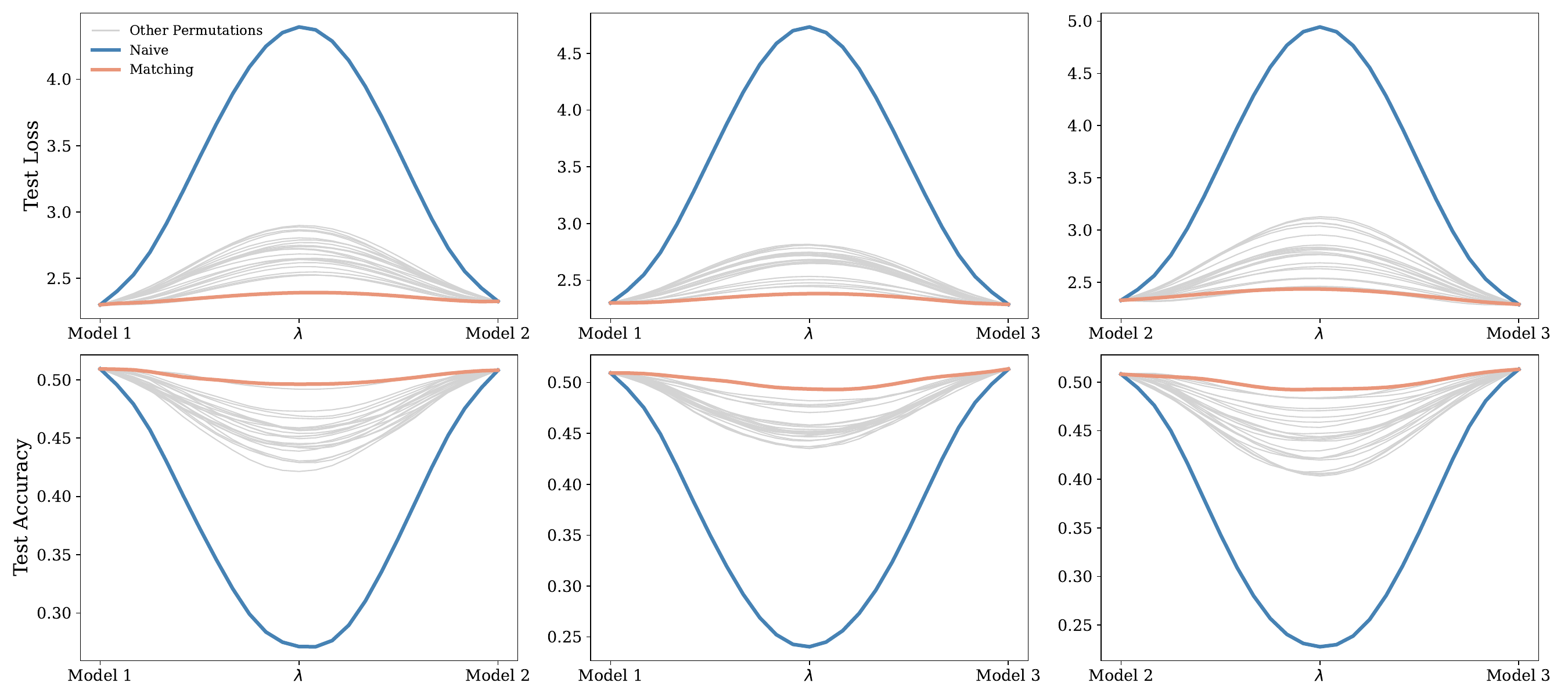} 
        \caption{8 attention heads}
        \label{fig:attn-cifar100-6-8-0-heads-permu-all}
    \end{subfigure}
    \caption{Linear Mode Connectivity for ViT on CIFAR-100 with 6 layers (all head permutations)}
\end{figure}

\begin{figure}[H]
    \centering
    \begin{subfigure}{0.48\textwidth}
        \centering
        \includegraphics[width=1\textwidth]{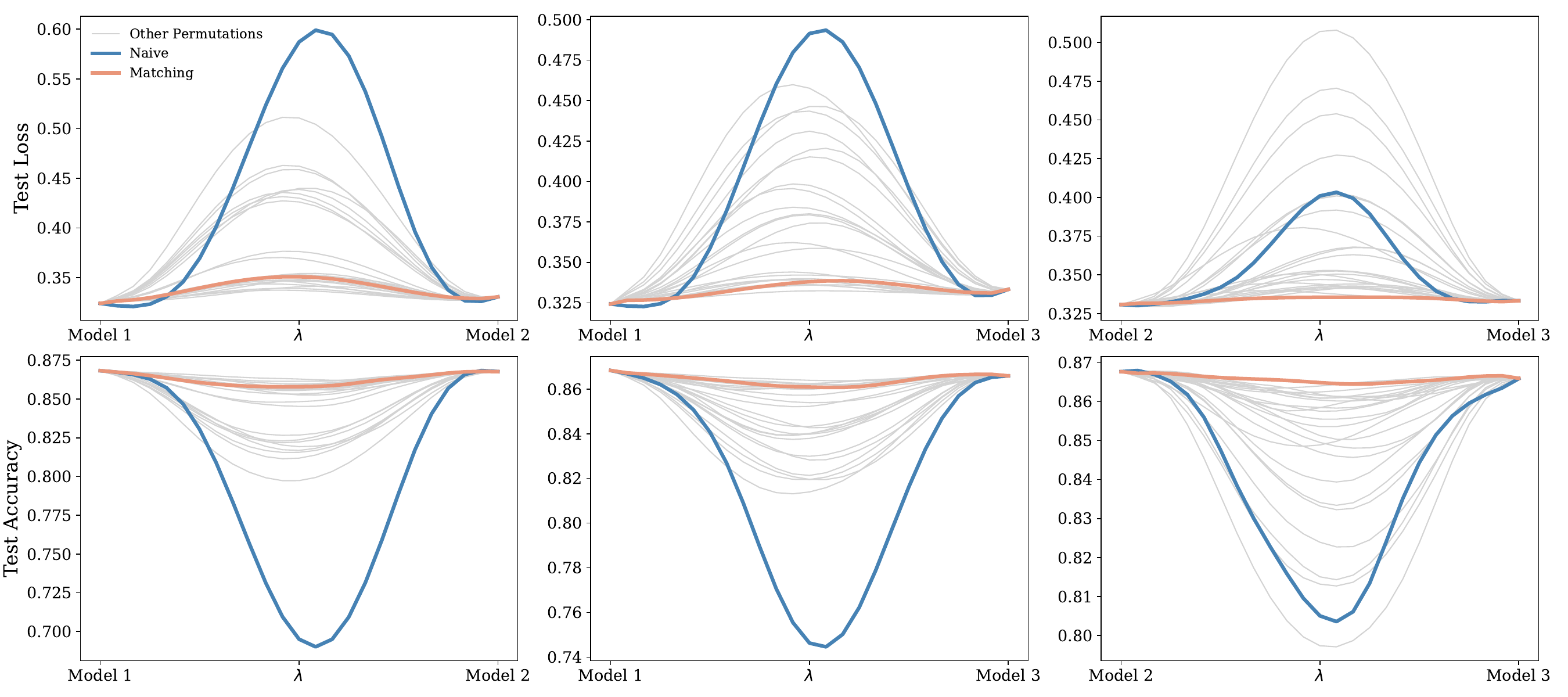} 
        \caption{4 attention heads}
        \label{fig:attn-imdbreview-2-4-0-heads-permu-all}
    \end{subfigure}
    \hfill
    \begin{subfigure}{0.48\textwidth}
        \centering
        \includegraphics[width=1\textwidth]{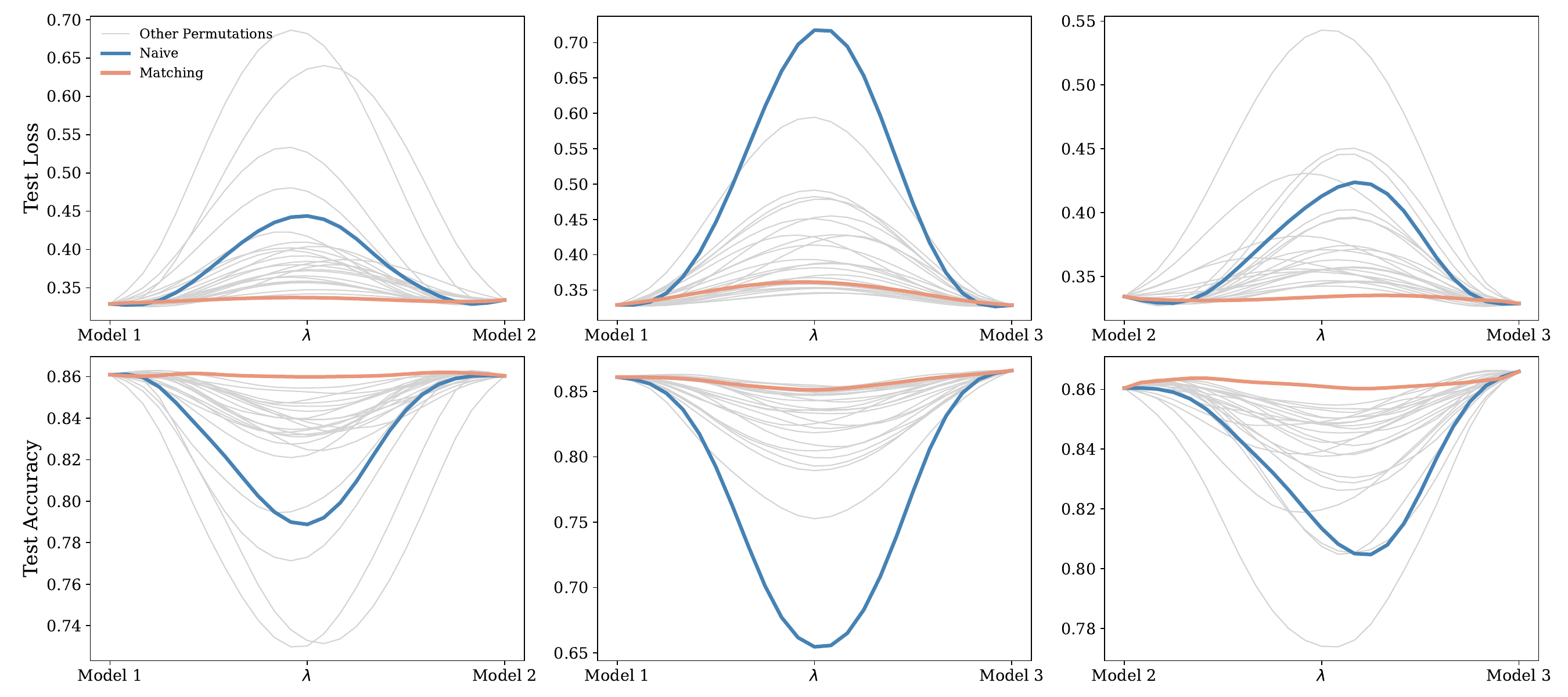} 
        \caption{8 attention heads}
        \label{fig:attn-imdbreview-2-8-0-heads-permu-all}
    \end{subfigure}
    \caption{Linear Mode Connectivity for BERT on IMDBreview with 2 layers (all head permutations)}
\end{figure}

\begin{figure}[H]
    \centering
    \begin{subfigure}{0.48\textwidth}
        \centering
        \includegraphics[width=1\textwidth]{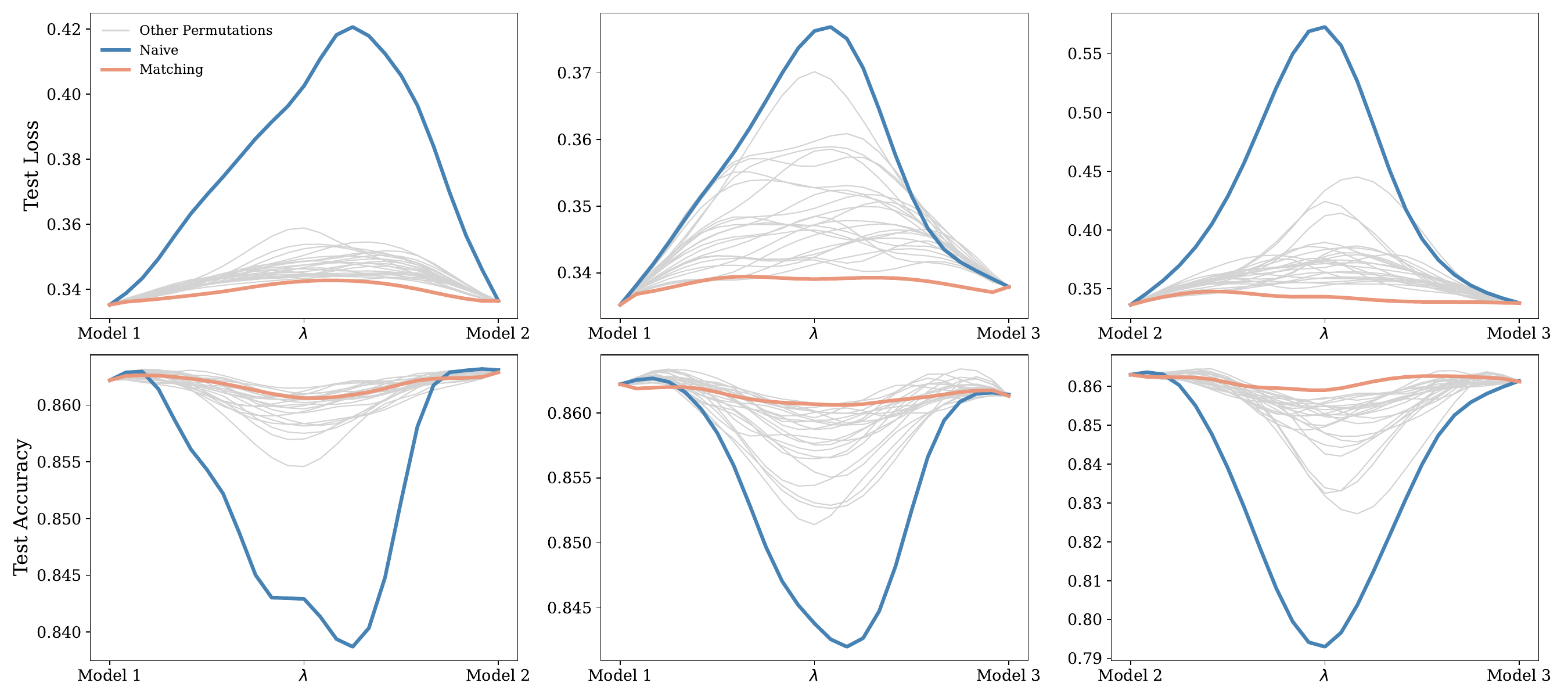} 
        \caption{4 attention heads}
        \label{fig:attn-imdbreview-6-4-0-heads-permu-all}
    \end{subfigure}
    \hfill
    \begin{subfigure}{0.48\textwidth}
        \centering
        \includegraphics[width=1\textwidth]{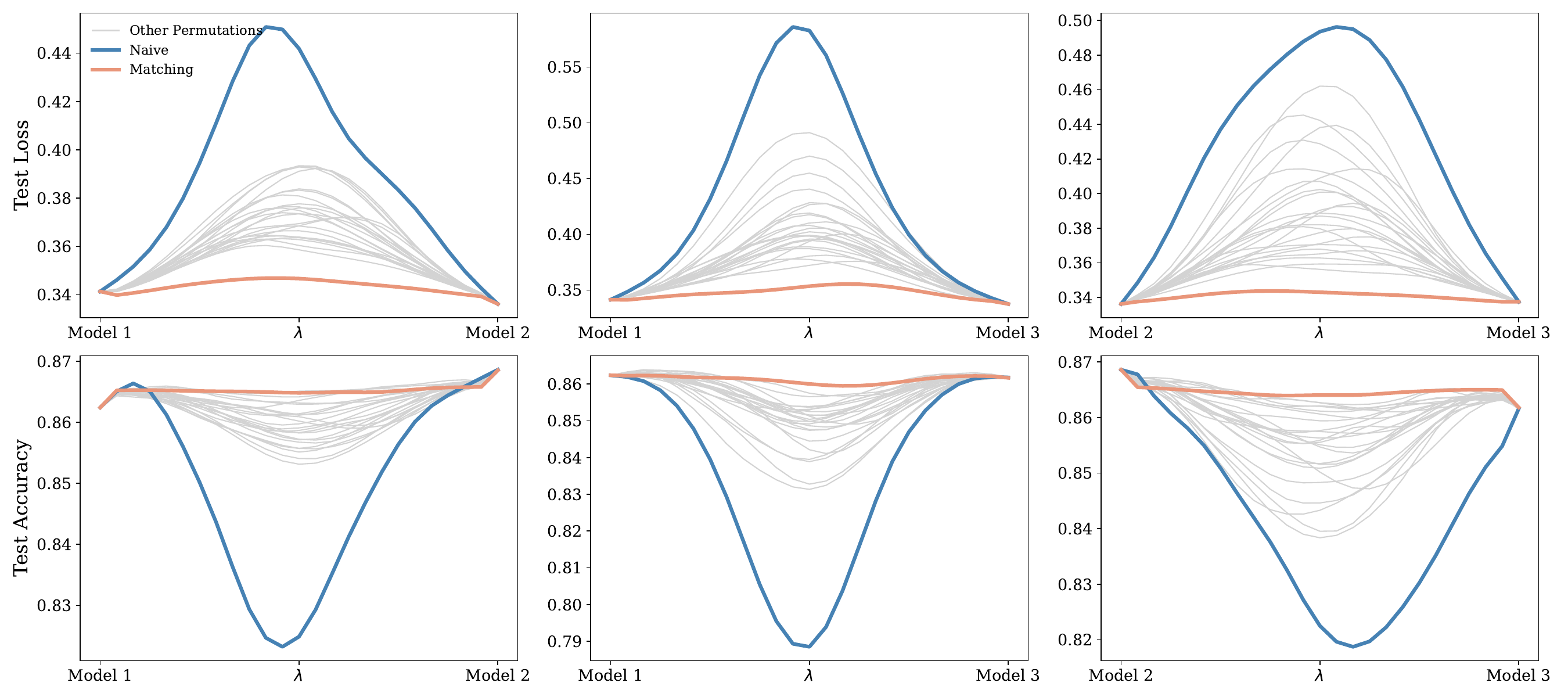} 
        \caption{8 attention heads}
        \label{fig:attn-imdbreview-6-8-0-heads-permu-all}
    \end{subfigure}
    \caption{Linear Mode Connectivity for BERT on IMDBreview with 6 layers (all head permutations)}
\end{figure}

\begin{figure}[H]
    \centering
    \begin{subfigure}{0.48\textwidth}
        \centering
        \includegraphics[width=1\textwidth]{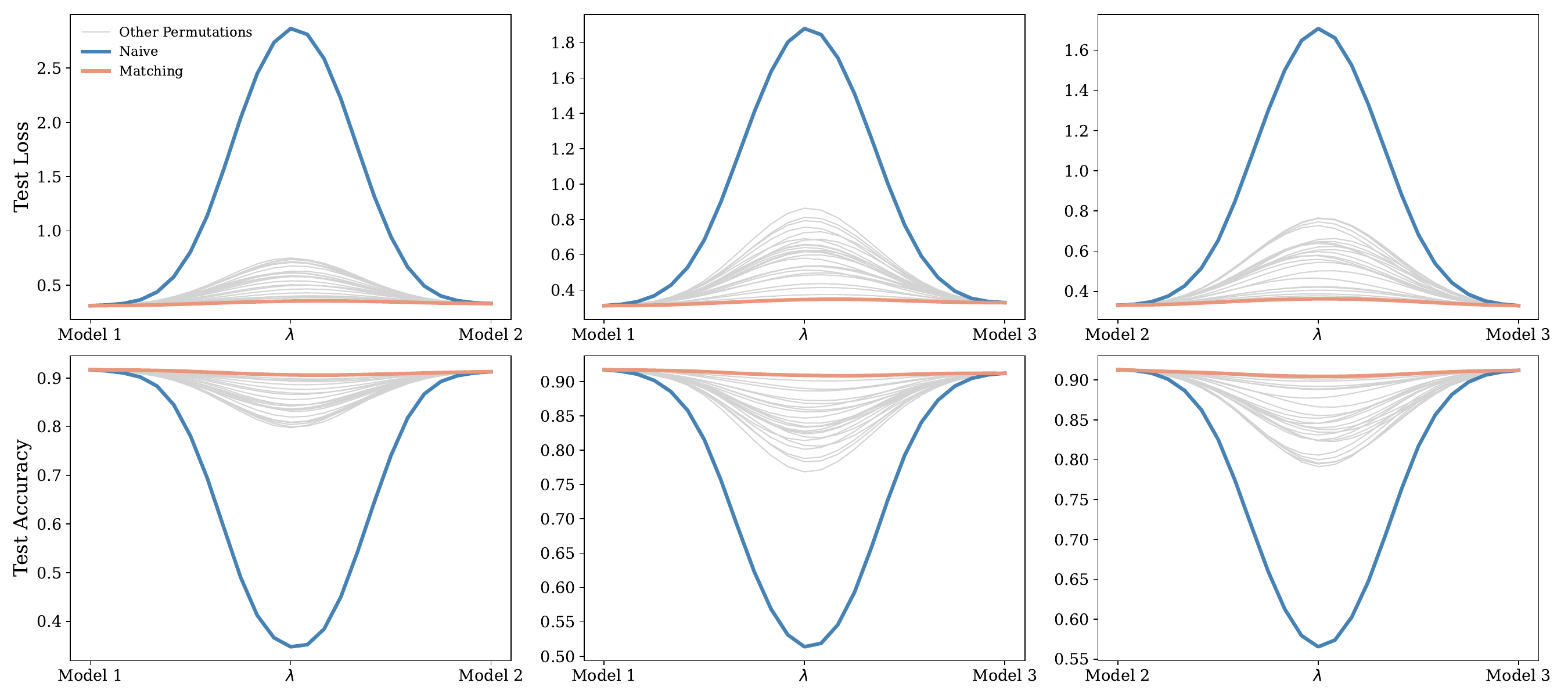} 
        \caption{4 attention heads}
        \label{fig:attn-dbpedia-2-4-0-heads-permu-all}
    \end{subfigure}
    \hfill
    \begin{subfigure}{0.48\textwidth}
        \centering
        \includegraphics[width=1\textwidth]{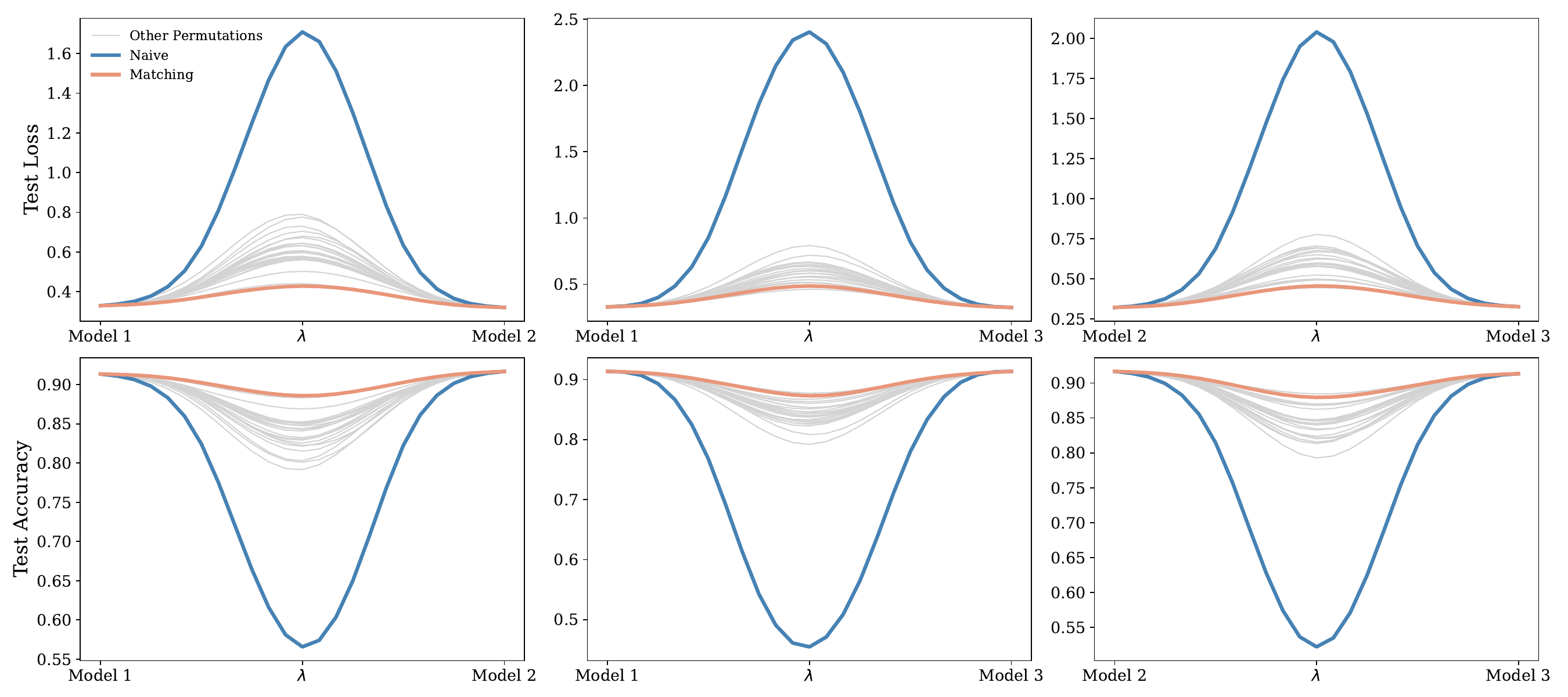} 
        \caption{8 attention heads}
        \label{fig:attn-dbpedia-2-8-0-heads-permu-all}
    \end{subfigure}
    \caption{Linear Mode Connectivity for BERT on DBPedia with 2 layers (all head permutations)}
\end{figure}

\begin{figure}[H]
    \centering
    \begin{subfigure}{0.48\textwidth}
        \centering
        \includegraphics[width=1\textwidth]{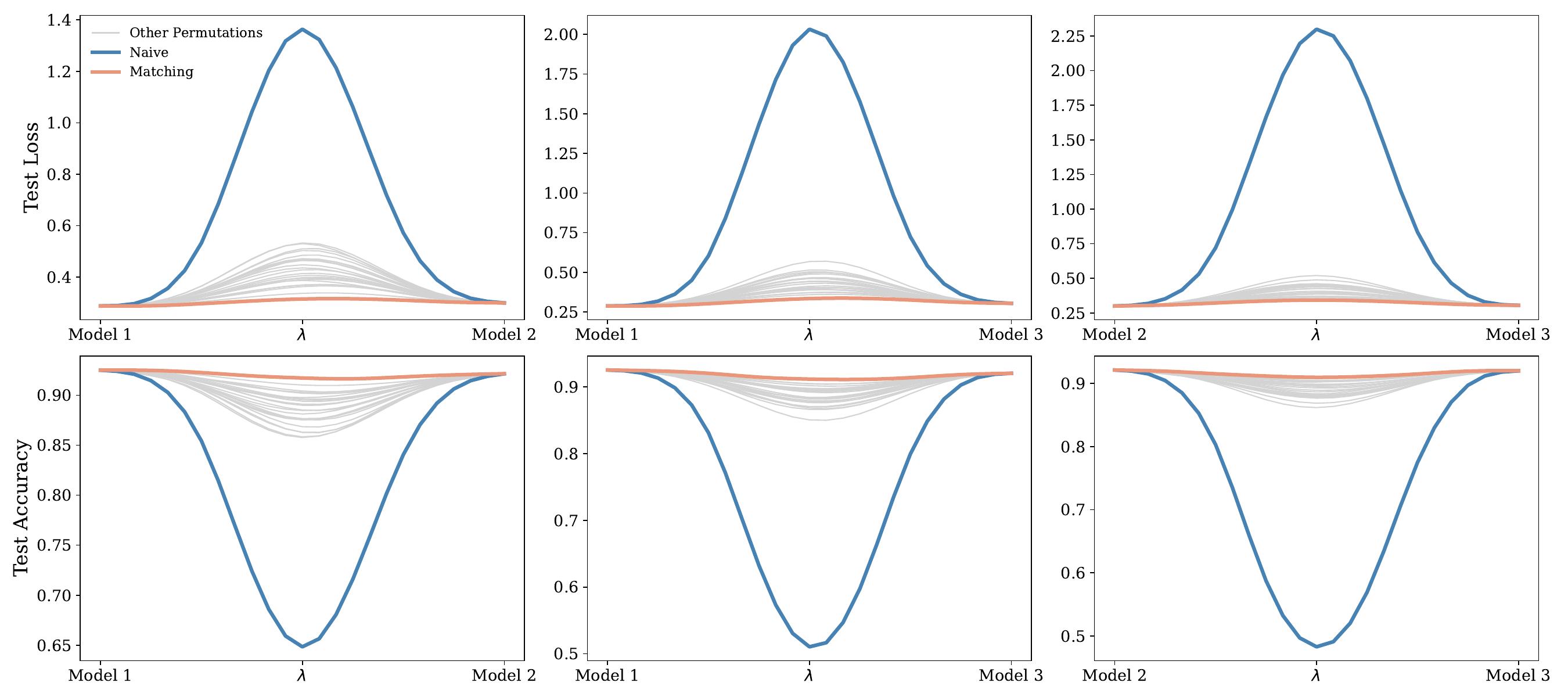} 
        \caption{4 attention heads}
        \label{fig:attn-dbpedia-6-4-0-heads-permu-all}
    \end{subfigure}
    \hfill
    \begin{subfigure}{0.48\textwidth}
        \centering
        \includegraphics[width=1\textwidth]{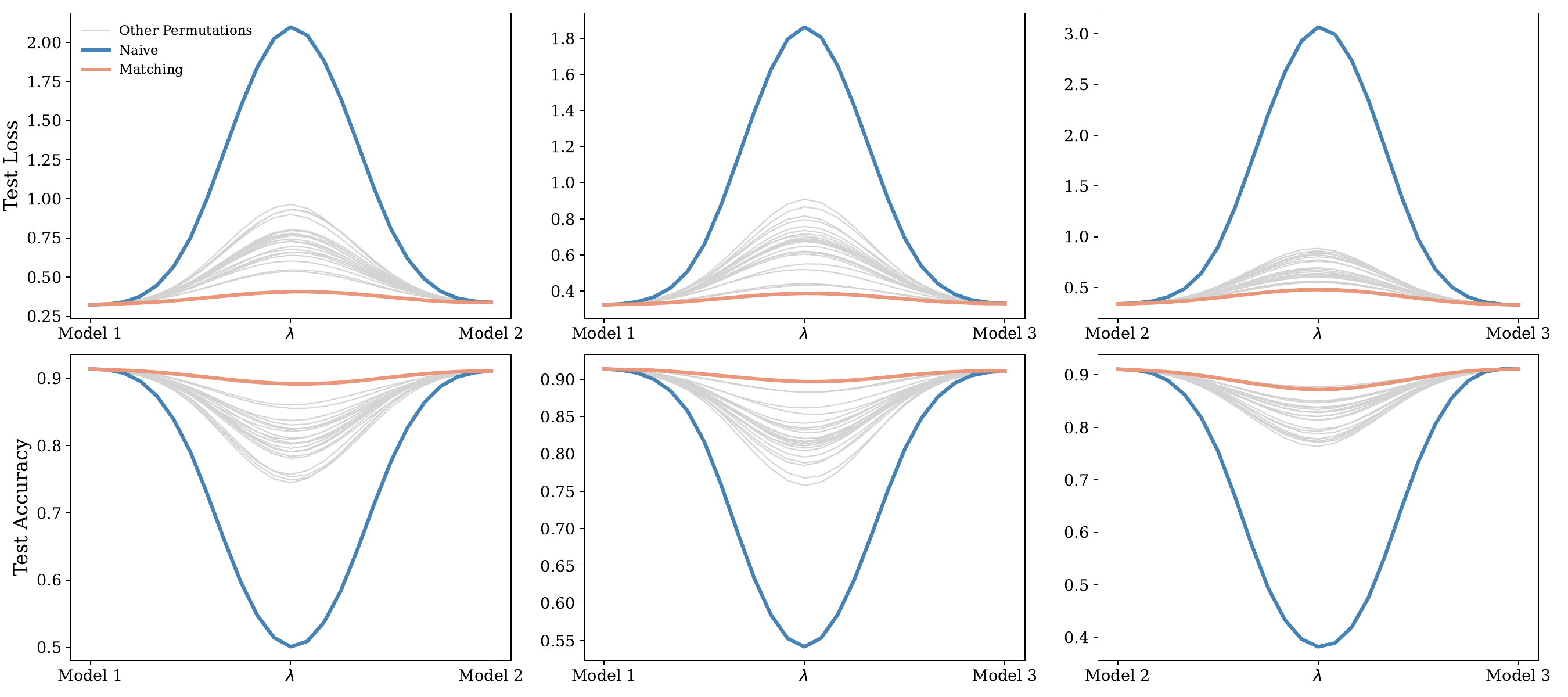} 
        \caption{8 attention heads}
        \label{fig:attn-dbpedia-6-8-0-heads-permu-all}
    \end{subfigure}
    \caption{Linear Mode Connectivity for BERT on DBPedia with 6 layers (all head permutations)}
\end{figure}

\begin{figure}[H]
    \centering
    \begin{subfigure}{0.48\textwidth}
        \centering
        \includegraphics[width=1\textwidth]{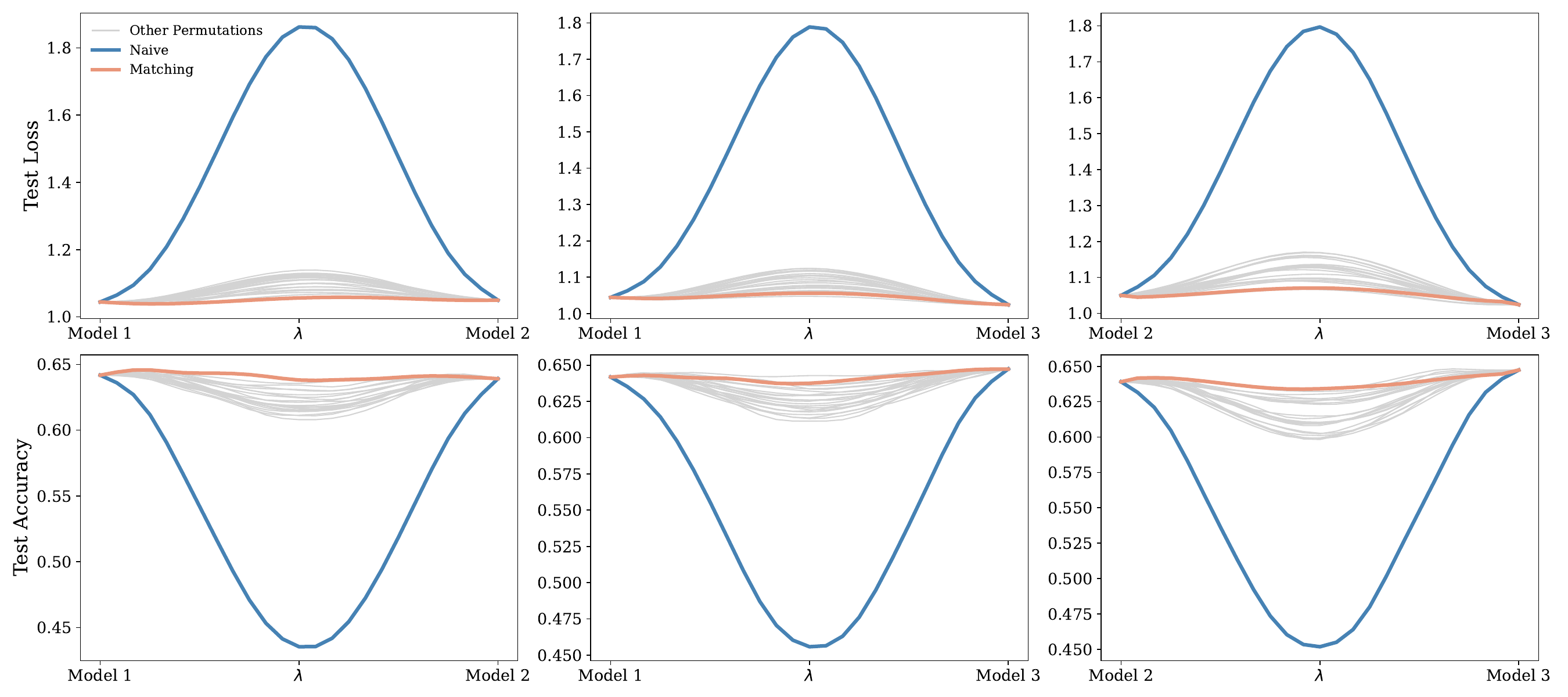} 
        \caption{4 attention heads}
        \label{fig:attn-cifar10-2-4-0-rope-heads-permu-all}
    \end{subfigure}
    \hfill
    \begin{subfigure}{0.48\textwidth}
        \centering
        \includegraphics[width=1\textwidth]{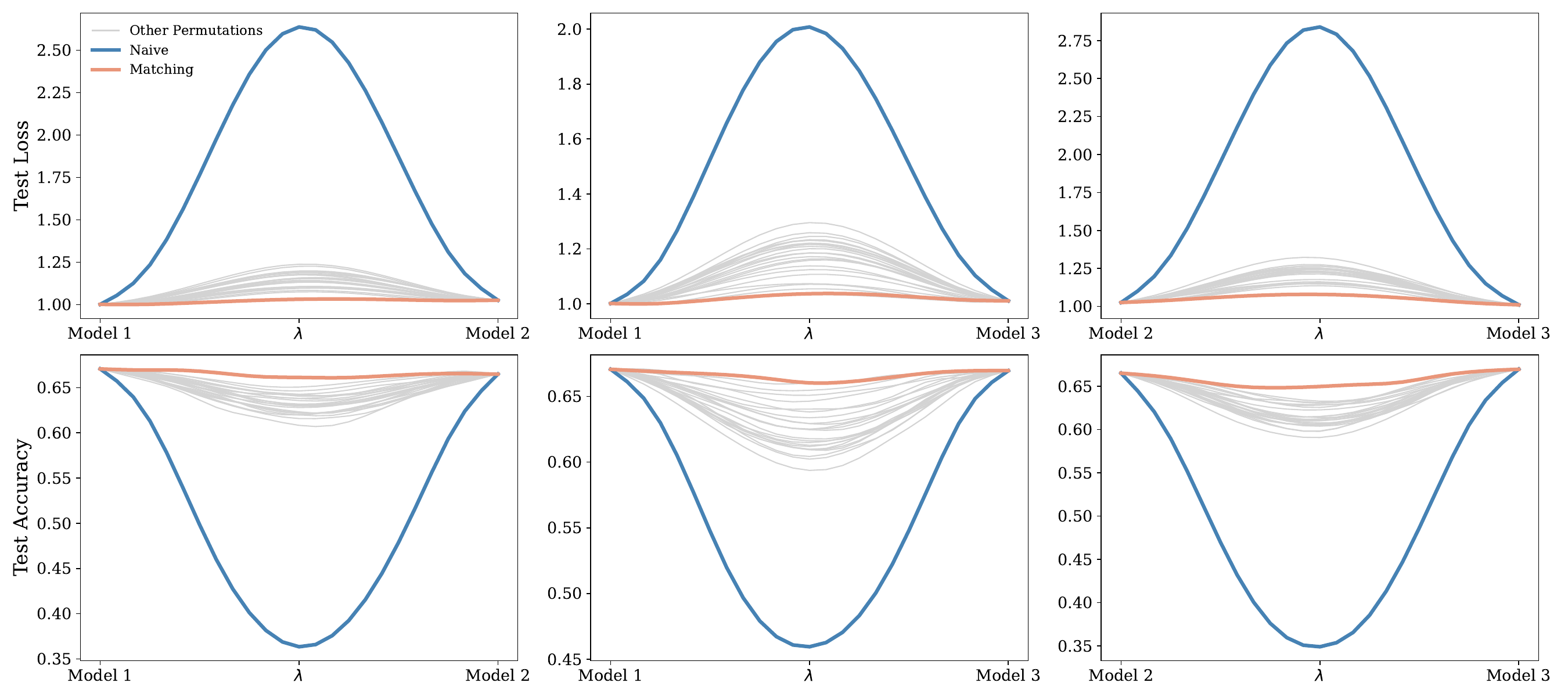} 
        \caption{8 attention heads}
        \label{fig:attn-cifar10-2-8-0-rope-heads-permu-all}
    \end{subfigure}
    \caption{Linear Mode Connectivity for ViT-RoPE on CIFAR-10 with 2 layers (all head permutations)}
\end{figure}

\begin{figure}[H]
    \centering
    \begin{subfigure}{0.48\textwidth}
        \centering
        \includegraphics[width=1\textwidth]{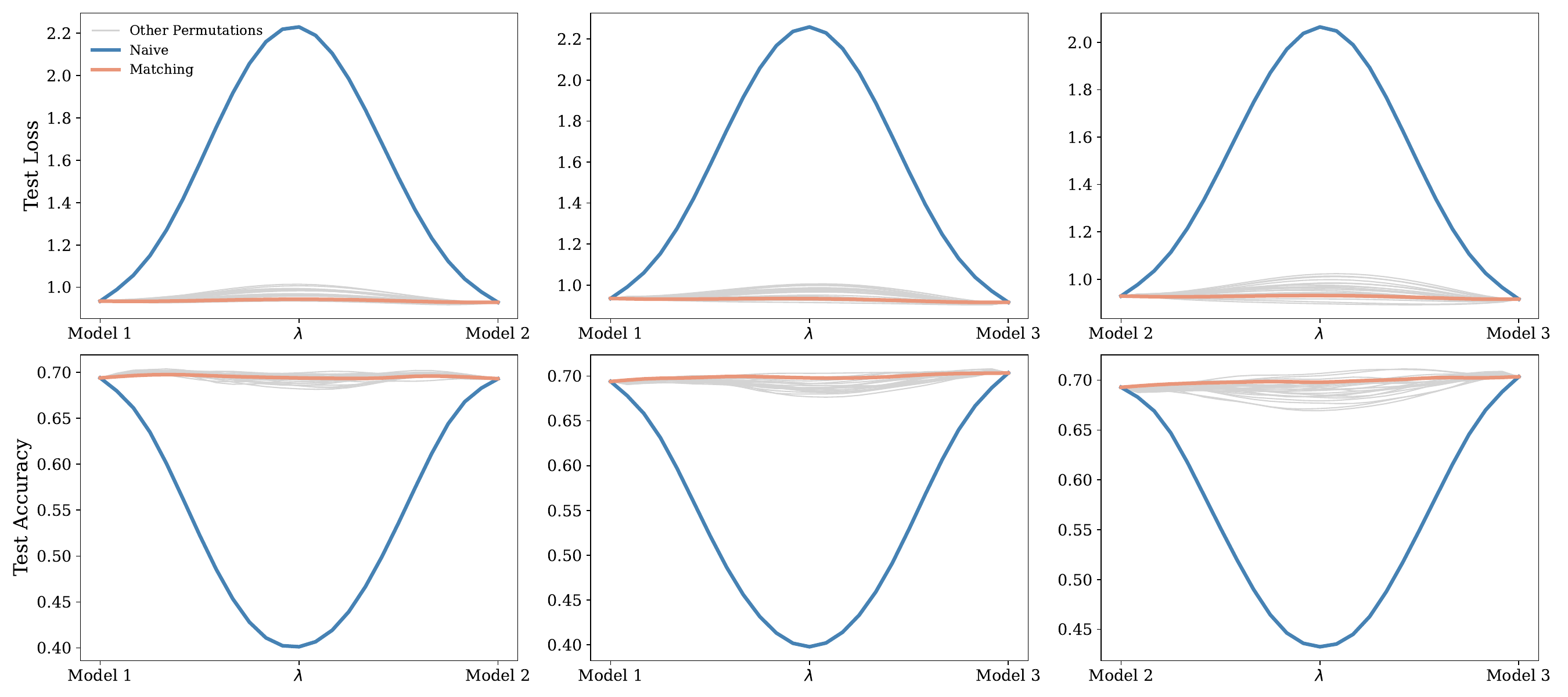} 
        \caption{4 attention heads}
        \label{fig:attn-cifar10-6-4-0-rope-heads-permu-all}
    \end{subfigure}
    \hfill
    \begin{subfigure}{0.48\textwidth}
        \centering
        \includegraphics[width=1\textwidth]{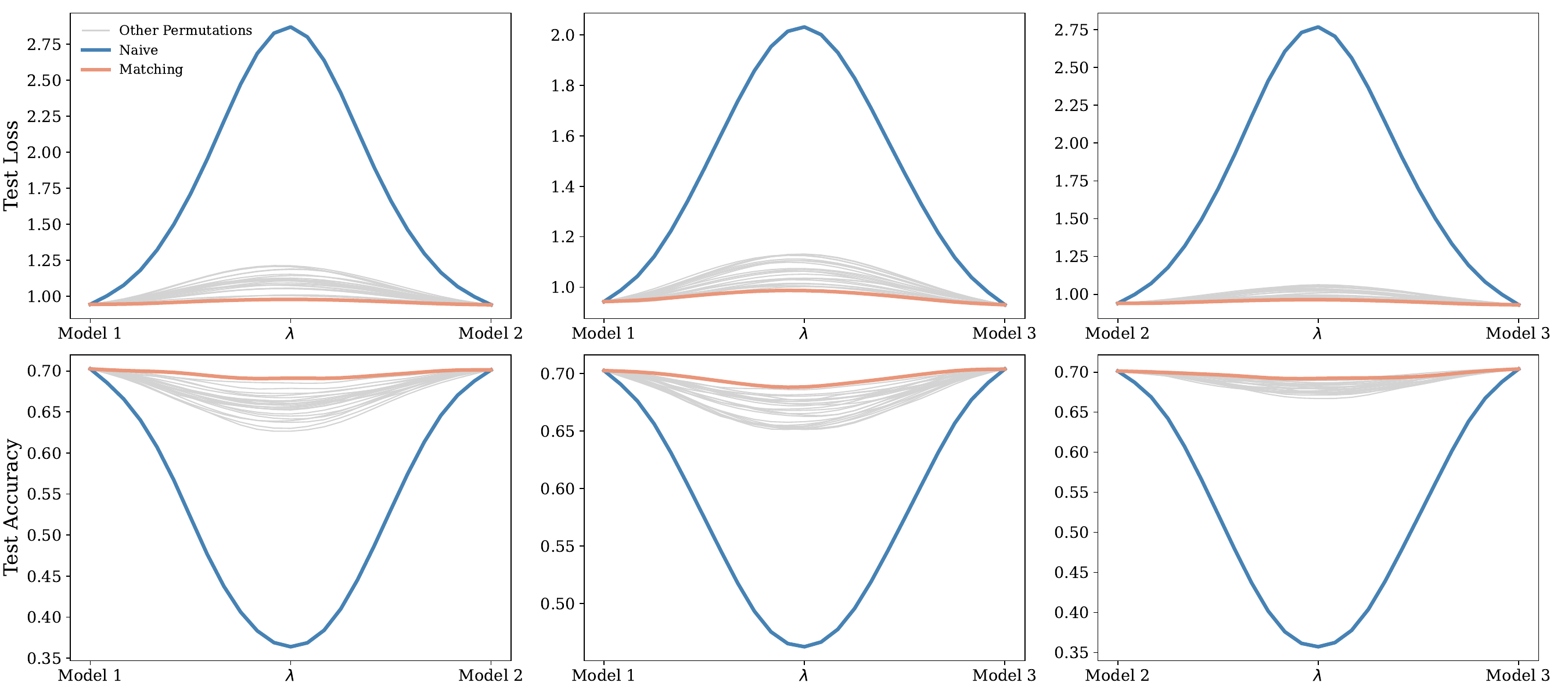} 
        \caption{8 attention heads}
        \label{fig:attn-cifar10-6-8-0-rope-heads-permu-all}
    \end{subfigure}
    \caption{Linear Mode Connectivity for ViT-RoPE on CIFAR-10 with 6 layers (all head permutations)}
\end{figure}

\begin{figure}[H]
    \centering
    \begin{subfigure}{0.48\textwidth}
        \centering
        \includegraphics[width=1\textwidth]{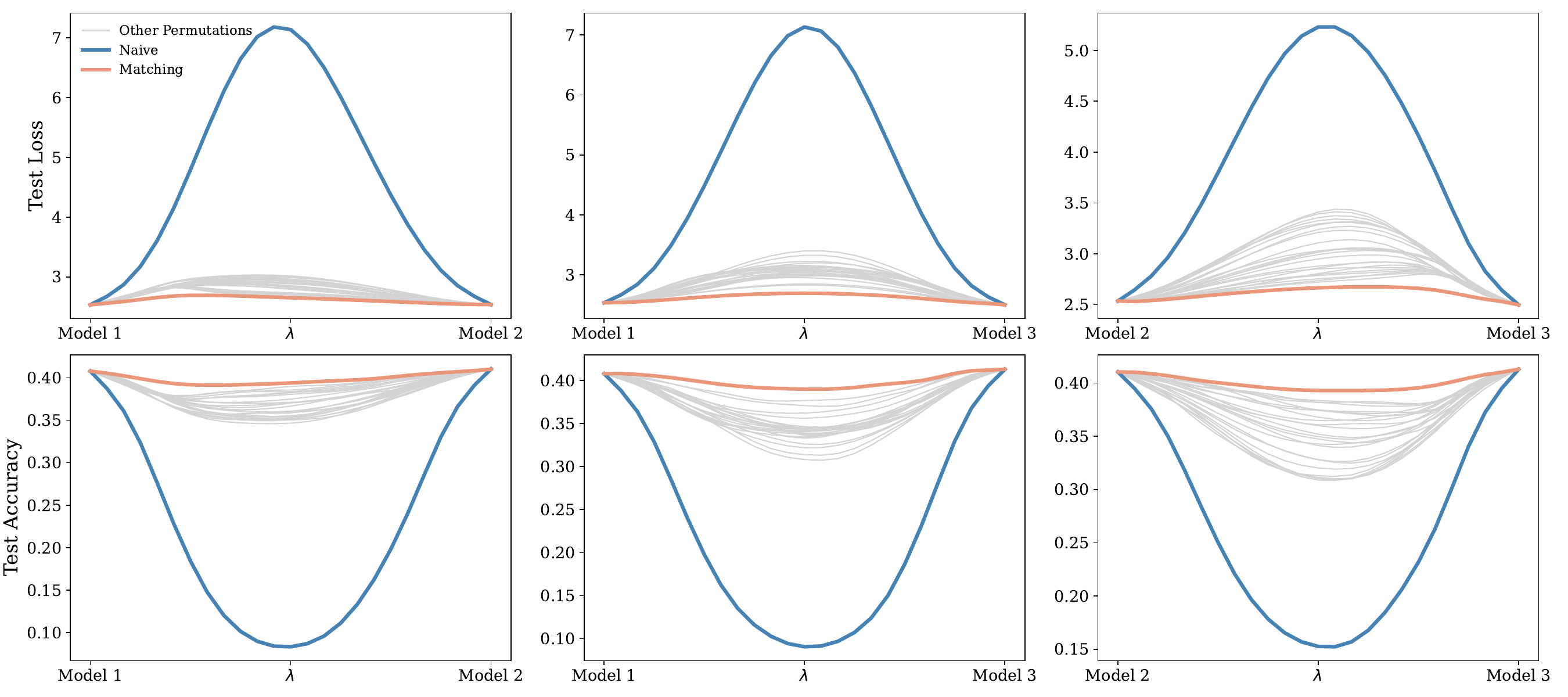} 
        \caption{4 attention heads}
        \label{fig:attn-cifar100-6-4-0-rope-heads-permu-all}
    \end{subfigure}
    \hfill
    \begin{subfigure}{0.48\textwidth}
        \centering
        \includegraphics[width=1\textwidth]{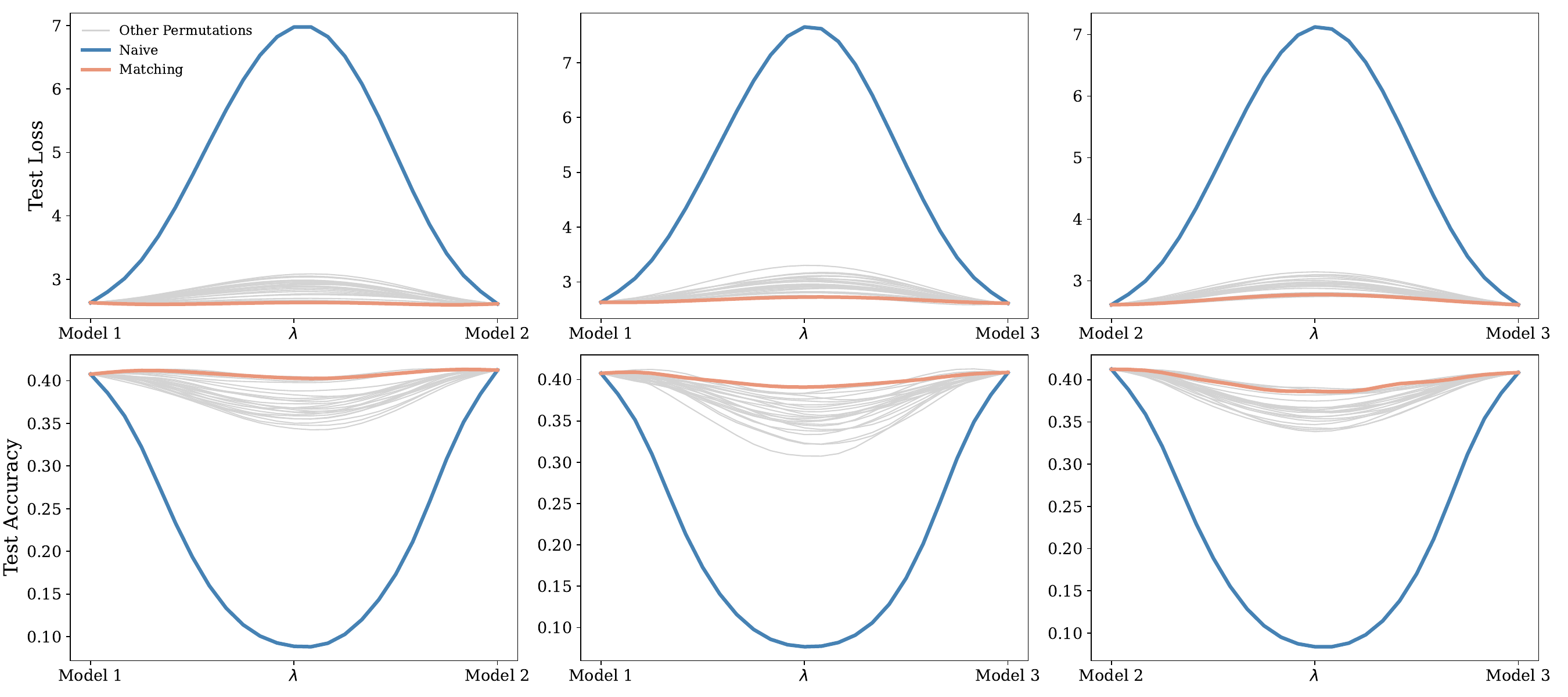} 
        \caption{8 attention heads}
        \label{fig:attn-cifar100-6-8-0-rope-heads-permu-all}
    \end{subfigure}
    \caption{Linear Mode Connectivity for ViT-RoPE on CIFAR-100 with 6 layers (all head permutations)}
\end{figure}

\begin{figure}[H]
    \centering
    \begin{subfigure}{0.48\textwidth}
        \centering
        \includegraphics[width=1\textwidth]{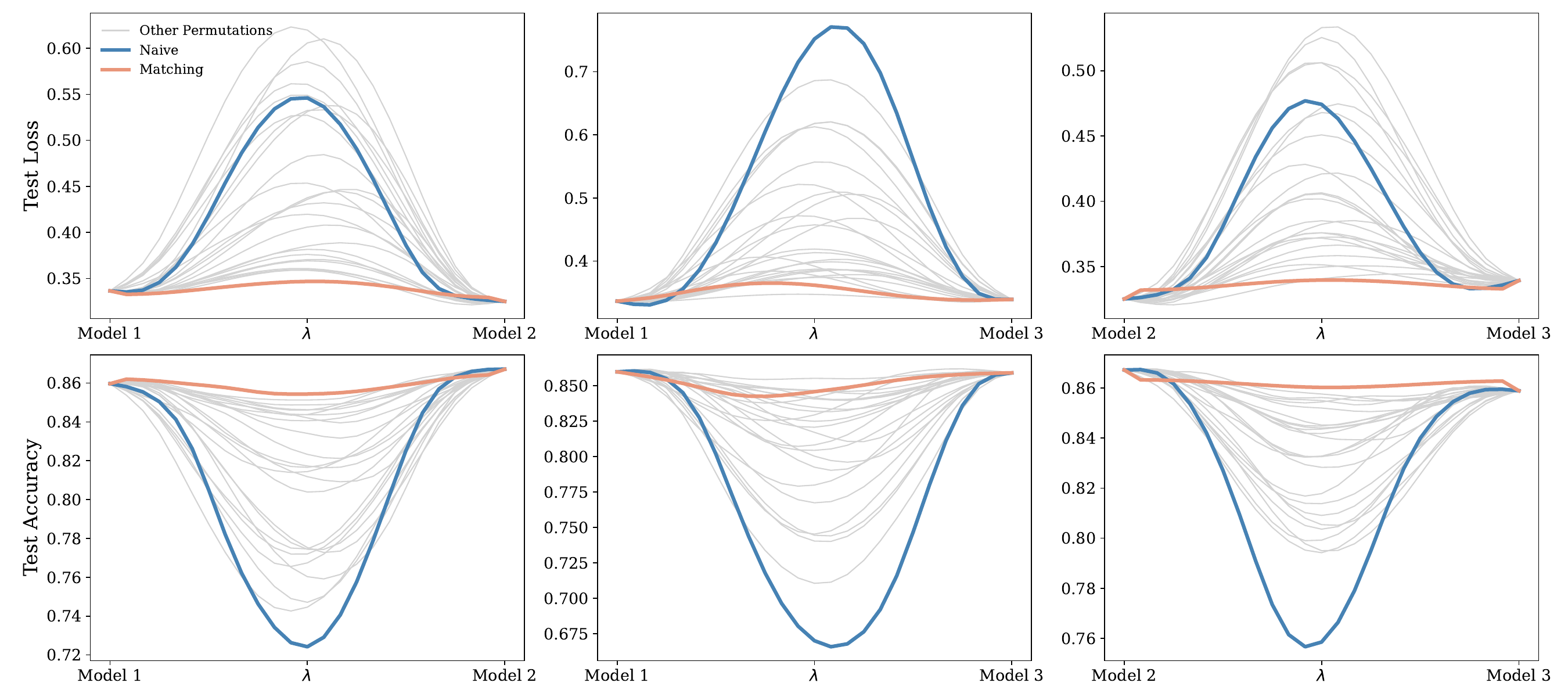} 
        \caption{4 attention heads}
        \label{fig:attn-imdbreview-2-4-0-rope-heads-permu-all}
    \end{subfigure}
    \hfill
    \begin{subfigure}{0.48\textwidth}
        \centering
        \includegraphics[width=1\textwidth]{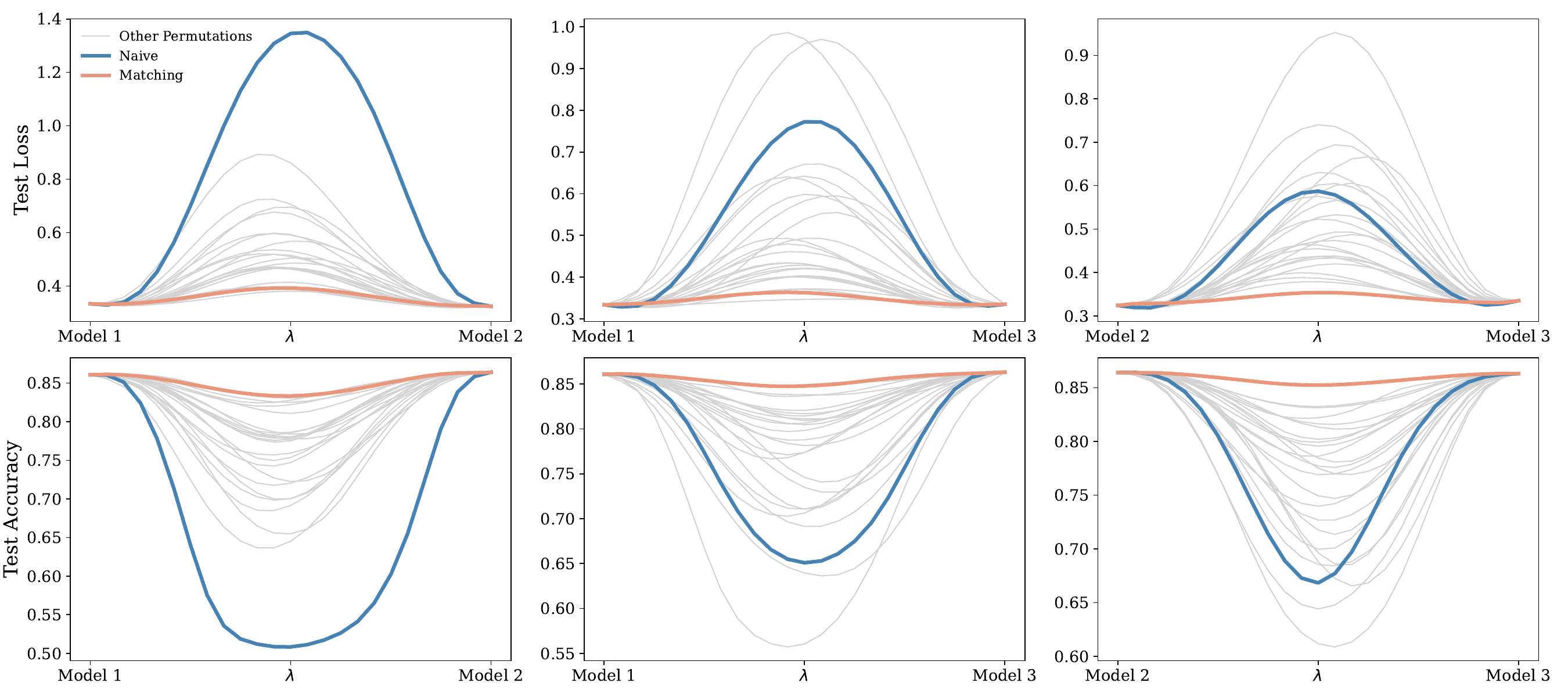} 
        \caption{8 attention heads}
        \label{fig:attn-imdbreview-2-8-0-rope-heads-permu-all}
    \end{subfigure}
    \caption{Linear Mode Connectivity for BERT-RoPE on IMDBreview with 2 layers (all head permutations)}
\end{figure}

\begin{figure}[H]
    \centering
    \begin{subfigure}{0.48\textwidth}
        \centering
        \includegraphics[width=1\textwidth]{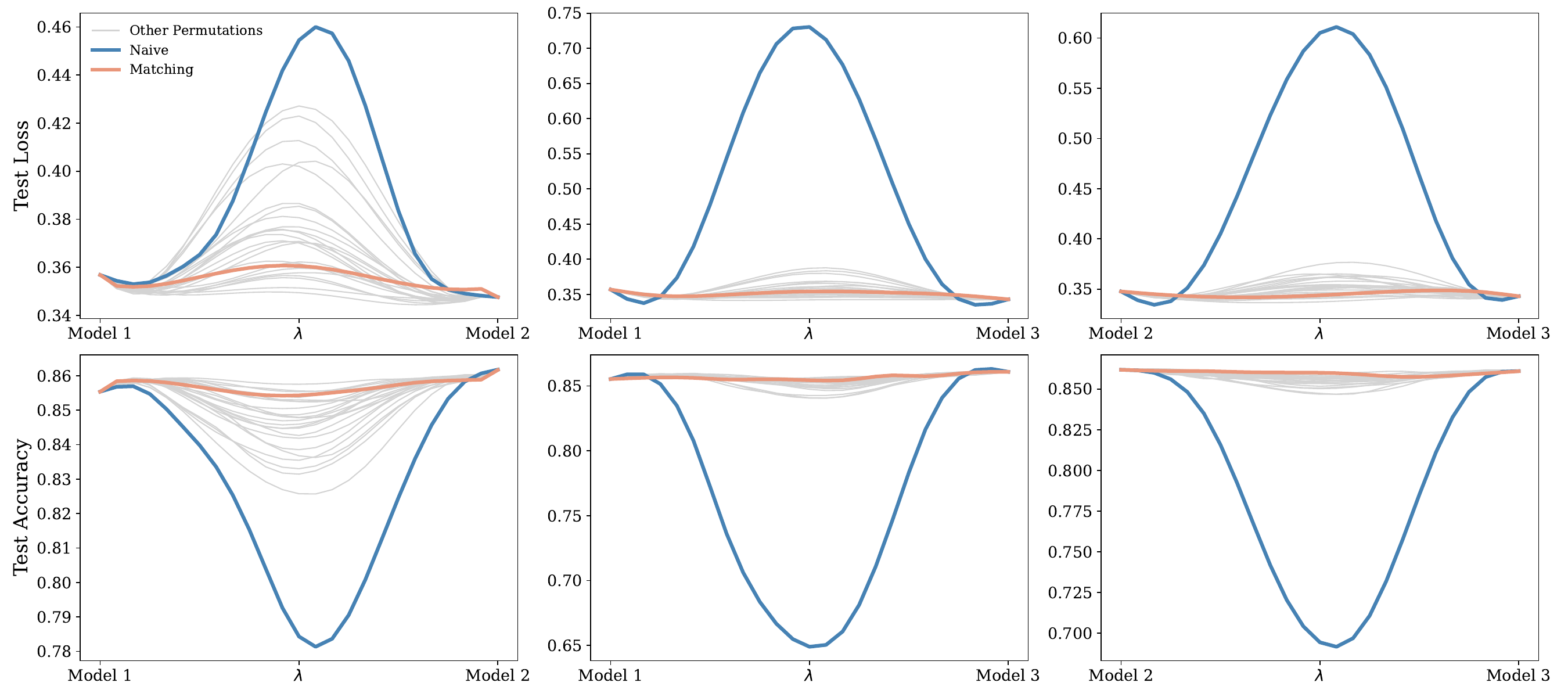} 
        \caption{4 attention heads}
        \label{fig:attn-imdbreview-6-4-0-rope-heads-permu-all}
    \end{subfigure}
    \hfill
    \begin{subfigure}{0.48\textwidth}
        \centering
        \includegraphics[width=1\textwidth]{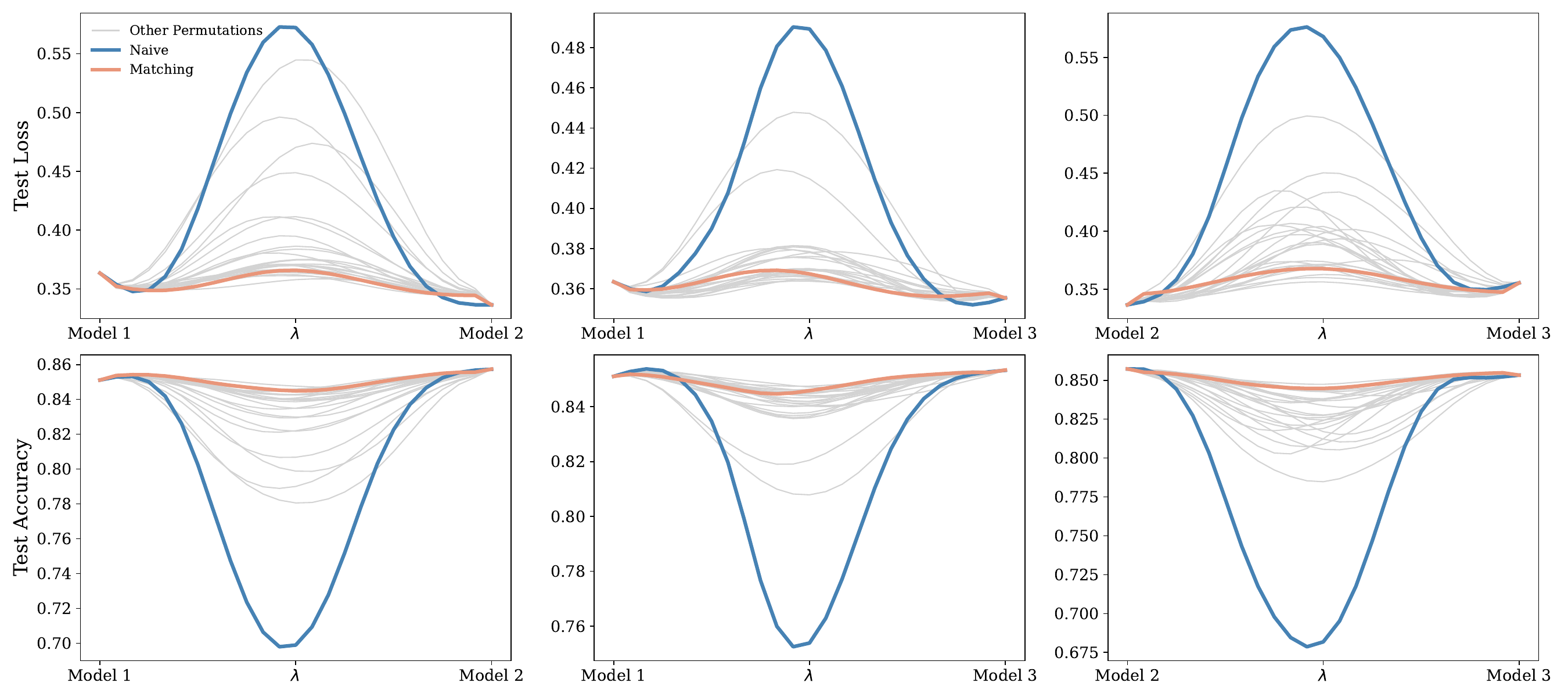} 
        \caption{8 attention heads}
        \label{fig:attn-imdbreview-6-8-0-rope-heads-permu-all}
    \end{subfigure}
    \caption{Linear Mode Connectivity for BERT-RoPE on IMDBreview with 6 layers (all head permutations)}
\end{figure}

\begin{figure}[H]
    \centering
    \begin{subfigure}{0.48\textwidth}
        \centering
        \includegraphics[width=1\textwidth]{sections/figures/heads_permu_all/dbpedia/dbpedia_plots2_4_0_all_permutations_plot.pdf} 
        \caption{4 attention heads}
        \label{fig:attn-dbpedia-2-4-0-rope-heads-permu-all}
    \end{subfigure}
    \hfill
    \begin{subfigure}{0.48\textwidth}
        \centering
        \includegraphics[width=1\textwidth]{sections/figures/heads_permu_all/dbpedia/dbpedia_plots2_8_0_all_permutations_plot.pdf} 
        \caption{8 attention heads}
        \label{fig:attn-dbpedia-2-8-0-rope-heads-permu-all}
    \end{subfigure}
    \caption{Linear Mode Connectivity for BERT-RoPE on DBPedia with 2 layers (all head permutations)}
\end{figure}

\begin{figure}[H]
    \centering
    \begin{subfigure}{0.48\textwidth}
        \centering
        \includegraphics[width=1\textwidth]{sections/figures/heads_permu_all/dbpedia/dbpedia_plots6_4_0_all_permutations_plot.pdf} 
        \caption{4 attention heads}
        \label{fig:attn-dbpedia-6-4-0-rope-heads-permu-all}
    \end{subfigure}
    \hfill
    \begin{subfigure}{0.48\textwidth}
        \centering
        \includegraphics[width=1\textwidth]{sections/figures/heads_permu_all/dbpedia/dbpedia_plots6_8_0_all_permutations_plot.pdf} 
        \caption{8 attention heads}
        \label{fig:attn-dbpedia-6-8-0-rope-heads-permu-all}
    \end{subfigure}
    \caption{Linear Mode Connectivity for BERT-RoPE on DBPedia with 6 layers (all head permutations)}
\end{figure}

\section{Ablation of Distance Metrics for Head Matching}
\label{appendix:distance}
As described in the previous sections, attention-head matching between two transformer models is formulated as a bipartite assignment problem and solved using the Hungarian algorithm. Specifically, a cost matrix is constructed by computing pairwise distances between head $i$ from the first model and head $j$ from the second model. The resulting assignment determines the permutation used for aligning attention heads prior to model matching. This section presents an ablation study on the choice of distance metric used in the cost matrix. Five distance metrics are considered: $\ell_2$, $\ell_1$, cosine distance, correlation distance, and spectral distance. The impact of each metric is evaluated across different matching scopes, including the first attention layer, the first transformer block, the full attention layer, and the full model. 

Tables~\ref{tab:ablation_norm_wt103}, \ref{tab:ablation_norm_enwik8}, and \ref{tab:ablation_norm_imagenet} report results on Wikitext103, OneBillionWord, and ImageNet-1K, respectively. Across all three datasets, $\ell_2$, $\ell_1$, cosine, and correlation distances yield highly similar results across different matching scopes. In contrast, spectral distance consistently leads to degraded performance, particularly when matching larger portions of the model. Among the evaluated metrics, $\ell_2$ distance exhibits the most stable behavior across datasets and matching scopes. Based on these observations, $\ell_2$ distance is used as the default metric for constructing the cost matrix in all remaining experiments.

\begin{table}[H]
\caption{Ablation of distance metrics used in the cost matrix for attention-head permutation on Wikitext103. The table reports validation and test loss for different permutation scopes, including the first attention layer, first transformer block, full attention layer, and full model.}
\label{tab:ablation_norm_wt103}
\centering
\renewcommand*{\arraystretch}{1.3}
\setlength{\tabcolsep}{10pt}
\begin{adjustbox}{width=1.0\textwidth}
\begin{tabular}{l cc cc cc cc cc}
\toprule
\multirow{2}{*}{Type} &
\multicolumn{2}{c}{$\ell_2$} &
\multicolumn{2}{c}{$\ell_1$} &
\multicolumn{2}{c}{cosine} &
\multicolumn{2}{c}{corr} &
\multicolumn{2}{c}{spectral} \\
\cmidrule(lr){2-3}
\cmidrule(lr){4-5}
\cmidrule(lr){6-7}
\cmidrule(lr){8-9}
\cmidrule(lr){10-11}
& Val Loss & Test Loss
& Val Loss & Test Loss
& Val Loss & Test Loss
& Val Loss & Test Loss
& Val Loss & Test Loss \\
\midrule
First Attention Layer
& 3.729 $\pm$ 0.002 & 3.741 $\pm$ 0.003
& 3.729 $\pm$ 0.002 & 3.741 $\pm$ 0.003
& 3.729 $\pm$ 0.002 & 3.741 $\pm$ 0.003
& 3.729 $\pm$ 0.002 & 3.741 $\pm$ 0.003
& 3.729 $\pm$ 0.002 & 3.741 $\pm$ 0.003 \\
First Transformer Block
& 3.773 $\pm$ 0.003 & 3.789 $\pm$ 0.003
& 3.773 $\pm$ 0.003 & 3.789 $\pm$ 0.003
& 3.773 $\pm$ 0.003 & 3.789 $\pm$ 0.003
& 3.773 $\pm$ 0.003 & 3.789 $\pm$ 0.003
& 3.773 $\pm$ 0.003 & 3.789 $\pm$ 0.003 \\
Full Attention Layer
& 3.754 $\pm$ 0.004 & 3.765 $\pm$ 0.002
& 3.754 $\pm$ 0.004 & 3.765 $\pm$ 0.002
& 3.753 $\pm$ 0.003 & 3.765 $\pm$ 0.002
& 3.753 $\pm$ 0.003 & 3.765 $\pm$ 0.002
& 3.753 $\pm$ 0.003 & 3.765 $\pm$ 0.002 \\
Full Model
& 5.178 $\pm$ 0.19  & 5.172 $\pm$ 0.193
& 5.202 $\pm$ 0.241 & 5.197 $\pm$ 0.243
& 5.203 $\pm$ 0.232 & 5.196 $\pm$ 0.234
& 5.203 $\pm$ 0.232 & 5.196 $\pm$ 0.234
& 5.203 $\pm$ 0.232 & 5.196 $\pm$ 0.234 \\
\bottomrule
\end{tabular}
\end{adjustbox}
\end{table}

\begin{table}[H]
\caption{Ablation of distance metrics used in the cost matrix for attention-head permutation on  OneBillionWord. The table reports validation and test loss for different permutation scopes, including the first attention layer, first transformer block, full attention layer, and full model.}
\label{tab:ablation_norm_enwik8}
\centering
\renewcommand*{\arraystretch}{1.3}
\setlength{\tabcolsep}{10pt}
\begin{adjustbox}{width=1.0\textwidth}
\begin{tabular}{l cc cc cc cc cc}
\toprule
\multirow{2}{*}{Type} &
\multicolumn{2}{c}{$\ell_2$} &
\multicolumn{2}{c}{$\ell_1$} &
\multicolumn{2}{c}{cosine} &
\multicolumn{2}{c}{corr} &
\multicolumn{2}{c}{spectral} \\
\cmidrule(lr){2-3}
\cmidrule(lr){4-5}
\cmidrule(lr){6-7}
\cmidrule(lr){8-9}
\cmidrule(lr){10-11}
& Val Loss & Test Loss
& Val Loss & Test Loss
& Val Loss & Test Loss
& Val Loss & Test Loss
& Val Loss & Test Loss \\
\midrule
First Attention Layer
&1.013 $\pm$ 0.005&1.008 $\pm$ 0.005
&1.013 $\pm$ 0.005&1.008 $\pm$ 0.005
&1.013 $\pm$ 0.005&1.008 $\pm$ 0.005
&1.013 $\pm$ 0.005&1.008 $\pm$ 0.005
&1.023 $\pm$ 0.007&1.018 $\pm$ 0.007\\

First Transformer Block
&1.054 $\pm$ 0.044&1.046 $\pm$ 0.038
&1.054 $\pm$ 0.044&1.046 $\pm$ 0.038
&1.057 $\pm$ 0.043&1.049 $\pm$ 0.038
&1.057 $\pm$ 0.043&1.049 $\pm$ 0.038
&1.05  $\pm$ 0.025&1.042 $\pm$ 0.019\\
Full Attention Layer
&1.046 $\pm$ 0.009& 1.039 $\pm$ 0.01
&1.047 $\pm$ 0.008& 1.04  $\pm$ 0.01
&1.048 $\pm$ 0.016& 1.041 $\pm$ 0.017
&1.048 $\pm$ 0.016& 1.041 $\pm$ 0.017
&1.203 $\pm$ 0.049& 1.19  $\pm$ 0.046\\
Full Model
&4.18  $\pm$ 0.903& 4.174 $\pm$ 0.904
&4.088 $\pm$ 0.539& 4.089 $\pm$ 0.559
&4.137 $\pm$ 0.78 & 4.13  $\pm$ 0.782
&4.141 $\pm$ 0.782& 4.133 $\pm$ 0.785
&4.303 $\pm$ 0.541& 4.297 $\pm$ 0.547\\
\bottomrule
\end{tabular}
\end{adjustbox}
\end{table}

\begin{table}[H]
\caption{Ablation of distance metrics for attention-head permutation on ImageNet-1K. The table reports validation loss and top-1 validation accuracy for different permutation scopes, including the first attention layer, first transformer block, full attention layer, and full model.}
\label{tab:ablation_norm_imagenet}
\centering
\renewcommand*{\arraystretch}{1.3}
\setlength{\tabcolsep}{10pt}
\begin{adjustbox}{width=1.0\textwidth}
\begin{tabular}{l cc cc cc cc cc}
\toprule
\multirow{2}{*}{Type} &
\multicolumn{2}{c}{$\ell_2$} &
\multicolumn{2}{c}{$\ell_1$} &
\multicolumn{2}{c}{cosine} &
\multicolumn{2}{c}{corr} &
\multicolumn{2}{c}{spectral} \\
\cmidrule(lr){2-3}
\cmidrule(lr){4-5}
\cmidrule(lr){6-7}
\cmidrule(lr){8-9}
\cmidrule(lr){10-11}
& Val Loss & Val Accuracy
& Val Loss & Val Accuracy
& Val Loss & Val Accuracy
& Val Loss & Val Accuracy
& Val Loss & Val Accuracy \\
\midrule
First Attention Layer
&0.665 $\pm$ 0.003&84.245  $\pm$ 0.319
&0.665 $\pm$ 0.003&84.245  $\pm$ 0.319
&0.665 $\pm$ 0.003&84.245  $\pm$ 0.319
&0.665 $\pm$ 0.003&84.245  $\pm$ 0.319
&0.698 $\pm$ 0.083&82.943  $\pm$ 1.776\\
First Transformer Block
&0.691 $\pm$ 0.025& 82.161 $\pm$ 0.319
&0.697 $\pm$ 0.022& 81.641 $\pm$ 0.552
&0.689 $\pm$ 0.014& 82.292 $\pm$ 1.39
&0.689 $\pm$ 0.014& 82.292 $\pm$ 1.39
&0.686 $\pm$ 0.022& 83.203 $\pm$ 0.0\\
Full Attention Layer
& 1.073$\pm$ 0.03 & 70.703$\pm$ 0.957
& 1.073$\pm$ 0.061& 70.964$\pm$ 1.289
& 1.03 $\pm$ 0.089& 71.615$\pm$ 2.051
& 1.03 $\pm$ 0.089& 71.615$\pm$ 2.051
& 2.729$\pm$ 0.402& 39.063$\pm$ 7.793\\
Full Model
&3.063 $\pm$ 0.08 & 29.818 $\pm$ 6.282
&3.214 $\pm$ 0.338& 27.604 $\pm$ 6.995
&3.167 $\pm$ 0.18 & 27.865 $\pm$ 5.133
&3.167 $\pm$ 0.18 & 27.865 $\pm$ 5.133
&3.99  $\pm$ 0.49 & 17.448 $\pm$ 6.16 \\
\bottomrule
\end{tabular}
\end{adjustbox}
\end{table}

\section{Generalization under Distribution Shifts}\label{appendix:generalization}

To evaluate the generalization ability of matched models under distribution shifts, experiments are conducted on the ImageNet-C benchmark \citep{hendrycks2019benchmarking}. ImageNet-C consists of 15 corruption types applied to the ImageNet validation set, grouped into four categories: noise, blur, weather, and digital corruptions. Each corruption is evaluated at five increasing severity levels, indexed from 1 to 5. In addition, we define level~0 to correspond to the clean ImageNet-1K validation set without any corruption.

This work focuses on the noise category, including Gaussian noise, Shot noise, and Impulse noise, to assess robustness after model matching. Two matching configurations are considered: (i) matching at the first transformer layer, where linear mode connectivity (LMC) exists between the models, and (ii) matching the full transformer model, where linear mode connectivity does not exist between the models. The matched model is obtained by averaging the parameters of two aligned models. Robustness is evaluated across increasing corruption severity levels. All reported results are averaged over three random seeds, with standard deviations reported.

\begin{table}[H]
\caption{Robustness evaluation on ImageNet-C (noise corruptions) after matching at the first transformer layer.}
\label{tab:generalization_first_block}
\centering
\renewcommand*{\arraystretch}{1.3}
\setlength{\tabcolsep}{10pt}
\begin{adjustbox}{width=1.0\textwidth}
\begin{tabular}{c cccc cccc cccc}
\toprule
\multirow{3}{*}{Level} &
\multicolumn{4}{c}{Gaussian Noise} &
\multicolumn{4}{c}{Shot Noise} &
\multicolumn{4}{c}{Impulse noise}  \\
\cmidrule(lr){2-5}
\cmidrule(lr){6-9}
\cmidrule(lr){10-13}
& \multicolumn{2}{c}{Loss} & \multicolumn{2}{c}{Acc}
& \multicolumn{2}{c}{Loss} & \multicolumn{2}{c}{Acc}
& \multicolumn{2}{c}{Loss} & \multicolumn{2}{c}{Acc}\\
\cmidrule(lr){2-3}
\cmidrule(lr){4-5}
\cmidrule(lr){6-7}
\cmidrule(lr){8-9}
\cmidrule(lr){10-11}
\cmidrule(lr){12-13}
& Original & Match& Original & Match& Original & Match& Original & Match& Original & Match& Original & Match\\
\midrule
0 &0.689 $\pm$ 0.003&0.693 $\pm$ 0.018&84.668 $\pm$ 0.169&82.227 $\pm$ 0.195
  &0.689 $\pm$ 0.003&0.693 $\pm$ 0.018&84.668 $\pm$ 0.169&82.227 $\pm$ 0.195
  &0.689 $\pm$ 0.003&0.693 $\pm$ 0.018&84.668 $\pm$ 0.169&82.227 $\pm$ 0.195 \\
1 &1.195 $\pm$ 0.01&1.172 $\pm$ 0.008&68.75 $\pm$ 1.1398&69.922 $\pm$ 0.391
  &1.254 $\pm$ 0.016&1.215 $\pm$ 0.004&68.848 $\pm$ 0.697&68.555 $\pm$ 0.977
  &1.326 $\pm$ 0.006&1.297 $\pm$ 0.016&66.016 $\pm$ 0.996&67.578 $\pm$ 0.171\\
2 &1.494 $\pm$ 0.01&1.465 $\pm$ 0.012&61.816 $\pm$ 0.89&62.500 $\pm$0.000
  &1.672 $\pm$ 0.051&1.555 $\pm$ 0.01&59.082 $\pm$ 0.972&59.766 $\pm$ 1.172
  &1.705 $\pm$ 0.012&1.574 $\pm$ 0.02&56.543 $\pm$ 0.697&60.938$\pm$ 0.391\\
3 &2.000 $\pm$ 0.017& 1.902 $\pm$ 0.035&49.609 $\pm$ 1.172&50.195 $\pm$ 1.758
  &2.332 $\pm$ 0.007&2.211 $\pm$ 0.008&45.313 $\pm$ 0.731&48.242 $\pm$ 0.977
  &1.934 $\pm$ 0.012&1.813 $\pm$ 0.023&51.465 $\pm$ 0.697&53.711 $\pm$ 2.539\\
4 &2.703 $\pm$ 0.047&2.484 $\pm$ 0.031&34.766 $\pm$ 1.172&38.867 $\pm$ 2.93 
  &3.586 $\pm$ 0.055&3.367 $\pm$ 0.023&25.879 $\pm$ 1.307&29.297 $\pm$ 0.391
  &2.734 $\pm$ 0.029&2.477 $\pm$ 0.008&35.742 $\pm$ 0.977&40.039 $\pm$ 0.586\\
5 & 3.902 $\pm$ 0.051&3.563 $\pm$ 0.001& 17.969 $\pm$ 0.829 &22.461 $\pm$ 0.977
  & 4.313 $\pm$ 0.038&3.992 $\pm$ 0.008&15.234 $\pm$ 0.996&21.484 $\pm$ 0.781
  &3.723 $\pm$ 0.03&3.406 $\pm$ 0.047&21.094 $\pm$ 0.829&26.172 $\pm$ 1.172\\
\bottomrule
\end{tabular}
\end{adjustbox}
\end{table}

\begin{table}[H]
\caption{Robustness evaluation on ImageNet-C (noise corruptions) after matching the full transformer model.}
\label{tab:generalization_full_block}
\centering
\renewcommand*{\arraystretch}{1.3}
\setlength{\tabcolsep}{10pt}
\begin{adjustbox}{width=1.0\textwidth}
\begin{tabular}{c cccc cccc cccc}
\toprule
\multirow{3}{*}{Level} &
\multicolumn{4}{c}{Gaussian Noise} &
\multicolumn{4}{c}{Shot Noise} &
\multicolumn{4}{c}{Impulse noise}  \\
\cmidrule(lr){2-5}
\cmidrule(lr){6-9}
\cmidrule(lr){10-13}
& \multicolumn{2}{c}{Loss} & \multicolumn{2}{c}{Acc}
& \multicolumn{2}{c}{Loss} & \multicolumn{2}{c}{Acc}
& \multicolumn{2}{c}{Loss} & \multicolumn{2}{c}{Acc}\\
\cmidrule(lr){2-3}
\cmidrule(lr){4-5}
\cmidrule(lr){6-7}
\cmidrule(lr){8-9}
\cmidrule(lr){10-11}
\cmidrule(lr){12-13}
& Original & Match& Original & Match& Original & Match& Original & Match& Original & Match& Original & Match\\
\midrule
0 &0.656 $\pm$ 0.051&3.063 $\pm$ 0.046&85.221 $\pm$ 0.474&29.688 $\pm$ 3.678
  &0.656 $\pm$ 0.051&3.063 $\pm$ 0.046&85.221 $\pm$ 0.474&29.688 $\pm$ 3.678
  &0.656 $\pm$ 0.051&3.063 $\pm$ 0.046&85.221 $\pm$ 0.474&29.688 $\pm$ 3.678 \\
1 &1.421 $\pm$ 0.06&4.146 $\pm$ 0.097&63.737 $\pm$ 2.214&10.026 $\pm$1.605
  &1.522 $\pm$ 0.026&4.26 $\pm$ 0.106&64.193 $\pm$ 0.487&10.286 $\pm$ 2.051
  &1.638 $\pm$ 0.026&4.5 $\pm$ 0.088&61.654 $\pm$ 0.762&6.901 $\pm$ 1.289\\
2 &1.868 $\pm$ 0.054&4.865 $\pm$ 0.191&56.055 $\pm$ 2.569&3.776 $\pm$ 1.025
  &2.065 $\pm$ 0.095&0.095 $\pm$ 0.345&53.711 $\pm$ 2.962&4.036 $\pm$ 2.3
  &2.167 $\pm$ 0.052&5.198 $\pm$ 0.232&50.846 $\pm$ 1.068&2.604 $\pm$ 1.12\\
3 &2.555 $\pm$ 0.083&5.635 $\pm$ 0.304&43.034 $\pm$ 1.284&2.083 $\pm$ 1.328
  &2.901 $\pm$ 0.141&5.802 $\pm$ 0.324&36.263 $\pm$ 1.52&3.906 $\pm$ 2.21
  &2.539 $\pm$ 0.12&5.625 $\pm$ 0.301&42.122 $\pm$ 4.005&1.823 $\pm$ 1.12\\
4 &3.685 $\pm$ 0.185&6.396 $\pm$ 0.374&23.698 $\pm$ 0.664&0.911 $\pm$ 0.737
  &4.406 $\pm$ 0.266&6.625 $\pm$0.345&13.281 $\pm$ 4.442&0.521 $\pm$ 0.184
  &3.766 $\pm$ 0.273&6.448 $\pm$ 0.379&23.047 $\pm$ 1.94&1.042 $\pm$ 0.487\\
5 &5.281 $\pm$ 0.199&7.167 $\pm$ 0.337&8.789 $\pm$ 3.614&0.26 $\pm$ 0.184
  &5.354 $\pm$ 0.237&7.135 $\pm$ 0.307&8.529 $\pm$ 1.648&0.911 $\pm$ 0.368
  &5.146 $\pm$ 0.325&7.104 $\pm$ 0.329&9.375 $\pm$ 2.091&0.26 $\pm$ 0.184\\
\bottomrule
\end{tabular}
\end{adjustbox}
\end{table}
Based on Table~\ref{tab:generalization_first_block}, when linear mode connectivity (LMC) exists between the two models, the matched model exhibits consistently improved robustness compared to the original model under noise corruptions. In particular, for moderate to high corruption severity levels (levels 2--5), the matched model achieves lower validation loss and higher top-1 accuracy across Gaussian, Shot, and Impulse noise. For example, under Gaussian noise at severity level~5, the matched model improves accuracy from 17.97\% to 22.46\%, while similar trends are observed for Shot noise (15.23\% to 21.48\%) and Impulse noise (21.09\% to 26.17\%). These results indicate that model matching under LMC preserves and enhances robustness to distribution shifts. In contrast, Table~\ref{tab:generalization_full_block} shows that when linear mode connectivity does not exist between the two models, matching the full transformer model leads to severe degradation in performance even at the clean setting (level~0). The matched model exhibits substantially higher loss and significantly lower accuracy across all corruption types and severity levels, indicating a failure to generalize. As a result, robustness under noise corruptions is not improved in this setting, highlighting the importance of linear mode connectivity for effective model matching.


\end{document}